\documentclass{article}

\usepackage{arxiv}

\usepackage[utf8]{inputenc} 
\usepackage[T1]{fontenc}    
\usepackage{hyperref}       
\usepackage{url}            
\usepackage{booktabs}       
\usepackage{amsfonts}       
\usepackage{nicefrac}       
\usepackage{amsmath}        
\usepackage{amsthm}         
\usepackage{microtype}      
\usepackage{graphicx}
\usepackage{subcaption}
\usepackage{float}
\usepackage{enumitem}
\usepackage{multirow}
\usepackage{tikz}
\usetikzlibrary{matrix, positioning, calc}
\graphicspath{ {./fig/} }

\newtheorem{theorem}{Theorem}[section]

\theoremstyle{definition}

\theoremstyle{remark}
\newtheorem{remark}[theorem]{Remark}

\title{Analytic Distribution of Classifier-Free Guidance for Schedule Design}

\author{
 Enze Jiang \\
  School of Mathematical Sciences\\
  Shanghai Jiao Tong University\\
  Shanghai, China \\
  \texttt{geguojia@sjtu.edu.cn} \\
   \And
 Zheng Ma \\
  School of Mathematical Sciences\\
  Shanghai Jiao Tong University\\
  Shanghai, China \\
  \texttt{zhengma@sjtu.edu.cn} \\
}

\begin{document}
\maketitle
\begin{abstract}
Classifier-free guidance (CFG) is the default mechanism for conditional generation in diffusion models, but the distribution sampled by its deterministic guided dynamics is not captured by the usual product-distribution heuristic $p_0^\omega q_0^{1-\omega}$. We analyze CFG through the probability flow ODE and derive exact analytic path-integral representations of the induced distributions for both constant and time-dependent guidance. The resulting formulas show that CFG modifies $p_{t_0}$ by an exponential path-integral correction, and that a time-dependent schedule enters this correction through the weight $\omega(t)-1$. This characterization explains how score discrepancies accumulate along sampling trajectories and motivates Distribution-Guided CFG (DG-CFG), a schedule that balances timestep contributions while accounting for signal strength and low-noise score-error amplification. A toy model with analytic scores closely verifies the predicted distributions. Across Stable Diffusion~1.5, Stable Diffusion~2.1, and Stable Diffusion~XL, DG-CFG yields a stronger diversity--fidelity trade-off and robustly mitigates the saturation and quality degradation caused by strong constant or heuristic guidance. Complete NFE experiments on Stable Diffusion~1.5 and Stable Diffusion~2.1 confirm that these gains persist across sampling budgets, while fixed-quality experiments on both backbones show that DG-CFG reaches target metrics with fewer sampling steps.
\end{abstract}


\section{Introduction}

Diffusion models~\cite{sohl2015deep,ho2020denoising,song2021scorebased} have emerged as a leading paradigm in generative modeling, achieving state-of-the-art results across image synthesis~\cite{dhariwal2021diffusion,rombach2022high}, video generation~\cite{ho2022video}, audio synthesis~\cite{kong2021diffwave}, and molecular conformation generation~\cite{xu2022geodiff}. Their operating principle is conceptually elegant: a forward stochastic differential equation (SDE) progressively corrupts data with noise, and a learned reverse process---parameterized by the score function---removes this noise to reconstruct realistic samples. This framework, unified by Song et al.~\cite{song2021scorebased}, admits two equivalent reverse-time descriptions: a stochastic SDE and a deterministic probability flow ODE that preserves the same marginal distributions at every timestep.

For conditional generation, \emph{classifier-free guidance} (CFG)~\cite{ho2021classifierfree} has become the de facto standard. Unlike classifier-based guidance~\cite{dhariwal2021diffusion}, which requires an auxiliary classifier, CFG blends the score functions of a conditional model $p_{\mathrm{data}}$ and a reference model $q_{\mathrm{data}}$ (typically the unconditional model) through a guidance strength parameter $\omega$:
\begin{equation}
    s_t^\omega(x) = \omega \nabla_x \log p_t(x) + (1-\omega) \nabla_x \log q_t(x).
\end{equation}
When $\omega > 1$, the conditional score is extrapolated beyond the reference score, often improving visual quality and prompt alignment but reducing sample diversity and causing supersaturation. Because it requires neither auxiliary networks nor additional training, this simple extrapolation underpins modern text-to-image systems~\cite{rombach2022high,saharia2022photorealistic,ramesh2022hierarchical,balaji2022ediff} and has also been adopted in video~\cite{ho2022video}, audio~\cite{liu2023audioldm}, and 3D synthesis~\cite{poole2023dreamfusion}.

Despite its empirical success, CFG leaves a fundamental distributional question: \emph{what distribution does it actually sample from?} Because $s_t^\omega$ is the score of $p_t^\omega q_t^{1-\omega}$ at each fixed time $t$, CFG is often assumed to sample from the normalized product distribution $\tilde{r}_0 \propto p_0^\omega q_0^{1-\omega}$. This pointwise score identity, however, does not determine the distribution transported by the full reverse-time dynamics. The blended score acts along the entire sampling trajectory, and its cumulative effect is not captured by the endpoint product alone. An exact characterization would clarify how CFG departs from this na\"{i}ve product picture and inform principled time-dependent guidance schedules. 

In this work, we characterize the distribution produced by CFG within the deterministic probability flow ODE framework. By deriving and solving the Fokker--Planck equations associated with the guided reverse process, we obtain exact analytic path-integral representations of the induced distribution under both constant and time-dependent guidance. These formulas reveal a simple organizing principle: guidance modifies the target distribution through an exponential path-integral correction whose effective temporal weight is governed by $\omega(t)-1$. We then use this principle to design a guidance schedule and validate both the theory and the schedule experimentally. Our main contributions are:

\begin{enumerate}[leftmargin=*]
    \item \textbf{Theory: exact analytic distribution of CFG.} We derive exact analytic path-integral representations of the CFG-induced distribution under constant guidance (Theorem~\ref{thm:ddim}) and time-dependent guidance (Theorem~\ref{thm:time-dep}). Both representations have the same structural form: the reference distribution multiplied by an exponential path-integral correction. In the time-dependent case, the correction is weighted by $\omega(t)-1$, with constant CFG recovered as the degenerate case $\omega(t)\equiv\omega$. The analysis also identifies an interaction potential determined by the discrepancy between the conditional and reference scores, showing how this discrepancy accumulates along sampling trajectories.

    \item \textbf{Schedule: a principled guidance schedule.} Under the VPSDE formulation, the temporal weight in the path integral contains the factor $\beta(t)(\omega(t)-1)$. Because the standard noise rate $\beta(t)$ varies by roughly two orders of magnitude, constant guidance assigns substantially greater weight to high-noise timesteps. Motivated by this temporal imbalance, we propose Distribution-Guided Classifier-Free Guidance (DG-CFG):
    \begin{equation*}
        \omega(t) = 1 + C \cdot (\bar{\omega}-1) \cdot \frac{(1-\bar{\alpha}_t)\sqrt{\bar{\alpha}_t}}{\beta(t)},
    \end{equation*}
    where $1/\beta(t)$ balances the diffusion-rate contribution, $\sqrt{\bar{\alpha}_t}$ emphasizes timesteps with stronger signal content, and $1-\bar{\alpha}_t$ suppresses score-error amplification in the low-noise regime.

    \item \textbf{Experiments: verification and validation.} In a toy model with analytic scores, empirical DDIM samples closely match the distributions predicted by Theorems~\ref{thm:ddim} and~\ref{thm:time-dep} under both constant and time-dependent guidance. Fixed-budget experiments on Stable Diffusion~1.5, Stable Diffusion~2.1, and Stable Diffusion~XL show that, at high guidance strengths, DG-CFG reduces saturation while providing a more favorable diversity--fidelity trade-off across all three backbones. On SD1.5 and SD2.1, comprehensive component ablations validate the schedule design, while evaluations across NFE settings and fixed-quality comparisons show that DG-CFG remains effective across sampling budgets and reaches target quality with fewer steps.
\end{enumerate}


\section{Preliminaries}
\label{sec:preliminaries}
\subsection{Diffusion Models}

Diffusion models generate data by learning to reverse a gradual noising process. In the forward direction, data are transported toward a simple prior---typically a standard Gaussian---according to a prescribed SDE
\begin{equation}
    dx = f(x,t)\,dt + g(t)\,dw,
    \label{eq:forward-sde}
\end{equation}
over $[0,T]$, where $w$ is a standard Wiener process and $p_t$ denotes the marginal density at time $t$. The associated reverse-time SDE~\cite{anderson1982reverse} is parameterized by the \emph{score function} $\nabla_x \log p_t(x)$:
\begin{equation}
    dx = \bigl(f(x,t) - g^2(t)\,\nabla_x \log p_t(x)\bigr)\,dt + g(t)\,d\bar{w},
    \label{eq:reverse-sde}
\end{equation}
where $\bar{w}$ is a reverse-time Wiener process. Song et al.~\cite{song2021scorebased} showed that the same marginal densities are also realized by the \emph{deterministic probability flow ODE}
\begin{equation}
    dx = \Bigl(f(x,t) - \frac{1}{2}g^2(t)\,\nabla_x \log p_t(x)\Bigr)\,dt,
    \label{eq:pf-ode}
\end{equation}
which preserves the same marginal distributions at every $t$.

Two canonical parameterizations are widely used. The \textbf{variance-preserving} SDE (VPSDE) formulation~\cite{song2021scorebased} sets
\begin{equation}
    f(x,t) = -\frac{1}{2}\beta(t)\,x, \qquad g(t) = \sqrt{\beta(t)},
\end{equation}
where $\beta(t) > 0$ is a predefined noise schedule. The corresponding perturbation kernel is
\begin{equation}
    p_t(x_t \mid x_0) = \mathcal{N}\bigl(x_t; \sqrt{\bar{\alpha}_t}\,x_0, \sigma_t^2 I\bigr),
    \qquad \bar{\alpha}_t = \exp\!\Bigl(-\int_0^t \beta(s)\,ds\Bigr), \quad \sigma_t^2 = 1 - \bar{\alpha}_t.
    \label{eq:perturbation}
\end{equation}
The associated deterministic sampler is DDIM~\cite{song2021denoising}. The \textbf{variance-exploding} SDE (VESDE) formulation sets $f \equiv 0$, yielding $p_t(x_t \mid x_0) = \mathcal{N}(x_t; x_0, \sigma_t^2 I)$; its deterministic counterpart underlies EDM-style samplers~\cite{karras2022elucidating}. Both are instances of~\eqref{eq:pf-ode} with different choices of $(f, g)$. Our theoretical analysis is developed within the probability flow ODE framework~\eqref{eq:pf-ode}, which admits a characteristic-based analysis and does not depend on a particular choice of $(f,g)$.

Near the data endpoint, the score may become singular as $t\to0$, particularly when the data distribution is concentrated near a low-dimensional manifold. We therefore state our results at an arbitrarily small cutoff time $t_0>0$, consistent with practical discrete samplers that sample to the smallest positive noise level followed by a final denoising step. 

\subsection{Classifier-Free Guidance}

Classifier-free guidance (CFG)~\cite{ho2021classifierfree} steers sampling without a separately trained classifier. Given a target distribution $p_{\mathrm{data}}$ and a reference distribution $q_{\mathrm{data}}$ (typically the unconditional distribution), CFG replaces the score in the reverse process with the affine combination
\begin{equation}
    s_t^\omega(x) = \omega \nabla_x \log p_t(x) + (1-\omega) \nabla_x \log q_t(x),
\end{equation}
where $\omega \in \mathbb{R}$ is the guidance strength and $p_t,q_t$ denote the forward-noised marginals of $p_{\mathrm{data}},q_{\mathrm{data}}$. Substituting $s_t^\omega$ into the probability flow ODE~\eqref{eq:pf-ode} yields the \emph{guided deterministic sampler}
\begin{equation}
    dx = \Bigl(f(x,t) - \frac{1}{2}g^2(t)\,s_t^\omega(x)\Bigr)\,dt.
    \label{eq:cfg-pf-ode}
\end{equation}
This deterministic sampler is the primary object of our study. When $\omega = 1$, it reduces to ordinary sampling from the target distribution. When $\omega > 1$, the target score is extrapolated away from the reference score, increasing conditional fidelity but often reducing diversity and causing color oversaturation.

\subsection{Fokker--Planck Equation}

The Fokker--Planck equation (FPE) governs the temporal evolution of the probability density associated with a diffusion process. For the forward SDE $dx = f(x,t)\,dt + g(t)\,dw$, the marginal density $p_t(x)$ satisfies
\begin{equation}
    \frac{\partial p_t(x)}{\partial t} = -\nabla \cdot \big(f(x,t)\,p_t(x)\big) + \frac{1}{2}g^2(t)\,\Delta p_t(x).
\end{equation}
The first term describes deterministic transport under the drift $f$, while the second describes stochastic diffusion. We use the FPE to compare the evolution of the normalized product reference density with that of the density transported by the guided probability flow ODE, leading to the exact path-integral representations in Section~\ref{sec:dist-cfg}.

\section{Related Work}
\label{sec:related}

\subsection{Diffusion Models}

Diffusion models generate data by reversing a progressive noising process~\cite{sohl2015deep,ho2020denoising}. Song et al.~\cite{song2021scorebased} unified diffusion probabilistic models and score-based generative models through continuous-time SDEs and derived the probability flow ODE, which follows deterministic dynamics while preserving the same time marginals. Closely related samplers such as DDIM enable efficient deterministic generation~\cite{song2021denoising}. Subsequent advances have improved sample quality~\cite{dhariwal2021diffusion,nichol2021improved}, accelerated inference~\cite{salimans2022progressive,lu2022dpm}, and extended diffusion models to latent spaces~\cite{rombach2022high,vahdat2021score}.

\subsection{Classifier-Free Guidance and Variants}

Ho and Salimans introduced classifier-free guidance (CFG)~\cite{ho2021classifierfree} as an alternative to classifier-based guidance~\cite{dhariwal2021diffusion}. CFG learns conditional and unconditional score estimates within a single model, eliminating the need for an external classifier. It is widely used in text-to-image systems~\cite{rombach2022high,saharia2022photorealistic,ramesh2022hierarchical,balaji2022ediff}, with extensions to video~\cite{ho2022video}, audio~\cite{liu2023audioldm}, and 3D synthesis~\cite{poole2023dreamfusion}. Several variants address its failure modes: CFG++ applies guidance to the denoised estimate while using the unconditional prediction for re-noising~\cite{chung2025cfg++}; autoguidance replaces the unconditional reference with a weaker version of the model~\cite{karras2024guiding}; and adaptive projected guidance decomposes the CFG update relative to the conditional prediction and down-weights its parallel component~\cite{sadat2024eliminating}.

\subsection{Theoretical Understanding of CFG}

Recent work has analyzed CFG from several complementary perspectives. Bradley and Nakkiran~\cite{bradley2024classifier} showed that CFG does not directly sample a gamma-powered product distribution under DDPM or DDIM and interpreted its SDE limit as a predictor-corrector procedure. Lehman Pavasovic et al.~\cite{lehman2025understanding} established recovery of the target conditional distribution in an infinite-dimensional limit and characterized finite-dimensional mean and variance corrections. Jin et al.~\cite{jin2026stagewise} identified direction-shift, mode-separation, and concentration stages under multimodal conditionals. Skreta et al.~\cite{skreta2025feynmankac} introduced Feynman--Kac correctors for product-distribution sampling.

\subsection{Guidance Schedule Design}

Time-dependent CFG schedules redistribute guidance across the sampling trajectory. Interval guidance applies CFG only within a selected noise range, motivated by its limited utility at the highest and lowest noise levels~\cite{kynkaanniemi2024applying}. $\beta$--CFG instead uses a beta-shaped schedule that suppresses guidance near both endpoints~\cite{malarz2025classifier}. C$^2$FG derives bounds on the conditional--unconditional score discrepancy and uses an exponential-decay control function~\cite{gao2026c2fg}.

\section{Distribution of Classifier-Free Guidance}
\label{sec:dist-cfg}

We now analyze the distribution produced by CFG under the deterministic probability flow ODE~\eqref{eq:cfg-pf-ode}, without committing to a specific parameterization of $f(x,t)$ and $g(t)$. We first formulate the sampling problem (Section~\ref{sec:problem}), then derive the Fokker--Planck equation for the normalized product reference distribution (Section~\ref{sec:fp-analysis}). We next obtain an exact analytic path-integral representation under constant guidance (Section~\ref{sec:exact-dist}) and extend the result to time-dependent guidance schedules (Section~\ref{sec:time-dep}).

\subsection{Problem Formulation}
\label{sec:problem}

Let $p_{\mathrm{data}}$ and $q_{\mathrm{data}}$ be two probability densities on $\mathbb{R}^n$. In the CFG setting, $p_{\mathrm{data}}$ represents the target conditional distribution and $q_{\mathrm{data}}$ the reference distribution, typically the unconditional distribution. Consider the forward noising SDE~\eqref{eq:forward-sde} on $[0,T]$. Let $p_t$ denote the marginal density at time $t$ initialized from $p_0 = p_{\mathrm{data}}$; it evolves according to
\begin{equation}
    \frac{\partial p_t}{\partial t} = -\nabla \cdot (f p_t) + \frac{1}{2} g^2 \Delta p_t.
    \label{eq:fp-p}
\end{equation}
Similarly, let $q_t$ be the marginal density initialized from $q_0=q_{\mathrm{data}}$ and governed by the same Fokker--Planck equation. In practical diffusion sampling, reverse processes associated with different data distributions are typically initialized from the same Gaussian prior at time $T$. 

As noted in Section~\ref{sec:preliminaries}, the score may become singular near $t=0$. We therefore introduce a small stopping time $t_0>0$ and analyze the reverse-time horizon $[t_0,T]$. The guided reverse process maps the terminal reference density at time $T$ to a density $\hat{r}_{t_0}$ at time $t_0$. All theoretical results are stated for fixed $t_0>0$, with $p_{t_0}$ and $q_{t_0}$ serving as small-noise approximations to their respective data distributions.

Classifier-free guidance modifies the deterministic probability flow ODE by replacing the target score with the blended score $s_t^\omega$:
\begin{equation}
    dx = \Bigl(f(x,t) - \frac{1}{2}g^2(t)\,
          \bigl(\omega \nabla_x \log p_t(x) + (1-\omega) \nabla_x \log q_t(x)\bigr)\Bigr)\,dt.
    \label{eq:cfg-pf-ode2}
\end{equation}

To analyze~\eqref{eq:cfg-pf-ode2}, define the normalized product reference distribution
\begin{equation}
    \tilde{r}_t(x) := \frac{1}{Z_t} p_t^\omega(x)\,q_t^{1-\omega}(x) = \frac{1}{Z_t} r_t(x),
    \label{eq:rt}
\end{equation}
where $r_t(x)=p_t^\omega(x)q_t^{1-\omega}(x)$ and $Z_t=\int_{\mathbb{R}^n}r_t(x)\,dx$ is the normalizing constant which is assumed to be finite. A key identity follows immediately:
\begin{equation}
    \nabla_x \log \tilde{r}_t = \nabla_x \log r_t = \omega \nabla_x \log p_t + (1-\omega) \nabla_x \log q_t,
    \label{eq:score-id}
\end{equation}
Consequently, the guided sampler~\eqref{eq:cfg-pf-ode2} can be written as $dx = (f - \frac{1}{2}g^2\,\nabla_x \log \tilde{r}_t)\,dt$.

Let $\hat{r}_t$ denote the marginal density transported by the CFG reverse process on $[t_0,T]$, initialized from the terminal reference density $\tilde{r}_T$ at time $T$. Under the usual terminal condition $p_T\simeq q_T$, this density coincides with the common diffusion prior. Our goal is to characterize $\hat{r}_{t_0}$, the distribution from which deterministic CFG actually samples.

\subsection{Fokker--Planck Equation for the Reference Distribution}
\label{sec:fp-analysis}

We first derive the evolution equation for the normalized product reference distribution $\tilde{r}_t$. Although its score coincides with the blended CFG score by~\eqref{eq:score-id}, its density evolution is not the standard forward Fokker--Planck equation; an additional potential term appears from the interaction between the two score functions.

\begin{theorem}[Fokker--Planck equation for $\tilde{r}_t$]
\label{thm:fp}
Fix $0<t_0<T$ and $\omega\in\mathbb{R}$. Suppose $p,q\in C^{1,2}([t_0,T]\times\mathbb{R}^n)$ with $p_t(x)>0$ and $q_t(x)>0$, and that both densities satisfy the Fokker--Planck equation~\eqref{eq:fp-p}. Let $r_t=p_t^\omega q_t^{1-\omega}$ and $Z_t=\int_{\mathbb{R}^n}r_t(x)\,dx$, where $0<Z_t<\infty$ on $[t_0,T]$. For $B_R=\{x:\|x\|\le R\}$, assume the boundary decay condition $\lim_{R\to\infty}\int_{\partial B_R}\big(\|f(x,t)\|\,r_t(x)+\|\nabla r_t(x)\|\big)\,dS(x)=0$ for every $t\in[t_0,T]$.
Then the normalized product distribution $\tilde{r}_t=Z_t^{-1}p_t^\omega q_t^{1-\omega}$ satisfies
\begin{equation}
    \frac{\partial \tilde{r}_t}{\partial t}
    = -\nabla \cdot (f \tilde{r}_t) + \frac{1}{2} g^2 \Delta \tilde{r}_t
    - \frac{1}{2} g^2 \big(V_t - \mathbb{E}_{\tilde{r}_t}[V_t]\big) \tilde{r}_t,
    \label{eq:fp-tilde}
\end{equation}
where the \emph{interaction potential} is
\begin{equation}
    V_t(x) = \omega(\omega-1)\,\big\|\nabla_x \log p_t(x) - \nabla_x \log q_t(x)\big\|_2^2.
    \label{eq:V}
\end{equation}
\end{theorem}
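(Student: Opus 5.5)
The plan is to work with logarithms. Set $a_t=\log p_t$ and $b_t=\log q_t$, so that $\log r_t=\omega a_t+(1-\omega)b_t$. The first step rewrites the Fokker--Planck equation~\eqref{eq:fp-p} as an equation for $\log p_t$. Since $p>0$ and $\Delta p/p=\Delta a+\|\nabla a\|^2$, dividing~\eqref{eq:fp-p} by $p_t$ gives
\begin{equation*}
\partial_t a = -\nabla\cdot f - f\cdot\nabla a + \tfrac12 g^2\big(\Delta a+\|\nabla a\|^2\big).
\end{equation*}
The same equation holds for $b$. Taking the combination $\omega(\cdot)+(1-\omega)(\cdot)$, every term that is linear in $a,b$ recombines into the corresponding term for $\ell:=\log r_t$. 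The only exception is the quadratic term, which becomes $\omega\|\nabla a\|^2+(1-\omega)\|\nabla b\|^2$.

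The key algebraic identity compares this with $\|\nabla \ell\|^2=\|\omega\nabla a+(1-\omega)\nabla b\|^2$:
\begin{equation*}
\omega\|\nabla a\|^2+(1-\omega)\|\nabla b\|^2-\|\omega\nabla a+(1-\omega)\nabla b\|^2=\omega(1-\omega)\|\nabla a-\nabla b\|^2=-V_t .
\end{equation*}
It follows that
\begin{equation*}
\partial_t\ell=-\nabla\cdot f-f\cdot\nabla\ell+\tfrac12 g^2\big(\Delta\ell+\|\nabla\ell\|^2\big)-\tfrac12 g^2 V_t .
\end{equation*}
Multiplying by $r_t$ and reversing the log-identities yields the unnormalized equation
\begin{equation*}
\partial_t r_t=-\nabla\cdot(fr_t)+\tfrac12 g^2\Delta r_t-\tfrac12 g^2V_tr_t .
\end{equation*}

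Next comes normalization. Integrate the unnormalized equation over $B_R$ and apply the divergence theorem; the boundary decay hypothesis kills the flux terms as $R\to\infty$. This gives $\dot Z_t=-\tfrac12 g^2\int V_t r_t\,dx=-\tfrac12 g^2 Z_t\,\mathbb{E}_{\tilde r_t}[V_t]$. Then $\partial_t\tilde r_t=\partial_t r_t/Z_t-\tilde r_t\,\dot Z_t/Z_t$, and substituting both expressions produces~\eqref{eq:fp-tilde} directly.

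The main obstacle is the analytic justification in the normalization step, not the algebra. It requires differentiating $Z_t$ under the integral sign and knowing that $\int V_t r_t$ is finite; this is needed when $\omega(\omega-1)$ is positive, where $V_t\ge0$, but also when it is negative. I would handle this by working on $B_R$ first. Integrating the PDE over $[t_1,t_2]\times B_R$ and using Fubini, which is legitimate by $C^{1,2}$ regularity on the compact set, gives $\int_{B_R}r_{t_2}-\int_{B_R}r_{t_1}$ as a time integral of boundary flux plus $-\tfrac12\int g^2\int_{B_R}V r$. Letting $R\to\infty$ with the decay assumption (plus dominated or monotone convergence, since $V$ has a fixed sign) identifies $Z_{t_2}-Z_{t_1}$, and from this $\dot Z_t$ follows. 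I would state explicitly that the finiteness of $\mathbb{E}_{\tilde r_t}[V_t]$ is implicitly required for the formula to be meaningful.
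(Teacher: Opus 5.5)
Your proposal is correct, and its normalization step (integrate the unnormalized equation, remove the flux with the decay hypothesis, then apply the quotient rule to $r_t/Z_t$) is the same as the paper's Steps~2--3.

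The difference is in how you obtain $\partial_t r_t$. The paper applies the chain rule directly to $p_t^\omega q_t^{1-\omega}$ and shows that the drift terms recombine into $-\nabla\cdot(fr_t)$. It then isolates the cross terms of $\Delta r_t$ as an explicit residual and evaluates it through derivatives of $(p_t/q_t)^{\omega-1}$ and $(p_t/q_t)^{\omega}$ to get $V_t r_t$. You instead pass to log-densities. There the Fokker--Planck equation is affine in $\log p_t$ except for the $\|\nabla\log p_t\|^2$ term. Because the weights $\omega,1-\omega$ sum to one, the constant $-\nabla\cdot f$ and all linear terms recombine. The potential then appears through the weighted-variance identity $\omega\|u\|^2+(1-\omega)\|v\|^2-\|\omega u+(1-\omega)v\|^2=\omega(1-\omega)\|u-v\|^2$, which holds for every real $\omega$.

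Your route buys several things:
\begin{itemize}
\item It is shorter.
\item It shows at a glance why $V_t$ vanishes identically for $\omega\in\{0,1\}$ and carries the sign of $\omega(\omega-1)$.
\item It generalizes verbatim to products $\prod_i p_i^{\omega_i}$ with $\sum_i\omega_i=1$.
\item With $\omega=\omega(t)$ it yields at once the extra source $\omega'(t)(\log p_t-\log q_t)r_t$ used in the paper's time-dependent proof.
\item It uses the same log-density equation the paper itself invokes later in the proof of Theorem~\ref{thm:ddim}.
\end{itemize}
The paper's route stays at the density level and needs no logarithmic reformulation, at the cost of more bookkeeping.

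On the analytic side you are more careful than the paper, which simply differentiates $Z_t$ under the integral over $\mathbb{R}^n$. Your time-integrated version still has two loose ends. First, it needs the boundary flux to be dominated, uniformly in $R$, by an integrable function of $t$, so that $R\to\infty$ can pass inside $\int_{t_1}^{t_2}dt$. Second, it needs continuity of $t\mapsto g^2(t)\int V_t r_t\,dx$ to get $\dot Z_t$ at every $t$ rather than almost everywhere. The pointwise-in-$t$ decay hypothesis does not literally supply either, but this looseness is shared with the paper rather than a defect of your argument.
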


\begin{proof}
See Appendix~\ref{sec:proof-fp}.
\end{proof}

The interaction potential $V_t$ is proportional to the squared local discrepancy between the target and reference scores. It vanishes for $\omega\in\{0,1\}$, corresponding to $\tilde{r}_t=q_t$ and $\tilde{r}_t=p_t$, respectively. For $\omega>1$, the centered term $V_t-\mathbb{E}_{\tilde{r}_t}[V_t]$ introduces a spatially varying source--sink correction to the Fokker--Planck dynamics of $\tilde{r}_t$. This already indicates that matching the instantaneous CFG score to $\nabla\log\tilde{r}_t$ is not enough to conclude that deterministic CFG samples from the normalized product distribution $\tilde{r}_{t_0}$ at sampling time.

\subsection{Exact Sampling Distribution}
\label{sec:exact-dist}

We now turn to the central result: an exact analytic path-integral characterization of the distribution induced by deterministic CFG.

\begin{theorem}[CFG distribution under deterministic sampling]
\label{thm:ddim}
Suppose $p,q\in C^{1,2}([t_0,T]\times\mathbb{R}^n)$ with $p_t(x)>0$ and $q_t(x)>0$, and that both densities satisfy the Fokker--Planck equation~\eqref{eq:fp-p}. Assume that the characteristic ODE below generates a unique $C^1$ flow and that all path integrals appearing in the statement are finite. Let $\hat{r}_{t_0}$ be the distribution obtained by evolving the terminal density $\tilde{r}_T$ backward to $t_0$ via the deterministic ODE $dx = (f - \frac{1}{2}g^2\nabla_x\log\tilde{r}_t)\,dt$. Then
\begin{equation}
    \hat{r}_{t_0}(x) = \frac{1}{Z'}\, p_{t_0}^\omega(x)\,q_{t_0}^{1-\omega}(x)\,
    \exp\!\left(-\frac{1}{2}\,\omega(\omega-1)\int_{t_0}^T g^2(s)\,
    \big\|\nabla_x \log p_s(X_s^x) - \nabla_x \log q_s(X_s^x)\big\|_2^2\,ds\right),
    \label{eq:ddim-main}
\end{equation}
where $X_s^x$ solves the characteristic ODE
\begin{equation}
    \frac{dX_s^x}{ds} = f(X_s^x, s) - \frac{1}{2}g^2(s)\,\nabla_x \log \tilde{r}_s(X_s^x),
    \qquad X_{t_0}^x = x,
    \label{eq:char-ode}
\end{equation}
and $Z'$ is a normalization constant.

Equivalently, in terms of the log-ratio $L_t(x) = \log p_t(x) - \log q_t(x)$, the same distribution can be written as
\begin{equation}
    \begin{aligned}
    \hat{r}_{t_0}(x)
    &= \frac{1}{Z'}\, p_{t_0}(x)\,
    \left(\frac{p_T(X_T^x)}{q_T(X_T^x)}\right)^{\omega-1} \\
    &\quad \times
    \exp\!\left(-\frac{1}{2}(\omega-1)\int_{t_0}^T g^2(s)\,
    \Big(\Delta L_s(X_s^x) + \nabla L_s(X_s^x) \cdot \nabla\log p_s(X_s^x)\Big)\,ds\right).
    \end{aligned}
    \label{eq:ddim-laplace-general}
\end{equation}

Under the usual terminal condition $\log p_T=\log q_T$, the terminal ratio in~\eqref{eq:ddim-laplace-general} is equal to one, and the distribution reduces to
\begin{equation}
    \hat{r}_{t_0}(x) = \frac{1}{Z'}\, p_{t_0}(x)\,
    \exp\!\left(-\frac{1}{2}(\omega-1)\int_{t_0}^T g^2(s)\,
    \Big(\Delta L_s(X_s^x) + \nabla L_s(X_s^x) \cdot \nabla\log p_s(X_s^x)\Big)\,ds\right).
    \label{eq:ddim-laplace}
\end{equation}
\end{theorem}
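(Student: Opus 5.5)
The plan is to compare two evolution equations. The first is the continuity equation for the density $\hat r_t$ transported by the guided ODE. The second is the modified Fokker--Planck equation for $\tilde r_t$ from Theorem~\ref{thm:fp}. Writing the guided velocity as $v_t = f - \tfrac12 g^2\nabla\log\tilde r_t$, the transported density satisfies
\[
\partial_t \hat r_t = -\nabla\cdot(v_t \hat r_t).
\]
Expanding $\nabla\cdot(\tfrac12 g^2 \tilde r_t\nabla\log\tilde r_t)=\tfrac12 g^2\Delta\tilde r_t$ shows that \eqref{eq:fp-tilde} can be rewritten as
\[
\partial_t\tilde r_t = -\nabla\cdot(v_t\tilde r_t) - \tfrac12 g^2\bigl(V_t-\mathbb{E}_{\tilde r_t}[V_t]\bigr)\tilde r_t .
\]
So $\tilde r_t$ obeys the same transport equation as $\hat r_t$, plus a source term.

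Next, set $h_t=\hat r_t/\tilde r_t$. Subtracting the two equations gives $\partial_t h_t + v_t\cdot\nabla h_t = \tfrac12 g^2(V_t-\mathbb{E}_{\tilde r_t}[V_t])h_t$. Along a characteristic $X_s^x$ of \eqref{eq:char-ode} this reads
\[
\tfrac{d}{ds}\log h_s(X_s^x)=\tfrac12 g^2(s)\bigl(V_s(X_s^x)-\mathbb{E}_{\tilde r_s}[V_s]\bigr).
\]
Integrate from $t_0$ to $T$ and use the terminal condition $h_T\equiv1$ (since $\hat r_T=\tilde r_T$). This gives $h_{t_0}(x)=\exp\bigl(-\tfrac12\int_{t_0}^T g^2 V_s(X_s^x)\,ds\bigr)$ times $\exp\bigl(\tfrac12\int g^2\mathbb{E}_{\tilde r_s}[V_s]\,ds\bigr)$. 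The second factor is independent of $x$, so it is absorbed into $Z'$ together with $Z_{t_0}$. Substituting \eqref{eq:V} then yields \eqref{eq:ddim-main}. The assumed unique $C^1$ flow justifies the method of characteristics: the map $x\mapsto X_T^x$ is a diffeomorphism, and the logarithms are well defined because $p,q>0$.

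For the second form \eqref{eq:ddim-laplace-general}, write $p^\omega q^{1-\omega}=p\,e^{(\omega-1)L}$. It then suffices to convert $e^{(\omega-1)L_{t_0}(x)}$ into a terminal factor plus a path integral. To do this, compute $\tfrac{d}{ds}L_s(X_s^x)=\partial_sL_s+v_s\cdot\nabla L_s$. From \eqref{eq:fp-p}, $\partial_s\log p=-\nabla\cdot f - f\cdot\nabla\log p+\tfrac12 g^2(\Delta\log p+\|\nabla\log p\|^2)$, and likewise for $q$. Taking the difference, the divergence terms cancel and the $f$-terms cancel against $f\cdot\nabla L$ from $v_s$. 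What remains is $\tfrac12 g^2\bigl(\Delta L+\|\nabla\log p\|^2-\|\nabla\log q\|^2\bigr)-\tfrac12 g^2\nabla\log\tilde r\cdot\nabla L$. Now substitute $\nabla\log q=\nabla\log p-\nabla L$ and $\nabla\log\tilde r=\nabla\log p+(\omega-1)\nabla L$, and collect terms; we expect
\[
\tfrac{d}{ds}L_s=\tfrac12 g^2\bigl(\Delta L_s+\nabla L_s\cdot\nabla\log p_s\bigr)-\tfrac12 g^2\,\omega\|\nabla L_s\|^2 .
\]
Multiply by $(\omega-1)$ and integrate: $(\omega-1)L_{t_0}=(\omega-1)L_T(X_T)-\tfrac12(\omega-1)\int g^2(\Delta L+\nabla L\cdot\nabla\log p)\,ds+\tfrac12\omega(\omega-1)\int g^2\|\nabla L\|^2\,ds$. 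The last integral exactly cancels the exponential in \eqref{eq:ddim-main}, which yields \eqref{eq:ddim-laplace-general}. Setting $L_T=0$ gives \eqref{eq:ddim-laplace}.

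The main obstacle is this algebraic step. The $\omega$-dependent quadratic term $\|\nabla L\|^2$ must cancel precisely, and a sign or factor slip there would break the identity, so I would check it carefully. A secondary technical point is the normalization bookkeeping. The $x$-independent factors $Z_t$ and $\mathbb{E}_{\tilde r}[V]$ drop into $Z'$, and this requires the finiteness assumptions stated in the theorem.
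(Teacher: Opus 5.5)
Your proposal is correct and follows essentially the same route as the paper's proof. Both arguments take the continuity equation for $\hat r_t$, solve for the ratio $\hat r_t/\tilde r_t$ along the characteristic ODE with terminal value $1$, absorb the $x$-independent factors $Z_{t_0}$ and $\exp\bigl(\tfrac12\int_{t_0}^T g^2\,\mathbb{E}_{\tilde r_s}[V_s]\,ds\bigr)$ into $Z'$, and then use the identity for $\frac{d}{ds}L_s(X_s^x)$ to convert $e^{(\omega-1)L_{t_0}(x)}$ into the terminal factor plus a path integral; your formula $\frac{d}{ds}L_s=\tfrac12 g^2\bigl(\Delta L_s+\nabla L_s\cdot\nabla\log p_s-\omega\|\nabla L_s\|^2\bigr)$ agrees with the paper's boxed identity via $\|\nabla L_s\|^2+\nabla L_s\cdot\nabla\log q_s=\nabla L_s\cdot\nabla\log p_s$, so the $\omega(\omega-1)\|\nabla L_s\|^2$ cancellation goes through as you expect.
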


\begin{proof}
See Appendix~\ref{sec:proof-ddim}.
\end{proof}

Theorem~\ref{thm:ddim} gives an exact analytic representation of the CFG-induced distribution at the stopping time $t_0$. The first form~\eqref{eq:ddim-main} shows that the na\"{i}ve product distribution is corrected by an exponential path integral of the score discrepancy along the characteristic trajectory. For $\omega>1$, trajectories that pass through regions where the target and reference scores disagree strongly are penalized by this factor. The equivalent form~\eqref{eq:ddim-laplace-general} gives a complementary view: after rewriting relative to $p_{t_0}$, the departure from the target distribution is governed linearly by the guidance deviation $\omega-1$, up to the terminal boundary factor. Under the usual terminal condition $\log p_T=\log q_T$, this boundary factor vanishes and the simplified form~\eqref{eq:ddim-laplace} is obtained.

Several consequences follow. First, deterministic CFG does \emph{not} generally sample from the normalized product distribution $\tilde{r}_{t_0}\propto p_{t_0}^\omega q_{t_0}^{1-\omega}$; an additional path-integral correction is required. This correction vanishes in the limiting cases $\omega\in \{0,1\}$, and it becomes distributionally irrelevant in degenerate settings where the score discrepancy is spatially constant, so that its contribution can be absorbed into the normalization. Second, after accounting for the terminal boundary factor, the representation expresses the induced distribution as $p_{t_0}$ multiplied by an exponential path-integral factor weighted by $\omega-1$; under the usual terminal condition the boundary factor is absent. This representation separates the base target distribution from the cumulative effect of guidance, and will be the basis for the schedule design in Section~\ref{sec:schedule-design}.

\subsection{Time-Dependent Guidance}
\label{sec:time-dep}

Theorem~\ref{thm:ddim} considers constant guidance. We next extend the analysis to time-dependent guidance $\omega(t)$.

\begin{theorem}[CFG with time-dependent guidance]
\label{thm:time-dep}
Suppose $p,q\in C^{1,2}([t_0,T]\times\mathbb{R}^n)$ with $p_t(x)>0$ and $q_t(x)>0$, and that both densities satisfy the Fokker--Planck equation~\eqref{eq:fp-p}. Let $\omega\in W^{1,1}([t_0,T])$, define $r_t(x)=p_t^{\omega(t)}(x)q_t^{1-\omega(t)}(x)$ and $Z_t=\int_{\mathbb{R}^n}r_t(x)\,dx$, and set $\tilde{r}_t=Z_t^{-1}r_t$, where $0<Z_t<\infty$ on $[t_0,T]$. Assume that the characteristic ODE below generates a unique $C^1$ flow and that all path integrals appearing in the statement are finite. Let $L_t(x)=\log p_t(x)-\log q_t(x)$. The distribution obtained by evolving the terminal density $\tilde{r}_T$ backward to $t_0$ via the deterministic ODE is
\begin{equation}
    \begin{aligned}
    \hat{r}_{t_0}(x)
    &= \frac{1}{Z'}\, p_{t_0}(x)\,
    \left(\frac{p_T(X_T^x)}{q_T(X_T^x)}\right)^{\omega(T)-1} \\
    &\quad \times
    \exp\!\left(-\int_{t_0}^T \frac{1}{2}g^2(s)\,(\omega(s)-1)\,
    \Big(\Delta L_s(X_s^x) + \nabla L_s(X_s^x)\cdot\nabla\log p_s(X_s^x)\Big)\,ds\right),
    \end{aligned}
    \label{eq:time-dep-simple-general}
\end{equation}
where $Z'$ is a normalization constant and $X_s^x$ is the solution of
\begin{equation}
    \frac{dX_s^x}{ds} = f(X_s^x, s) - \frac{1}{2}g^2(s)\,\nabla_x \log \tilde{r}_s(X_s^x),
    \qquad X_{t_0}^x = x.
    \label{eq:char-ode-time}
\end{equation}

Under the usual terminal condition $\log p_T=\log q_T$, the terminal ratio in~\eqref{eq:time-dep-simple-general} is equal to one, and the distribution reduces to
\begin{equation}
    \hat{r}_{t_0}(x) = \frac{1}{Z'}\, p_{t_0}(x)\,
    \exp\!\left(-\int_{t_0}^T \frac{1}{2}g^2(s)\,(\omega(s)-1)\,
    \Big(\Delta L_s(X_s^x) + \nabla L_s(X_s^x)\cdot\nabla\log p_s(X_s^x)\Big)\,ds\right),
    \label{eq:time-dep-simple}
\end{equation}
\end{theorem}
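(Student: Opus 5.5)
The plan is a method-of-characteristics argument for the continuity equation of the guided flow, comparing the transported density $\hat{r}_t$ with the unnormalized product $r_t=p_t^{\omega(t)}q_t^{1-\omega(t)}$ along the same characteristics. The guided ODE $dx=v_t\,dx$-style dynamics, with velocity $v_t=f-\tfrac12 g^2\nabla\log r_t$, transports $\hat{r}_t$ according to $\partial_t\hat{r}_t=-\nabla\cdot(v_t\hat{r}_t)$. Along a characteristic $X_s^x$ this gives
\[
\frac{d}{ds}\log\hat{r}_s(X_s^x)=-\nabla\cdot v_s(X_s^x)=-\nabla\cdot f+\tfrac12 g^2\Delta\log r_s .
\]
Integrating from $t_0$ to $T$ with $\hat{r}_T=\tilde{r}_T$ yields
\[
\log\hat{r}_{t_0}(x)=\log\tilde{r}_T(X_T^x)+\int_{t_0}^T\Big(\nabla\cdot f-\tfrac12 g^2\Delta\log r_s\Big)(X_s^x)\,ds .
\]
The flow assumption (unique $C^1$ flow) justifies this Liouville formula.

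The second step is to compute $\frac{d}{ds}\log r_s(X_s^x)$ directly, so that the terminal term can be traded for $\log r_{t_0}(x)$ plus a path integral. Write $\log r_s=\log p_s-(\omega(s)-1)\cdot(-L_s)$, i.e.\ $\log r_s=\log p_s+(\omega(s)-1)L_s$. Then
\[
\frac{d}{ds}\log r_s(X_s)=\partial_s\log r_s+\nabla\log r_s\cdot v_s .
\]
The time derivative has three pieces:
\begin{itemize}
\item $\partial_s\log p_s$ and $\partial_s\log q_s$, which the Fokker--Planck equation~\eqref{eq:fp-p} gives as
\[
\partial_s\log p=-\nabla\cdot f-f\cdot\nabla\log p+\tfrac12 g^2\big(\Delta\log p+\|\nabla\log p\|^2\big),
\]
and similarly for $q$;
\item the new term $\omega'(s)L_s$, which is legitimate a.e.\ because $\omega\in W^{1,1}$ makes $s\mapsto\omega(s)$ absolutely continuous.
\end{itemize}

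Subtracting the two identities cancels $\nabla\cdot f$ and the $f\cdot\nabla$ terms. What remains is
\[
\frac{d}{ds}\log\frac{\hat{r}_s}{r_s}(X_s)=-\tfrac12 g^2\,\omega(\omega-1)\|\nabla L_s\|^2-\omega'(s)L_s(X_s).
\]
Here the quadratic term comes from $\omega\|\nabla\log p\|^2+(1-\omega)\|\nabla\log q\|^2-\|\nabla\log r\|^2=\omega(\omega-1)\|\nabla L\|^2$, exactly the interaction potential of Theorem~\ref{thm:fp}. This gives the time-dependent analogue of~\eqref{eq:ddim-main}, now carrying the extra term $\int\omega'L\,ds$.

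The third step, which I expect to be the main obstacle, converts this into the claimed $p_{t_0}$-relative form with weight $\omega(s)-1$. The idea is to handle the $\log q$ part differently: repeat the Liouville comparison but against $p_s$ instead of $r_s$, computing $\frac{d}{ds}\log p_s(X_s)$ along the guided characteristics. Its drift differs from the $p$-probability-flow drift by $-\tfrac12 g^2(\omega-1)\nabla L$, so
\[
\frac{d}{ds}\log\frac{\hat{r}_s}{p_s}(X_s)=-\tfrac12 g^2(\omega-1)\big(\Delta L+\nabla L\cdot\nabla\log p\big)(X_s).
\]
I will verify this by expanding $\Delta\log r=\Delta\log p+(\omega-1)\Delta L$ and $\nabla\log p\cdot v$ using the Fokker--Planck identity for $p$; the $\|\nabla\log p\|^2$ terms should cancel.

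Integrating then gives
\[
\log\hat{r}_{t_0}(x)=\log p_{t_0}(x)+\log\frac{\tilde{r}_T}{p_T}(X_T^x)+\text{path integral},
\]
and at the terminal time $\tilde{r}_T/p_T=Z_T^{-1}(p_T/q_T)^{\omega(T)-1}$. So the only $\omega'$-dependence is absorbed into the endpoint, and the normalizing factor $Z_T^{-1}$ goes into $Z'$. This yields~\eqref{eq:time-dep-simple-general}, and setting $\log p_T=\log q_T$ gives~\eqref{eq:time-dep-simple}.

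The delicate points are the careful bookkeeping of signs when integrating backward in time, and justifying the a.e.\ chain rule for $W^{1,1}$ guidance. As a consistency check, the constant case $\omega(t)\equiv\omega$ should recover Theorem~\ref{thm:ddim}.
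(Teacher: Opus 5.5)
Your argument is correct in substance, but it takes a more direct route than the paper.

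\textbf{How the routes differ.} The paper proceeds in four steps:
\begin{enumerate}
\item It derives the evolution of $r_t$, including the source term $\omega'(t)L_t r_t$.
\item It normalizes to obtain a Fokker--Planck equation for $\tilde r_t$ with centered source terms.
\item It solves the ratio equation for $\rho_t=\hat r_t/\tilde r_t$ along characteristics. This gives $\hat r_{t_0}\propto\tilde r_{t_0}\exp\bigl(-\tfrac12\int_{t_0}^T g^2V_s\,ds+\int_{t_0}^T\omega'(s)L_s\,ds\bigr)$.
\item It integrates $\int\omega'(s)L_s\,ds$ by parts, using the total derivative of $L_s$ along the guided flow. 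The boundary term at $t_0$ then converts $p_{t_0}^{\omega(t_0)}q_{t_0}^{1-\omega(t_0)}$ into $p_{t_0}$.
\end{enumerate}
Your Step 3 collapses all of this into a single Liouville comparison of $\hat r_s$ against $p_s$ along the guided characteristics. The paper's integration by parts is exactly what your direct comparison performs implicitly.

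\textbf{What your route buys.}
\begin{itemize}
\item The schedule enters only through the velocity field, so $\omega'$ never appears. The hypothesis $\omega\in W^{1,1}$ becomes unnecessary: any schedule for which the flow exists works. This includes the piecewise-constant Interval CFG schedule used in the experiments, which is not in $W^{1,1}$.
\item The normalizing constant is needed only at $t=T$. You therefore avoid $dZ_t/dt$, the boundary-decay condition, and finiteness of $\mathbb{E}_{\tilde r_t}[V_t]$ and $\mathbb{E}_{\tilde r_t}[L_t]$.
\item If you write Liouville as $\hat r_{t_0}(x)=\tilde r_T(X_T^x)\det DX_T^x$, you never differentiate $\hat r_t$ itself. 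The paper's quotient-rule computation tacitly requires that.
\end{itemize}

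\textbf{What the paper's route buys.} As a by-product it yields the time-dependent product-reference form, with the interaction potential of Theorem~\ref{thm:fp} and the explicit $\omega'$ term. This links the result to Theorem~\ref{thm:ddim}. Your Step 2 aims at that by-product but is not needed for the statement.

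\textbf{Two sign slips to fix.}

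In Step 3, the forward-time derivative has the opposite sign to what you display. Indeed $-\nabla\cdot v_s=-\nabla\cdot f+\tfrac12 g^2\bigl(\Delta\log p_s+(\omega-1)\Delta L_s\bigr)$, while $\frac{d}{ds}\log p_s(X_s)=-\nabla\cdot f+\tfrac12 g^2\Delta\log p_s-\tfrac12 g^2(\omega-1)\nabla L_s\cdot\nabla\log p_s$. Hence
\[
\frac{d}{ds}\log\frac{\hat r_s}{p_s}(X_s^x)=+\tfrac12 g^2(s)(\omega(s)-1)\bigl(\Delta L_s+\nabla L_s\cdot\nabla\log p_s\bigr)(X_s^x).
\]
The minus sign in the final exponent appears only when you integrate over $[t_0,T]$ and solve for the value at $t_0$. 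Integrating your displayed version literally would give the exponent with the wrong sign.

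In Step 2, the identity is $\omega\|a\|^2+(1-\omega)\|b\|^2-\|\omega a+(1-\omega)b\|^2=\omega(1-\omega)\|a-b\|^2$, not $\omega(\omega-1)\|a-b\|^2$. So the correct derivative is $\frac{d}{ds}\log(\hat r_s/r_s)(X_s)=\tfrac12 g^2\omega(\omega-1)\|\nabla L_s\|^2-\omega'(s)L_s$. With your sign, the analogue of \eqref{eq:ddim-main} would come out with $\exp(+\tfrac12\int g^2 V_s\,ds)$, contradicting Theorem~\ref{thm:ddim}.

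Finally, in Step 3 there is no $\omega'$-dependence to absorb. The endpoint factor $\tilde r_T/p_T=Z_T^{-1}(p_T/q_T)^{\omega(T)-1}$ carries only $\omega(T)$.
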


\begin{proof}
See Appendix~\ref{sec:proof-time-dep}.
\end{proof}

Equation~\eqref{eq:time-dep-simple-general} shows that time-dependent guidance has the same structural form as constant guidance: after accounting for the terminal boundary factor, the induced distribution is represented as $p_{t_0}$ multiplied by an exponential path-integral correction. The only schedule-dependent factor in this simplified expression is $\omega(t)-1$, which controls how strongly each timestep contributes to the correction. Under the usual terminal condition, the boundary factor disappears and~\eqref{eq:time-dep-simple} is recovered.

Constant guidance is recovered as the special case $\omega(t)\equiv\bar{\omega}$, for which~\eqref{eq:time-dep-simple} reduces to Theorem~\ref{thm:ddim}. This observation places standard CFG within a broader time-varying framework. As we shall argue in Section~\ref{sec:schedule-design} and validate experimentally in Section~\ref{sec:experiments}, constant guidance can be suboptimal because the path-integral weight is also modulated by the noise schedule.

\begin{remark}
    The preceding analysis does \emph{not} rely on specific choices of $f(x,t)$, $g(t)$, or the data distributions $p_{\mathrm{data}}$ and $q_{\mathrm{data}}$. The derivation uses only the forward SDE~\eqref{eq:forward-sde} and a deterministic guided reverse process of the form
\begin{equation}
    dx = \Bigl(f(x,t) - \frac{1}{2}g^2(t)\,\bigl(\omega(t)\nabla_x\log p_t + (1-\omega(t))\nabla_x\log q_t\bigr)\Bigr)\,dt.
    \label{eq:general-sampler}
\end{equation}
In the usual CFG setting, $q_{\mathrm{data}}$ is the unconditional distribution and $p_{\mathrm{data}}=q_{\mathrm{data}}(\cdot\mid c)$ is the conditional distribution for condition $c$. The form~\eqref{eq:general-sampler} therefore covers deterministic guided samplers with possibly time-dependent guidance strength. In particular, it includes two widely used parameterizations:
\begin{itemize}[leftmargin=*]
    \item \textbf{VPSDE:} $f(x,t) = -\frac{1}{2}\beta(t)\,x$, $g(t) = \sqrt{\beta(t)}$. $\beta(t)$ is the continuous variance rate of the forward process, typically linear in $t$~\cite{song2021denoising}.
    \item \textbf{VESDE:} $f \equiv 0$, with $g(t)=\sqrt{2\sigma(t)\dot{\sigma}(t)}$, where $\sigma(t)$ specifies the noise scale of the perturbation kernel $p_t(x_t \mid x_0)=\mathcal{N}(x_t; x_0, \sigma_t^2 I)$~\cite{karras2022elucidating}.
\end{itemize}
\end{remark}

To maintain a coherent narrative, the remainder of the paper---schedule design (Section~\ref{sec:schedule-design}) and experiments (Section~\ref{sec:experiments})---is presented within the VPSDE framework. The schedule-design principle, however, is not tied to this parameterization: the same path-integral perspective can be naturally adapted to VESDE.

\section{Our Method}
\label{sec:schedule-design}

We now translate the path-integral characterization into a practical guidance schedule. Section~\ref{sec:dist-cfg} shows, through the path-integral representation, that $\hat{r}_{t_0}$ differs from $p_{t_0}$ by an exponential path-integral correction. Under VPSDE, $g^2(s)=\beta(s)$, and the schedule-dependent contribution to the exponent contains
\begin{equation}
    \beta(s)\,(\omega(s)-1)\,
    \Big(\Delta L_s(X_s^x) + \nabla L_s(X_s^x)\cdot\nabla\log p_s(X_s^x)\Big).
    \label{eq:integrand}
\end{equation}
This expression exposes three sources of temporal non-uniformity: the known diffusion coefficient $\beta(s)$, which can vary substantially along the trajectory; the explicit guidance deviation $\omega(s)-1$, which determines how guidance is allocated over time; and the geometry-dependent term $\Delta L_s+\nabla L_s\cdot\nabla\log p_s$.

\subsection{Motivation: Why Constant Guidance Is Suboptimal.}

With constant guidance $\omega(s) \equiv \bar{\omega}$, the schedule-dependent weight reduces to $\beta(s)(\bar{\omega}-1)$. In the standard VPSDE with a linear continuous noise rate $\beta(s) = \beta_{\min} + s(\beta_{\max}-\beta_{\min})/T$ (typical values: $\beta_{\min} = 0.1$ and $\beta_{\max} = 20$), the high-noise end ($s \approx T$) receives a coefficient roughly $200$ times larger than the low-noise end ($s \approx t_0$). Thus, a nominally constant guidance strength does not produce a constant contribution to the path integral; the effective per-timestep guidance weight is modulated by two orders of magnitude.

This imbalance has visible consequences. The high-noise phase primarily determines coarse semantic layout, global structure, and color, whereas the low-noise phase refines details and textures. When constant CFG is combined with a rapidly increasing $\beta(s)$, guidance is concentrated disproportionately in the high-noise regime. This can encourage repeated coarse layouts and color bias, reducing global diversity and increasing oversaturation. Meanwhile, the low-noise regime receives comparatively little effective guidance, which can leave fine details under-refined. These effects become more obvious at large guidance strengths because the entire path-integral correction is amplified by $\bar{\omega}-1$.

\subsection{Distribution-Guided Classifier-Free Guidance (DG-CFG).}

We construct the guidance schedule in three incremental steps. Each step addresses a specific source of imbalance suggested by the path-integral form, and the resulting components are validated separately in the ablation study (Section~\ref{sec:ablation}).

\medskip\noindent
\textbf{Step 1: Time balancing.}
As established above, $\beta(s)$ varies by two orders of magnitude and causes constant CFG to overweight the high-noise regime. We first compensate for this known coefficient by scaling the guidance deviation inversely with $\beta(t)$:
$\omega(t) = 1 + C_1 \cdot (\bar{\omega}-1) / \beta(t)$.
This makes the schedule-dependent factor $\beta(t)(\omega(t)-1)$ approximately uniform in the path integral, reducing the dominance of high-noise stages and helping preserve diversity while suppressing saturation.

\medskip\noindent
\textbf{Step 2: Signal-content weighting.}
Time balancing treats every timestep as equally informative. In reality, the signal content of $\mathbf{x}_t = \sqrt{\bar{\alpha}_t}\,\mathbf{x}_0 + \sigma_t\,\boldsymbol{\epsilon}$ changes systematically: near $t=T$, $\sqrt{\bar{\alpha}_t}\approx0$ and the sample is nearly pure noise, whereas near $t=t_0$, $\sqrt{\bar{\alpha}_t}\approx1$ and the sample already contains most of the clean signal. Multiplying by $\sqrt{\bar{\alpha}_t}$ shifts guidance toward timesteps where the clean signal is present and the conditional score is more informative. The schedule becomes
$\omega(t) = 1 + C_2 \cdot (\bar{\omega}-1) \cdot \sqrt{\bar{\alpha}_t} / \beta(t)$.

\medskip\noindent
\textbf{Step 3: Score approximation error mitigation.}
The first two steps increase the relative influence of the low-noise regime. In practical diffusion models, however, the score is approximated by a neural network through $\nabla_{\mathbf{x}}\log p_t(\mathbf{x})\approx -\boldsymbol{\epsilon}_\theta(\mathbf{x},t)/\sigma_t$. Since $\sigma_t=\sqrt{1-\bar{\alpha}_t}$ becomes small near $t_0$, prediction errors in $\boldsymbol{\epsilon}_\theta$ are amplified in the score estimate; in squared score quantities, the amplification scales as $1/\sigma_t^2$. Overweighting this regime can therefore push samples away from the desired distribution and degrade realism. We suppress this low-noise error amplification by multiplying the schedule by $\sigma_t^2=1-\bar{\alpha}_t$, yielding the final correction factor.

\medskip\noindent
\textbf{Final schedule.}
Combining all three factors yields Distribution-Guided Classifier-Free Guidance (DG-CFG):
\begin{equation}
    \boxed{\;\omega(t) = 1 + C \cdot (\bar{\omega} - 1) \cdot
           \frac{(1-\bar{\alpha}_t)\,\sqrt{\bar{\alpha}_t}}{\beta(t)}\;},
    \label{eq:final-schedule}
\end{equation}
where $\bar{\omega} \geq 1$ is the nominal guidance strength and $C$ normalizes the integrated guidance deviation to match constant CFG:
\begin{equation}
    C = \frac{T}{\int_{t_0}^T \frac{(1-\bar{\alpha}_s)\sqrt{\bar{\alpha}_s}}{\beta(s)}\,ds}.
    \label{eq:C-def-final}
\end{equation}

The resulting factor $(1-\bar{\alpha}_t)\sqrt{\bar{\alpha}_t}/\beta(t)$ vanishes near both endpoints: near $t=t_0$, where $1-\bar{\alpha}_t\approx0$, and near $t=T$, where $\sqrt{\bar{\alpha}_t}\approx0$. DG-CFG therefore concentrates guidance at intermediate noise levels, where the sample contains meaningful signal while the score estimate remains sufficiently stable. This middle-focused profile is consistent with prior empirical observations that allocating stronger guidance to intermediate stages can improve generation quality~\cite{kynkaanniemi2024applying,malarz2025classifier}; here, it emerges from correcting the temporal weighting in the path-integral representation.

\section{Experiments}
\label{sec:experiments}

We evaluate our theory and the resulting schedule in two stages. First, we use a controlled toy model in which all relevant scores and path-integral terms admit analytic closed forms, enabling direct verification of Theorems~\ref{thm:ddim} and~\ref{thm:time-dep} (Section~\ref{sec:exp-toy}). Second, we test whether the same path-integral insight leads to practical gains in large-scale neural samplers by evaluating DG-CFG on Stable Diffusion~1.5, Stable Diffusion~2.1, and Stable Diffusion~XL (Section~\ref{sec:exp-sd}).

\subsection{Discrete Distribution on a Circle}
\label{sec:exp-toy}

\subsubsection{Setup}

We use the standard DDPM noise schedule~\cite{ho2020denoising} on the time interval $[0,T]$ with $T=1$, discretized into $1000$ steps. The continuous noise rate $\beta(t)$ increases linearly from $0.1$ to $20$, and we set $t_0=0.001$.

The unconditional data distribution $q_{\mathrm{data}}$ is uniform over $K=10$ equally spaced point masses on a circle of radius $R=2$. The conditional distribution $p_{\mathrm{data}}$ assigns likelihood weights $\boldsymbol{\ell}=[2,2,2,2,1,1,1,1,1,1]$ to the same support and then renormalizes, so that points $0$--$3$ receive twice the mass of points $4$--$9$. Under the forward Gaussian perturbation, both $p_t$ and $q_t$ become Gaussian mixtures, and their scores and score divergences admit closed-form expressions.

We compare two guidance strategies at $\bar{\omega} \in \{3, 5, 7, 9\}$:
\begin{itemize}[leftmargin=*]
    \item \textbf{Constant CFG:} $\omega(t) \equiv \bar{\omega}$.
    \item \textbf{Time-dependent CFG:}
    $\omega(t) = 1 + C \cdot (\bar{\omega}-1) \cdot \sqrt{\bar{\alpha}_t} / \beta(t)$, with $C = 1/0.351$. (The $(1-\bar{\alpha}_t)$ correction is omitted here because the toy model uses analytic score functions; the $1/\sigma_t^2$ error amplification that motivates Step~3 does not arise.)
\end{itemize}

\medskip\noindent
For each setting, we draw $N=200{,}000$ samples using 50-step DDIM sampling from $\mathcal{N}(\mathbf{0},I)$. Each sample is assigned to its nearest data point, yielding the empirical label distribution $\hat{P}^{\mathrm{samp}}_k$. The theoretical label probabilities $\hat{P}^{\mathrm{th}}_k$ are computed from Theorem~\ref{thm:ddim} for constant guidance and from Theorem~\ref{thm:time-dep} for time-dependent guidance, evaluated at each support point $\mathbf{x}^k$. Agreement is measured by total variation (TV) distance and mean absolute error (MAE). 

As an additional diagnostic, we compute the \emph{Min-Max-ratio} from the empirical sample distribution over the four high-weight points ($k=0,1,2,3$), defined as $\min_{k\in\{0,1,2,3\}}\hat{P}^{\mathrm{samp}}_k \,/\, \max_{k\in\{0,1,2,3\}}\hat{P}^{\mathrm{samp}}_k$. This metric captures mode imbalance within the conditioned subset: a value near $1$ indicates balanced coverage of the four preferred modes, whereas a value near $0$ indicates that at least one preferred mode has nearly disappeared. More details are shown in Appendix~\ref{sec:appendix-toy}.

\subsubsection{Results}

Table~\ref{tab:verification} and Figure~\ref{fig:toy-label-compare} report the results. Across both guidance strategies and all guidance strengths, the distributions of CFG closely match the theoretical predictions, with TV distances typically around $1$--$2\%$ and small MAE. This agreement directly validates the distributional formulas in Theorems~\ref{thm:ddim} and~\ref{thm:time-dep}.

Beyond verifying the formulas, the toy model also illustrates the effect of time-dependent weighting. The time-dependent schedule yields substantially higher Min-Max-ratios and more balanced per-label distributions than constant CFG. The difference is most pronounced at strong guidance ($\bar{\omega}=9$), where constant CFG nearly extinguishes the peripheral preferred modes (points 0 and 3), while the time-dependent schedule preserves visibly more balanced coverage. This behavior is consistent with the path-integral insight used to design the schedule in Section~\ref{sec:schedule-design}.

\begin{table}[ht]
\centering
\caption{Toy-model verification of the predicted induced distributions. TV distance and MAE compare the empirical label distribution ($N=200{,}000$, 50-step DDIM) with the theoretical predictions from Theorems~\ref{thm:ddim} and~\ref{thm:time-dep}. The Min-Max-ratio, defined as $\min_{k\in\{0,1,2,3\}}\hat{P}^{\mathrm{samp}}_k / \max_{k\in\{0,1,2,3\}}\hat{P}^{\mathrm{samp}}_k$, measures balance among the four preferred modes.}
\label{tab:verification}
\begin{tabular}{lcccc}
\toprule
Schedule & $\bar{\omega}$ & TV & MAE & Min-Max-ratio \\
\midrule
Constant CFG          & 3 & 0.0135 & 0.00269 & 0.665 \\
Constant CFG          & 5 & 0.0189 & 0.00377 & 0.368 \\
Constant CFG          & 7 & 0.0159 & 0.00317 & 0.193 \\
Constant CFG          & 9 & 0.0212 & 0.00425 & 0.099 \\
\midrule
Time-dependent CFG   & 3 & 0.0053 & 0.00106 & 0.959 \\
Time-dependent CFG   & 5 & 0.0076 & 0.00153 & 0.981 \\
Time-dependent CFG   & 7 & 0.0116 & 0.00232 & 0.898 \\
Time-dependent CFG   & 9 & 0.0159 & 0.00317 & 0.808 \\
\bottomrule
\end{tabular}
\end{table}

\begin{figure}[ht]
\centering
\small
\begin{tikzpicture}[
    img/.style={inner sep=0pt, outer sep=0pt},
    mat/.style={matrix of nodes, nodes=img, inner sep=0pt, outer sep=0pt, column sep=4pt, row sep=8pt}
]

\node[img] (top_all) {\includegraphics[width=0.35\textwidth]{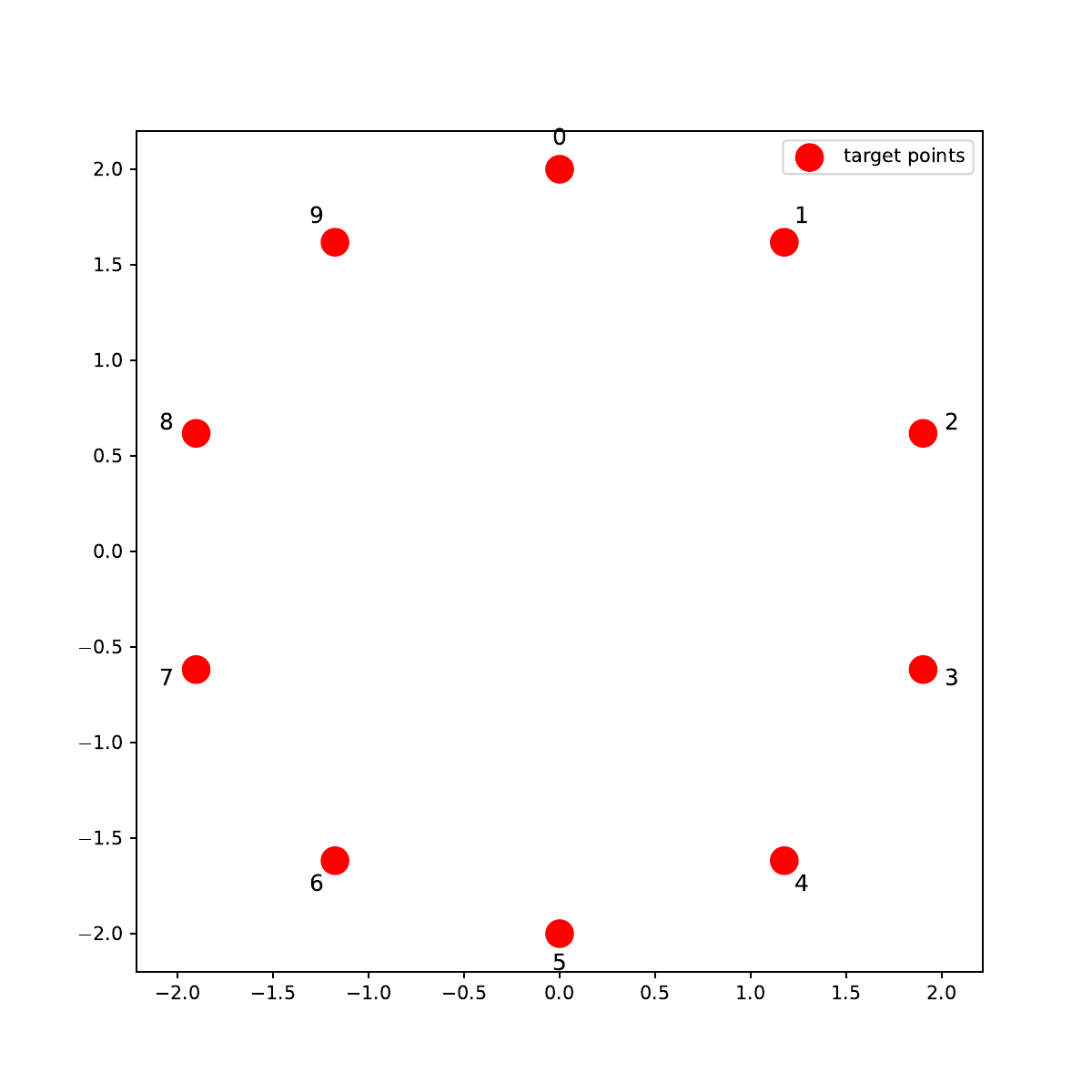}};
\node[anchor=north, yshift=5pt] at (top_all.south) {Discrete point layout};

\matrix (plots) [mat, below=12pt of top_all]
{
\includegraphics[width=0.40\textwidth]{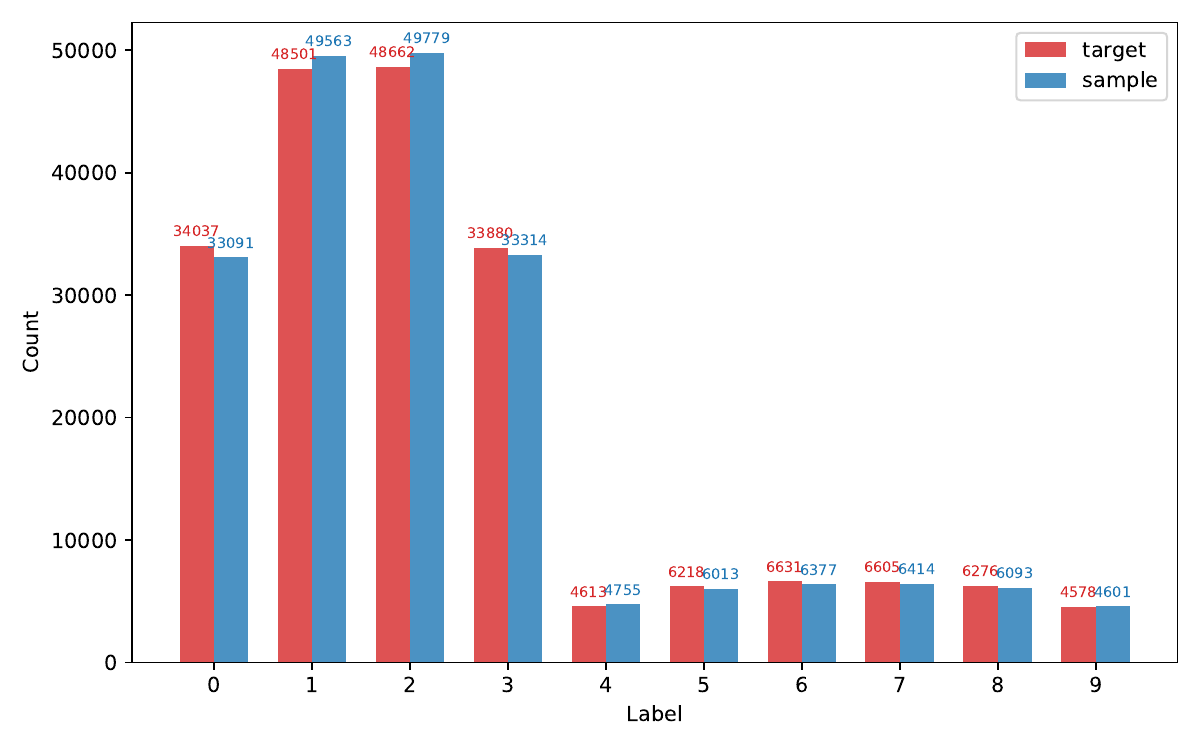} &
\includegraphics[width=0.40\textwidth]{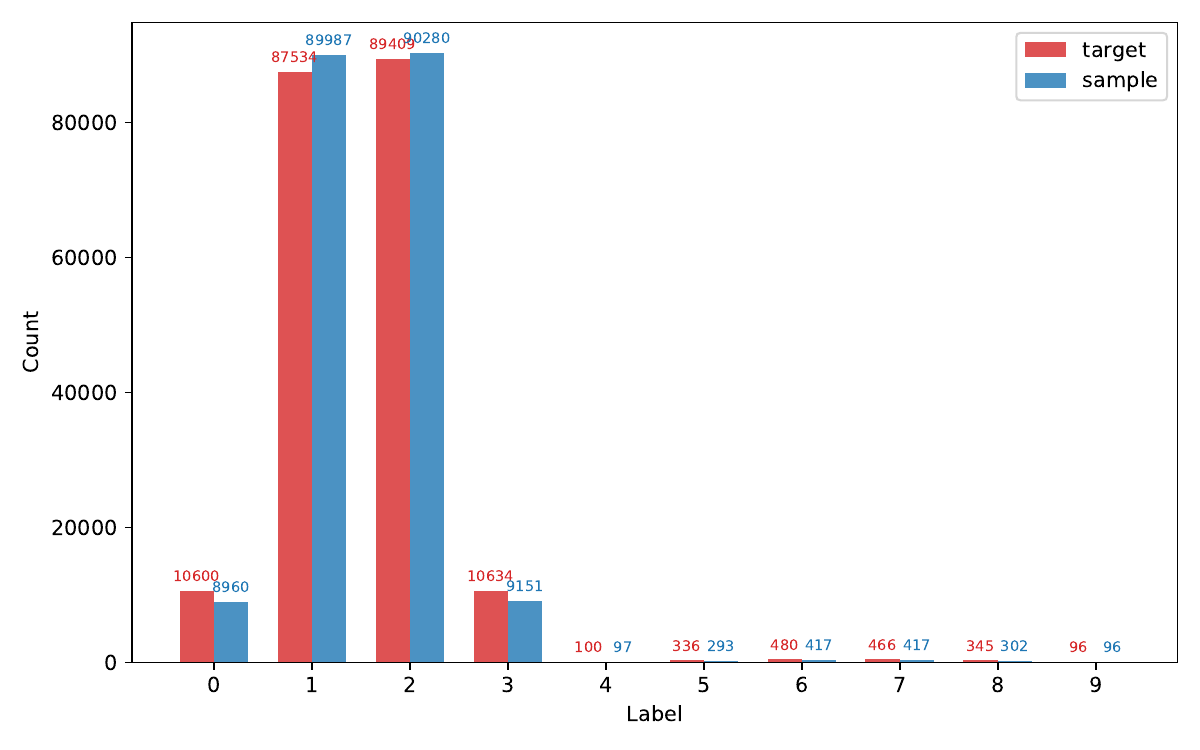} \\
\includegraphics[width=0.40\textwidth]{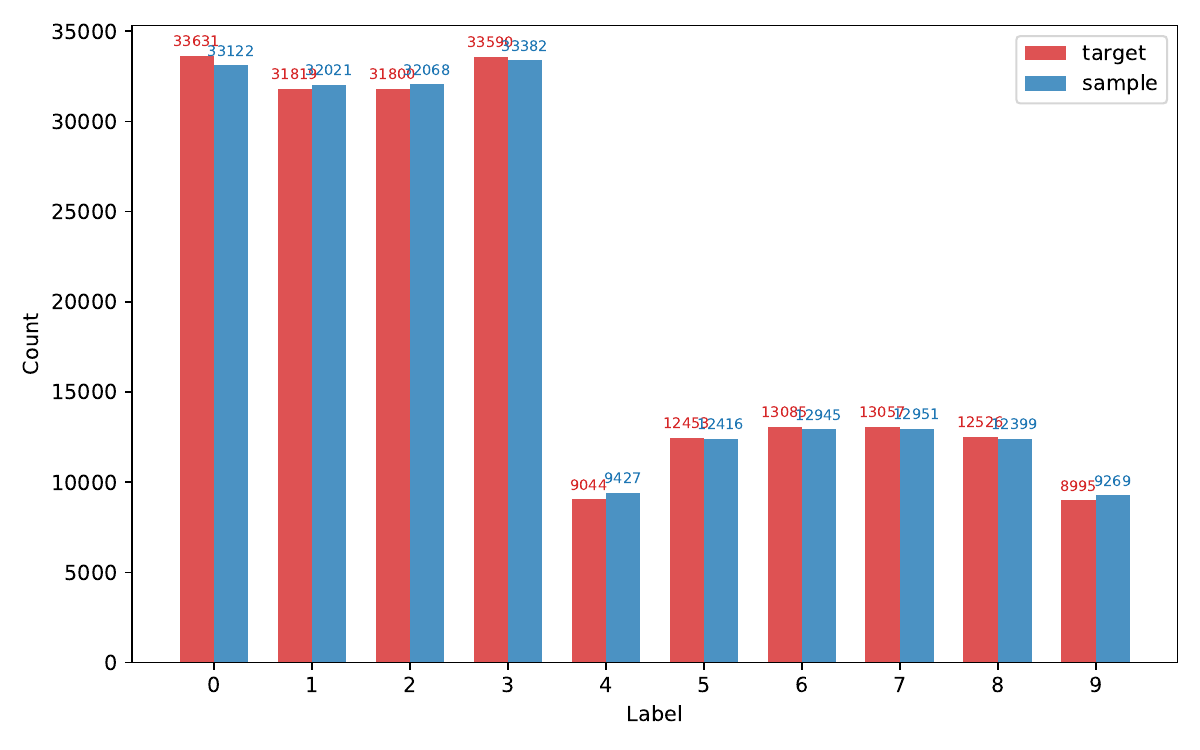} &
\includegraphics[width=0.40\textwidth]{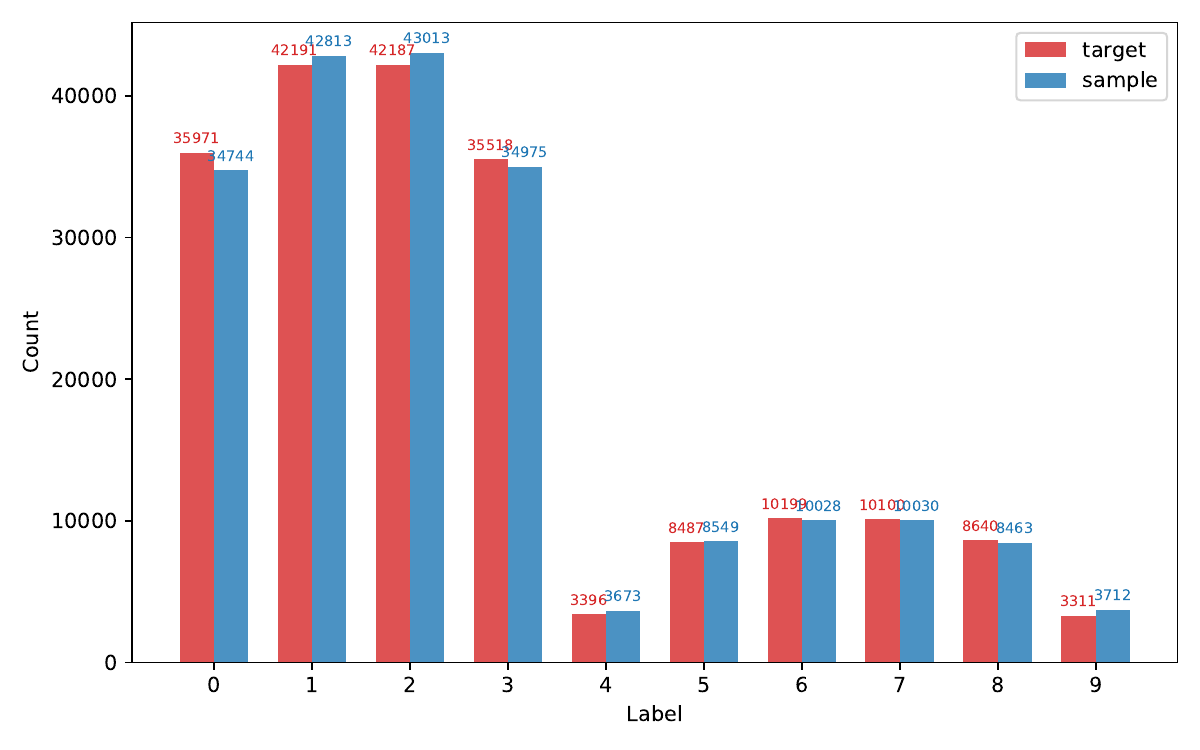} \\
};

\node[anchor=north, yshift=3pt] at (plots-1-1.south) {Constant CFG, $\bar{\omega}=3$};
\node[anchor=north, yshift=3pt] at (plots-1-2.south) {Constant CFG, $\bar{\omega}=9$};
\node[anchor=north, yshift=3pt] at (plots-2-1.south) {Time-dependent CFG, $\bar{\omega}=3$};
\node[anchor=north, yshift=3pt] at (plots-2-2.south) {Time-dependent CFG, $\bar{\omega}=9$};

\end{tikzpicture}
\caption{\textbf{Toy-model label distributions under constant and time-dependent guidance.}
\textbf{Top:} Discrete support points ($K=10$ on a circle of radius $R=2$), where points $0$--$3$ carry twice the target weight of points $4$--$9$.
\textbf{Middle:} Constant CFG.
\textbf{Bottom:} Time-dependent CFG.
Bars show target distribution (theoretical) and sample distribution (CFG) with $N=200{,}000$ samples and 50-step DDIM. }
\label{fig:toy-label-compare}
\end{figure}

\subsection{Stable Diffusion Models}
\label{sec:exp-sd}

We next evaluate DG-CFG on three latent diffusion backbones~\cite{rombach2022high}: Stable Diffusion~1.5 (SD1.5), Stable Diffusion~2.1 (SD2.1), and Stable Diffusion~XL (SDXL). These backbones span three model generations and two native output resolutions, allowing us to assess transfer across architectures and scales. At a fixed sampling budget, we compare guidance strategies over a range of nominal guidance strengths $\bar{\omega}$ on all three backbones. On SD1.5 and SD2.1, we further conduct full component ablations to isolate each design factor and assess its consistency across backbones. We also evaluate these models across NFE settings under strong guidance to assess robustness across sampling budgets.

\subsubsection{Setup}

We use the RunwayML SD1.5 checkpoint,\footnote{\url{https://huggingface.co/runwayml/stable-diffusion-v1-5}} the SD2.1-base checkpoint,\footnote{\url{https://huggingface.co/sd2-community/stable-diffusion-2-1-base}} and the SDXL-base-1.0 checkpoint.\footnote{\url{https://huggingface.co/stabilityai/stable-diffusion-xl-base-1.0}} The unconditional score is obtained with an empty text prompt, and the conditional score uses prompts from the MS-COCO 2017 validation set~\cite{lin2014microsoft}. All images are generated with the DDIM sampler~\cite{song2021denoising}.

For each method and backbone, we use the same randomly selected $N_{gen}=1{,}000$ prompt--image pairs from MS-COCO 2017 and generate one image per prompt at the model's native evaluation resolution: $512\times512$ for SD1.5 and SD2.1, and $1024\times1024$ for SDXL. We evaluate overall image quality with Fr\'{e}chet Inception Distance (FID)~\cite{heusel2017gans}, prompt alignment with CLIP score~\cite{radford2021learning}, and perceptual quality with ImageReward (IR). We also report a saturation diagnostic (Sat) to detect the color over-saturation pathology that commonly appears under strong guidance. To measure diversity, we additionally sample $N_{\mathrm{div}}=100$ prompts and generate 16 images per prompt with different initializations, then compute the Vendi score~\cite{friedman2023the}. CLIP, IR, and Sat are averaged over the $N_{gen}$ generated images, while Vendi score is averaged over the $N_{\mathrm{div}}$ prompts. For reference, real MS-COCO images achieve $\text{CLIP}=30.57$, $\text{IR}=0.38$, and $\text{Sat}=0.33$ under the same evaluators.

\subsubsection{Main Results}

We compare four guidance strategies using the same DDIM sampler with $8$ NFE. All three backbones are evaluated at $\bar{\omega}\in\{3,7,11,15,19,23\}$:

\begin{itemize}[leftmargin=*]
    \item \textbf{Constant CFG:} $\omega(t) \equiv \bar{\omega}$.
    \item \textbf{Interval CFG~\cite{kynkaanniemi2024applying}:}
    $\omega(t) = 1 + (\bar{\omega}-1)/0.6$ for $t \in [200, 800]$, $\omega(t) = 1$ otherwise.
    \item \textbf{$\beta$--CFG~\cite{malarz2025classifier}:}
    $\omega(t) = \bar{\omega} \cdot 6 \cdot (t/T) \cdot (1 - t/T)$.
    \item \textbf{DG-CFG (ours):}
    $\omega(t) = 1 + C \cdot (\bar{\omega}-1) \cdot (1-\bar{\alpha}_t)\sqrt{\bar{\alpha}_t} / \beta(t)$, where $C$ is computed from the noise schedule according to Eq.~\eqref{eq:C-def-final}.
\end{itemize}

All non-constant schedules are normalized so that $\int_{t_0}^T(\omega(s)-1)\,ds=(\bar{\omega}-1)T$. Thus, differences across methods reflect how guidance is distributed over time rather than differences in the total guidance budget.

Tables~\ref{tab:sd15-main}--\ref{tab:sdxl-main} report the quantitative results, and Figures~\ref{fig:sd15-grid}--\ref{fig:sdxl-grid} provide representative samples. We summarize the main observations below.

\textbf{Diversity--fidelity trade-off.}
Across all three backbones, DG-CFG provides a favorable diversity--fidelity trade-off. In the low-to-moderate guidance regime, where outputs remain broadly comparable in quality, it generally preserves more diversity than constant CFG while maintaining competitive FID and ImageReward. Interval CFG can produce higher Vendi scores in some settings, but often with weaker fidelity or perceptual quality. At stronger guidance, Vendi alone becomes less informative because color distortion, structural failures, and texture artifacts can disperse feature embeddings and spuriously inflate the score without adding meaningful semantic variation. Under the quality-aware assessment, DG-CFG more consistently preserves meaningful variation while avoiding the degradation observed in the baselines.

\textbf{Image-quality improvement at high guidance.}
The advantage of DG-CFG is most pronounced in the high-guidance regime. As the nominal guidance strength increases, constant and heuristic schedules progressively lose perceptual quality, prompt alignment, or distributional fidelity, whereas DG-CFG remains comparatively stable across all three backbones. The contrast is strongest on SDXL: baseline performance deteriorates sharply under strong guidance, while DG-CFG maintains high ImageReward and CLIP scores together with competitive FID. 

\textbf{Saturation control.}
The Sat metric captures the color over-saturation commonly induced by strong guidance. Across all three backbones, Sat increases steadily with the guidance strength under constant and interval CFG, approaching or exceeding the real-image reference at high guidance. By contrast, DG-CFG exhibits a substantially weaker increase and remains at or below the reference level throughout the evaluated guidance range.

\textbf{Qualitative visualization.}
Figures~\ref{fig:sd15-grid}--\ref{fig:sdxl-grid} show representative samples from all three backbones using the same prompts and random seeds. As guidance becomes stronger, the baselines progressively develop posterized colors, washed-out textures, and less coherent object boundaries. DG-CFG preserves more natural color balance, texture detail, and object structure throughout the evaluated guidance range. These visual results are consistent with the quantitative findings, confirming that the stability of DG-CFG translates into more natural and coherent samples rather than merely improved aggregate metrics.

\begin{table}[ht]
\centering
\caption{Main comparison on Stable Diffusion~1.5 with DDIM sampling at 8 NFE. Metrics are computed over 1K MS-COCO validation prompts, except Div (Vendi score), which is averaged over 100 prompts with 16 samples per prompt. The real-image reference values are CLIP=30.57, IR=0.38, and Sat=0.33.}
\label{tab:sd15-main}
\small
\begin{tabular}{llcccccc}
\toprule
Metric & Method & $\bar{\omega}=3$ & $\bar{\omega}=7$ & $\bar{\omega}=11$ & $\bar{\omega}=15$ & $\bar{\omega}=19$ & $\bar{\omega}=23$ \\
\midrule
\multirow{4}{*}{CLIP$\uparrow$}
& Constant CFG      & 30.15 & \textbf{31.29} & 31.38 & 31.29 & 31.01 & 30.64 \\
& Interval CFG      & \textbf{30.39} & 30.95 & 30.75 & 30.32 & 29.78 & 29.10 \\
& $\beta$--CFG      & 30.34 & 31.21 & \textbf{31.42} & 31.28 & 31.18 & 30.88 \\
& \textbf{DG-CFG (ours)} & 29.95 & 31.08 & 31.27 & \textbf{31.32} & \textbf{31.29} & \textbf{31.39} \\
\midrule
\multirow{4}{*}{IR$\uparrow$}
& Constant CFG      & $-$0.503 & $-$0.104 & $-$0.032 & $-$0.061 & $-$0.177 & $-$0.330 \\
& Interval CFG      & $-$0.463 & $-$0.255 & $-$0.308 & $-$0.518 & $-$0.758 & $-$0.986 \\
& $\beta$--CFG      & \textbf{$-$0.437} & \textbf{$-$0.103} & \textbf{$-$0.023} & $-$0.022 & $-$0.104 & $-$0.239 \\
& \textbf{DG-CFG (ours)} & $-$0.522 & $-$0.115 & $-$0.026 & \textbf{+0.013} & \textbf{+0.024} & \textbf{+0.029} \\
\midrule
\multirow{4}{*}{Sat}
& Constant CFG      & 0.212 & 0.258 & 0.317 & 0.379 & 0.430 & 0.469 \\
& Interval CFG      & 0.220 & 0.275 & 0.338 & 0.392 & 0.431 & 0.462 \\
& $\beta$--CFG      & 0.214 & 0.256 & 0.308 & 0.359 & 0.401 & 0.438 \\
& \textbf{DG-CFG (ours)} & 0.206 & 0.229 & 0.256 & 0.284 & 0.310 & 0.332 \\
\midrule
\multirow{4}{*}{FID$\downarrow$}
& Constant CFG      & \textbf{74.44} & \textbf{67.67} & \textbf{68.73} & 71.57 & 74.94 & 82.34 \\
& Interval CFG      & 72.46 & 70.83 & 76.84 & 86.60 & 99.07 & 110.89 \\
& $\beta$--CFG      & 73.14 & 68.14 & 70.73 & 73.07 & 77.92 & 85.38 \\
& \textbf{DG-CFG (ours)} & 77.26 & 67.87 & 68.79 & \textbf{70.62} & \textbf{72.54} & \textbf{74.53} \\
\midrule
\multirow{4}{*}{Div$\uparrow$}
& Constant CFG      & 6.54 & 5.40 & 5.18 & 5.27 & 5.43 & 5.67 \\
& Interval CFG      & 6.47 & 5.95 & 6.04 & 6.28 & 6.61 & 6.80 \\
& $\beta$--CFG      & 6.47 & 5.44 & 5.17 & 5.26 & 5.37 & 5.57 \\
& \textbf{DG-CFG (ours)} & 6.81 & 5.71 & 5.34 & 5.22 & 5.17 & 5.24 \\
\bottomrule
\end{tabular}
\end{table}

\begin{table}[ht]
\centering
\caption{Main comparison on Stable Diffusion~2.1 with DDIM sampling at 8 NFE. Metrics are computed over 1K MS-COCO validation prompts, except Div (Vendi score), which is averaged over 100 prompts with 16 samples per prompt. The real-image reference values are CLIP=30.57, IR=0.38, and Sat=0.33.}
\label{tab:sd21-main}
\small
\begin{tabular}{llcccccc}
\toprule
Metric & Method & $\bar{\omega}=3$ & $\bar{\omega}=7$ & $\bar{\omega}=11$ & $\bar{\omega}=15$ & $\bar{\omega}=19$ & $\bar{\omega}=23$ \\
\midrule
\multirow{4}{*}{CLIP$\uparrow$}
& Constant CFG      & 30.54 & \textbf{31.48} & 31.48 & 31.48 & 31.25 & 30.98 \\
& Interval CFG      & \textbf{30.71} & 31.02 & 30.93 & 30.54 & 30.12 & 29.80 \\
& $\beta$--CFG      & 30.62 & 31.36 & \textbf{31.51} & \textbf{31.55} & 31.40 & 31.14 \\
& \textbf{DG-CFG (ours)} & 30.36 & 31.23 & 31.36 & 31.38 & \textbf{31.49} & \textbf{31.43} \\
\midrule
\multirow{4}{*}{IR$\uparrow$}
& Constant CFG      & $-$0.264 & \textbf{+0.130} & +0.203 & +0.176 & +0.078 & $-$0.092 \\
& Interval CFG      & $-$0.235 & $-$0.055 & $-$0.101 & $-$0.275 & $-$0.456 & $-$0.630 \\
& $\beta$--CFG      & \textbf{$-$0.212} & +0.119 & +0.203 & +0.214 & +0.142 & +0.060 \\
& \textbf{DG-CFG (ours)} & $-$0.322 & +0.091 & \textbf{+0.209} & \textbf{+0.238} & \textbf{+0.245} & \textbf{+0.244} \\
\midrule
\multirow{4}{*}{Sat}
& Constant CFG      & 0.207 & 0.248 & 0.299 & 0.351 & 0.396 & 0.431 \\
& Interval CFG      & 0.214 & 0.262 & 0.315 & 0.361 & 0.396 & 0.422 \\
& $\beta$--CFG      & 0.209 & 0.248 & 0.291 & 0.334 & 0.372 & 0.398 \\
& \textbf{DG-CFG (ours)} & 0.203 & 0.222 & 0.247 & 0.272 & 0.295 & 0.315 \\
\midrule
\multirow{4}{*}{FID$\downarrow$}
& Constant CFG      & 73.40 & \textbf{67.45} & 70.10 & 72.63 & 74.79 & 79.94 \\
& Interval CFG      & \textbf{71.29} & 72.60 & 77.43 & 86.10 & 95.66 & 105.19 \\
& $\beta$--CFG      & 71.79 & 68.64 & 71.36 & 73.27 & 76.73 & 83.11 \\
& \textbf{DG-CFG (ours)} & 75.54 & 68.70 & \textbf{70.08} & \textbf{71.98} & \textbf{74.05} & \textbf{76.43} \\
\midrule
\multirow{4}{*}{Div$\uparrow$}
& Constant CFG      & 5.99 & 5.01 & 4.81 & 4.87 & 5.08 & 5.33 \\
& Interval CFG      & 6.04 & 5.49 & 5.66 & 5.86 & 6.18 & 6.38 \\
& $\beta$--CFG      & 5.98 & 4.99 & 4.85 & 4.89 & 5.01 & 5.24 \\
& \textbf{DG-CFG (ours)} & 6.26 & 5.20 & 4.97 & 4.88 & 4.85 & 4.83 \\
\bottomrule
\end{tabular}
\end{table}

\begin{table}[ht]
\centering
\caption{Main comparison on Stable Diffusion~XL with DDIM sampling at 8 NFE. Metrics are computed over 1K MS-COCO validation prompts, except Div (Vendi score), which is averaged over 100 prompts with 16 samples per prompt. The real-image reference values are CLIP=30.57, IR=0.38, and Sat=0.33.}
\label{tab:sdxl-main}
\small
\begin{tabular}{llcccccc}
\toprule
Metric & Method & $\bar{\omega}=3$ & $\bar{\omega}=7$ & $\bar{\omega}=11$ & $\bar{\omega}=15$ & $\bar{\omega}=19$ & $\bar{\omega}=23$ \\
\midrule
\multirow{4}{*}{CLIP$\uparrow$}
& Constant CFG      & 30.68 & 31.82 & 31.75 & 31.34 & 30.51 & 29.46 \\
& Interval CFG      & 30.73 & 31.32 & 30.82 & 29.97 & 29.04 & 28.17 \\
& $\beta$--CFG      & \textbf{30.86} & \textbf{31.96} & \textbf{32.06} & 31.90 & 31.44 & 30.76 \\
& \textbf{DG-CFG (ours)} & 30.23 & 31.65 & 32.03 & \textbf{32.10} & \textbf{32.10} & \textbf{32.00} \\
\midrule
\multirow{4}{*}{IR$\uparrow$}
& Constant CFG      & $-$0.349 & +0.205 & +0.159 & $-$0.045 & $-$0.383 & $-$0.717 \\
& Interval CFG      & $-$0.344 & $-$0.115 & $-$0.304 & $-$0.604 & $-$0.899 & $-$1.144 \\
& $\beta$--CFG      & \textbf{$-$0.289} & \textbf{+0.279} & \textbf{+0.350} & +0.247 & +0.053 & $-$0.222 \\
& \textbf{DG-CFG (ours)} & $-$0.530 & +0.102 & +0.321 & \textbf{+0.392} & \textbf{+0.417} & \textbf{+0.401} \\
\midrule
\multirow{4}{*}{Sat}
& Constant CFG      & 0.147 & 0.180 & 0.221 & 0.267 & 0.313 & 0.351 \\
& Interval CFG      & 0.153 & 0.183 & 0.220 & 0.259 & 0.293 & 0.326 \\
& $\beta$--CFG      & 0.150 & 0.182 & 0.219 & 0.259 & 0.292 & 0.318 \\
& \textbf{DG-CFG (ours)} & 0.146 & 0.161 & 0.179 & 0.200 & 0.221 & 0.241 \\
\midrule
\multirow{4}{*}{FID$\downarrow$}
& Constant CFG      & 91.33 & 72.86 & 73.18 & 79.01 & 91.75 & 105.39 \\
& Interval CFG      & \textbf{88.45} & 79.47 & 85.78 & 100.31 & 113.81 & 126.27 \\
& $\beta$--CFG      & 88.83 & \textbf{72.29} & \textbf{70.65} & 74.59 & 82.42 & 92.56 \\
& \textbf{DG-CFG (ours)} & 99.47 & 76.79 & 73.46 & \textbf{73.49} & \textbf{73.57} & \textbf{75.02} \\
\midrule
\multirow{4}{*}{Div$\uparrow$}
& Constant CFG      & 5.53 & 4.84 & 5.04 & 5.58 & 6.14 & 6.70 \\
& Interval CFG      & 5.77 & 5.62 & 6.08 & 6.56 & 6.87 & 7.06 \\
& $\beta$--CFG      & 5.49 & 4.75 & 4.72 & 5.01 & 5.51 & 5.94 \\
& \textbf{DG-CFG (ours)} & 5.79 & 5.00 & 4.74 & 4.63 & 4.62 & 4.70 \\
\bottomrule
\end{tabular}
\end{table}

\begin{figure}[t]
\centering
\small
\begin{tikzpicture}[
    img/.style={inner sep=0pt, outer sep=0pt, anchor=center},
    rowgrid/.style={
        matrix of nodes,
        nodes=img,
        inner sep=0pt, outer sep=0pt,
        column sep=1pt
    }
]

\matrix (c1top) [rowgrid]
{
\node[name=p1-1]{\includegraphics[width=0.132\textwidth]{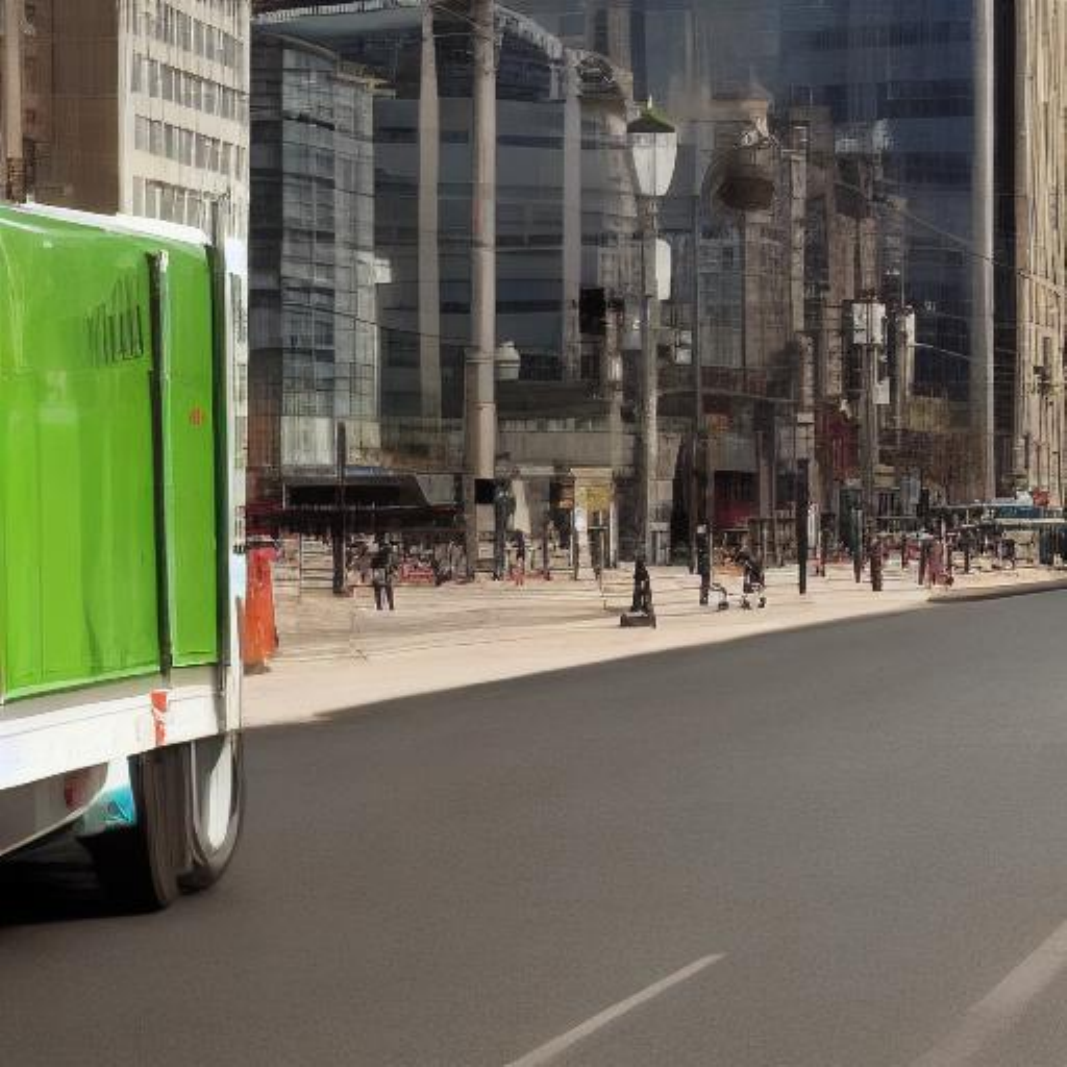}}; &
\node[name=p1-2]{\includegraphics[width=0.132\textwidth]{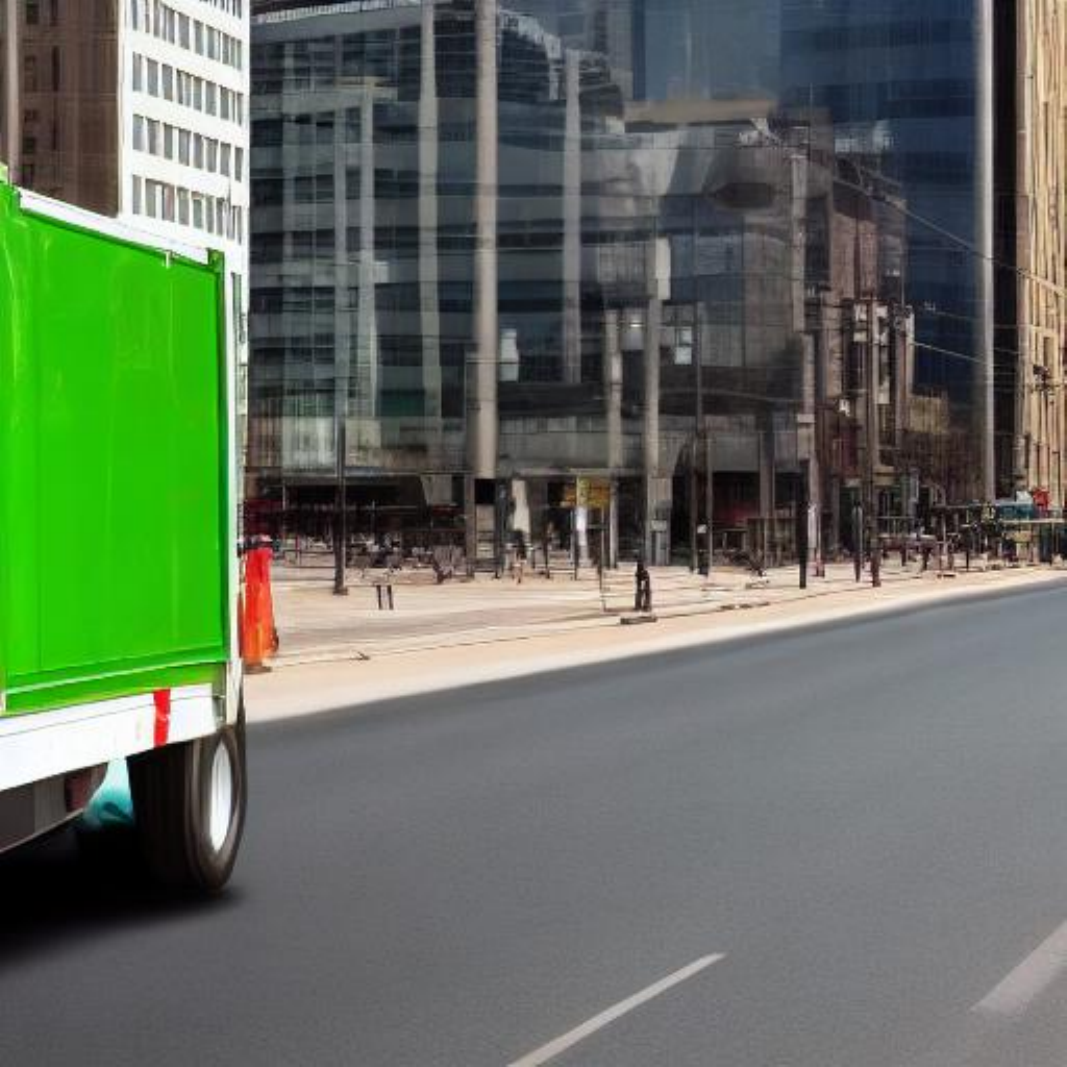}}; &
\node[name=p1-3]{\includegraphics[width=0.132\textwidth]{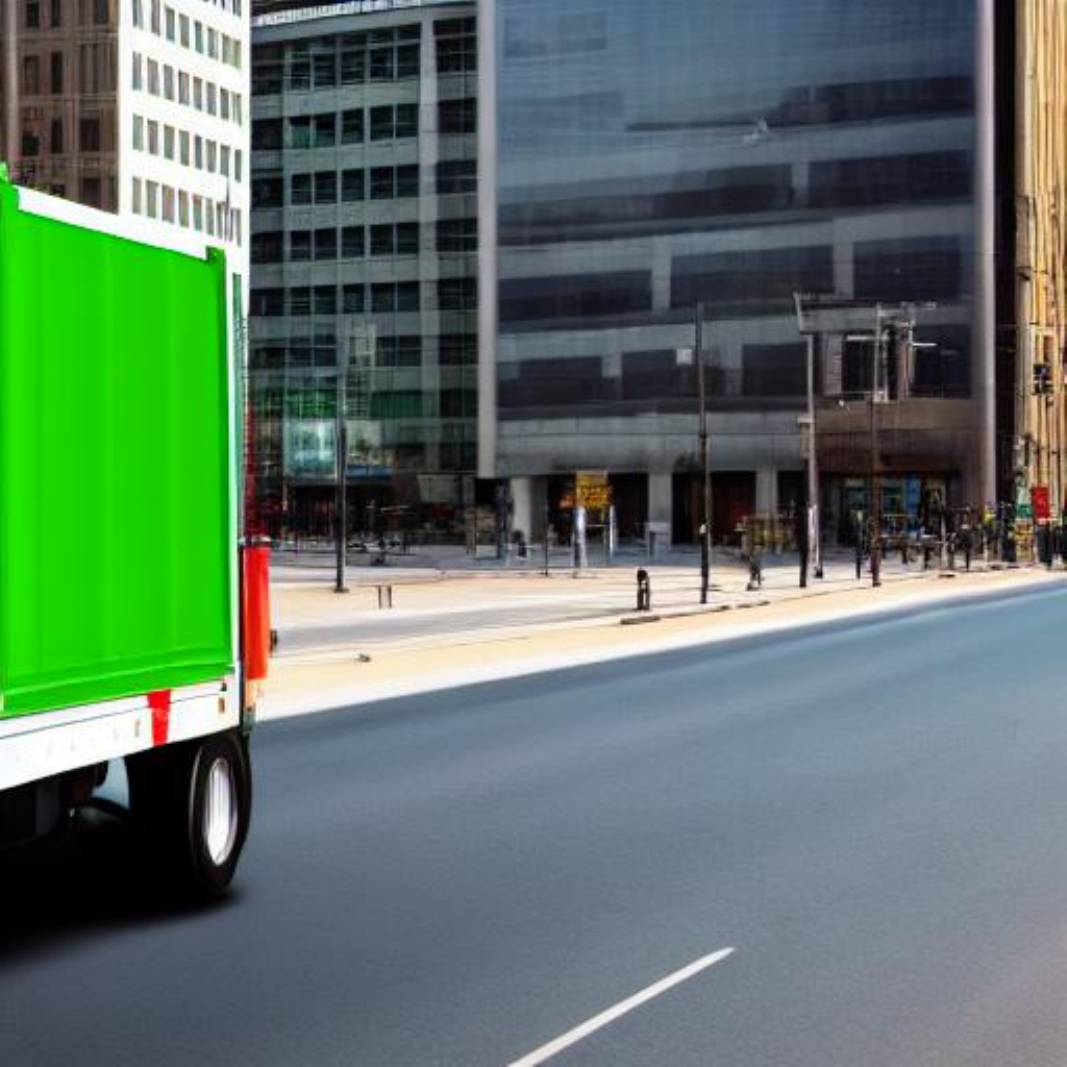}}; &
\node[name=p1-4]{\includegraphics[width=0.132\textwidth]{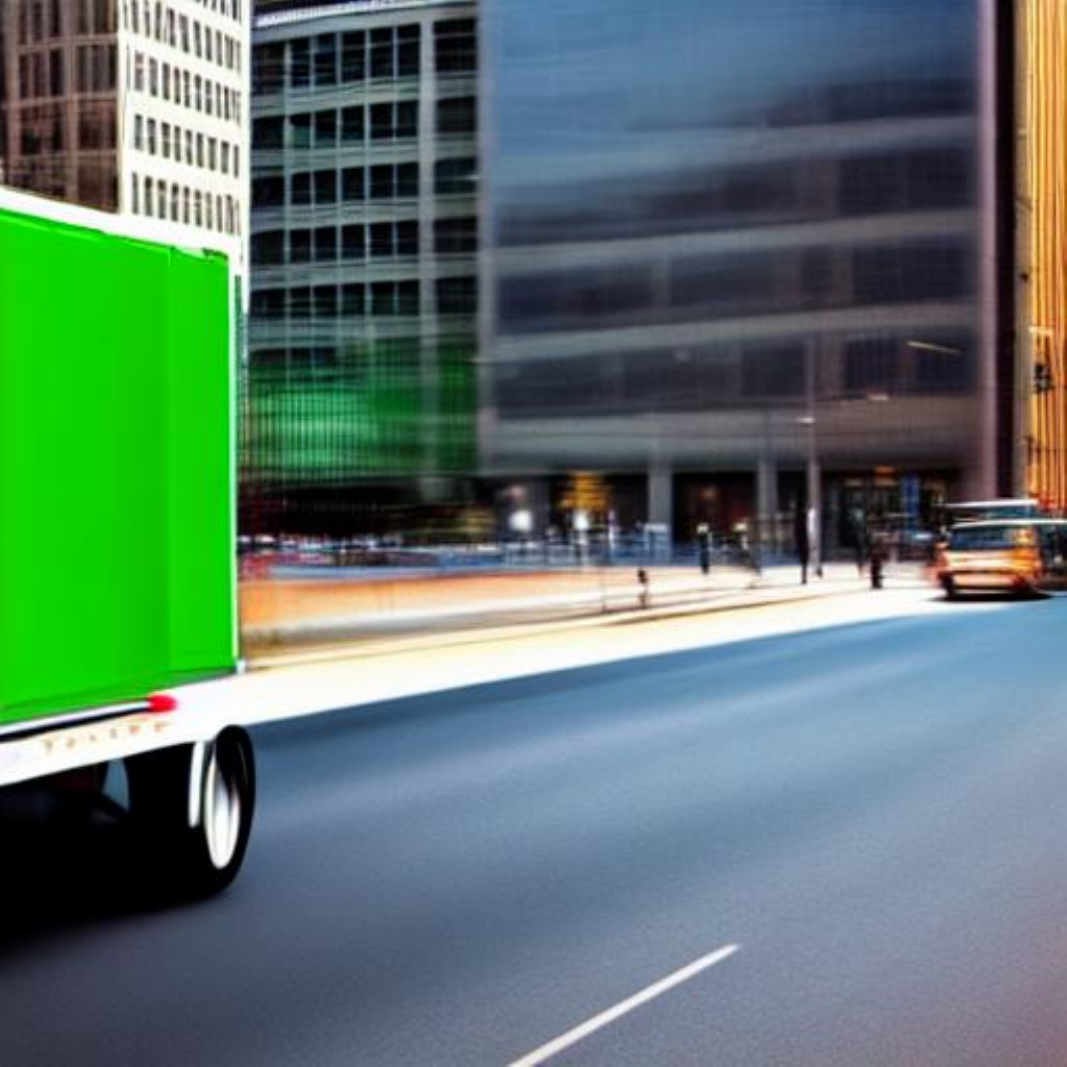}}; &
\node[name=p1-5]{\includegraphics[width=0.132\textwidth]{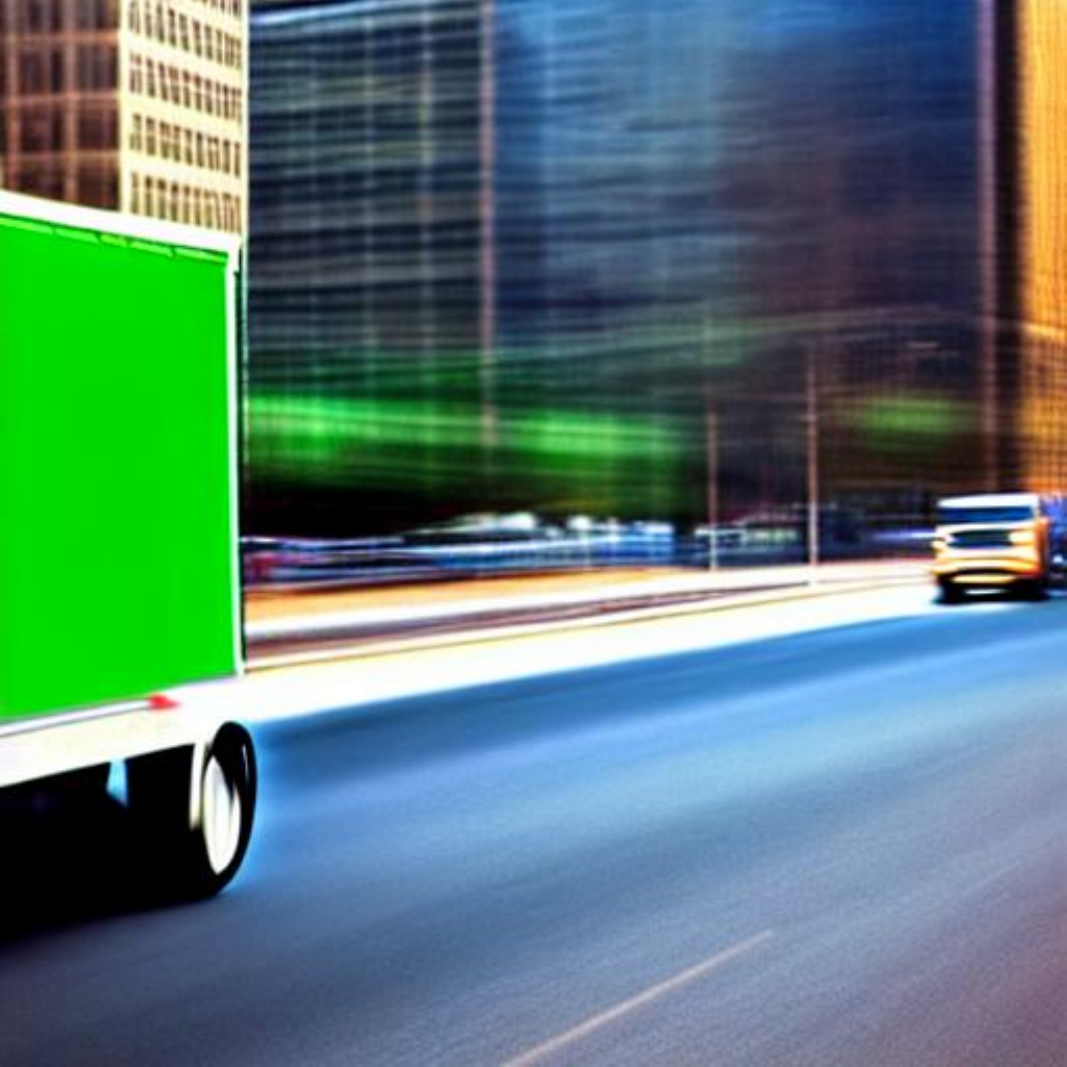}}; &
\node[name=p1-6]{\includegraphics[width=0.132\textwidth]{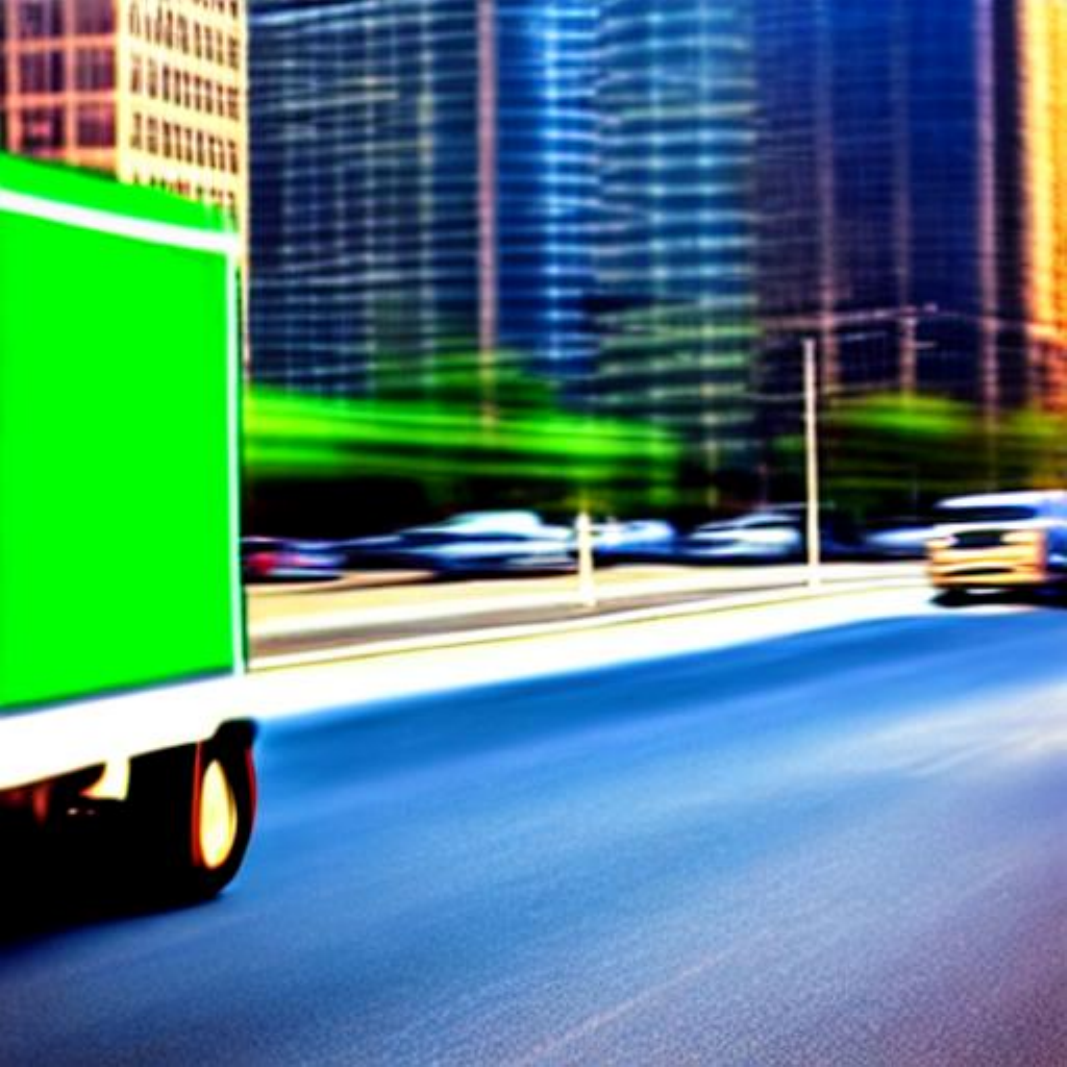}}; \\
};
\matrix (c1bot) [rowgrid, below=1pt of c1top]
{
\includegraphics[width=0.132\textwidth]{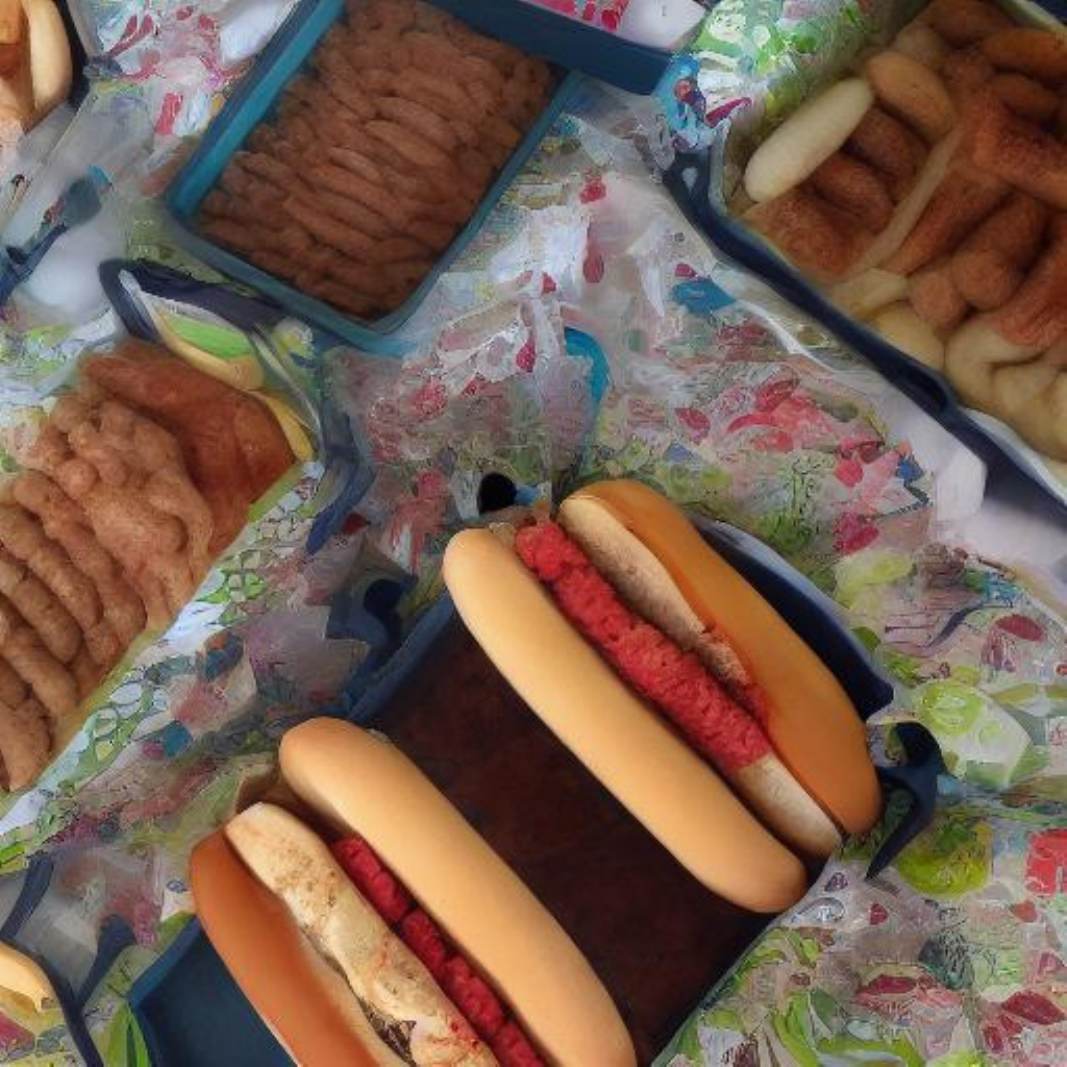} &
\includegraphics[width=0.132\textwidth]{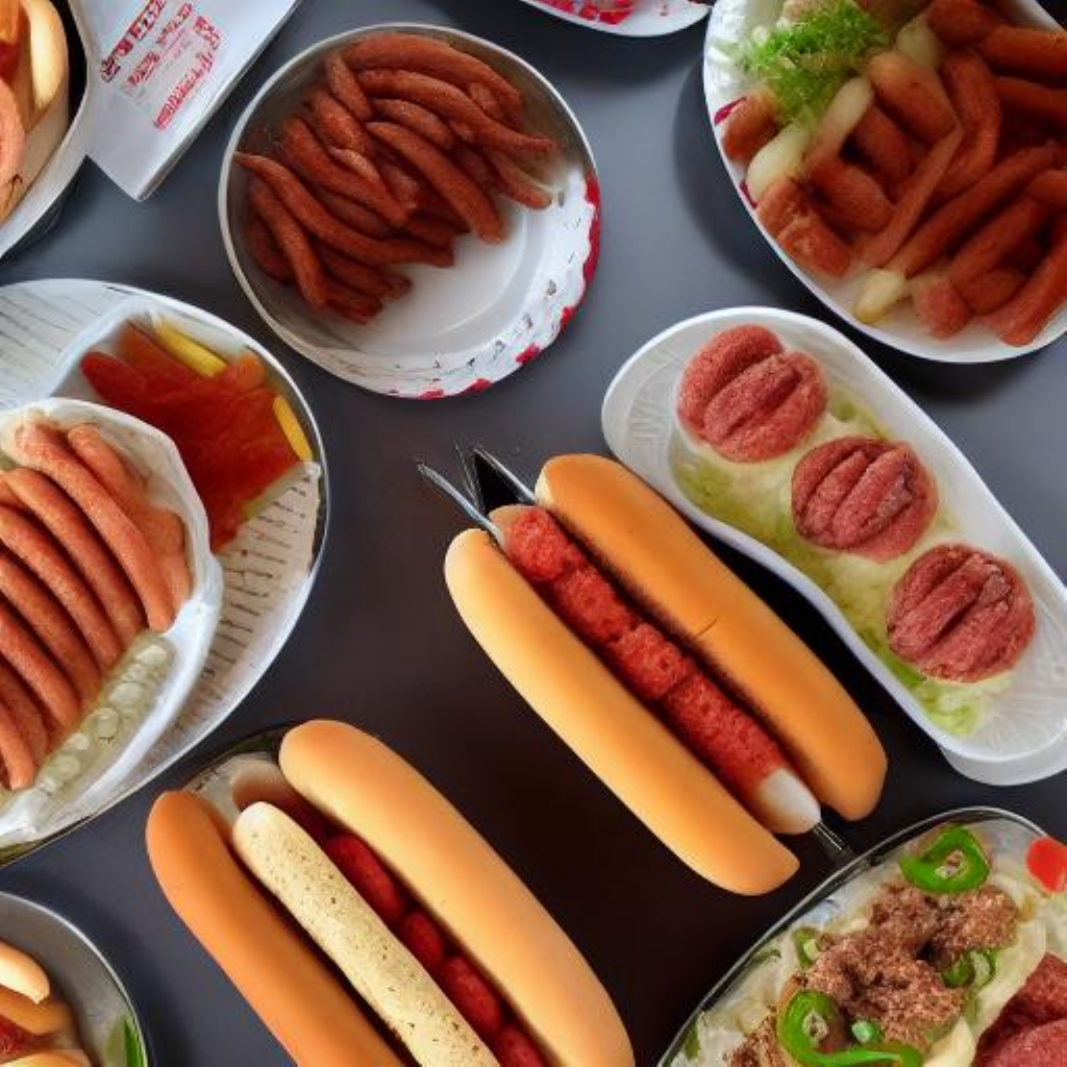} &
\includegraphics[width=0.132\textwidth]{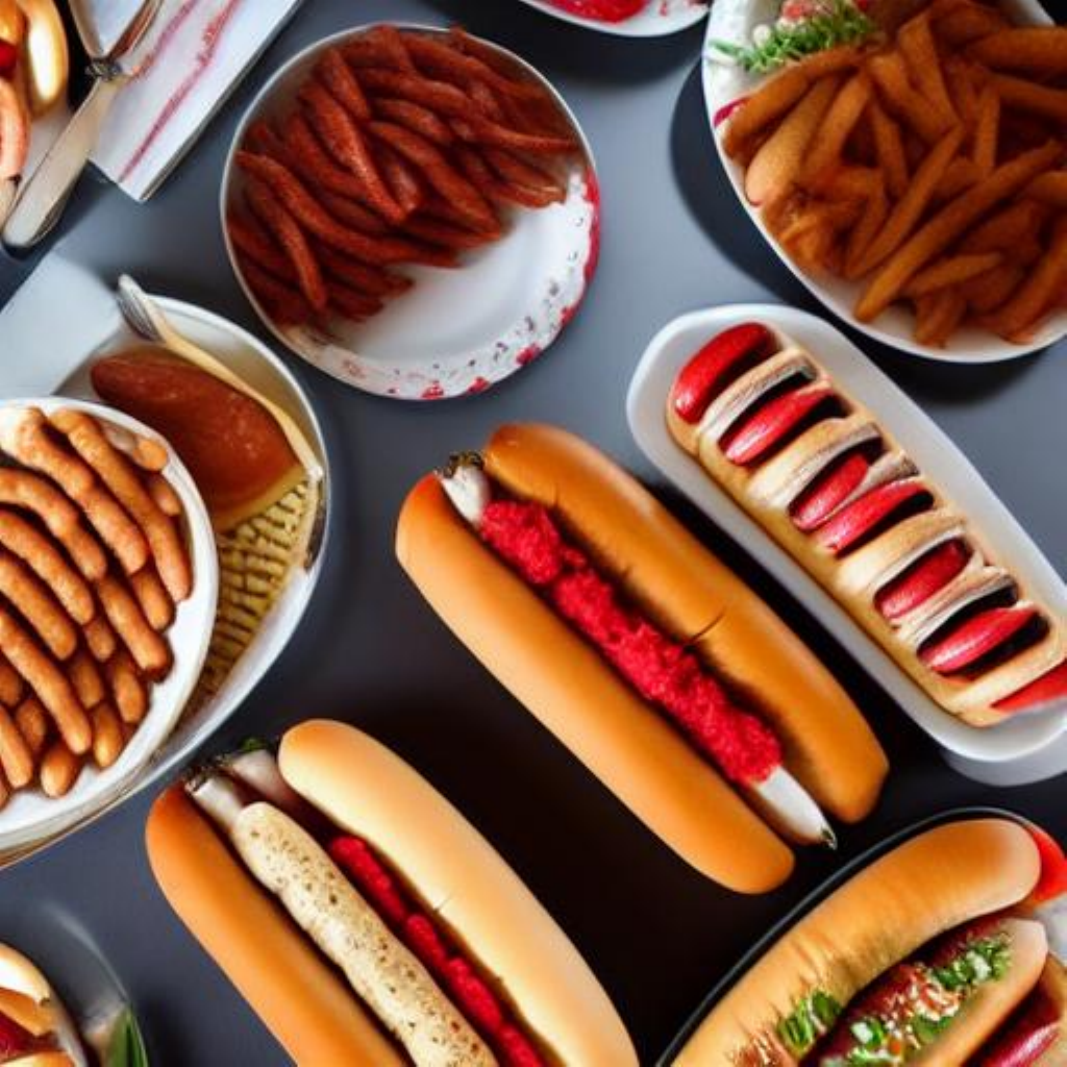} &
\includegraphics[width=0.132\textwidth]{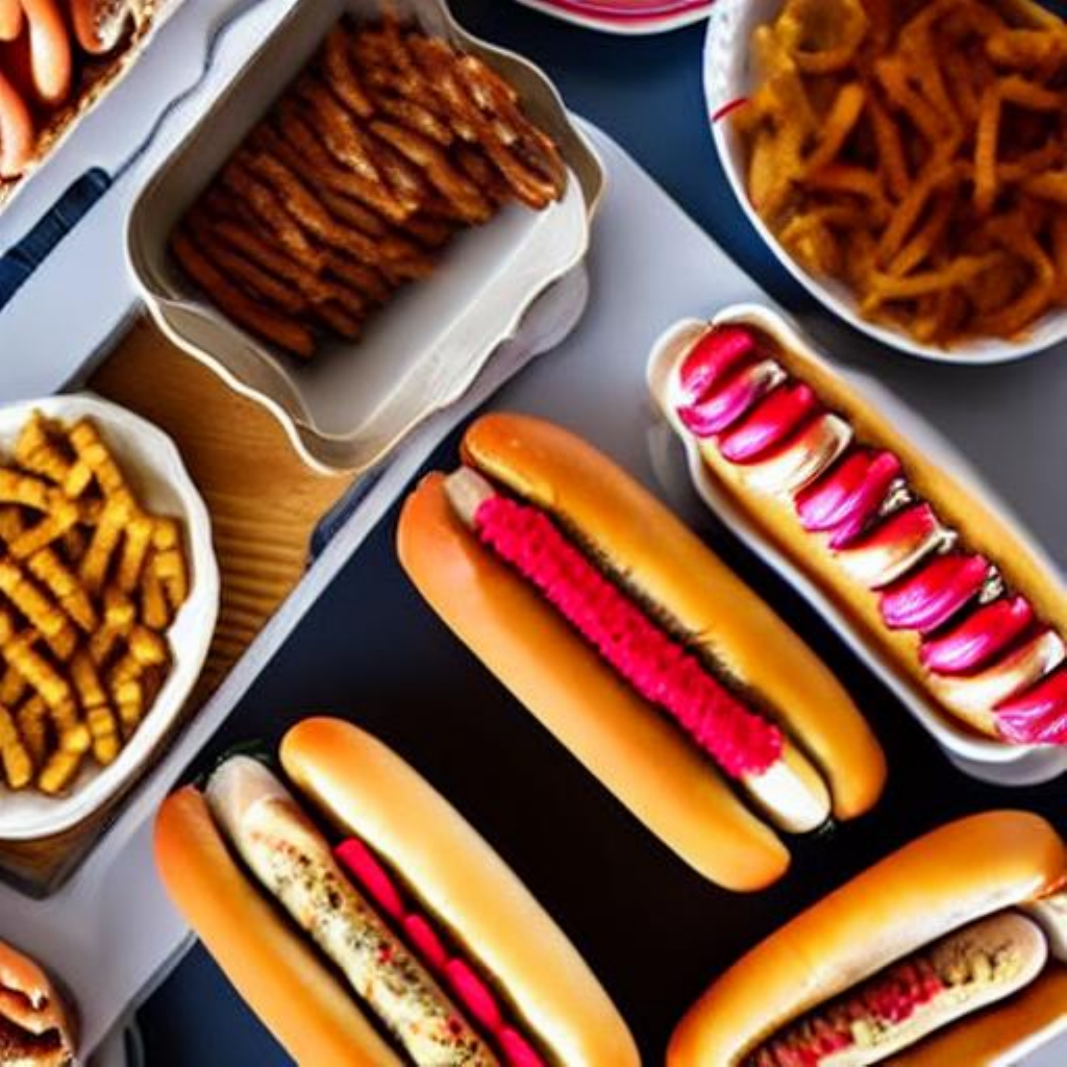} &
\includegraphics[width=0.132\textwidth]{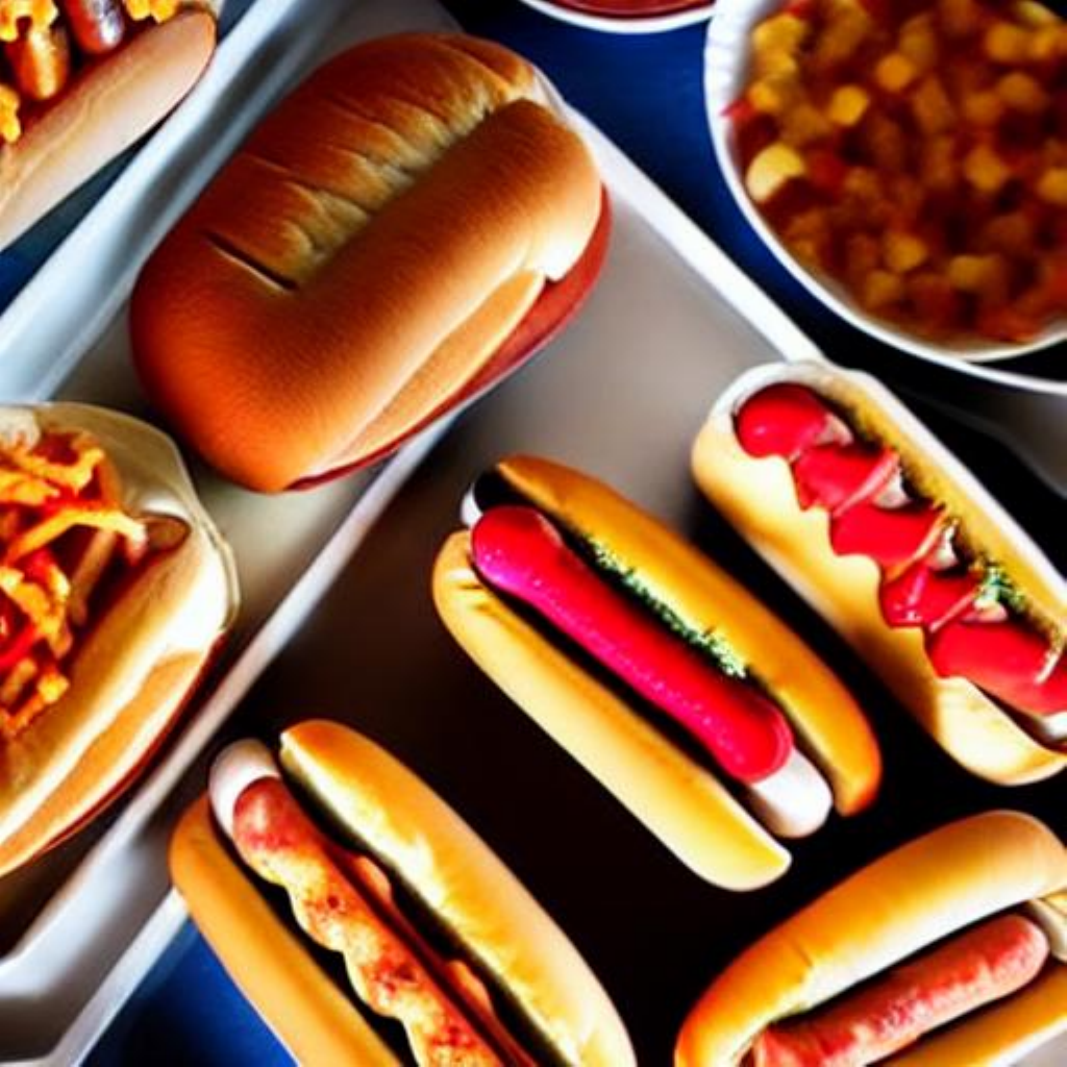} &
\includegraphics[width=0.132\textwidth]{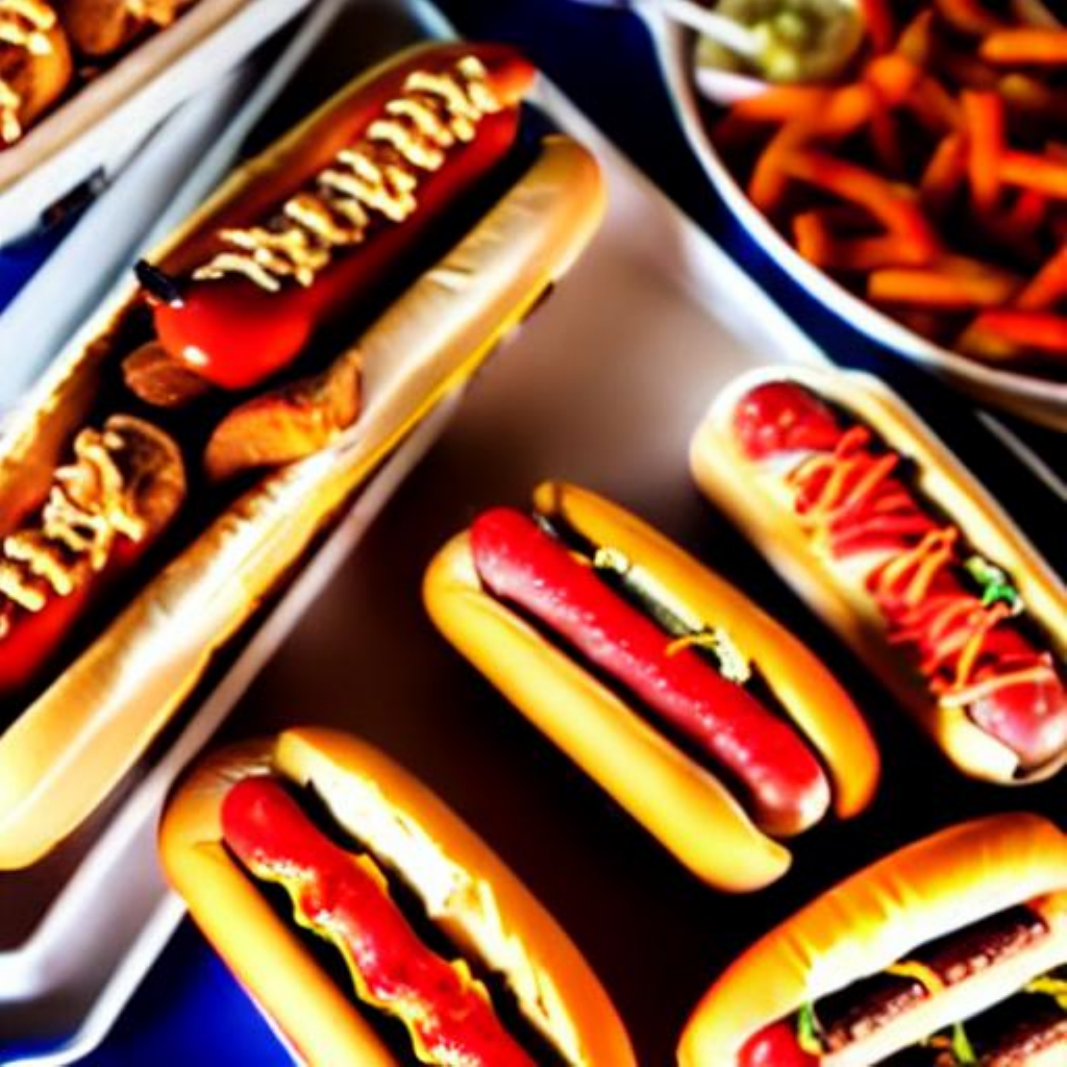} \\
};
\node[rotate=90, anchor=center, yshift=8pt] at ($(c1top.west)!0.5!(c1bot.west)$) {Constant CFG};

\matrix (c2top) [rowgrid, below=3pt of c1bot]
{
\includegraphics[width=0.132\textwidth]{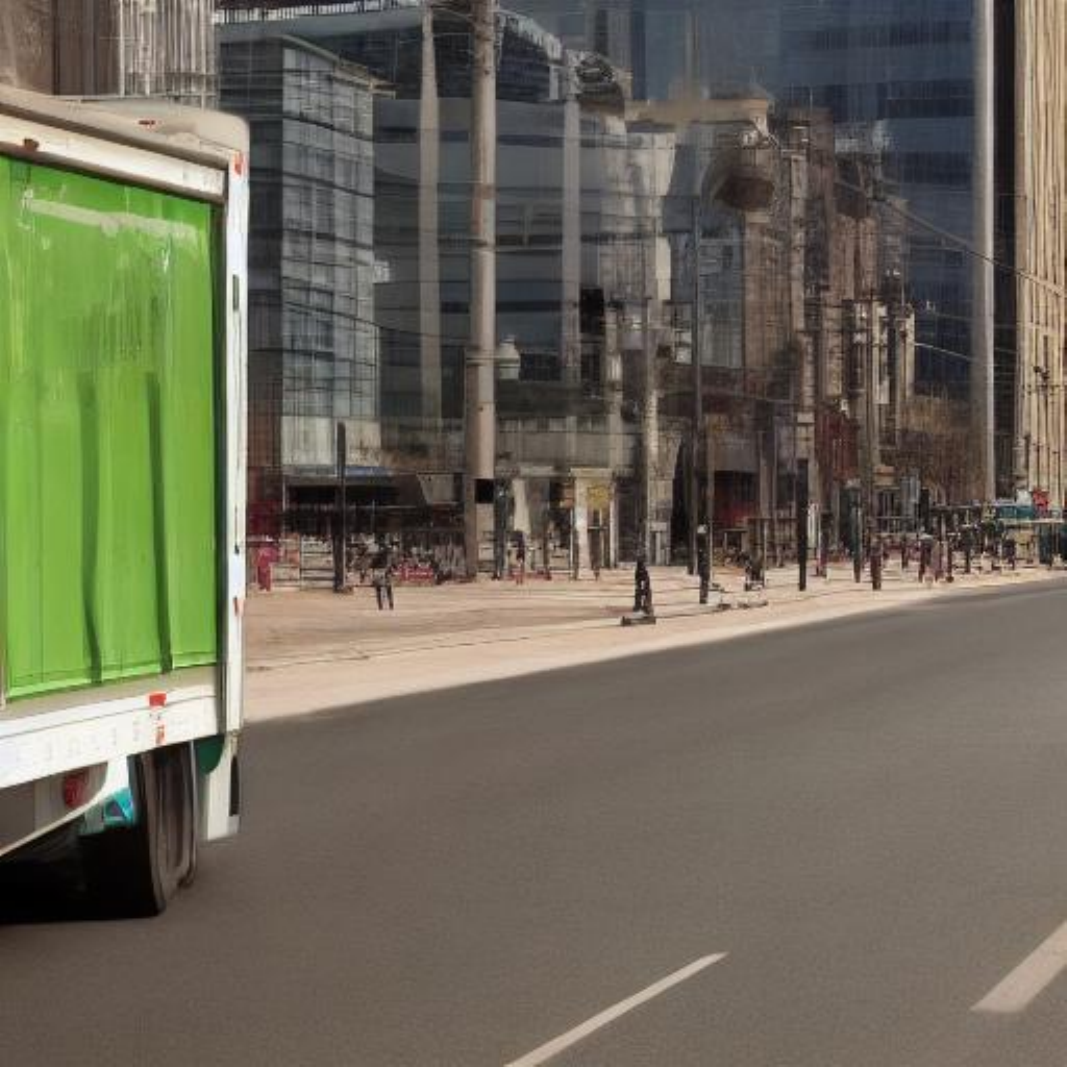} &
\includegraphics[width=0.132\textwidth]{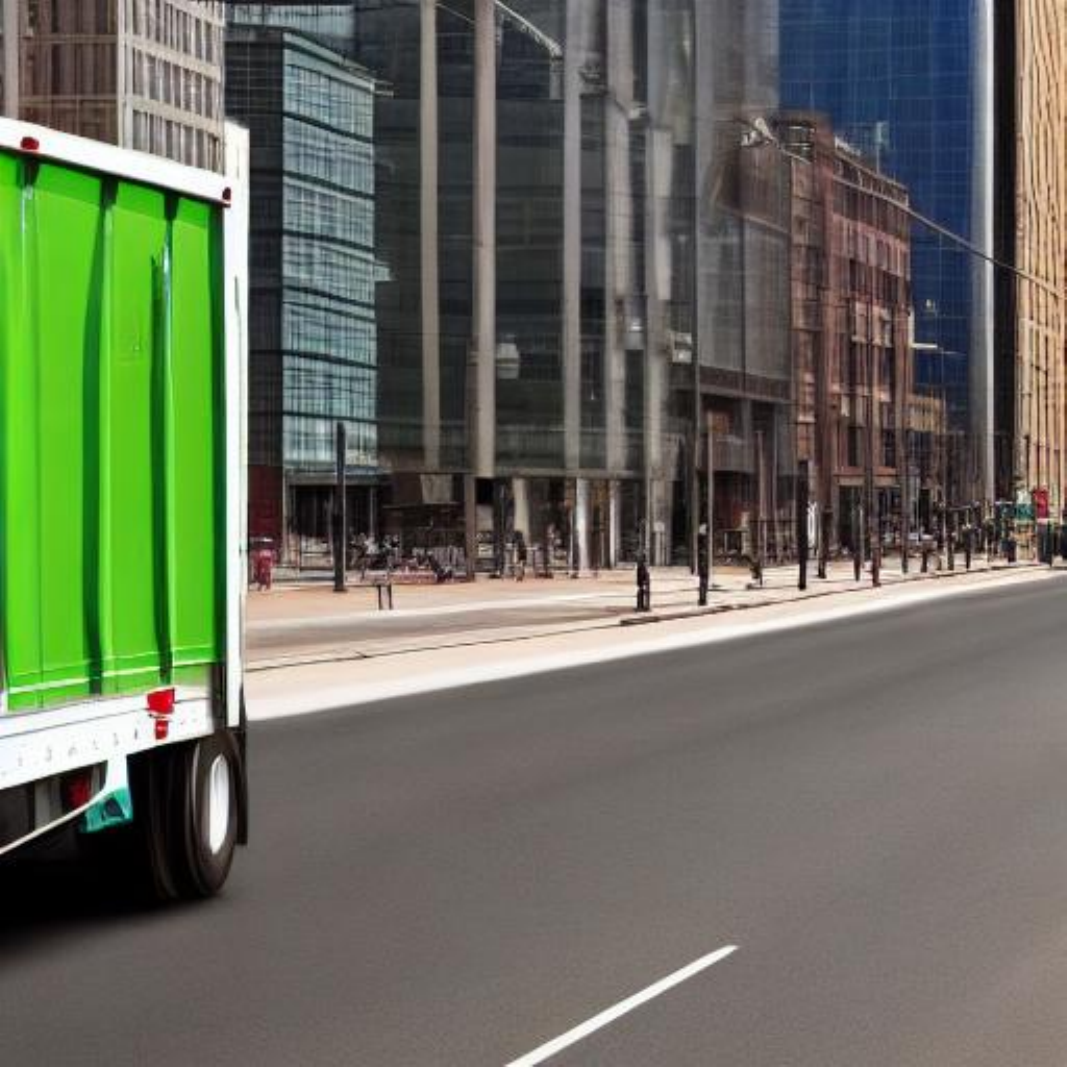} &
\includegraphics[width=0.132\textwidth]{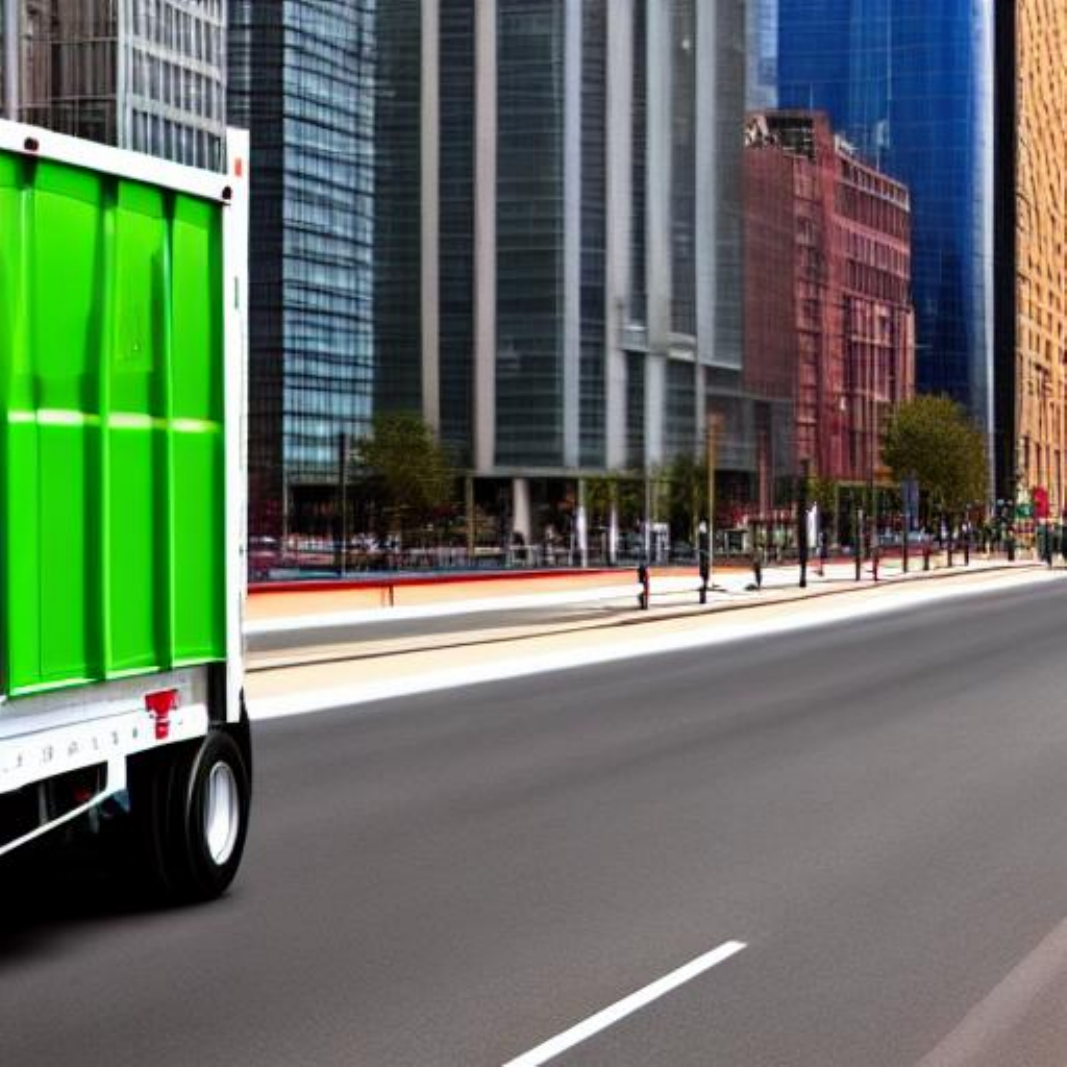} &
\includegraphics[width=0.132\textwidth]{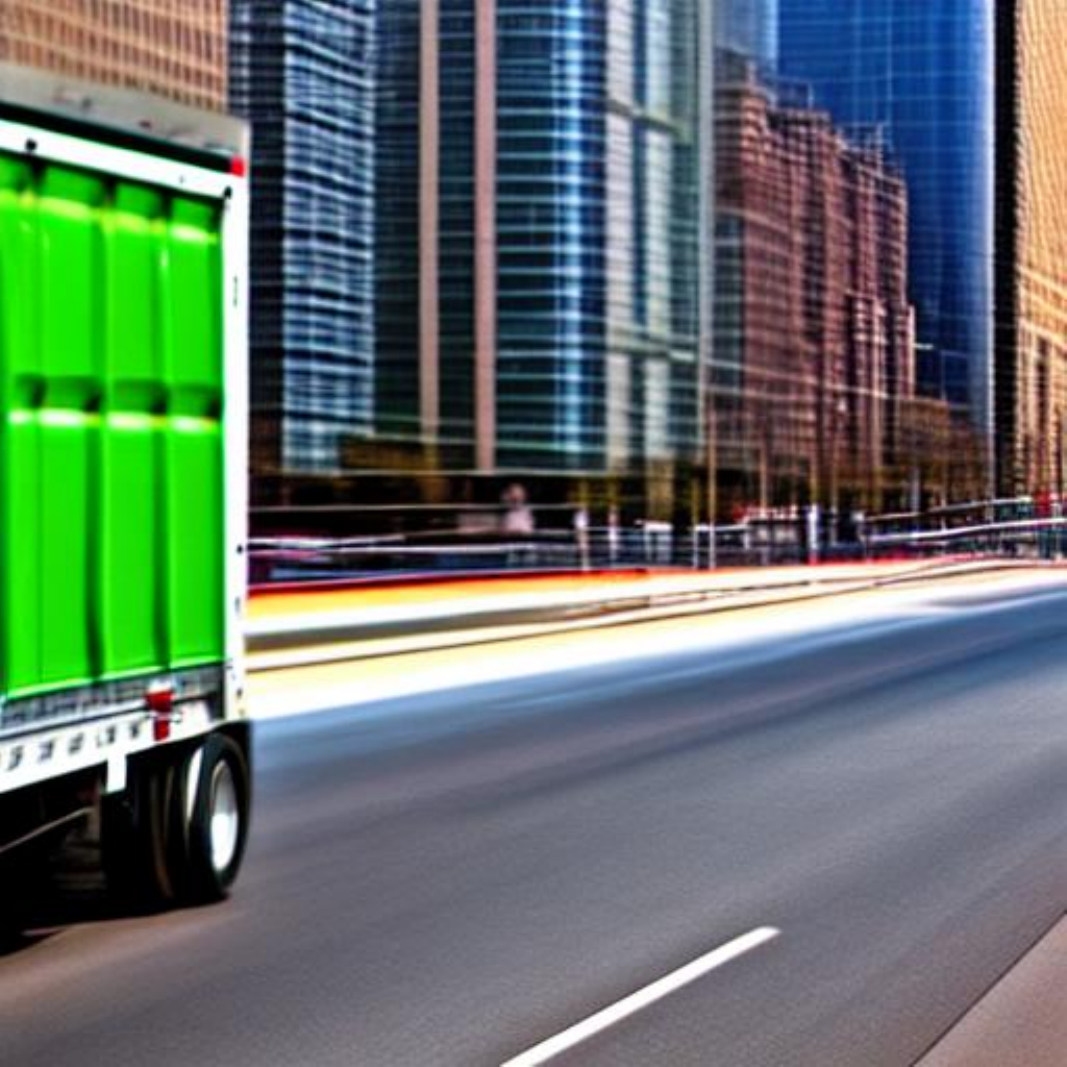} &
\includegraphics[width=0.132\textwidth]{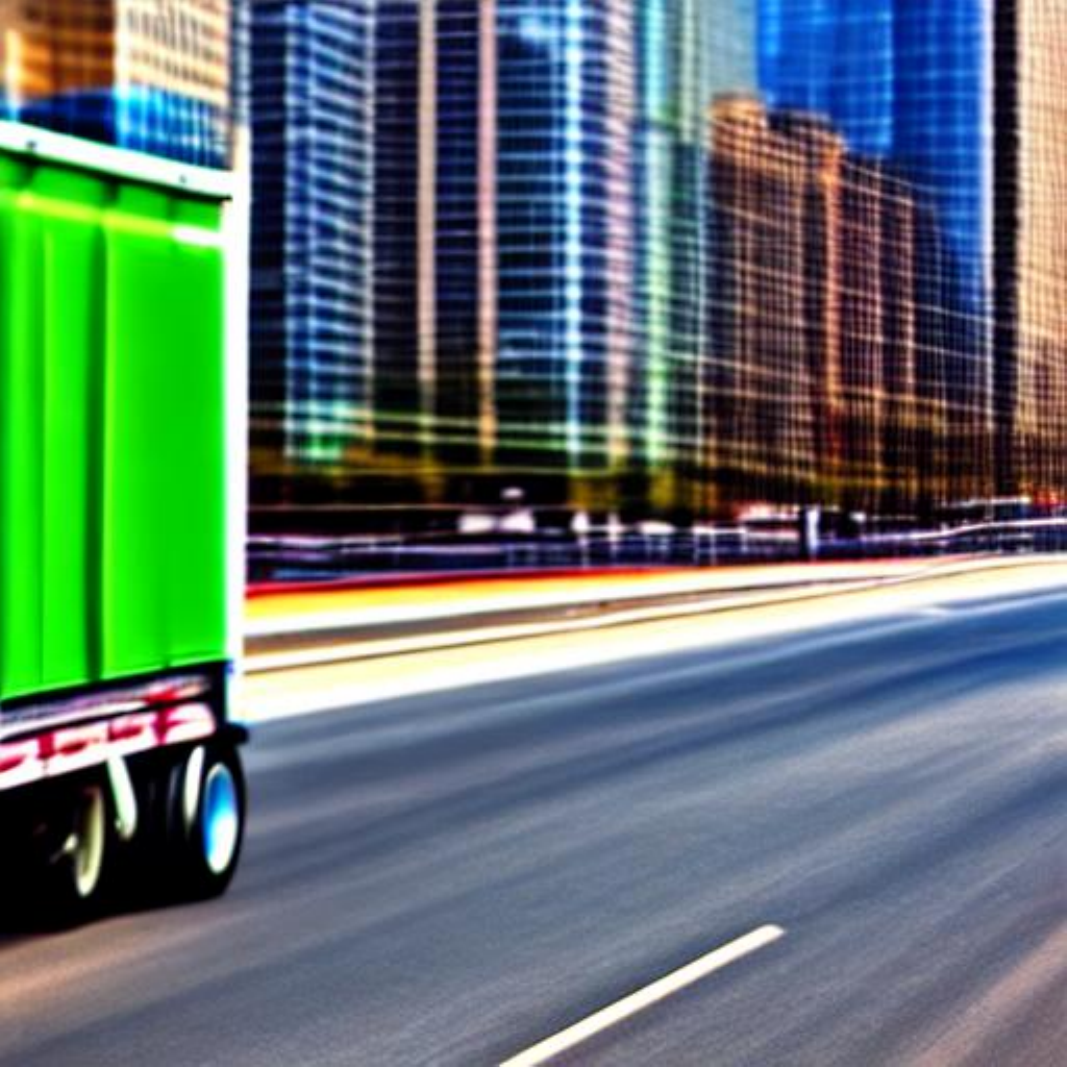} &
\includegraphics[width=0.132\textwidth]{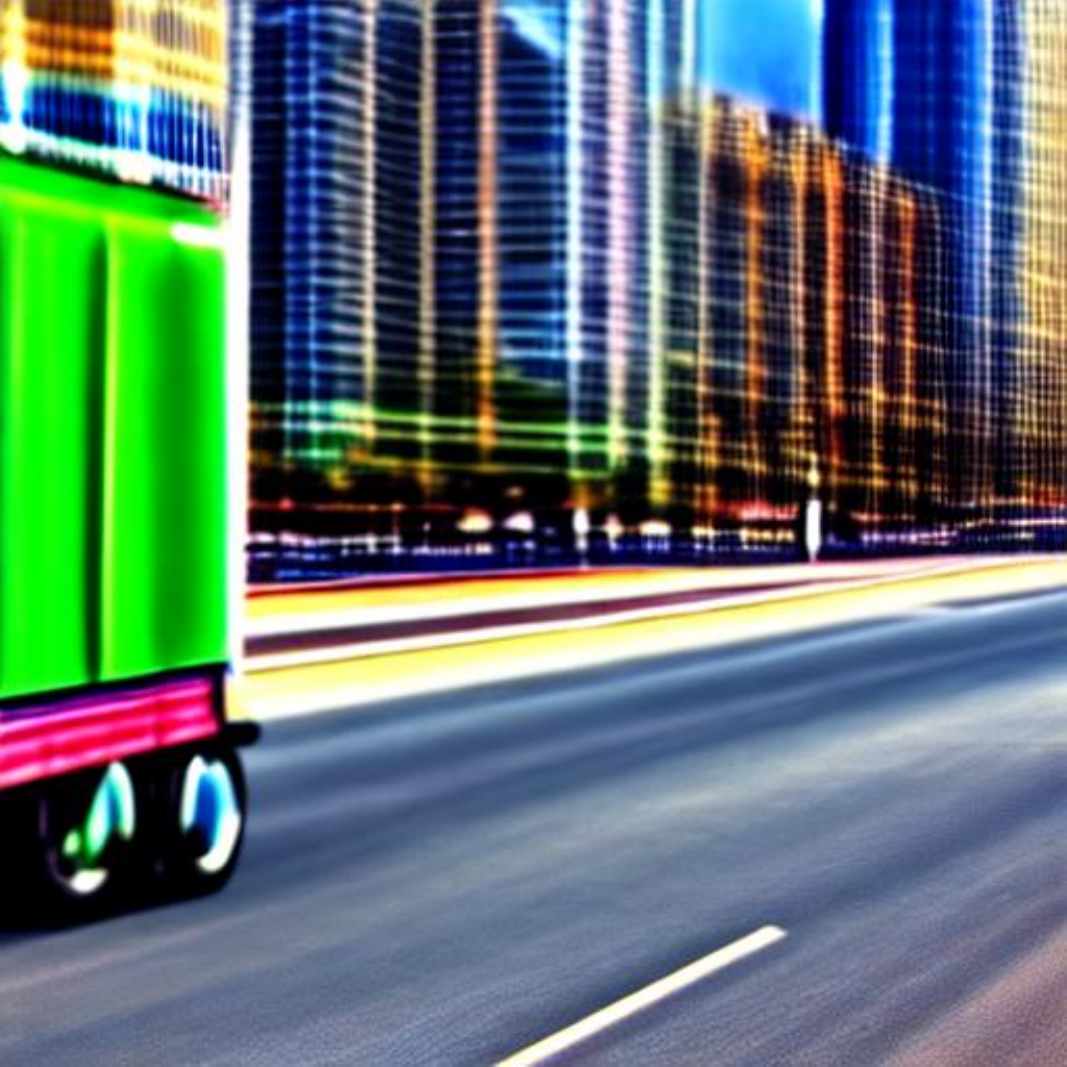} \\
};
\matrix (c2bot) [rowgrid, below=1pt of c2top]
{
\includegraphics[width=0.132\textwidth]{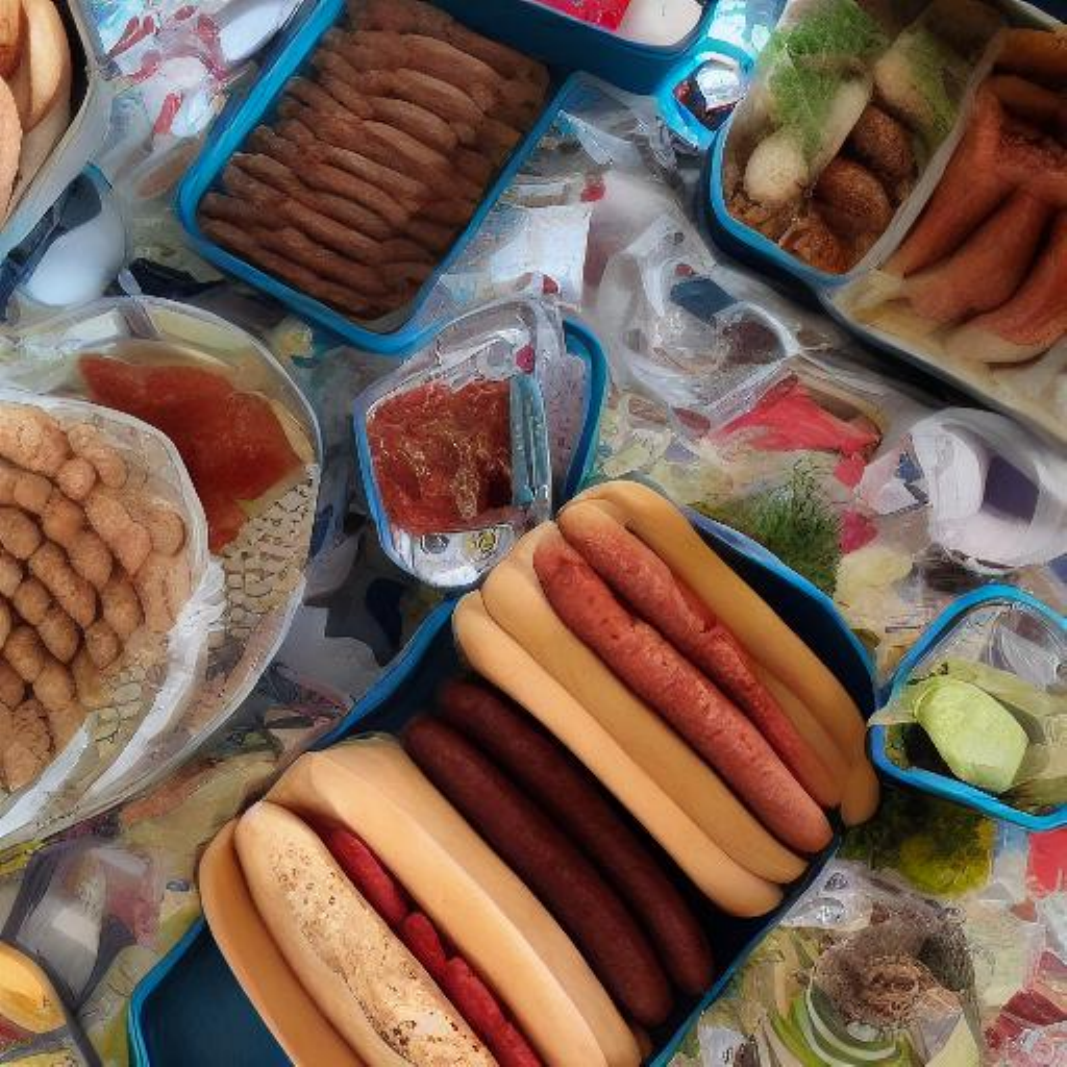} &
\includegraphics[width=0.132\textwidth]{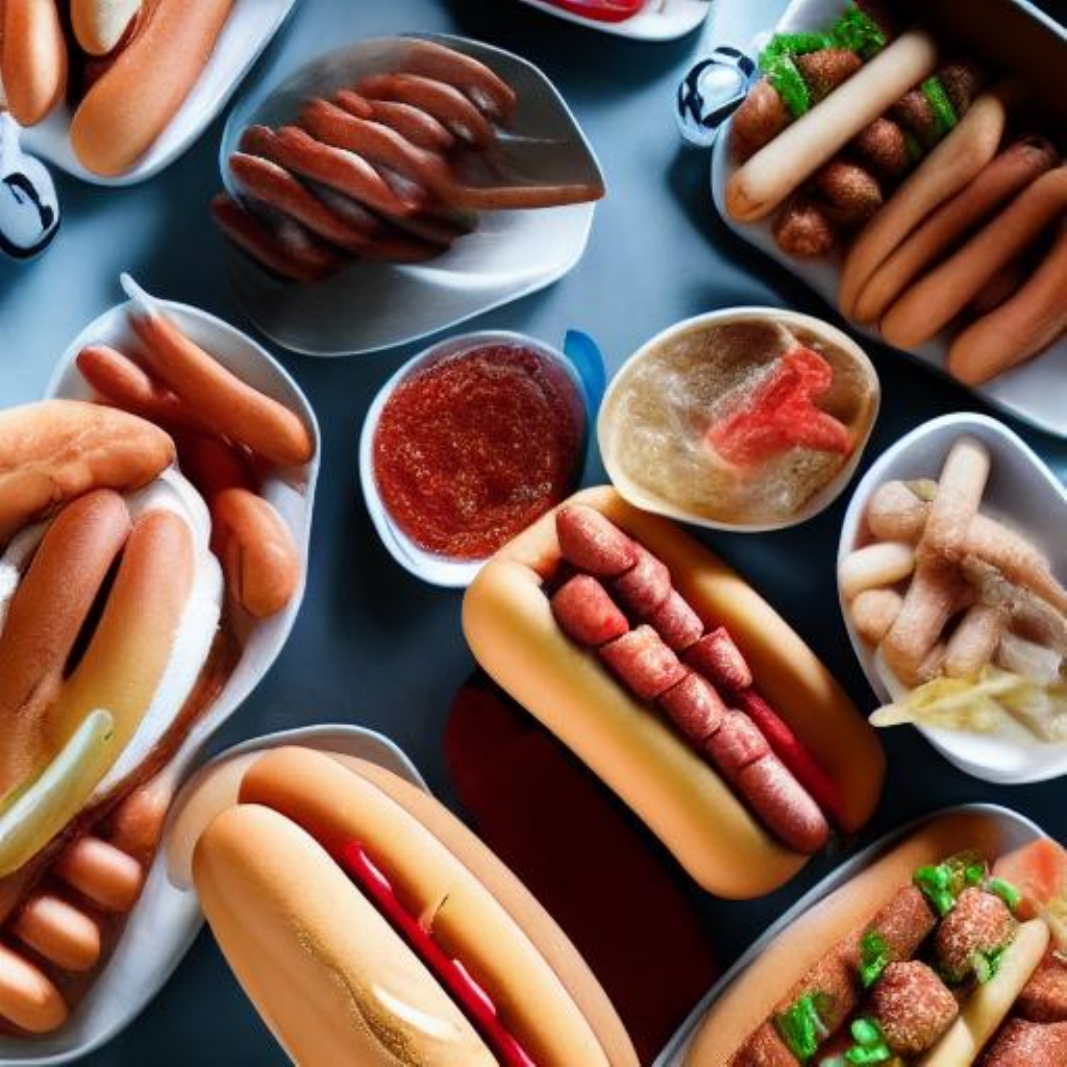} &
\includegraphics[width=0.132\textwidth]{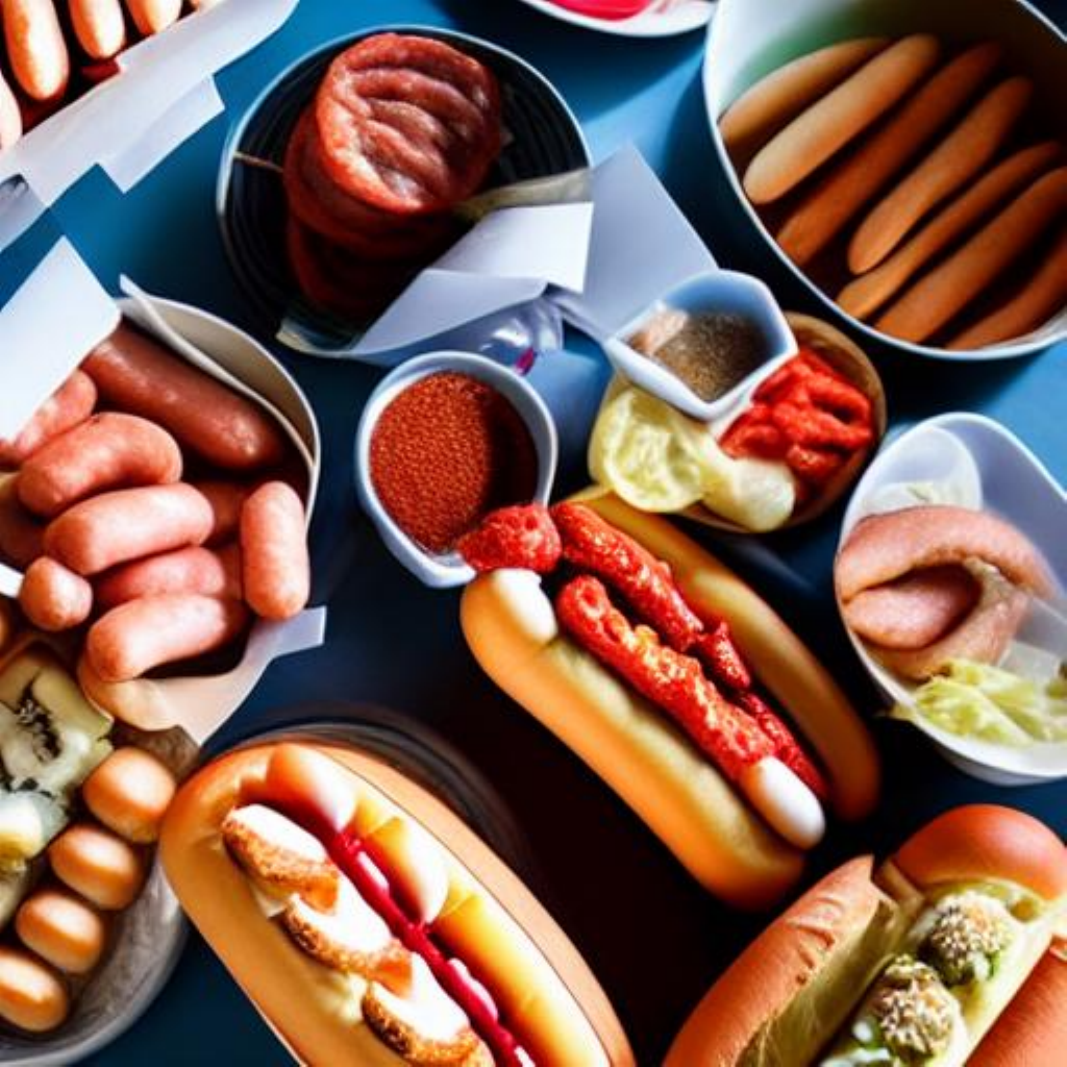} &
\includegraphics[width=0.132\textwidth]{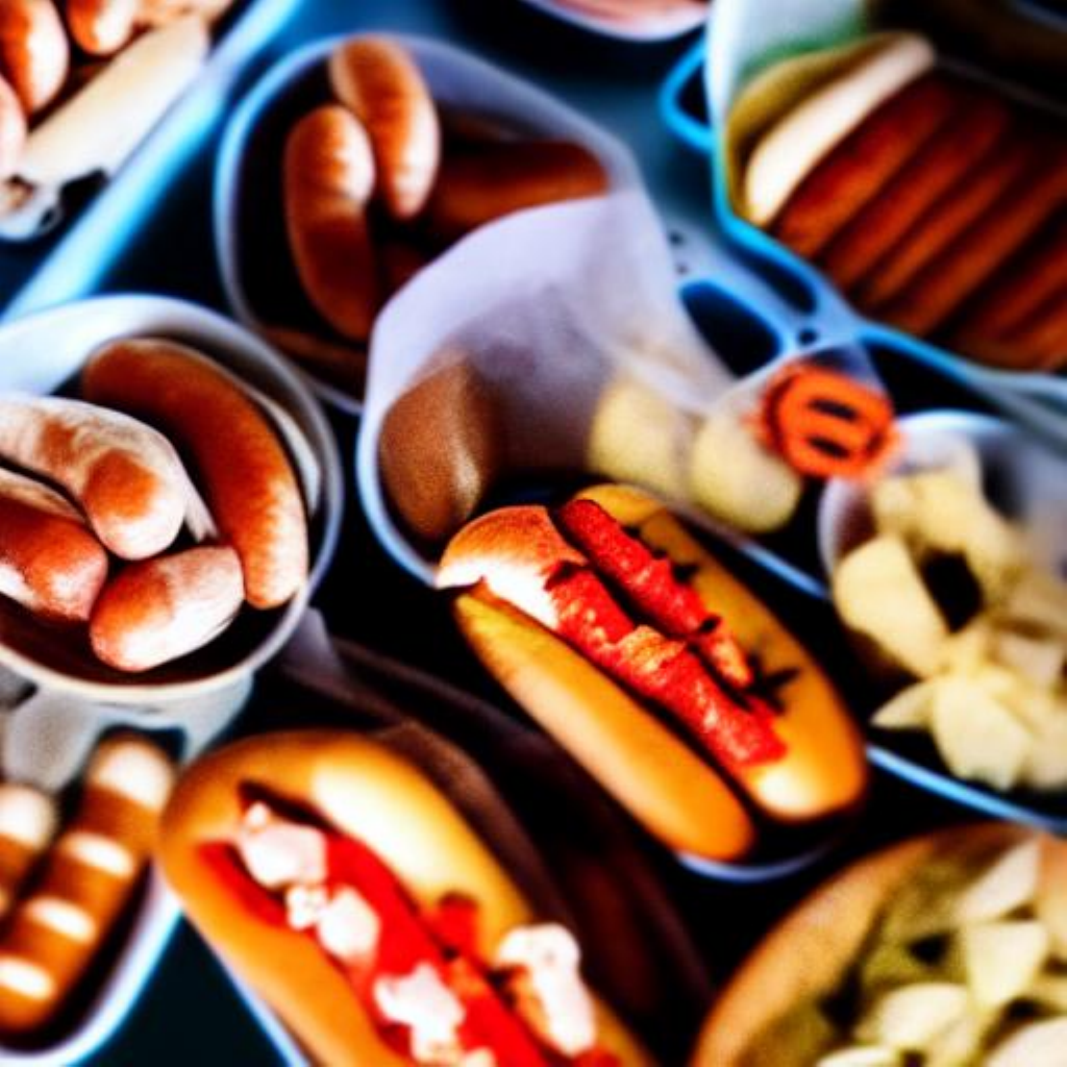} &
\includegraphics[width=0.132\textwidth]{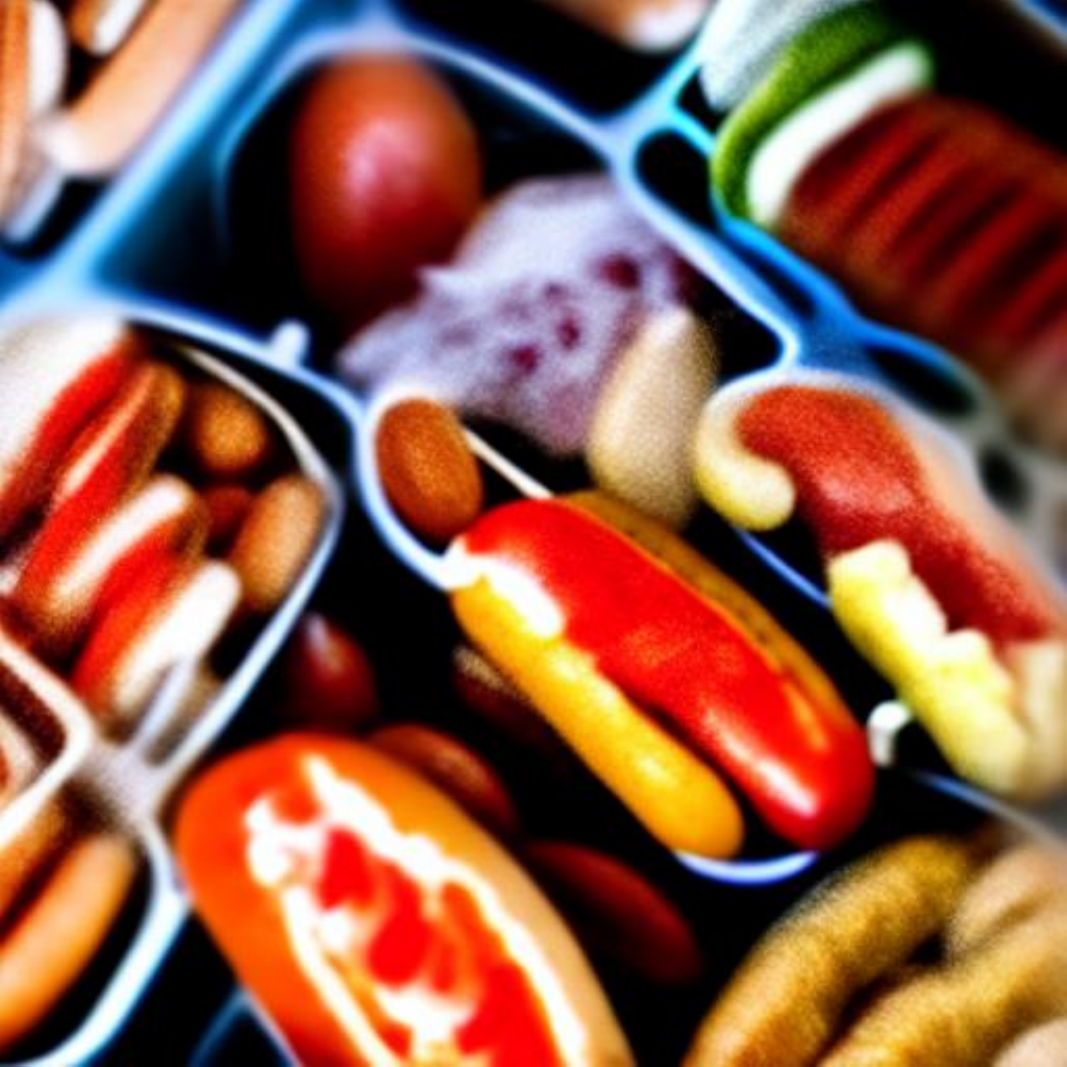} &
\includegraphics[width=0.132\textwidth]{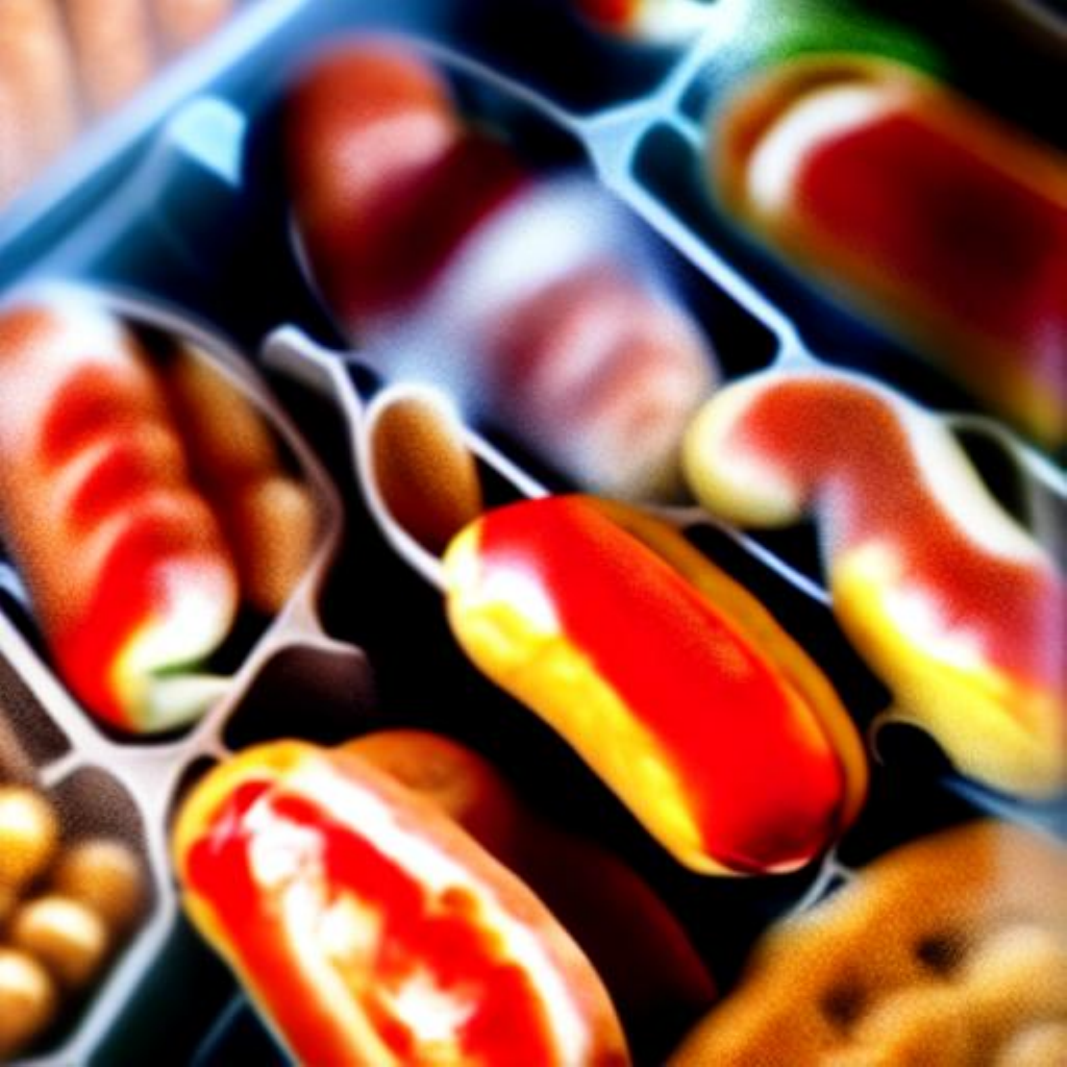} \\
};
\node[rotate=90, anchor=center, yshift=8pt] at ($(c2top.west)!0.5!(c2bot.west)$) {Interval CFG};

\matrix (c3top) [rowgrid, below=3pt of c2bot]
{
\includegraphics[width=0.132\textwidth]{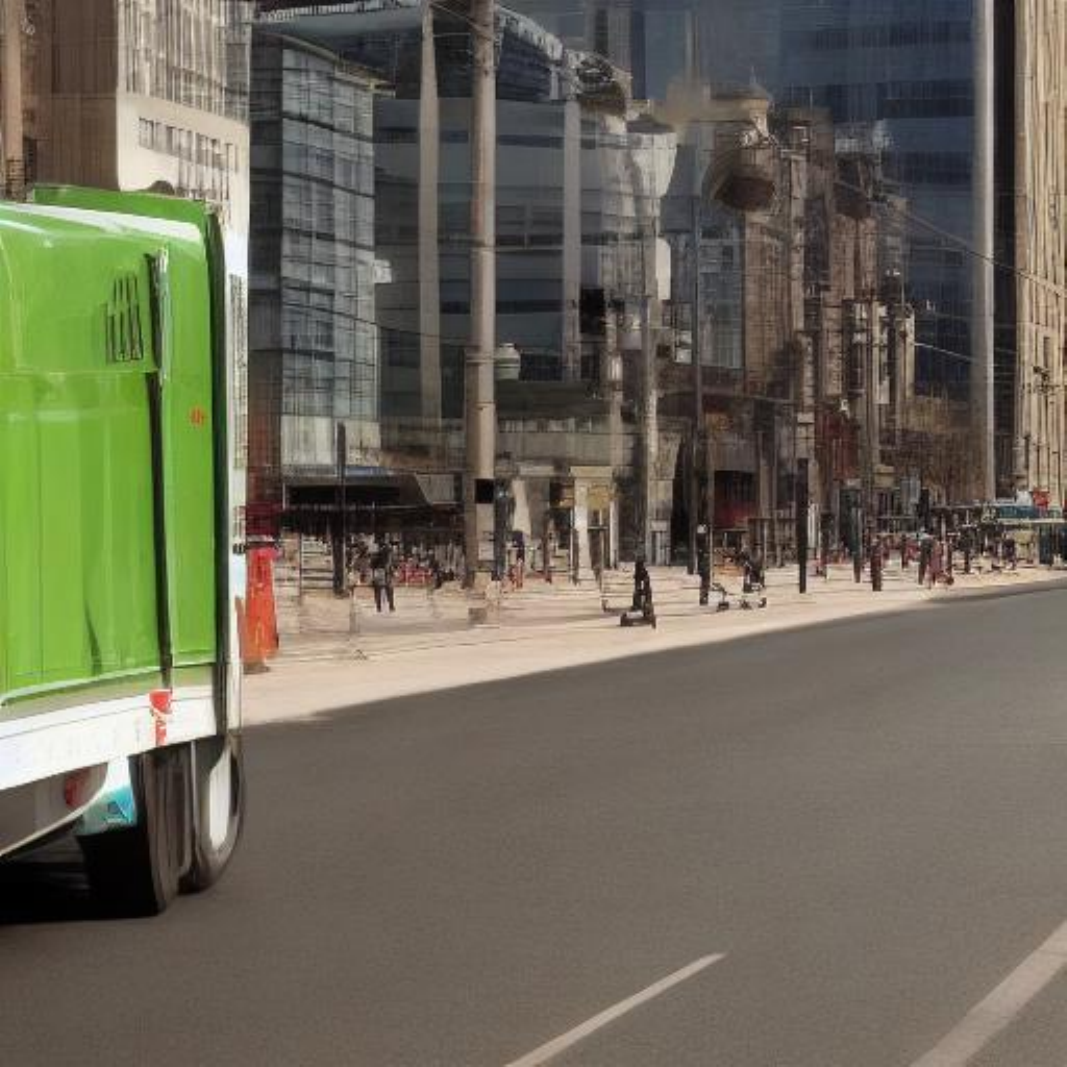} &
\includegraphics[width=0.132\textwidth]{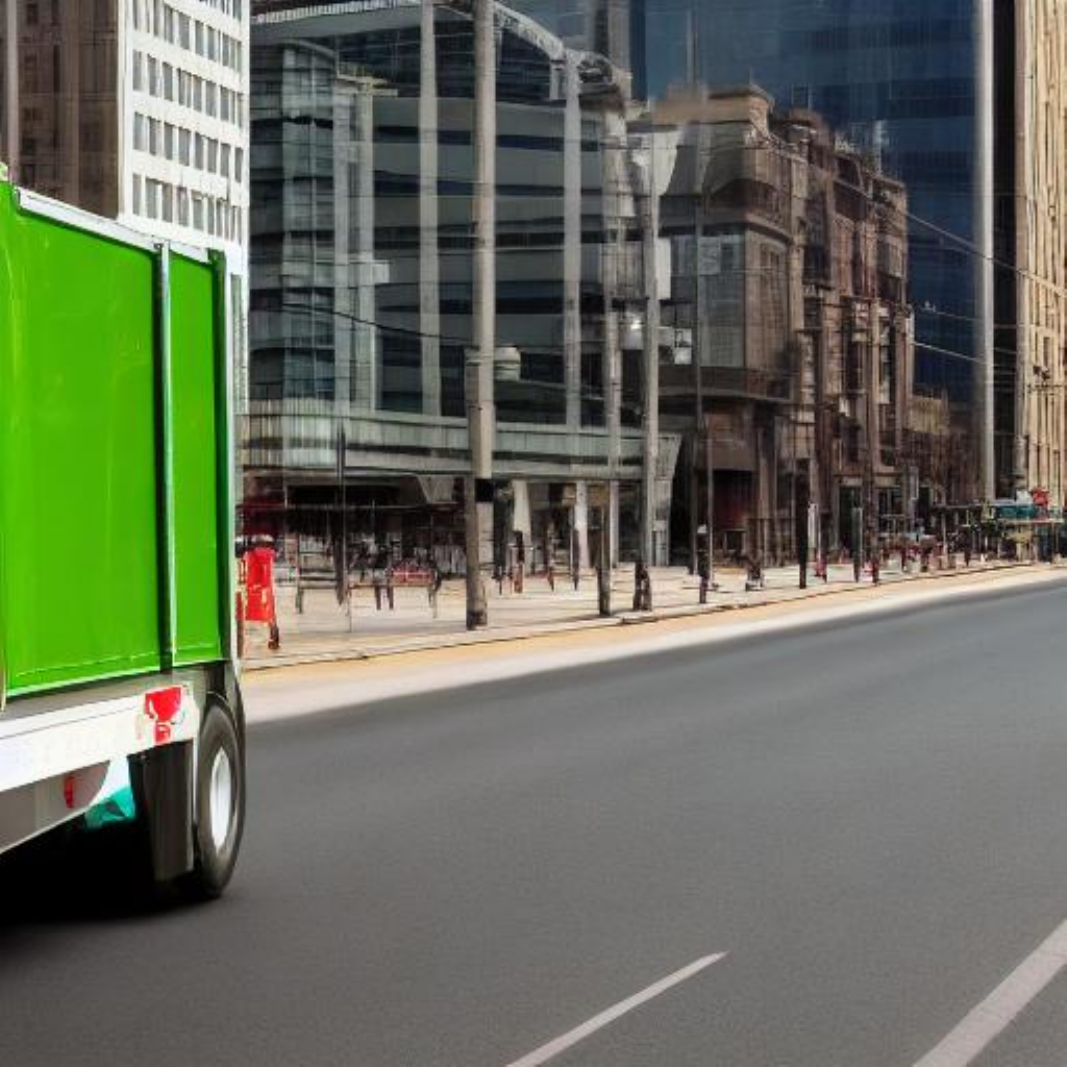} &
\includegraphics[width=0.132\textwidth]{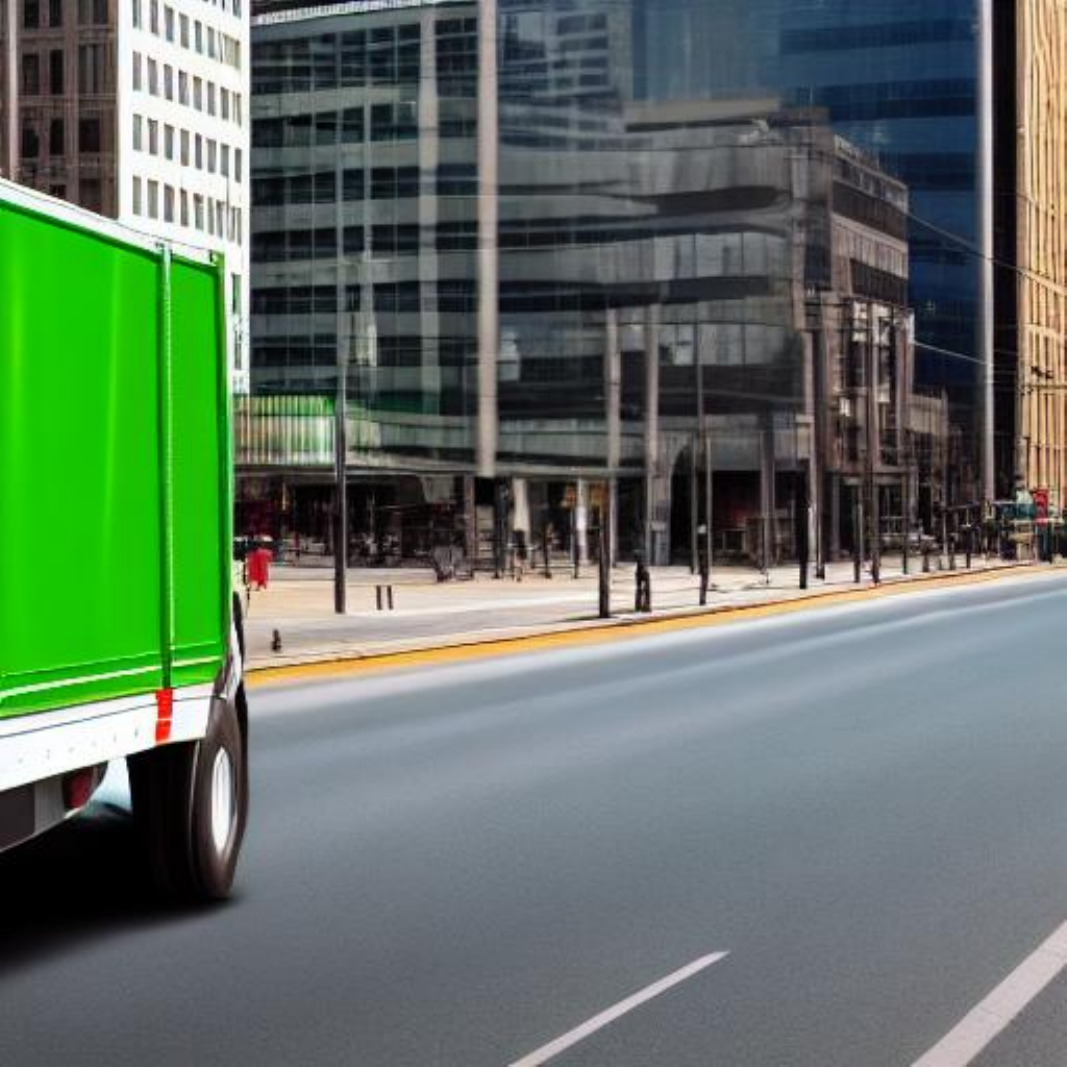} &
\includegraphics[width=0.132\textwidth]{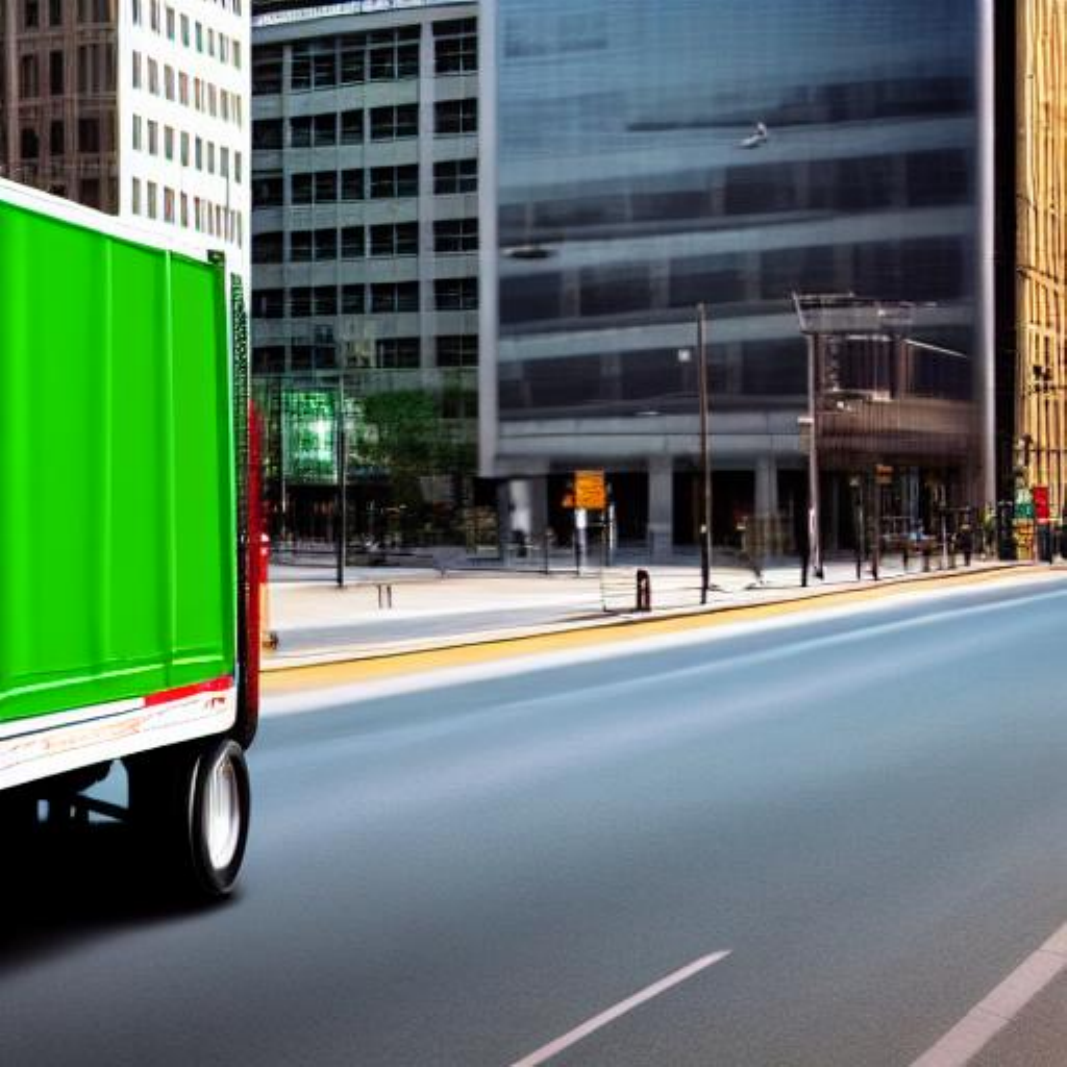} &
\includegraphics[width=0.132\textwidth]{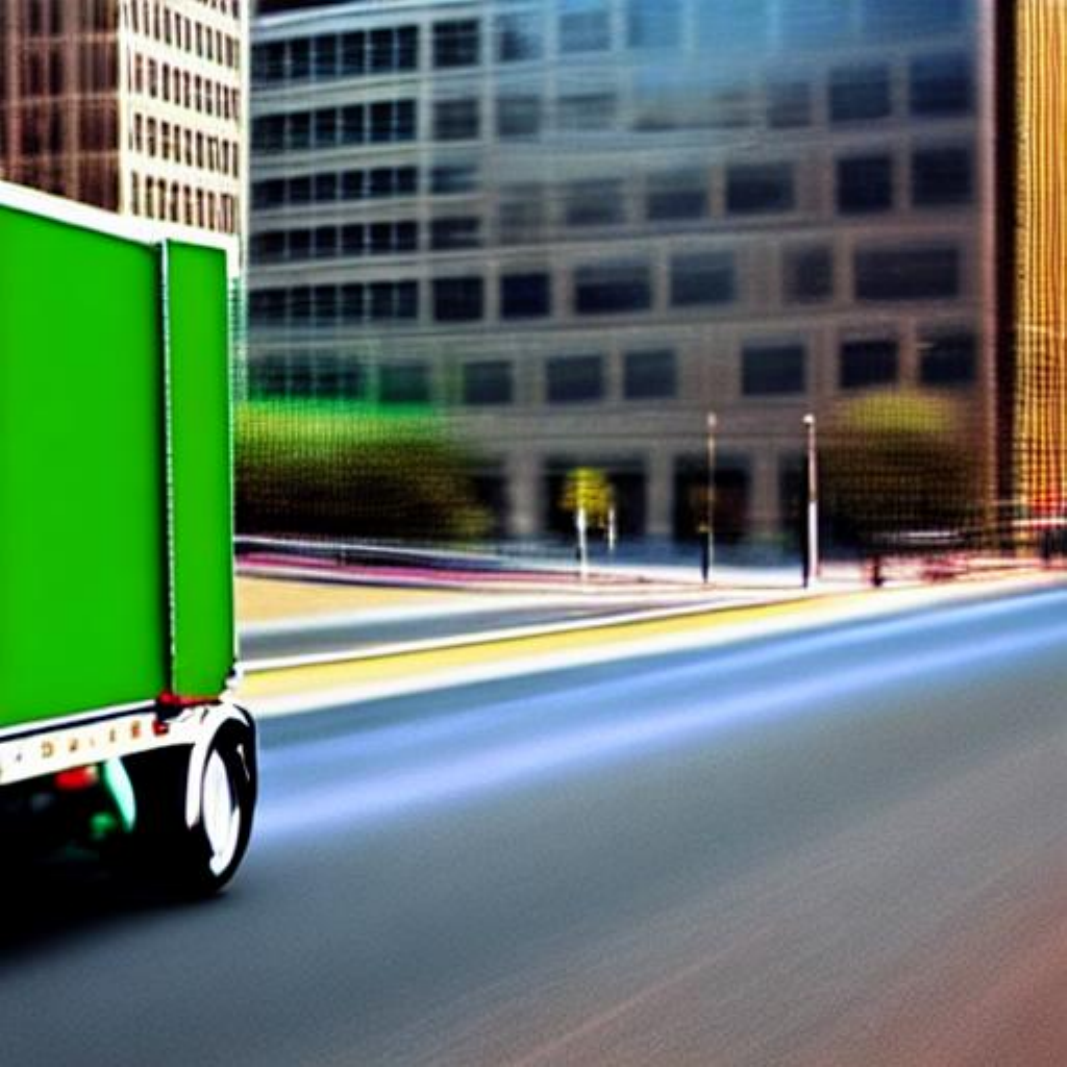} &
\includegraphics[width=0.132\textwidth]{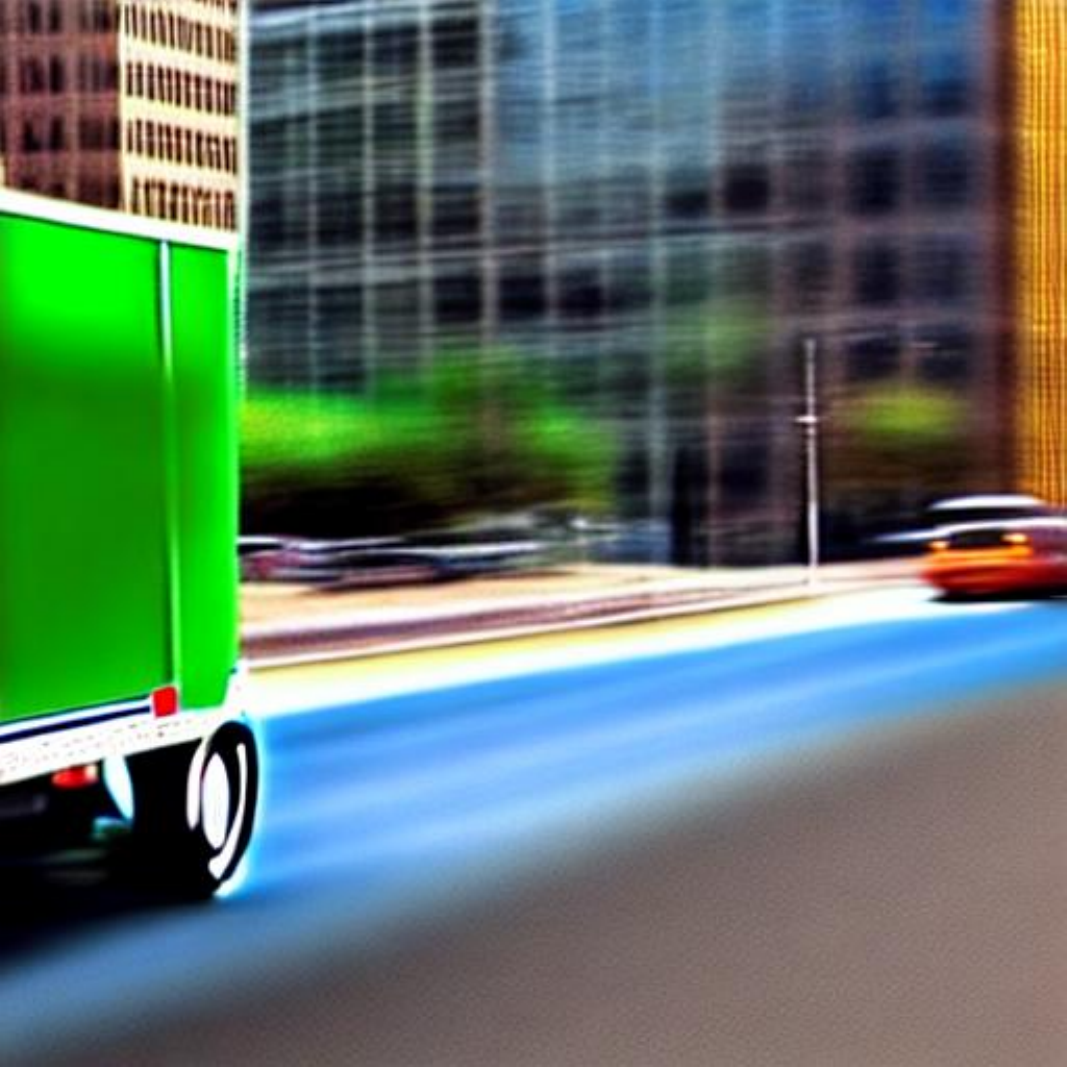} \\
};
\matrix (c3bot) [rowgrid, below=1pt of c3top]
{
\includegraphics[width=0.132\textwidth]{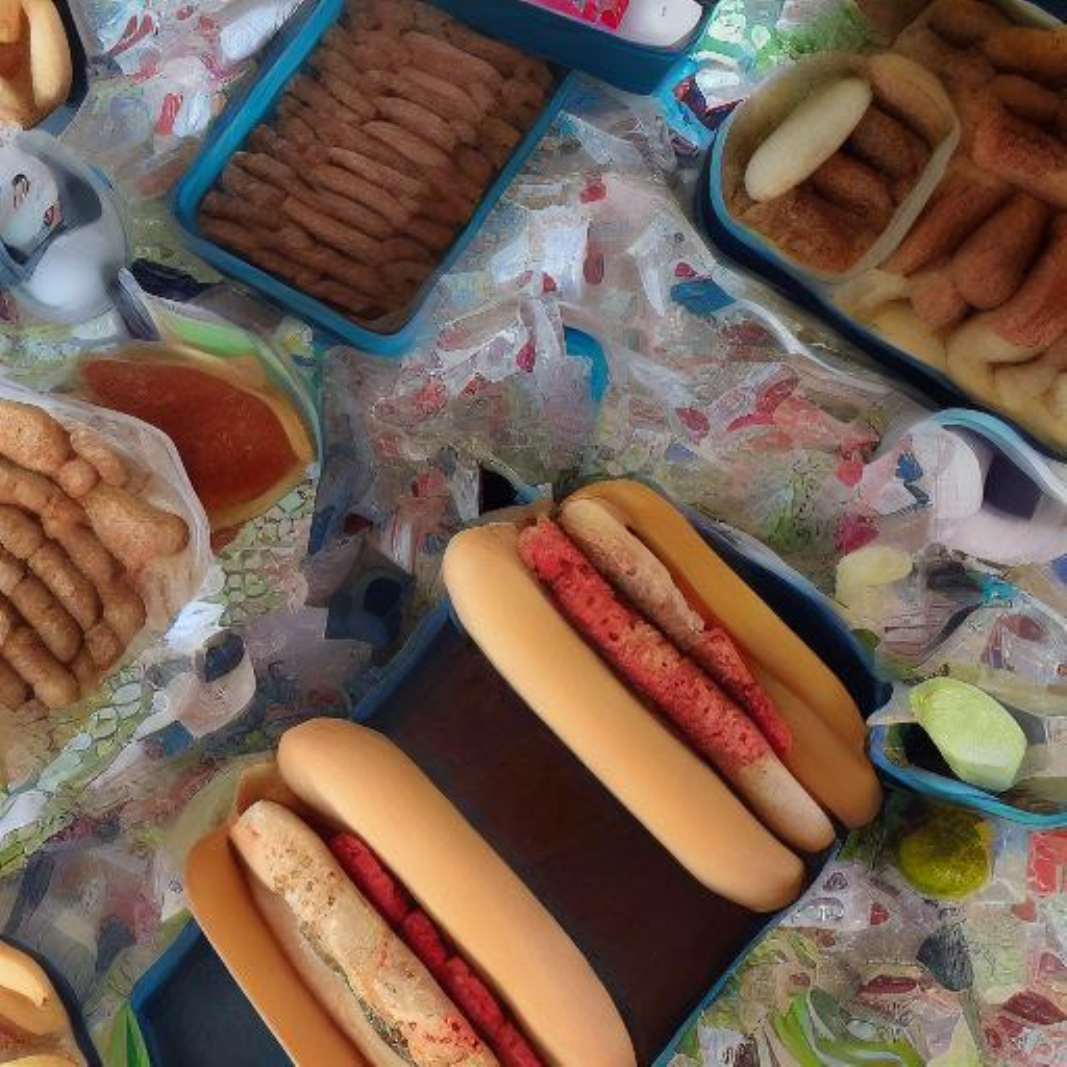} &
\includegraphics[width=0.132\textwidth]{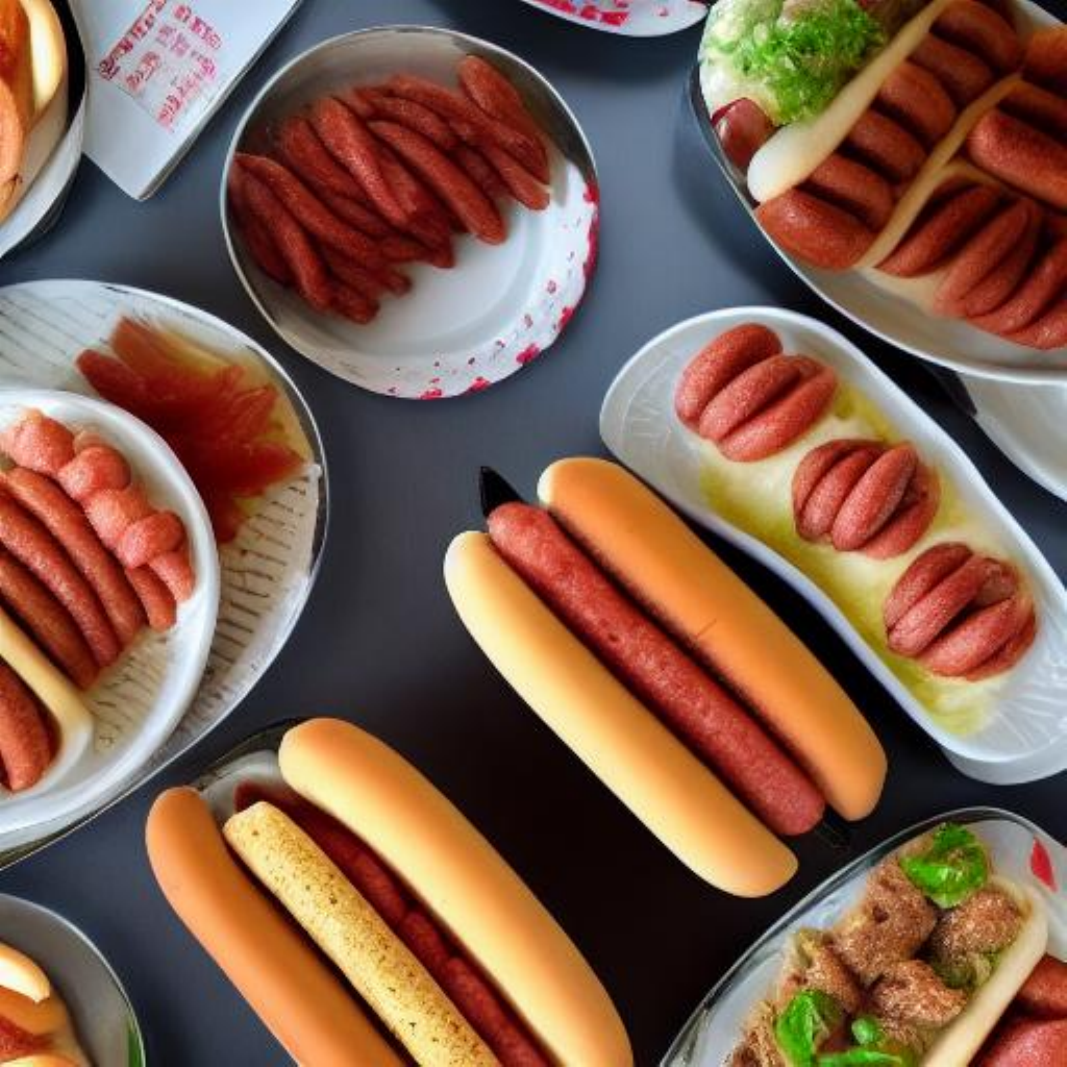} &
\includegraphics[width=0.132\textwidth]{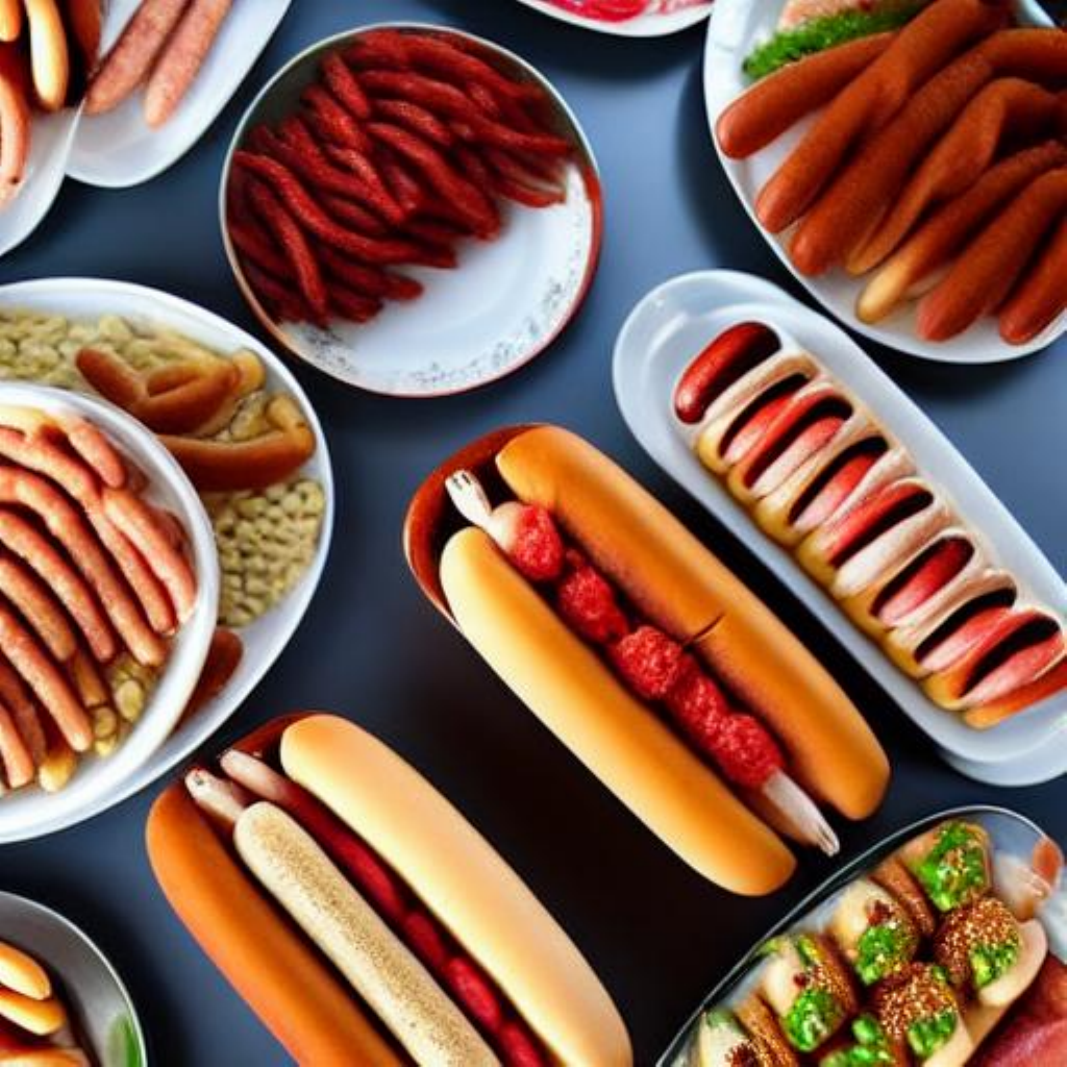} &
\includegraphics[width=0.132\textwidth]{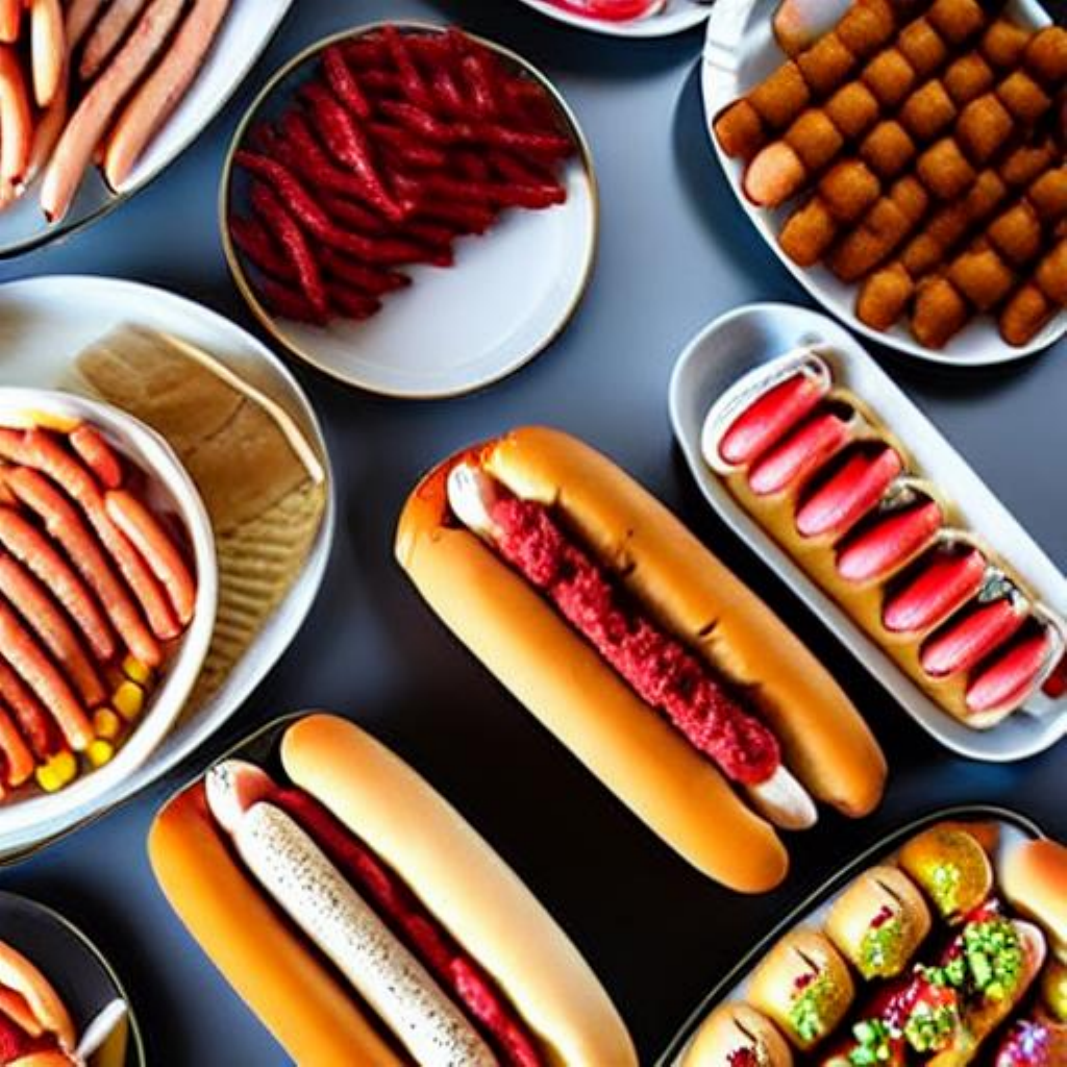} &
\includegraphics[width=0.132\textwidth]{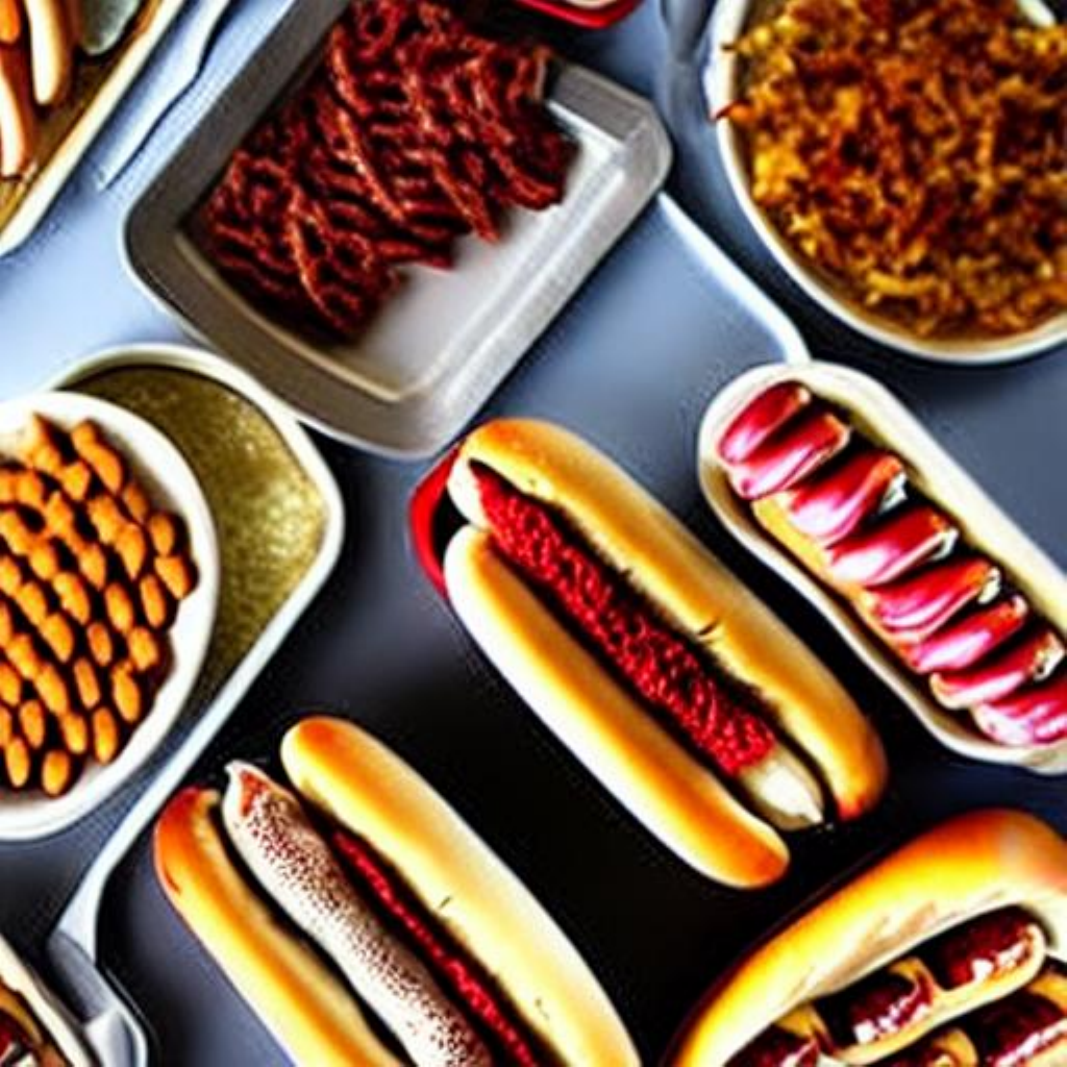} &
\includegraphics[width=0.132\textwidth]{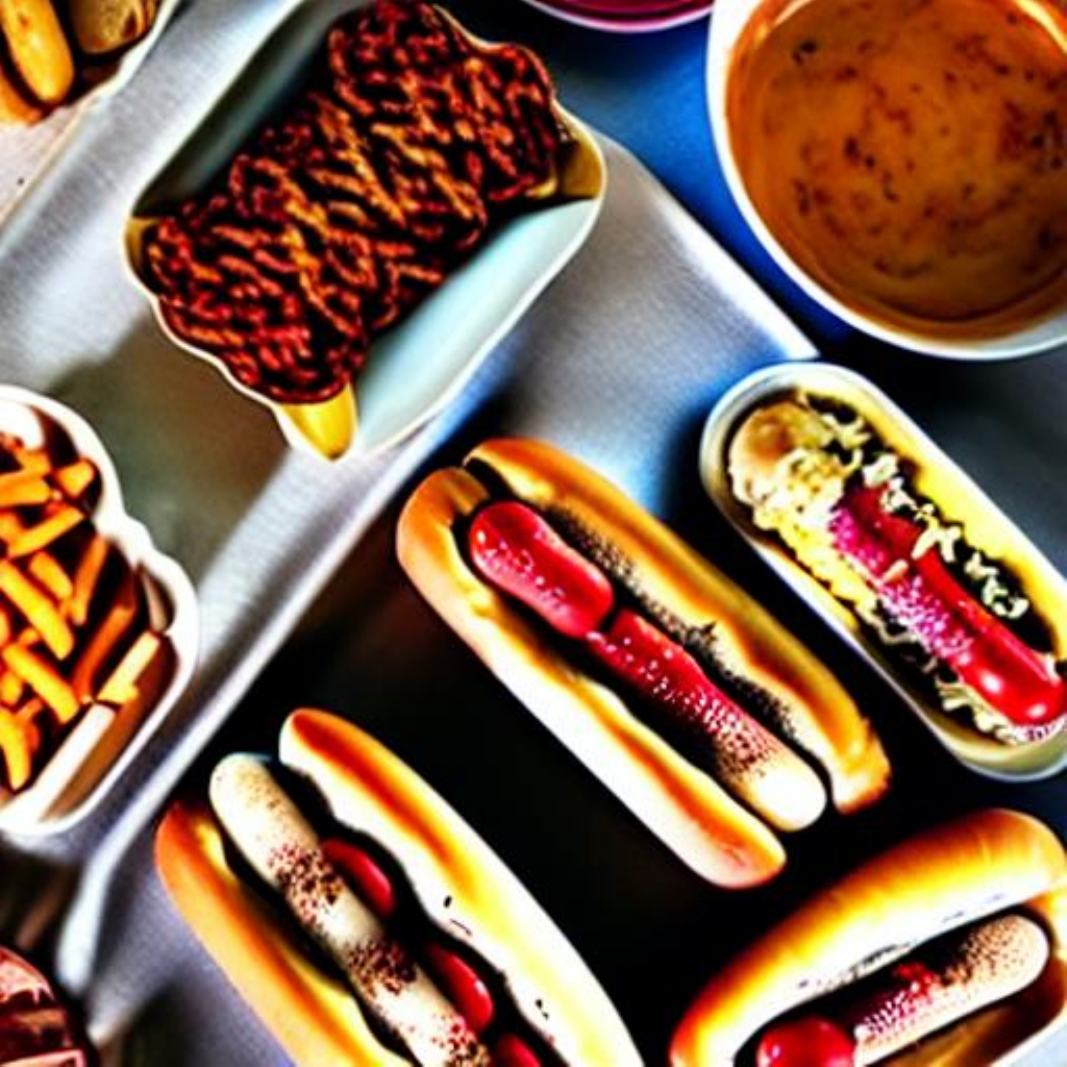} \\
};
\node[rotate=90, anchor=center, yshift=8pt] at ($(c3top.west)!0.5!(c3bot.west)$) {$\beta$--CFG};

\matrix (c4top) [rowgrid, below=3pt of c3bot]
{
\includegraphics[width=0.132\textwidth]{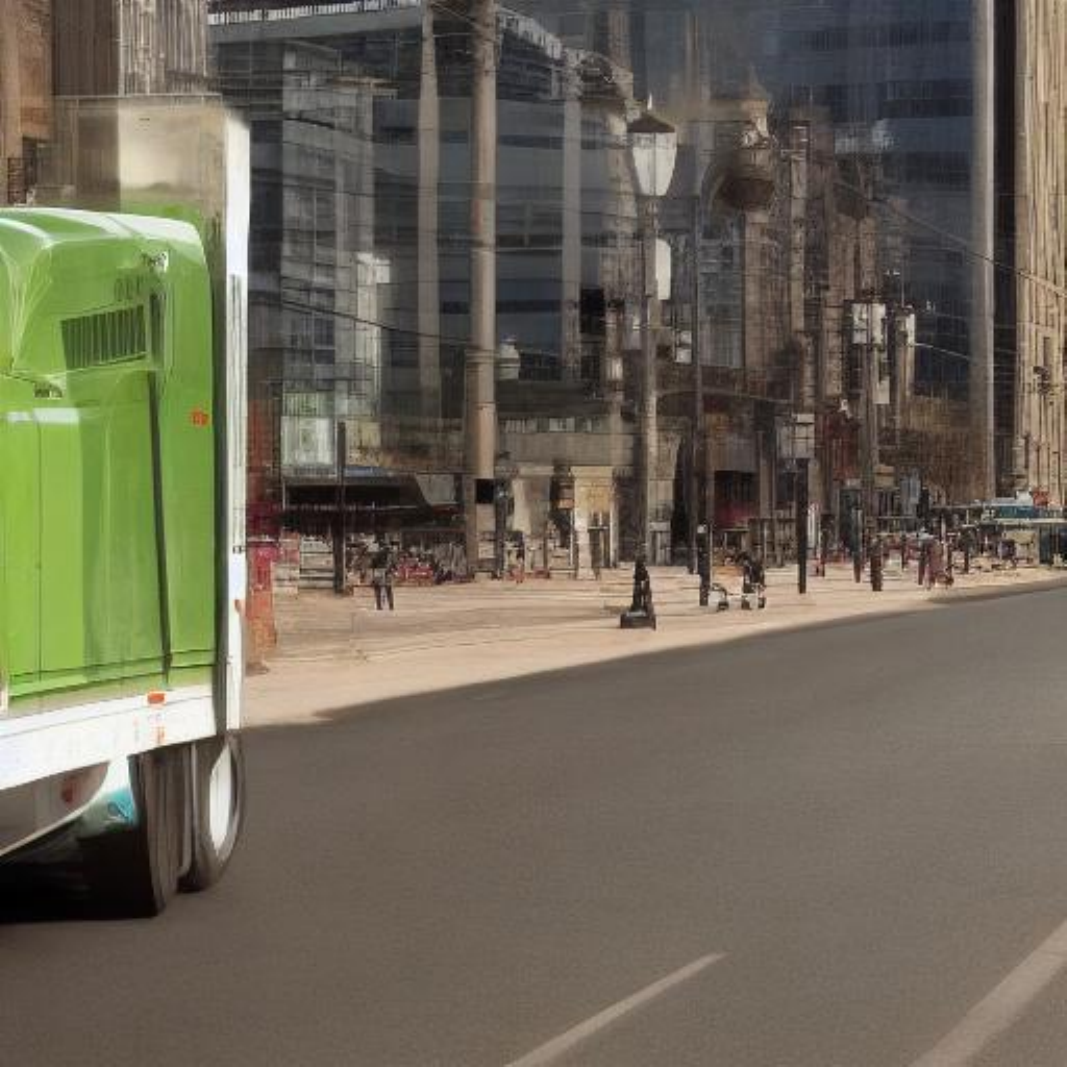} &
\includegraphics[width=0.132\textwidth]{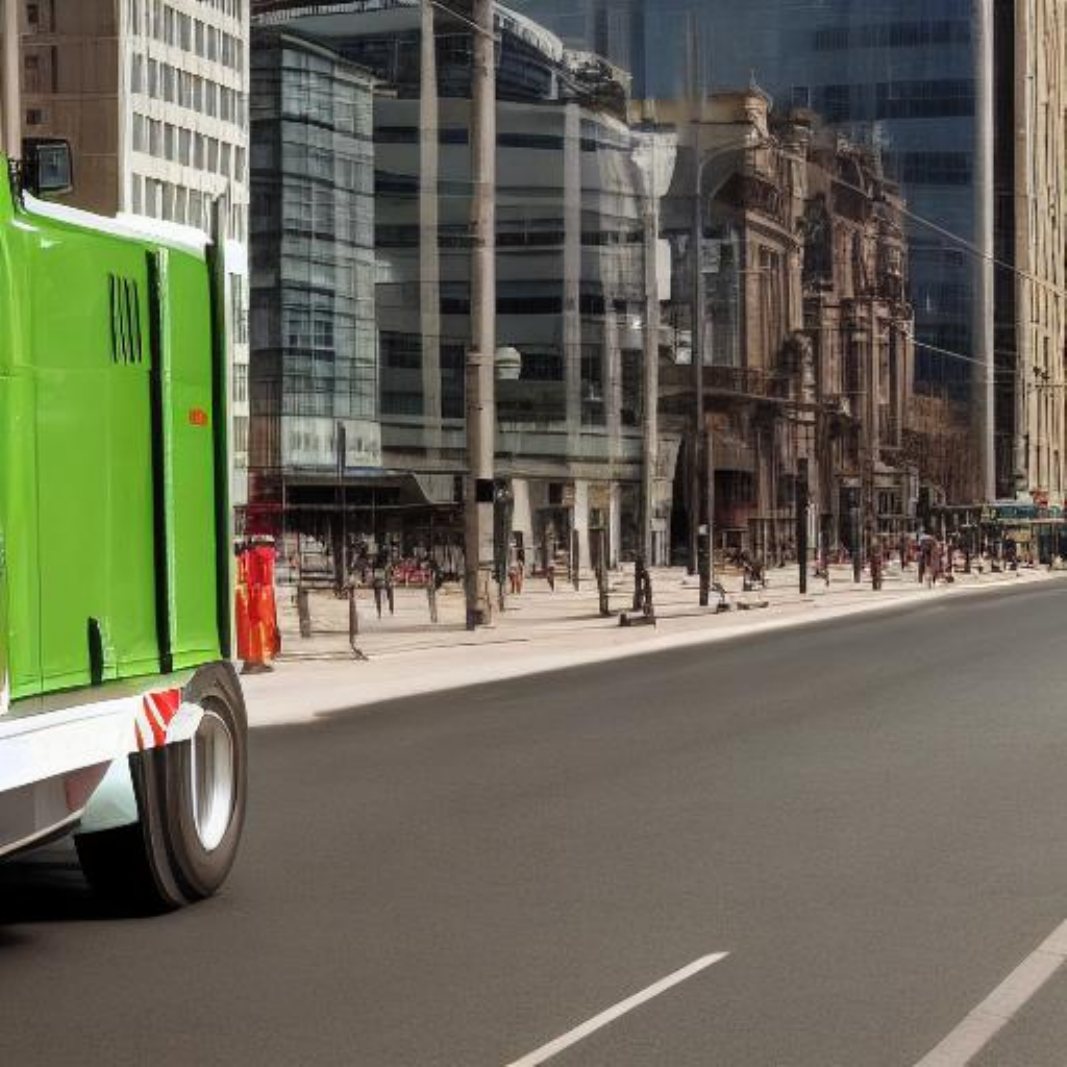} &
\includegraphics[width=0.132\textwidth]{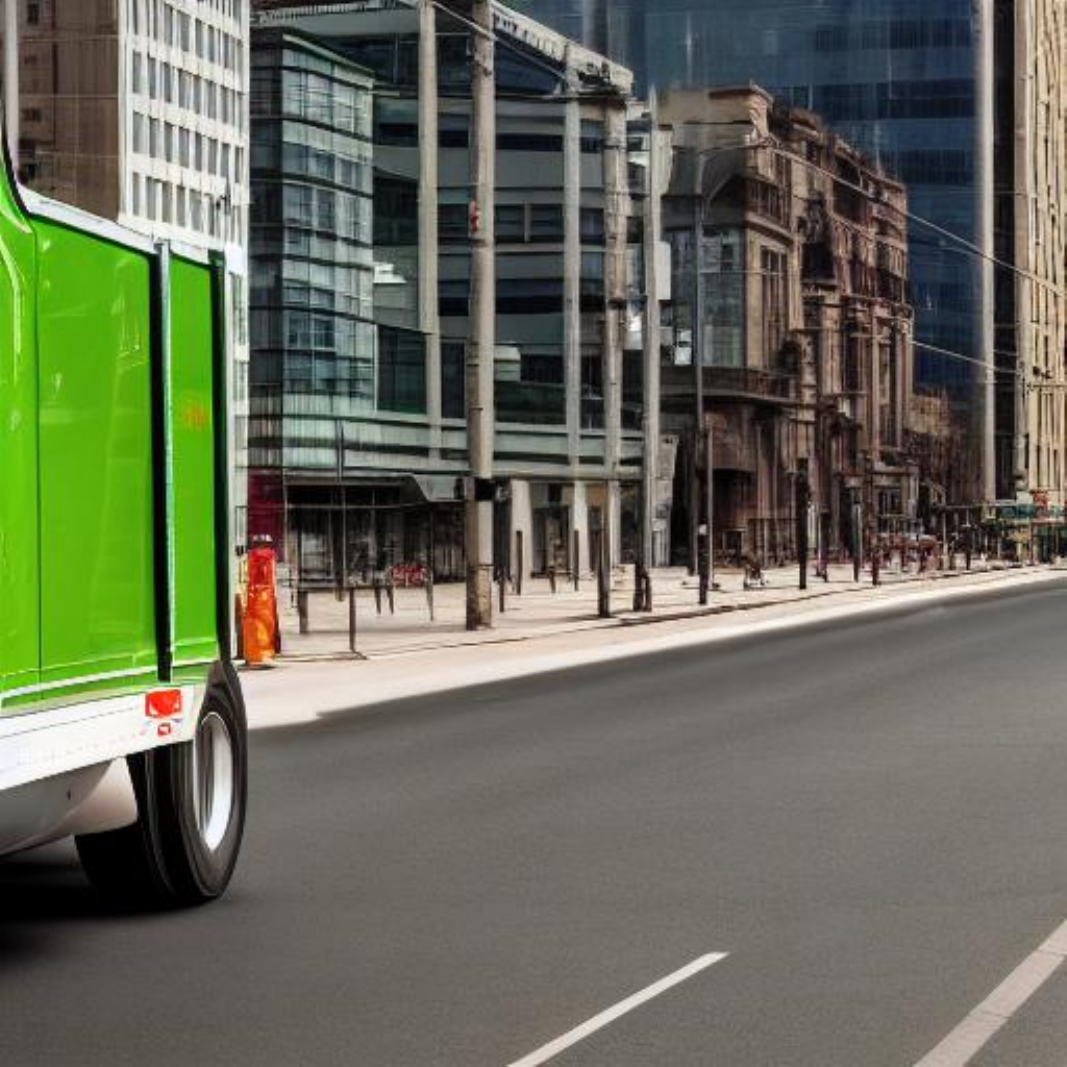} &
\includegraphics[width=0.132\textwidth]{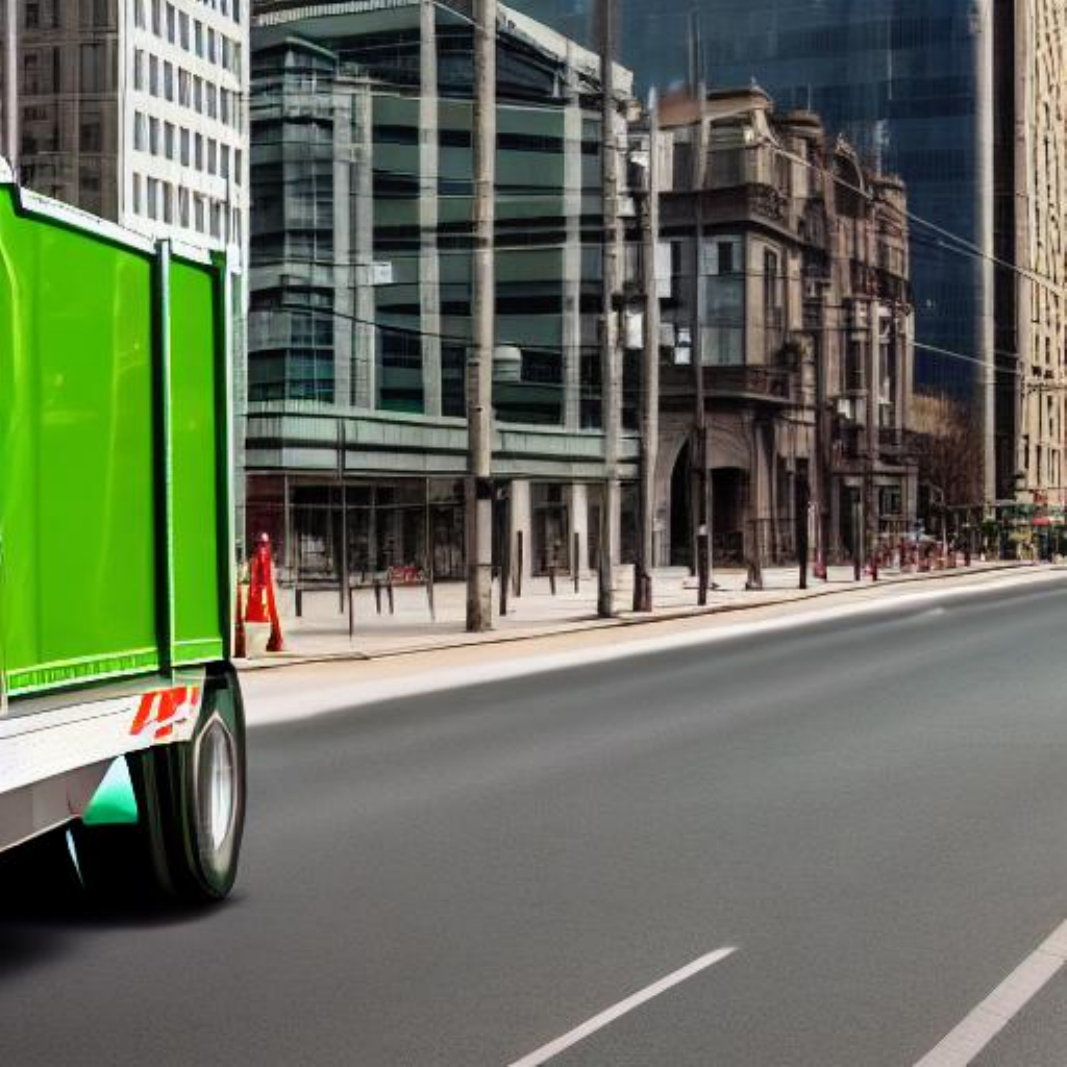} &
\includegraphics[width=0.132\textwidth]{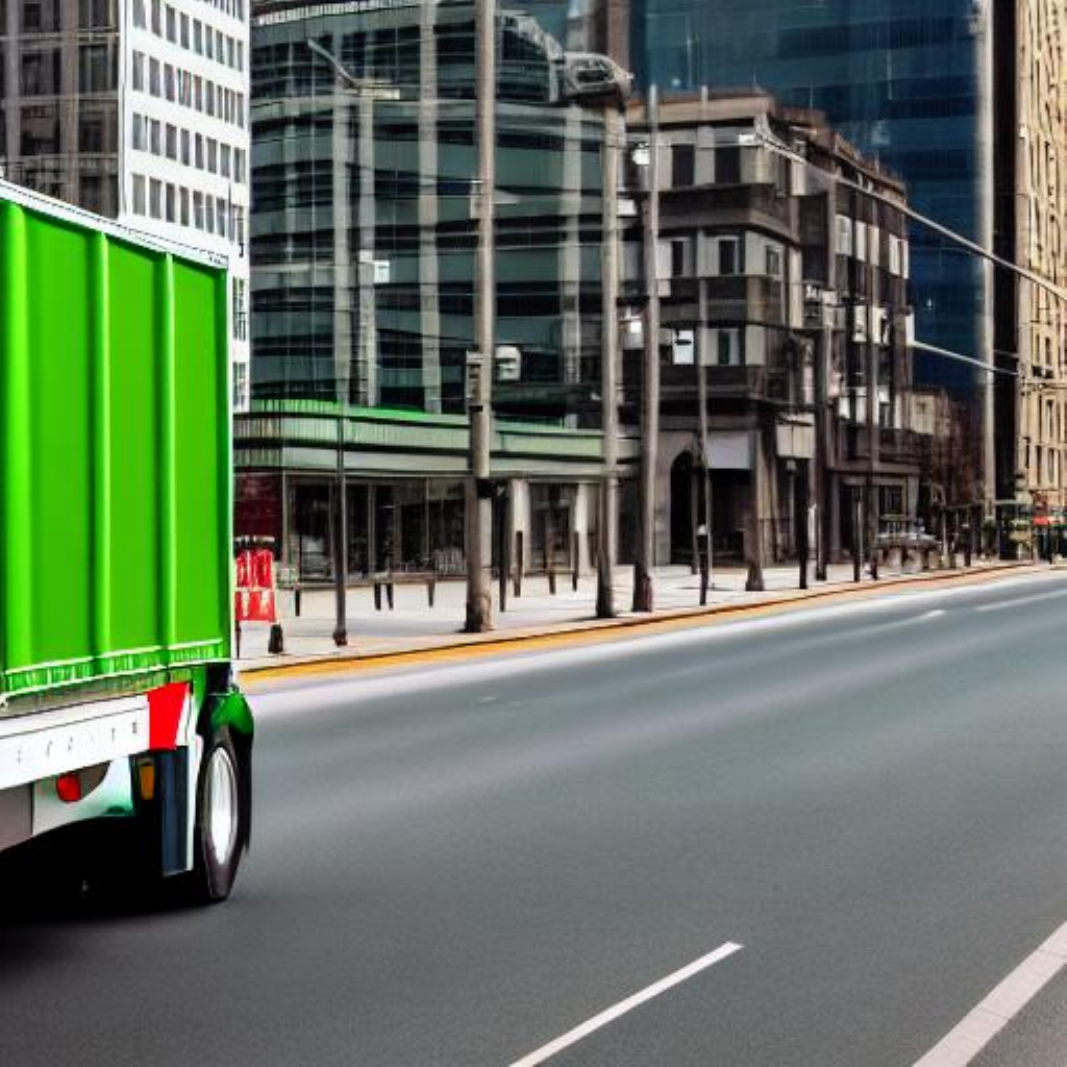} &
\includegraphics[width=0.132\textwidth]{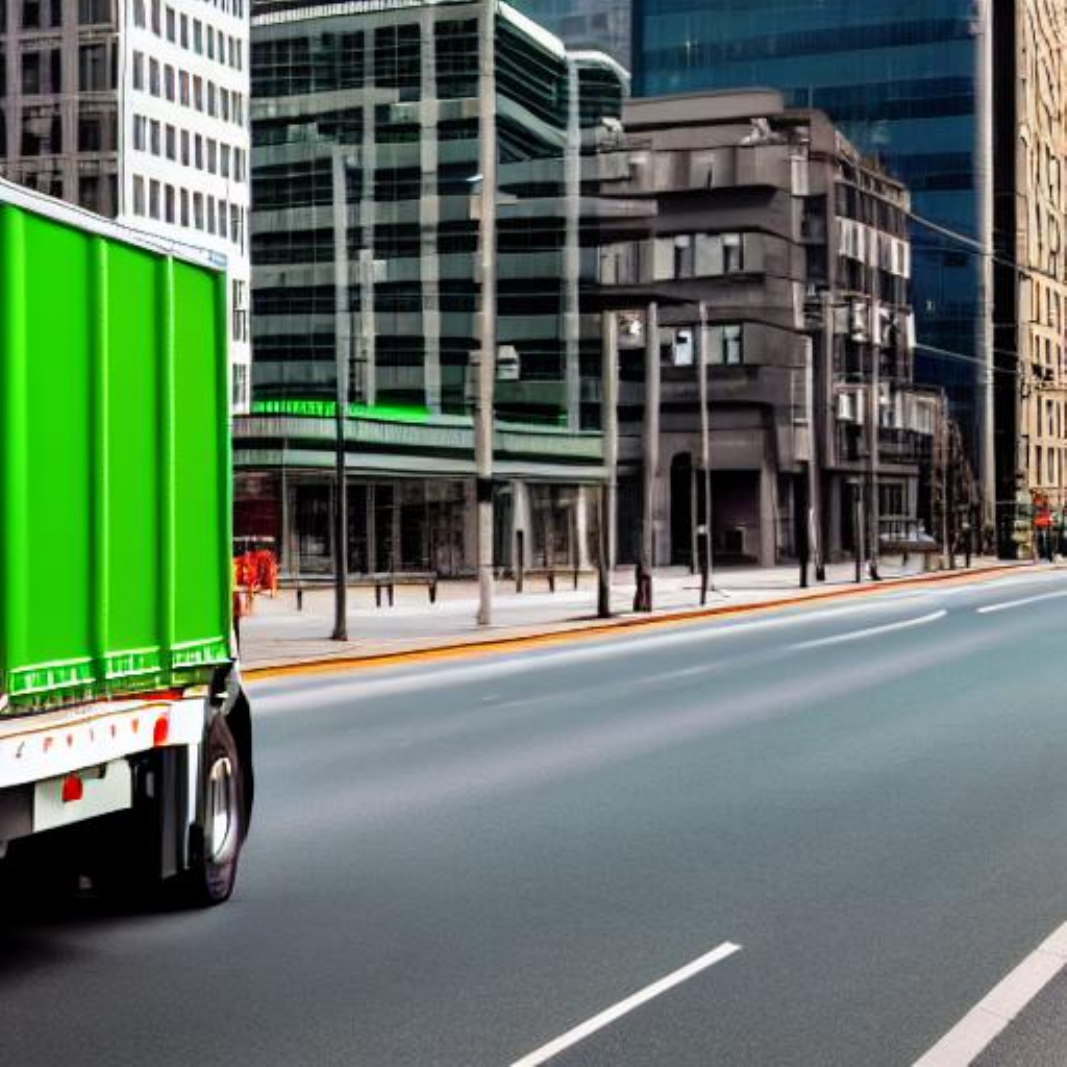} \\
};
\matrix (c4bot) [rowgrid, below=1pt of c4top]
{
\includegraphics[width=0.132\textwidth]{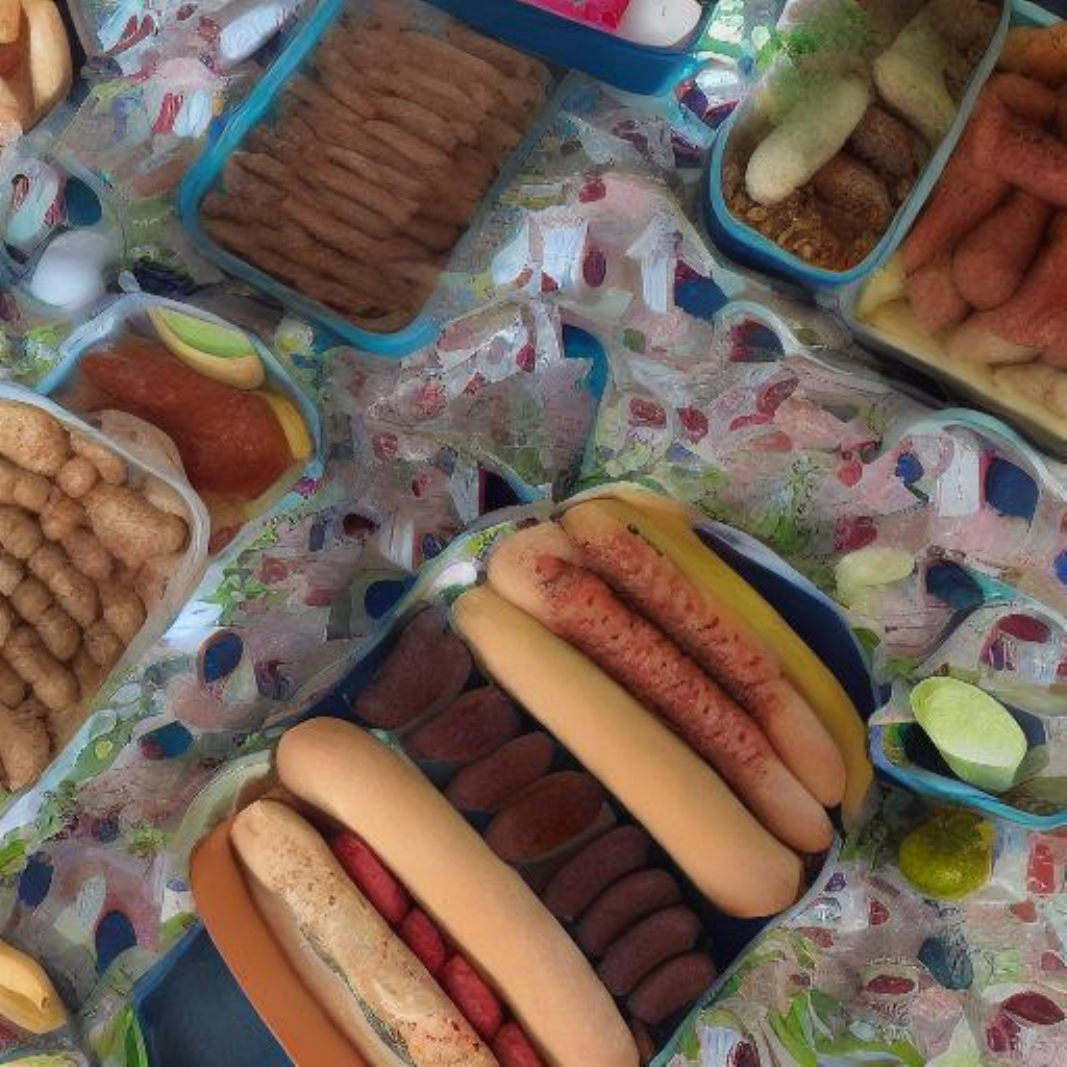} &
\includegraphics[width=0.132\textwidth]{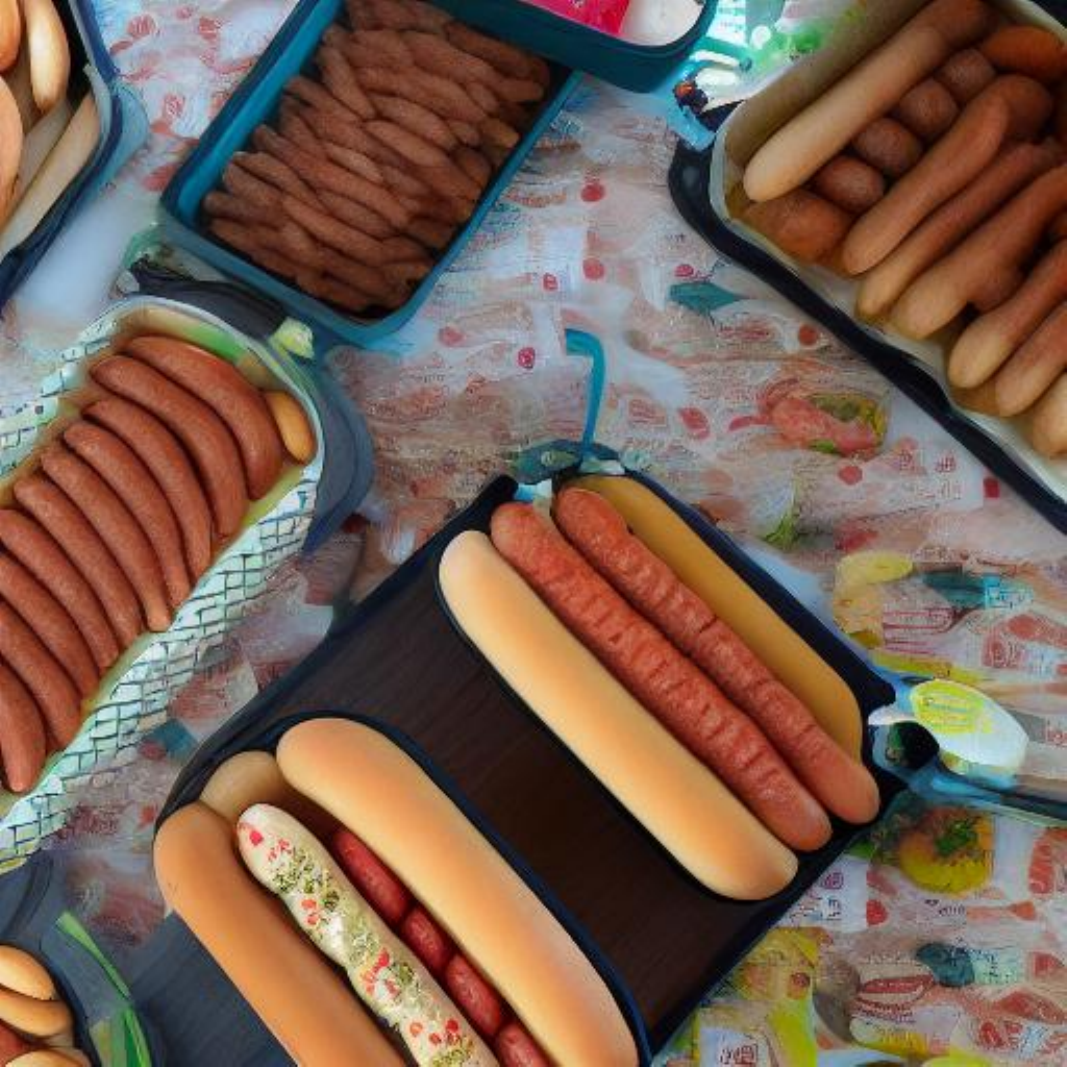} &
\includegraphics[width=0.132\textwidth]{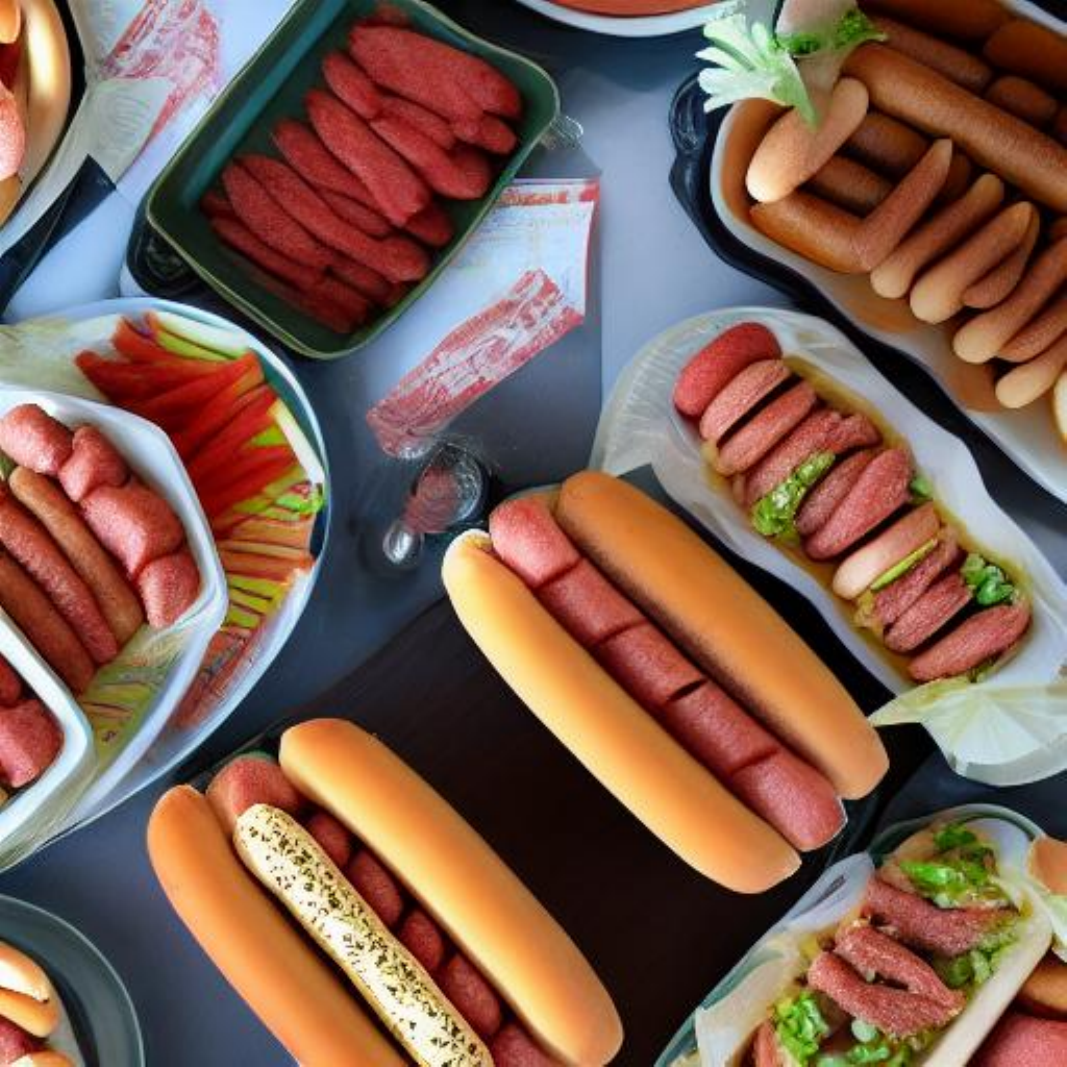} &
\includegraphics[width=0.132\textwidth]{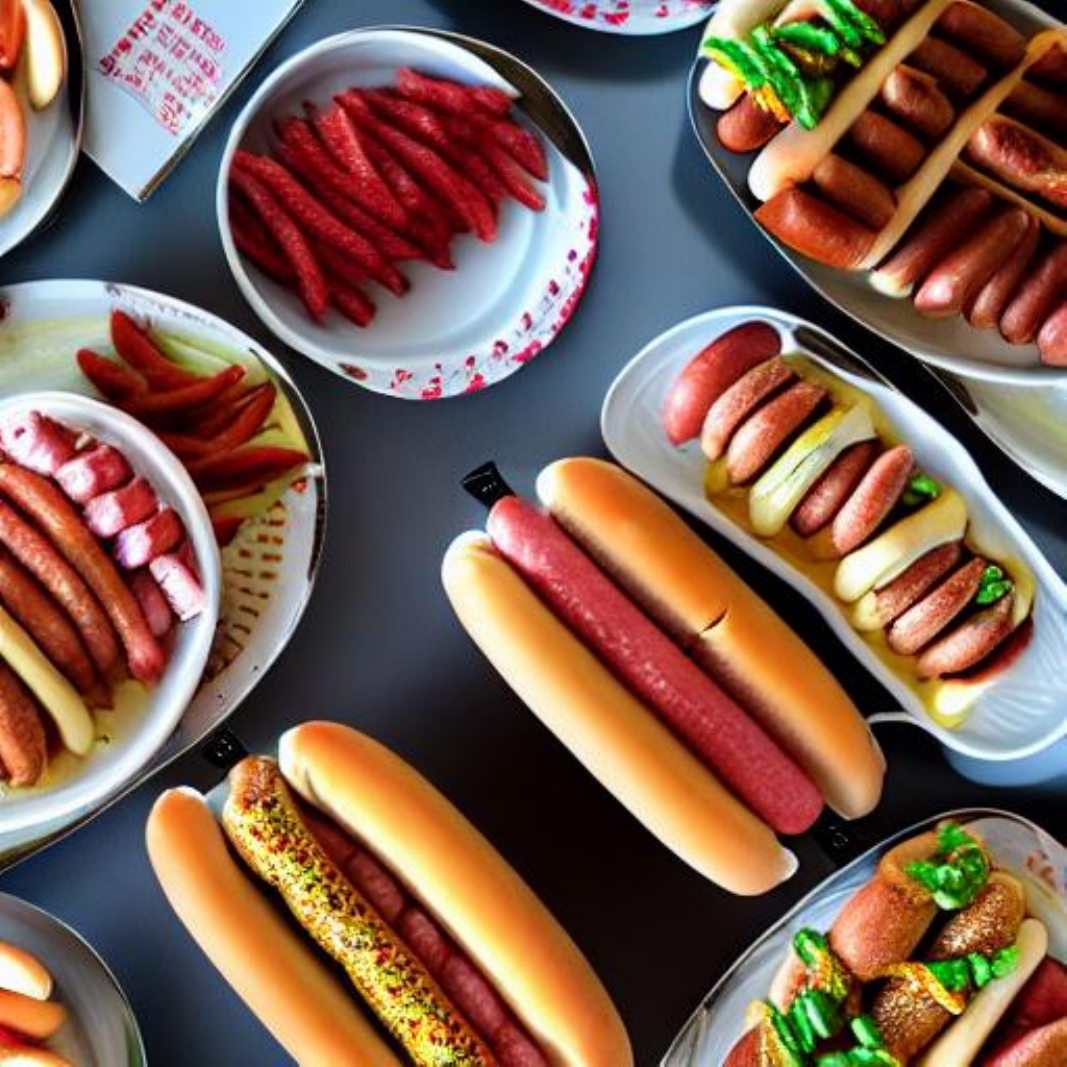} &
\includegraphics[width=0.132\textwidth]{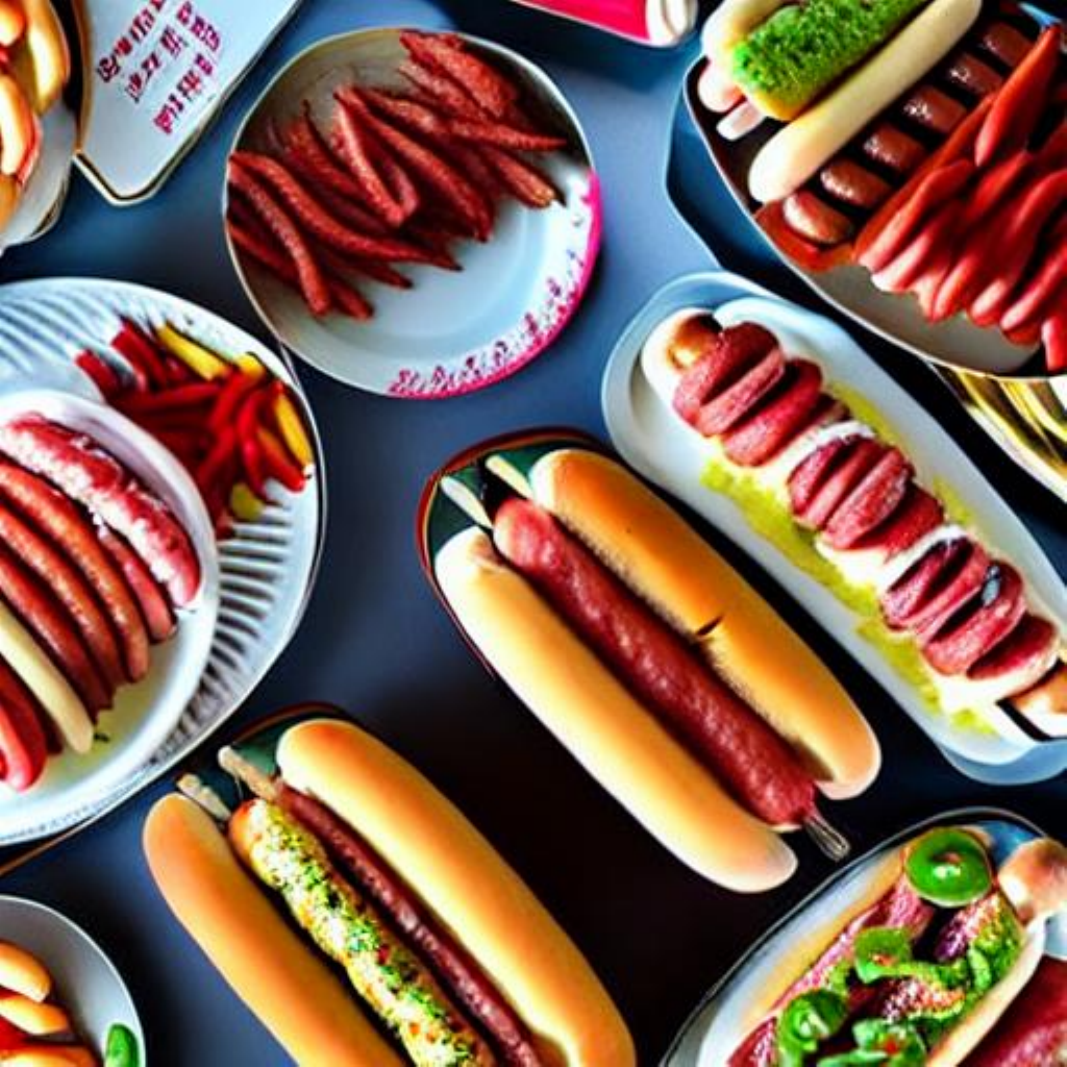} &
\includegraphics[width=0.132\textwidth]{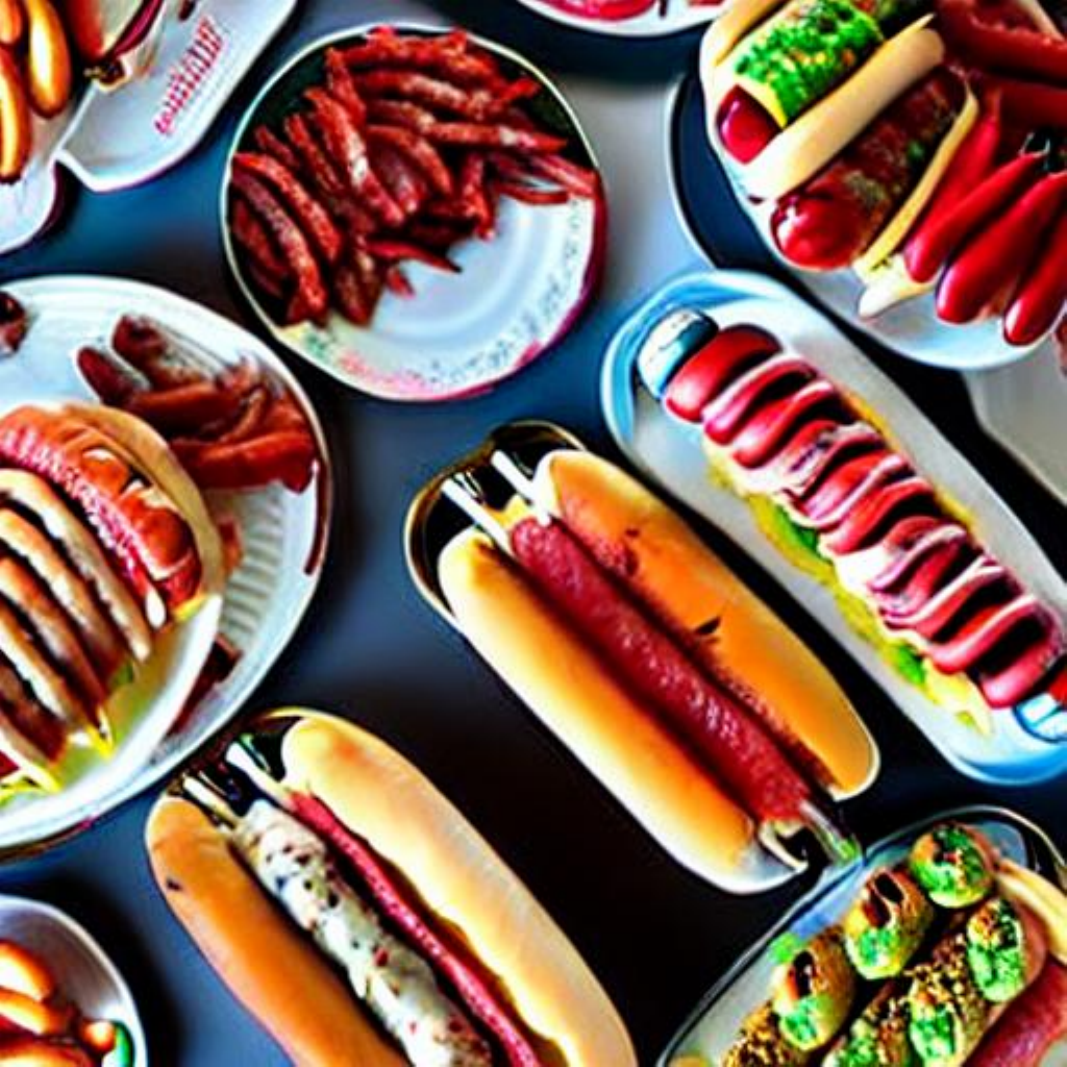} \\
};
\node[rotate=90, anchor=center, yshift=8pt] at ($(c4top.west)!0.5!(c4bot.west)$) {DG-CFG};

\node[above=2pt] at (p1-1.north) {$\bar{\omega}=3$};
\node[above=2pt] at (p1-2.north) {$\bar{\omega}=7$};
\node[above=2pt] at (p1-3.north) {$\bar{\omega}=11$};
\node[above=2pt] at (p1-4.north) {$\bar{\omega}=15$};
\node[above=2pt] at (p1-5.north) {$\bar{\omega}=19$};
\node[above=2pt] at (p1-6.north) {$\bar{\omega}=23$};

\end{tikzpicture}
\caption{Qualitative comparison on Stable Diffusion~1.5 (DDIM, 8 NFE). Columns vary the nominal guidance strength $\bar{\omega}\in\{3,7,11,15,19,23\}$; rows are grouped by method, with two prompts per method. The upper prompt is \textit{a large green truck on a city street}, and the lower prompt is \textit{a table topped with trays full of hot dogs}.}
\label{fig:sd15-grid}
\end{figure}

\newcommand{\sdqualrow}[3]{%
\rotatebox[origin=c]{90}{#1} &
\begin{tabular}{@{}*{6}{c}@{}}
\includegraphics[width=0.12\textwidth]{fig/grid_#2_#3_w3_p03.pdf} &
\includegraphics[width=0.12\textwidth]{fig/grid_#2_#3_w7_p03.pdf} &
\includegraphics[width=0.12\textwidth]{fig/grid_#2_#3_w11_p03.pdf} &
\includegraphics[width=0.12\textwidth]{fig/grid_#2_#3_w15_p03.pdf} &
\includegraphics[width=0.12\textwidth]{fig/grid_#2_#3_w19_p03.pdf} &
\includegraphics[width=0.12\textwidth]{fig/grid_#2_#3_w23_p03.pdf} \\
\includegraphics[width=0.12\textwidth]{fig/grid_#2_#3_w3_p05.pdf} &
\includegraphics[width=0.12\textwidth]{fig/grid_#2_#3_w7_p05.pdf} &
\includegraphics[width=0.12\textwidth]{fig/grid_#2_#3_w11_p05.pdf} &
\includegraphics[width=0.12\textwidth]{fig/grid_#2_#3_w15_p05.pdf} &
\includegraphics[width=0.12\textwidth]{fig/grid_#2_#3_w19_p05.pdf} &
\includegraphics[width=0.12\textwidth]{fig/grid_#2_#3_w23_p05.pdf}
\end{tabular} \\[3pt]
}

\newcommand{\sdqualgrid}[1]{%
\begingroup
\setlength{\tabcolsep}{1pt}
\begin{tabular}{cc}
& \begin{tabular}{@{}*{6}{c}@{}}
\makebox[0.12\textwidth][c]{$\bar{\omega}=3$} &
\makebox[0.12\textwidth][c]{$\bar{\omega}=7$} &
\makebox[0.12\textwidth][c]{$\bar{\omega}=11$} &
\makebox[0.12\textwidth][c]{$\bar{\omega}=15$} &
\makebox[0.12\textwidth][c]{$\bar{\omega}=19$} &
\makebox[0.12\textwidth][c]{$\bar{\omega}=23$}
\end{tabular} \\[2pt]
\sdqualrow{Constant CFG}{#1}{constant}
\sdqualrow{Interval CFG}{#1}{step_threshold}
\sdqualrow{$\beta$--CFG}{#1}{time_parabolic}
\sdqualrow{DG-CFG}{#1}{one_minus_alpha_sqrt_alpha_divide_beta}
\end{tabular}
\endgroup
}

\begin{figure}[p]
\centering
\small
\sdqualgrid{sd21}
\caption{Qualitative comparison on Stable Diffusion~2.1 (DDIM, 8 NFE). Columns vary the nominal guidance strength $\bar{\omega}\in\{3,7,11,15,19,23\}$; rows are grouped by method, with two prompts per method. The prompts and random seeds are identical to Figure~\ref{fig:sd15-grid}: the upper prompt is \textit{a large green truck on a city street}, and the lower prompt is \textit{a table topped with trays full of hot dogs}.}
\label{fig:sd21-grid}
\end{figure}

\begin{figure}[p]
\centering
\small
\sdqualgrid{sdxl}
\caption{Qualitative comparison on Stable Diffusion~XL (DDIM, 8 NFE). Columns vary the nominal guidance strength $\bar{\omega}\in\{3,7,11,15,19,23\}$; rows are grouped by method, with two prompts per method. The prompts and random seeds are identical to Figure~\ref{fig:sd15-grid}: the upper prompt is \textit{a large green truck on a city street}, and the lower prompt is \textit{a table topped with trays full of hot dogs}.}
\label{fig:sdxl-grid}
\end{figure}

\subsubsection{Ablation Study}
\label{sec:ablation}

To isolate the contribution of each design component, we compare four progressively refined schedules at $8$ NFE. We perform complete ablations on SD1.5 and SD2.1 over $\bar{\omega}\in\{3,7,11,15,19,23\}$, allowing us to assess whether the component-wise effects remain consistent across model generations. Each variant is normalized independently so that its integrated guidance deviation matches that of constant CFG.

\begin{itemize}[leftmargin=*]
    \item \textbf{Constant CFG:} $\omega(t) \equiv \bar{\omega}$ (baseline).
    \item \textbf{Medium 1:}
    $\omega(t) = 1 + C_1 \cdot (\bar{\omega}-1) / \beta(t)$ (time-balance only).
    \item \textbf{Medium 2:}
    $\omega(t) = 1 + C_2 \cdot (\bar{\omega}-1) \cdot \sqrt{\bar{\alpha}_t} / \beta(t)$
    ($+\,$signal-content weighting).
    \item \textbf{DG-CFG (ours):}
    $\omega(t) = 1 + C_3 \cdot (\bar{\omega}-1) \cdot (1-\bar{\alpha}_t)\sqrt{\bar{\alpha}_t} / \beta(t)$ (all three factors).
\end{itemize}

Tables~\ref{tab:ablation} and~\ref{tab:ablation-sd21} report the complete SD1.5 and SD2.1 results, respectively. We analyze the components in the same order as the schedule construction in Section~\ref{sec:schedule-design}, focusing on how each factor changes diversity, fidelity, perceptual quality, and saturation as guidance becomes stronger.

\emph{Step 1: time-balance (Medium 1 vs.\ constant).}
Normalizing by $\beta(t)$ improves the behavior of the sampler across the evaluated guidance ranges on both backbones. Medium 1 generally preserves more sample variation than constant CFG in the quality-comparable regime, while also reducing the saturation drift that becomes increasingly visible under stronger guidance. This supports the diagnosis in Section~\ref{sec:schedule-design}: without time balancing, the high-noise part of the trajectory can dominate the path-integral correction, leading to repeated coarse layouts and color over-saturation. Reweighting by the noise rate weakens this dominance, yielding more valid variation and better control of saturation-driven degradation.

\emph{Step 2: Signal-content weighting (Medium 2 vs.\ Medium 1).}
Adding $\sqrt{\bar{\alpha}_t}$ further shifts guidance toward timesteps with stronger signal content. This increases raw diversity scores and suppresses saturation more aggressively, but the improvement is not quality preserving. Across both backbones, Medium 2 tends to lose prompt alignment, perceptual quality, and distributional fidelity relative to Medium 1. This is consistent with the analysis in Section~\ref{sec:schedule-design}: emphasizing low-noise timesteps exposes the sampler to score-approximation errors amplified by $1/\sigma_t^2$. Signal-content weighting therefore changes the sample spread, but it is incomplete without a mechanism that protects image quality.

\emph{Step 3: Score-error mitigation (DG-CFG vs.\ Medium 2).}
Adding the $(1-\bar{\alpha}_t)$ factor mitigates the low-noise score-error amplification introduced by the previous step. DG-CFG restores image quality relative to Medium 2 while retaining much of the saturation control gained from the earlier reweighting. It also remains stable as guidance becomes stronger, providing greater meaningful variation in the high-guidance regime while preserving perceptual quality and distributional fidelity.

The qualitative results for SD1.5 and SD2.1 in Figures~\ref{fig:ablation} and~\ref{fig:ablation-sd21}, respectively, reinforce these findings. Constant CFG produces pronounced color over-saturation and limited fine-grained detail. Medium 1 improves color balance, fine-detail quality, and sample diversity. Medium 2 further amplifies these gains, but at the cost of unnatural textures and structural artifacts. DG-CFG achieves the most favorable balance, preserving meaningful variation, natural colors, and fine detail while producing more realistic textures and coherent structures.

\begin{table}[H]
\centering
\caption{Component-wise ablation of the DG-CFG schedule on Stable Diffusion~1.5 at 8 NFE. Medium 1 adds time balancing, Medium 2 adds signal-content weighting, and DG-CFG adds score-error mitigation. Metrics are computed over 1K MS-COCO validation prompts, except Div (Vendi score), which is averaged over 100 prompts with 16 samples per prompt. The real-image reference values are CLIP=30.57, IR=0.38, and Sat=0.33.}
\label{tab:ablation}
\small
\begin{tabular}{llcccccc}
\toprule
Metric & Method & $\bar{\omega}=3$ & $\bar{\omega}=7$ & $\bar{\omega}=11$ & $\bar{\omega}=15$ & $\bar{\omega}=19$ & $\bar{\omega}=23$ \\
\midrule
\multirow{4}{*}{CLIP$\uparrow$}
& Constant CFG      & \textbf{30.15} & \textbf{31.29} & \textbf{31.38} & 31.29 & 31.01 & 30.64 \\
& Medium 1          & 29.53 & 30.88 & 31.25 & 31.22 & 31.25 & 31.19 \\
& Medium 2          & 28.94 & 30.29 & 30.47 & 30.51 & 30.31 & 30.23 \\
& \textbf{DG-CFG}   & 29.95 & 31.08 & 31.27 & \textbf{31.32} & \textbf{31.29} & \textbf{31.39} \\
\midrule
\multirow{4}{*}{IR$\uparrow$}
& Constant CFG      & \textbf{$-$0.503} & \textbf{$-$0.104} & $-$0.032 & $-$0.061 & $-$0.177 & $-$0.330 \\
& Medium 1          & $-$0.655 & $-$0.203 & $-$0.051 & $-$0.021 & +0.001 & $-$0.000 \\
& Medium 2          & $-$0.823 & $-$0.377 & $-$0.263 & $-$0.232 & $-$0.257 & $-$0.283 \\
& \textbf{DG-CFG}   & $-$0.522 & $-$0.115 & \textbf{$-$0.026} & \textbf{+0.013} & \textbf{+0.024} & \textbf{+0.029} \\
\midrule
\multirow{4}{*}{Sat}
& Constant CFG      & 0.212 & 0.258 & 0.317 & 0.379 & 0.430 & 0.469 \\
& Medium 1          & 0.202 & 0.219 & 0.239 & 0.260 & 0.282 & 0.305 \\
& Medium 2          & 0.199 & 0.209 & 0.220 & 0.234 & 0.247 & 0.259 \\
& \textbf{DG-CFG}   & 0.206 & 0.229 & 0.256 & 0.284 & 0.310 & 0.332 \\
\midrule
\multirow{4}{*}{FID$\downarrow$}
& Constant CFG      & \textbf{74.44} & \textbf{67.67} & 68.73 & 71.57 & 74.94 & 82.34 \\
& Medium 1          & 82.34 & 69.39 & \textbf{68.24} & \textbf{69.19} & \textbf{71.15} & \textbf{73.24} \\
& Medium 2          & 90.77 & 75.87 & 75.56 & 77.64 & 82.97 & 87.65 \\
& \textbf{DG-CFG}   & 77.26 & 67.87 & 68.79 & 70.62 & 72.54 & 74.53 \\
\midrule
\multirow{4}{*}{Div$\uparrow$}
& Constant CFG      & 6.54 & 5.40 & 5.18 & 5.27 & 5.43 & 5.67 \\
& Medium 1          & 7.08 & 5.86 & 5.47 & 5.33 & 5.25 & 5.26 \\
& Medium 2          & \textbf{7.54} & \textbf{6.53} & \textbf{6.10} & \textbf{5.98} & \textbf{5.82} & \textbf{5.80} \\
& \textbf{DG-CFG}   & 6.81 & 5.71 & 5.34 & 5.22 & 5.17 & 5.24 \\
\bottomrule
\end{tabular}
\end{table}

\begin{table}[H]
\centering
\caption{Component-wise ablation of the DG-CFG schedule on Stable Diffusion~2.1 at 8 NFE. Medium 1 adds time balancing, Medium 2 adds signal-content weighting, and DG-CFG adds score-error mitigation. Metrics are computed over 1K MS-COCO validation prompts, except Div (Vendi score), which is averaged over 100 prompts with 16 samples per prompt. The real-image reference values are CLIP=30.57, IR=0.38, and Sat=0.33.}
\label{tab:ablation-sd21}
\small
\begin{tabular}{llcccccc}
\toprule
Metric & Method & $\bar{\omega}=3$ & $\bar{\omega}=7$ & $\bar{\omega}=11$ & $\bar{\omega}=15$ & $\bar{\omega}=19$ & $\bar{\omega}=23$ \\
\midrule
\multirow{4}{*}{CLIP$\uparrow$}
& Constant CFG      & \textbf{30.54} & \textbf{31.48} & \textbf{31.48} & \textbf{31.48} & 31.25 & 30.98 \\
& Medium 1          & 30.03 & 31.14 & 31.33 & 31.35 & 31.34 & 31.28 \\
& Medium 2          & 29.47 & 30.51 & 30.63 & 30.60 & 30.43 & 30.25 \\
& \textbf{DG-CFG}   & 30.36 & 31.23 & 31.36 & 31.38 & \textbf{31.49} & \textbf{31.43} \\
\midrule
\multirow{4}{*}{IR$\uparrow$}
& Constant CFG      & \textbf{$-$0.264} & \textbf{+0.130} & +0.203 & +0.176 & +0.078 & $-$0.092 \\
& Medium 1          & $-$0.457 & +0.022 & +0.151 & +0.198 & +0.210 & +0.172 \\
& Medium 2          & $-$0.610 & $-$0.187 & $-$0.066 & $-$0.052 & $-$0.108 & $-$0.158 \\
& \textbf{DG-CFG}   & $-$0.322 & +0.091 & \textbf{+0.209} & \textbf{+0.238} & \textbf{+0.245} & \textbf{+0.244} \\
\midrule
\multirow{4}{*}{Sat}
& Constant CFG      & 0.207 & 0.248 & 0.299 & 0.351 & 0.396 & 0.431 \\
& Medium 1          & 0.200 & 0.213 & 0.231 & 0.251 & 0.271 & 0.290 \\
& Medium 2          & 0.198 & 0.205 & 0.214 & 0.226 & 0.238 & 0.250 \\
& \textbf{DG-CFG}   & 0.203 & 0.222 & 0.247 & 0.272 & 0.295 & 0.315 \\
\midrule
\multirow{4}{*}{FID$\downarrow$}
& Constant CFG      & \textbf{73.40} & \textbf{67.45} & 70.10 & 72.63 & 74.79 & 79.94 \\
& Medium 1          & 80.42 & 68.73 & \textbf{68.58} & \textbf{70.21} & \textbf{72.70} & \textbf{74.93} \\
& Medium 2          & 86.85 & 74.51 & 74.71 & 78.63 & 84.63 & 91.22 \\
& \textbf{DG-CFG}   & 75.54 & 68.70 & 70.08 & 71.98 & 74.05 & 76.43 \\
\midrule
\multirow{4}{*}{Div$\uparrow$}
& Constant CFG      & 5.99 & 5.01 & 4.81 & 4.87 & 5.08 & 5.33 \\
& Medium 1          & 6.57 & 5.38 & 5.07 & 4.94 & 4.94 & 4.92 \\
& Medium 2          & \textbf{7.01} & \textbf{6.00} & \textbf{5.67} & \textbf{5.59} & \textbf{5.49} & \textbf{5.46} \\
& \textbf{DG-CFG}   & 6.26 & 5.20 & 4.97 & 4.88 & 4.85 & 4.83 \\
\bottomrule
\end{tabular}
\end{table}

\begin{figure}[p]
\centering
\small
\begin{tikzpicture}[
    subimg/.style={
        inner sep=0pt, outer sep=0pt,
        anchor=center
    },
    gridmat/.style={
        matrix of nodes,
        nodes=subimg,
        inner sep=0pt, outer sep=0pt,
        column sep=1pt,
        row sep=1pt
    },
    outer/.style={
        matrix of nodes,
        nodes={inner sep=0pt,outer sep=0pt},
        inner sep=0pt, outer sep=0pt,
        column sep=5pt,
        row sep=5pt
    }
]

\matrix (tl) [gridmat]
{
    \includegraphics[width=0.10\textwidth]{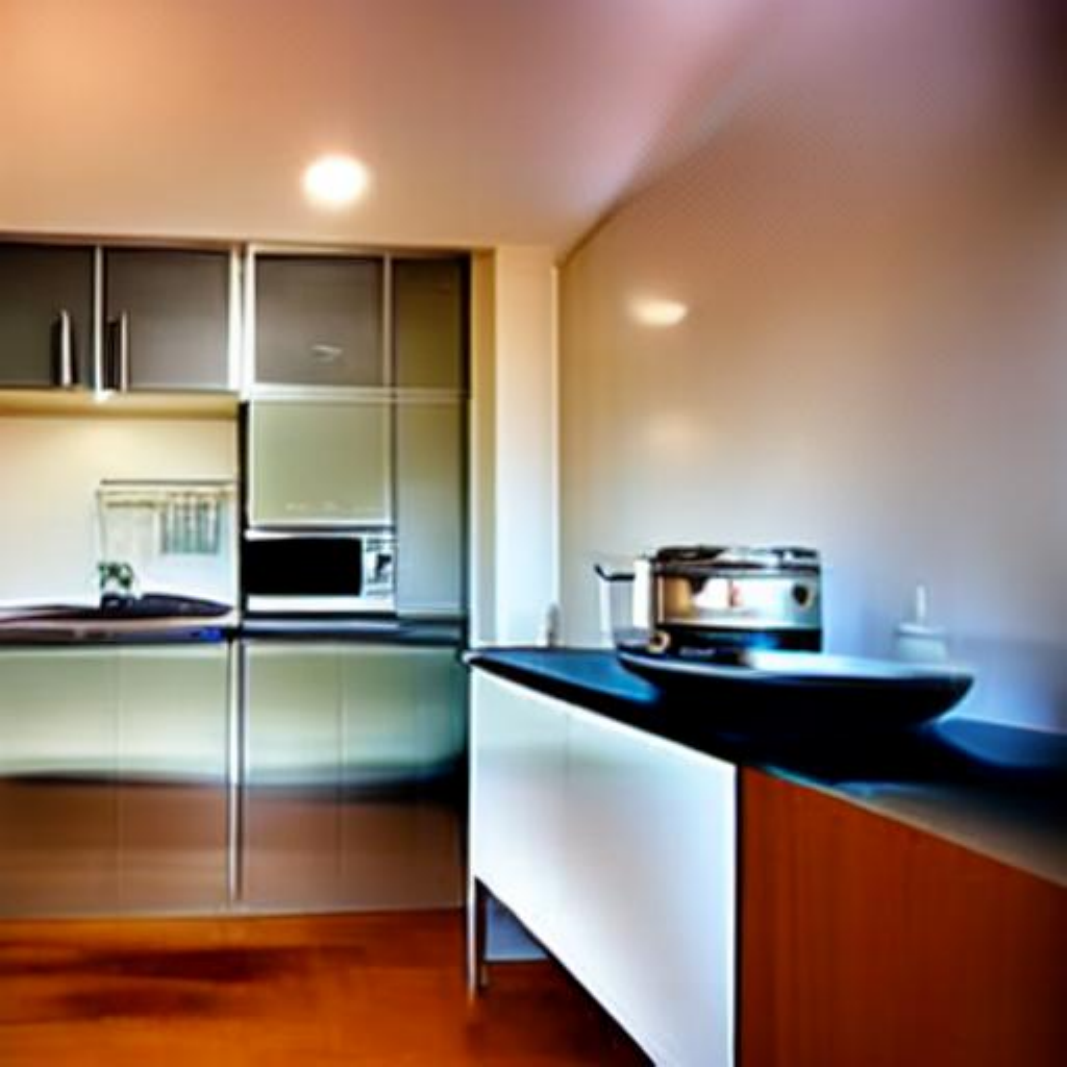} &
    \includegraphics[width=0.10\textwidth]{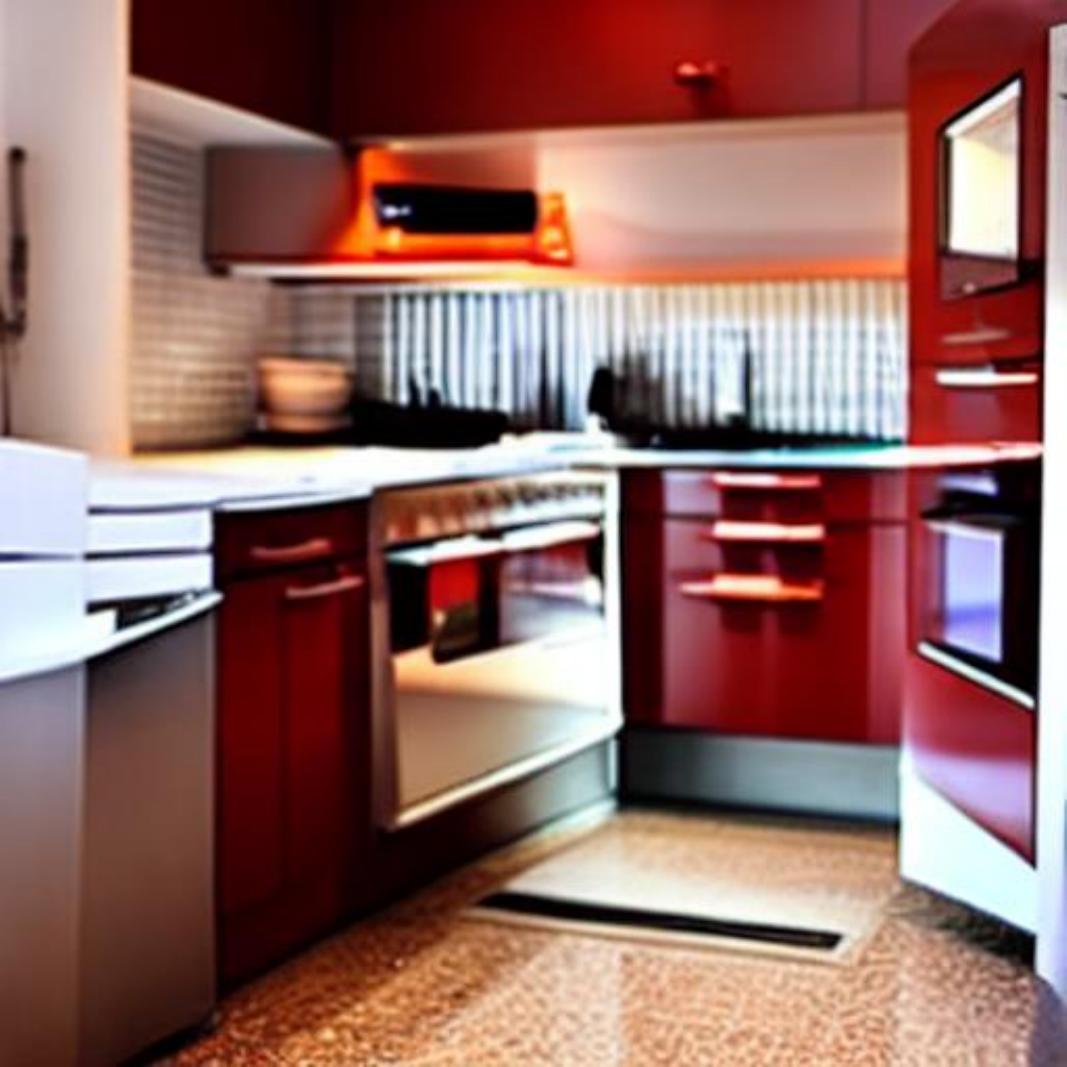} &
    \includegraphics[width=0.10\textwidth]{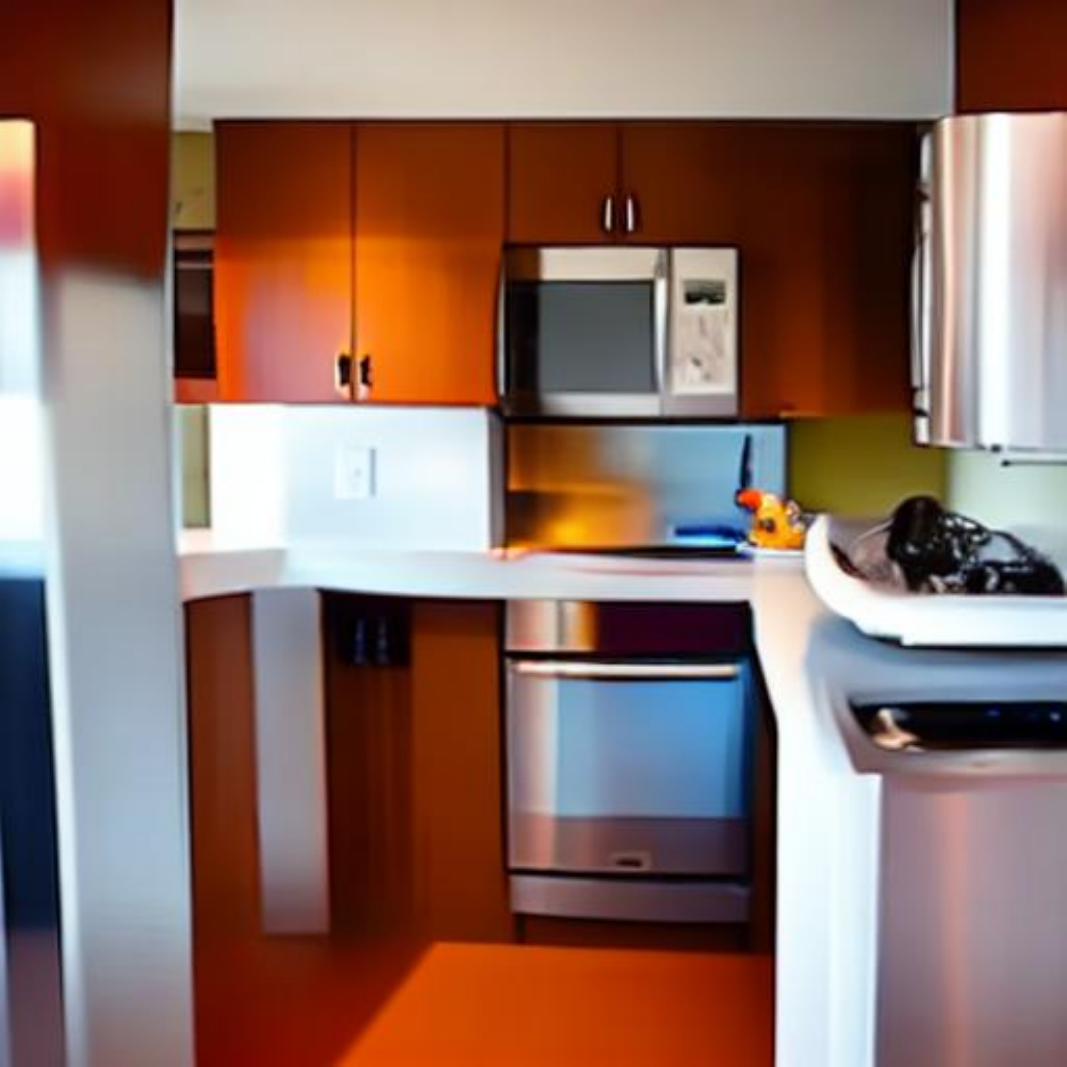} &
    \includegraphics[width=0.10\textwidth]{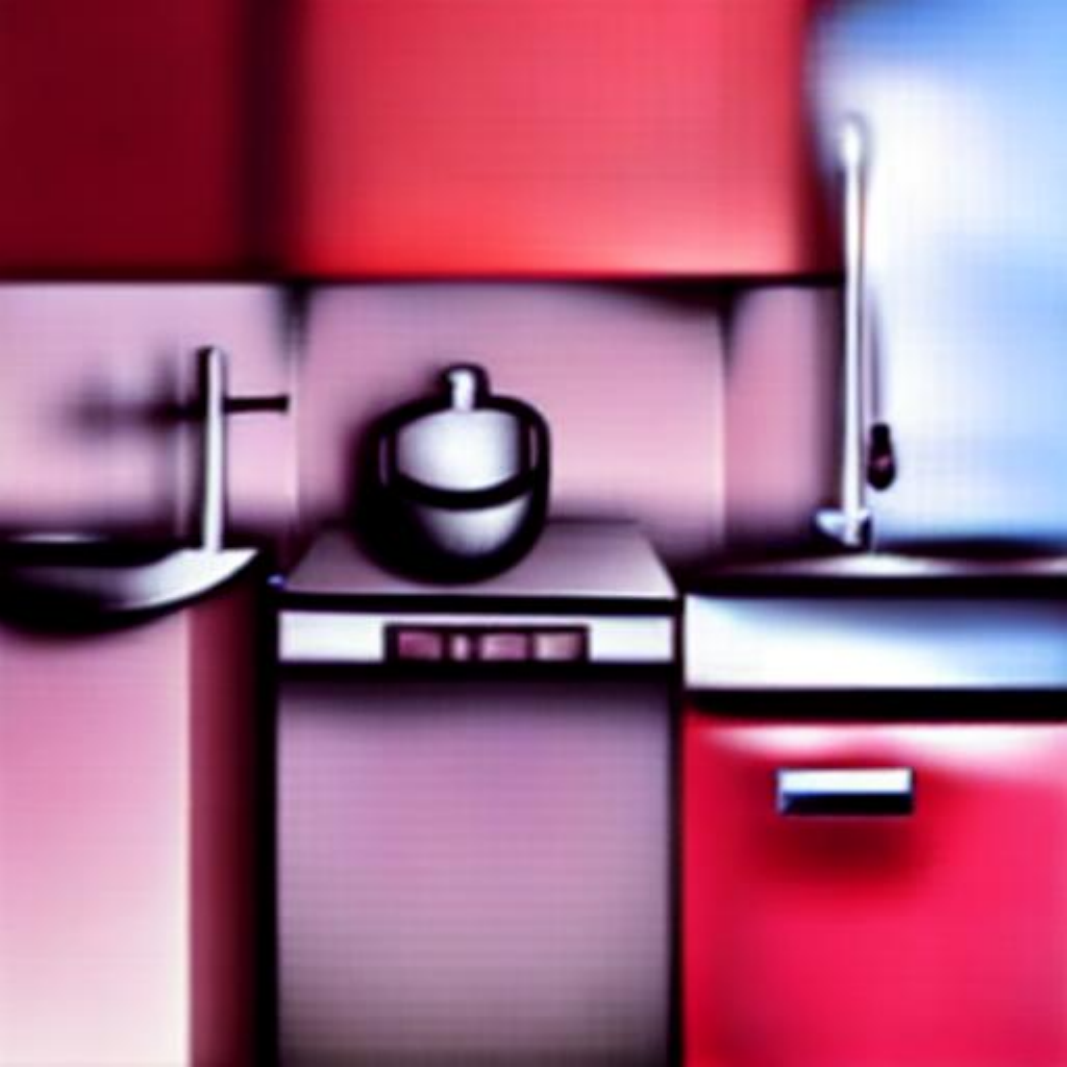} \\
    \includegraphics[width=0.10\textwidth]{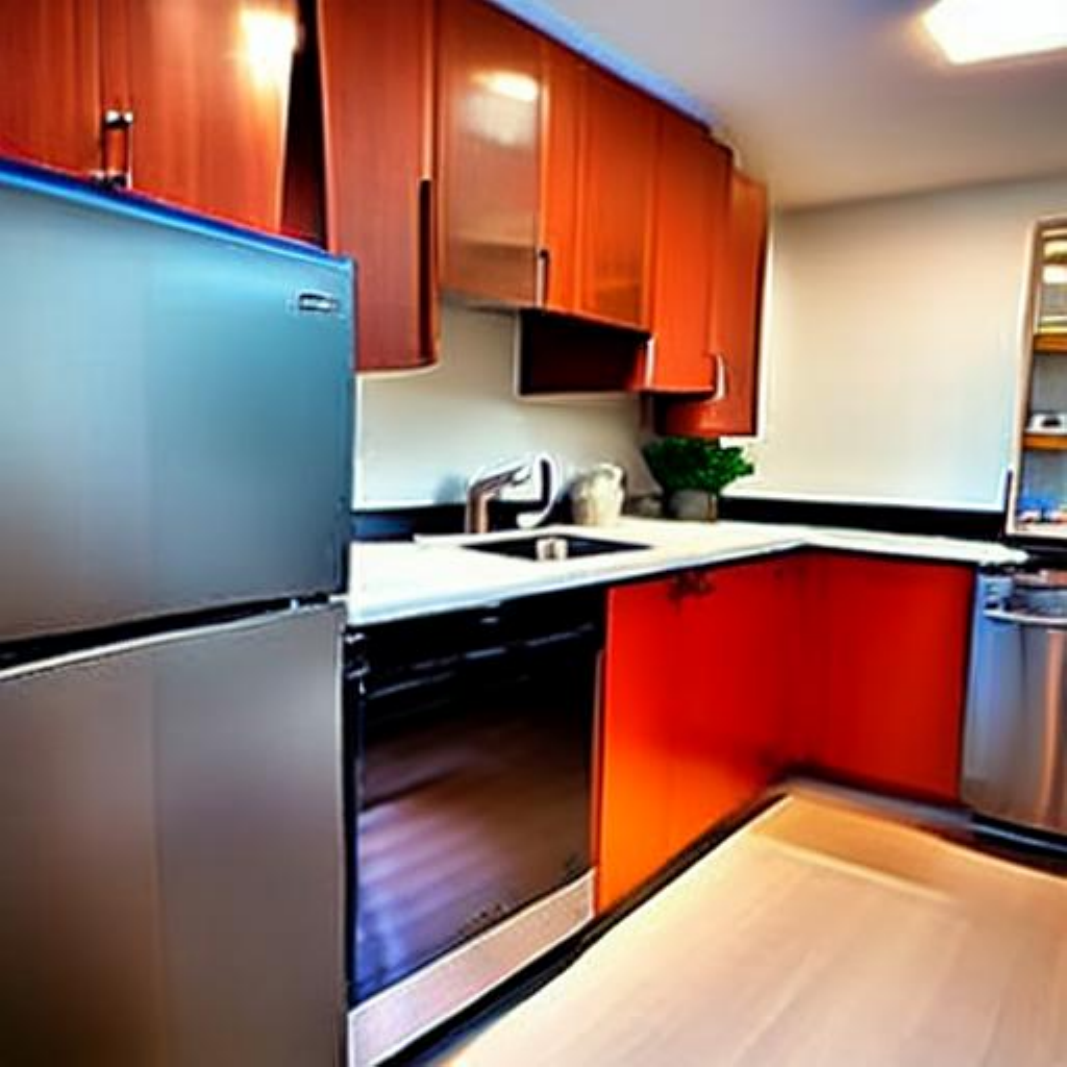} &
    \includegraphics[width=0.10\textwidth]{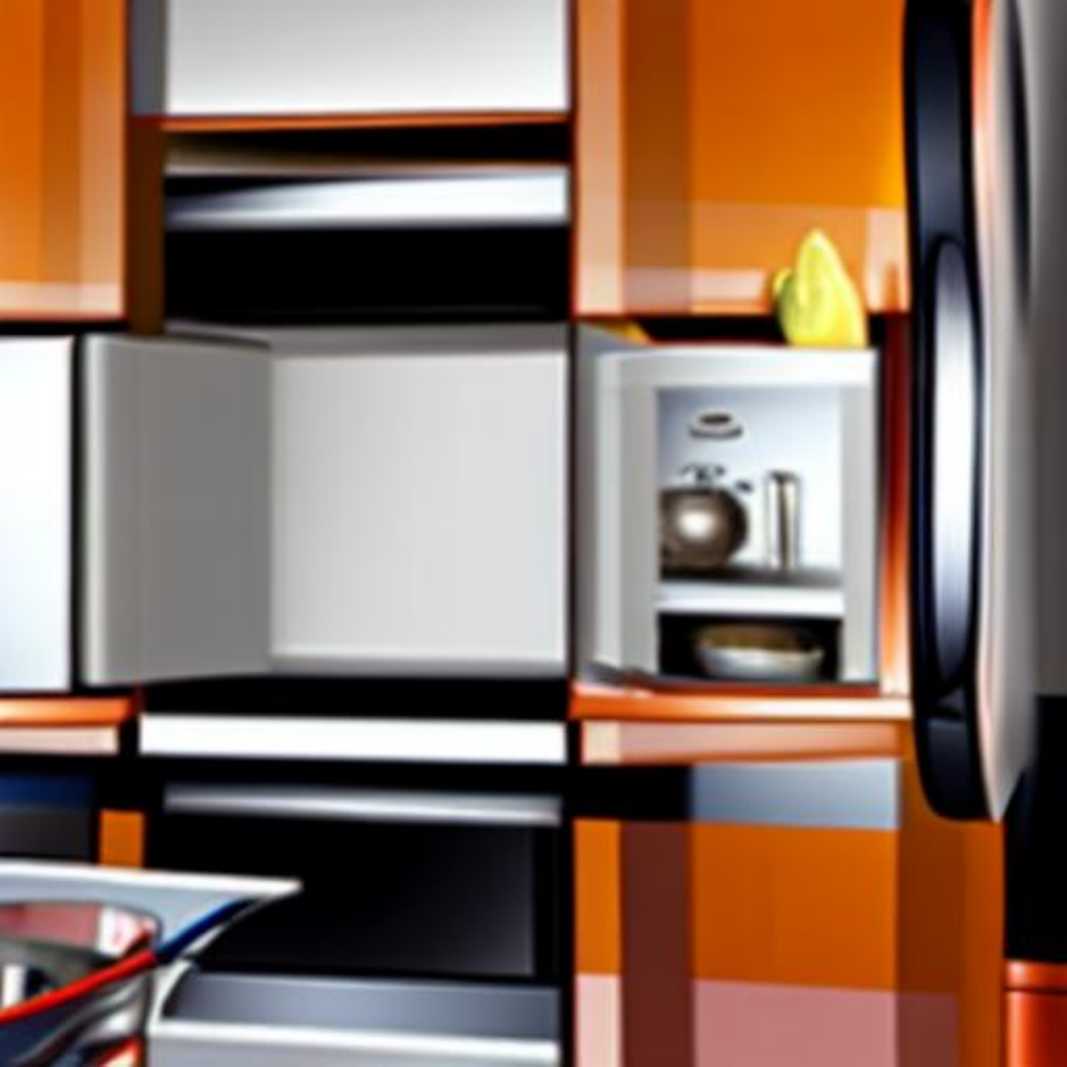} &
    \includegraphics[width=0.10\textwidth]{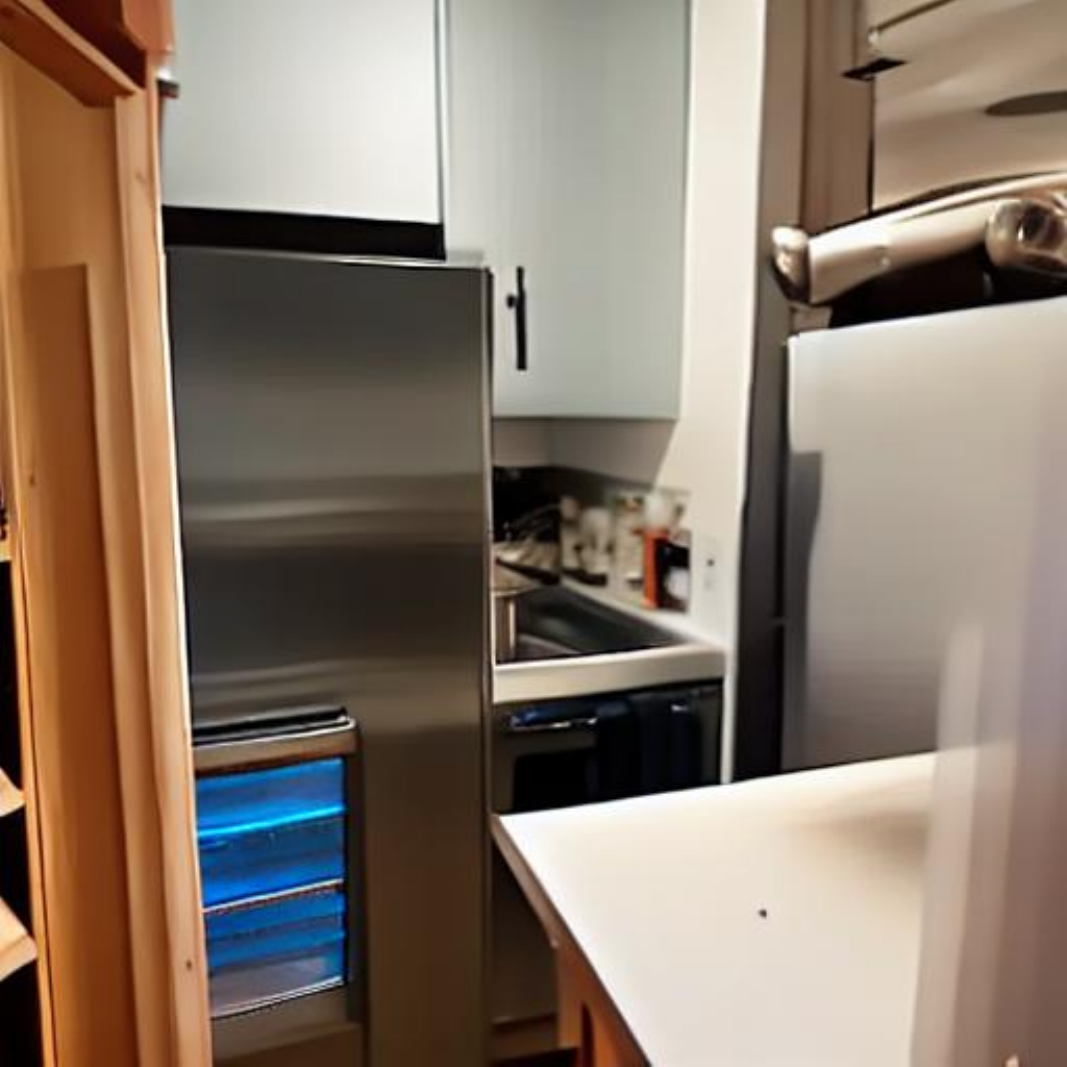} &
    \includegraphics[width=0.10\textwidth]{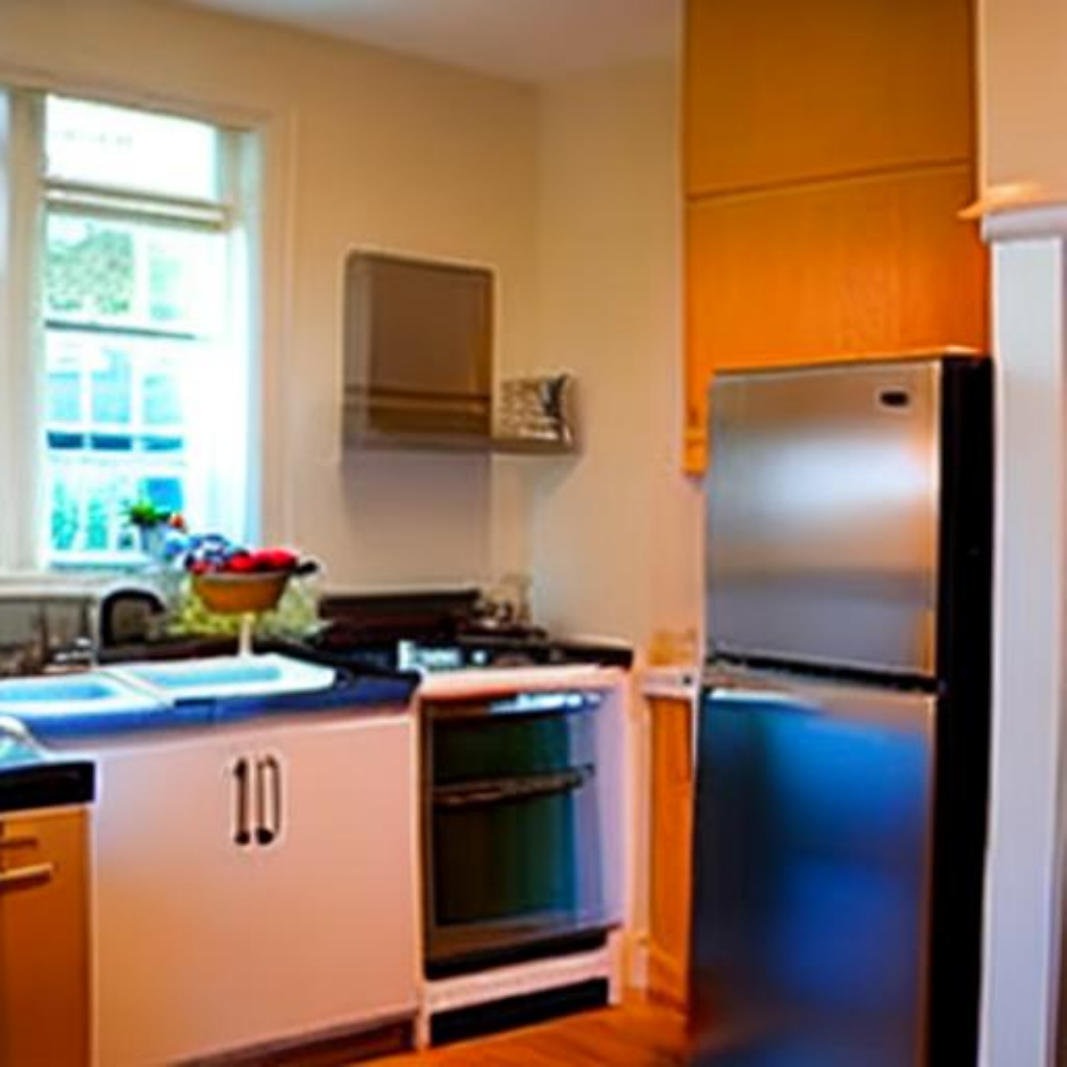} \\
    \includegraphics[width=0.10\textwidth]{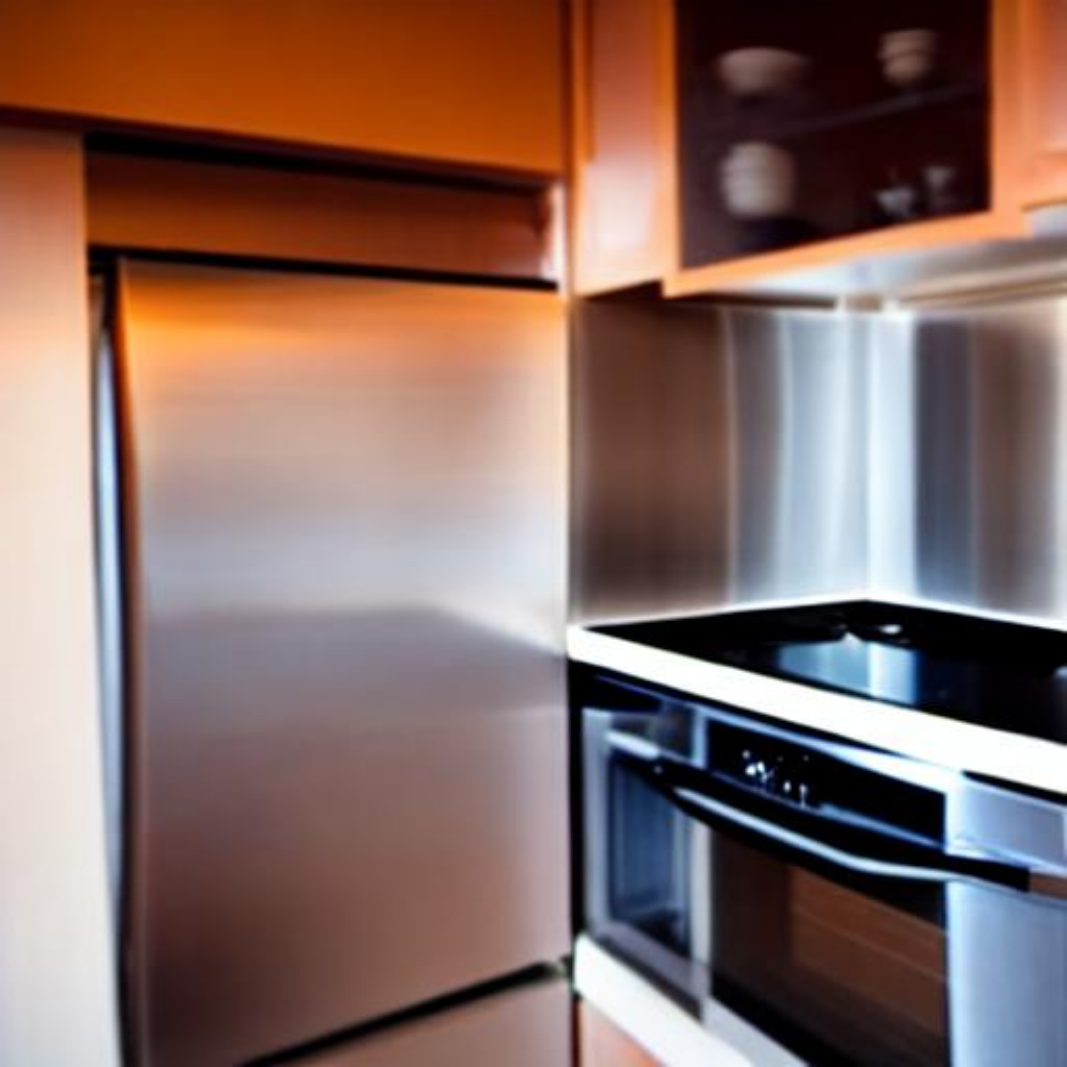} &
    \includegraphics[width=0.10\textwidth]{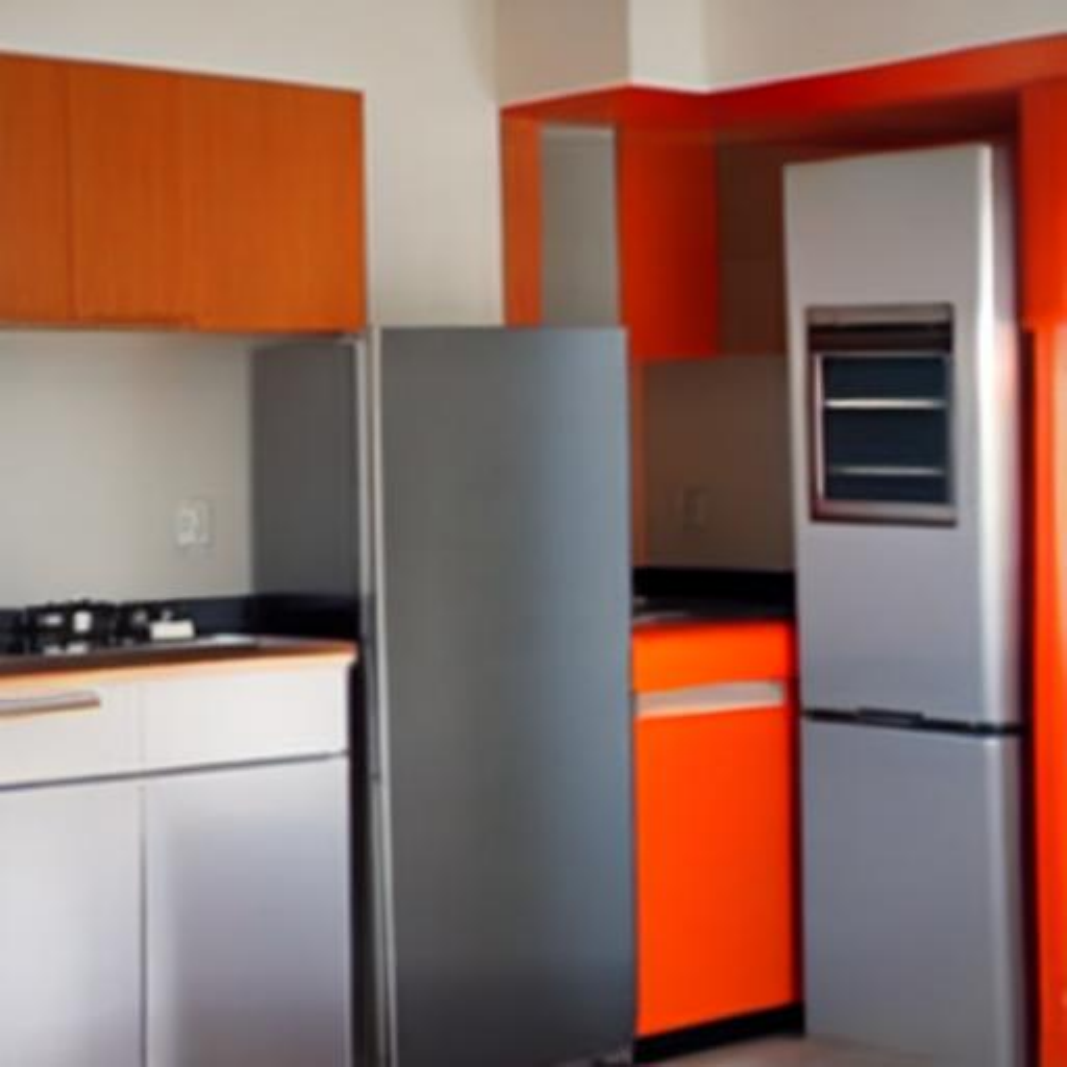} &
    \includegraphics[width=0.10\textwidth]{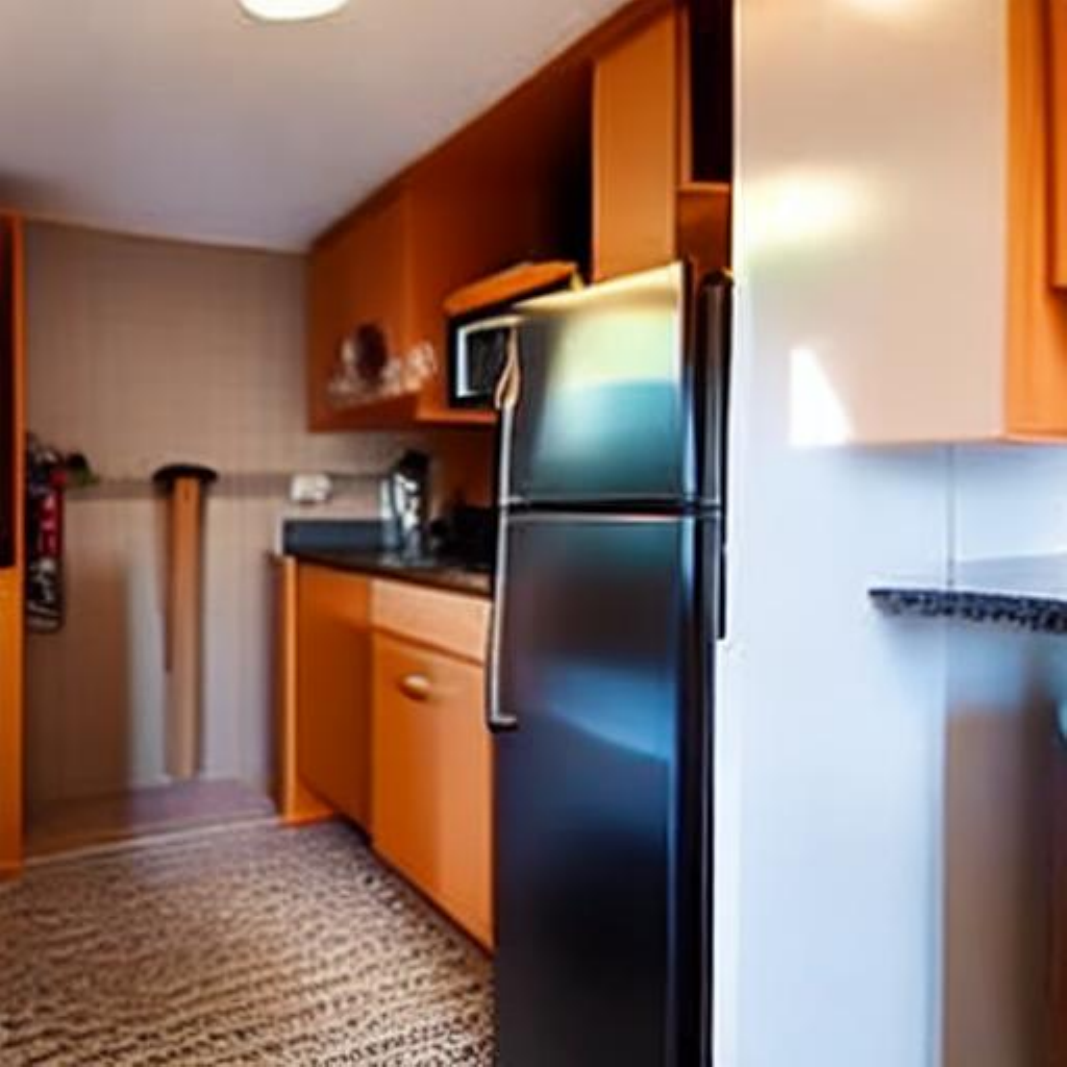} &
    \includegraphics[width=0.10\textwidth]{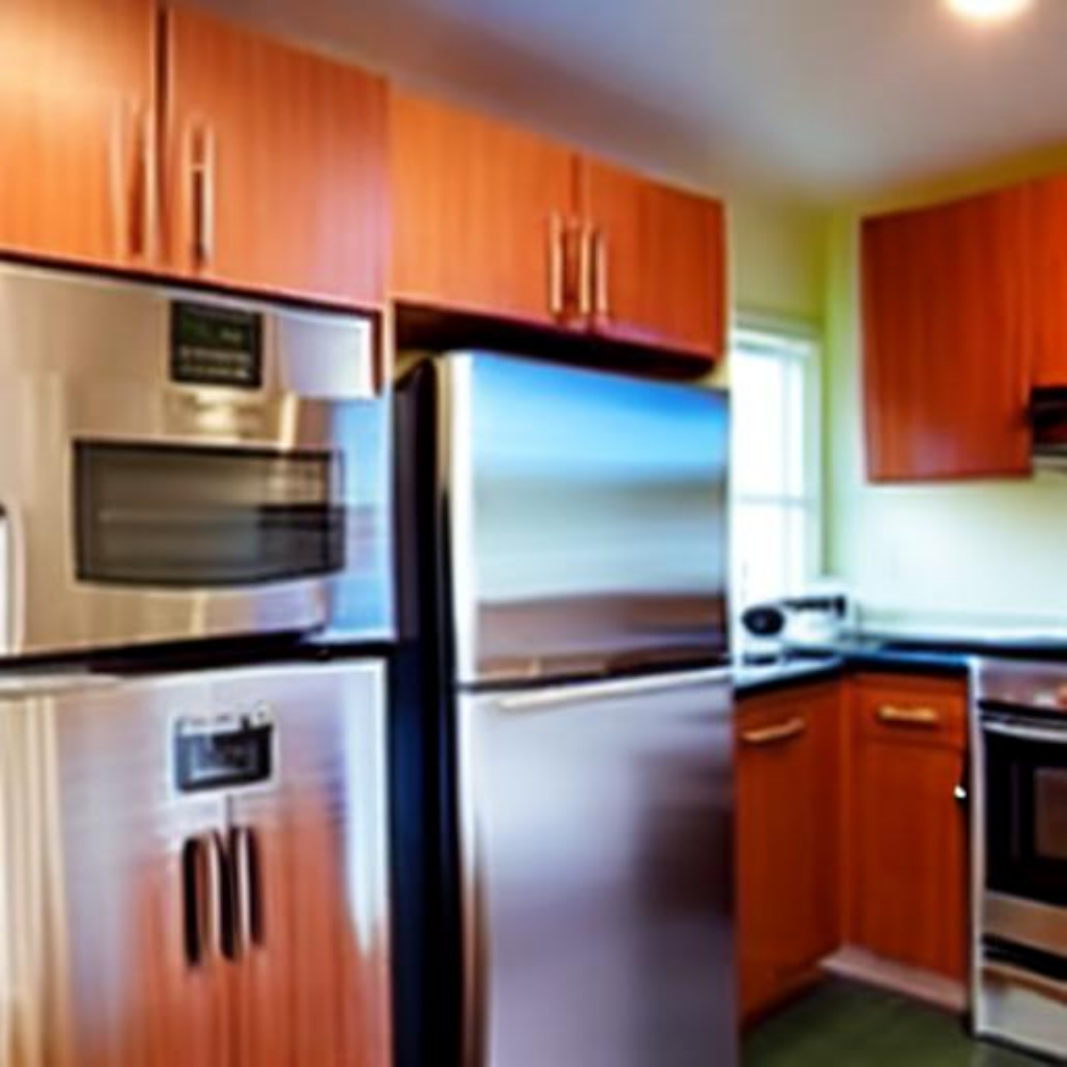} \\
    \includegraphics[width=0.10\textwidth]{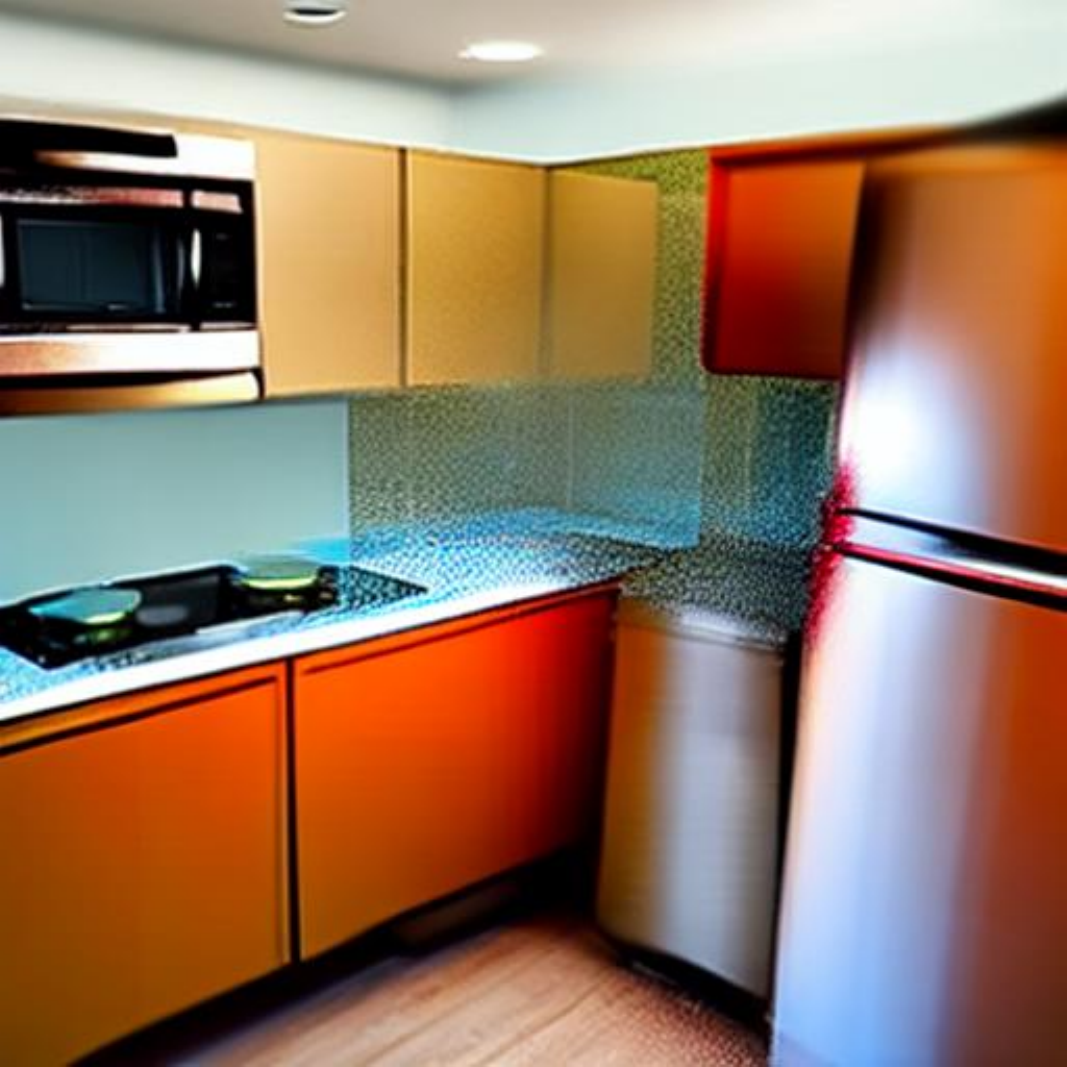} &
    \includegraphics[width=0.10\textwidth]{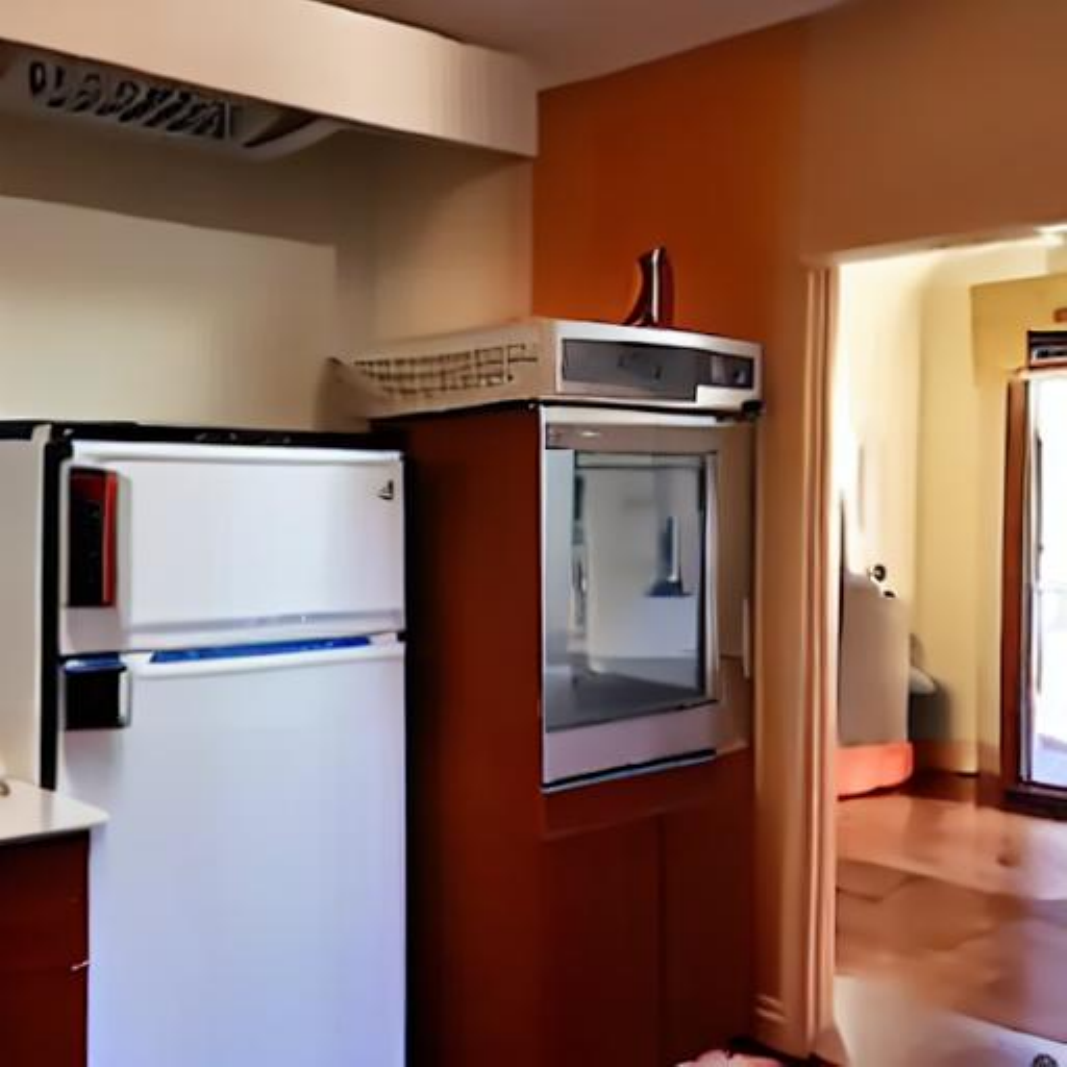} &
    \includegraphics[width=0.10\textwidth]{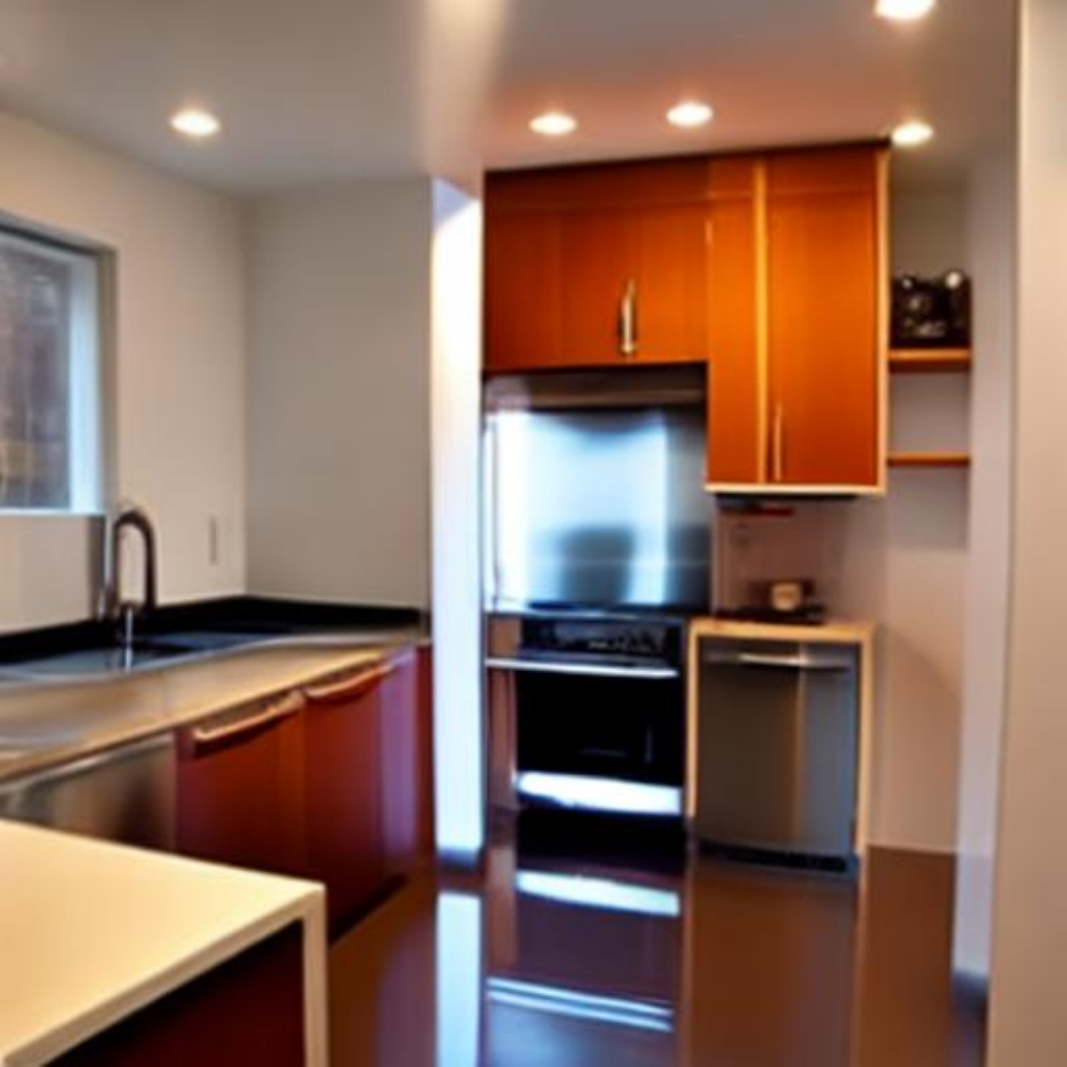} &
    \includegraphics[width=0.10\textwidth]{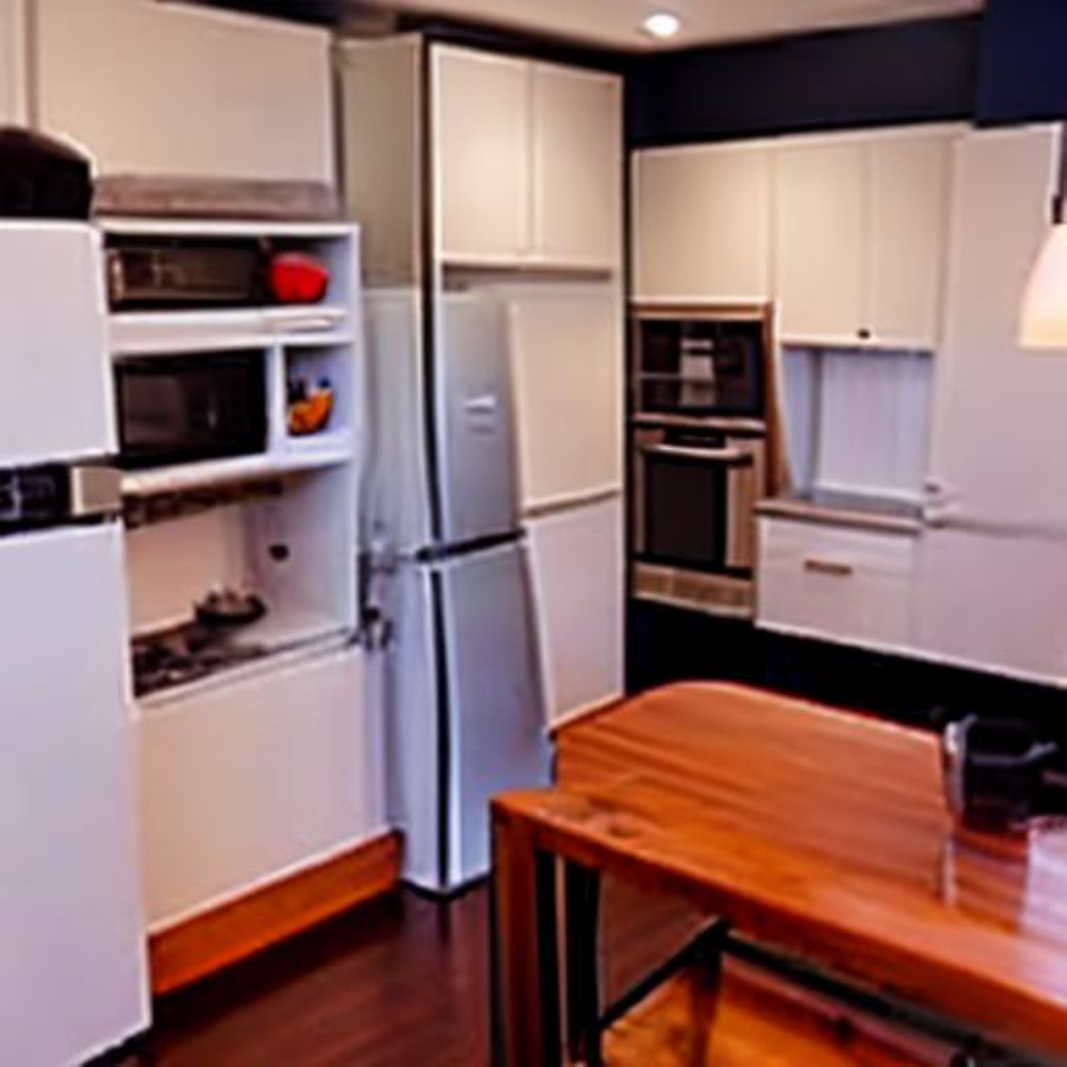} \\
};
\node[below=0pt of tl] {Constant CFG};

\matrix (tr) [gridmat, right=12pt of tl]
{
    \includegraphics[width=0.10\textwidth]{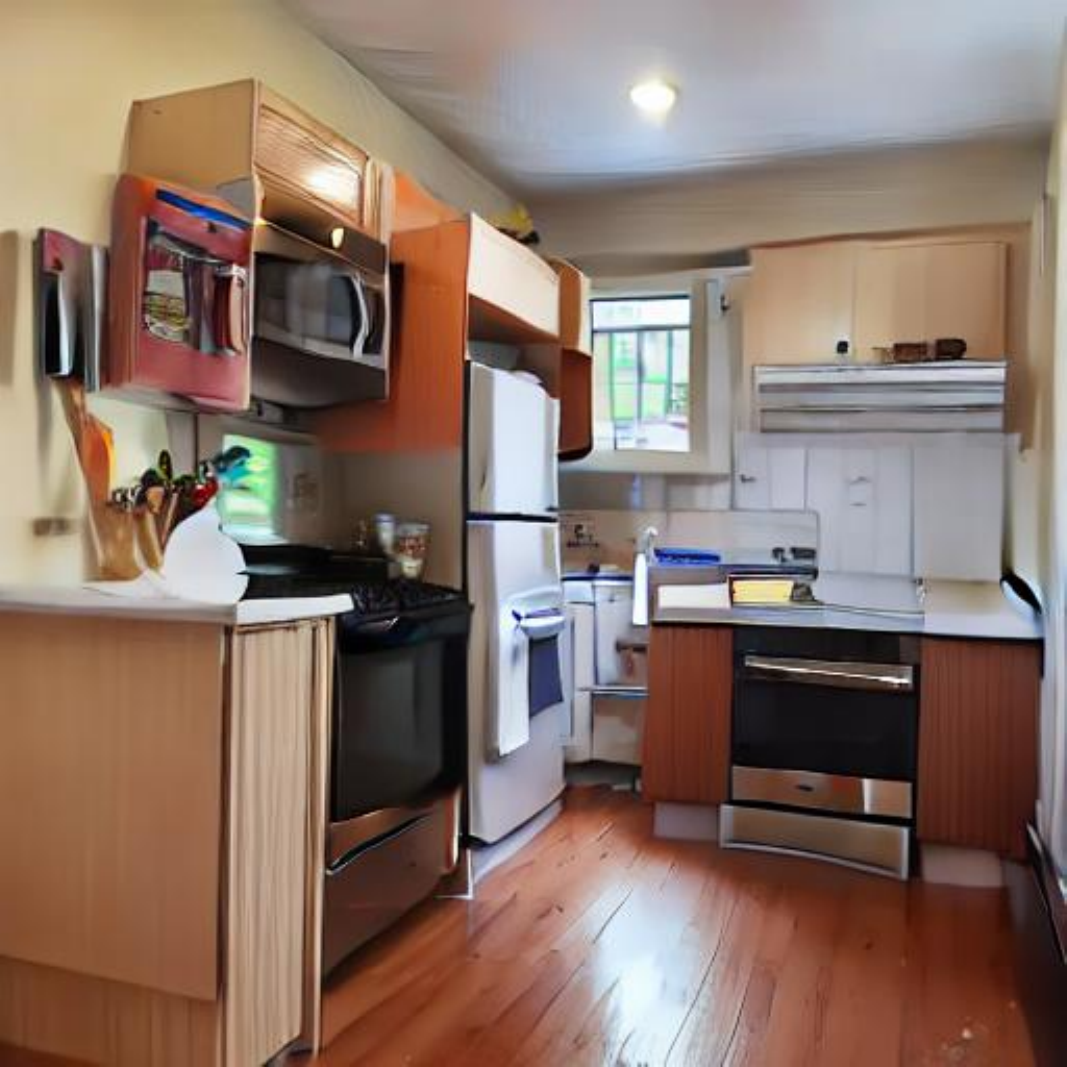} &
    \includegraphics[width=0.10\textwidth]{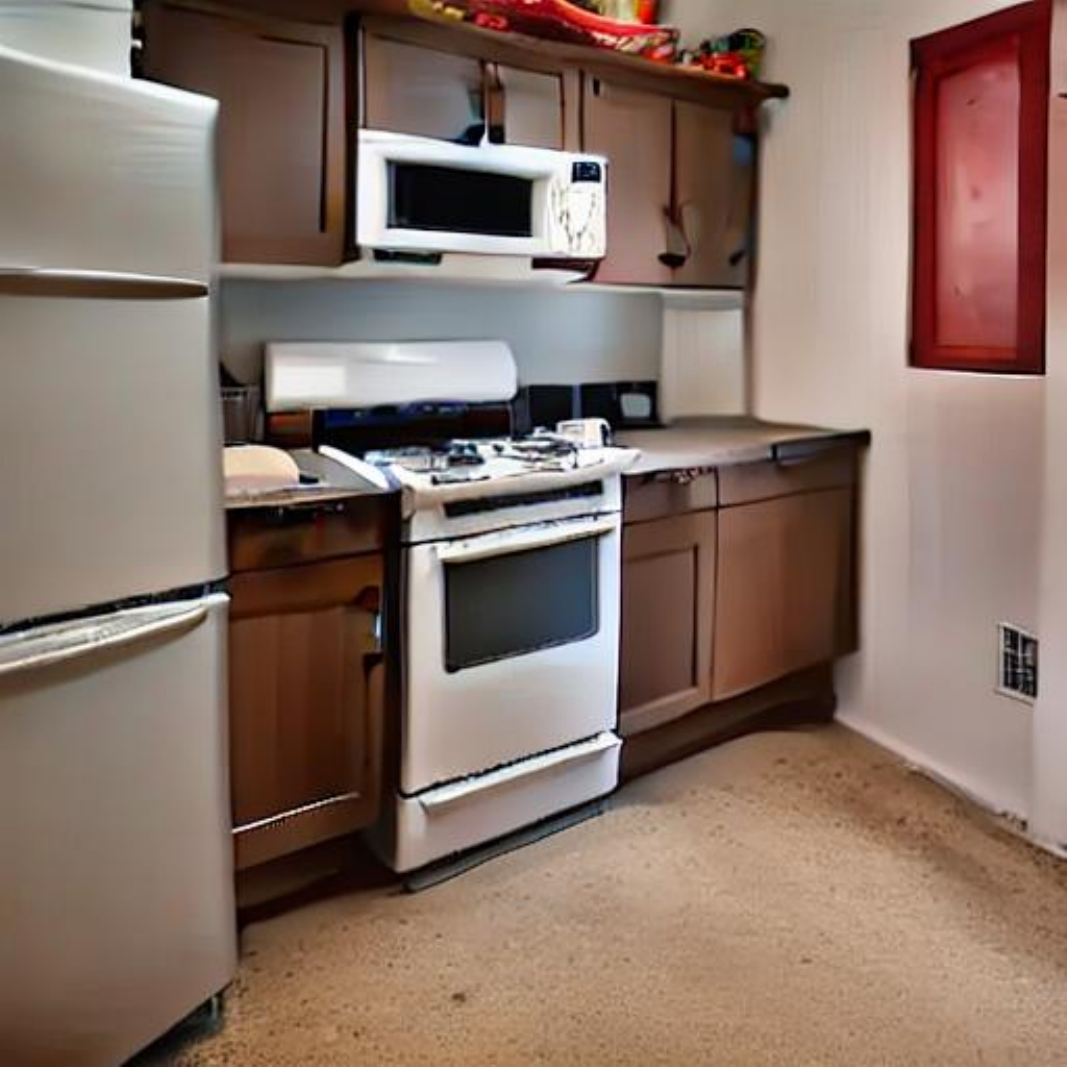} &
    \includegraphics[width=0.10\textwidth]{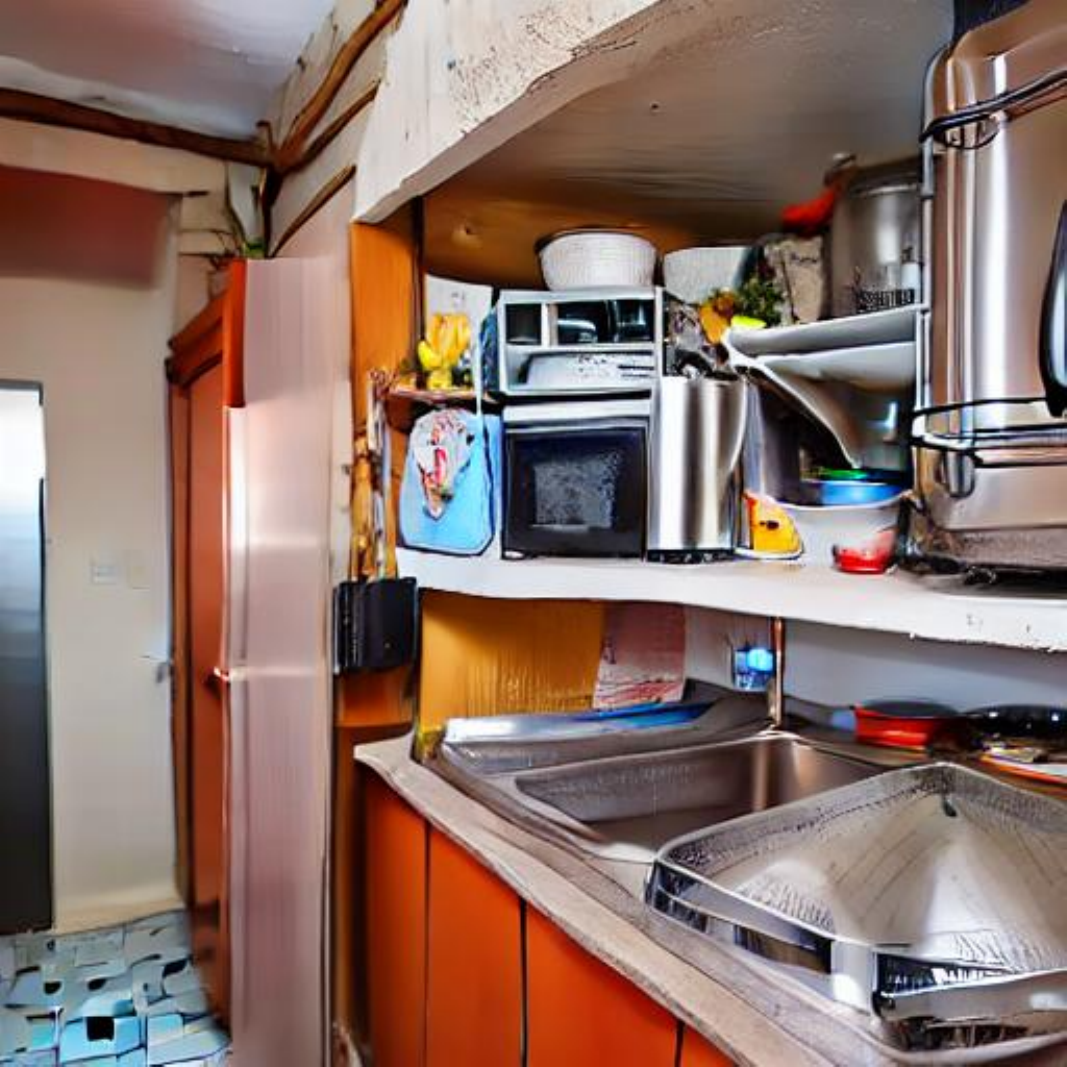} &
    \includegraphics[width=0.10\textwidth]{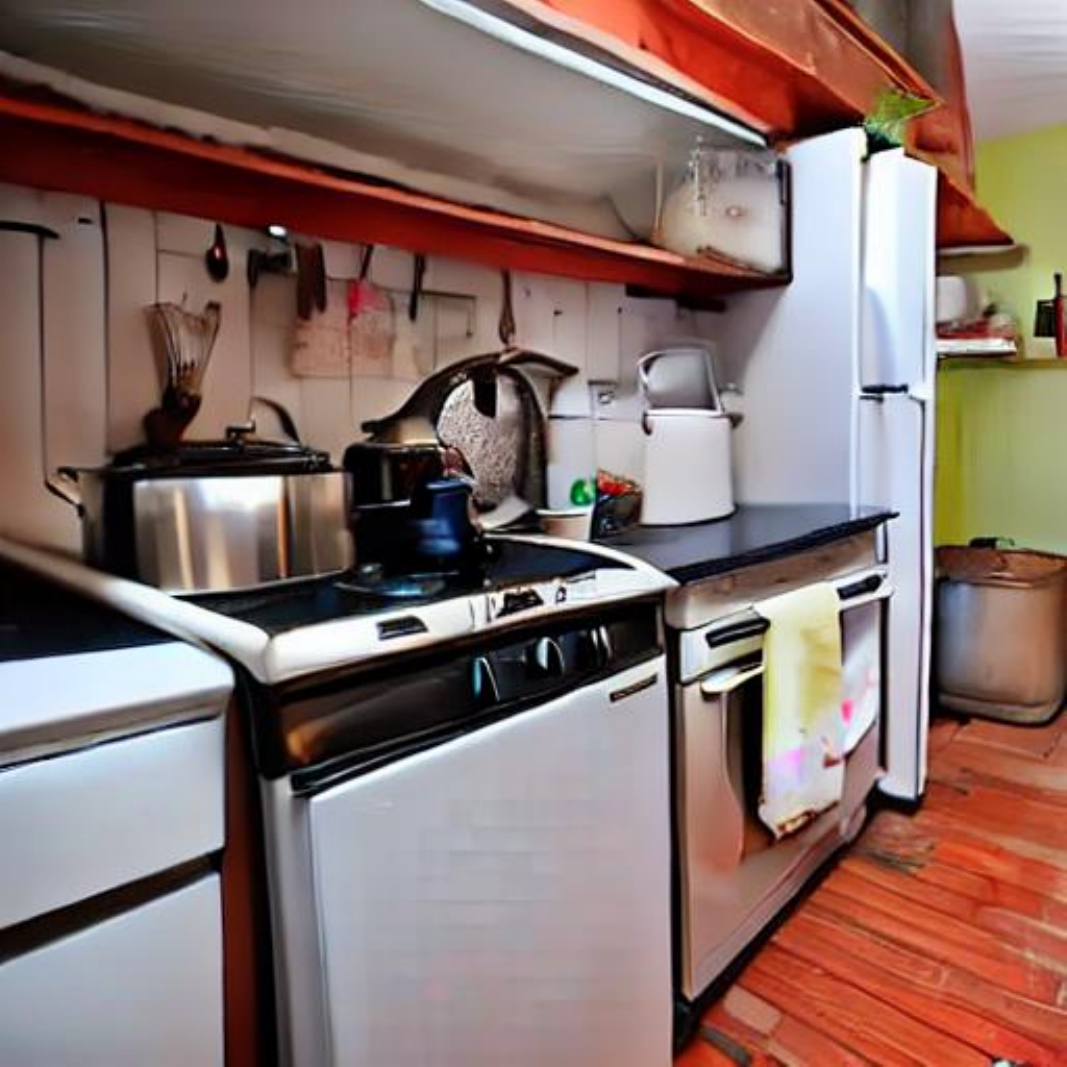} \\
    \includegraphics[width=0.10\textwidth]{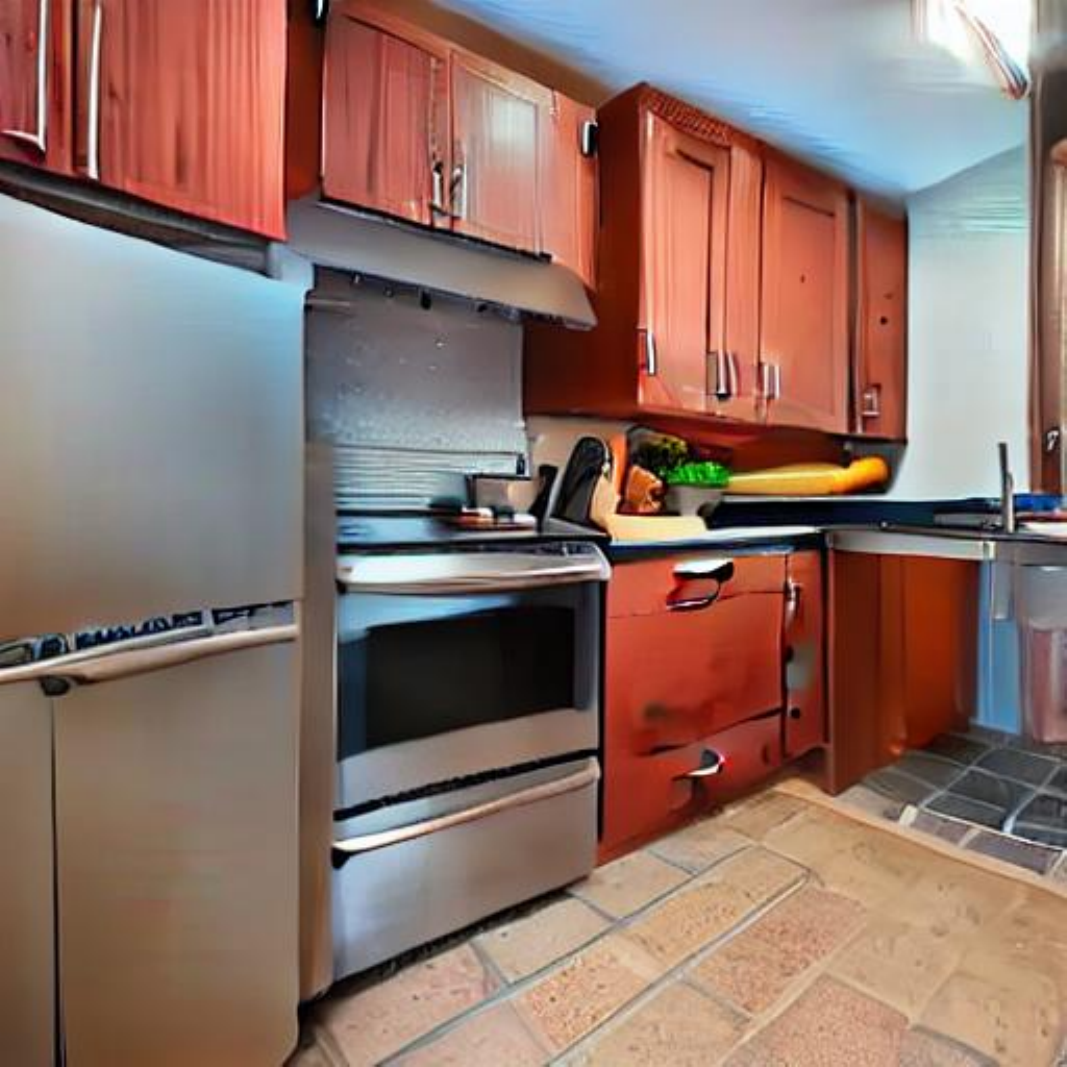} &
    \includegraphics[width=0.10\textwidth]{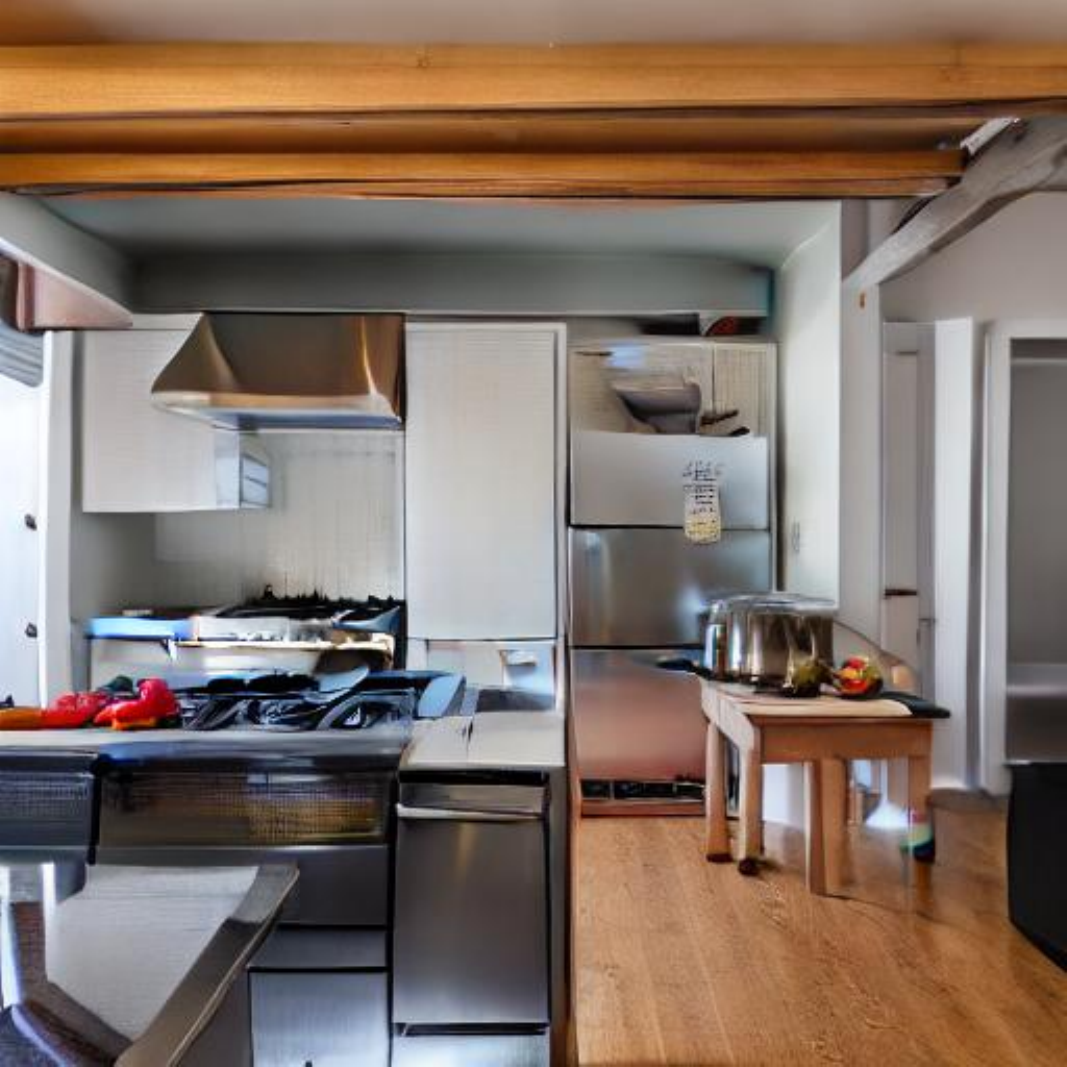} &
    \includegraphics[width=0.10\textwidth]{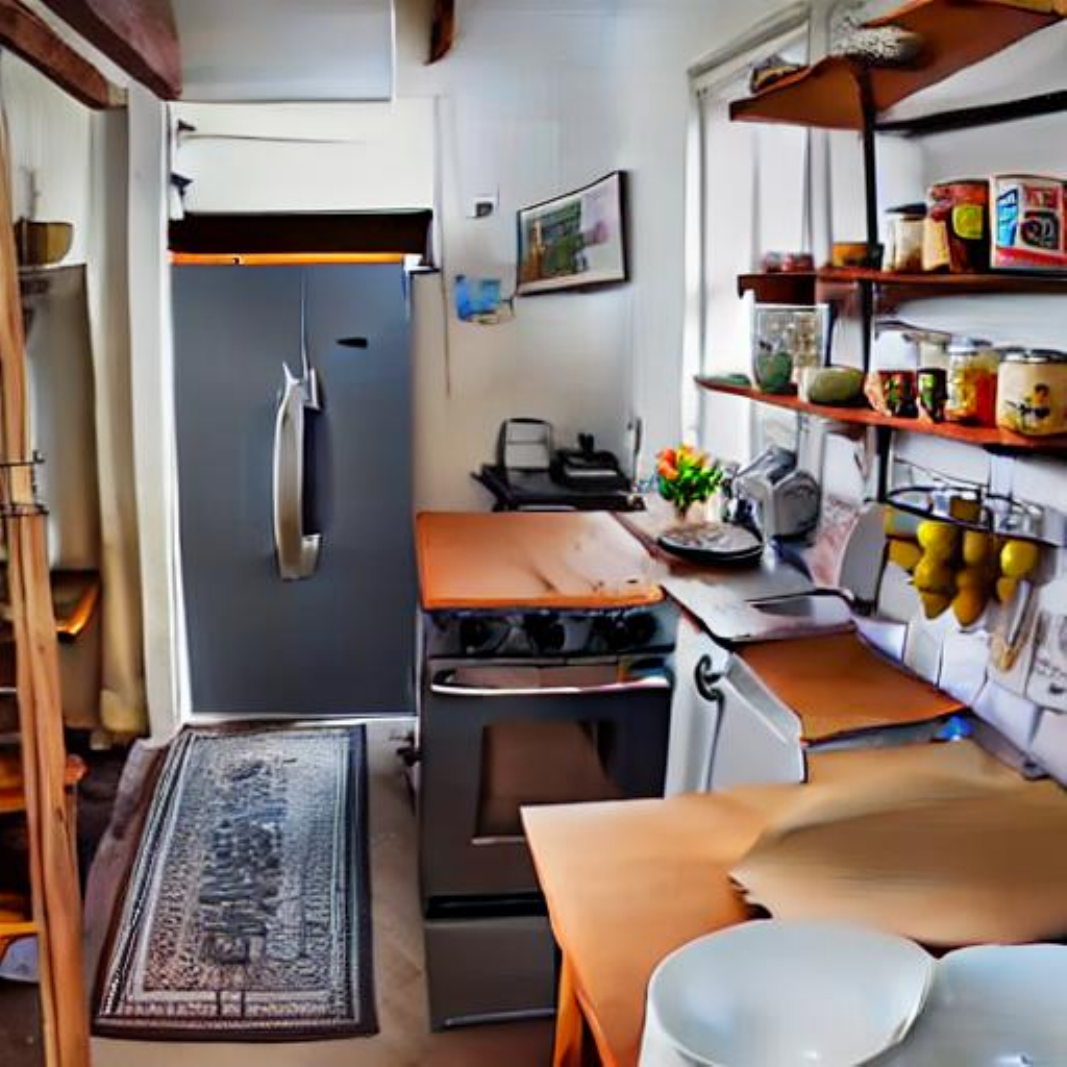} &
    \includegraphics[width=0.10\textwidth]{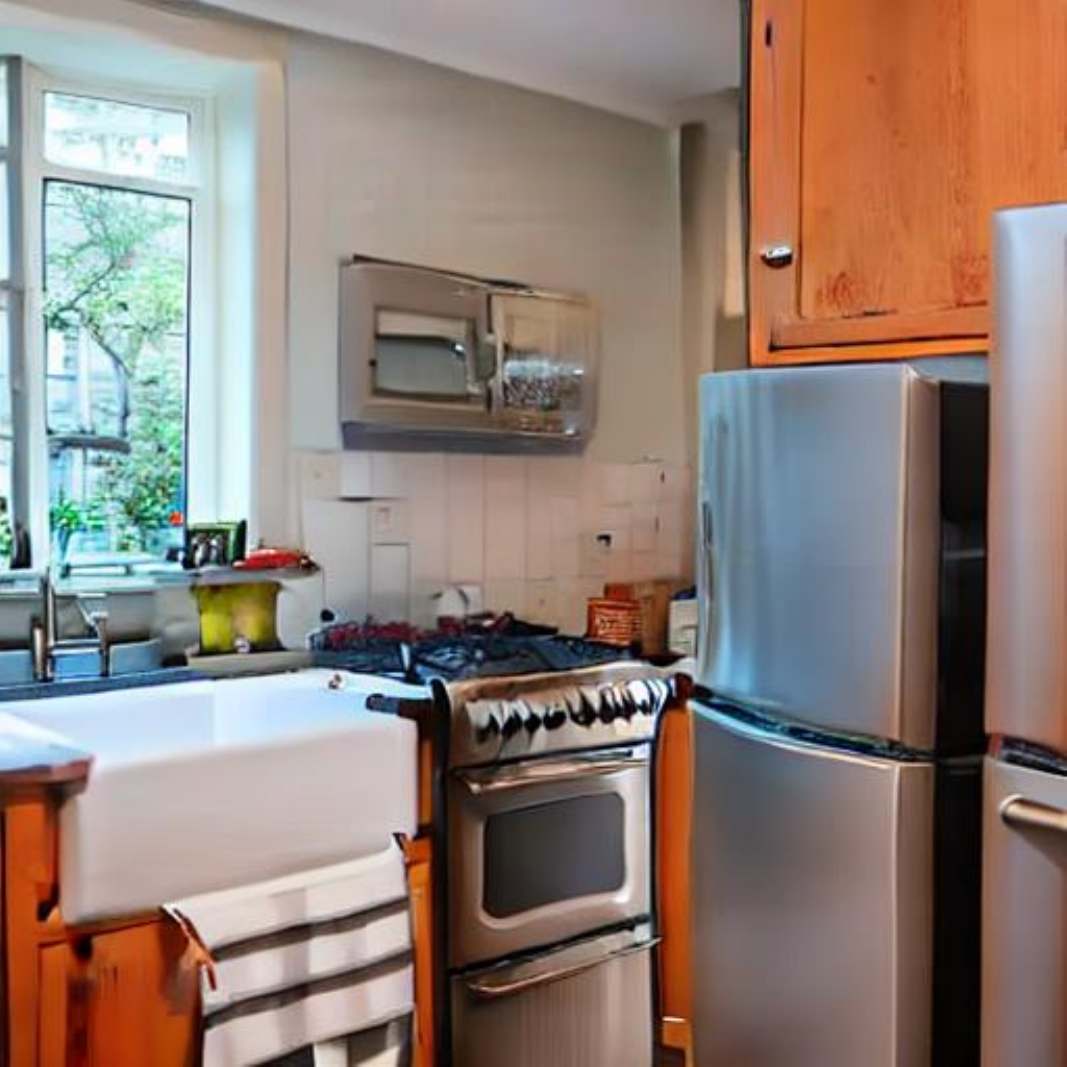} \\
    \includegraphics[width=0.10\textwidth]{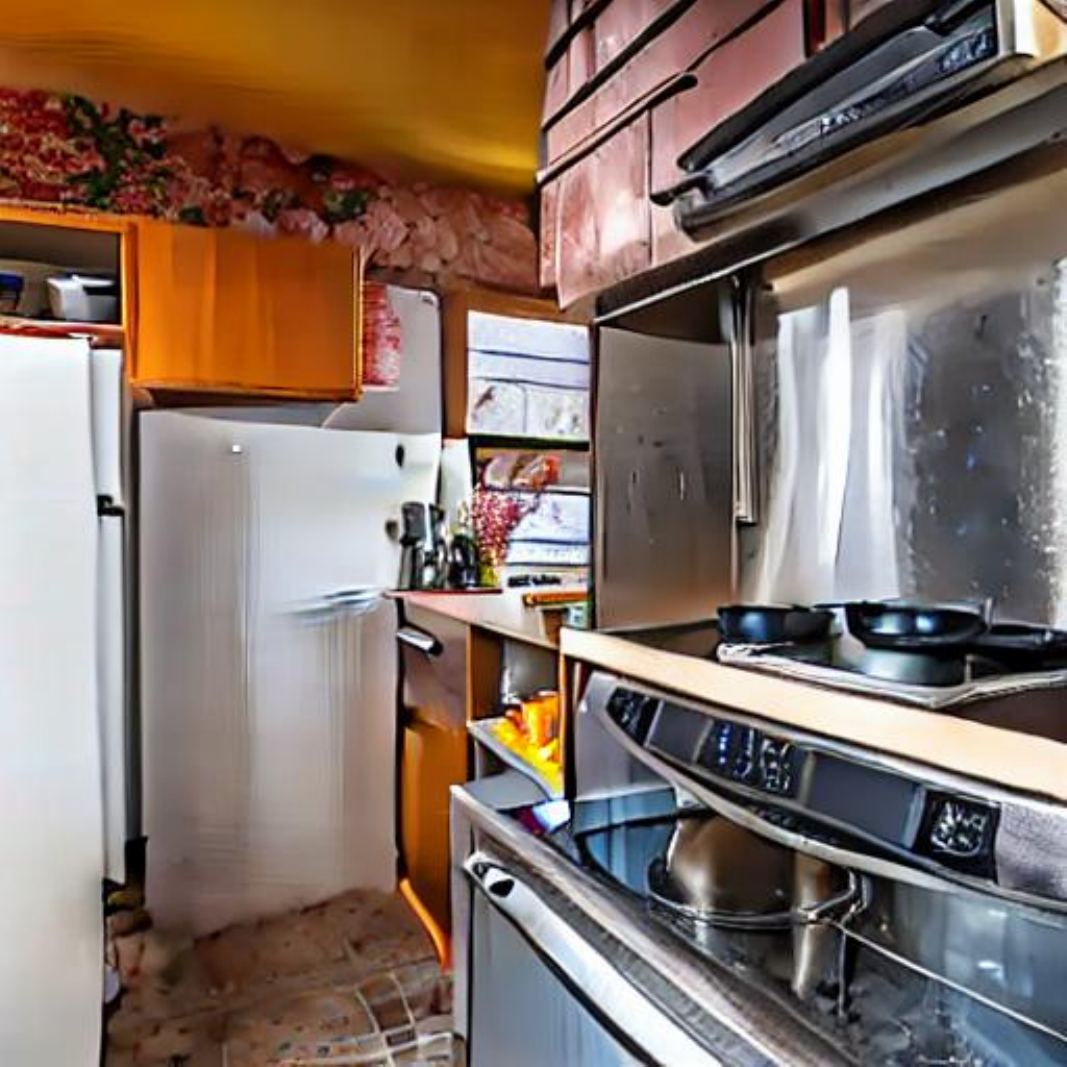} &
    \includegraphics[width=0.10\textwidth]{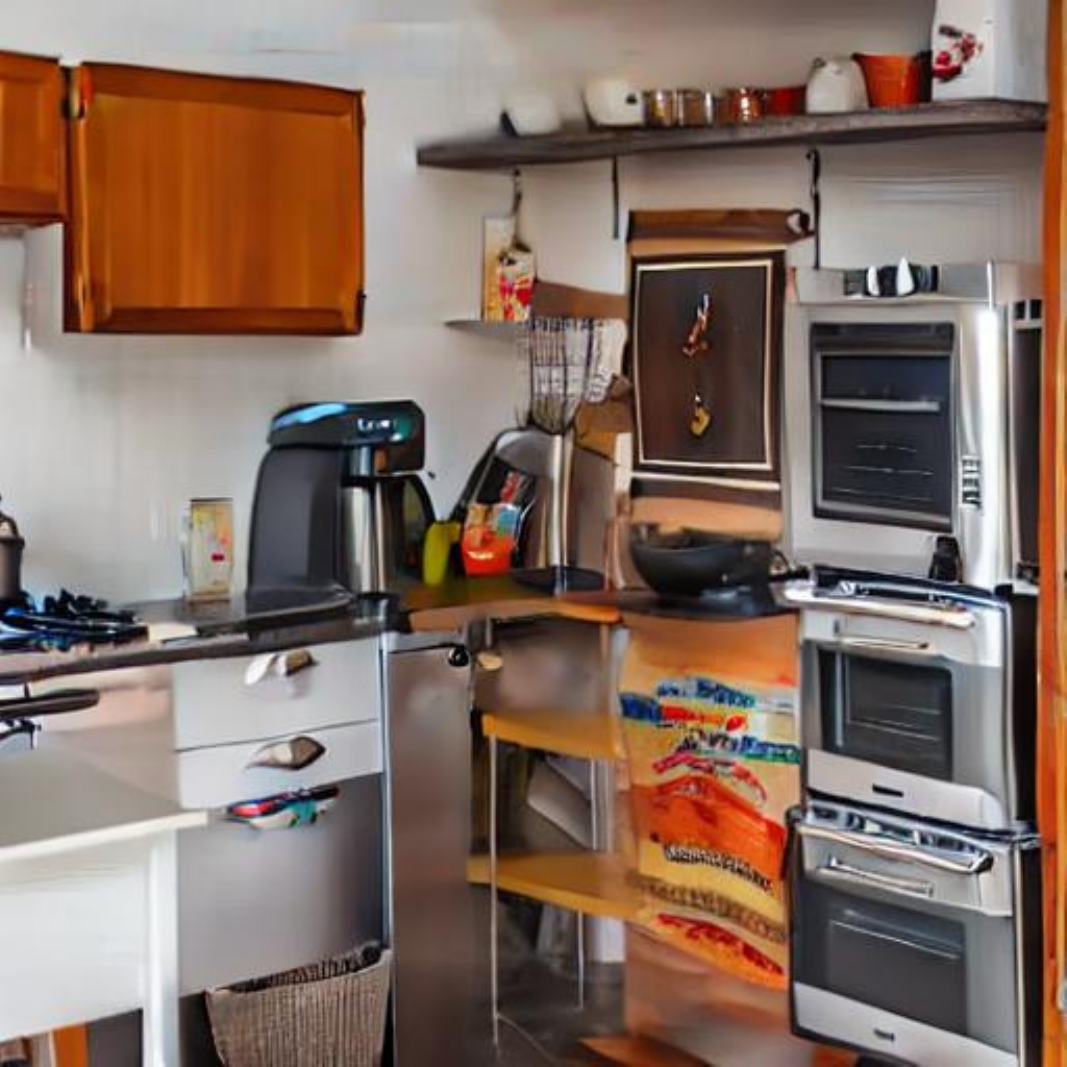} &
    \includegraphics[width=0.10\textwidth]{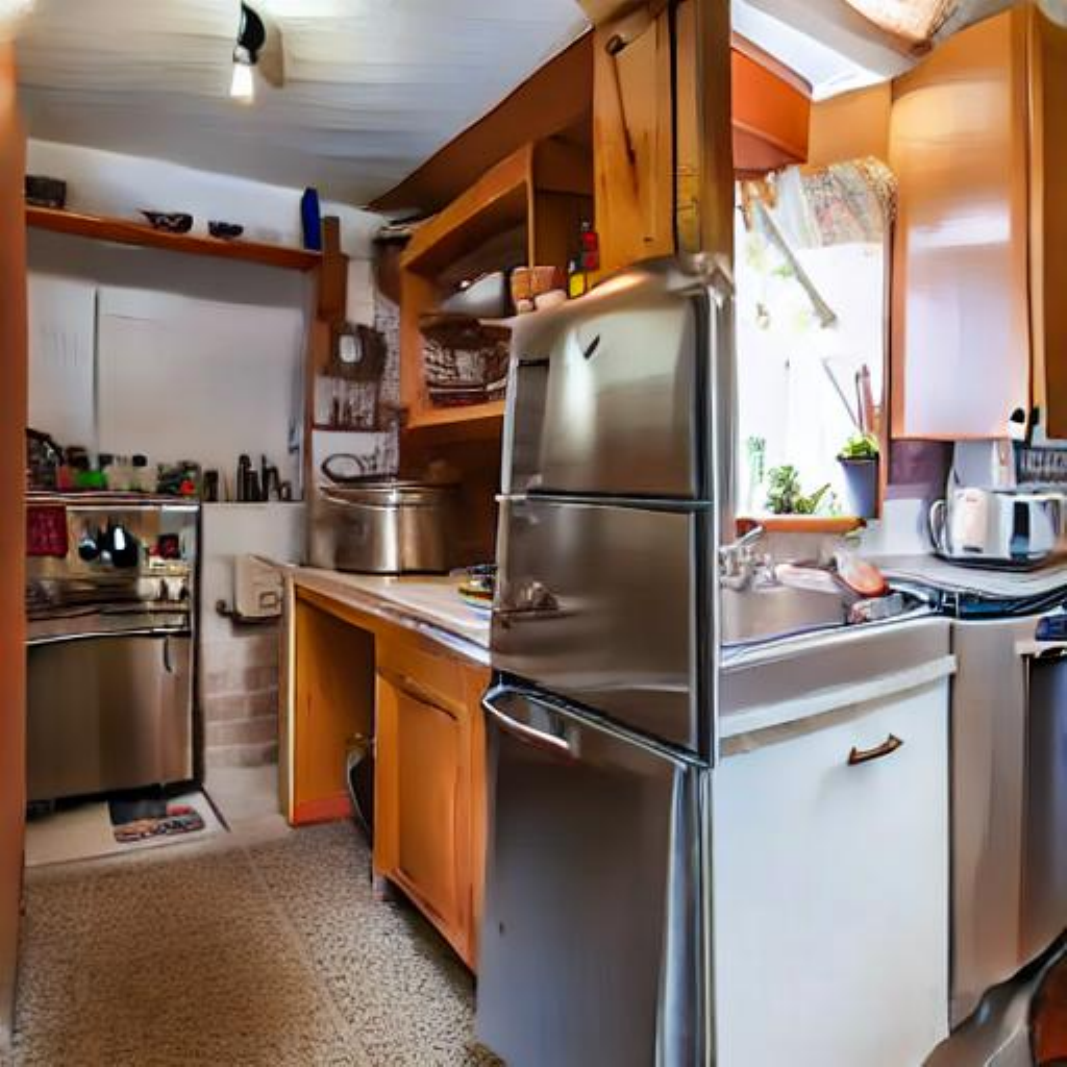} &
    \includegraphics[width=0.10\textwidth]{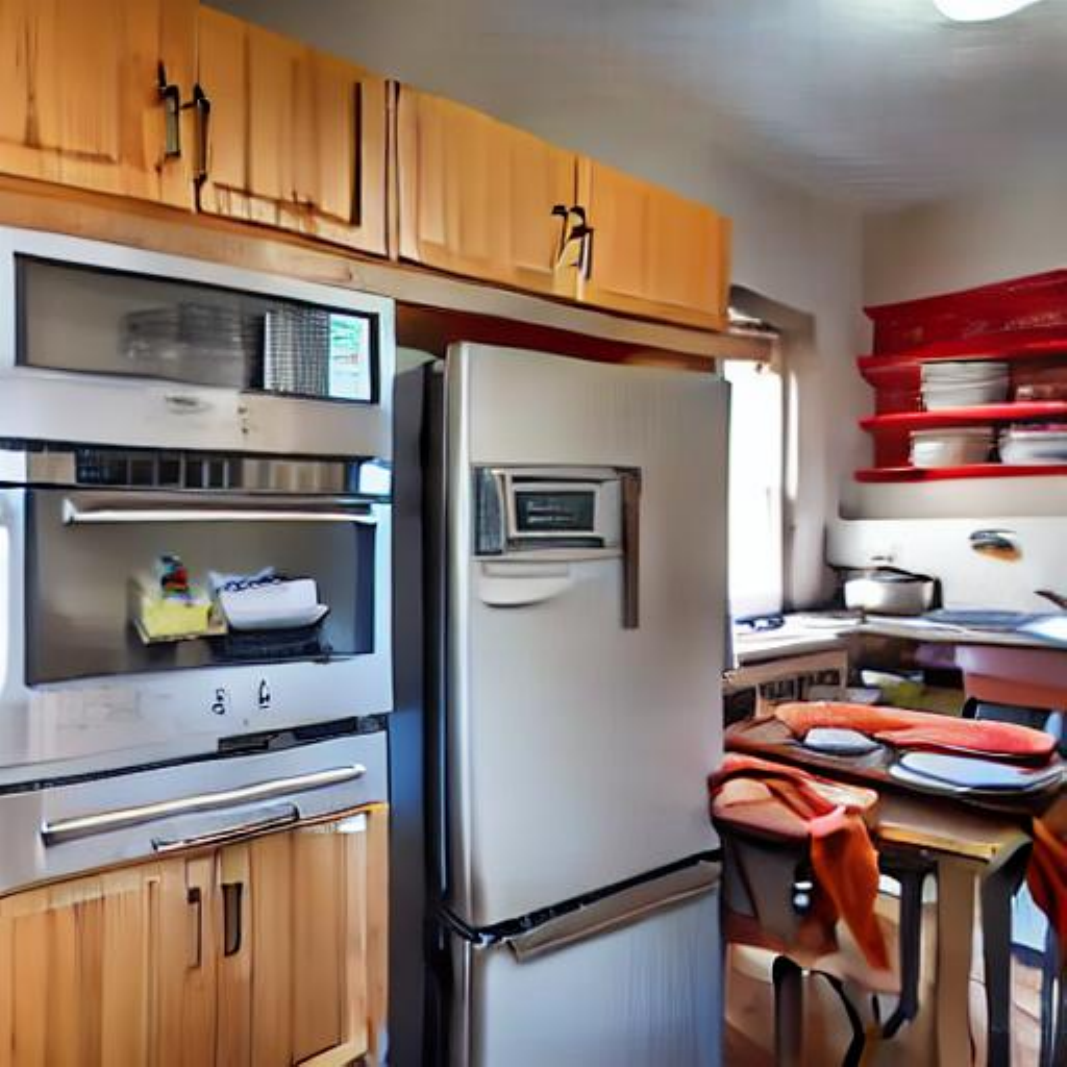} \\
    \includegraphics[width=0.10\textwidth]{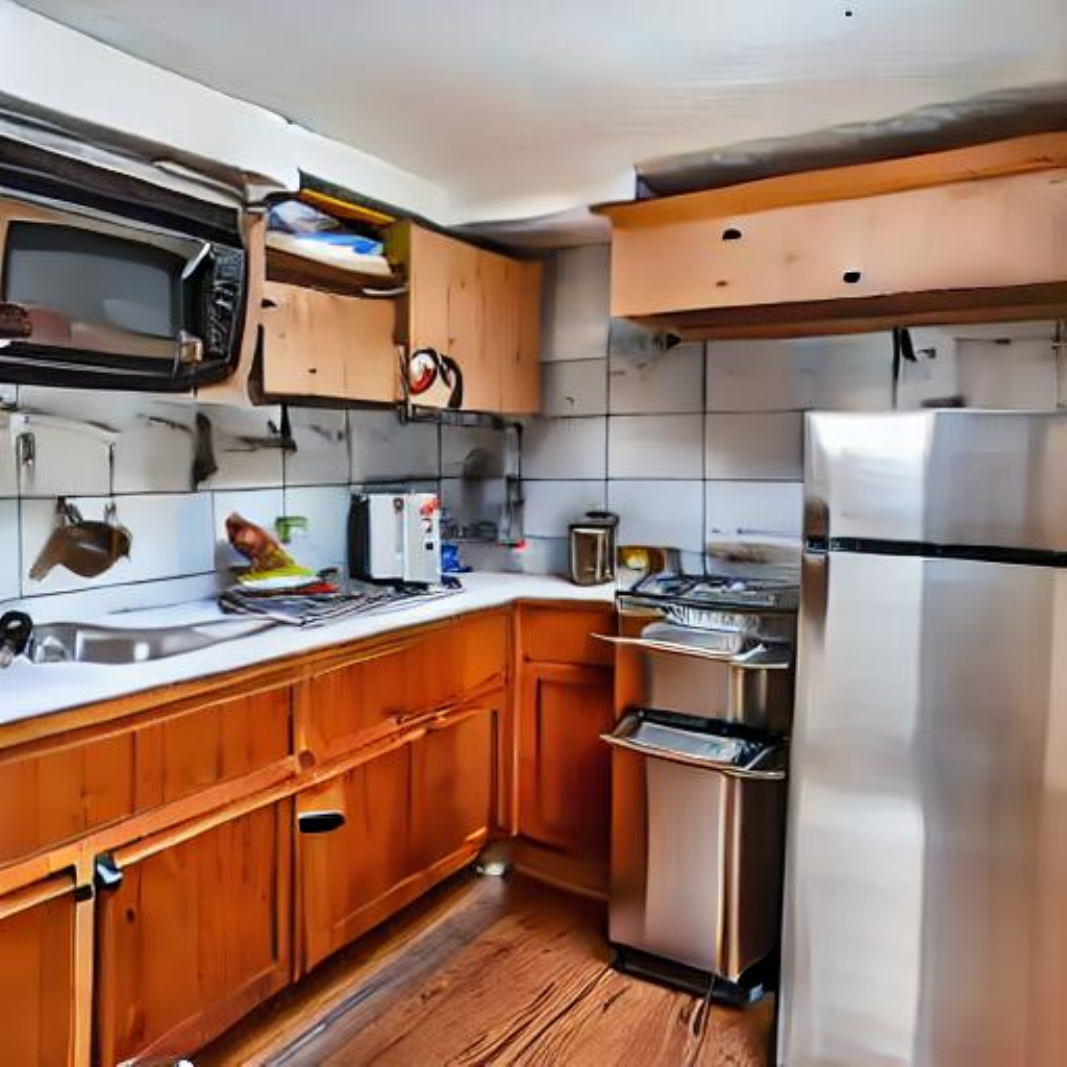} &
    \includegraphics[width=0.10\textwidth]{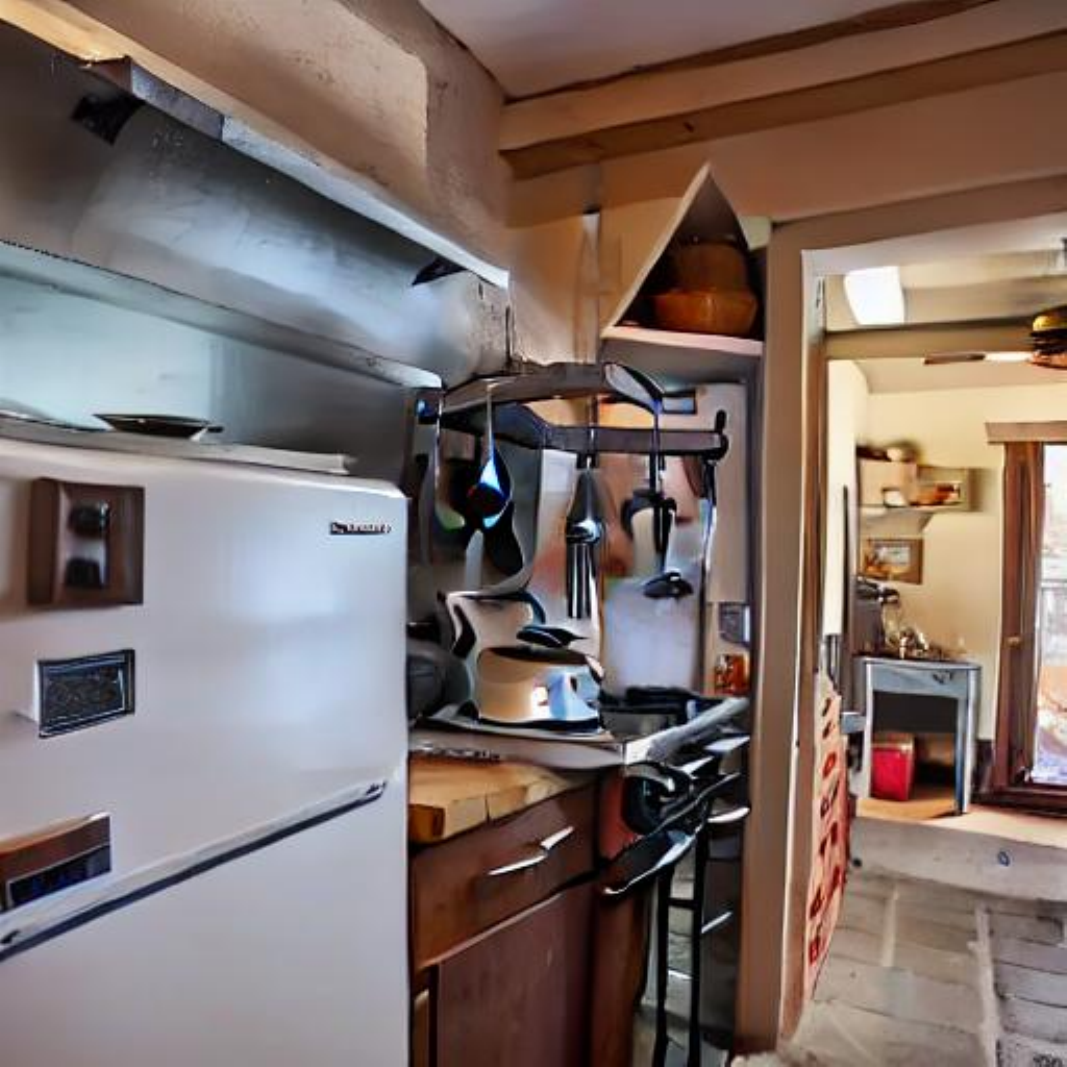} &
    \includegraphics[width=0.10\textwidth]{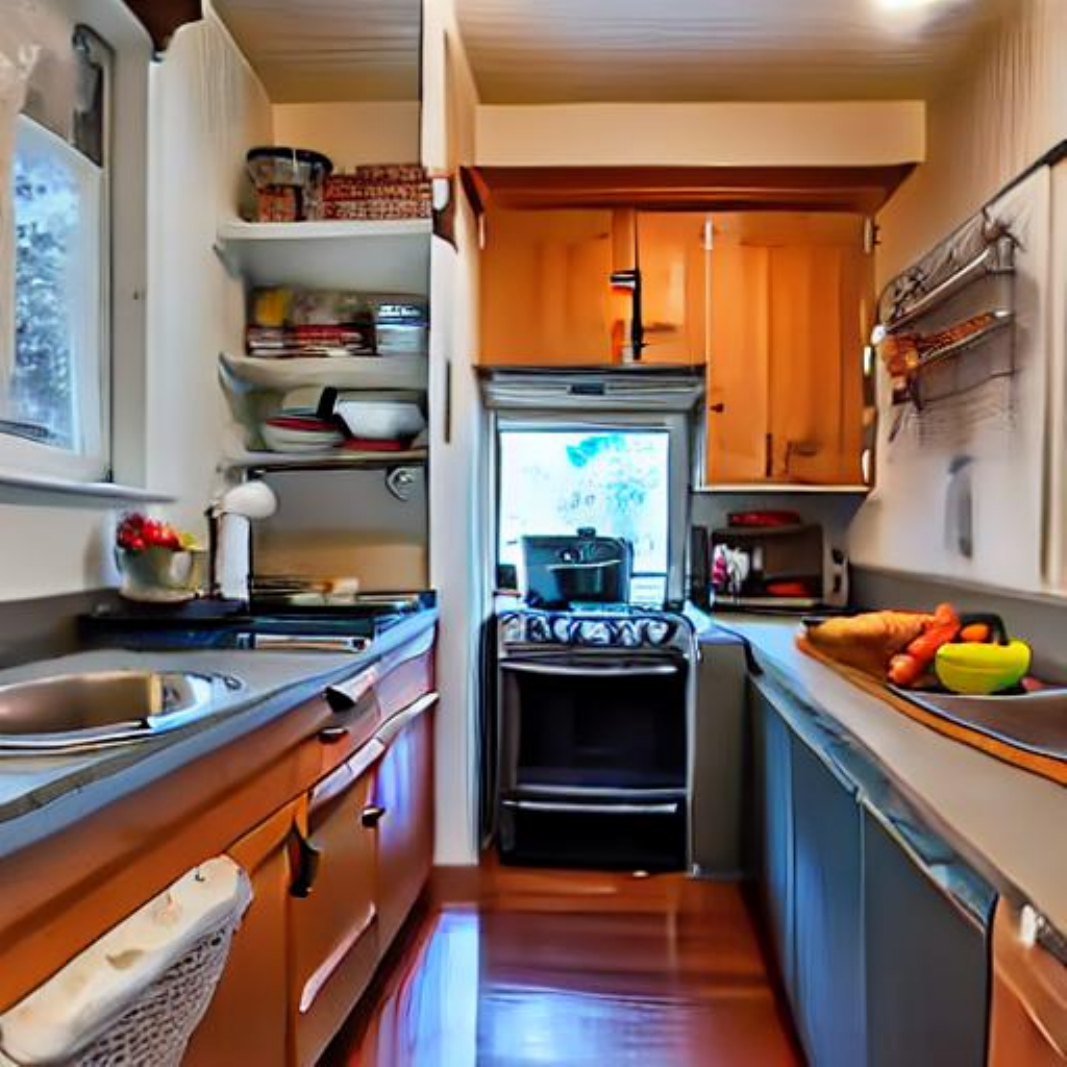} &
    \includegraphics[width=0.10\textwidth]{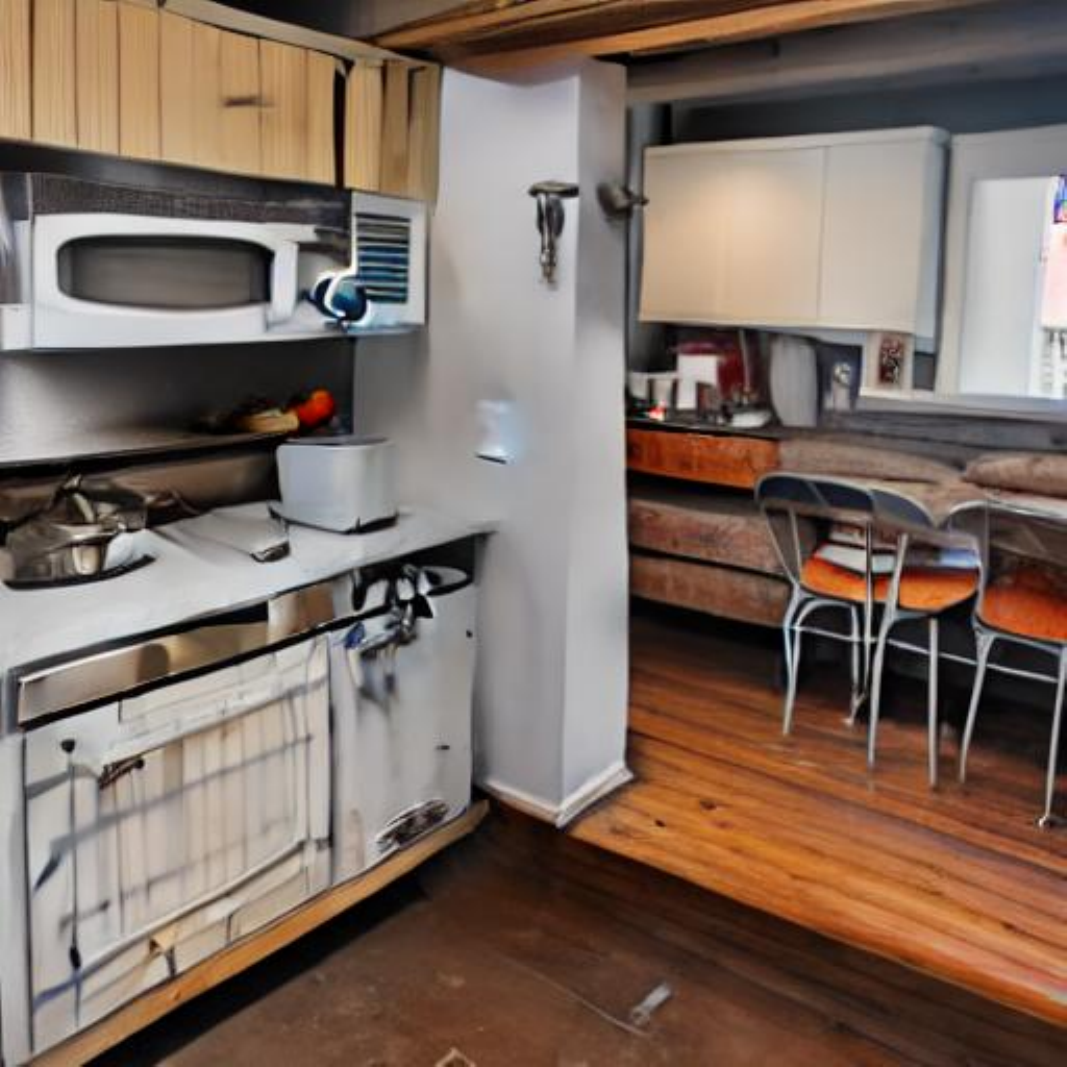} \\
};
\node[below=0pt of tr] {Medium 1};

\matrix (bl) [gridmat, below=12pt of tl]
{
    \includegraphics[width=0.10\textwidth]{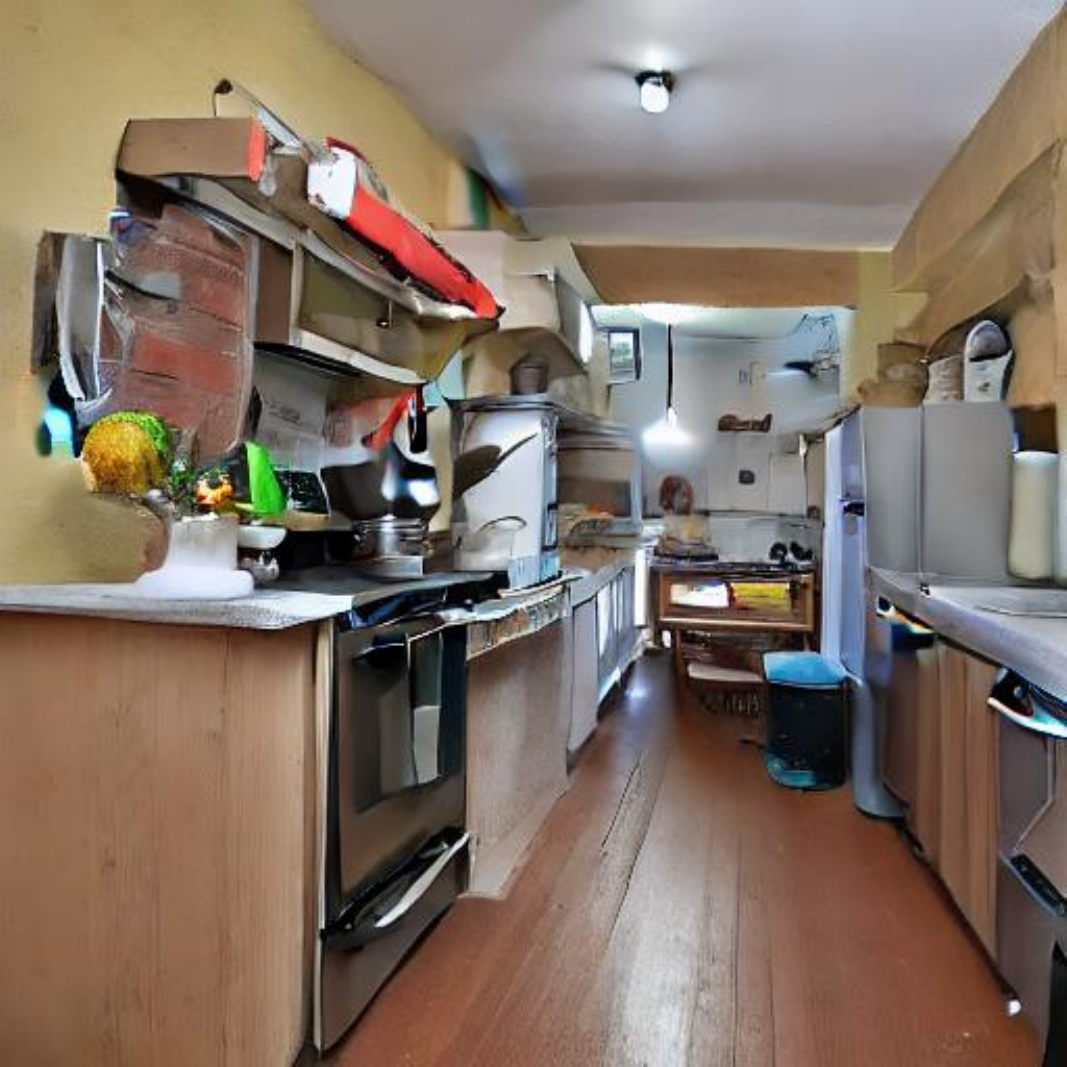} &
    \includegraphics[width=0.10\textwidth]{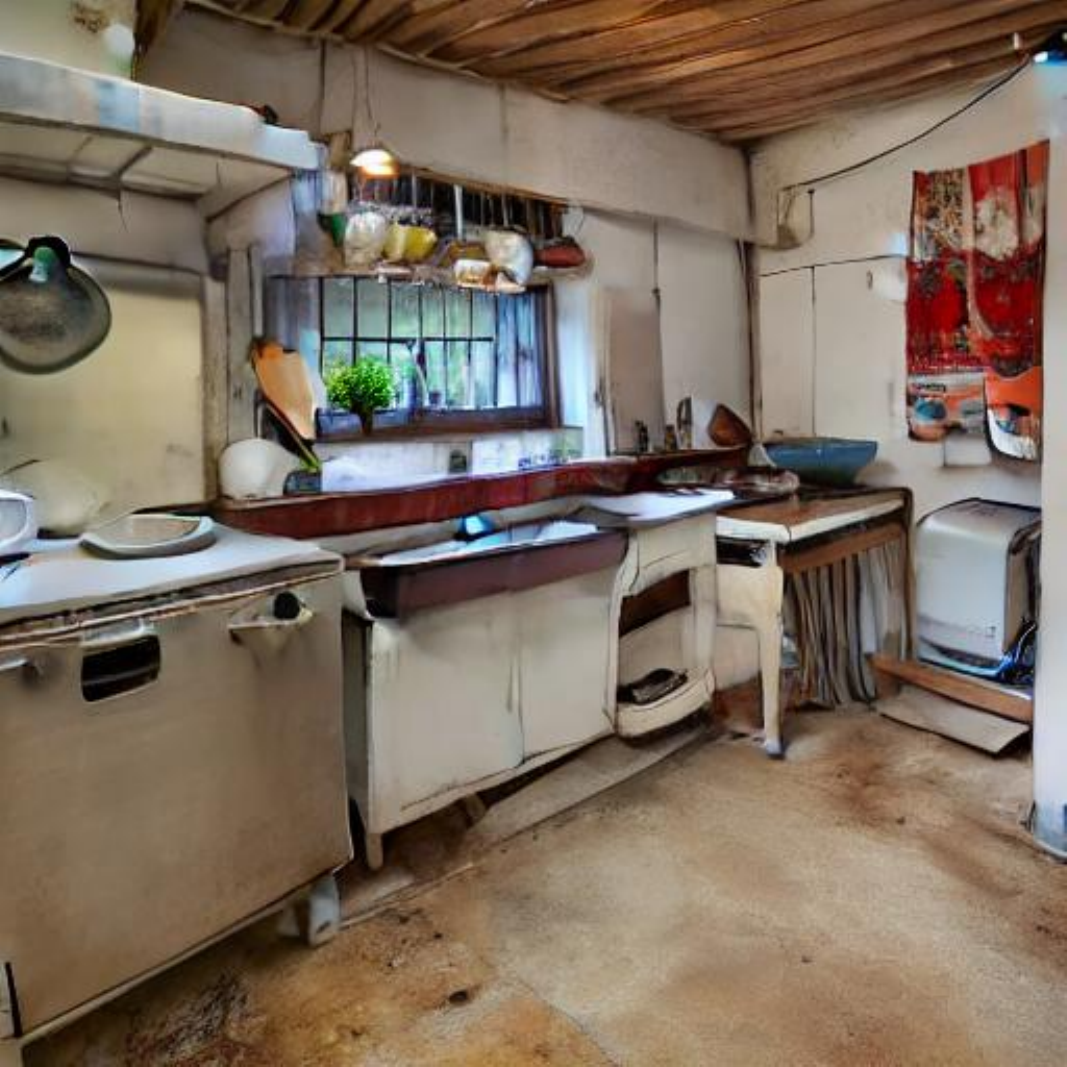} &
    \includegraphics[width=0.10\textwidth]{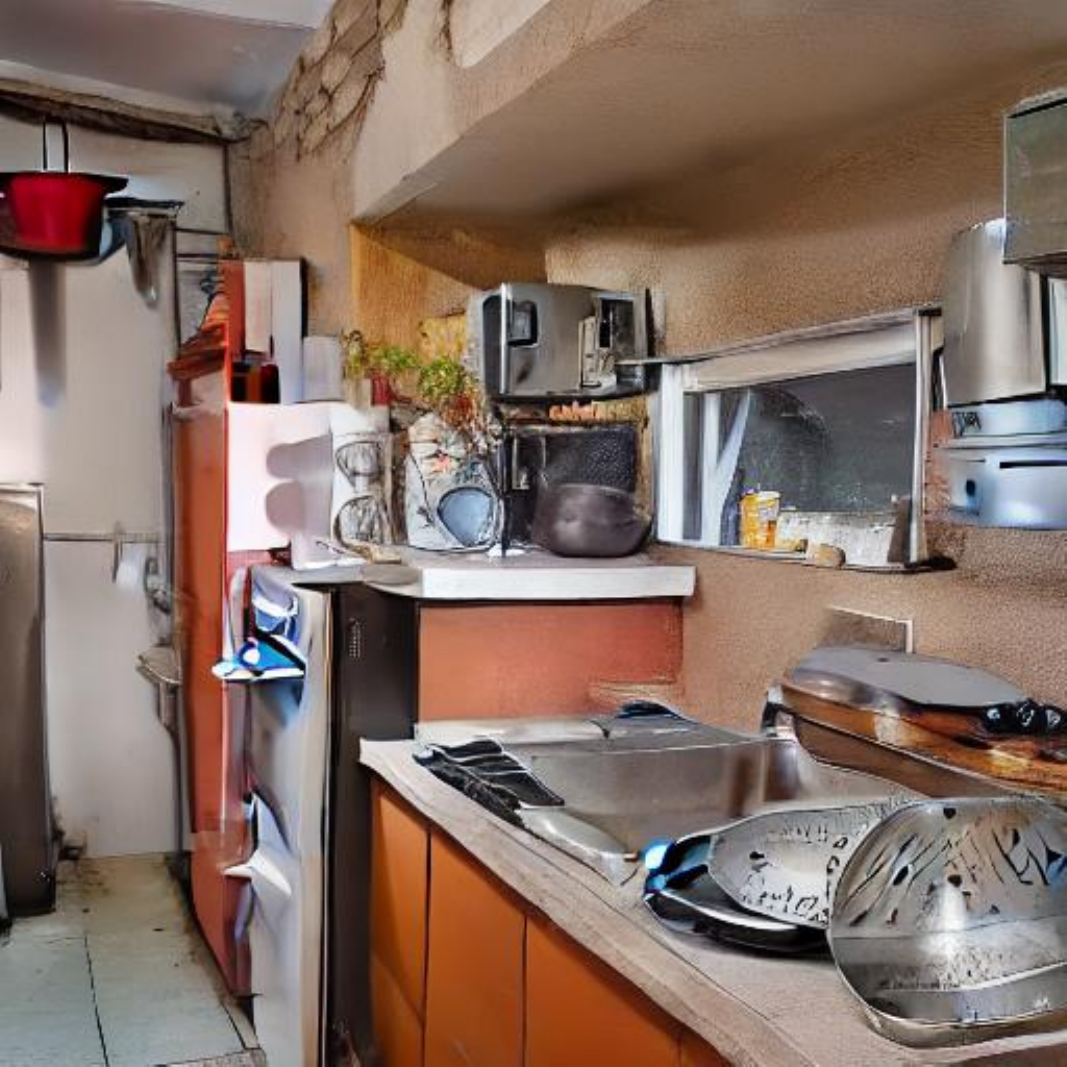} &
    \includegraphics[width=0.10\textwidth]{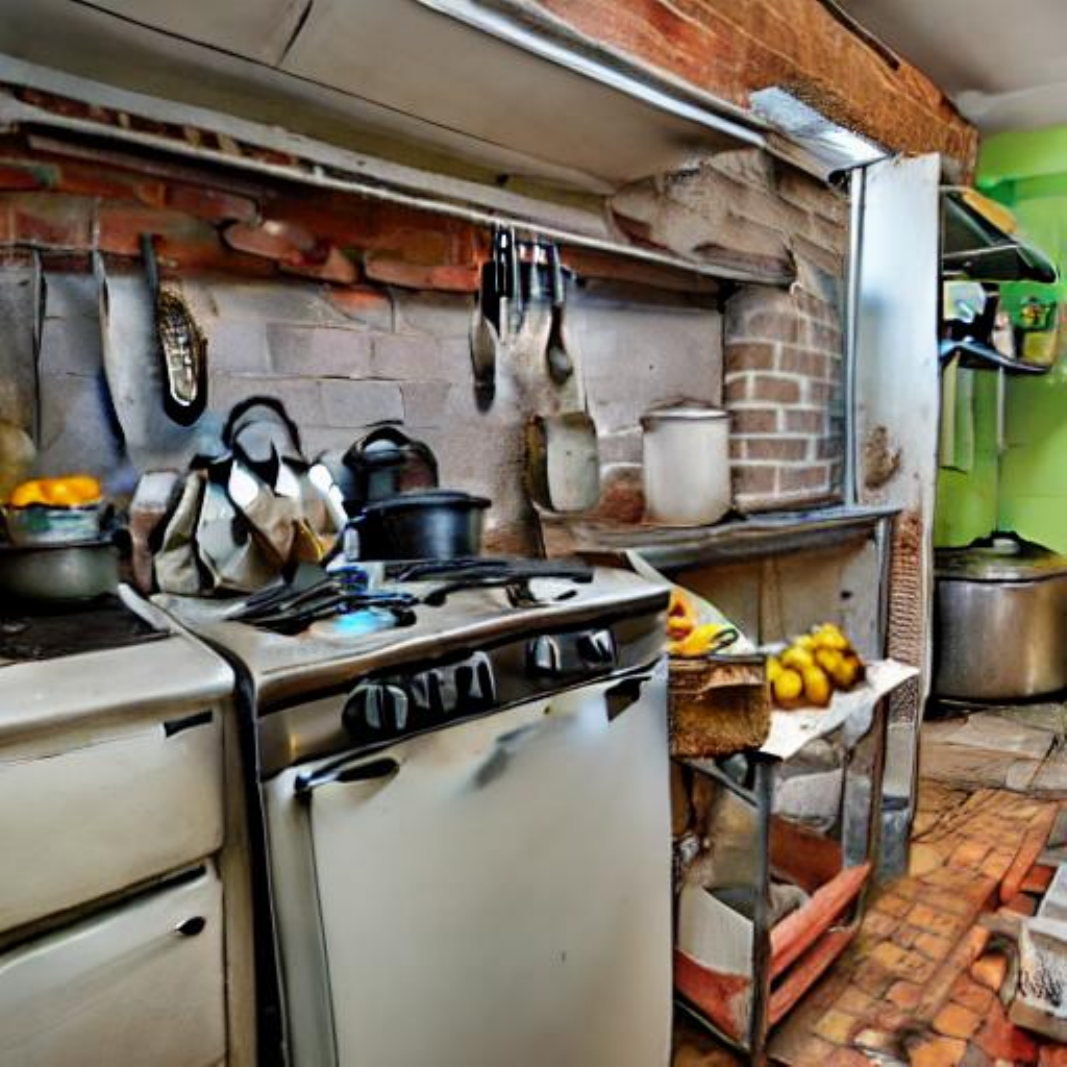} \\
    \includegraphics[width=0.10\textwidth]{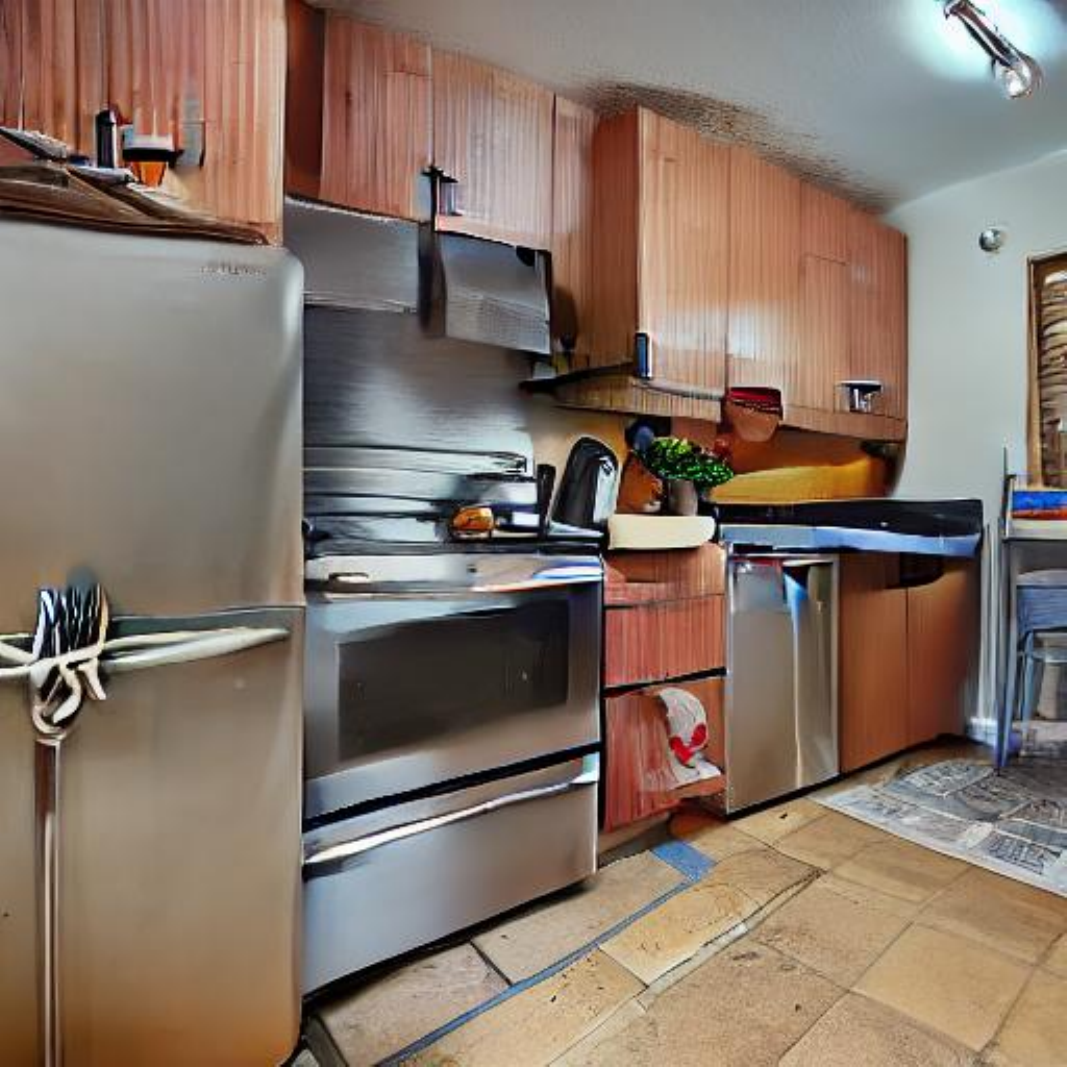} &
    \includegraphics[width=0.10\textwidth]{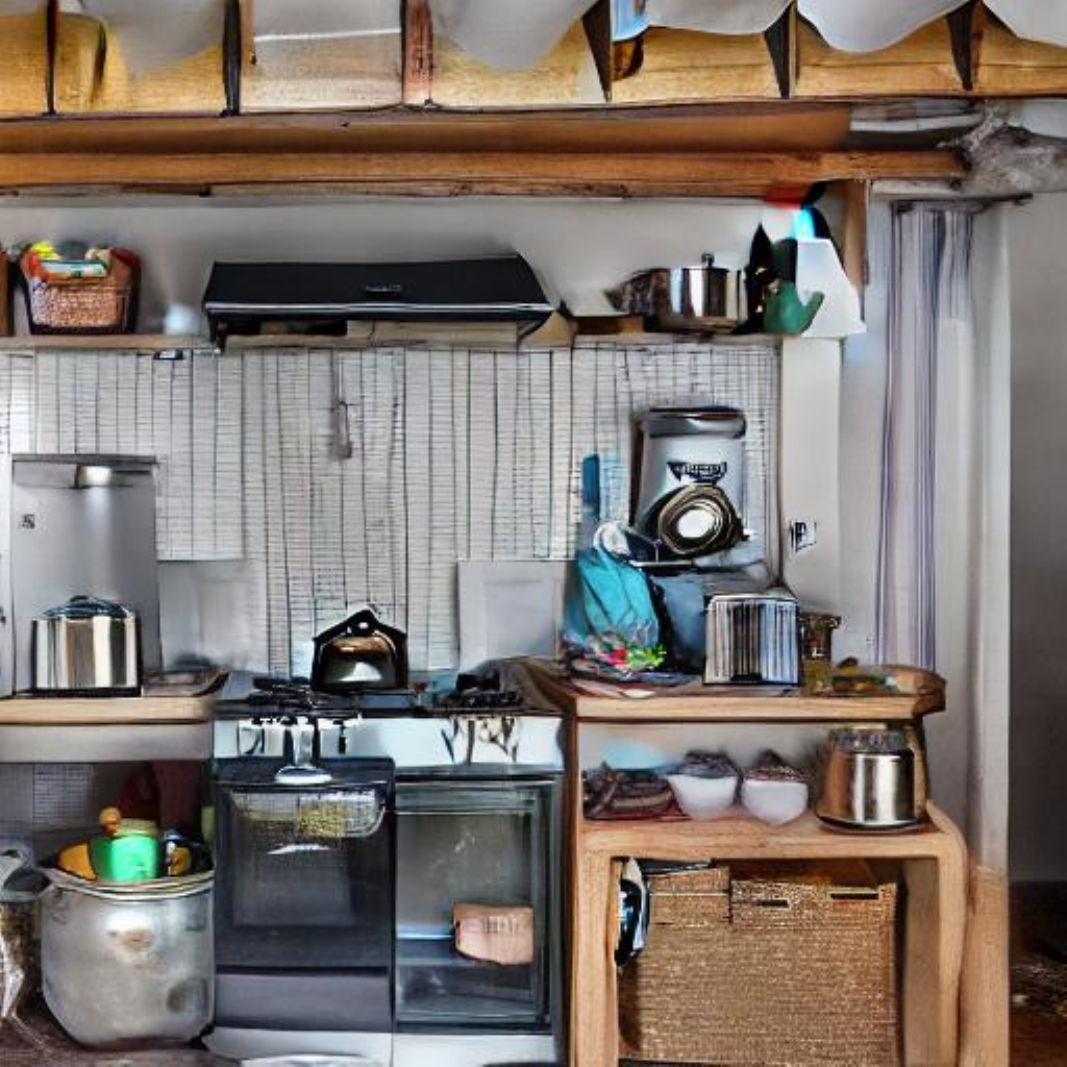} &
    \includegraphics[width=0.10\textwidth]{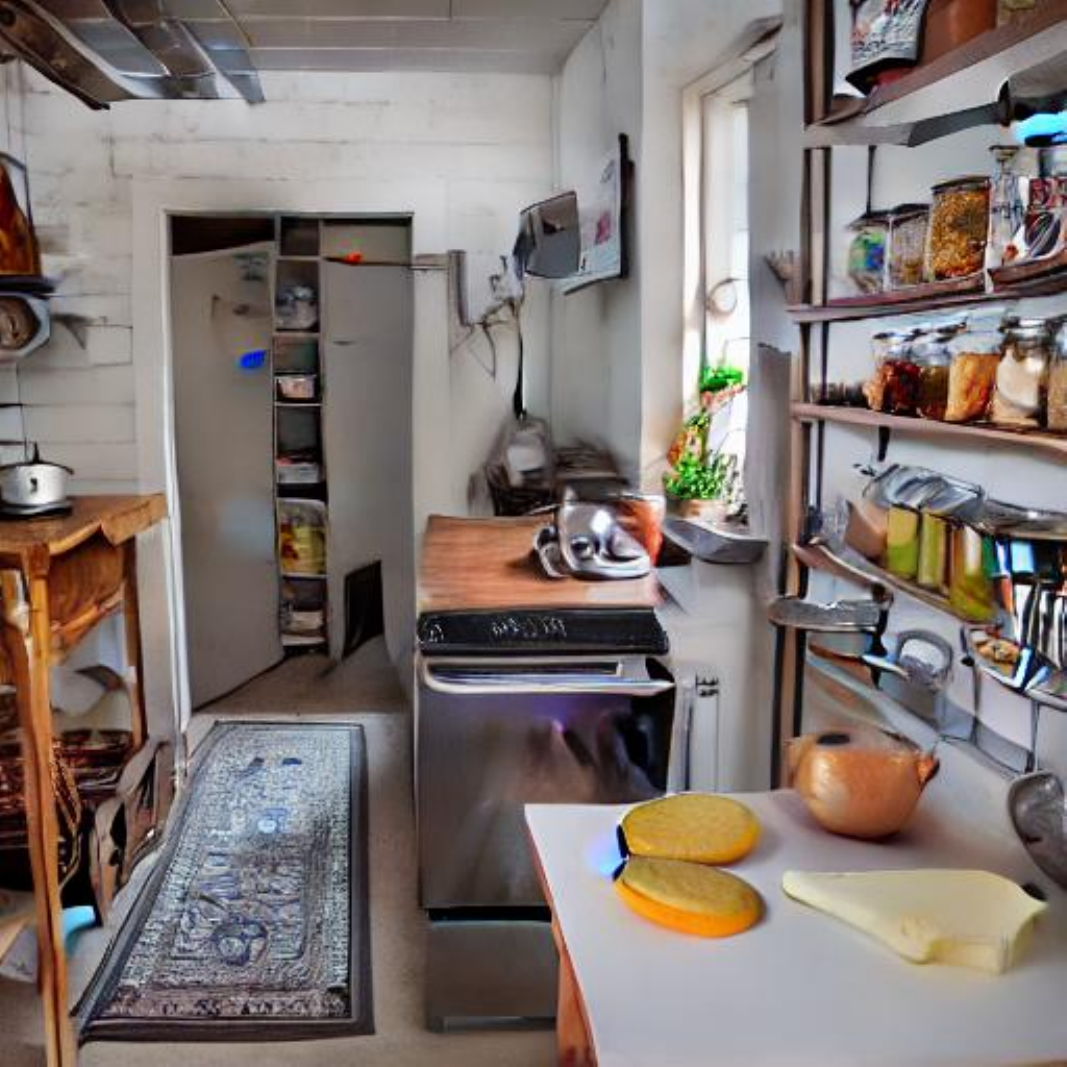} &
    \includegraphics[width=0.10\textwidth]{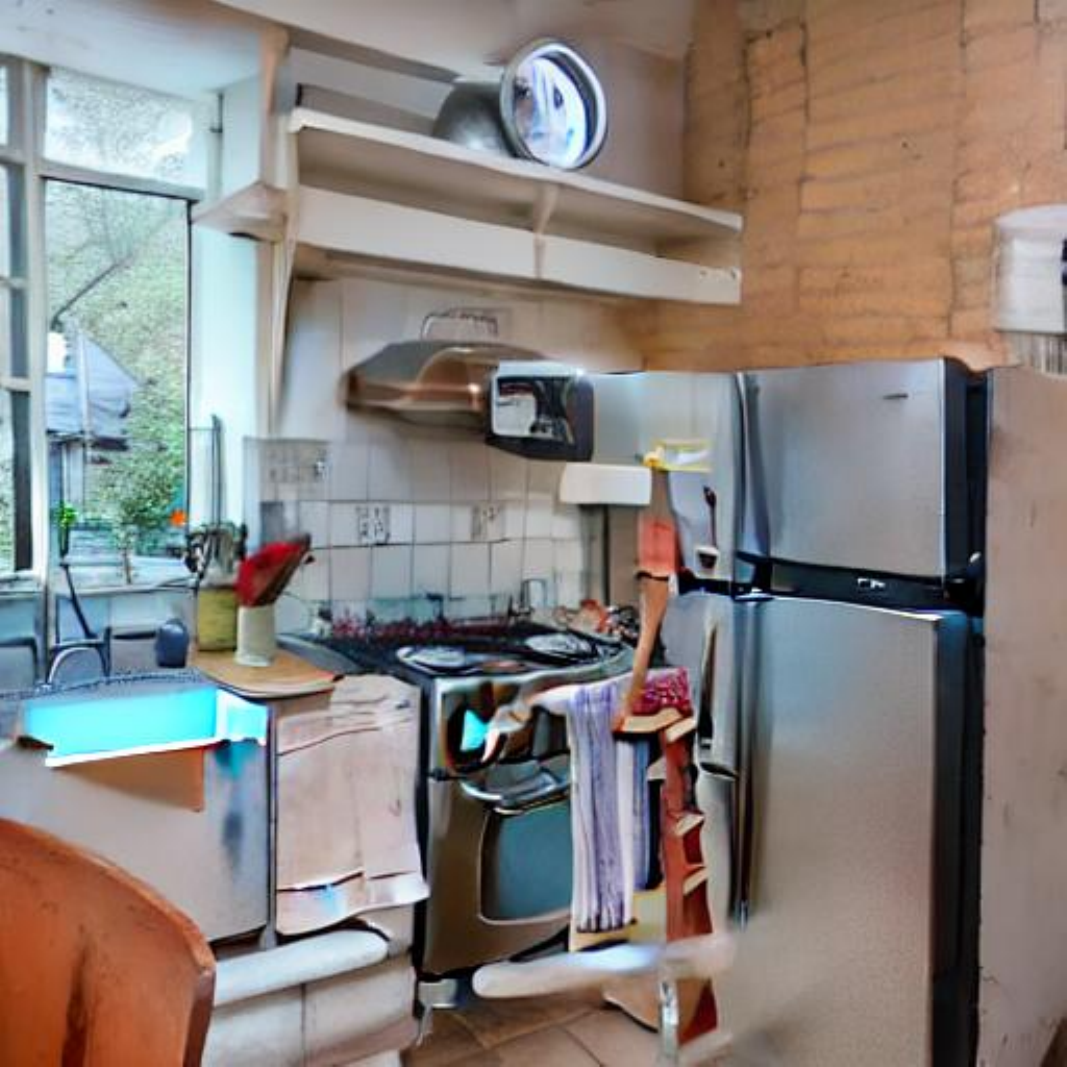} \\
    \includegraphics[width=0.10\textwidth]{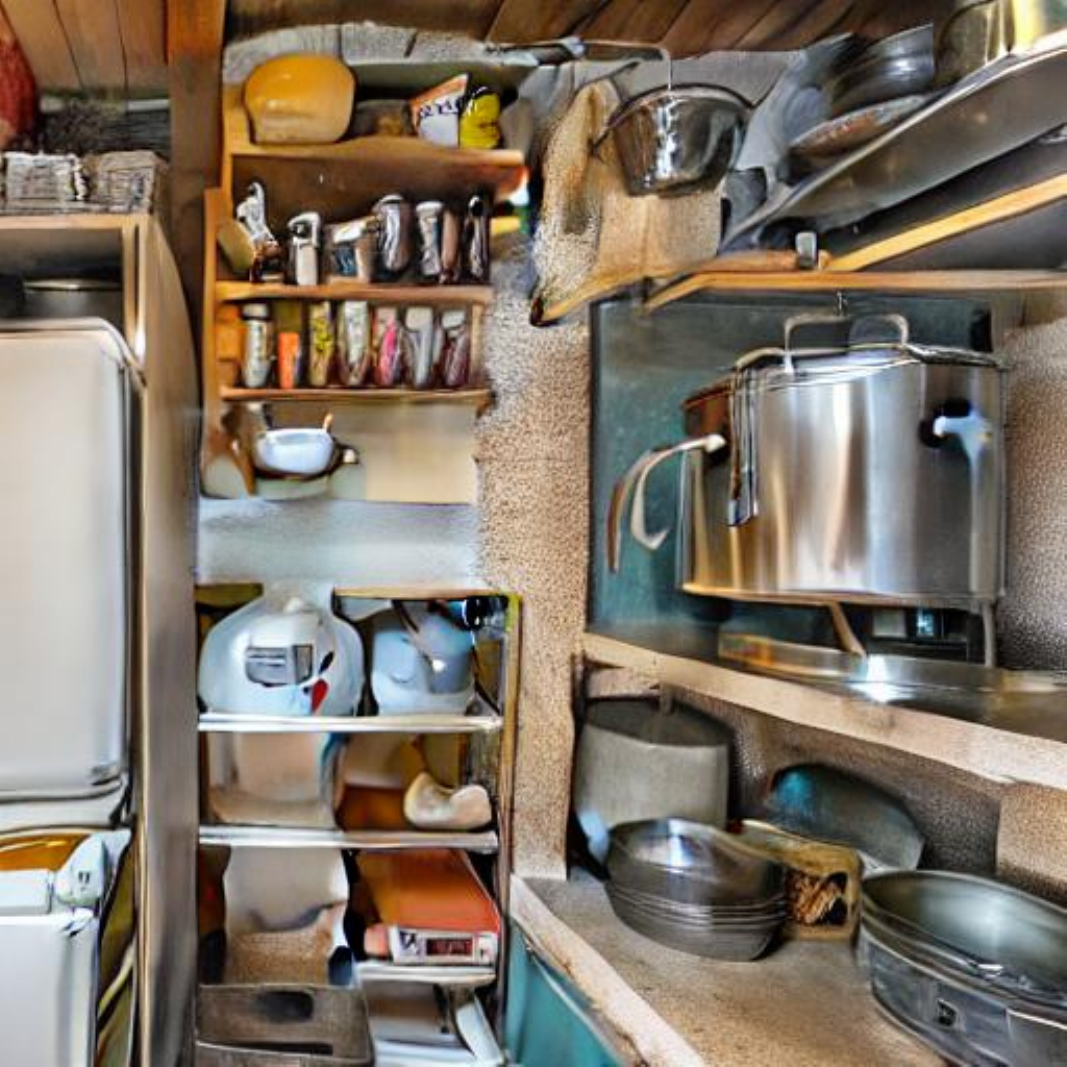} &
    \includegraphics[width=0.10\textwidth]{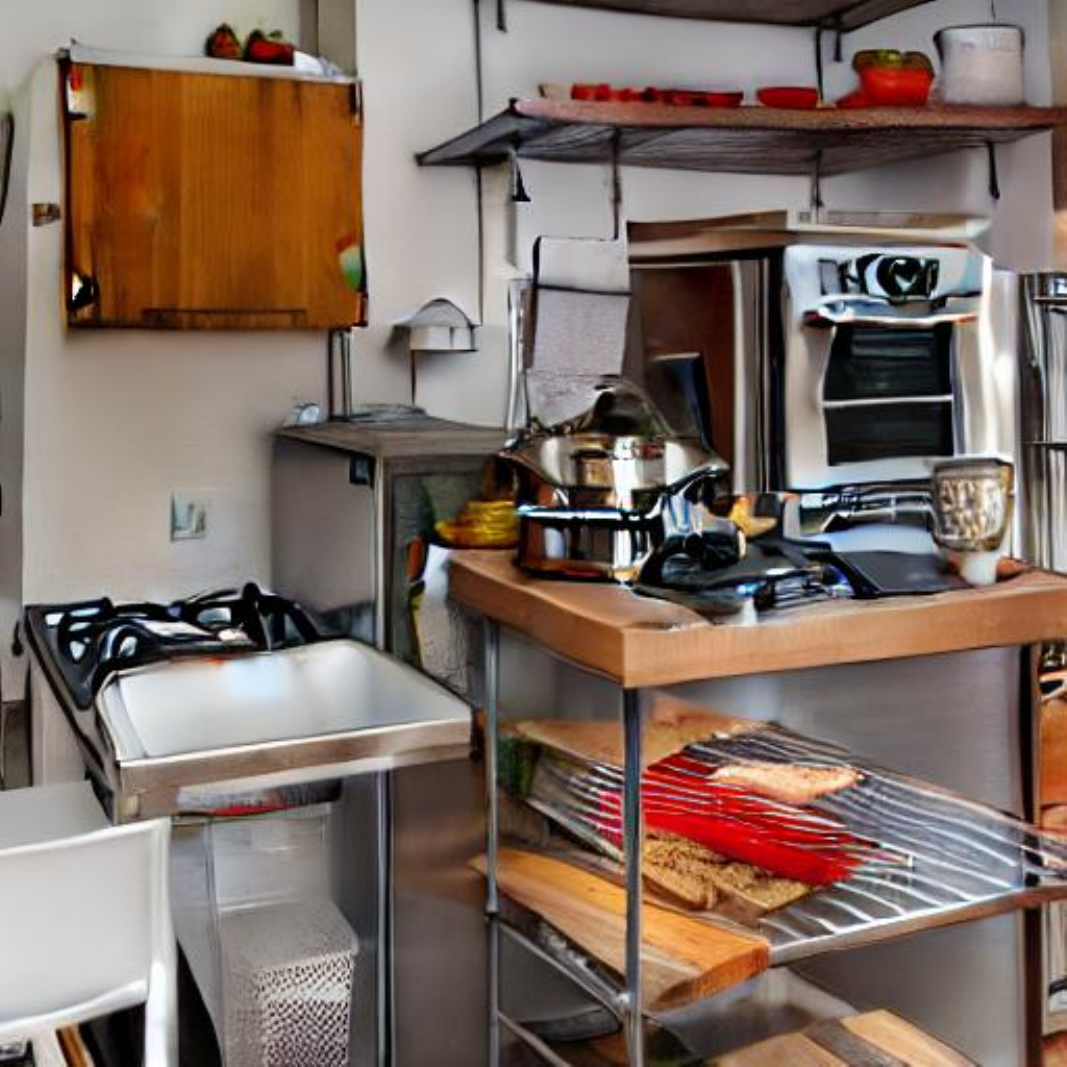} &
    \includegraphics[width=0.10\textwidth]{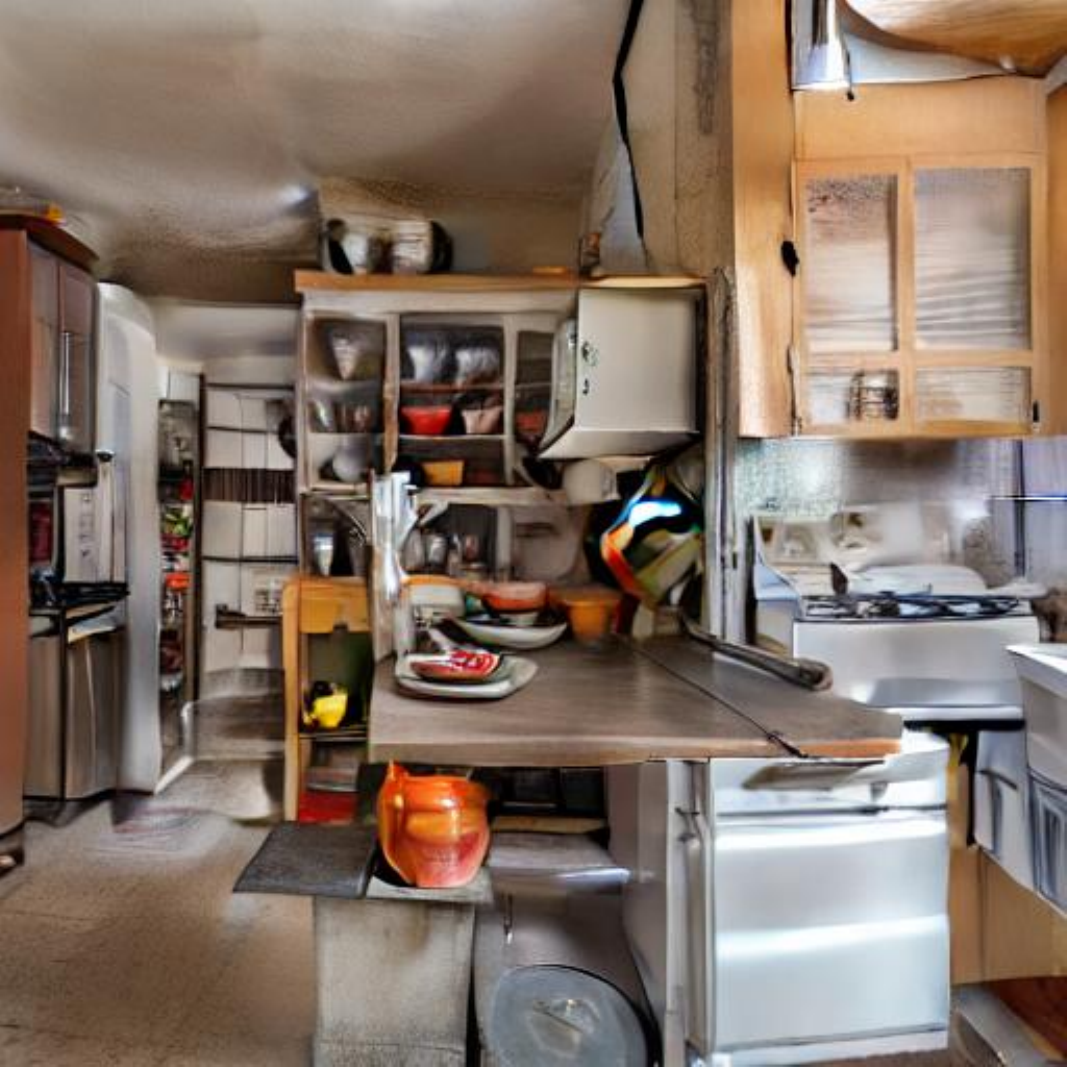} &
    \includegraphics[width=0.10\textwidth]{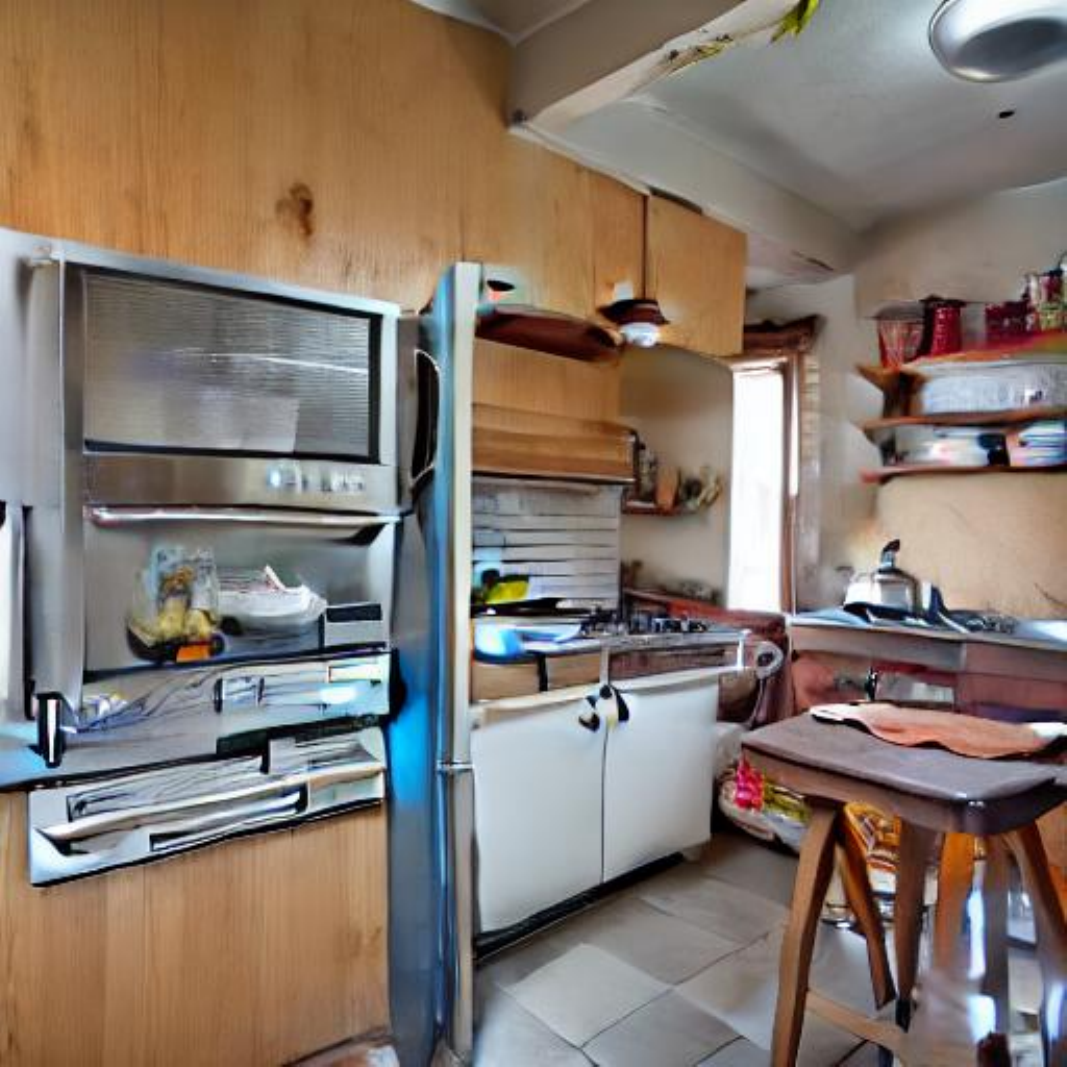} \\
    \includegraphics[width=0.10\textwidth]{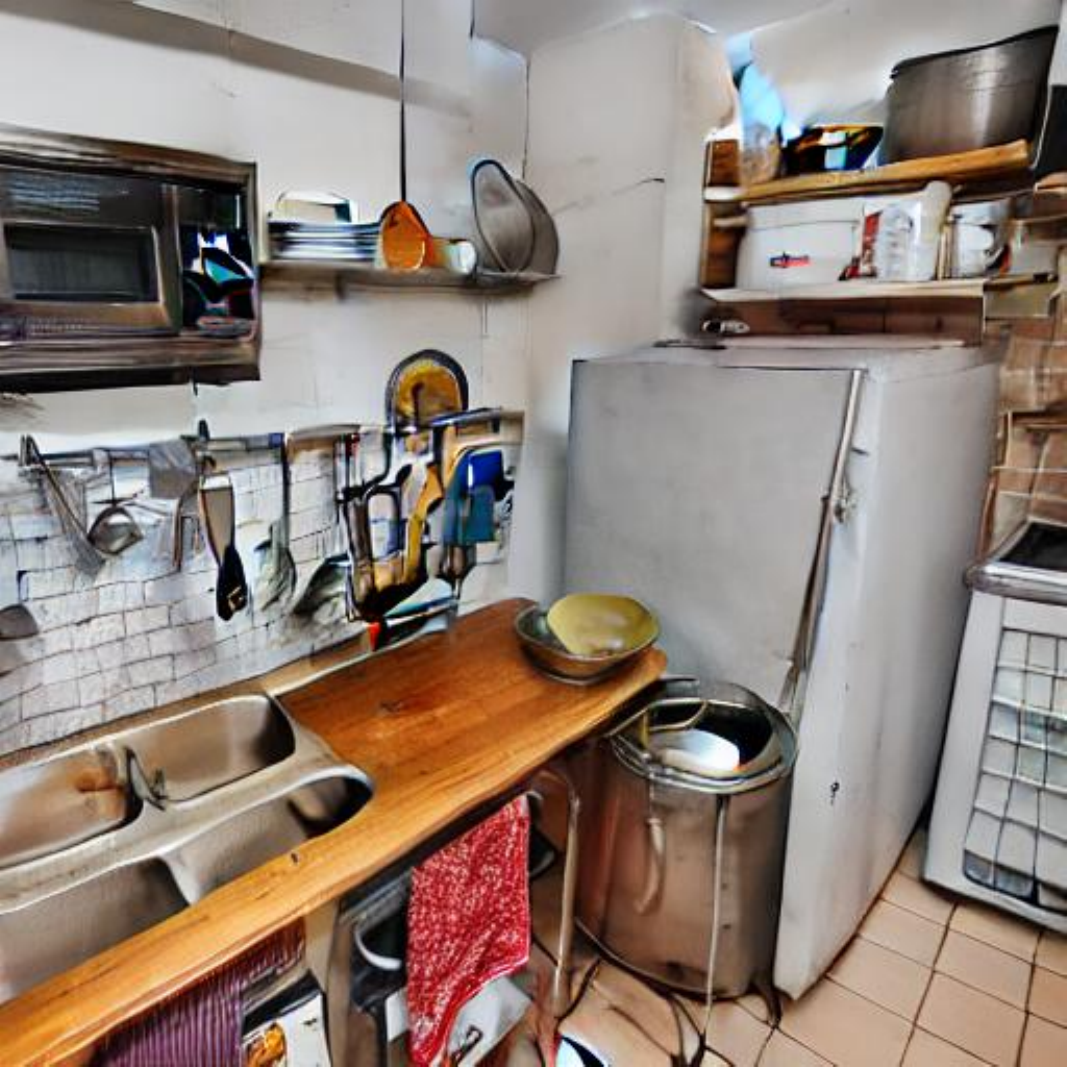} &
    \includegraphics[width=0.10\textwidth]{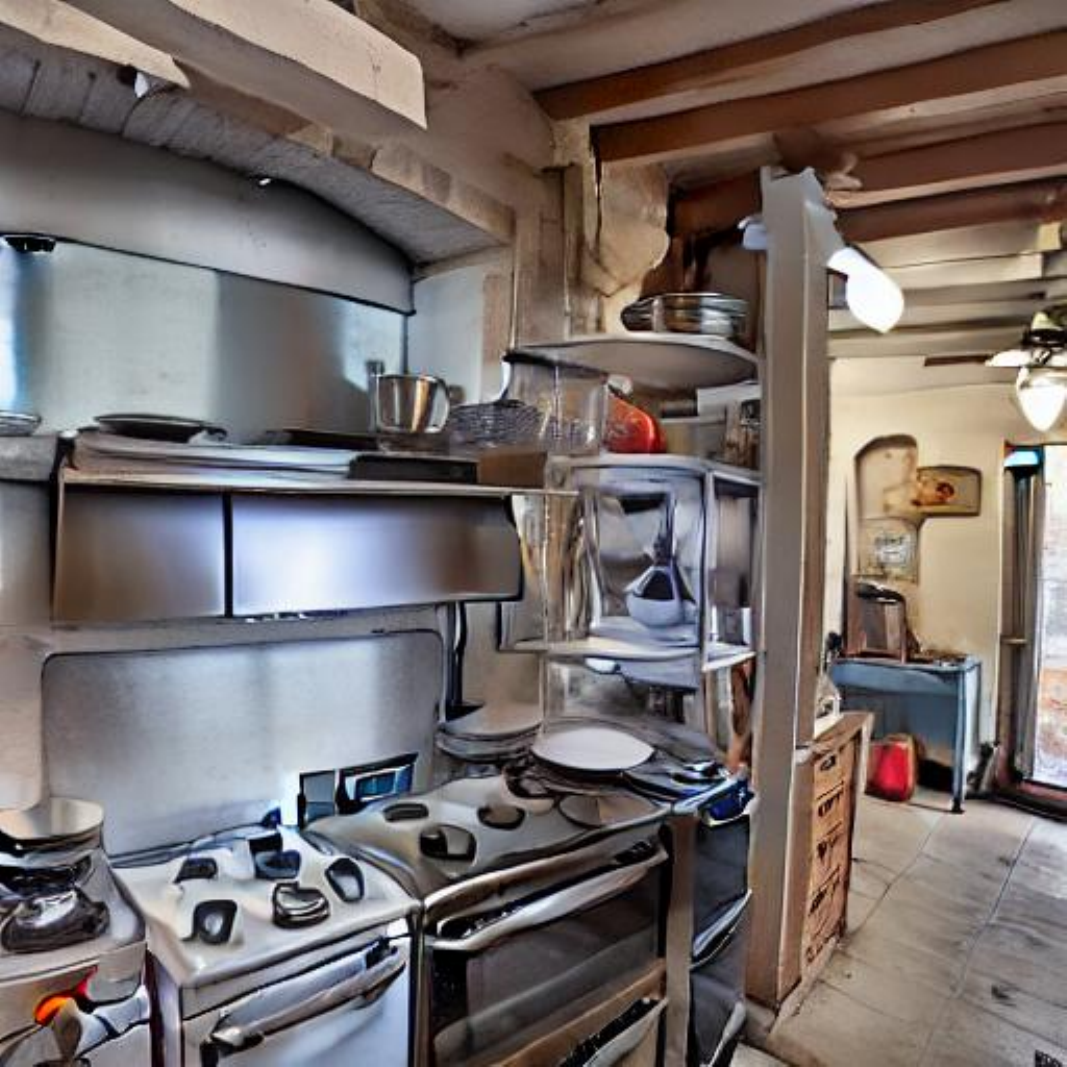} &
    \includegraphics[width=0.10\textwidth]{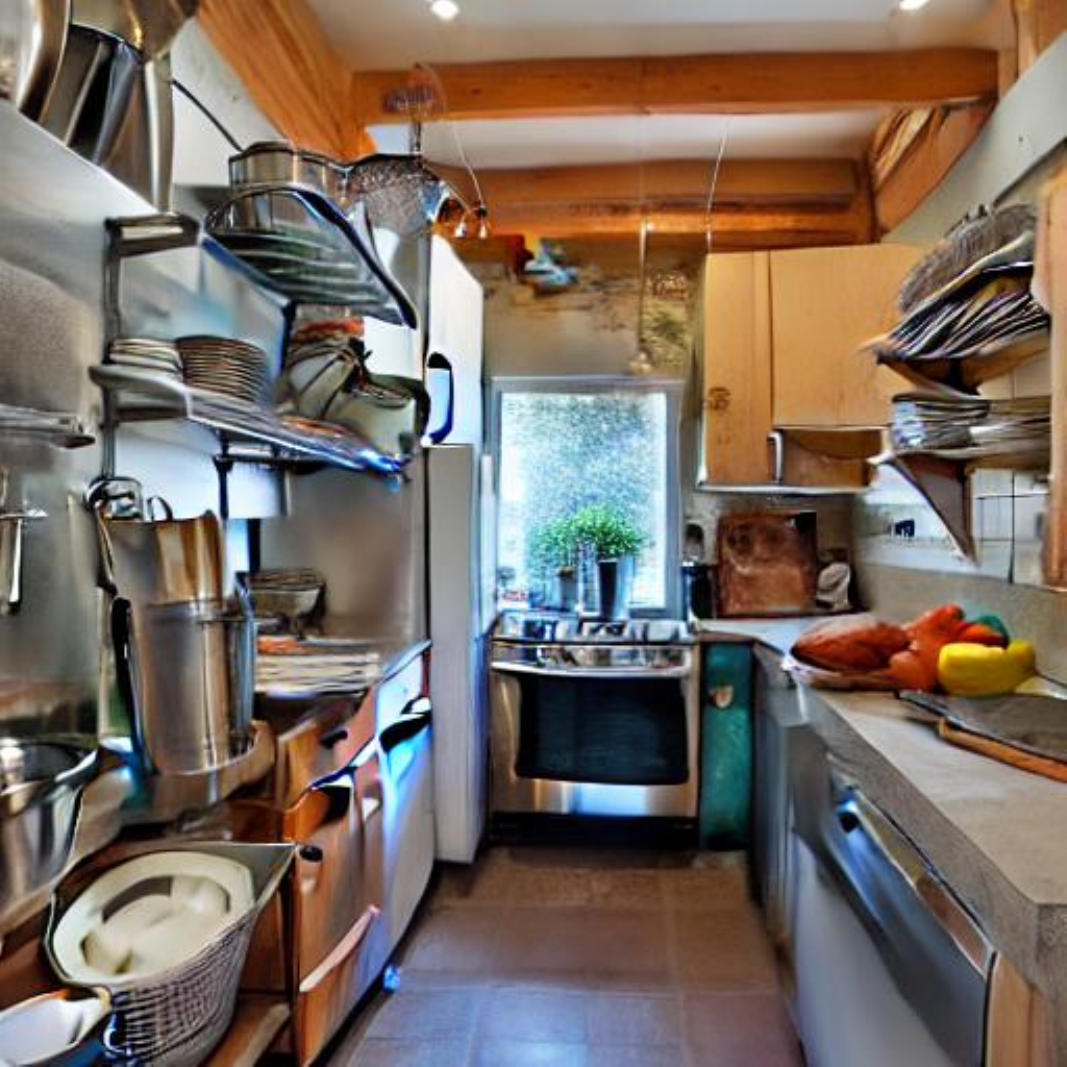} &
    \includegraphics[width=0.10\textwidth]{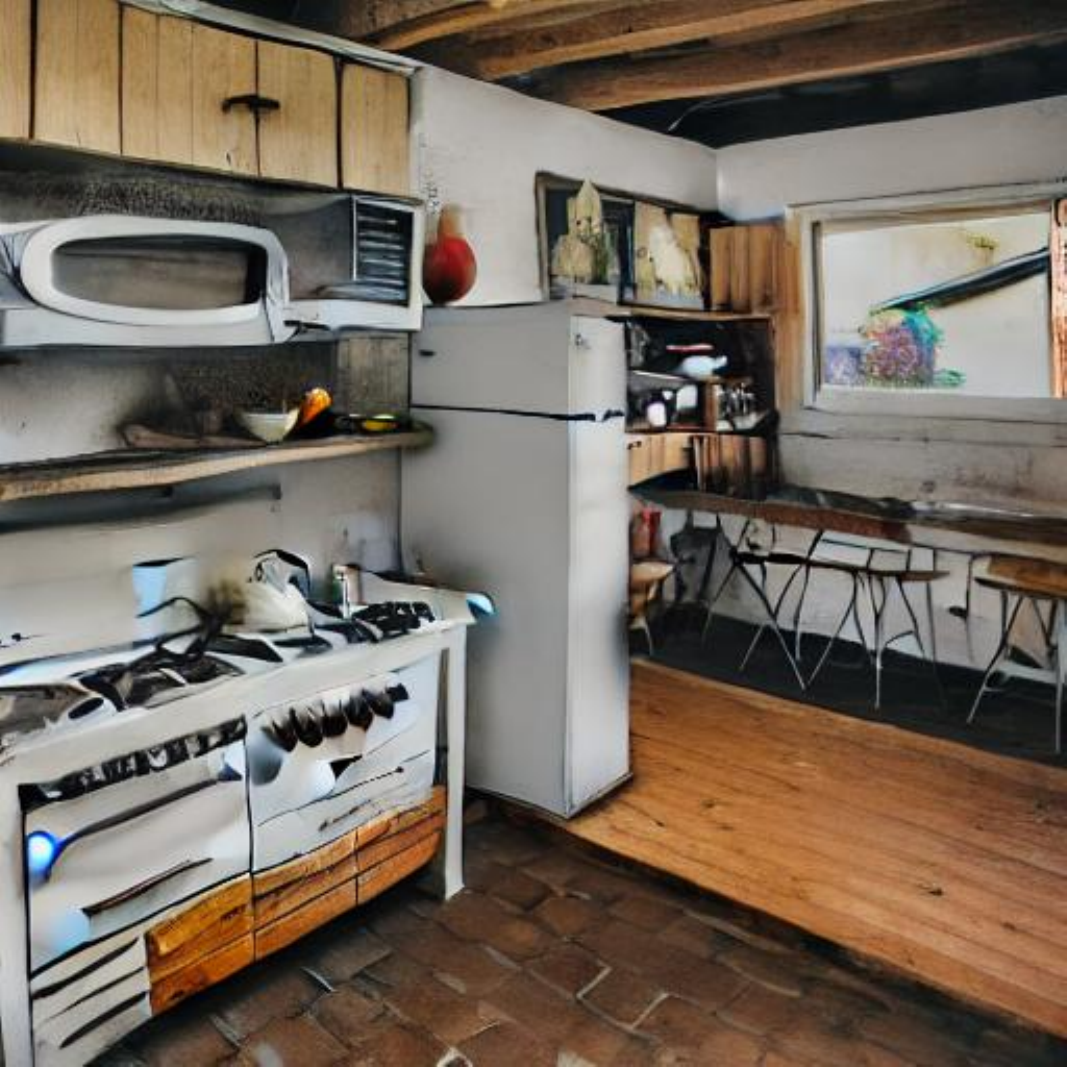} \\
};
\node[below=0pt of bl] {Medium 2};

\matrix (br) [gridmat, right=12pt of bl]
{
    \includegraphics[width=0.10\textwidth]{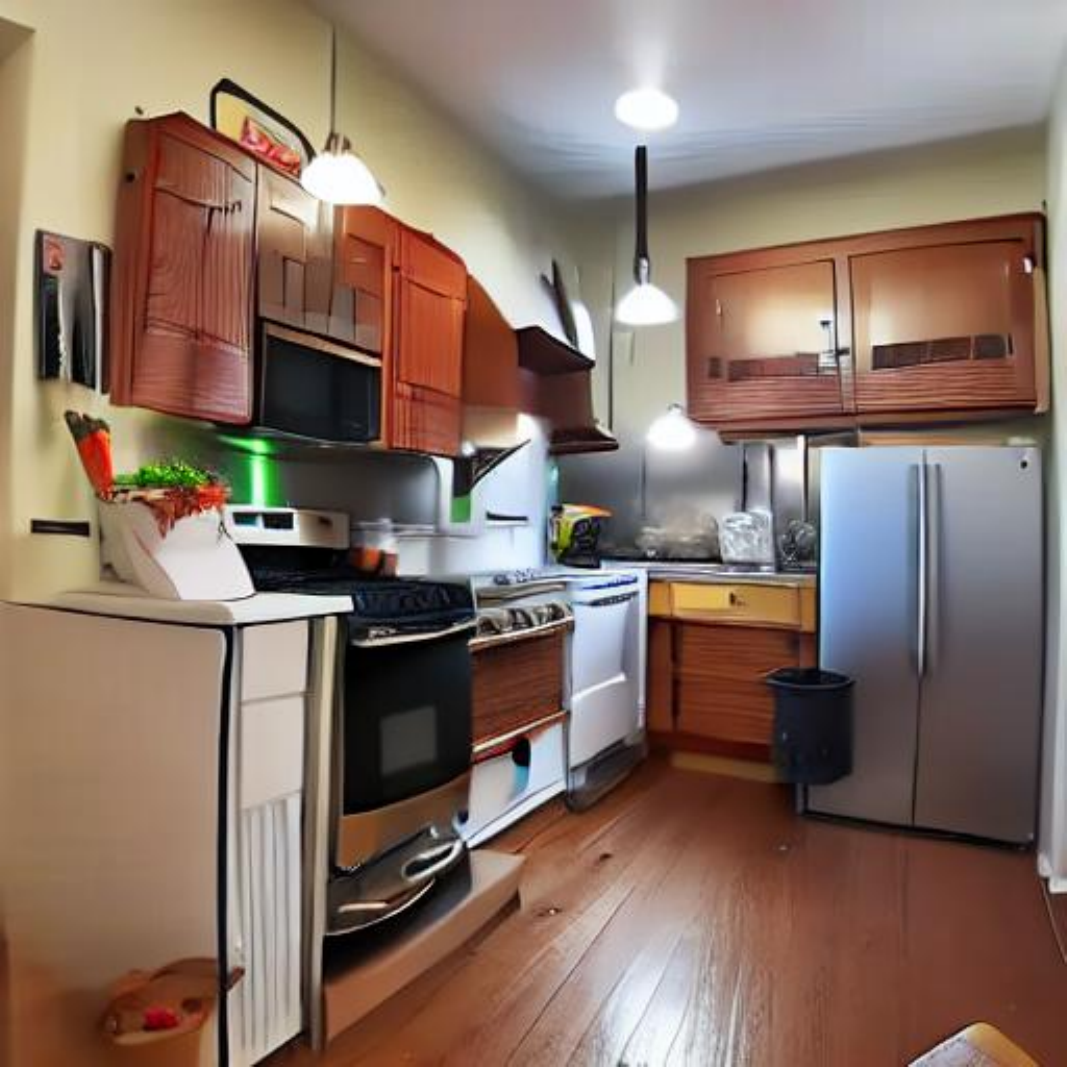} &
    \includegraphics[width=0.10\textwidth]{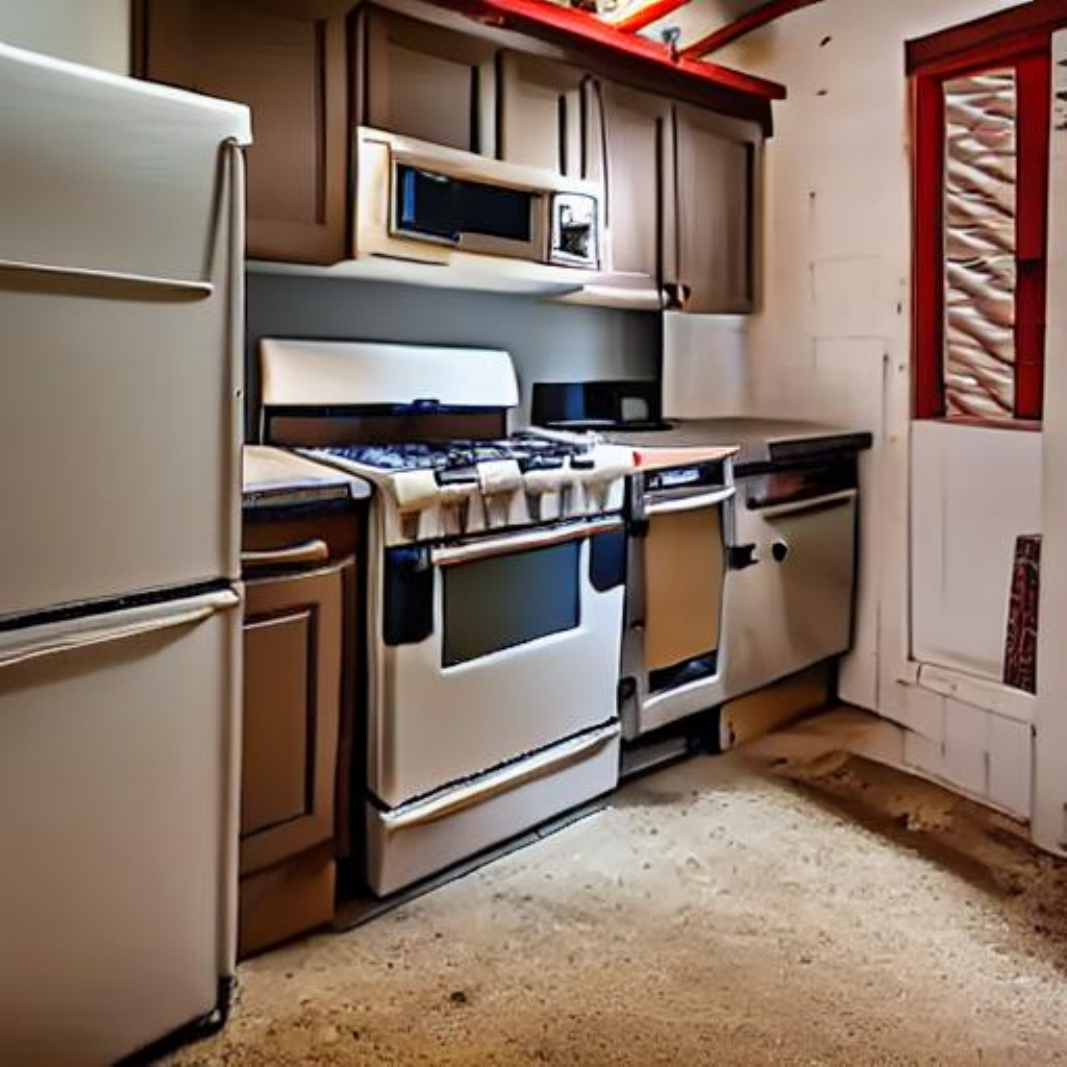} &
    \includegraphics[width=0.10\textwidth]{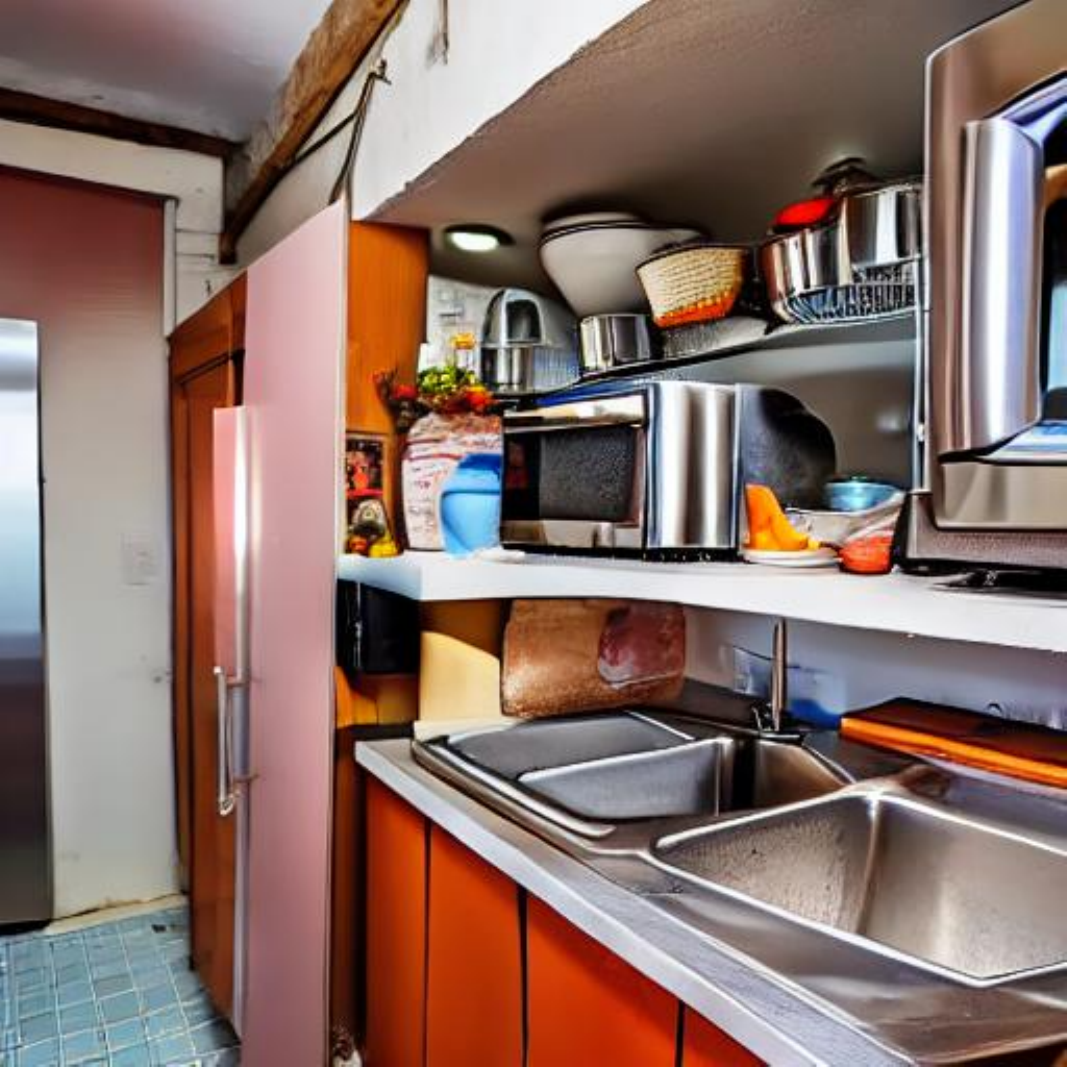} &
    \includegraphics[width=0.10\textwidth]{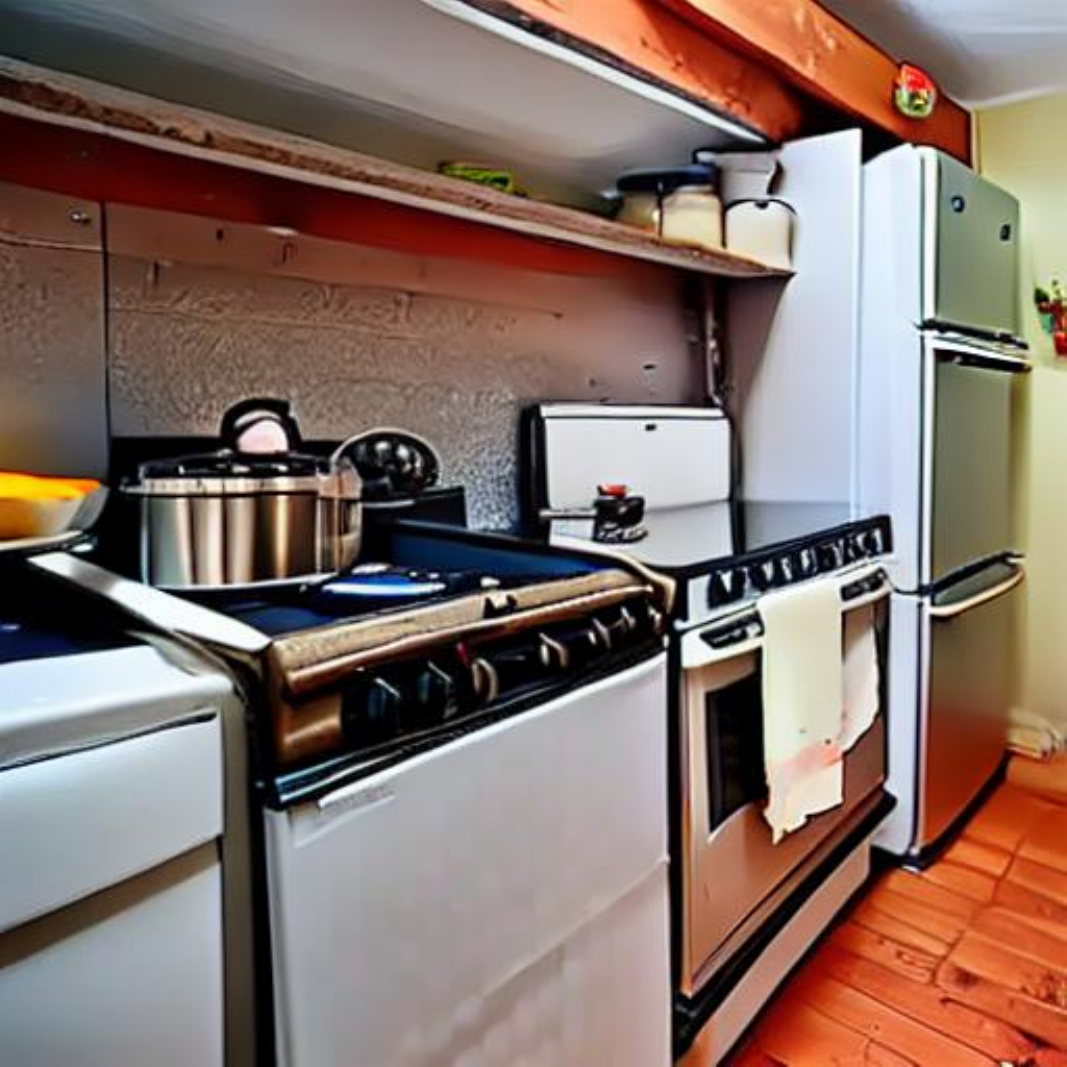} \\
    \includegraphics[width=0.10\textwidth]{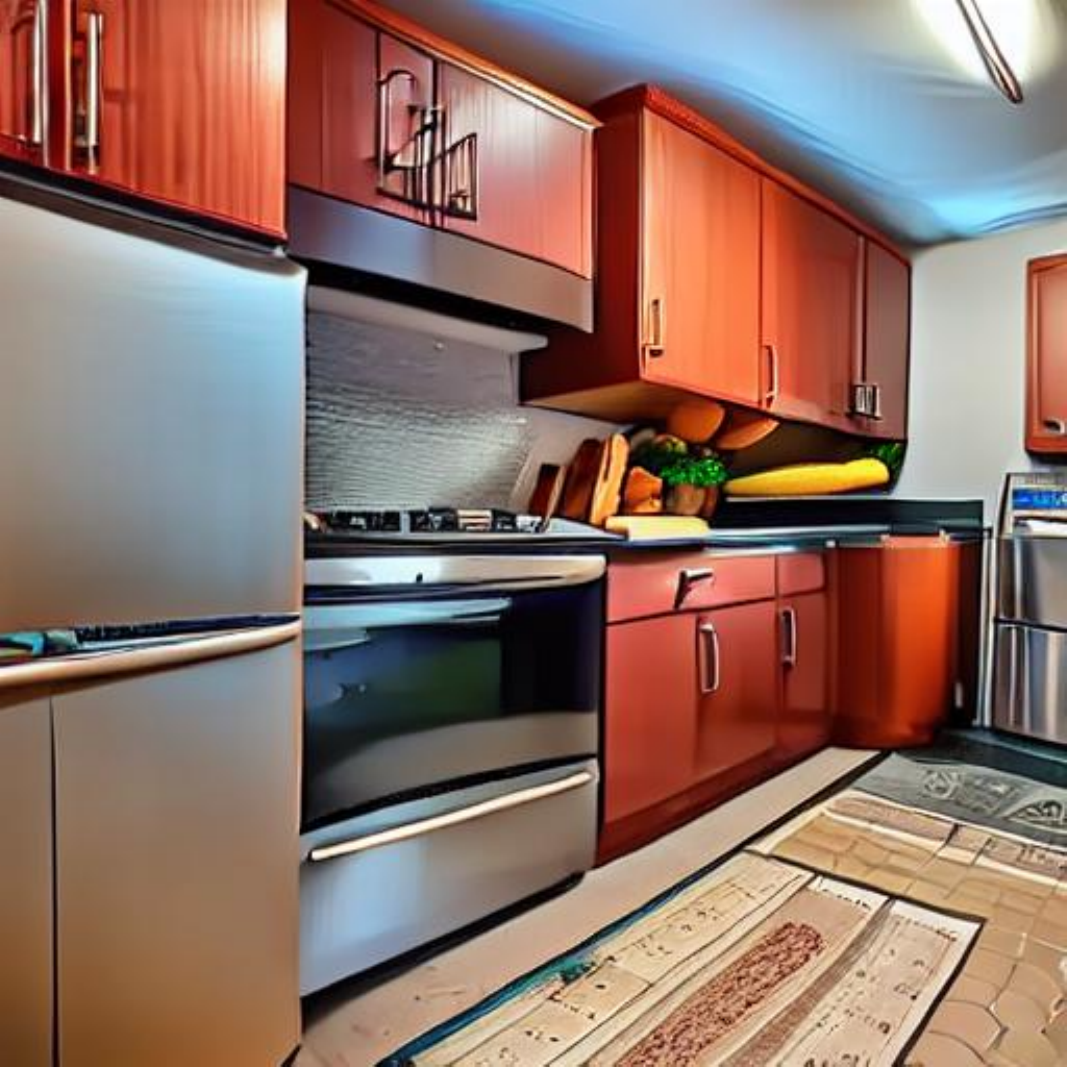} &
    \includegraphics[width=0.10\textwidth]{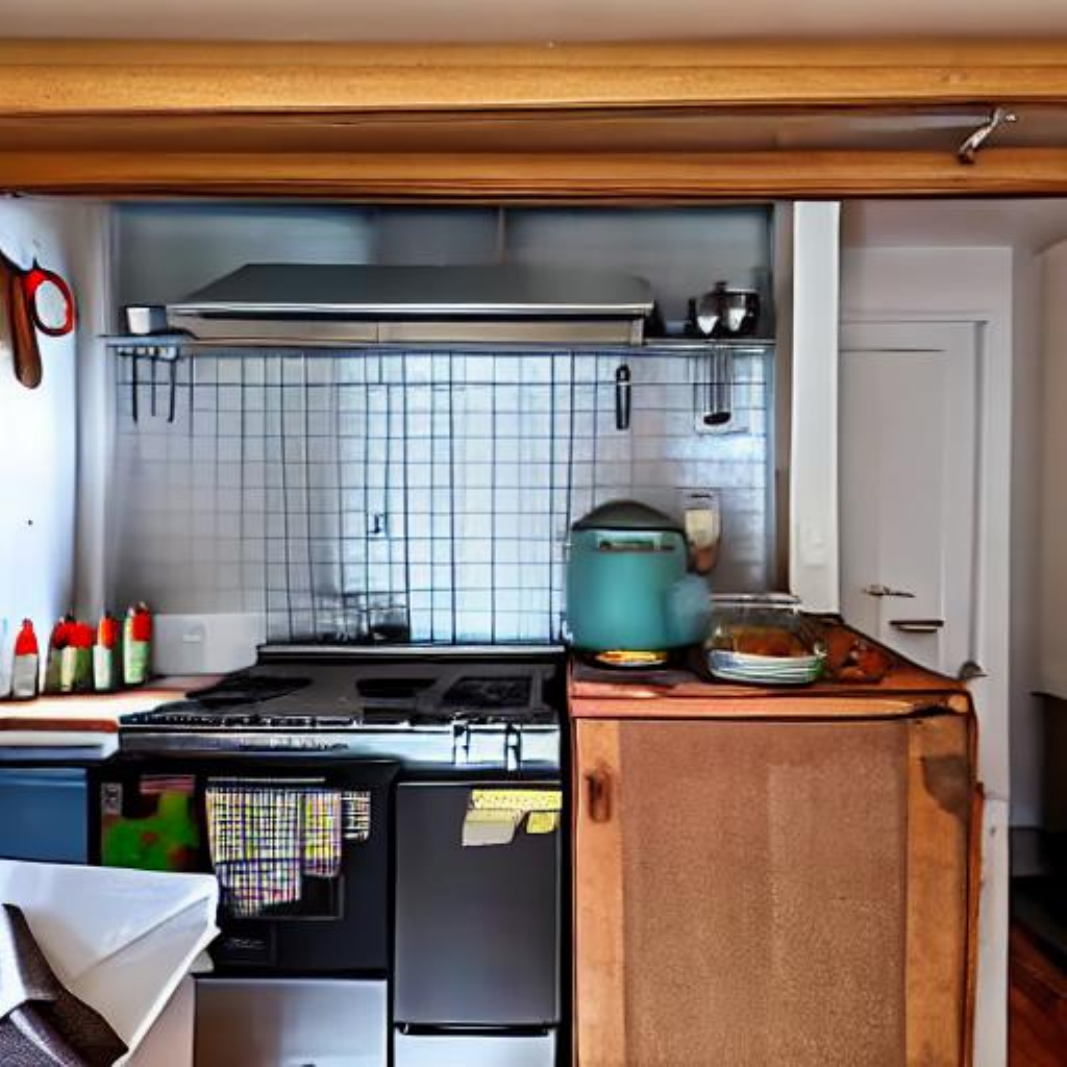} &
    \includegraphics[width=0.10\textwidth]{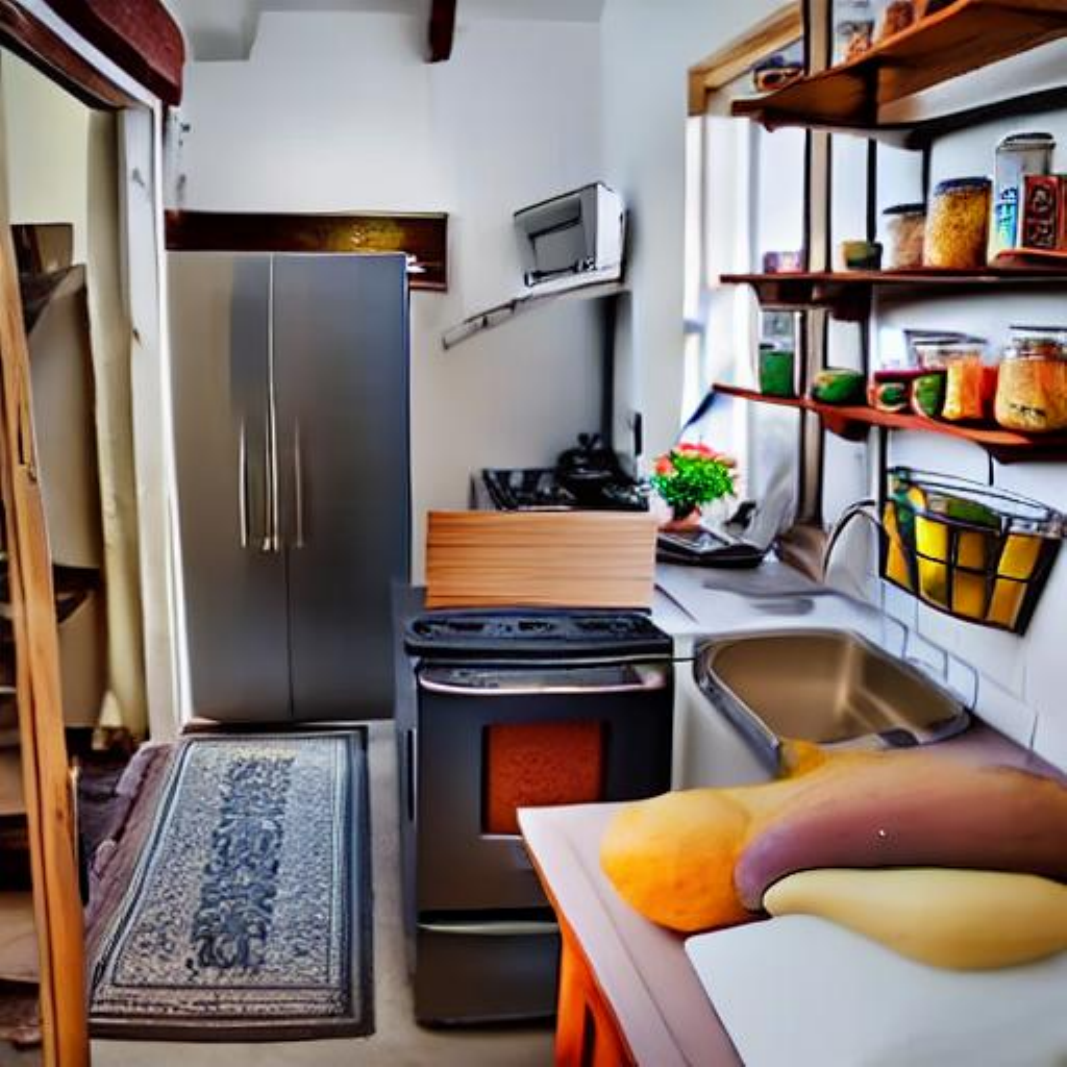} &
    \includegraphics[width=0.10\textwidth]{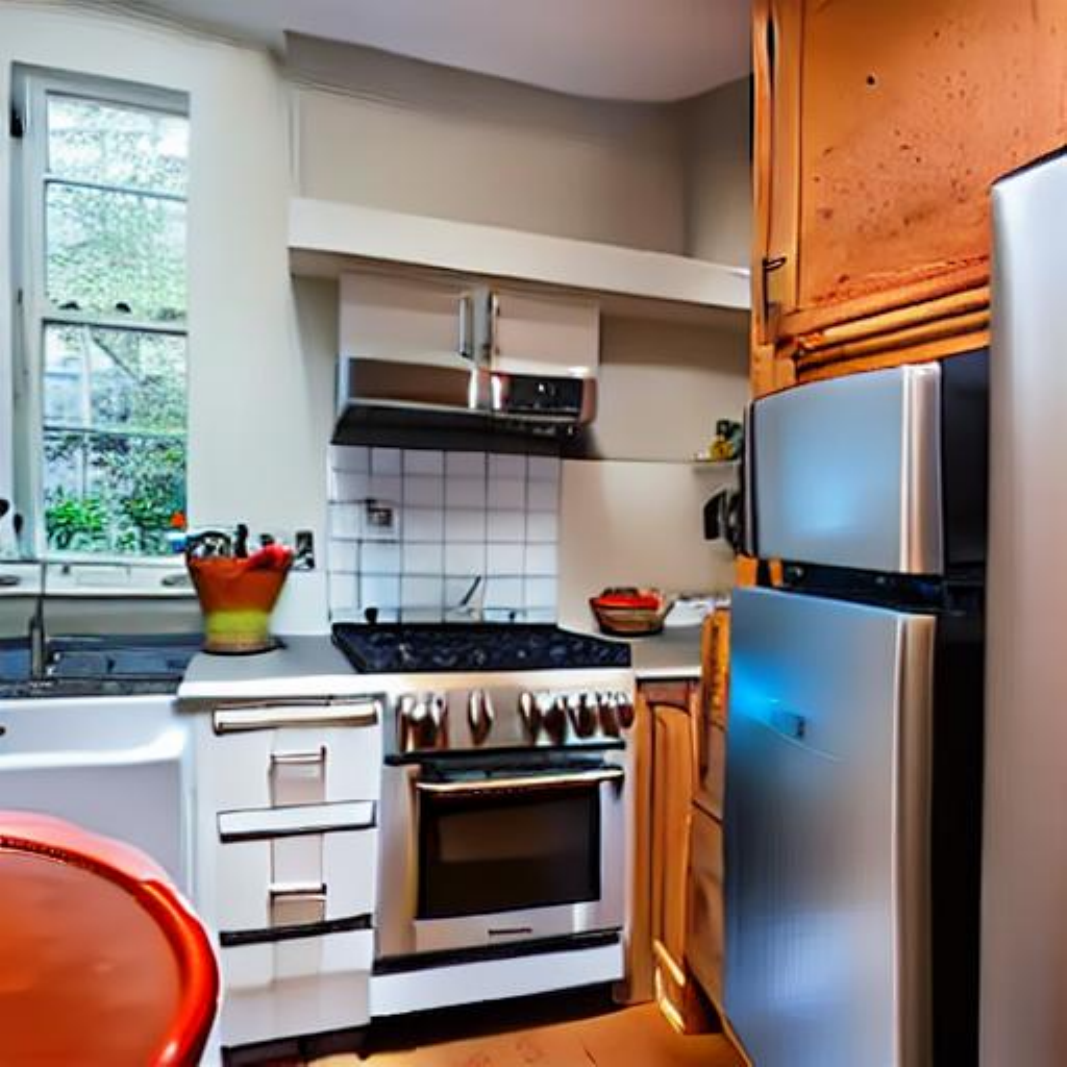} \\
    \includegraphics[width=0.10\textwidth]{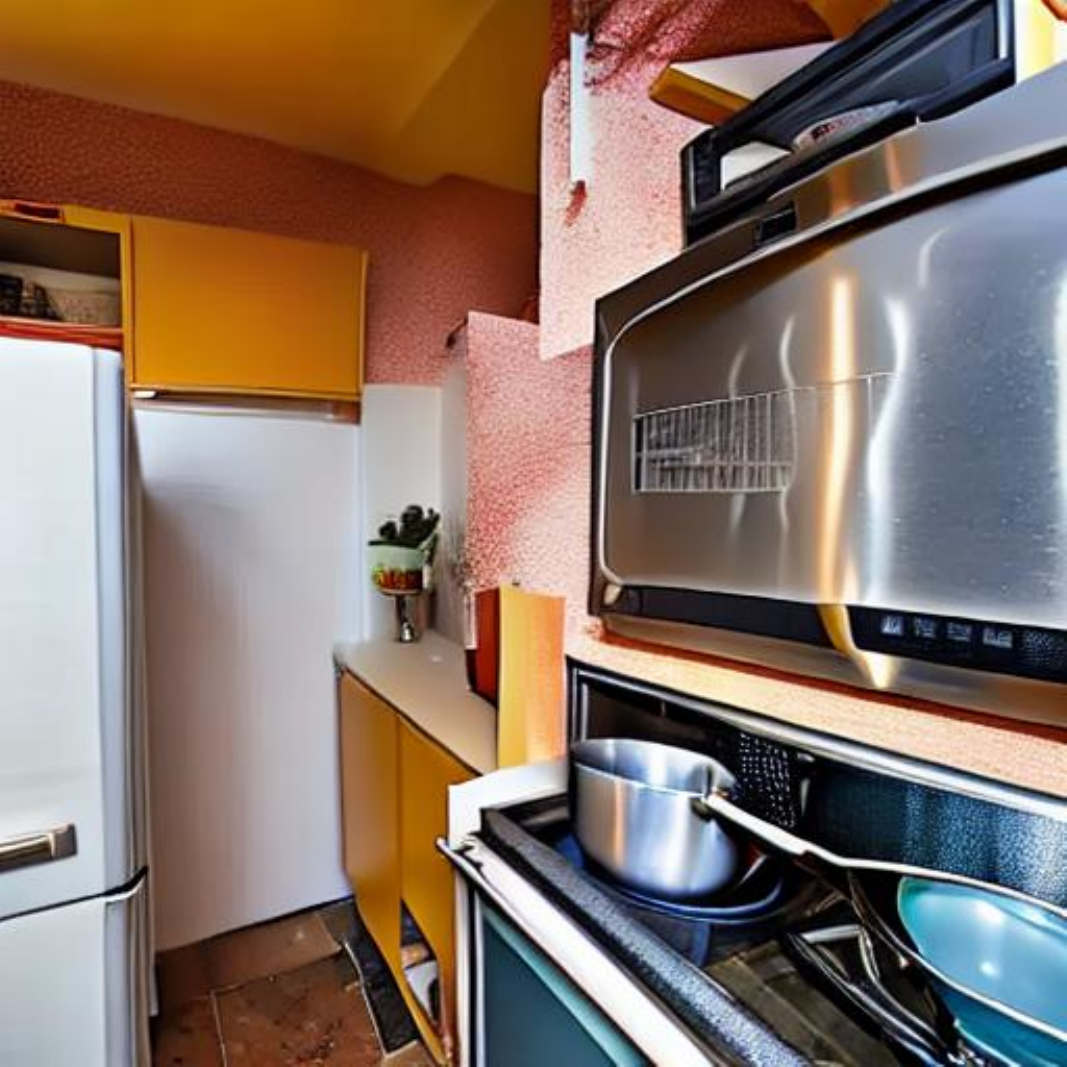} &
    \includegraphics[width=0.10\textwidth]{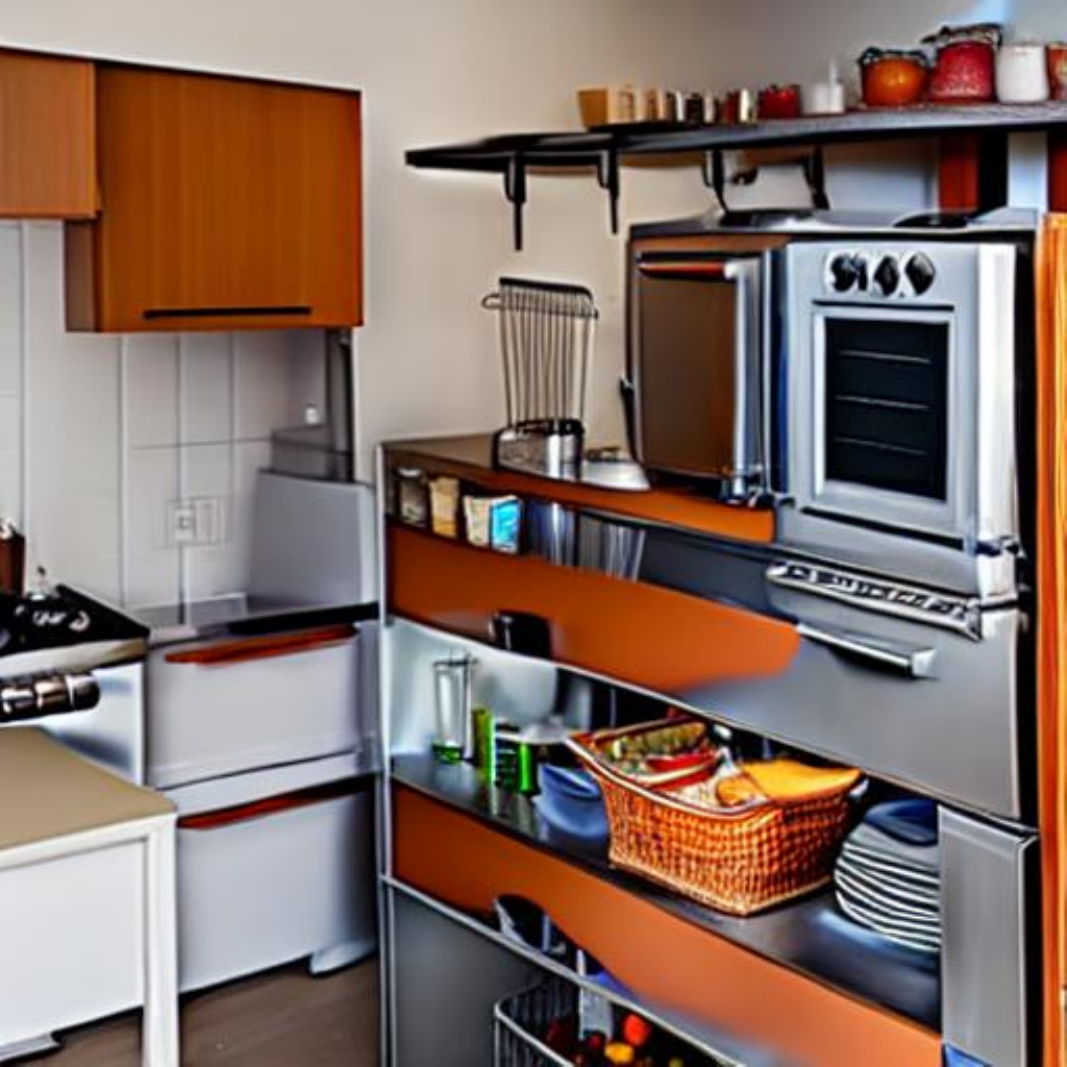} &
    \includegraphics[width=0.10\textwidth]{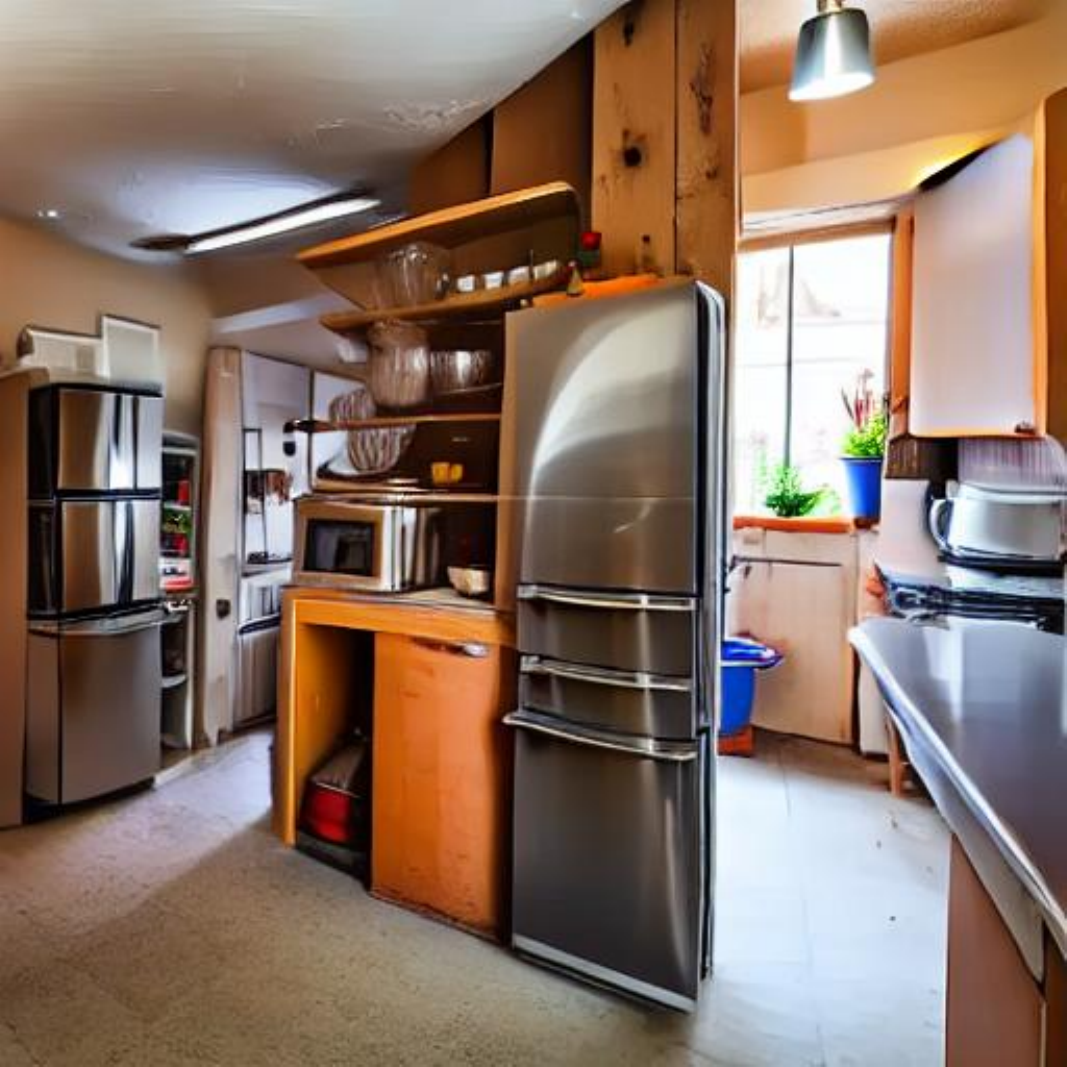} &
    \includegraphics[width=0.10\textwidth]{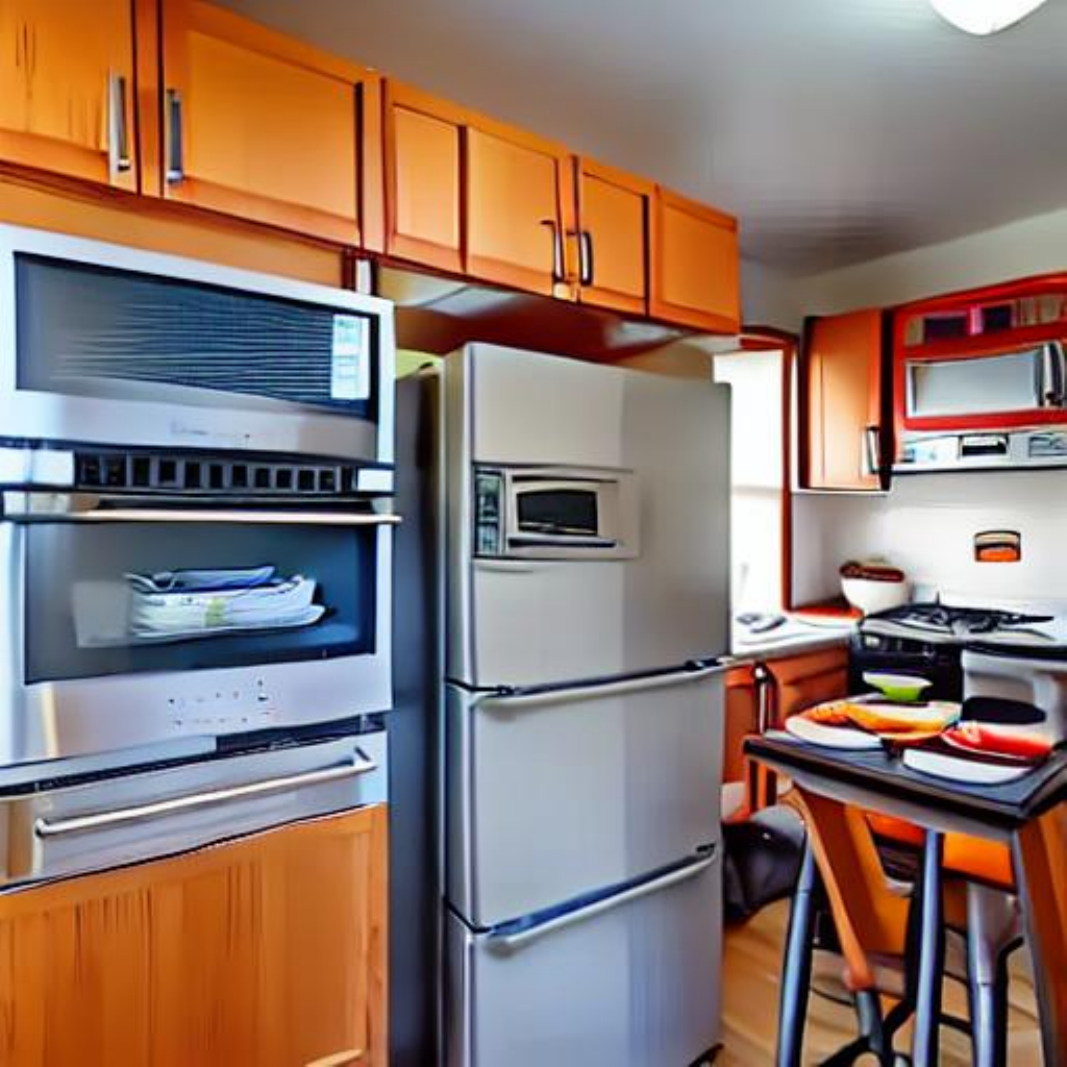} \\
    \includegraphics[width=0.10\textwidth]{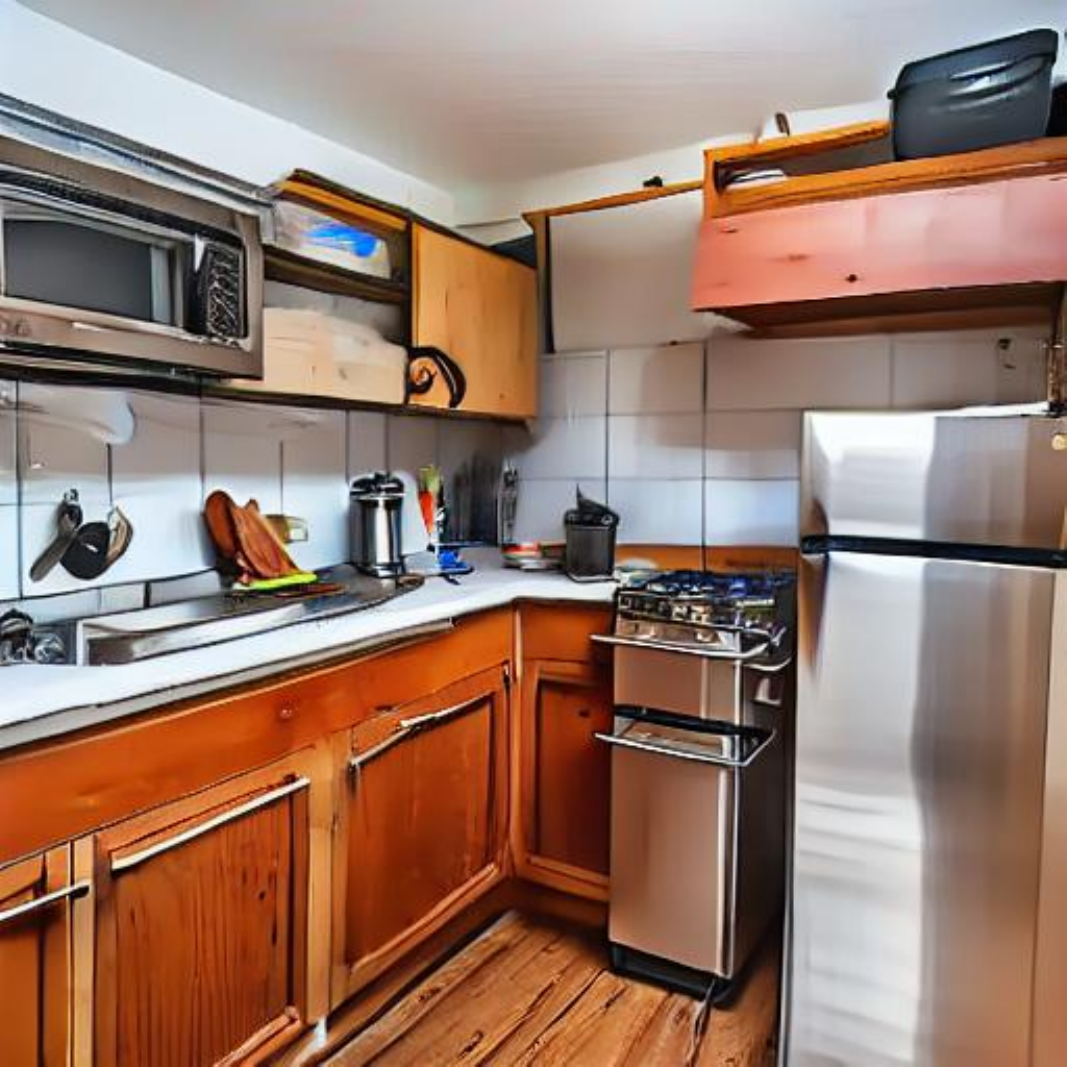} &
    \includegraphics[width=0.10\textwidth]{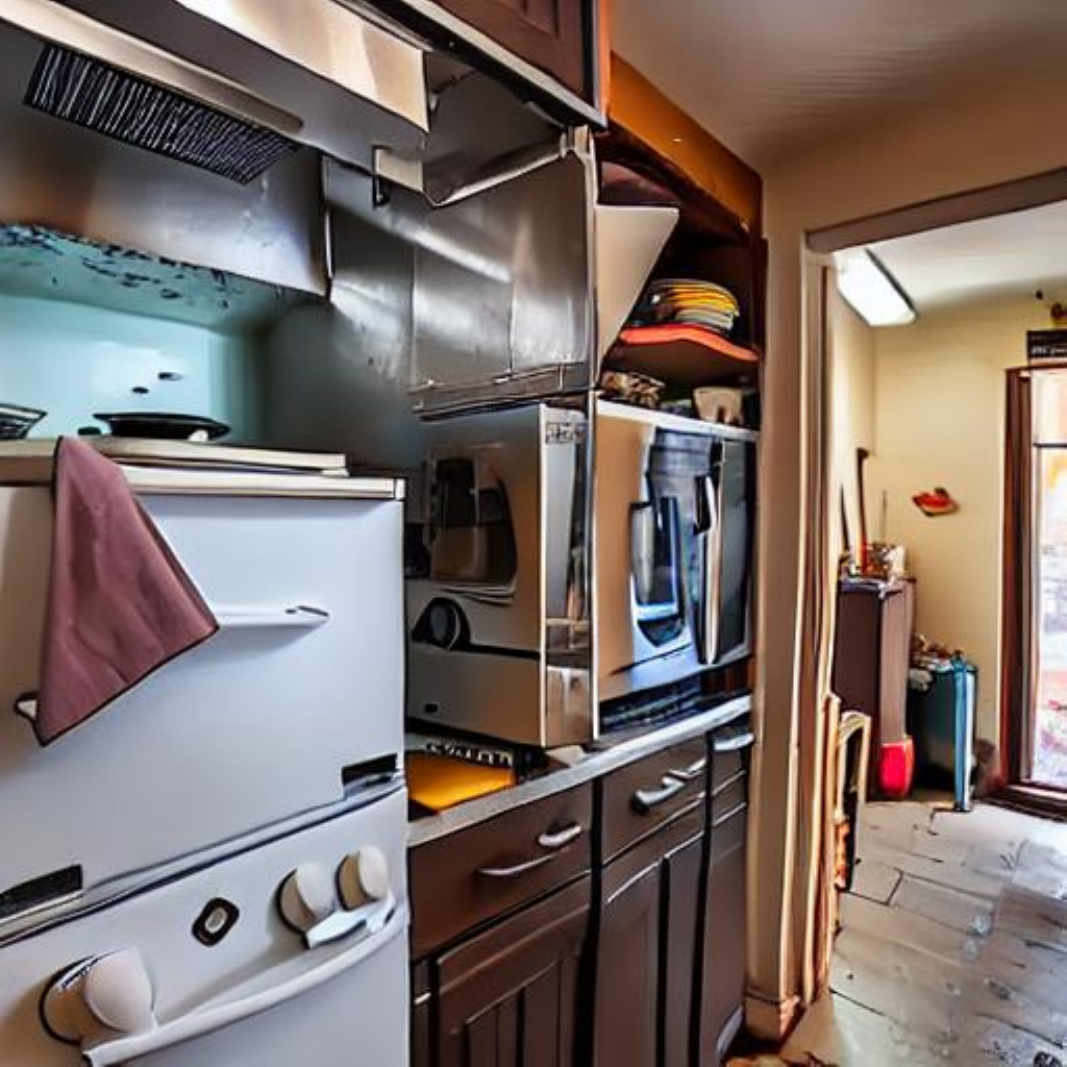} &
    \includegraphics[width=0.10\textwidth]{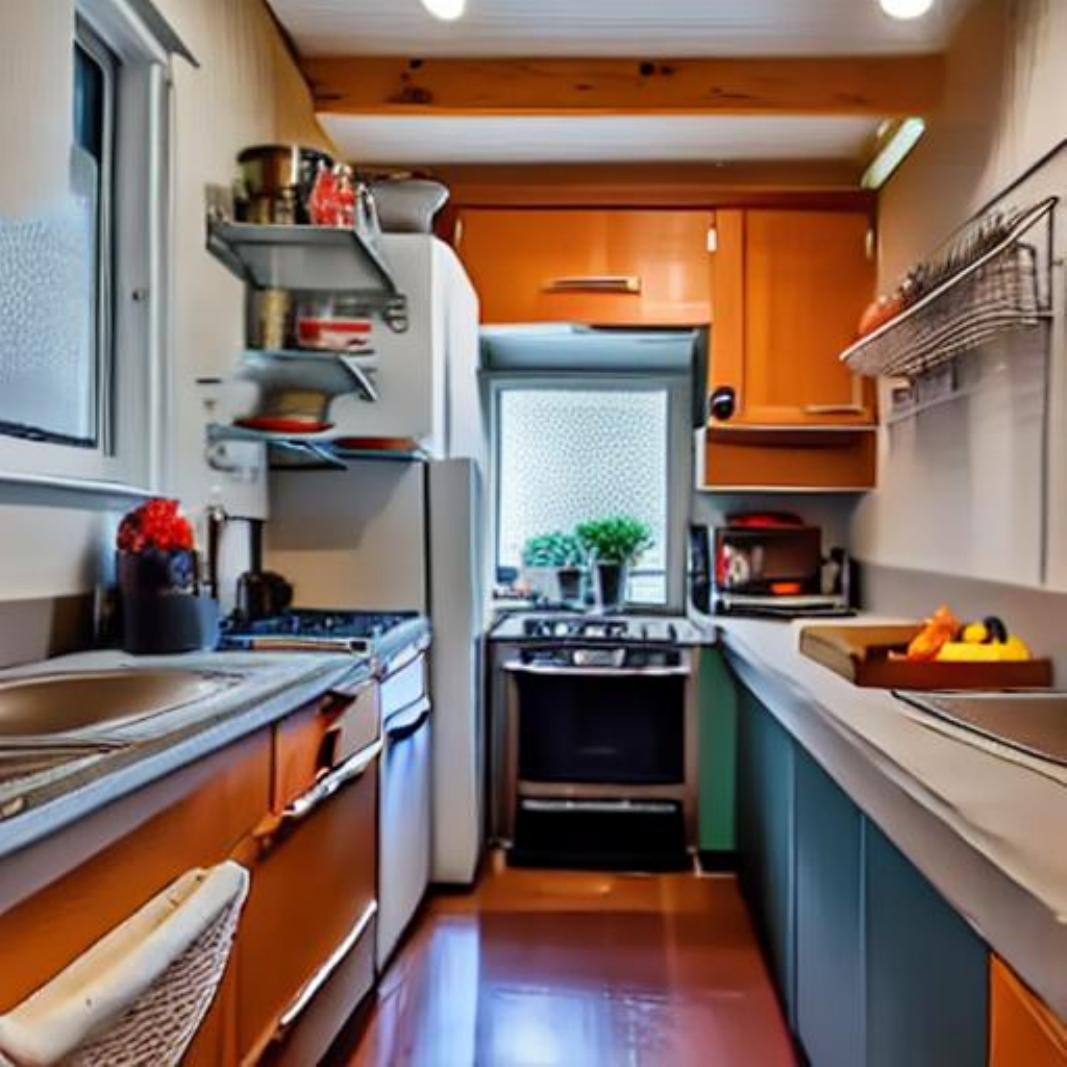} &
    \includegraphics[width=0.10\textwidth]{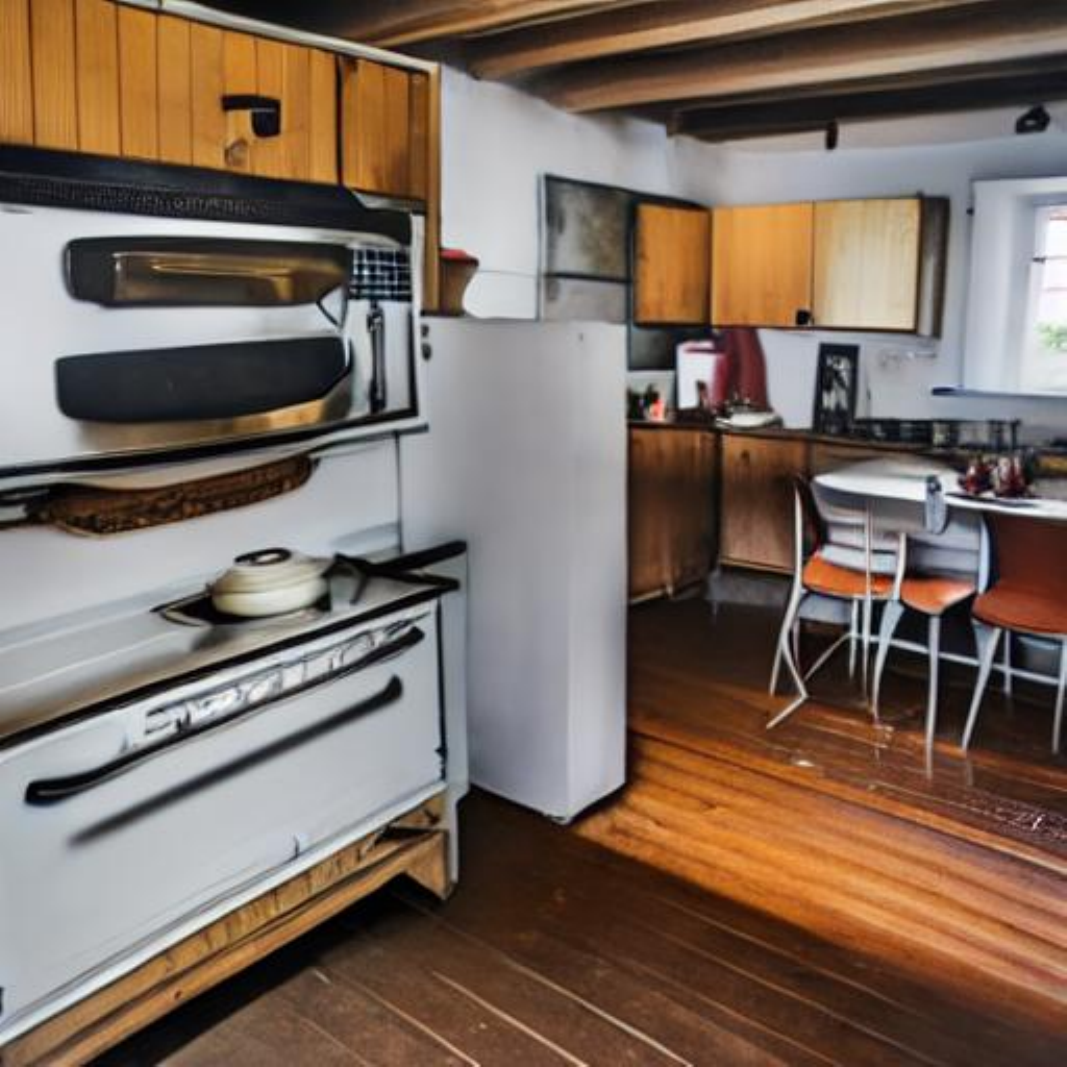} \\
};
\node[below=0pt of br] {DG-CFG (Ours)};

\end{tikzpicture}
\caption{\textbf{Qualitative ablation on Stable Diffusion~1.5 at $\bar{\omega}=23$.}
Each quadrant shows 16 samples generated from the same prompt with different random seeds. \textbf{Top-left:} constant CFG. \textbf{Top-right:} Medium 1 (time balancing). \textbf{Bottom-left:} Medium 2 (time balancing plus signal-content weighting). \textbf{Bottom-right:} DG-CFG (all three components). The prompt is \textit{a narrow kitchen filled with appliances and cooking utensils}.}
\label{fig:ablation}
\end{figure}

\begin{figure}[p]
\centering
\small
\begin{tikzpicture}[
    subimg/.style={
        inner sep=0pt, outer sep=0pt,
        anchor=center
    },
    gridmat/.style={
        matrix of nodes,
        nodes=subimg,
        inner sep=0pt, outer sep=0pt,
        column sep=1pt,
        row sep=1pt
    },
    outer/.style={
        matrix of nodes,
        nodes={inner sep=0pt,outer sep=0pt},
        inner sep=0pt, outer sep=0pt,
        column sep=5pt,
        row sep=5pt
    }
]

\matrix (tl) [gridmat]
{
    \includegraphics[width=0.10\textwidth]{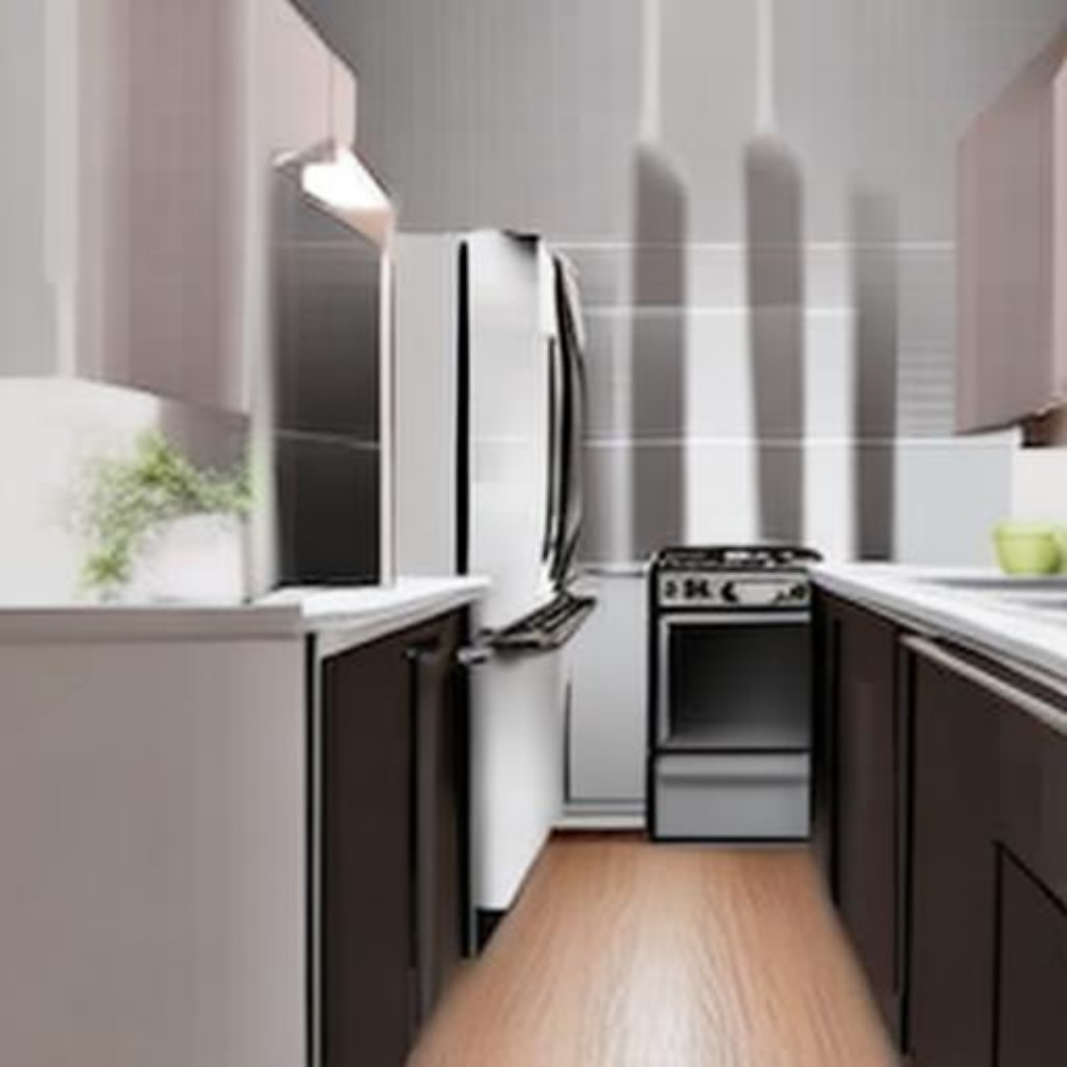} &
    \includegraphics[width=0.10\textwidth]{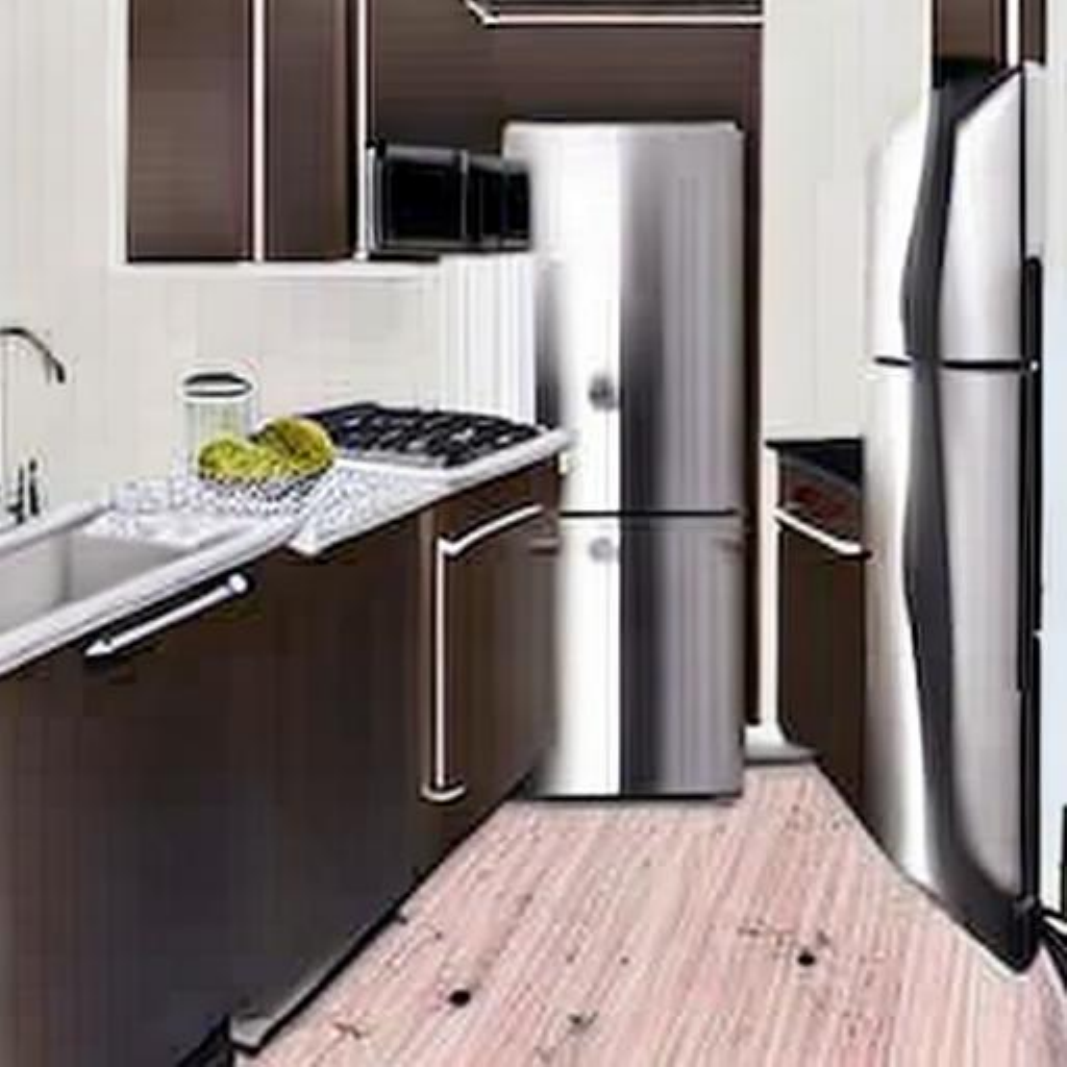} &
    \includegraphics[width=0.10\textwidth]{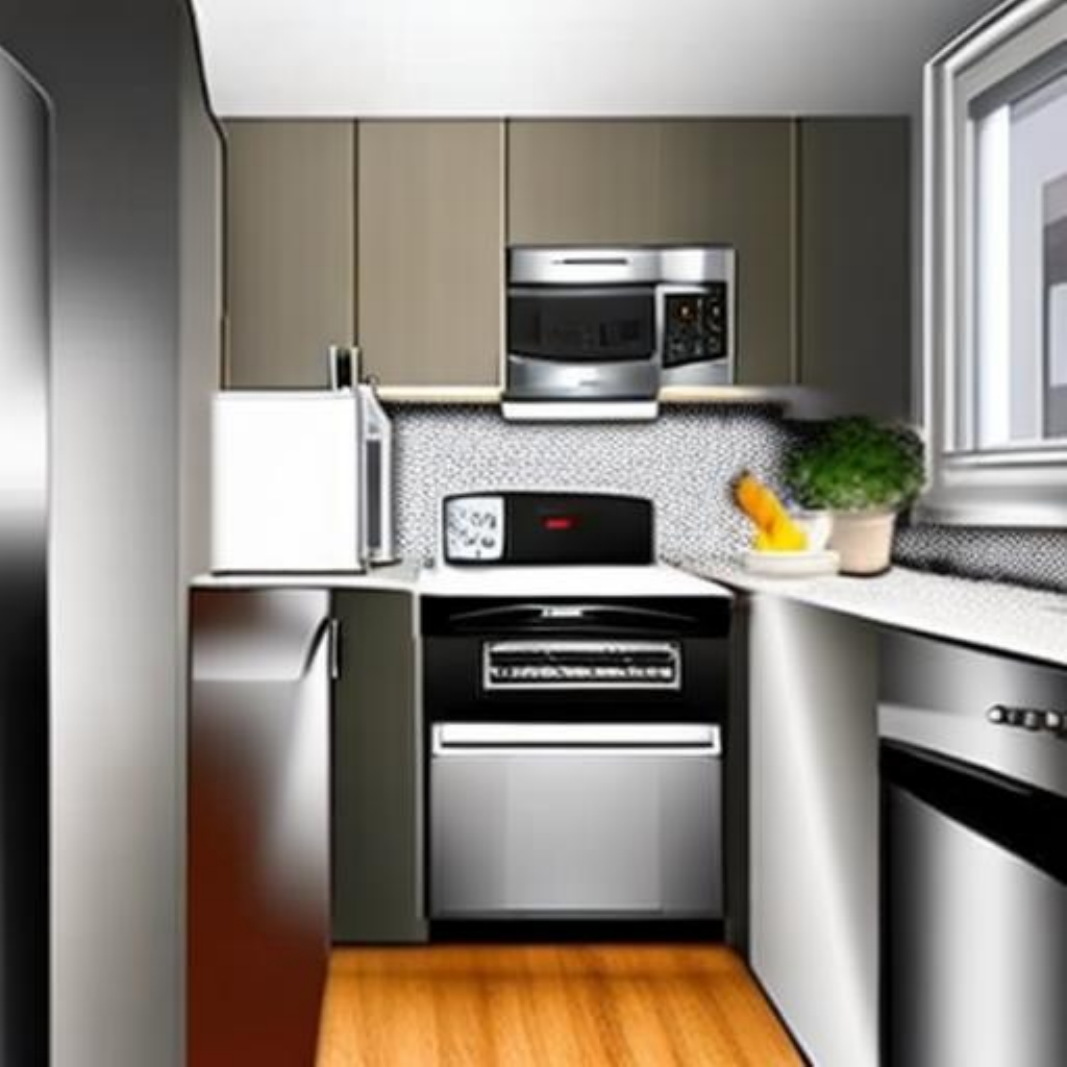} &
    \includegraphics[width=0.10\textwidth]{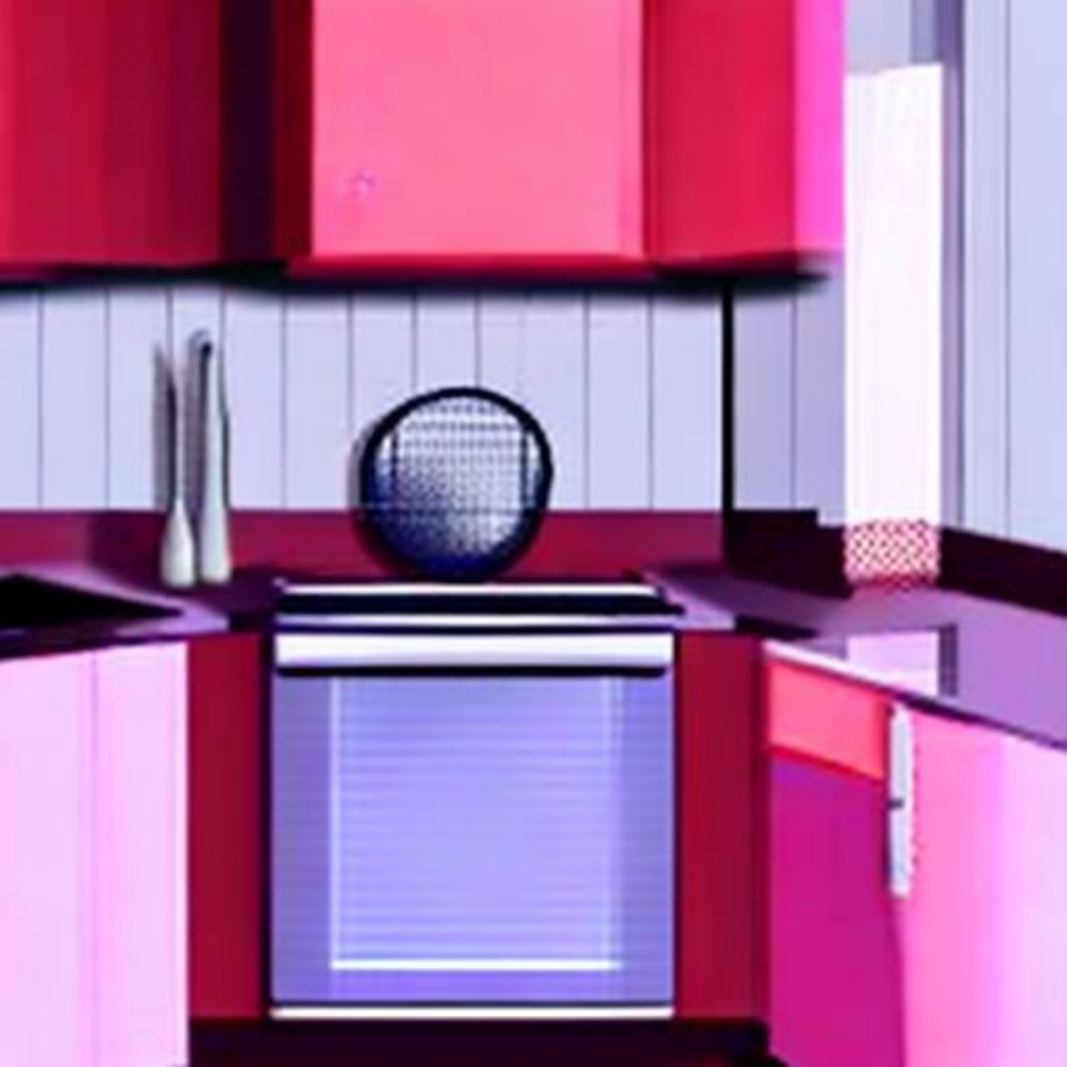} \\
    \includegraphics[width=0.10\textwidth]{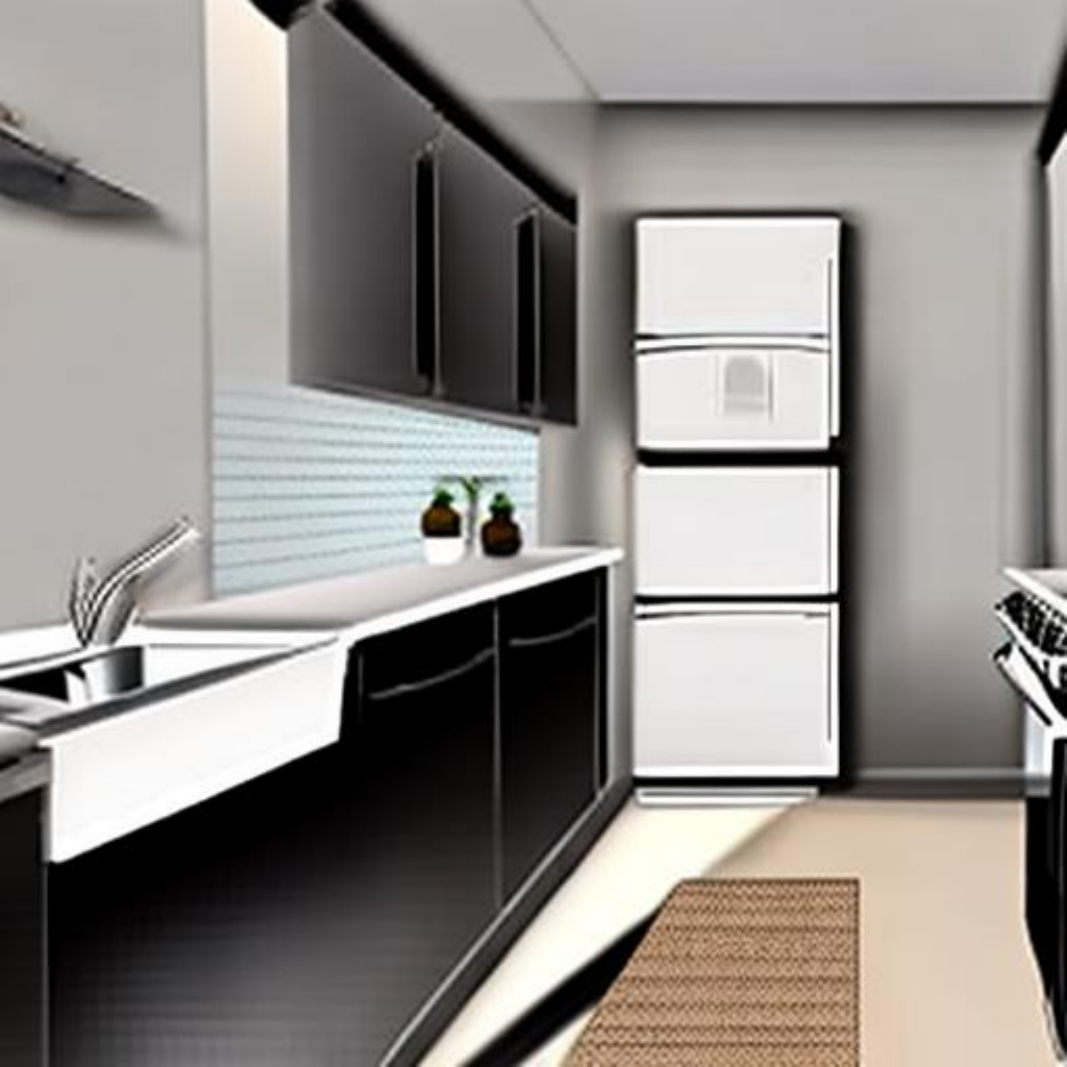} &
    \includegraphics[width=0.10\textwidth]{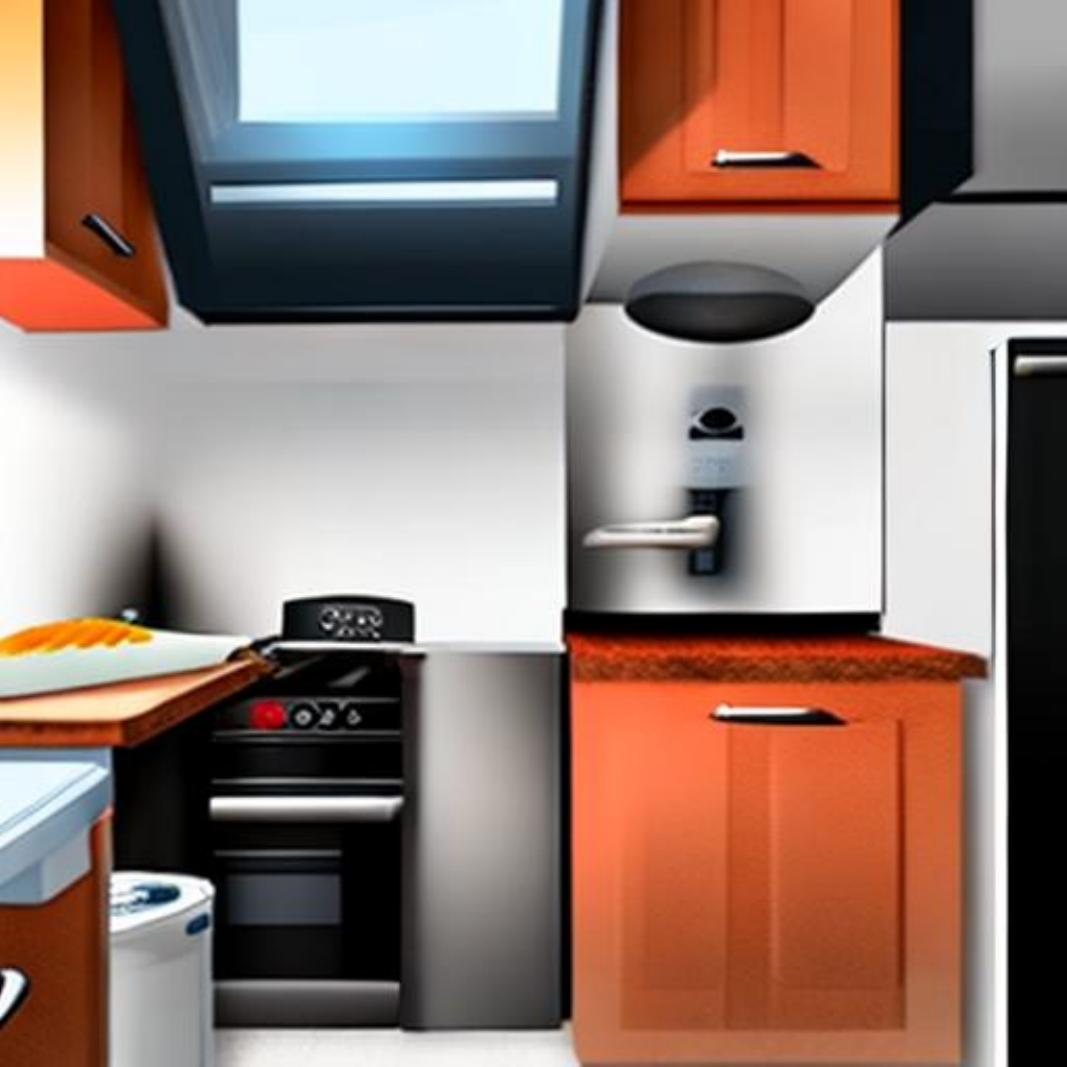} &
    \includegraphics[width=0.10\textwidth]{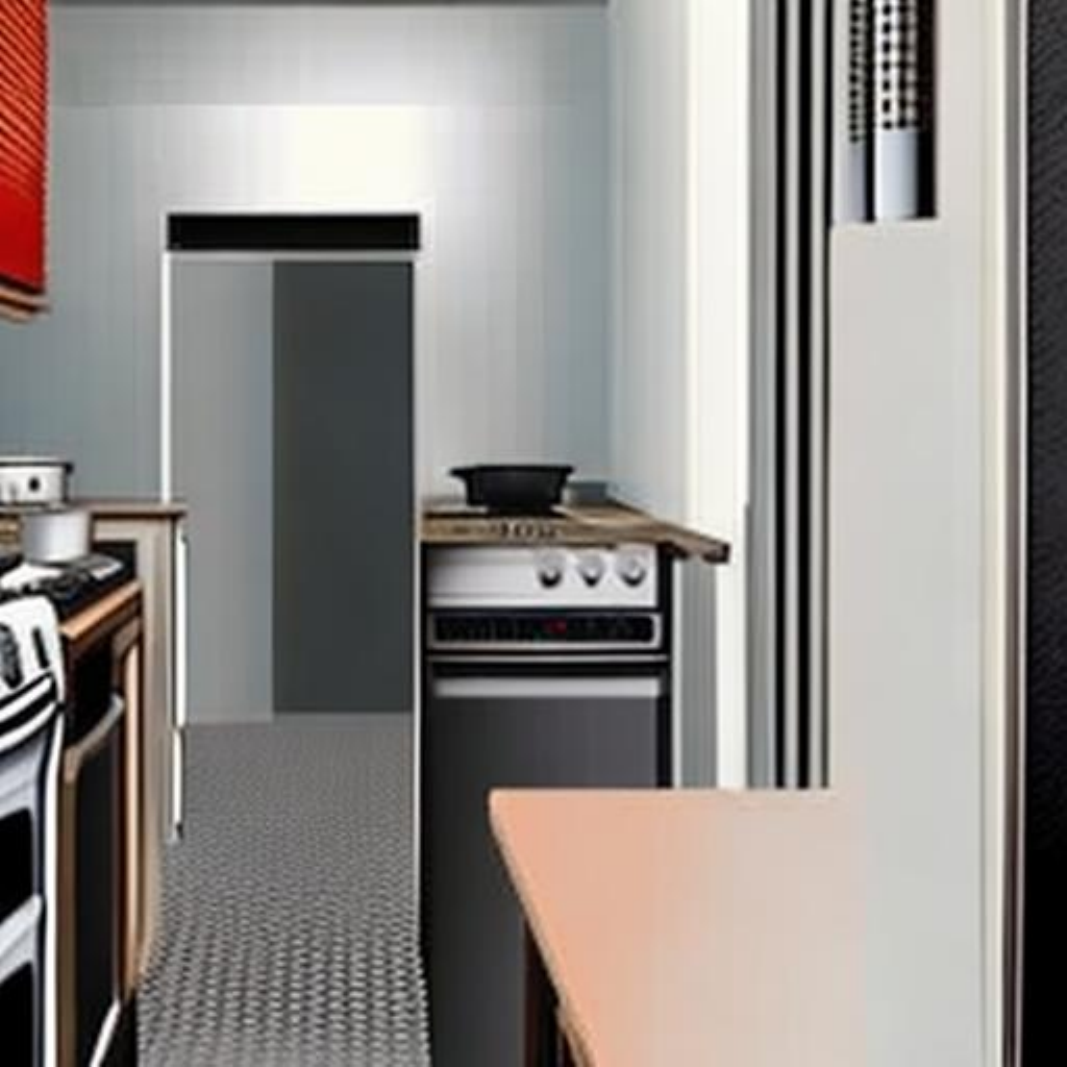} &
    \includegraphics[width=0.10\textwidth]{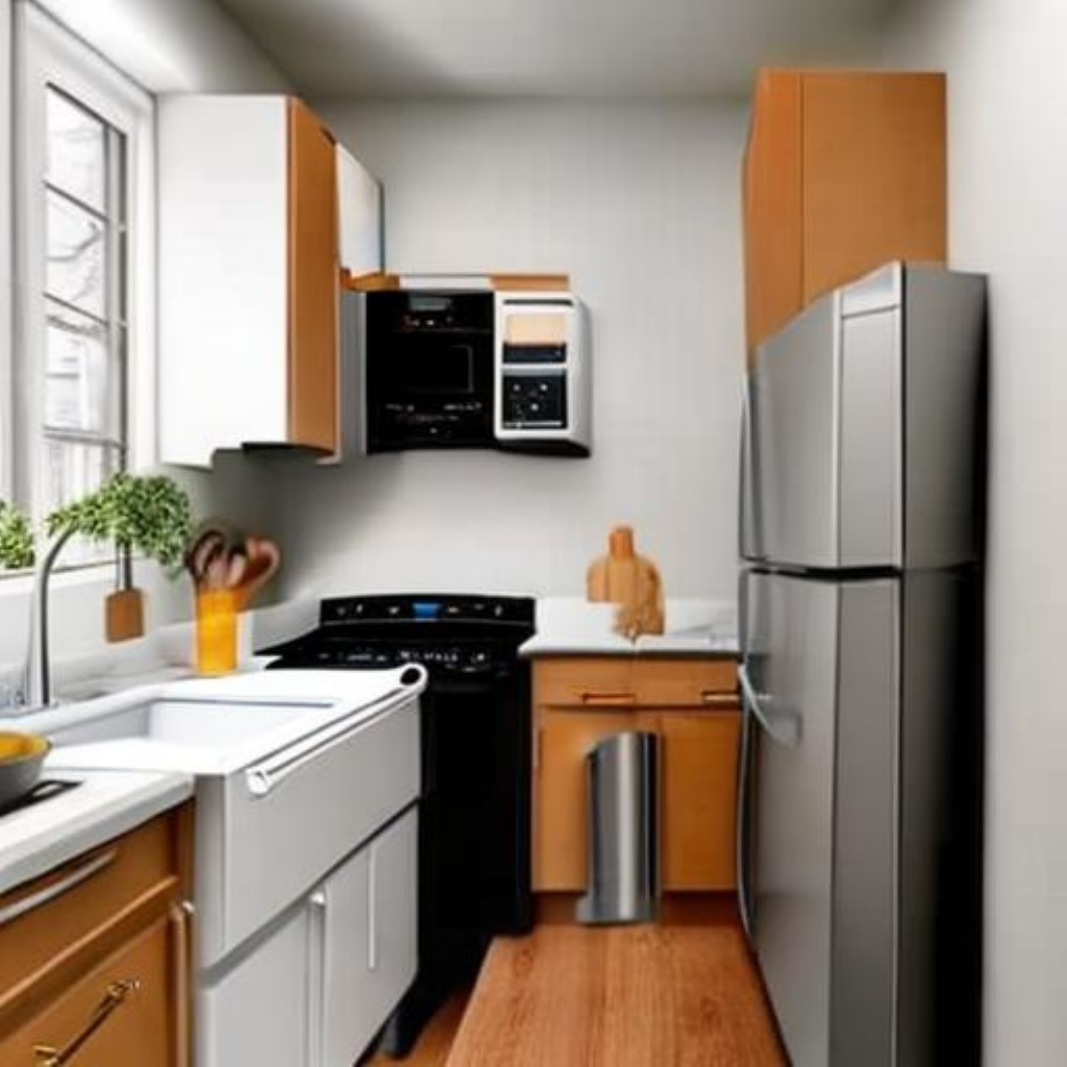} \\
    \includegraphics[width=0.10\textwidth]{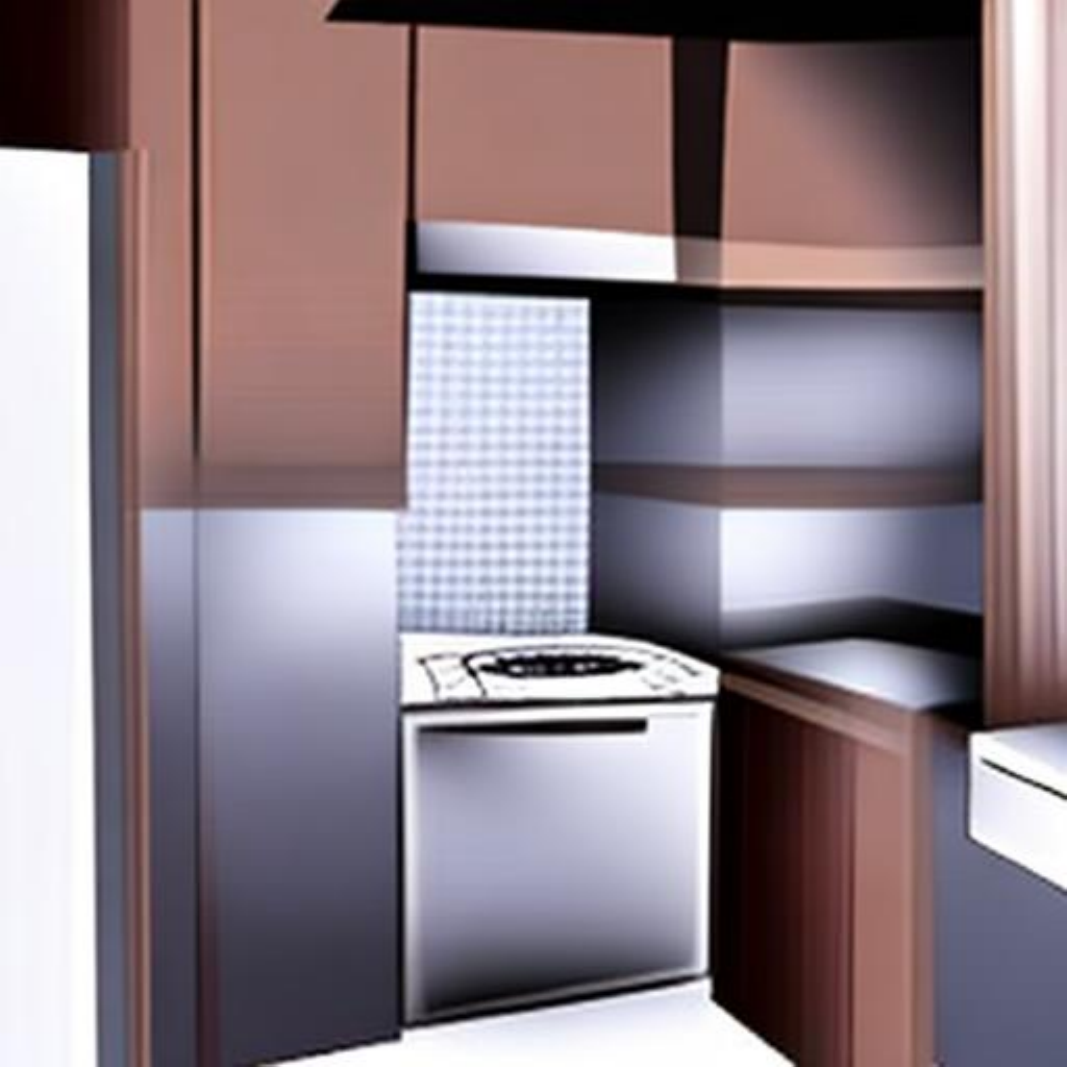} &
    \includegraphics[width=0.10\textwidth]{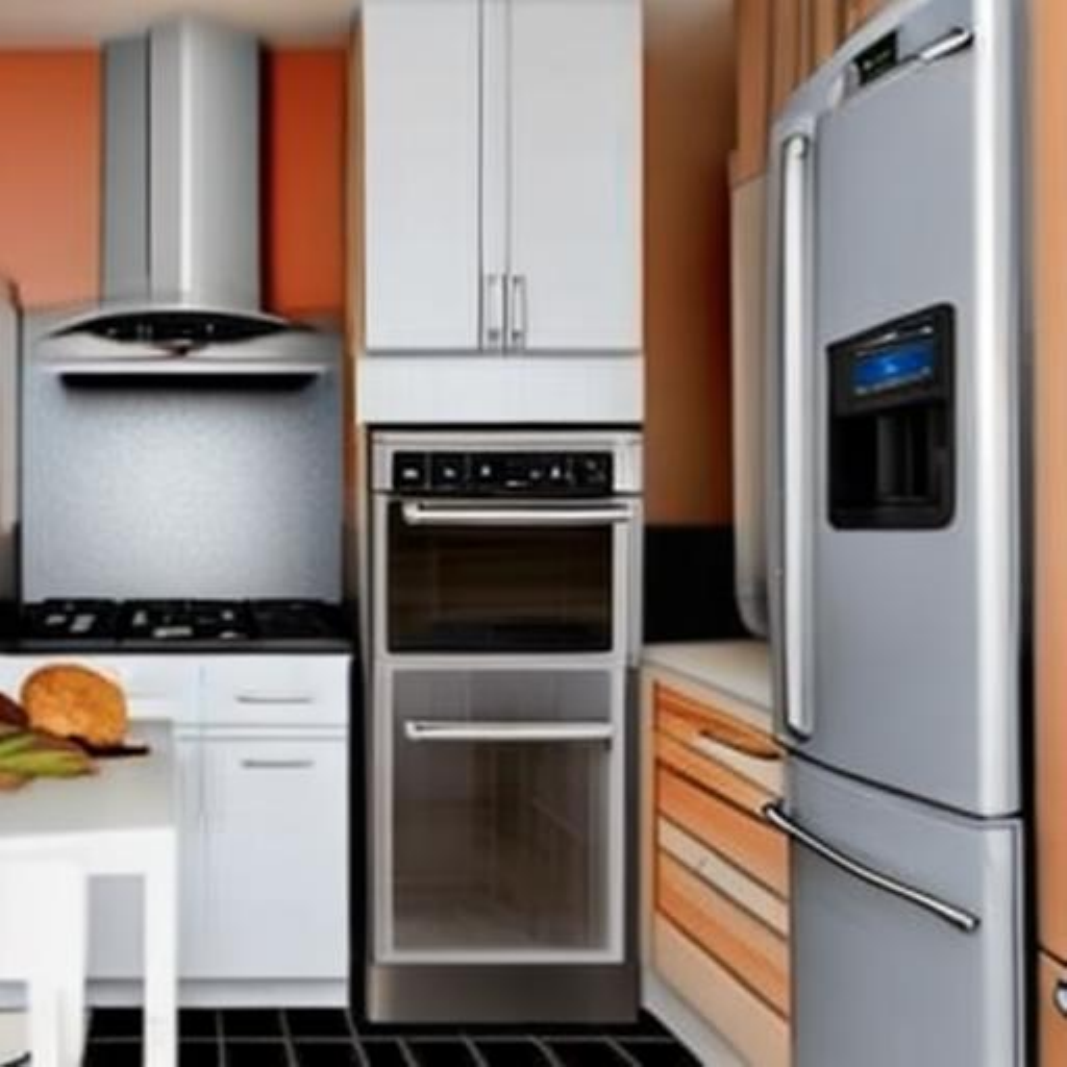} &
    \includegraphics[width=0.10\textwidth]{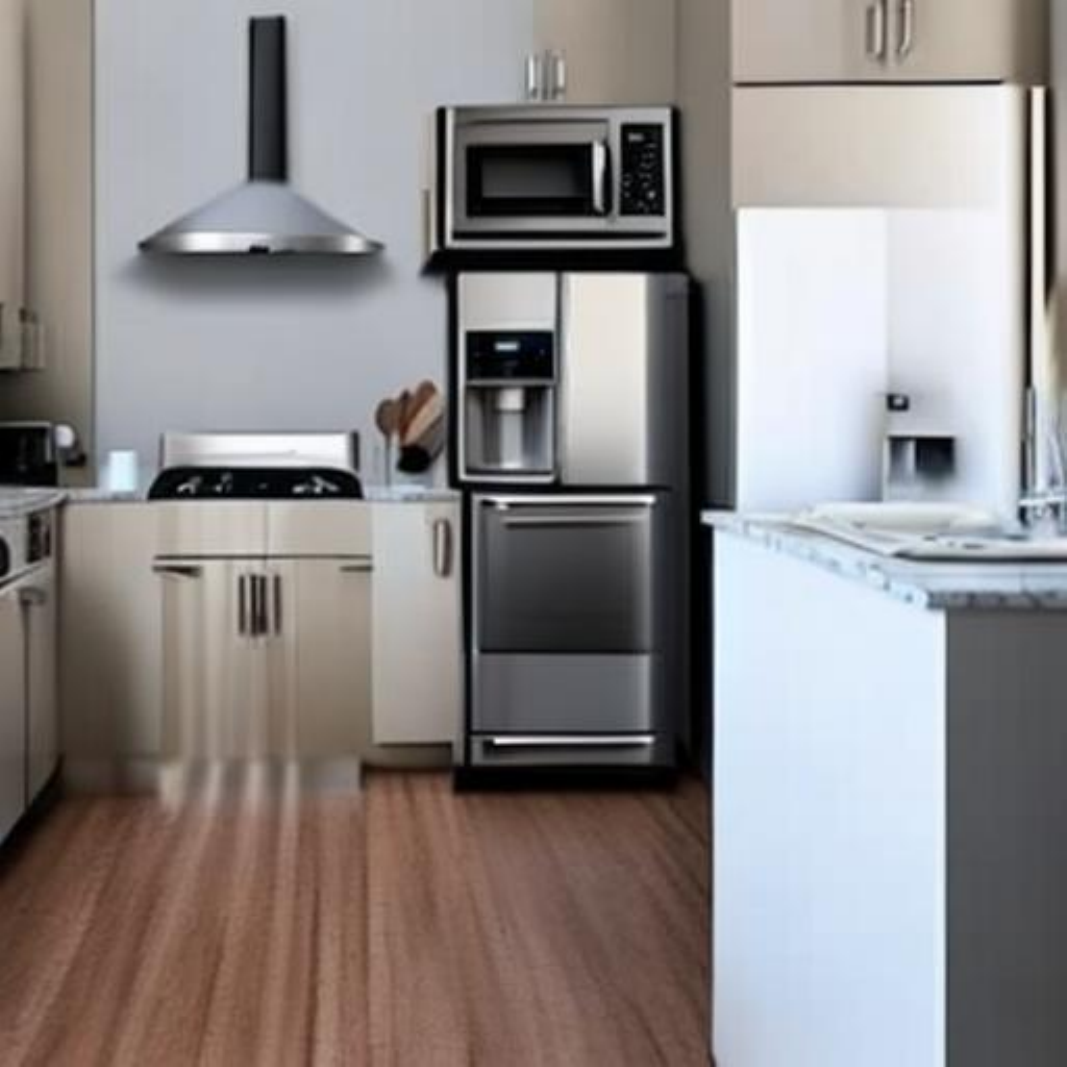} &
    \includegraphics[width=0.10\textwidth]{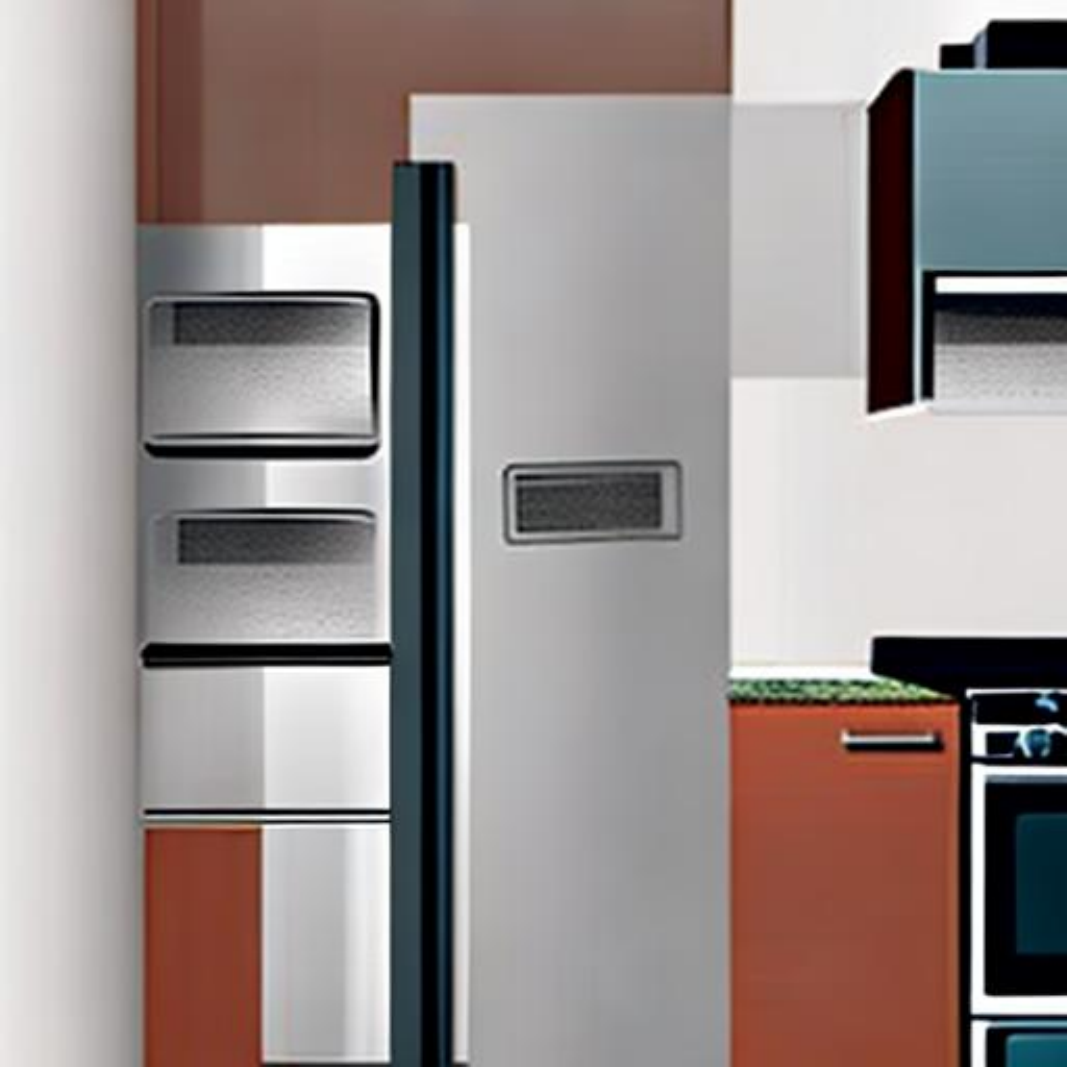} \\
    \includegraphics[width=0.10\textwidth]{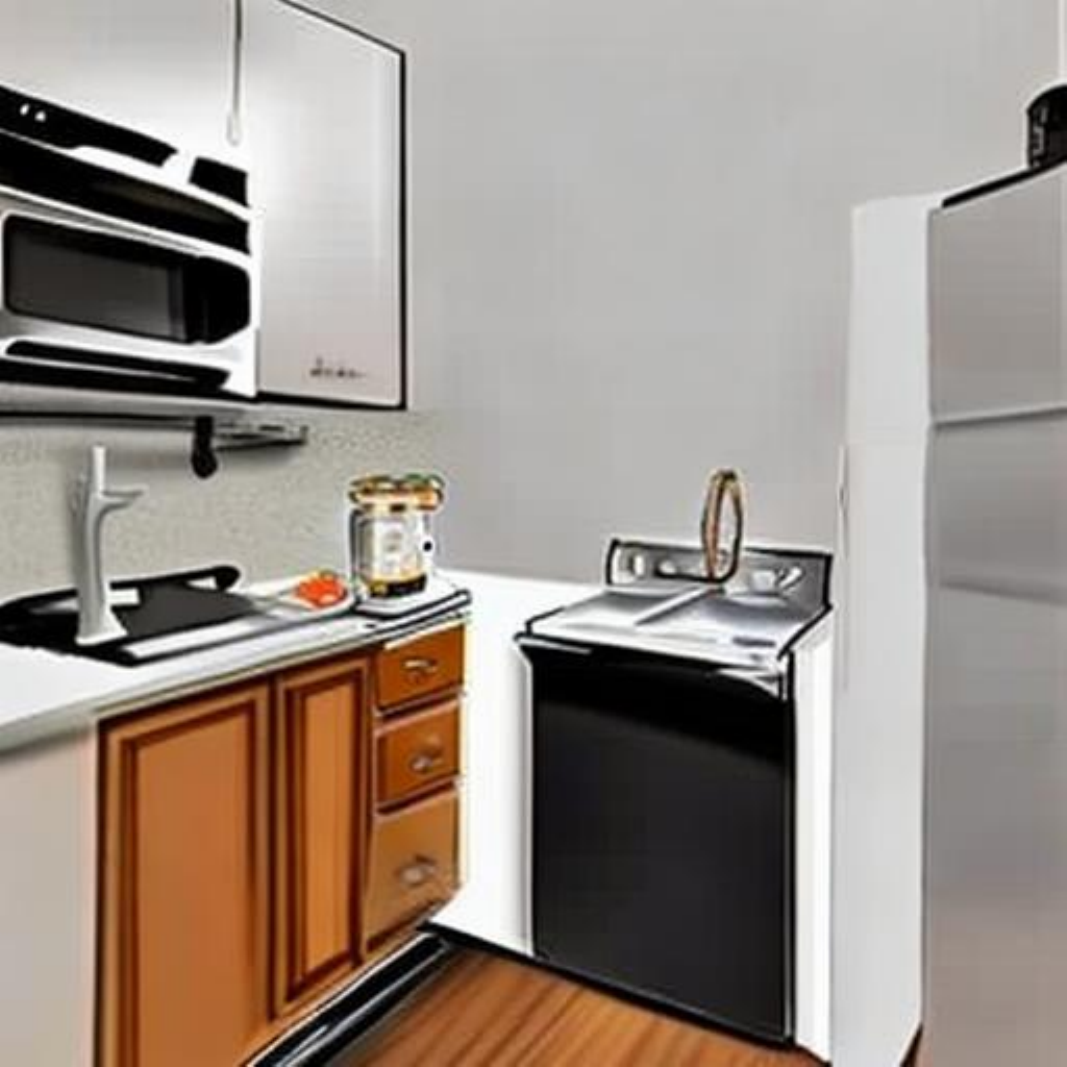} &
    \includegraphics[width=0.10\textwidth]{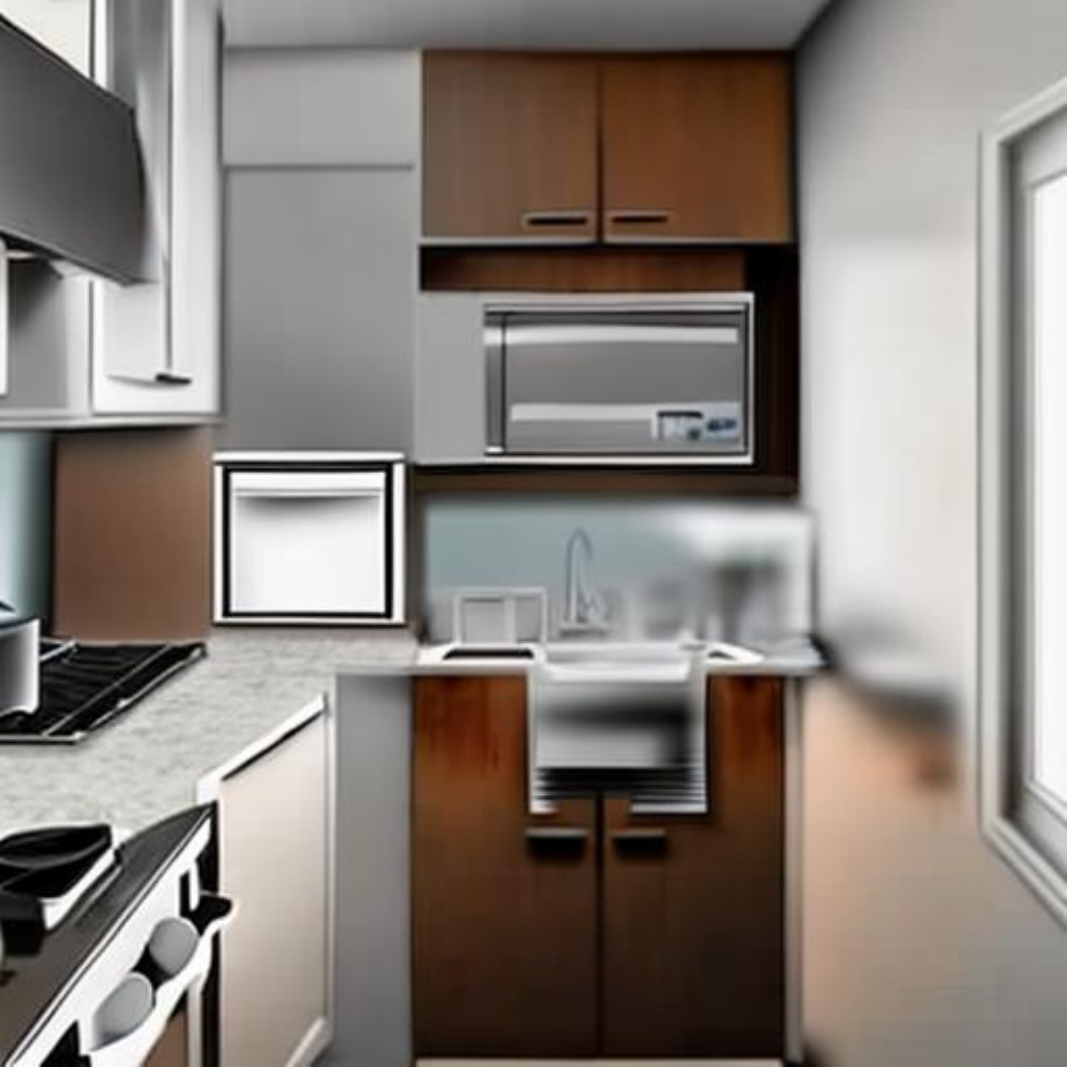} &
    \includegraphics[width=0.10\textwidth]{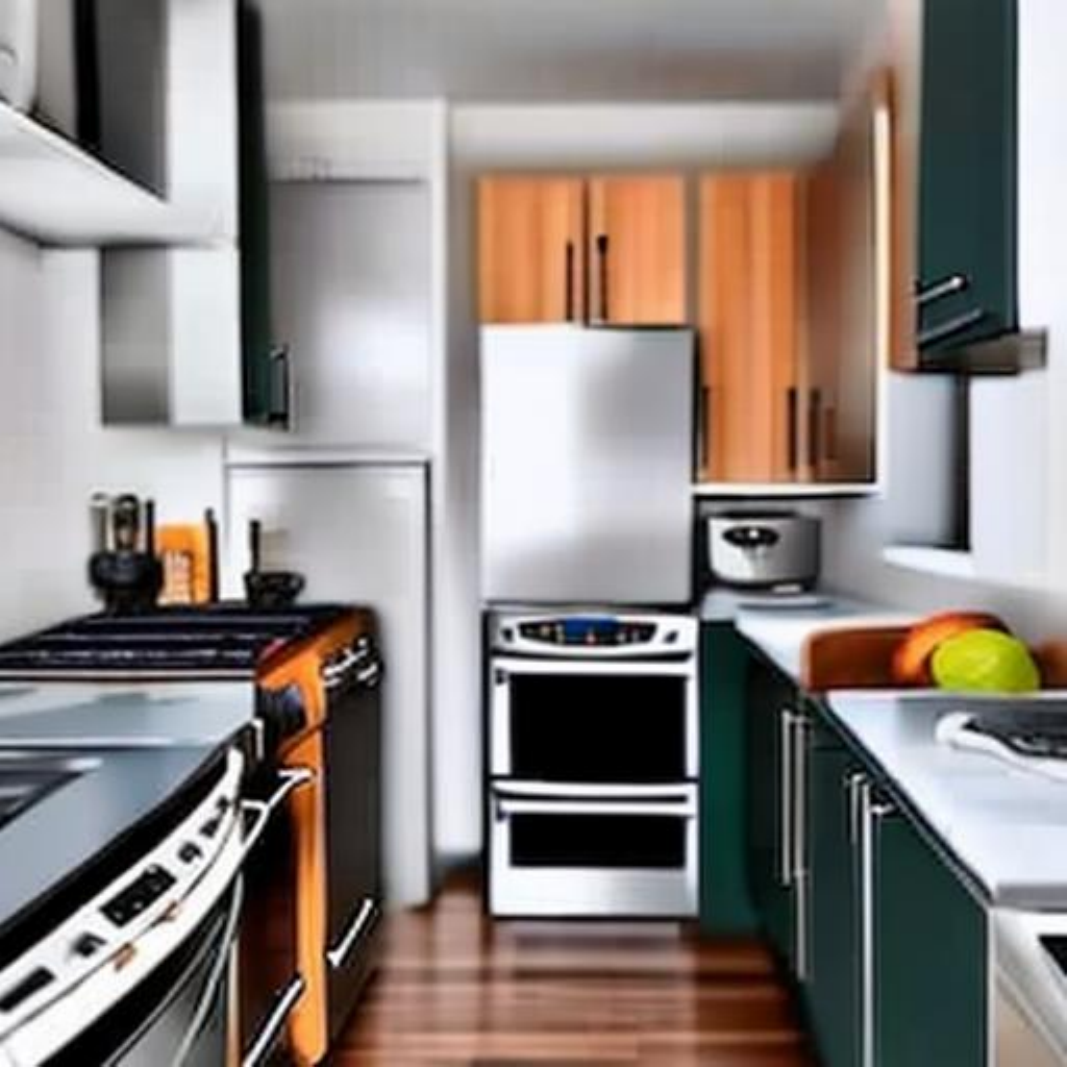} &
    \includegraphics[width=0.10\textwidth]{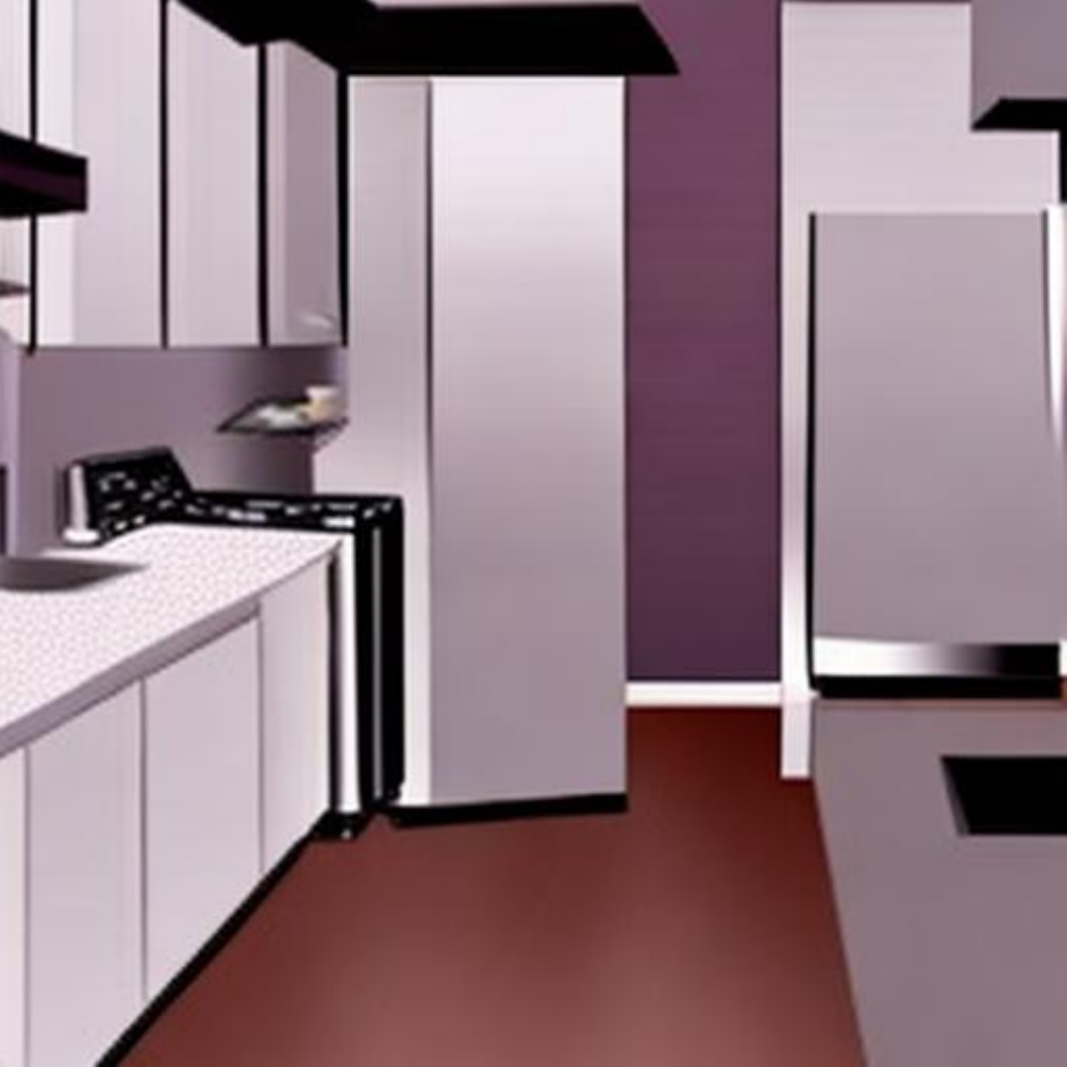} \\
};
\node[below=0pt of tl] {Constant CFG};

\matrix (tr) [gridmat, right=12pt of tl]
{
    \includegraphics[width=0.10\textwidth]{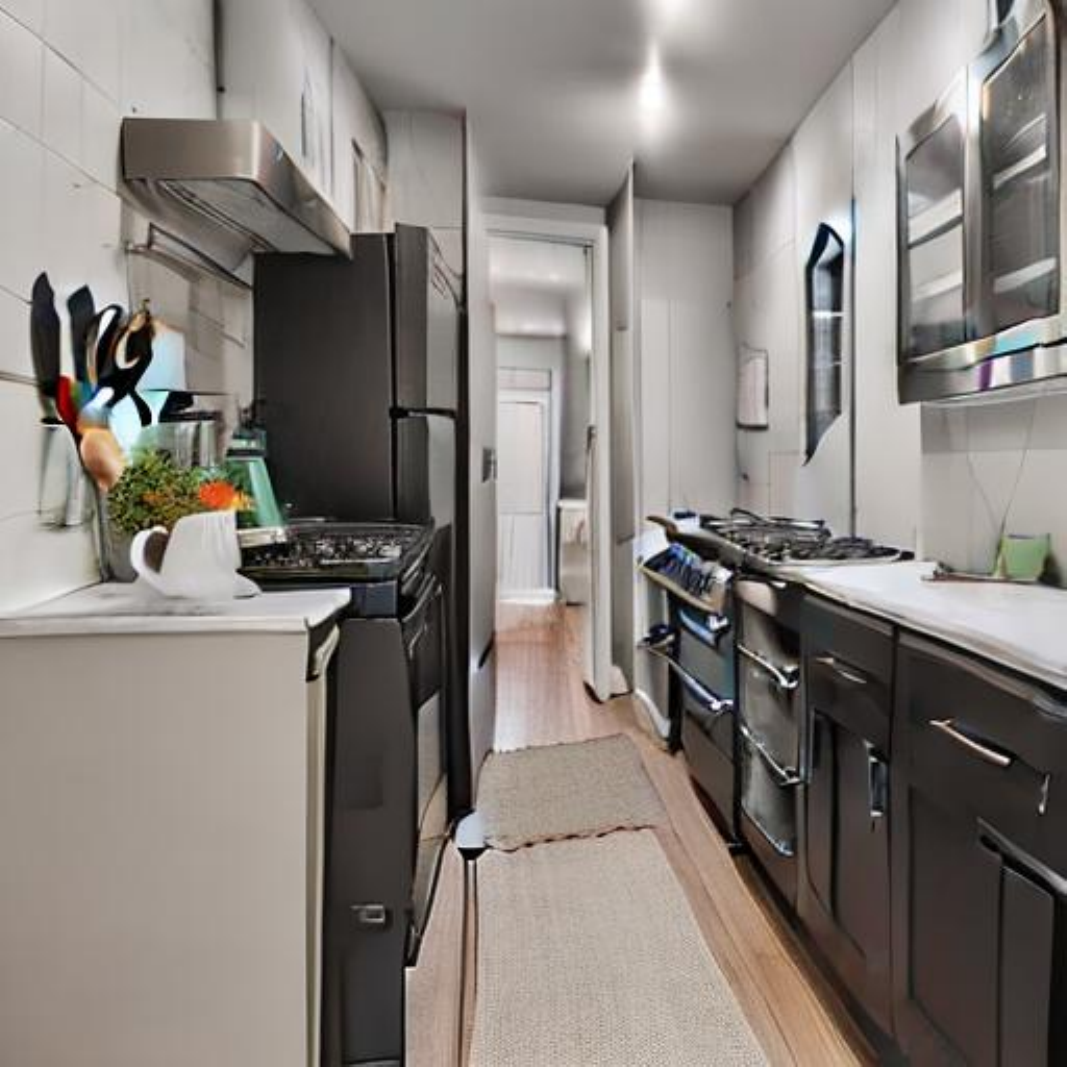} &
    \includegraphics[width=0.10\textwidth]{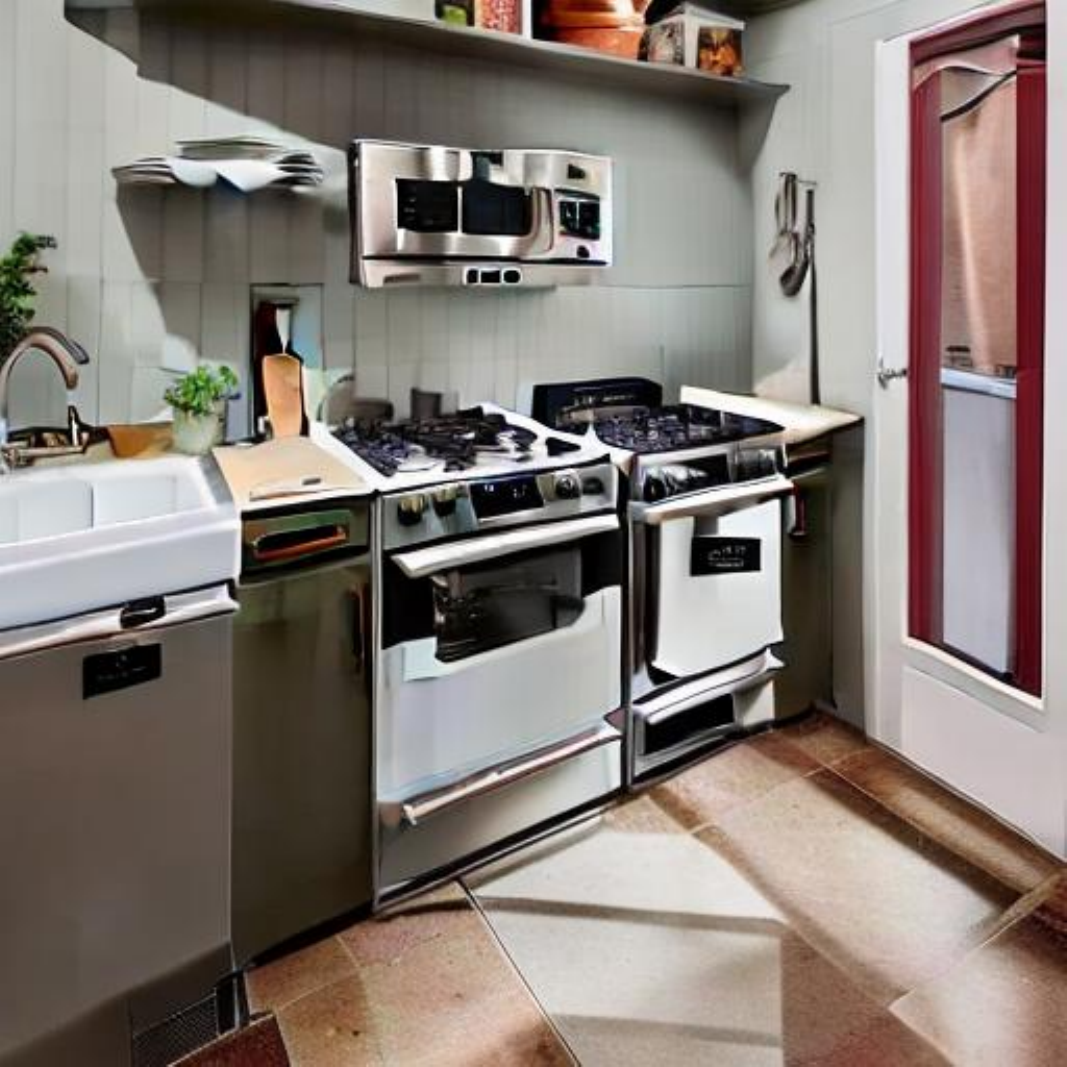} &
    \includegraphics[width=0.10\textwidth]{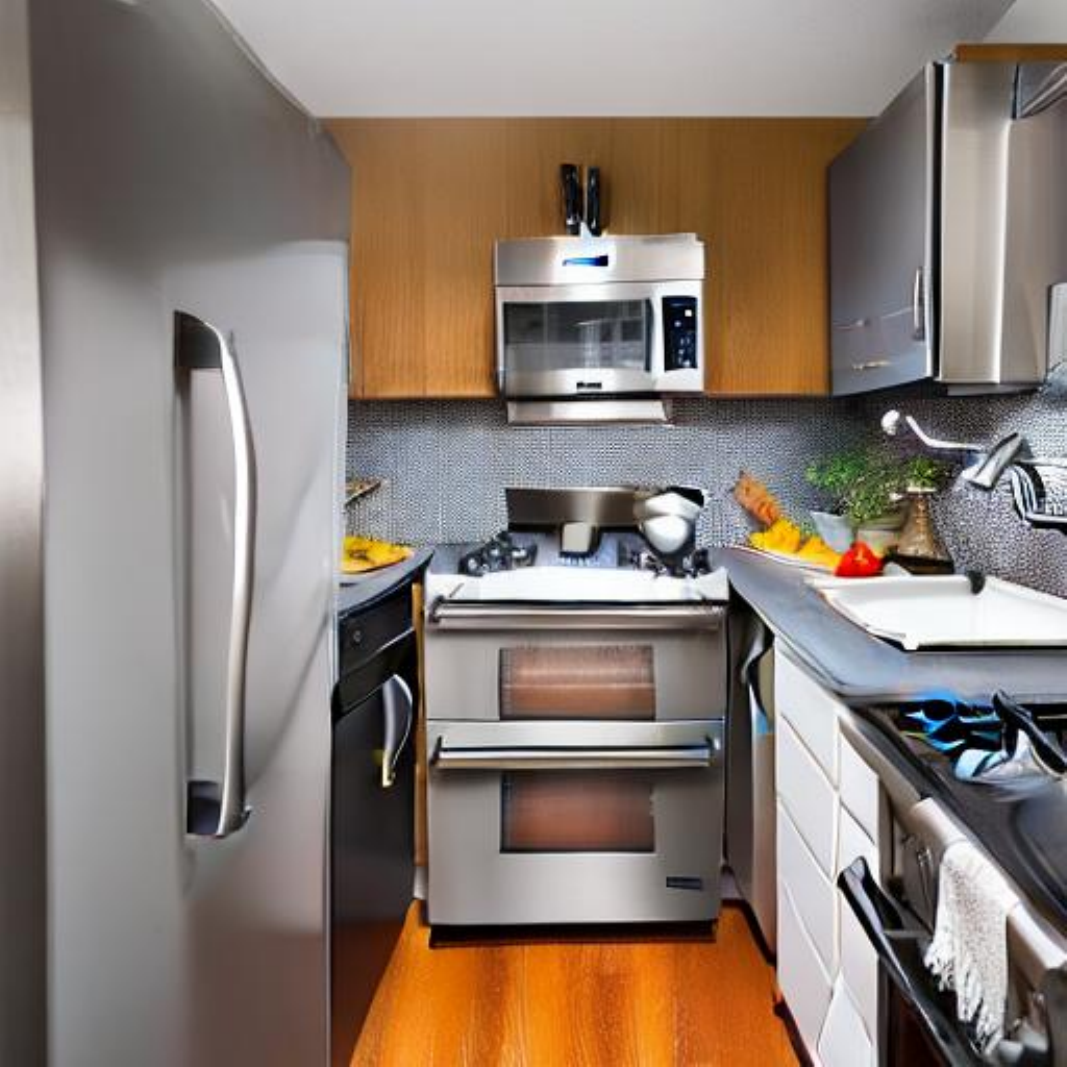} &
    \includegraphics[width=0.10\textwidth]{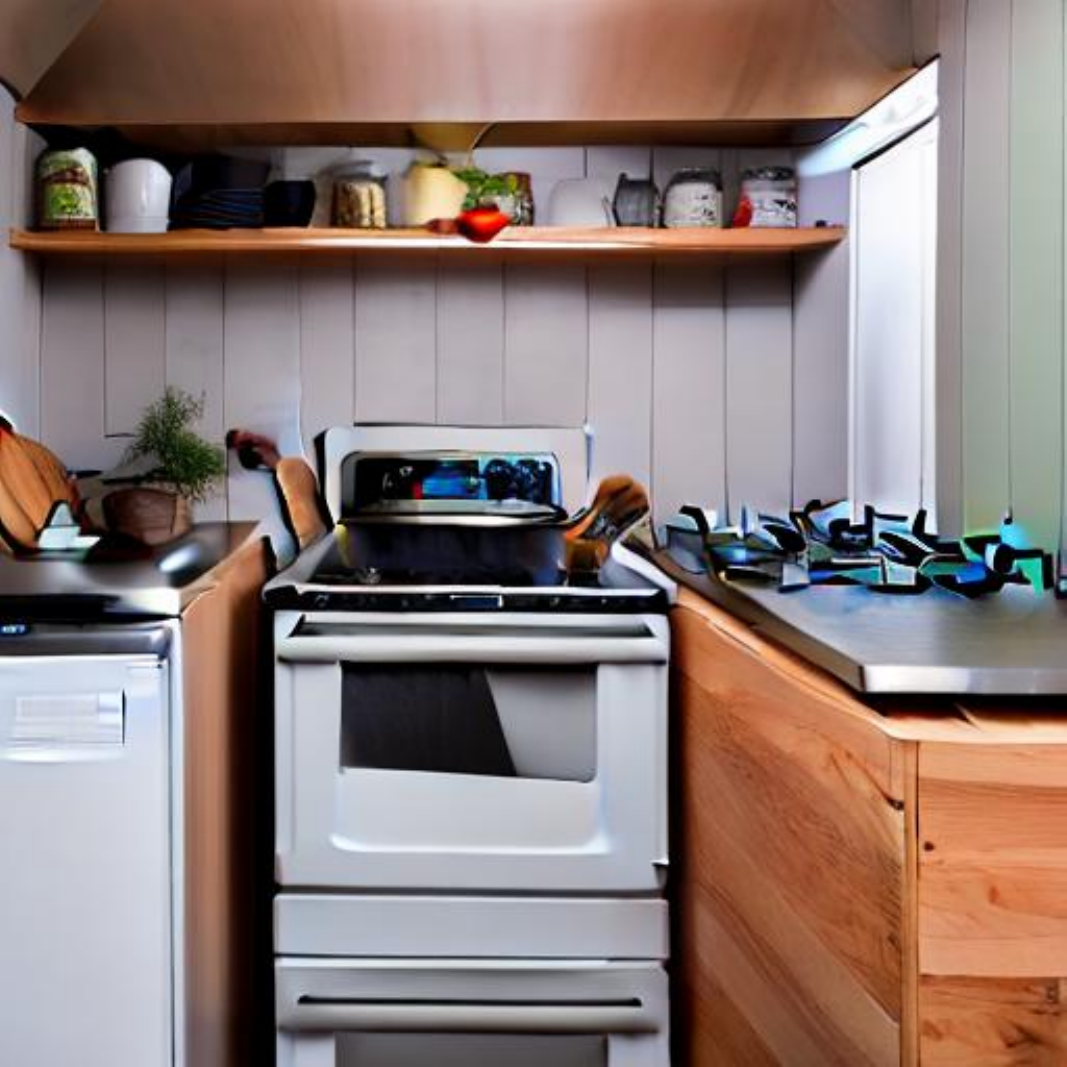} \\
    \includegraphics[width=0.10\textwidth]{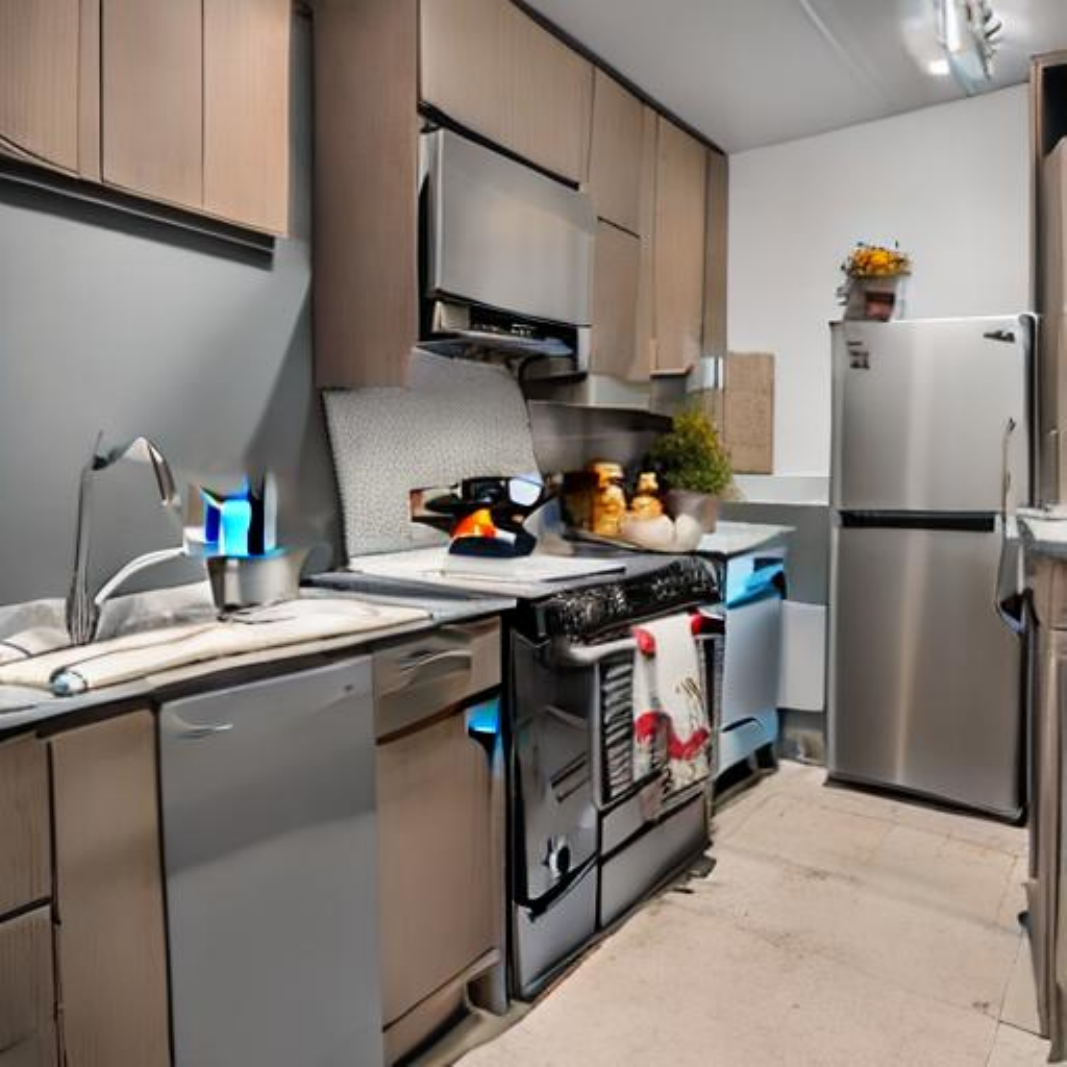} &
    \includegraphics[width=0.10\textwidth]{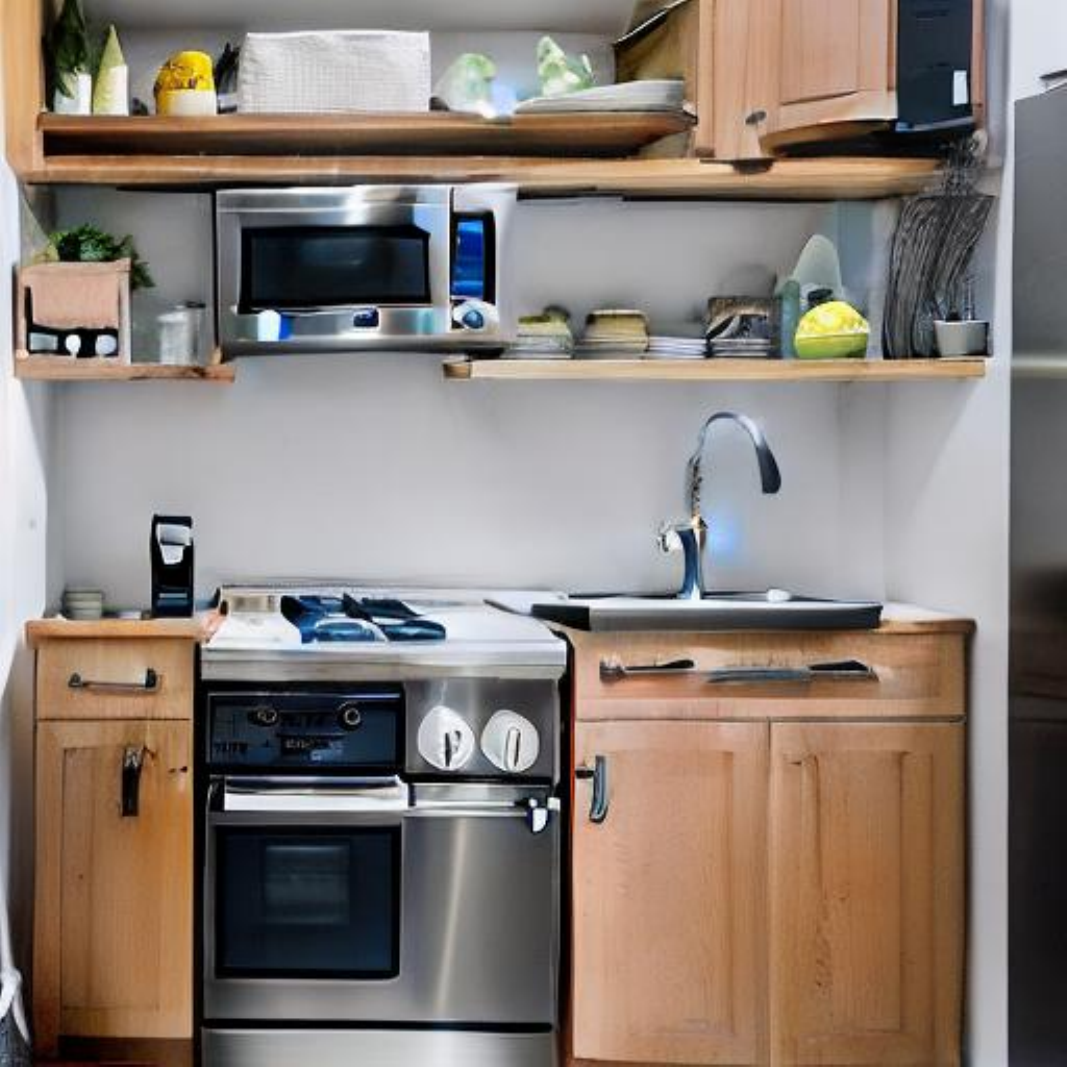} &
    \includegraphics[width=0.10\textwidth]{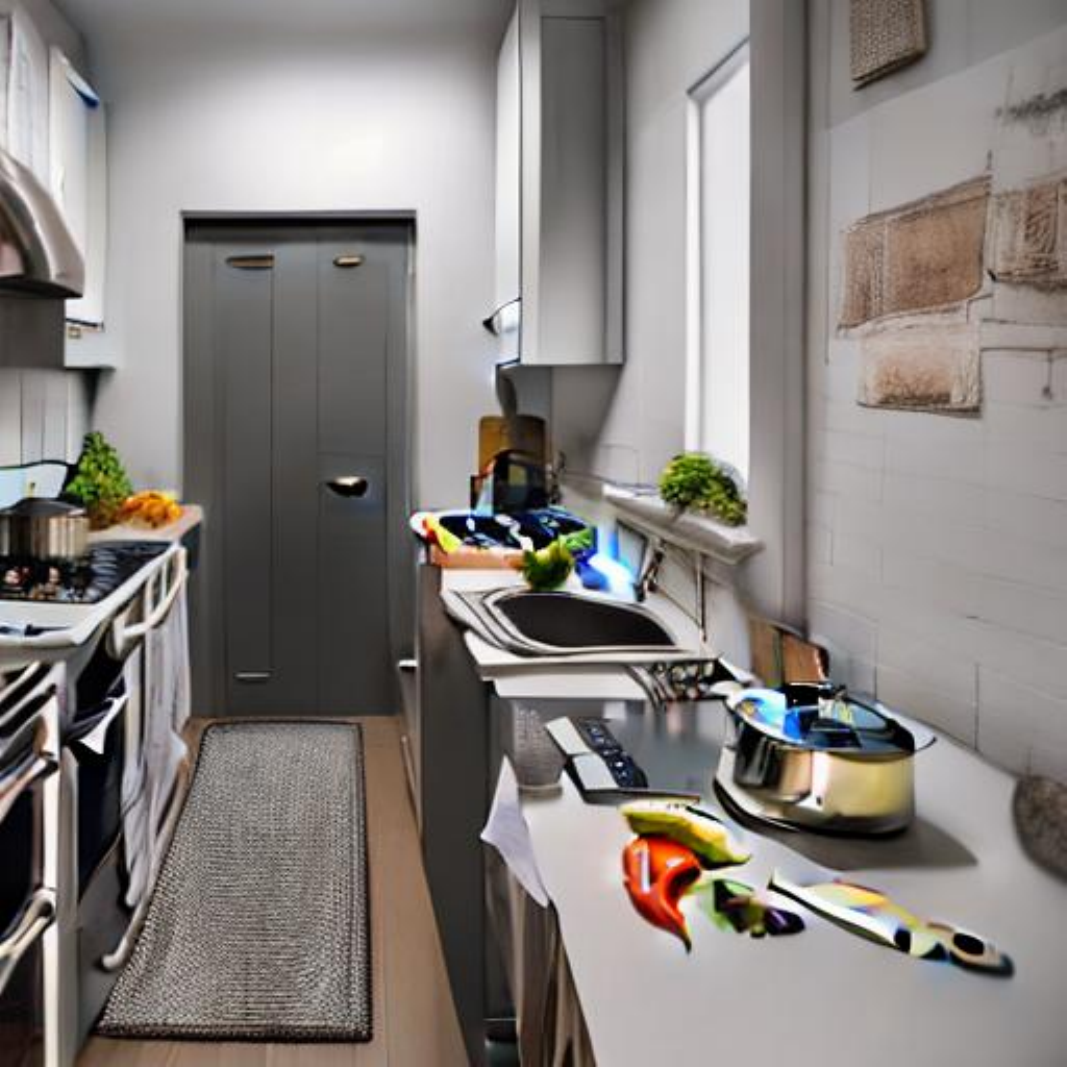} &
    \includegraphics[width=0.10\textwidth]{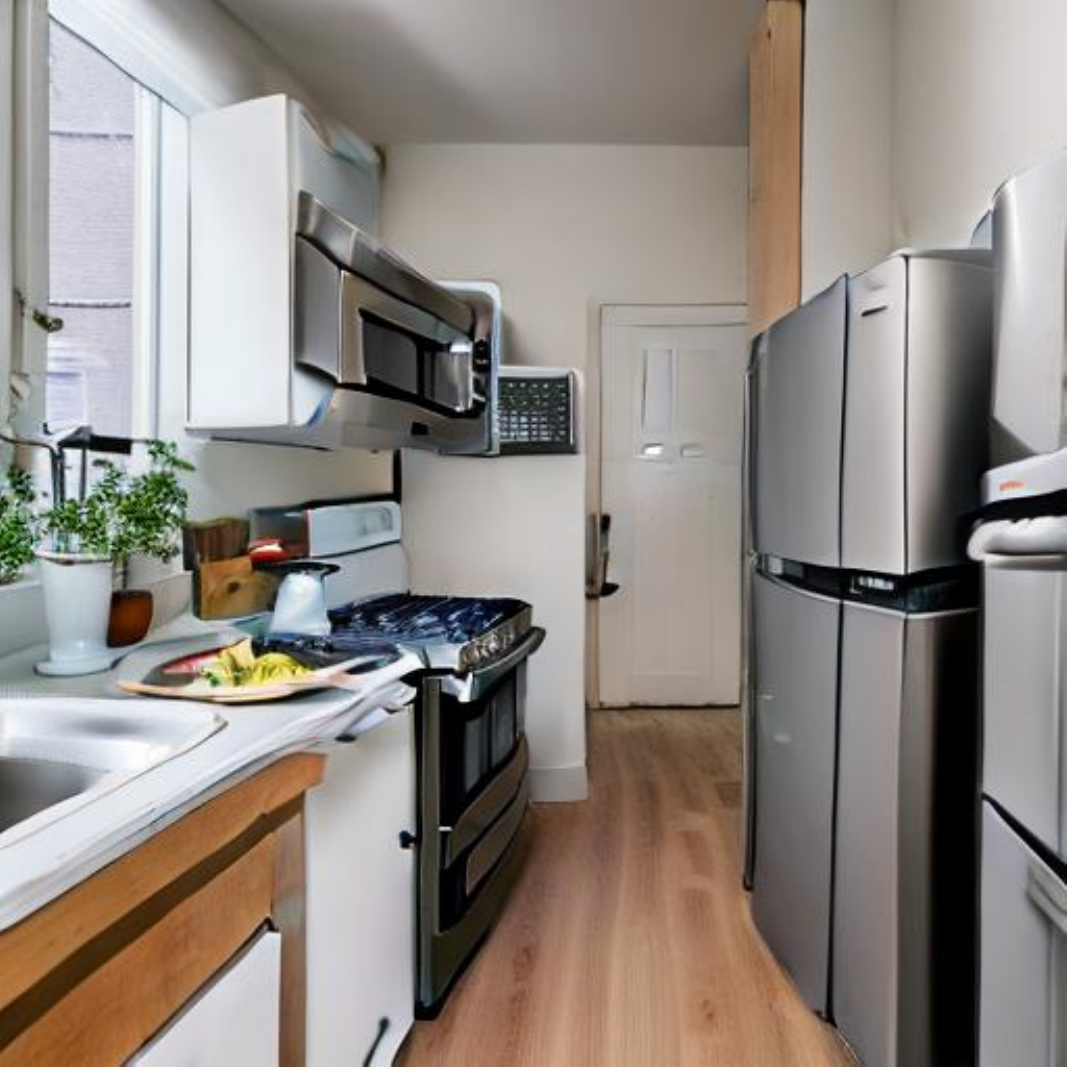} \\
    \includegraphics[width=0.10\textwidth]{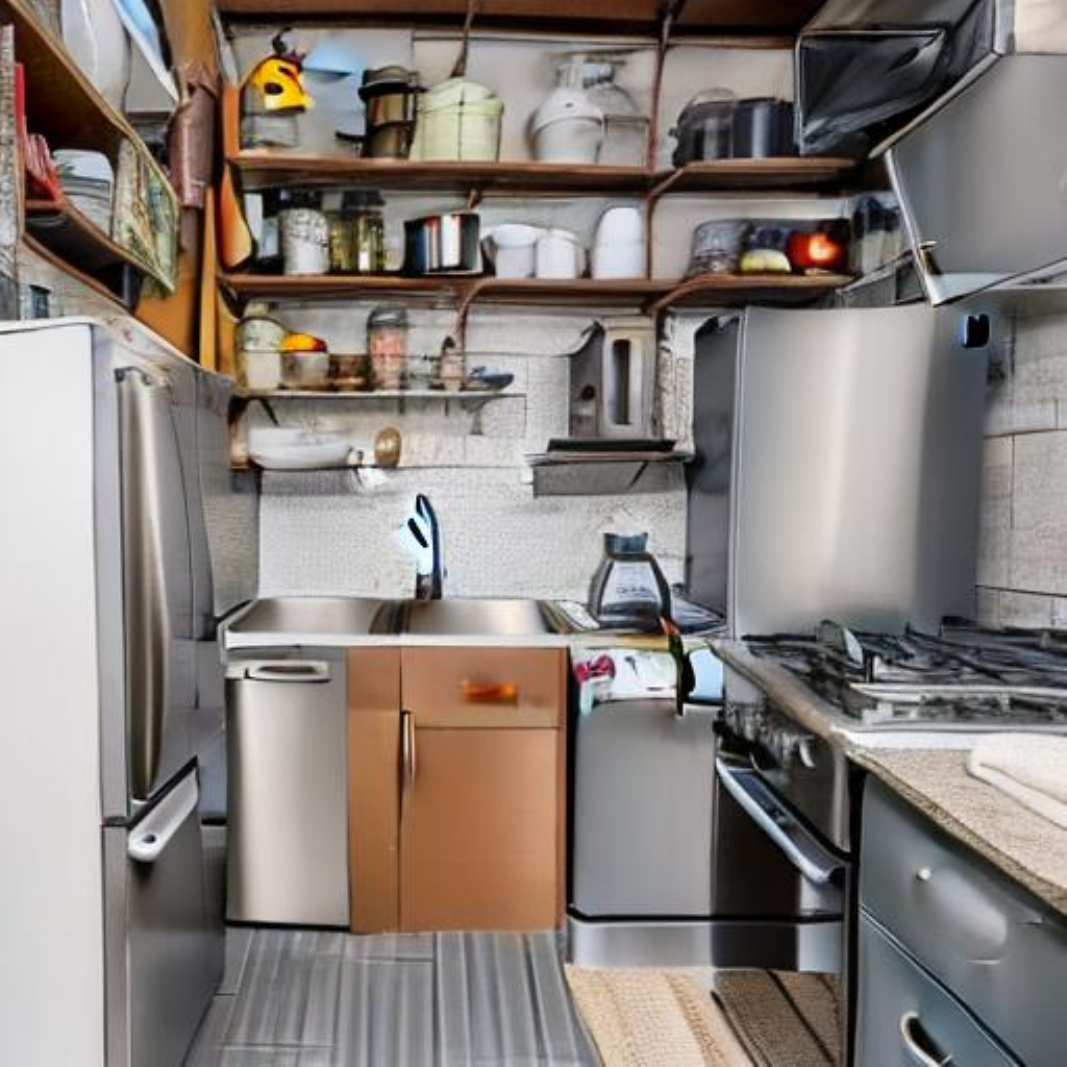} &
    \includegraphics[width=0.10\textwidth]{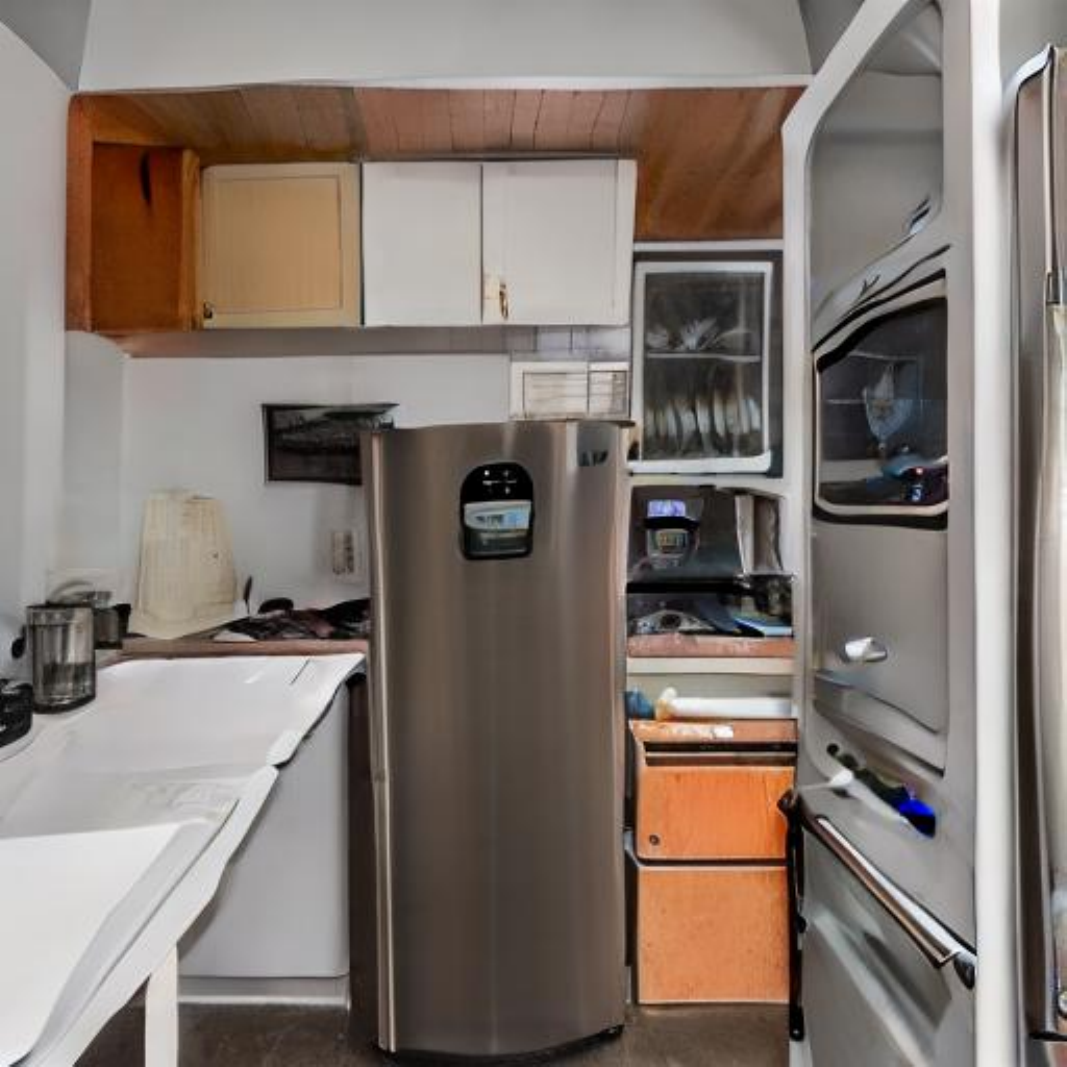} &
    \includegraphics[width=0.10\textwidth]{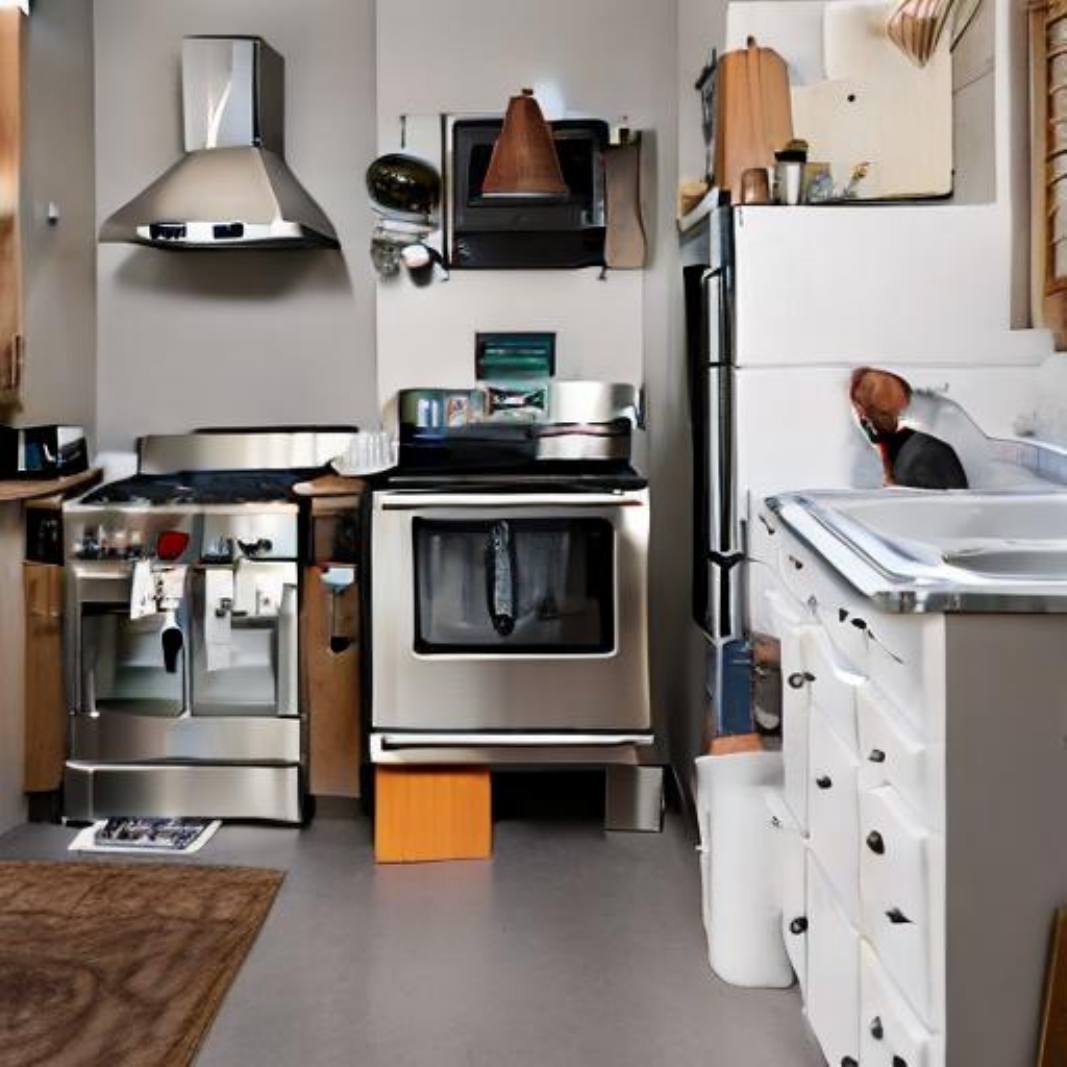} &
    \includegraphics[width=0.10\textwidth]{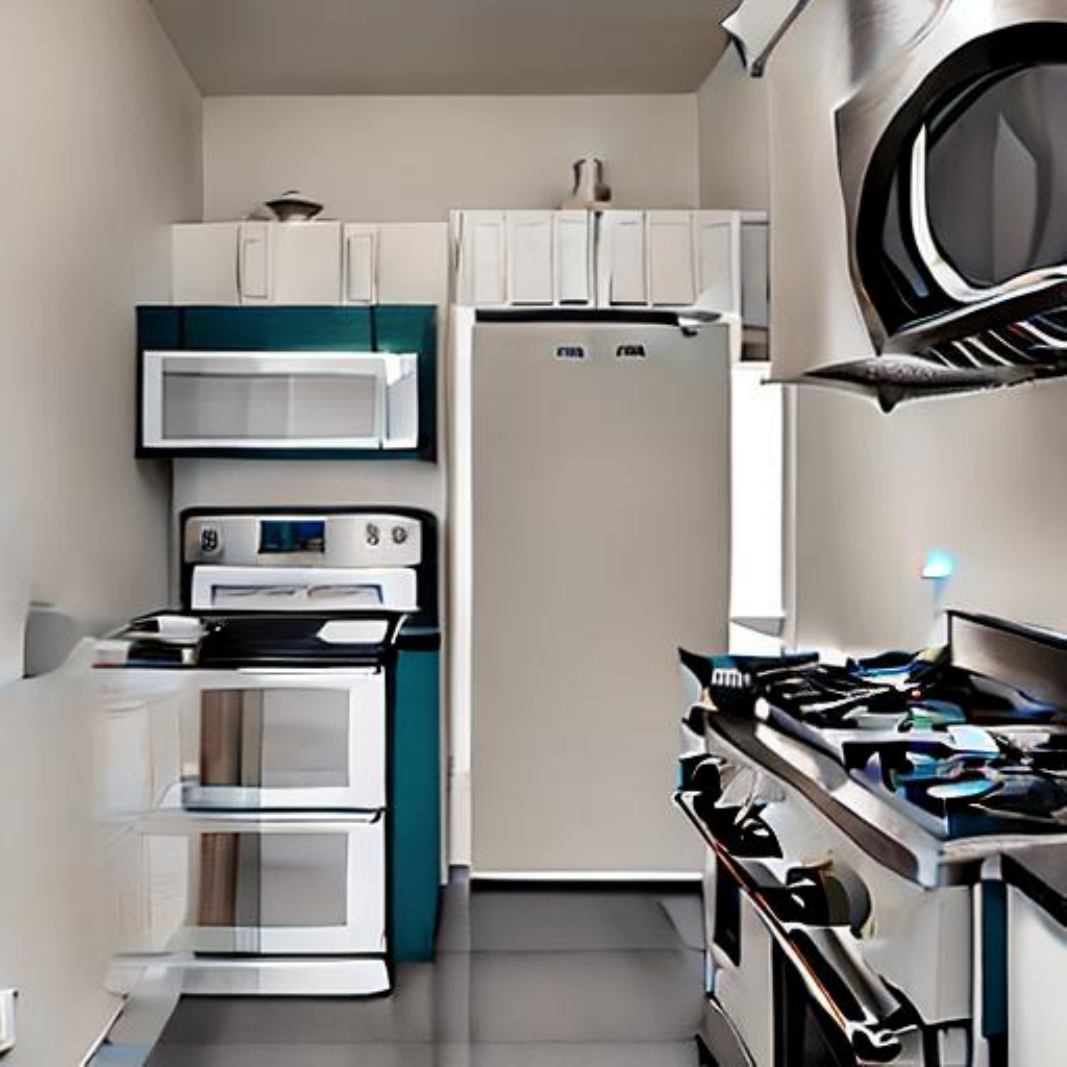} \\
    \includegraphics[width=0.10\textwidth]{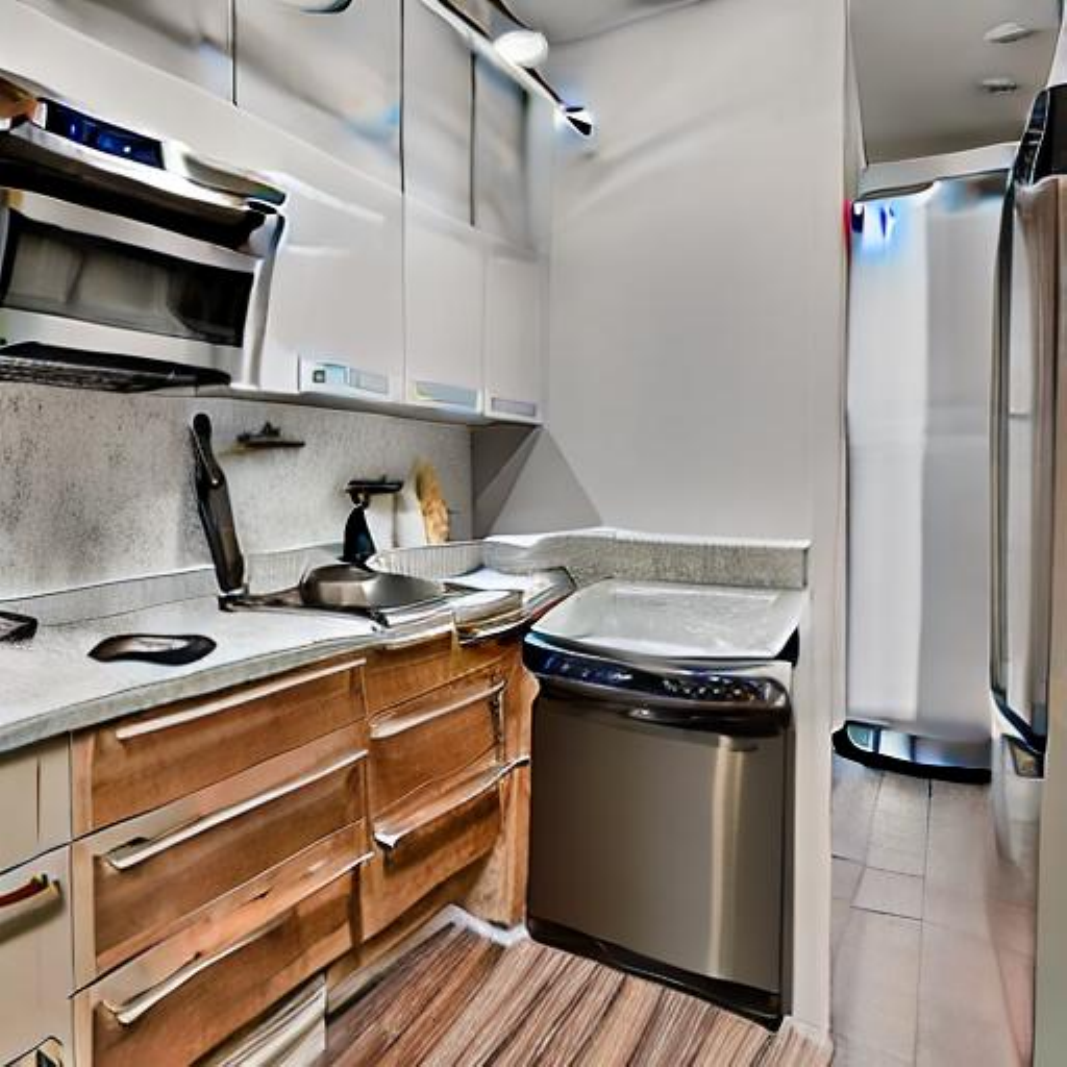} &
    \includegraphics[width=0.10\textwidth]{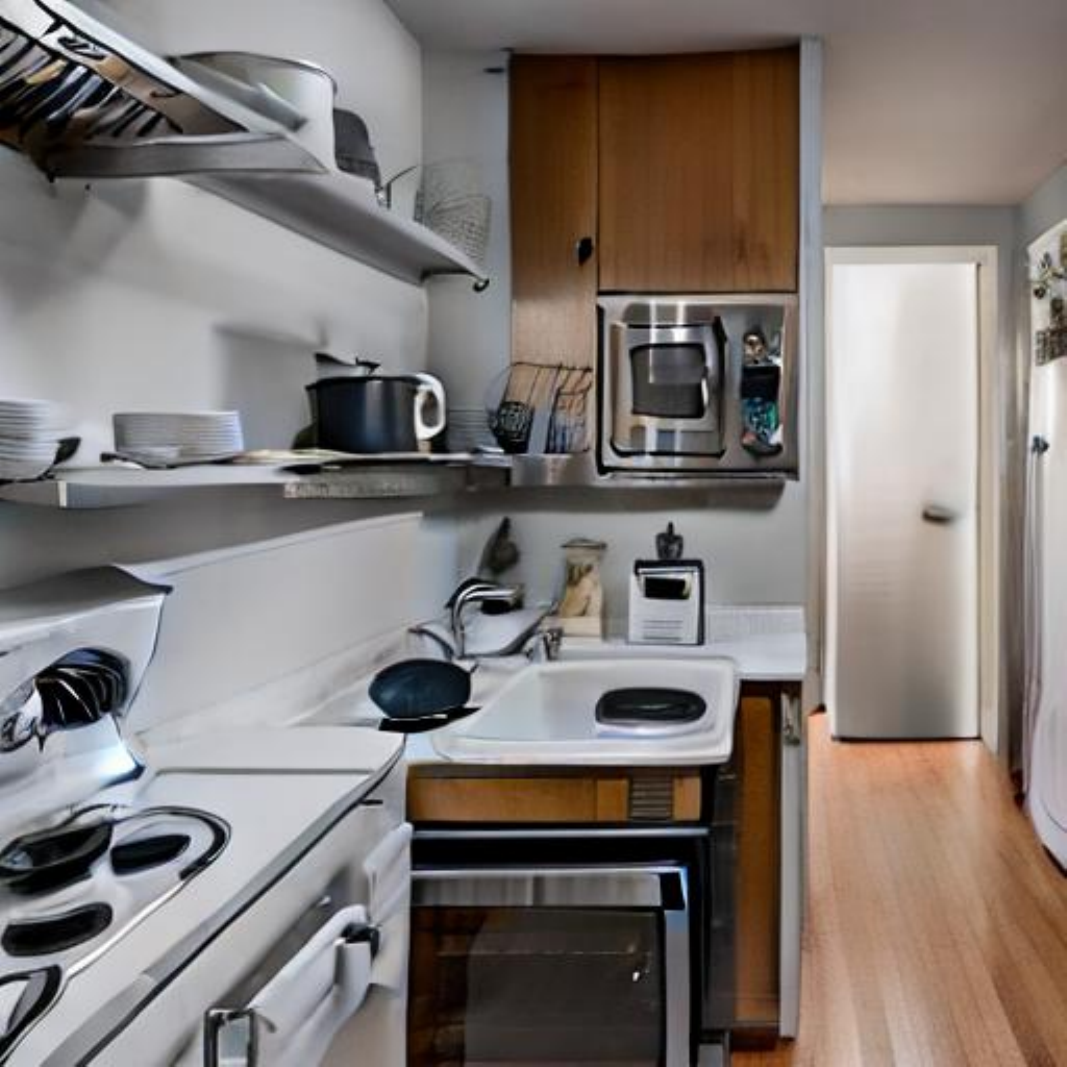} &
    \includegraphics[width=0.10\textwidth]{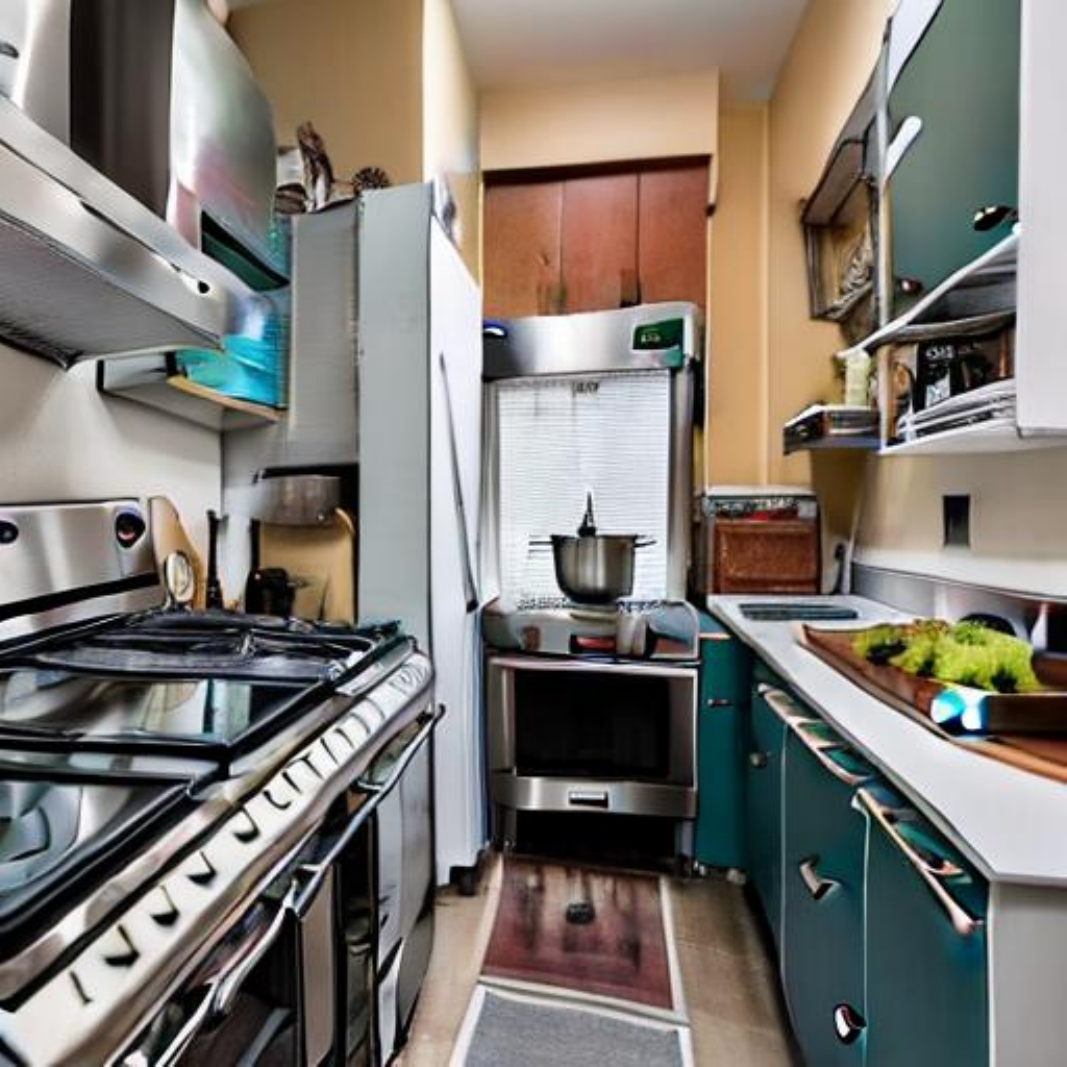} &
    \includegraphics[width=0.10\textwidth]{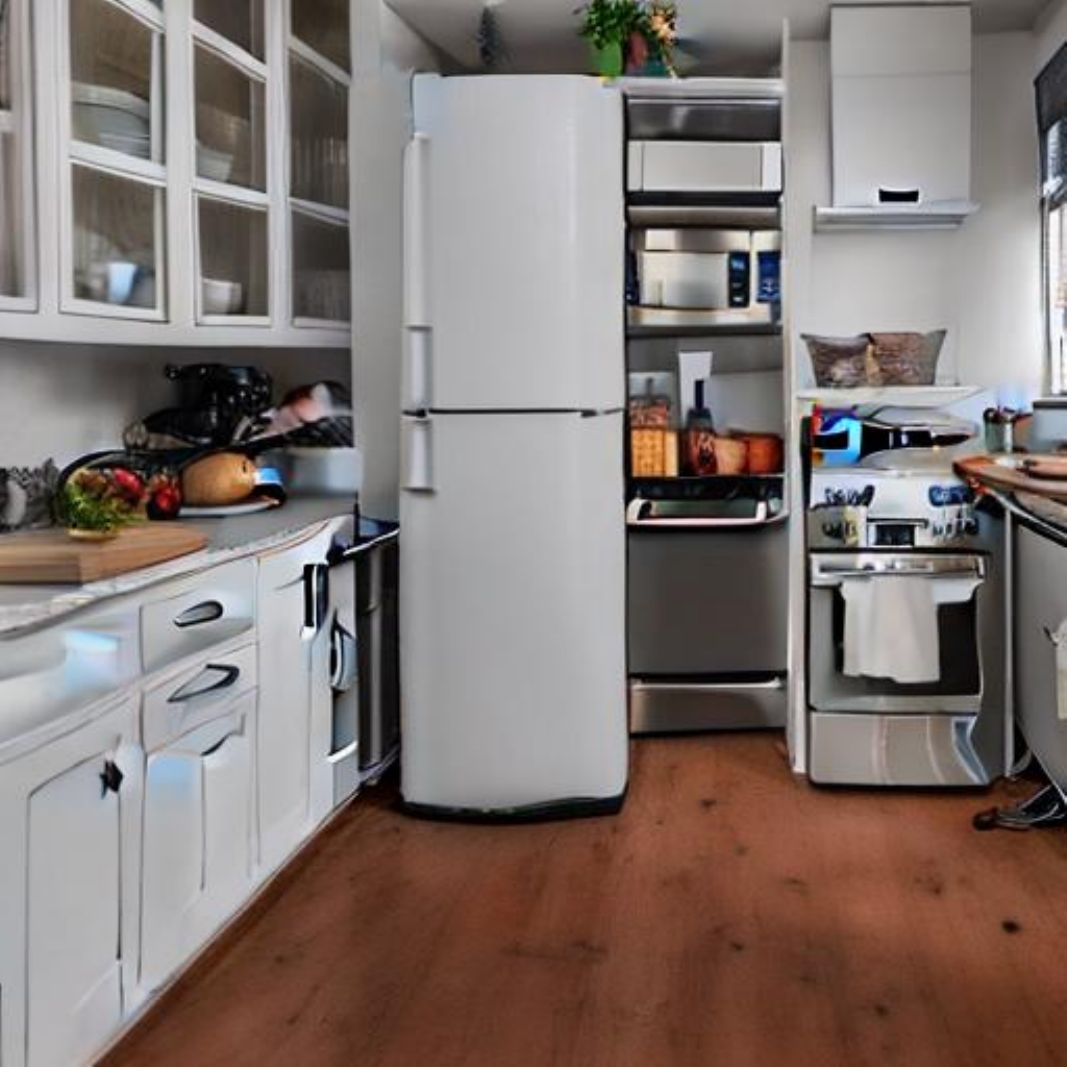} \\
};
\node[below=0pt of tr] {Medium 1};

\matrix (bl) [gridmat, below=12pt of tl]
{
    \includegraphics[width=0.10\textwidth]{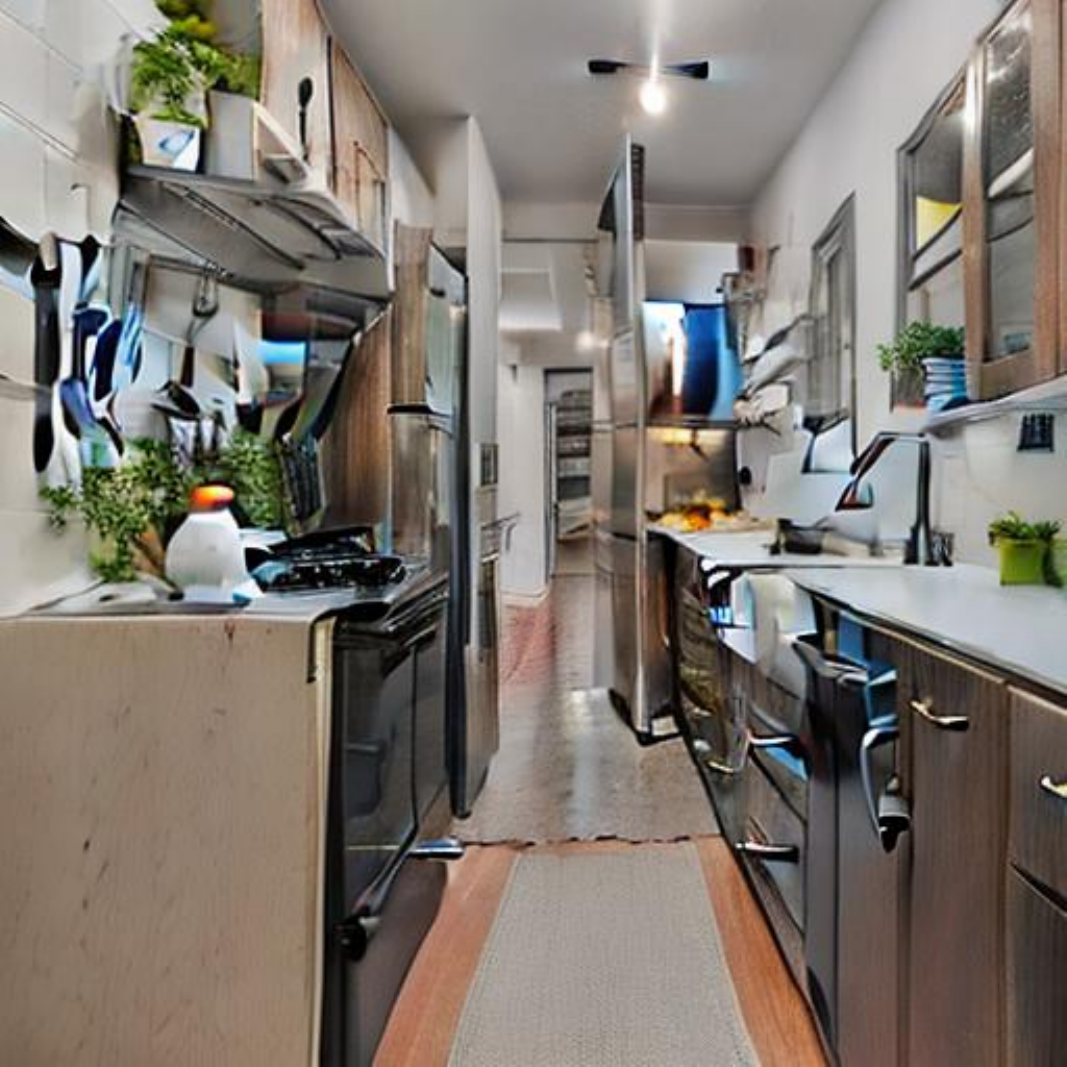} &
    \includegraphics[width=0.10\textwidth]{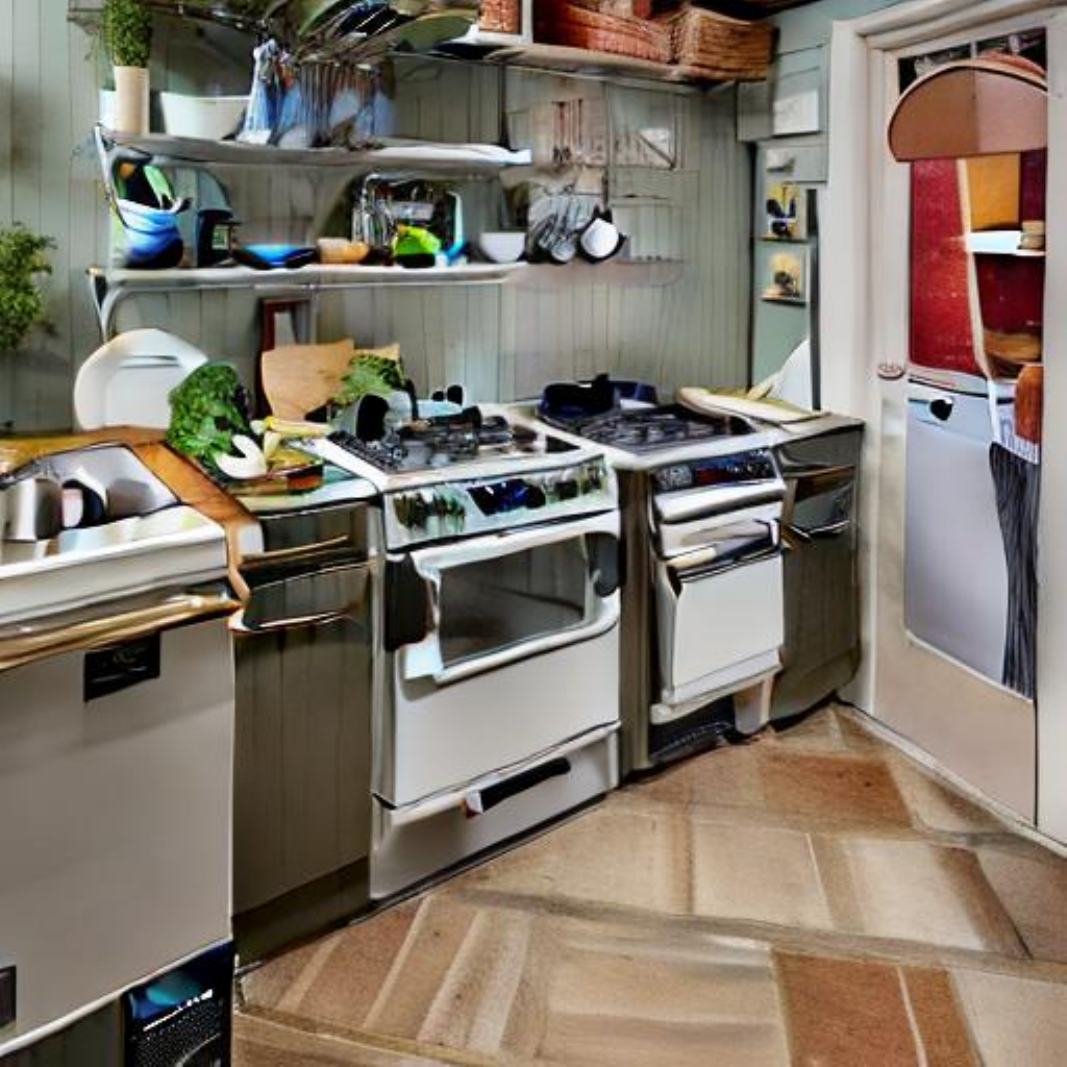} &
    \includegraphics[width=0.10\textwidth]{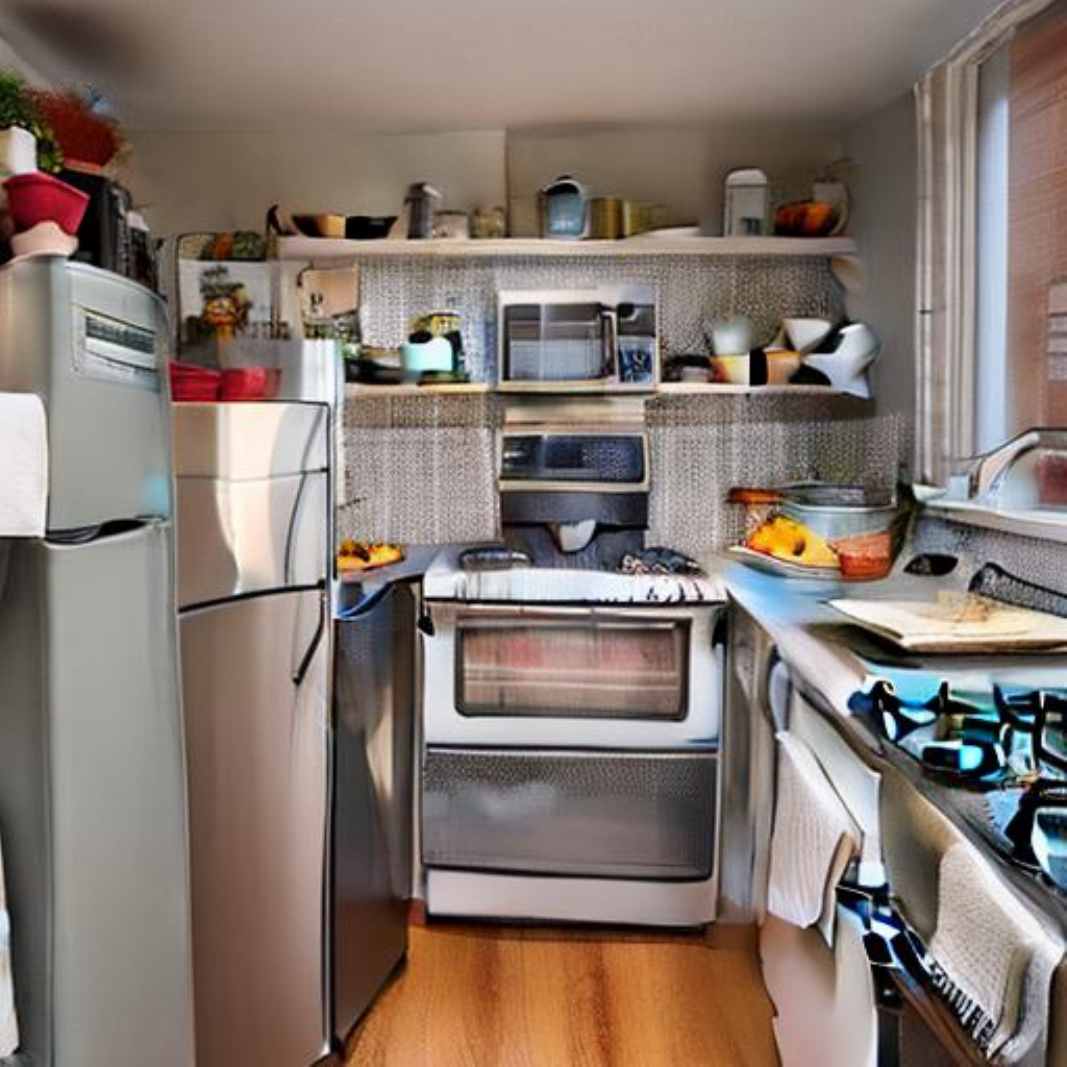} &
    \includegraphics[width=0.10\textwidth]{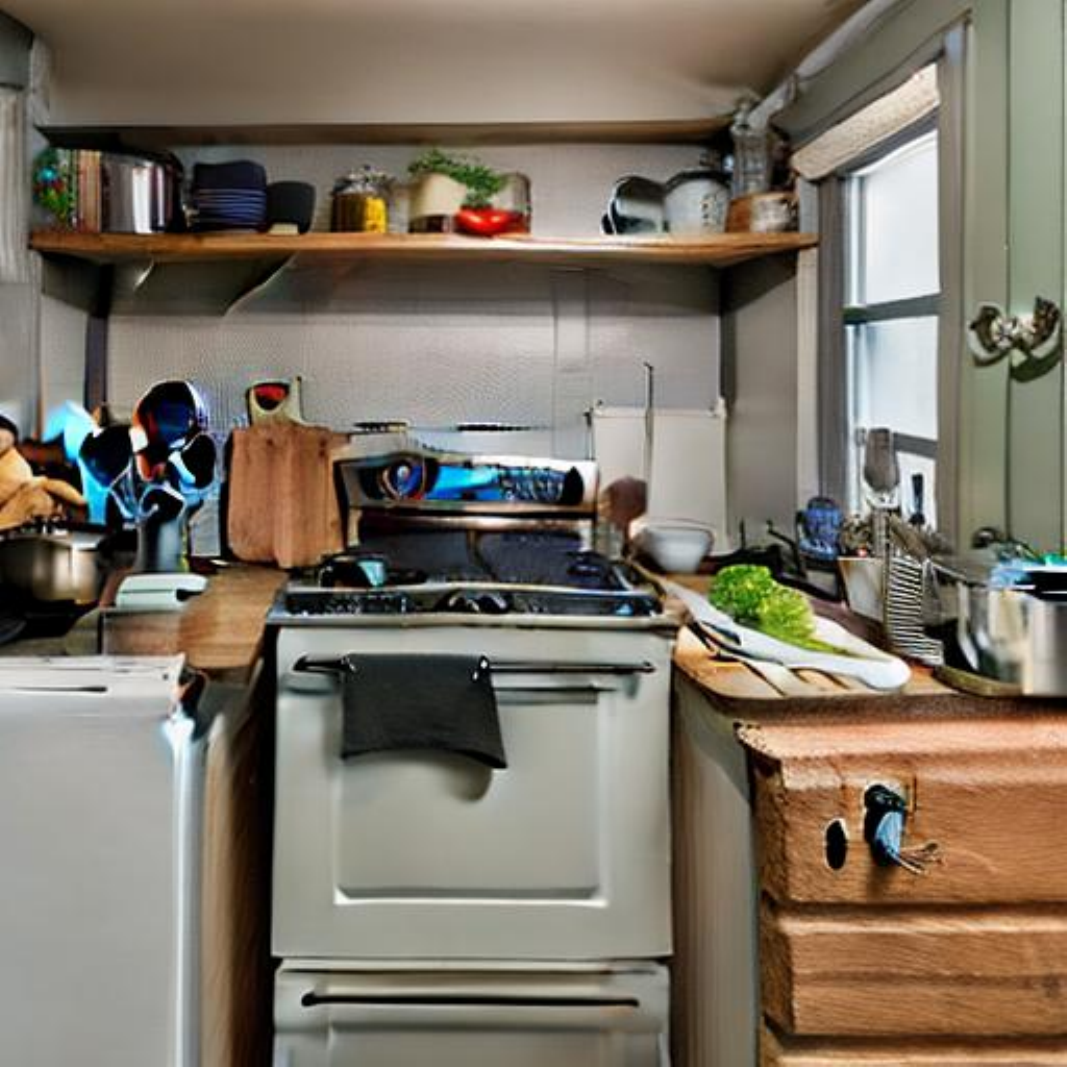} \\
    \includegraphics[width=0.10\textwidth]{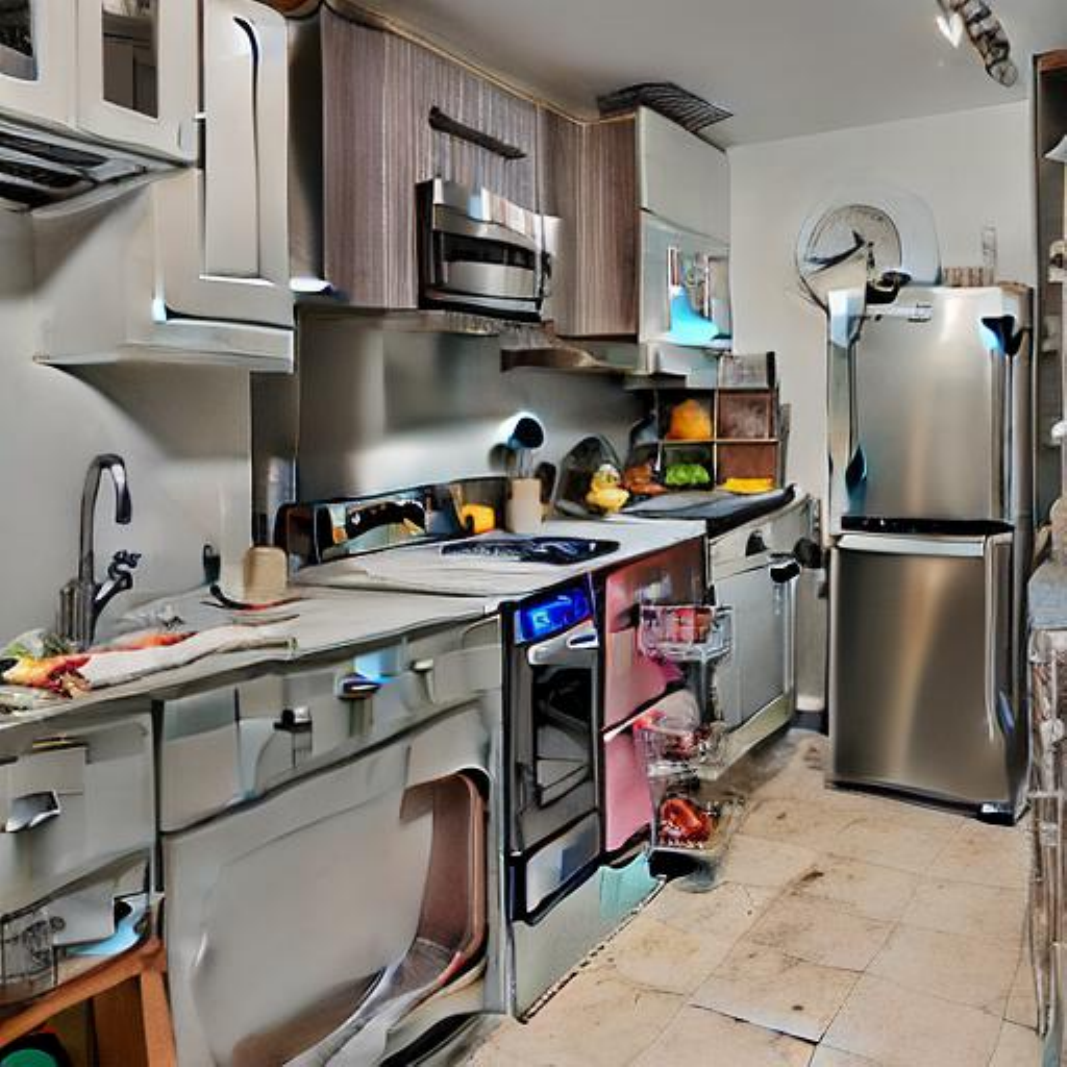} &
    \includegraphics[width=0.10\textwidth]{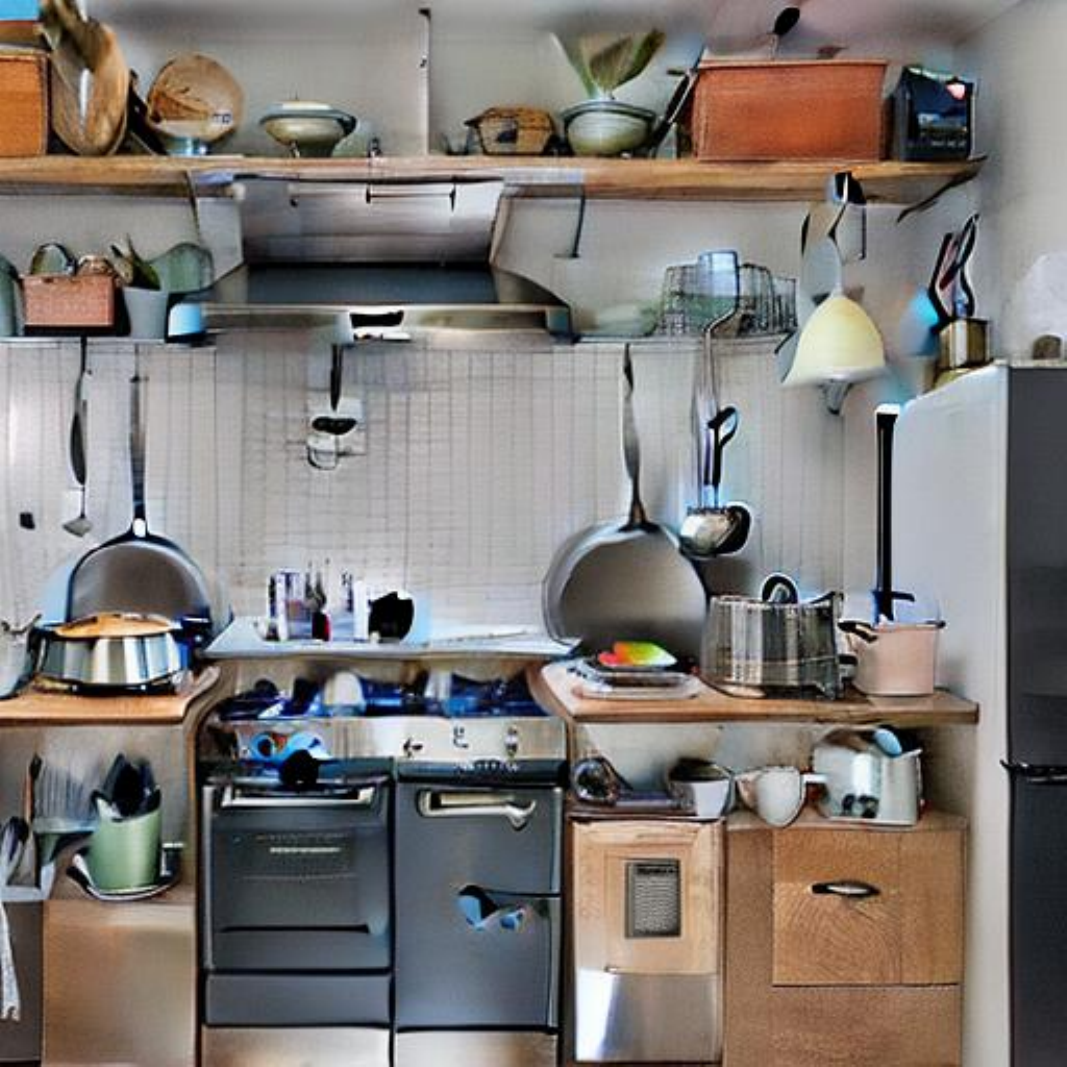} &
    \includegraphics[width=0.10\textwidth]{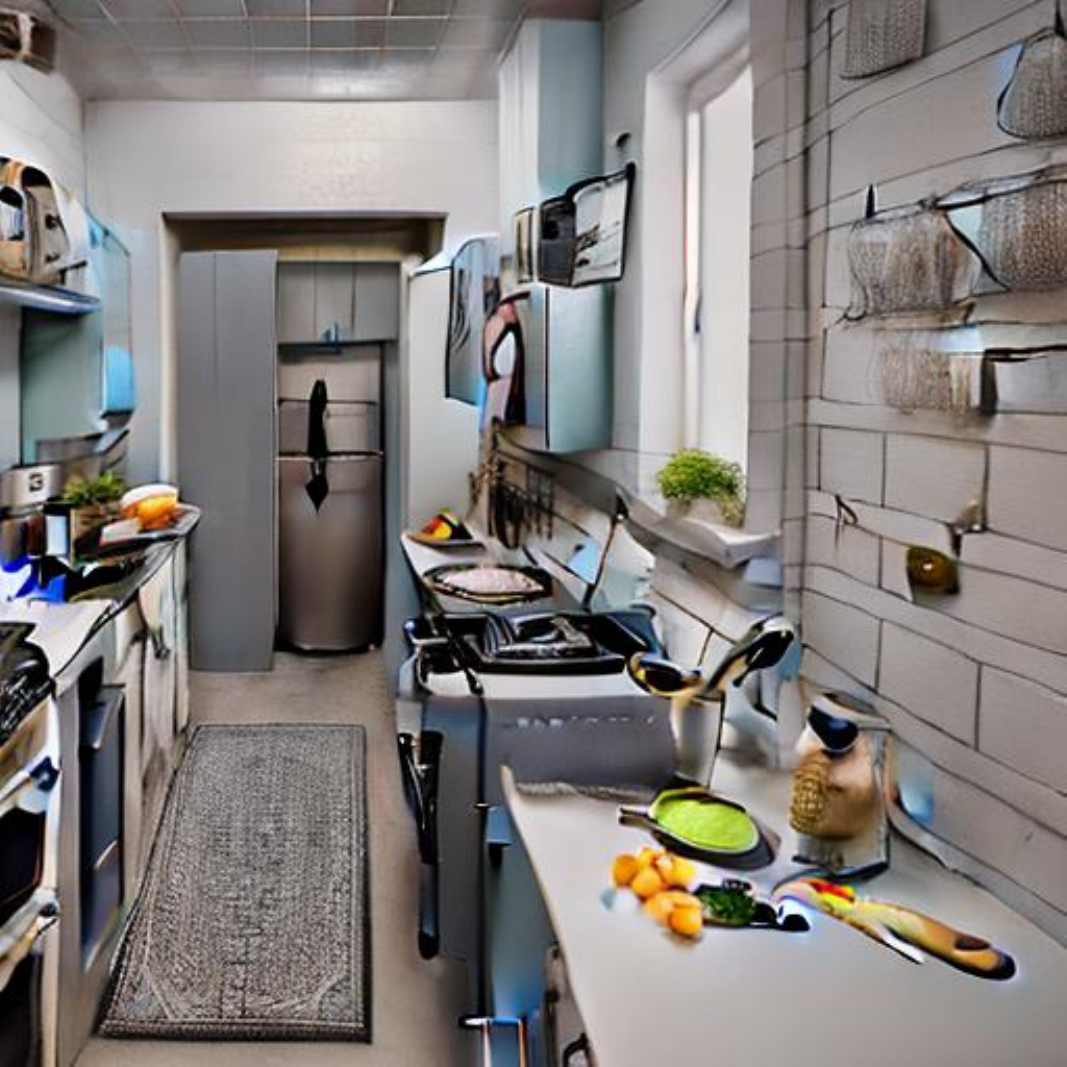} &
    \includegraphics[width=0.10\textwidth]{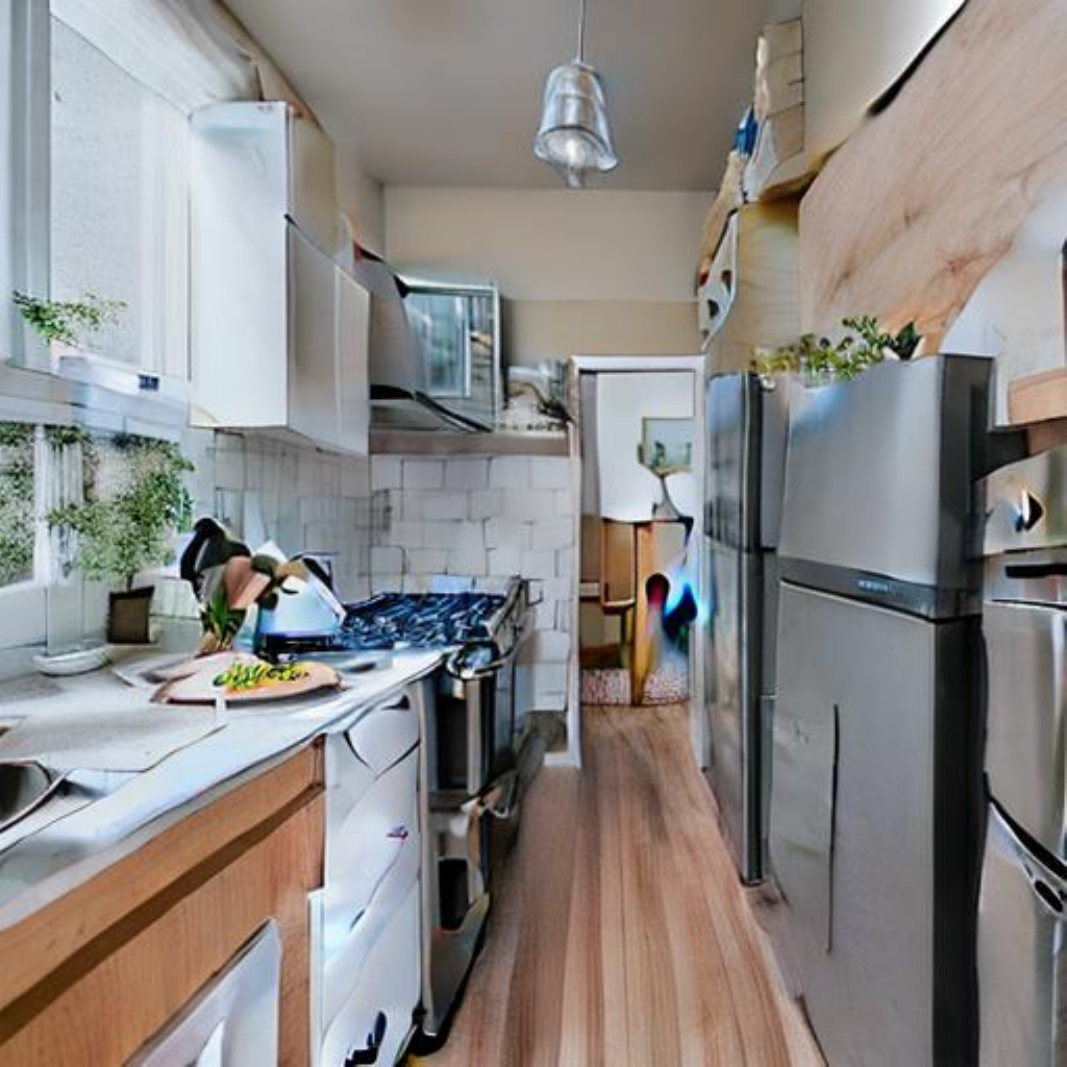} \\
    \includegraphics[width=0.10\textwidth]{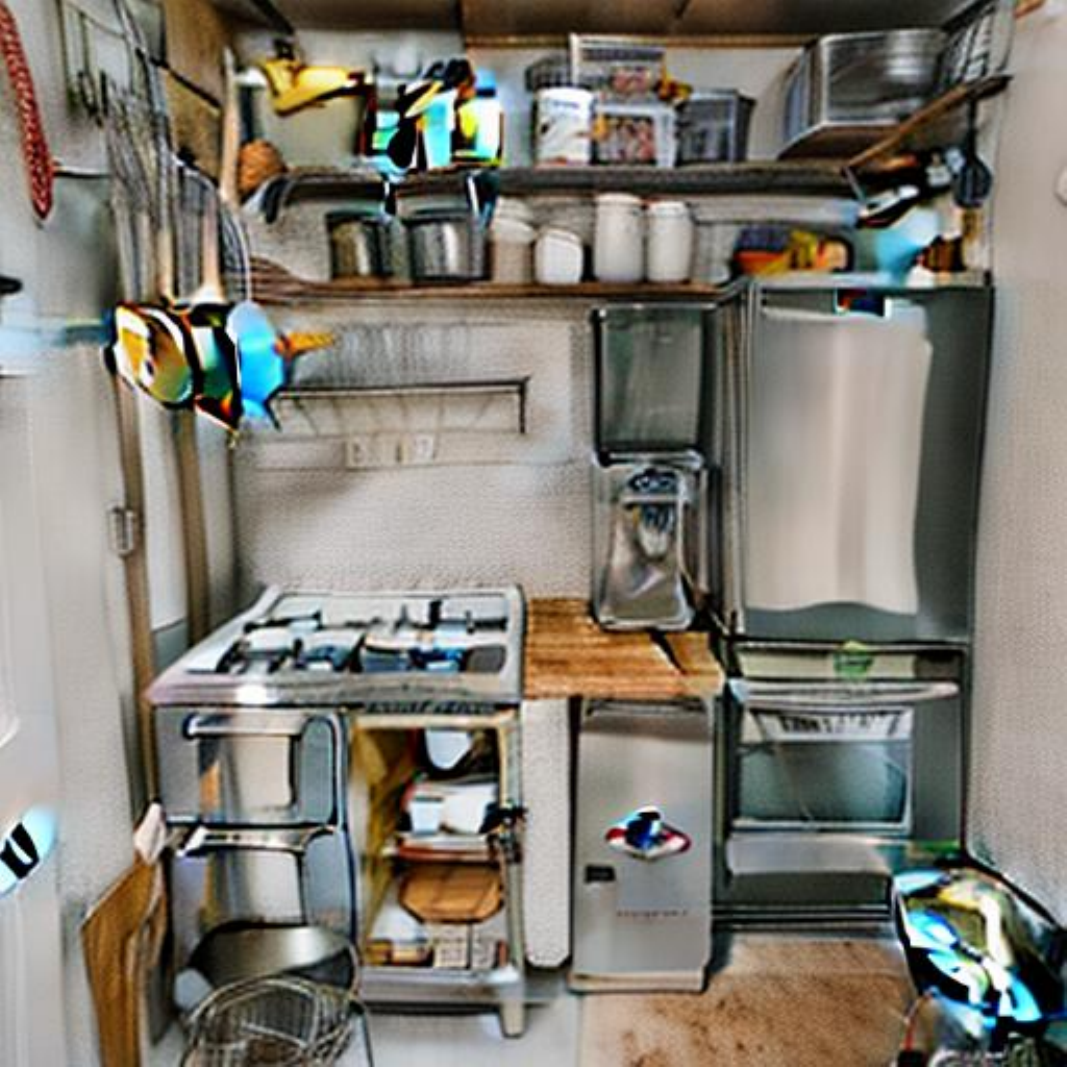} &
    \includegraphics[width=0.10\textwidth]{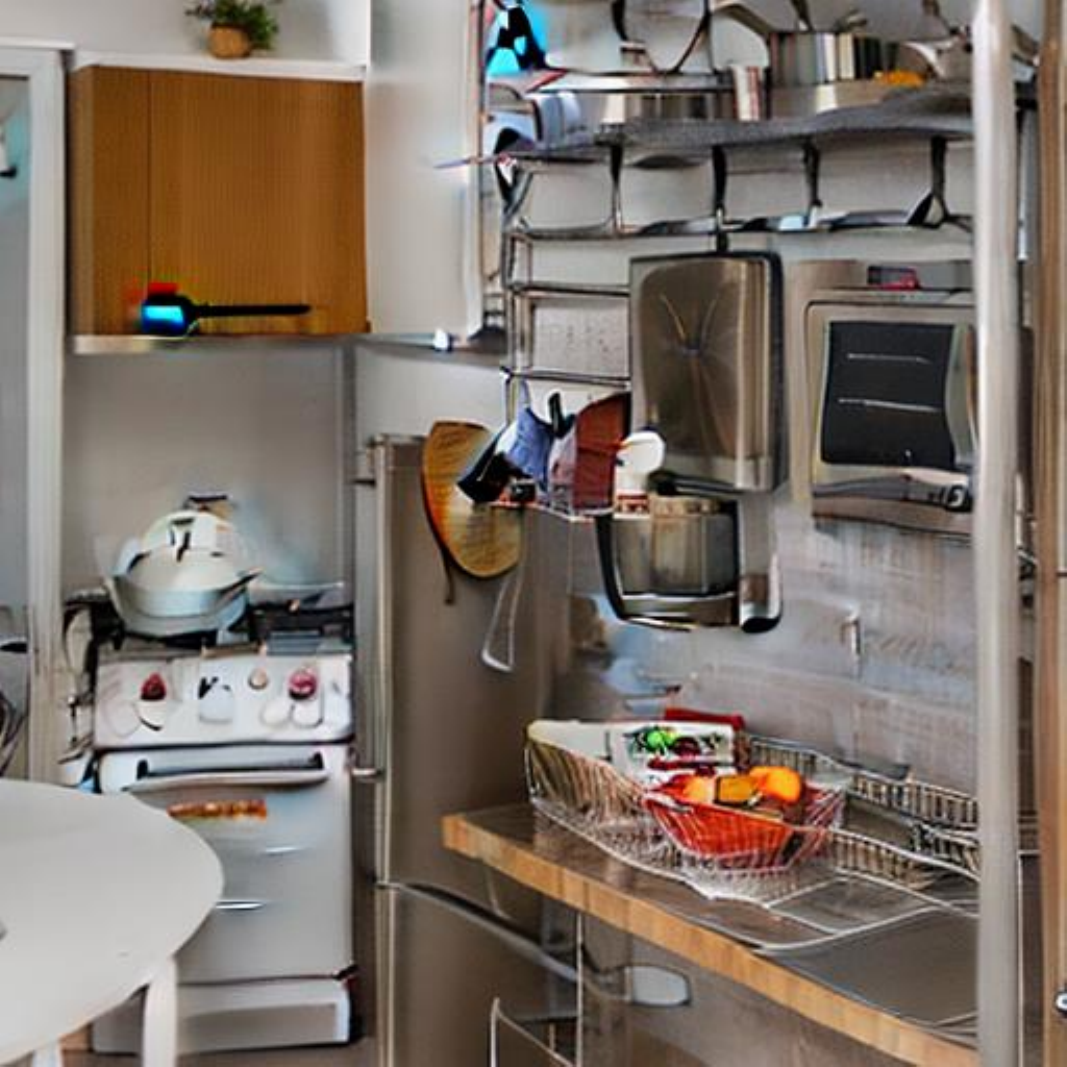} &
    \includegraphics[width=0.10\textwidth]{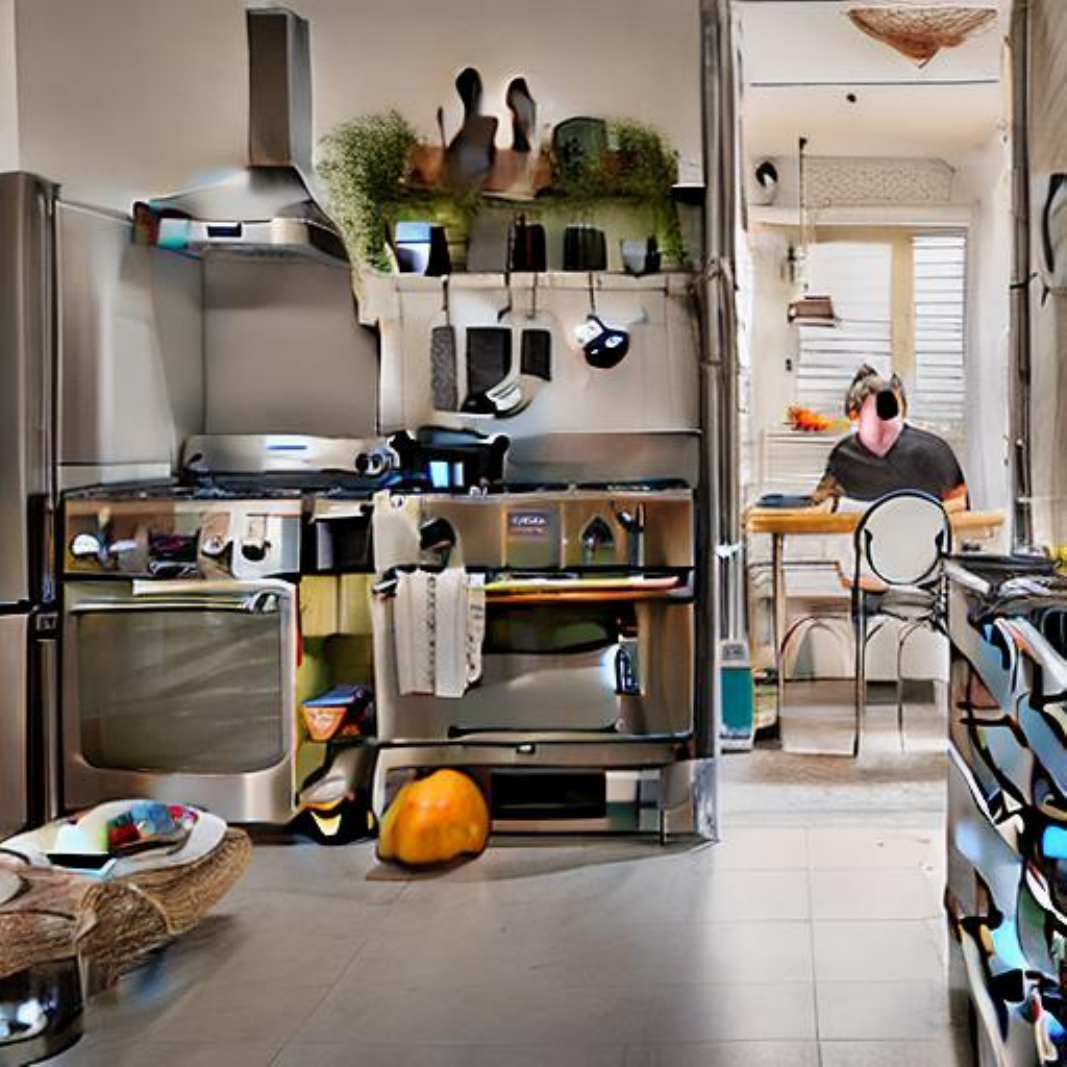} &
    \includegraphics[width=0.10\textwidth]{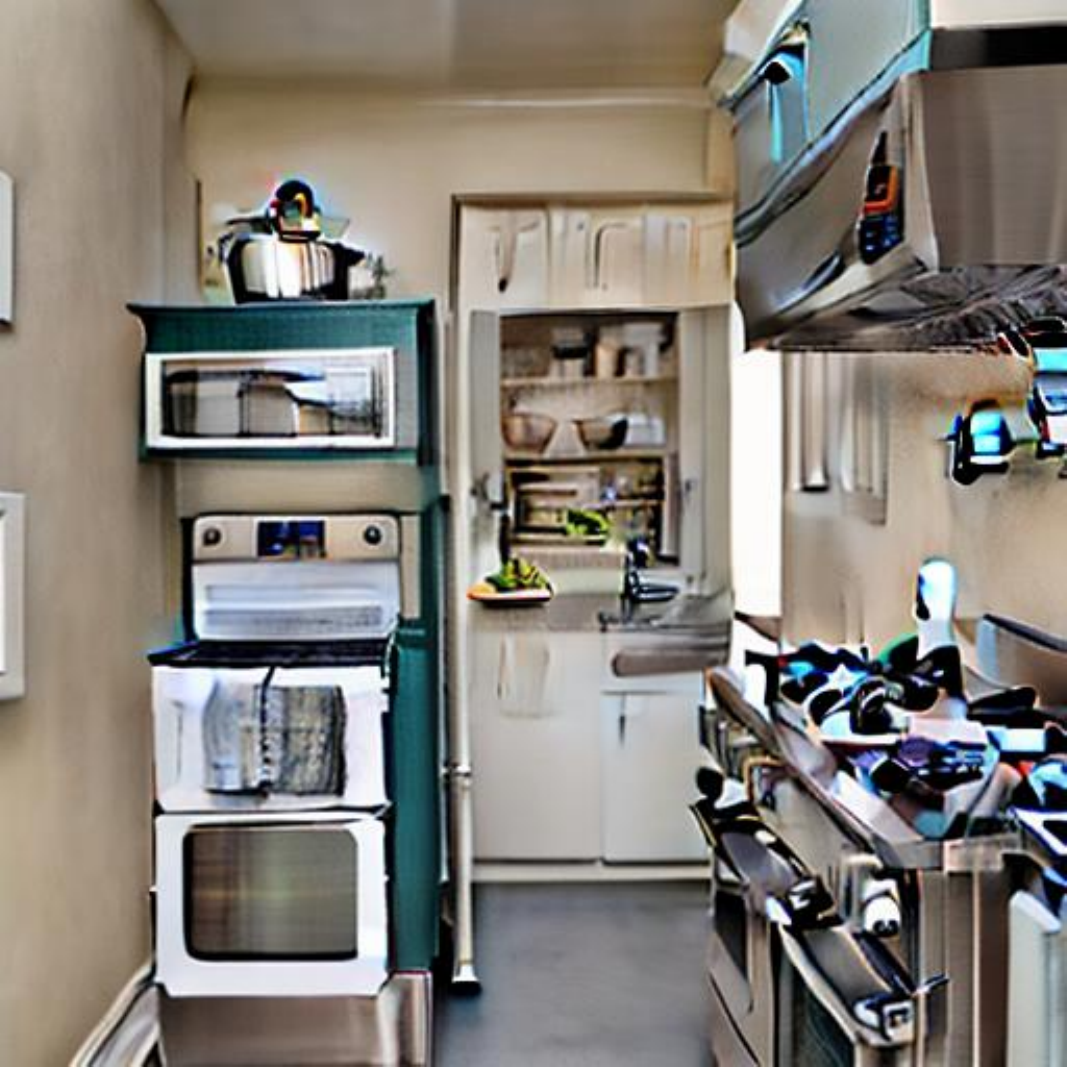} \\
    \includegraphics[width=0.10\textwidth]{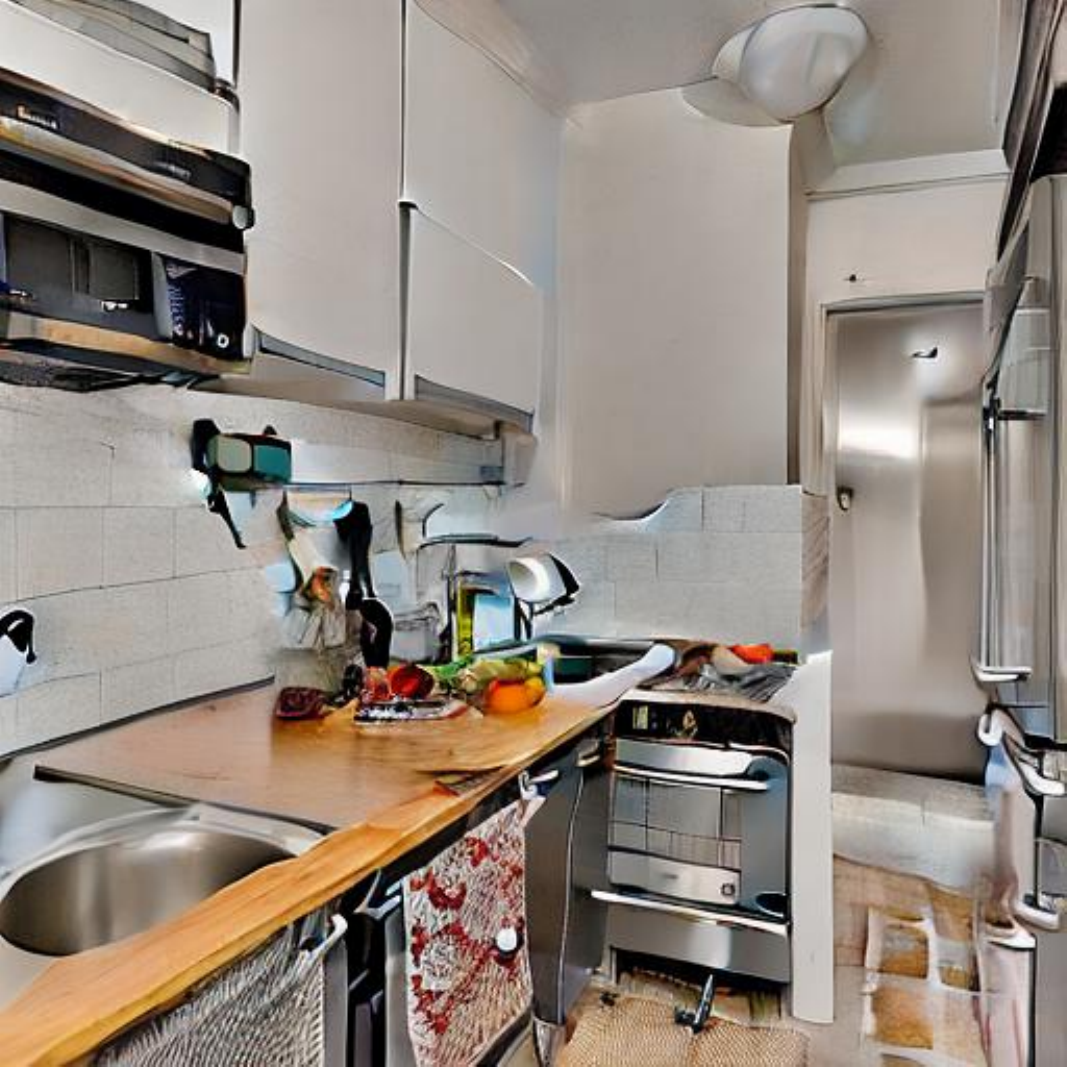} &
    \includegraphics[width=0.10\textwidth]{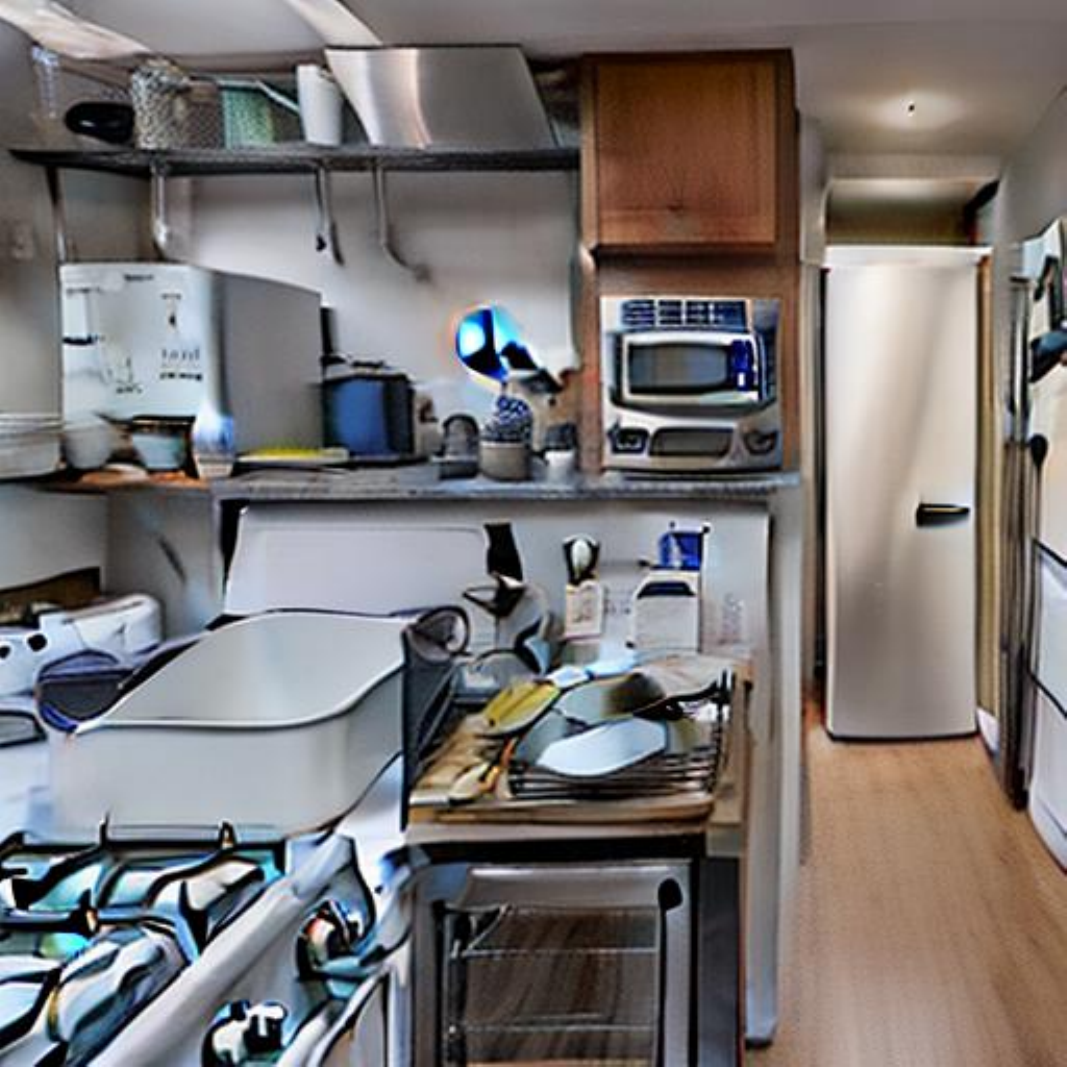} &
    \includegraphics[width=0.10\textwidth]{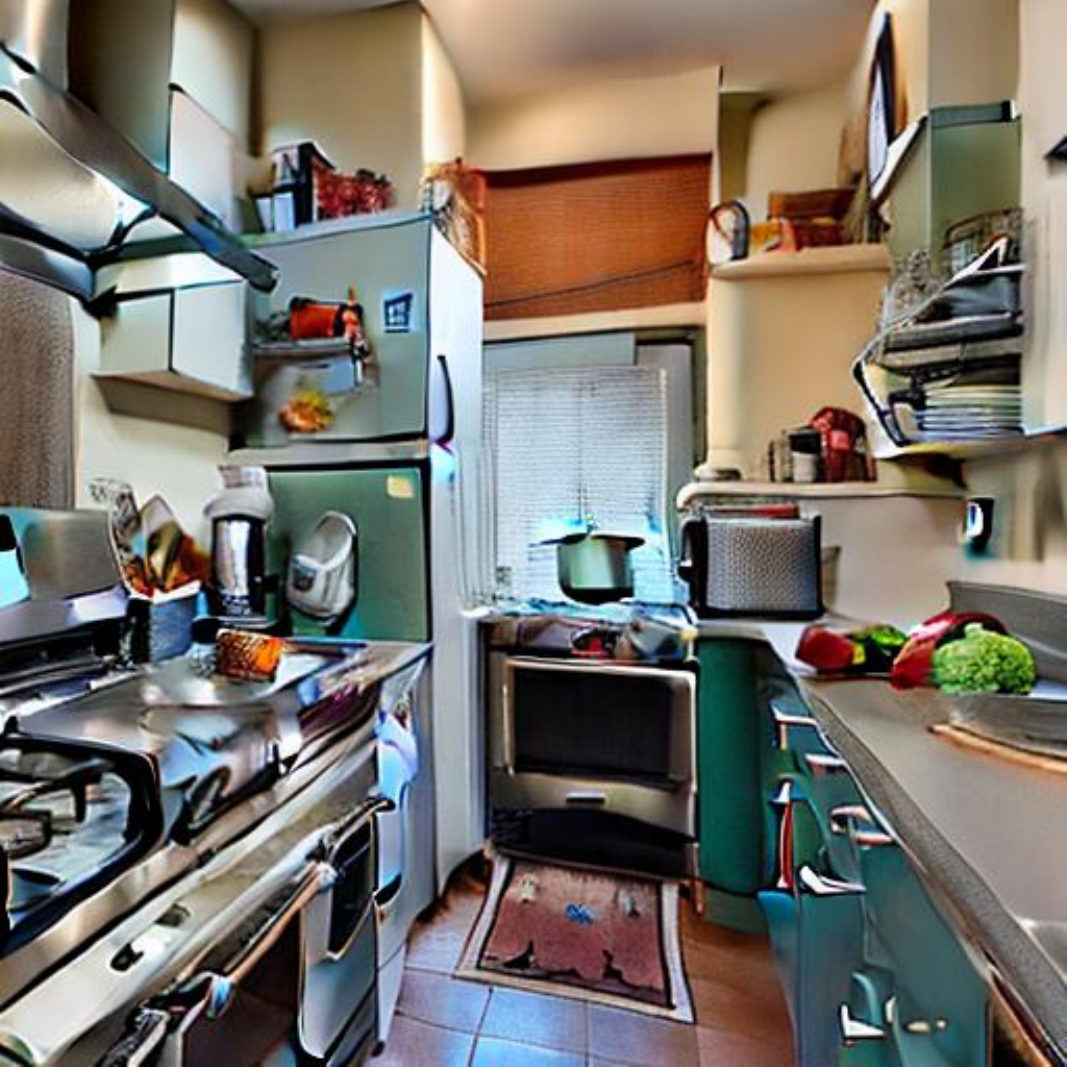} &
    \includegraphics[width=0.10\textwidth]{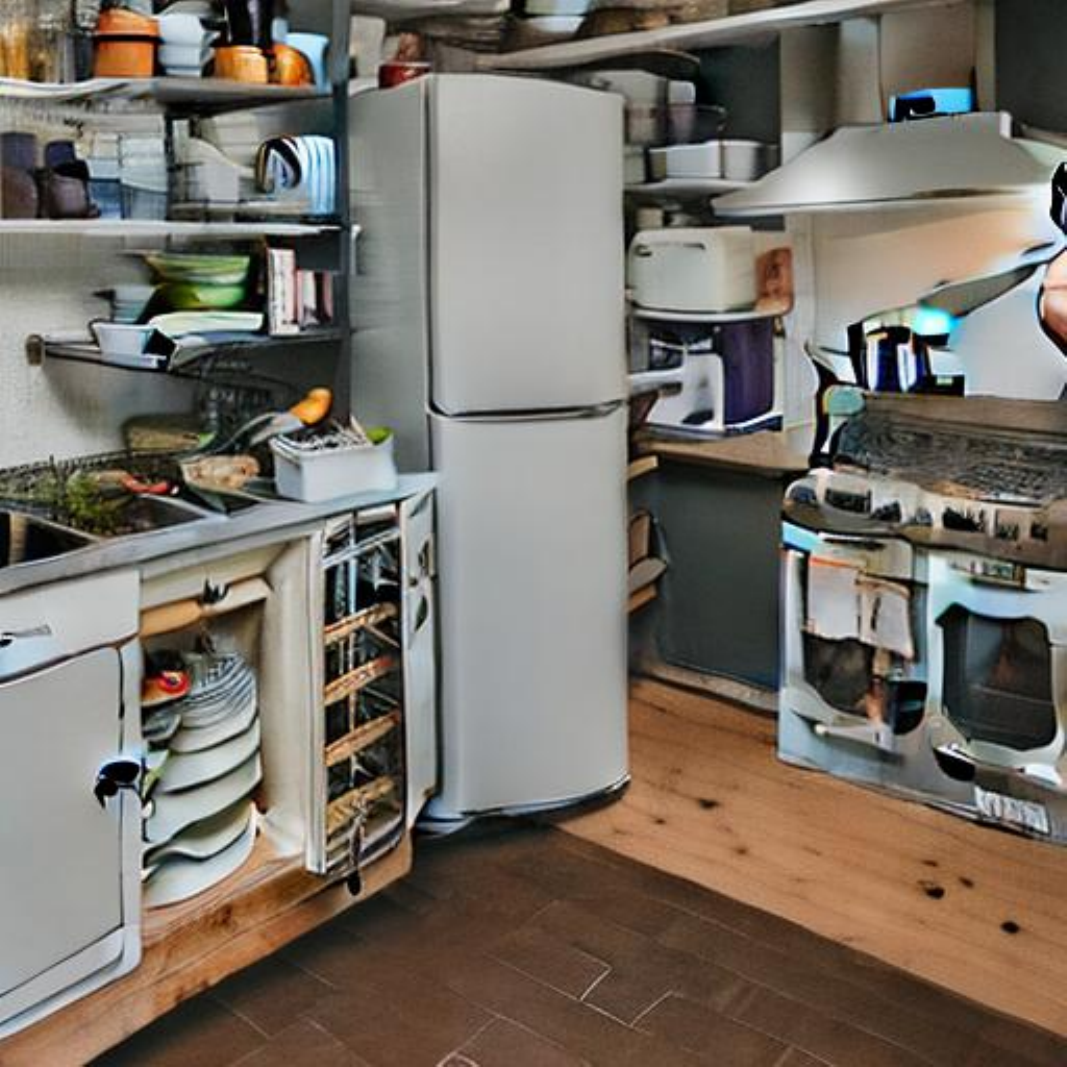} \\
};
\node[below=0pt of bl] {Medium 2};

\matrix (br) [gridmat, right=12pt of bl]
{
    \includegraphics[width=0.10\textwidth]{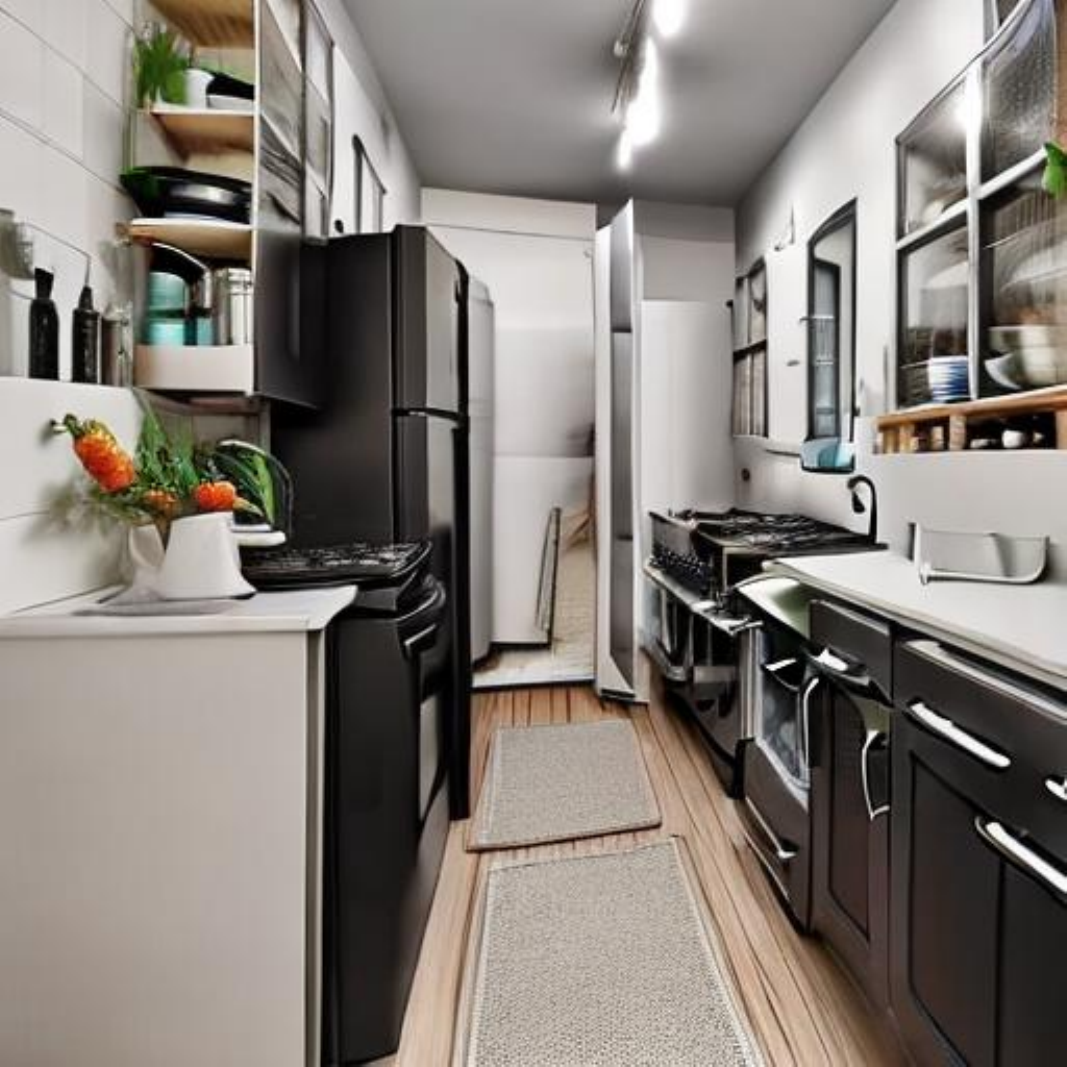} &
    \includegraphics[width=0.10\textwidth]{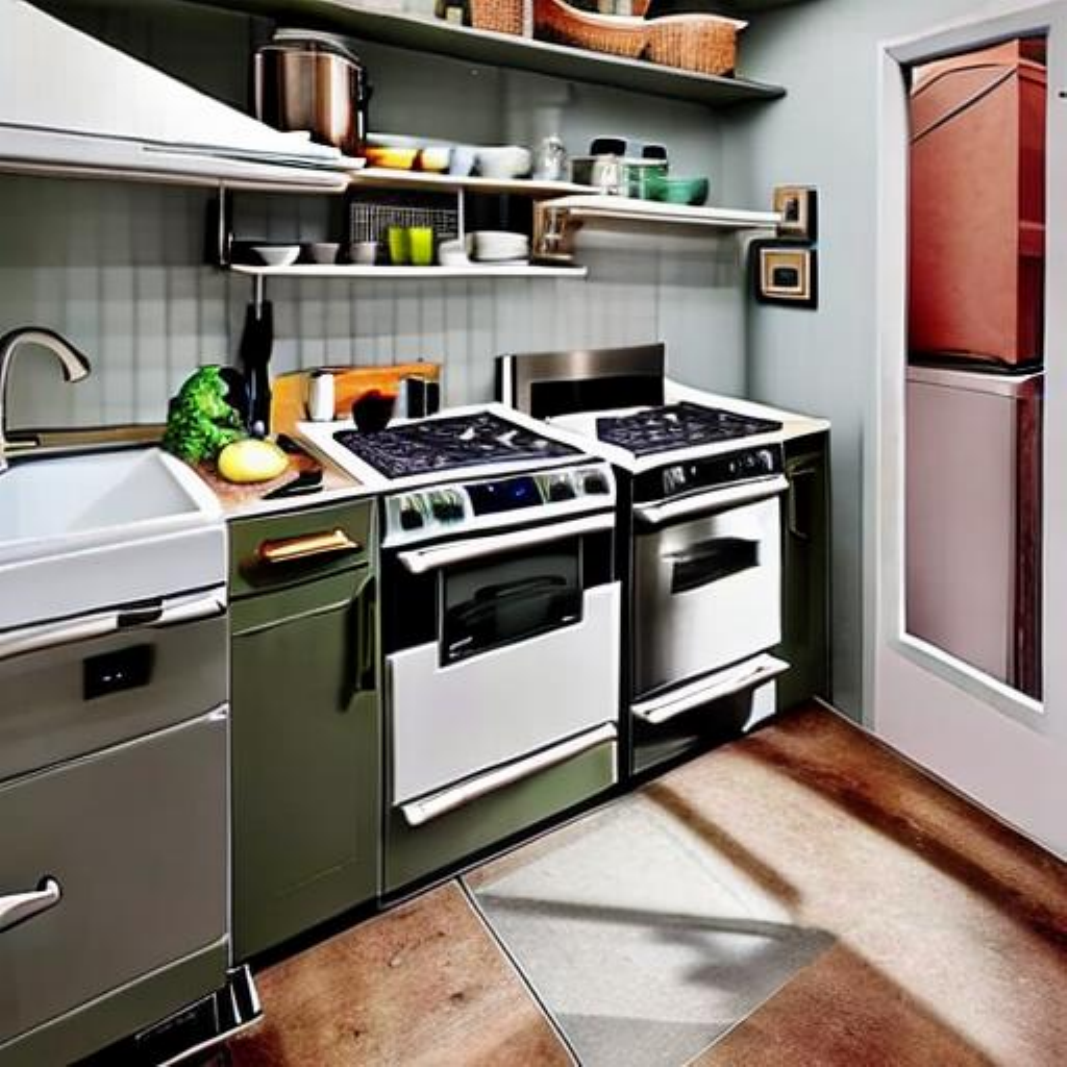} &
    \includegraphics[width=0.10\textwidth]{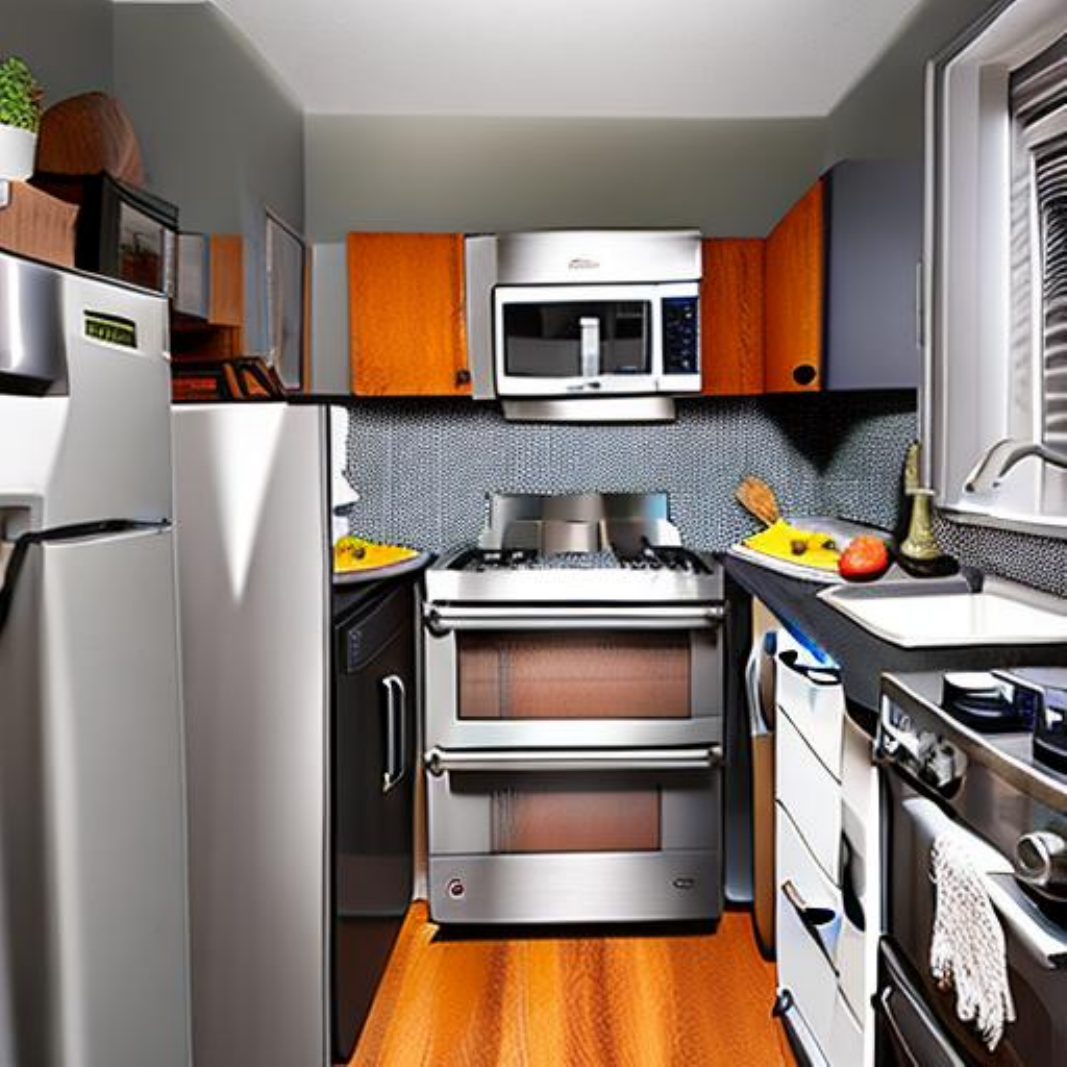} &
    \includegraphics[width=0.10\textwidth]{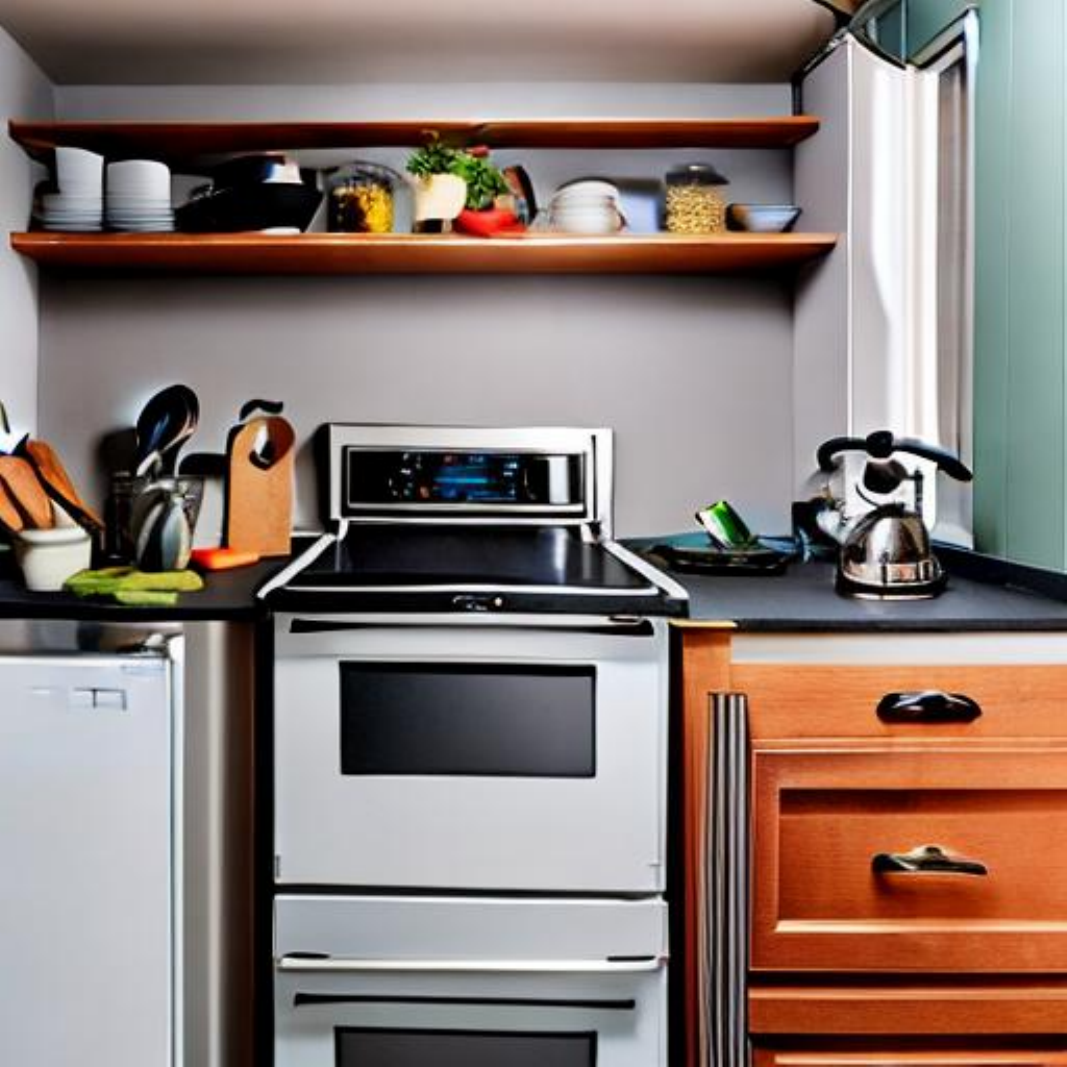} \\
    \includegraphics[width=0.10\textwidth]{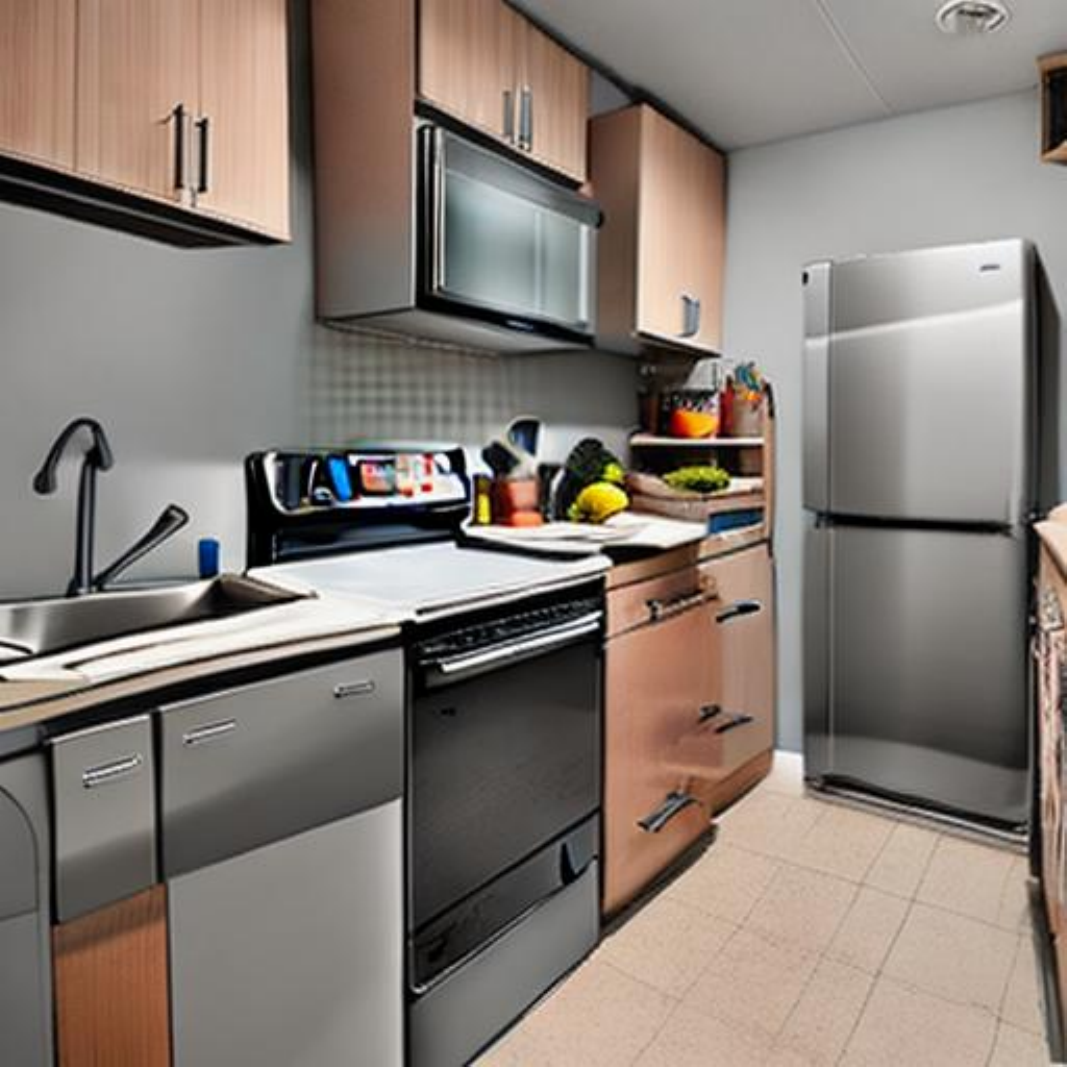} &
    \includegraphics[width=0.10\textwidth]{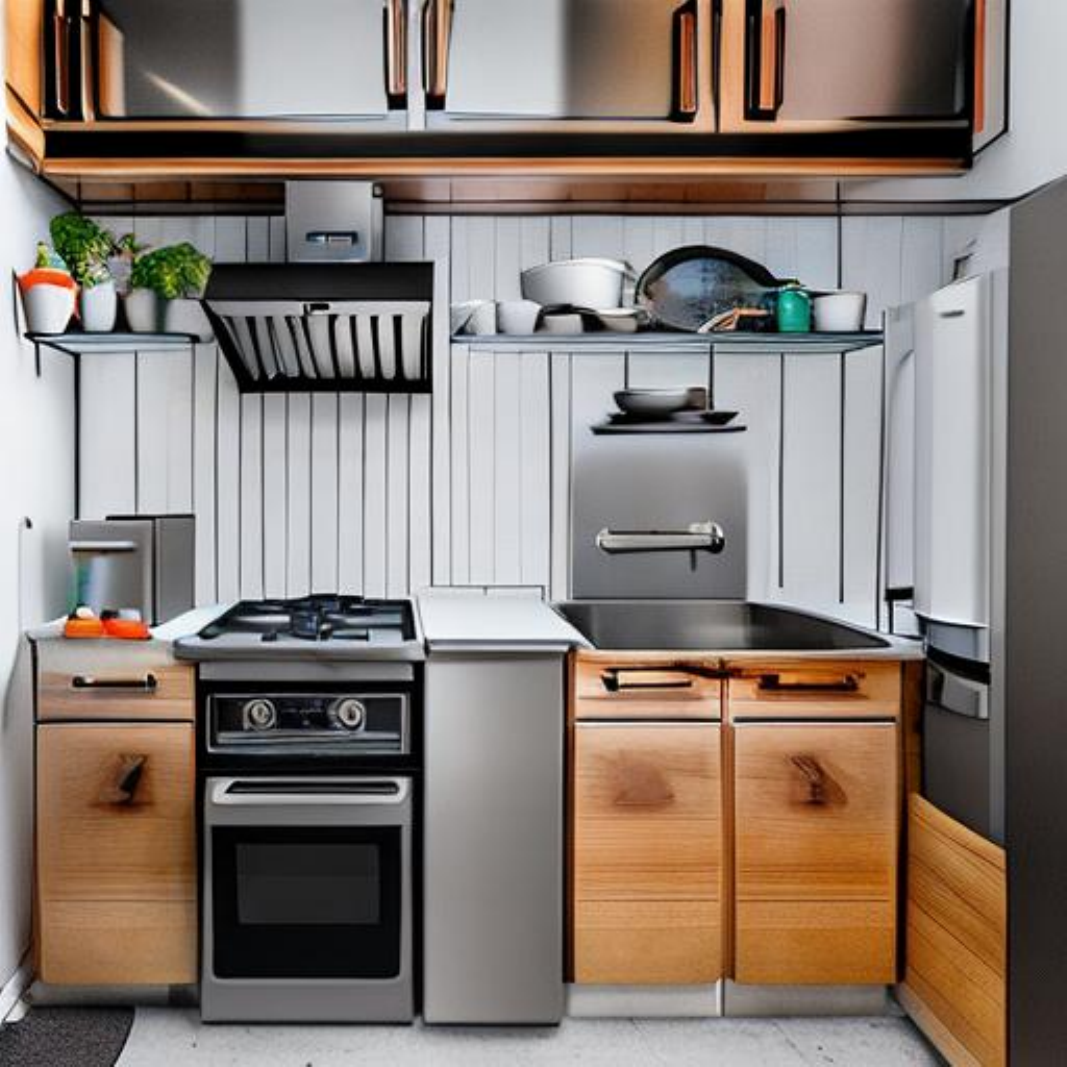} &
    \includegraphics[width=0.10\textwidth]{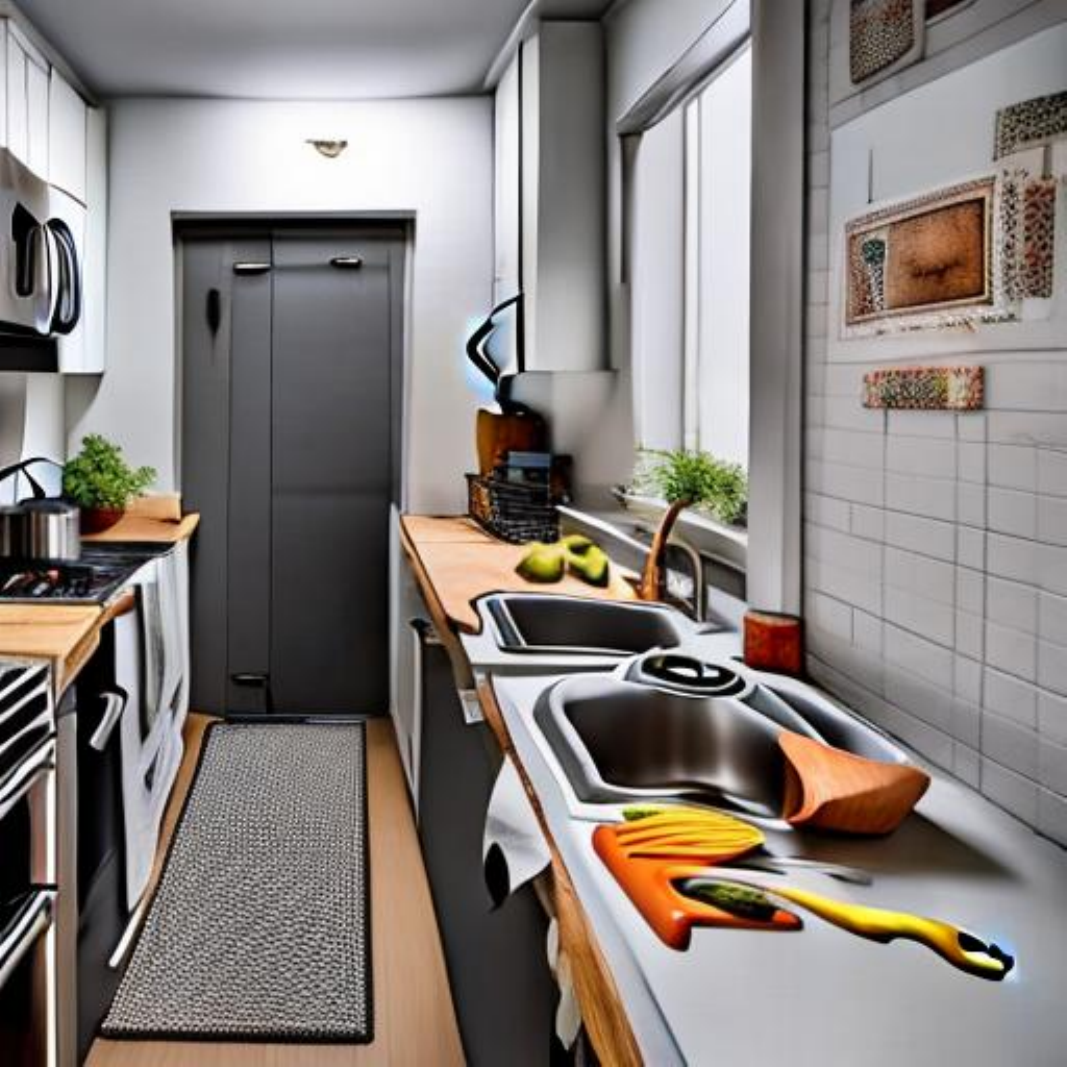} &
    \includegraphics[width=0.10\textwidth]{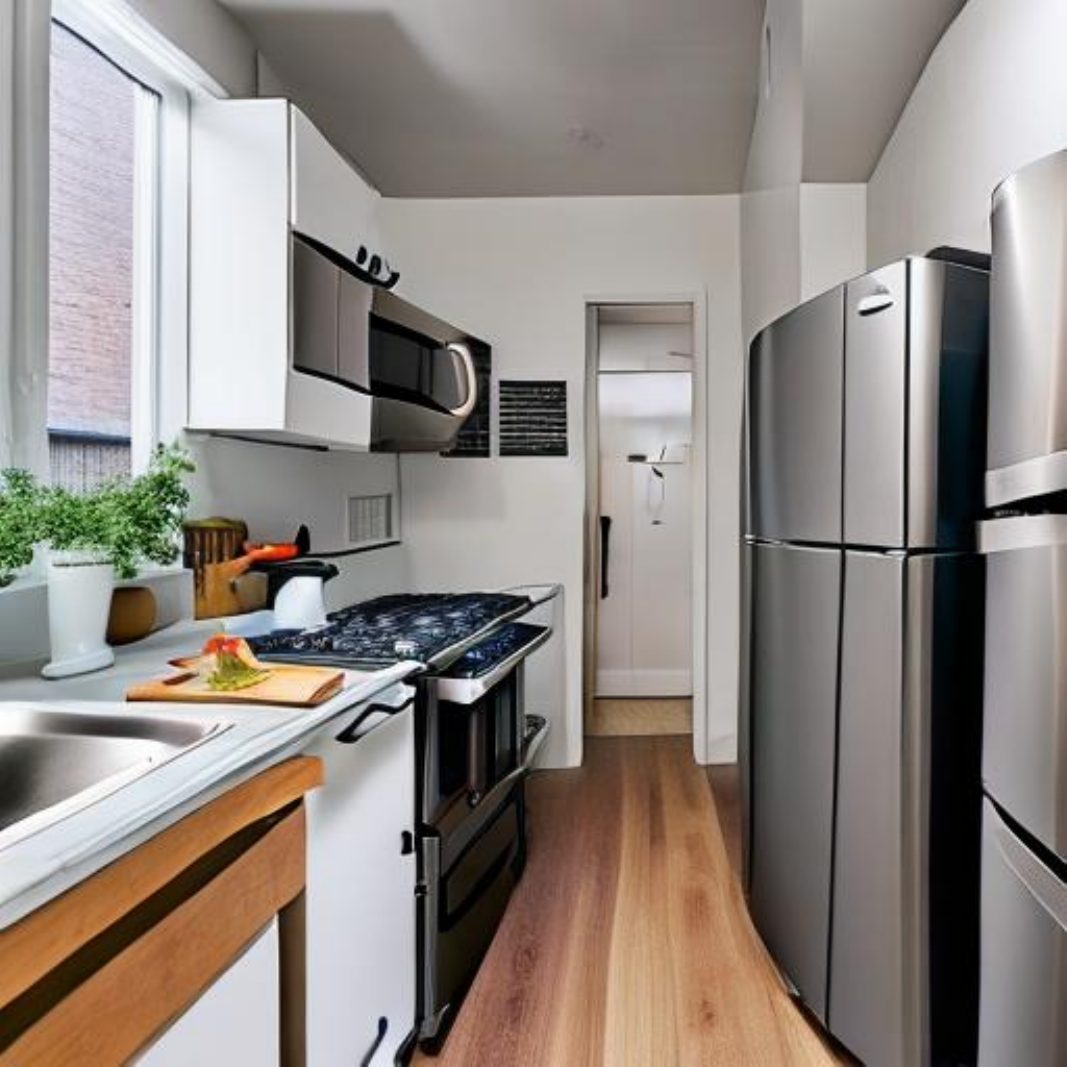} \\
    \includegraphics[width=0.10\textwidth]{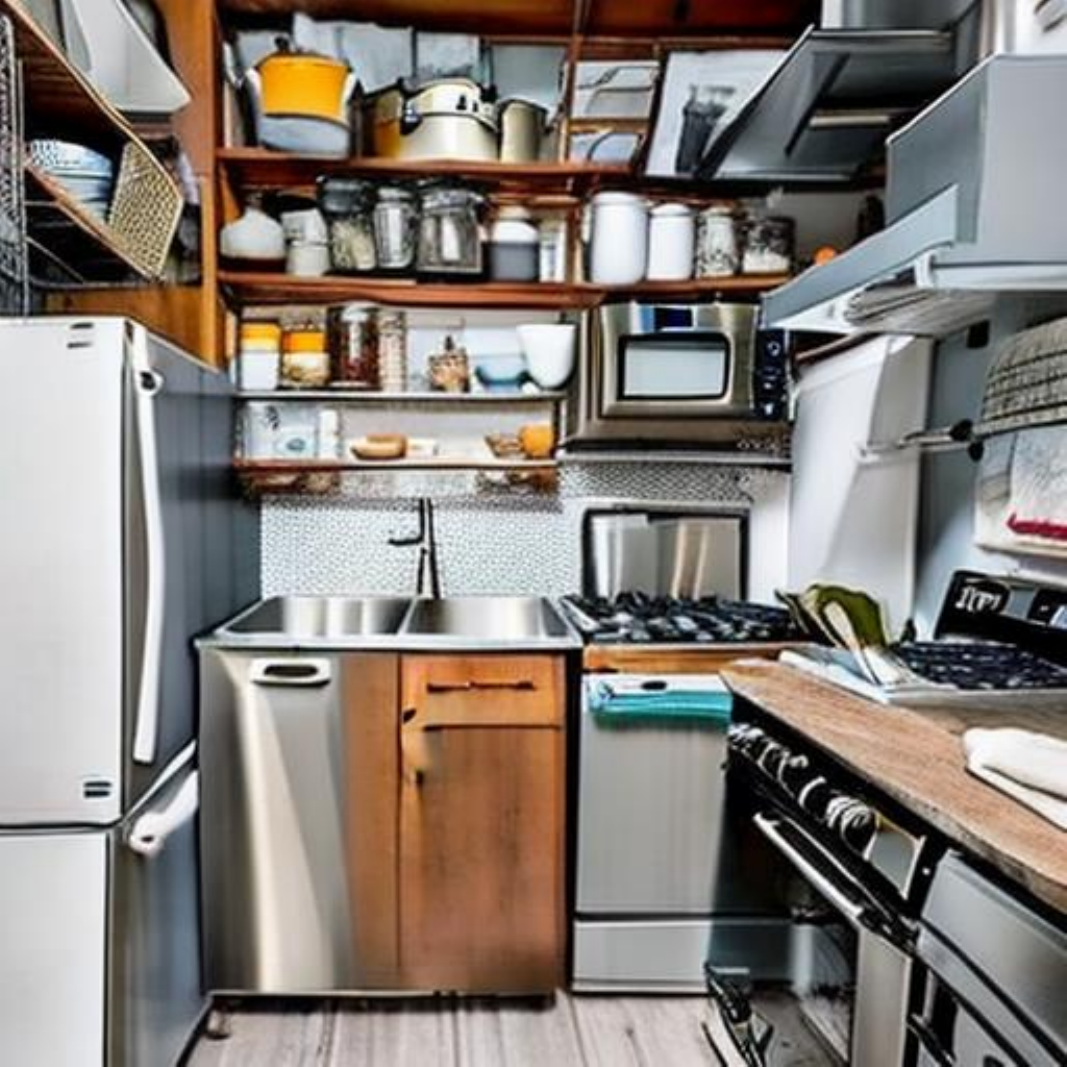} &
    \includegraphics[width=0.10\textwidth]{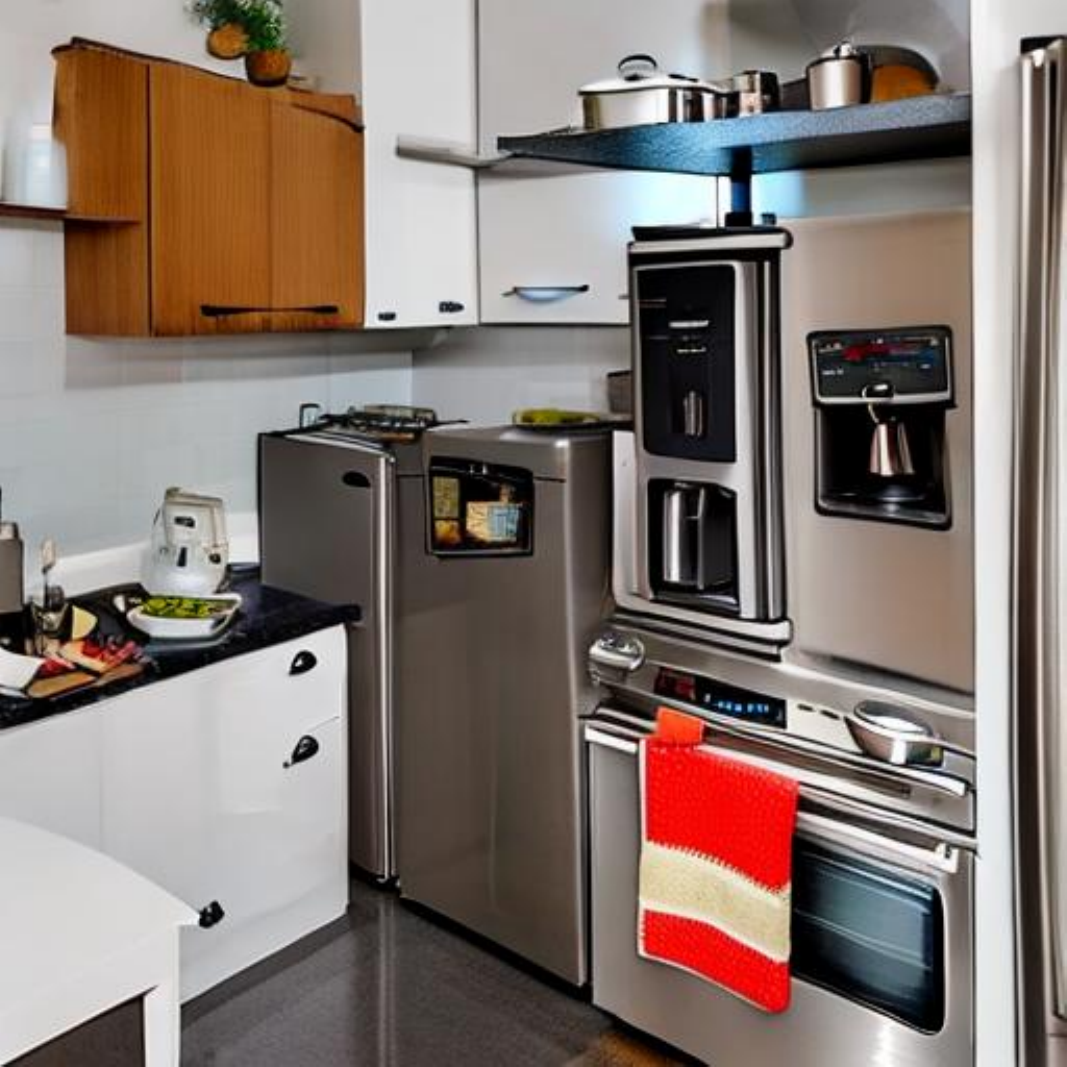} &
    \includegraphics[width=0.10\textwidth]{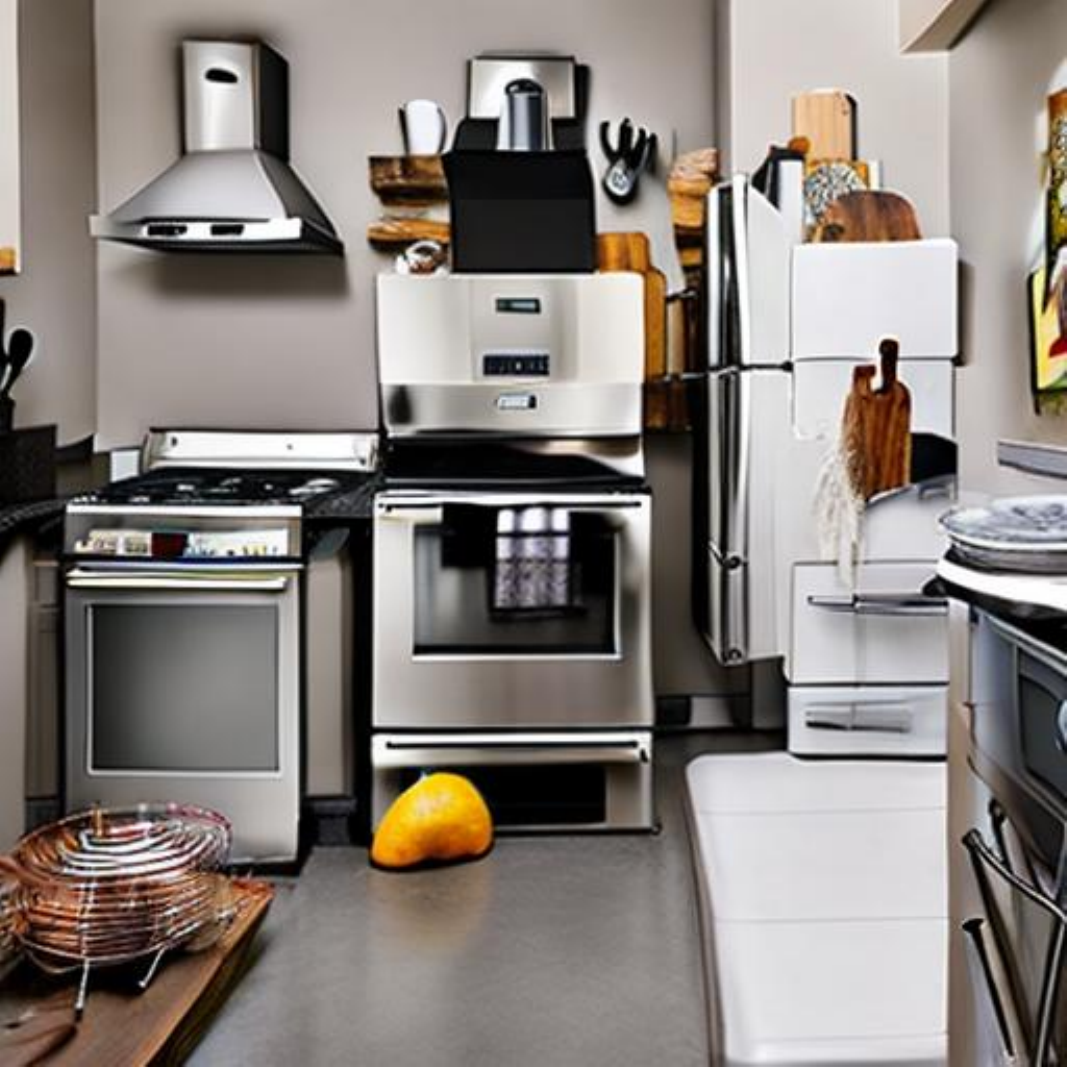} &
    \includegraphics[width=0.10\textwidth]{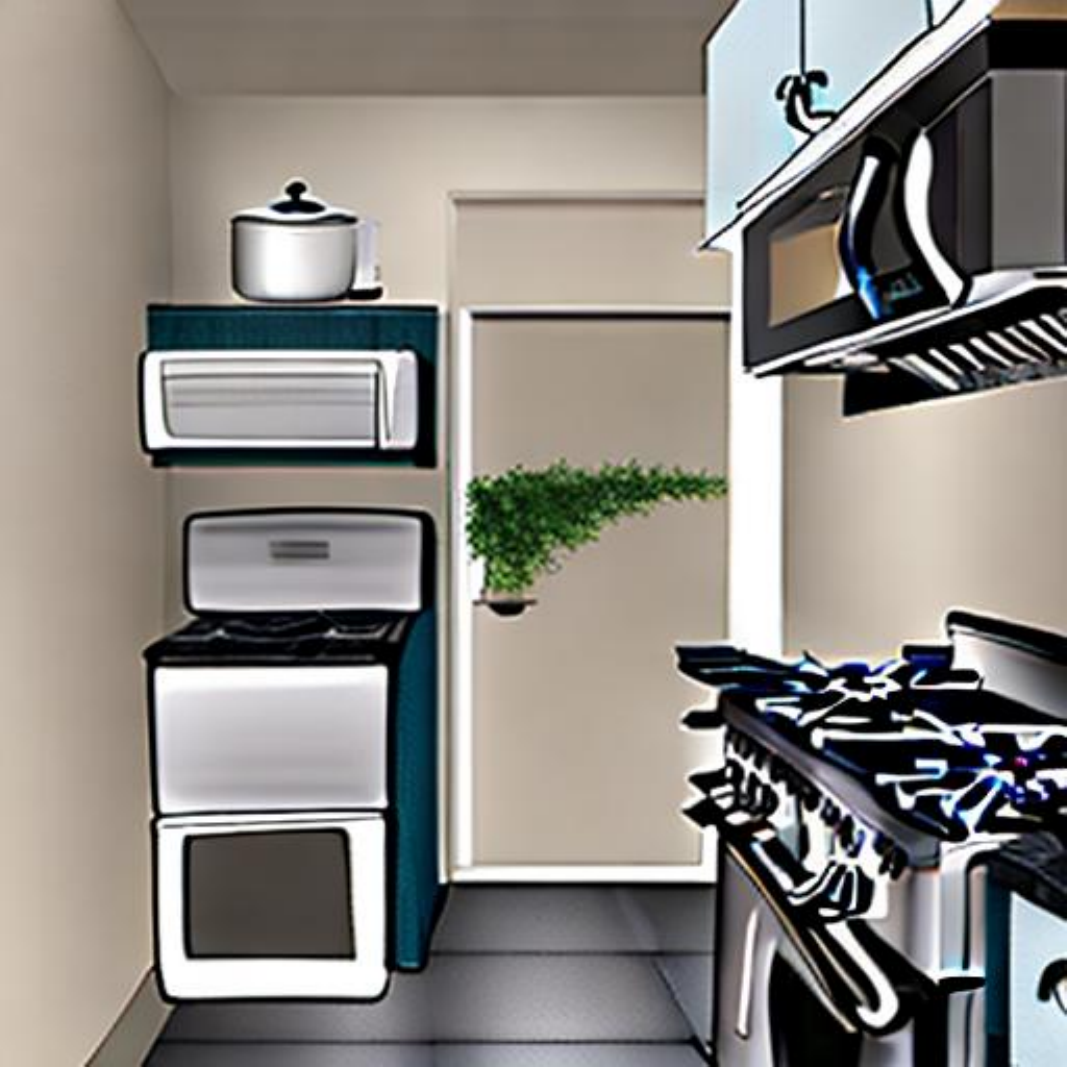} \\
    \includegraphics[width=0.10\textwidth]{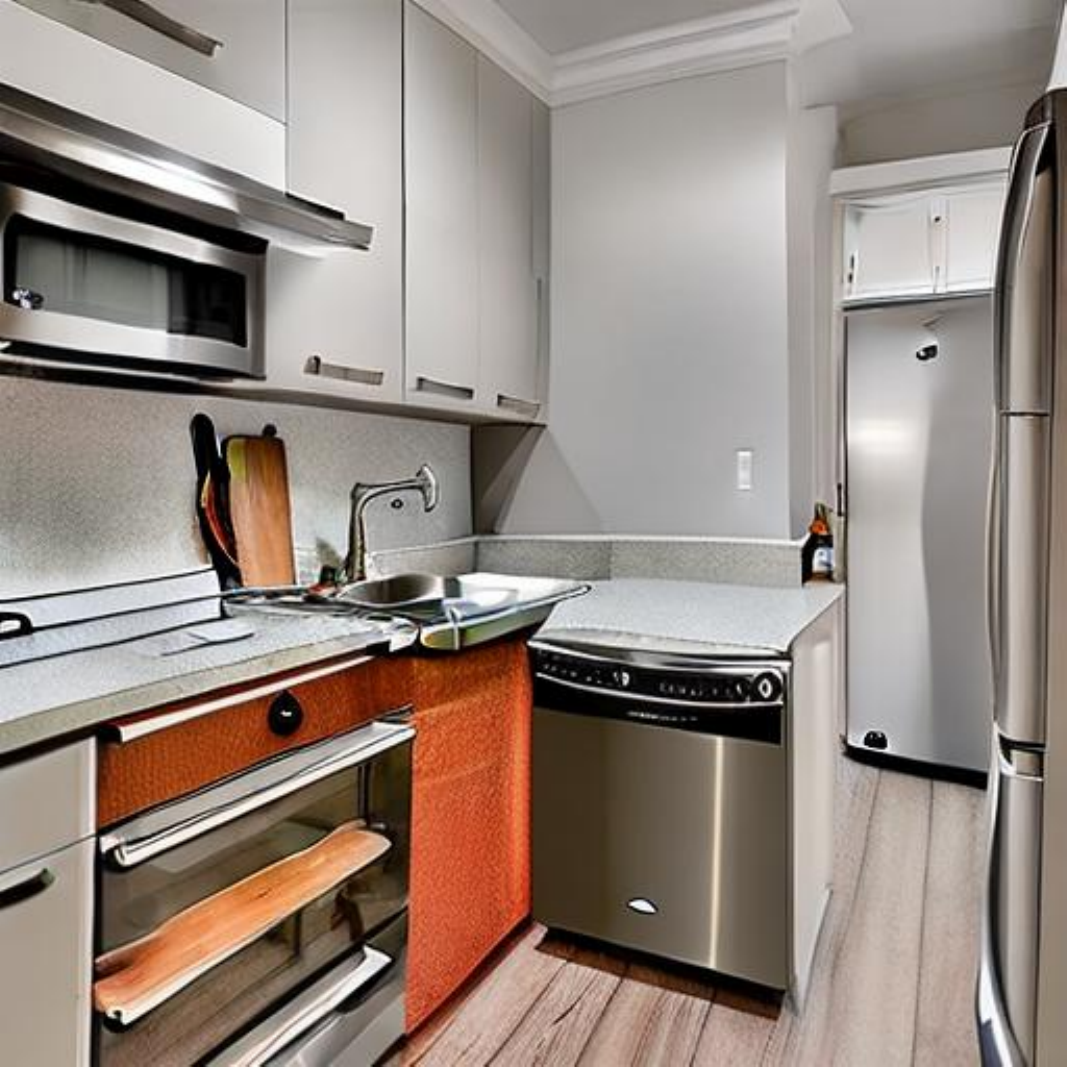} &
    \includegraphics[width=0.10\textwidth]{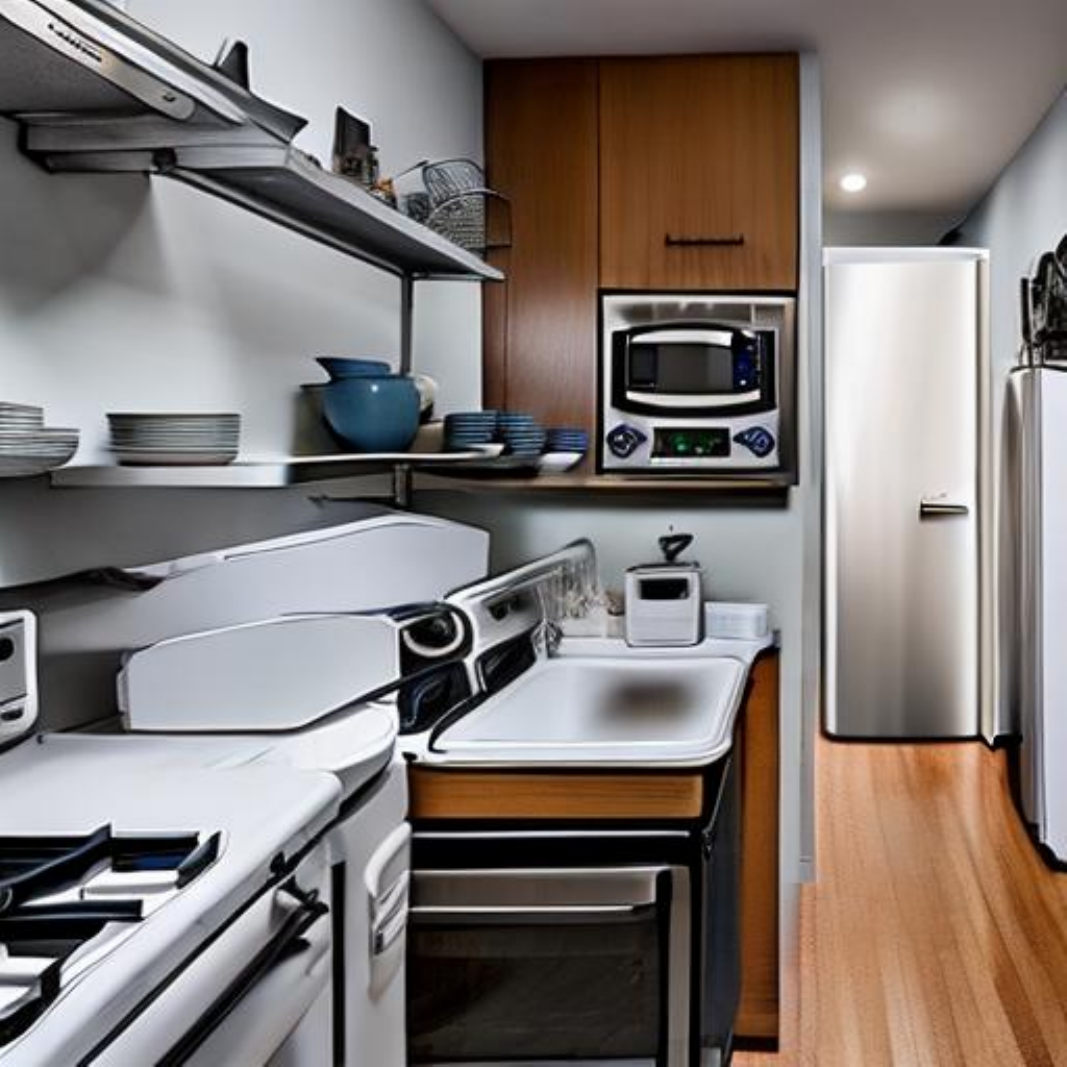} &
    \includegraphics[width=0.10\textwidth]{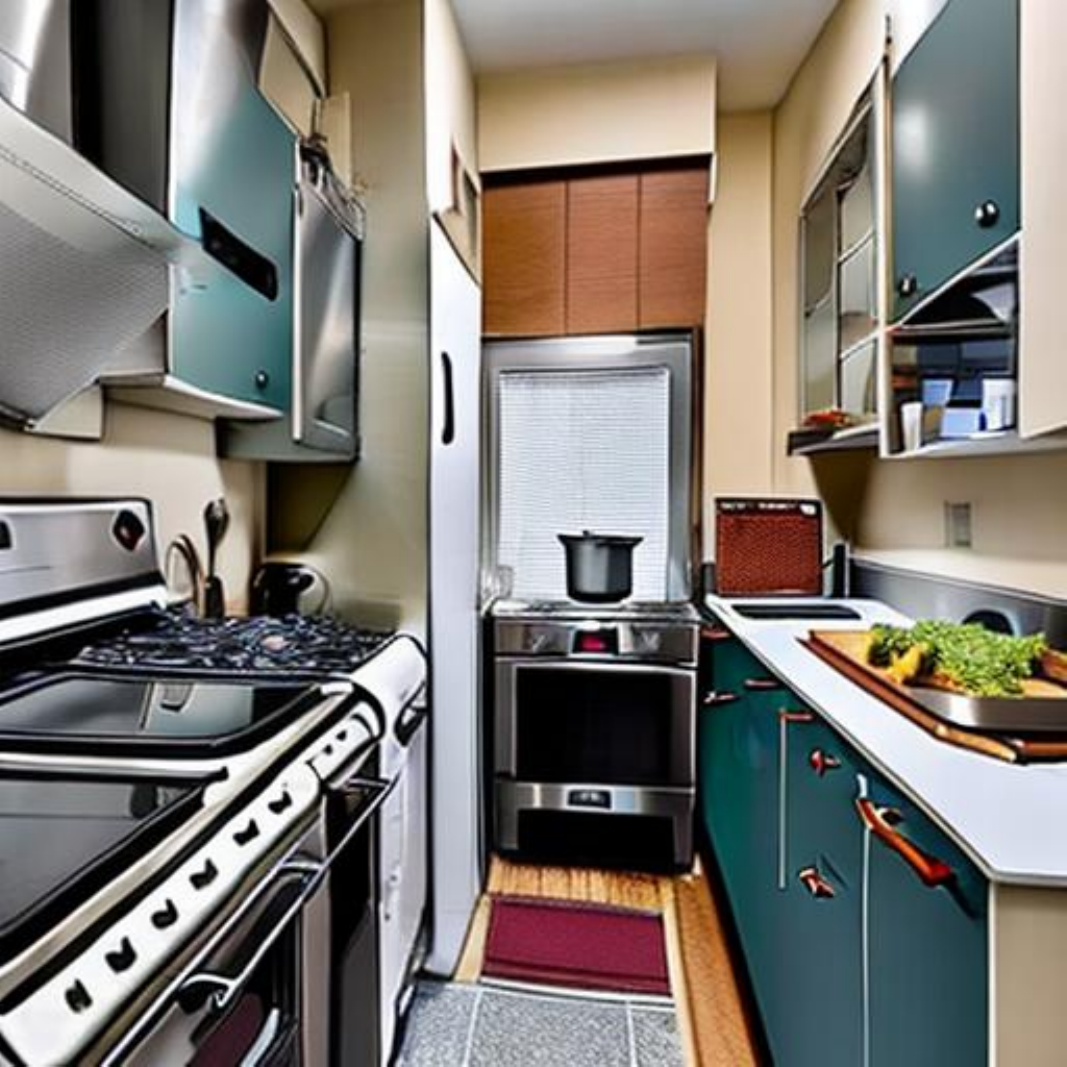} &
    \includegraphics[width=0.10\textwidth]{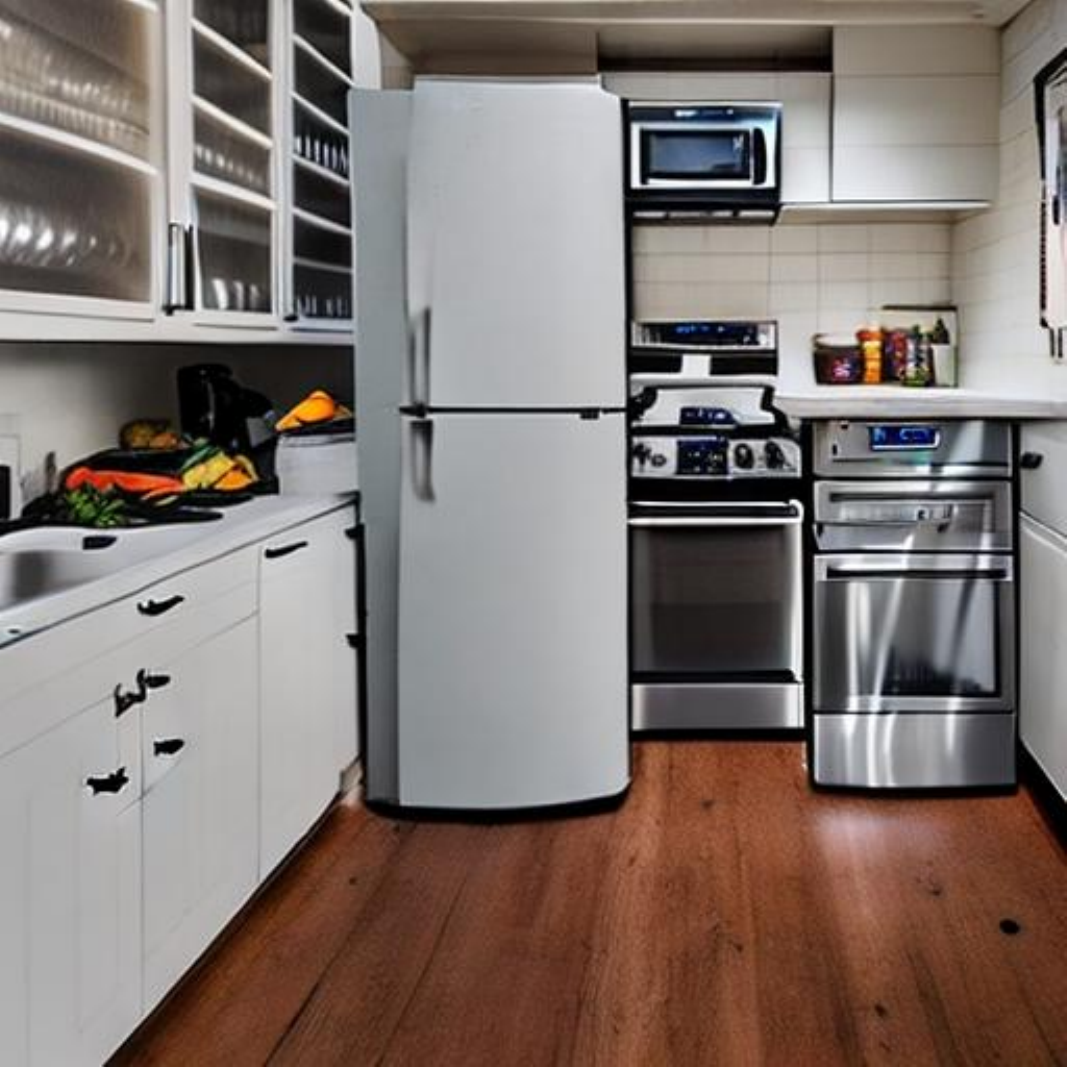} \\
};
\node[below=0pt of br] {DG-CFG (Ours)};

\end{tikzpicture}
\caption{\textbf{Qualitative ablation on Stable Diffusion~2.1 at $\bar{\omega}=23$.}
Each quadrant shows 16 samples generated from the same prompt with different random seeds. \textbf{Top-left:} constant CFG. \textbf{Top-right:} Medium 1 (time balancing). \textbf{Bottom-left:} Medium 2 (time balancing plus signal-content weighting). \textbf{Bottom-right:} DG-CFG (all three components). The prompt is \textit{a narrow kitchen filled with appliances and cooking utensils}. The prompt is \textit{a narrow kitchen filled with appliances and cooking utensils}.}
\label{fig:ablation-sd21}
\end{figure}

Taken together, the ablation results on SD1.5 and SD2.1 demonstrate that time balancing, signal-content weighting, and score-error mitigation provide complementary benefits that transfer across model generations rather than reflecting checkpoint-specific behavior.

\subsubsection{Evaluation Across NFE Settings}
\label{sec:nfe-sweep}

We next examine whether the advantage of DG-CFG persists as the numerical integration becomes more accurate and whether the trend transfers across backbones. Given the cost of repeated sampling, we focus this analysis on SD1.5 and SD2.1. We fix $\bar{\omega}=23$, the high-guidance regime where color over-saturation is pronounced, and evaluate all four methods across $\text{NFE}\in\{4,8,16,32\}$ on both backbones.

Tables~\ref{tab:nfe-sweep} and~\ref{tab:nfe-sweep-sd21} summarize the results. At $\text{NFE}=4$, substantial discretization error affects all methods. Even in this regime, DG-CFG achieves the highest CLIP and IR scores and the lowest saturation on both backbones. At $8$ NFE, DG-CFG leads in CLIP, IR, and FID on both backbones. It retains the strongest overall performance at $16$ and $32$ NFE. These consistent gains across NFE settings show that the advantage of DG-CFG is not tied to a particular discretization level.

The raw Div values further underscore the need for a quality-conditioned interpretation of diversity. Several baselines attain higher Div scores than DG-CFG despite substantially worse FID and IR. As NFE increases and these artifacts recede, their Div scores often decline, indicating that low-NFE distortions in global structure and fine detail can artificially disperse feature embeddings and inflate the Vendi score.

The saturation trend is consistent across backbones. At every NFE, DG-CFG suppresses the increase induced by strong guidance and keeps Sat close to the real-image reference on both SD1.5 and SD2.1. Constant CFG remains markedly over-saturated, whereas Interval CFG exhibits the most severe saturation at the lowest NFE. The qualitative comparisons in Figure~\ref{fig:nfe-ablation} corroborate these results: increasing NFE improves structural quality for all methods, yet the baselines remain more saturated and require additional sampling steps to approach the visual quality achieved by DG-CFG. This disparity motivates the fixed-quality analysis in Section~\ref{sec:nfe-threshold}.

\begin{table}[H]
\centering
\caption{Evaluation across NFE settings on Stable Diffusion~1.5 at $\bar{\omega}=23$. All methods are evaluated with DDIM sampling at $\text{NFE}\in\{4,8,16,32\}$. Metrics are computed over 1K MS-COCO validation prompts, except Div (Vendi score), which is averaged over 100 prompts with 16 samples per prompt. The real-image reference values are CLIP=30.57, IR=0.38, and Sat=0.33.}
\label{tab:nfe-sweep}
\small
\begin{tabular}{llcccc}
\toprule
Metric & Method & NFE$=4$ & NFE$=8$ & NFE$=16$ & NFE$=32$ \\
\midrule
\multirow{4}{*}{CLIP$\uparrow$}
& Constant          & 26.79 & 30.64 & 31.50 & 31.58 \\
& Interval CFG      & 24.84 & 29.10 & 31.00 & 31.45 \\
& $\beta$--CFG      & 26.11 & 30.88 & 31.57 & 31.62 \\
& \textbf{DG-CFG (ours)} & \textbf{27.89} & \textbf{31.39} & \textbf{31.60} & \textbf{31.64} \\
\midrule
\multirow{4}{*}{IR$\uparrow$}
& Constant          & $-$1.618 & $-$0.330 & +0.180 & +0.287 \\
& Interval CFG      & $-$1.948 & $-$0.986 & $-$0.128 & +0.185 \\
& $\beta$--CFG      & $-$1.738 & $-$0.239 & +0.227 & +0.299 \\
& \textbf{DG-CFG (ours)} & \textbf{$-$1.314} & \textbf{+0.029} & \textbf{+0.269} & \textbf{+0.338} \\
\midrule
\multirow{4}{*}{Sat}
& Constant          & 0.408 & 0.469 & 0.455 & 0.450 \\
& Interval CFG      & 0.500 & 0.462 & 0.413 & 0.395 \\
& $\beta$--CFG      & 0.439 & 0.438 & 0.415 & 0.412 \\
& \textbf{DG-CFG (ours)} & 0.348 & 0.332 & 0.341 & 0.356 \\
\midrule
\multirow{4}{*}{FID$\downarrow$}
& Constant          & 137.16 & 82.34  & 73.00  & 72.79 \\
& Interval CFG      & 165.71 & 110.89 & 81.61  & 75.43 \\
& $\beta$--CFG      & 150.37 & 85.38  & 73.71  & 73.45 \\
& \textbf{DG-CFG (ours)} & \textbf{133.54} & \textbf{74.53} & \textbf{70.60} & \textbf{69.95} \\
\midrule
\multirow{4}{*}{Div$\uparrow$}
& Constant          & 7.20 & 5.67 & 4.80 & 4.45 \\
& Interval CFG      & 7.16 & 6.80 & 5.77 & 5.19 \\
& $\beta$--CFG      & 7.16 & 5.57 & 4.65 & 4.53 \\
& \textbf{DG-CFG (ours)} & 6.79 & 5.24 & 4.76 & 4.72 \\
\bottomrule
\end{tabular}
\end{table}

\begin{table}[H]
\centering
\caption{Evaluation across NFE settings on Stable Diffusion~2.1 at $\bar{\omega}=23$. All methods are evaluated with DDIM sampling at $\text{NFE}\in\{4,8,16,32\}$. Metrics are computed over 1K MS-COCO validation prompts, except Div (Vendi score), which is averaged over 100 prompts with 16 samples per prompt. The real-image reference values are CLIP=30.57, IR=0.38, and Sat=0.33.}
\label{tab:nfe-sweep-sd21}
\small
\begin{tabular}{llcccc}
\toprule
Metric & Method & NFE$=4$ & NFE$=8$ & NFE$=16$ & NFE$=32$ \\
\midrule
\multirow{4}{*}{CLIP$\uparrow$}
& Constant          & 27.854 & 30.977 & \textbf{31.664} & 31.656 \\
& Interval CFG      & 26.460 & 29.802 & 31.058 & 31.444 \\
& $\beta$--CFG      & 27.475 & 31.141 & 31.613 & 31.639 \\
& \textbf{DG-CFG (ours)} & \textbf{28.348} & \textbf{31.427} & 31.655 & \textbf{31.657} \\
\midrule
\multirow{4}{*}{IR$\uparrow$}
& Constant          & $-$1.325 & $-$0.092 & +0.409 & +0.469 \\
& Interval CFG      & $-$1.687 & $-$0.630 & +0.082 & +0.364 \\
& $\beta$--CFG      & $-$1.466 & +0.060 & +0.416 & +0.516 \\
& \textbf{DG-CFG (ours)} & \textbf{$-$1.095} & \textbf{+0.244} & \textbf{+0.488} & \textbf{+0.536} \\
\midrule
\multirow{4}{*}{Sat}
& Constant          & 0.369 & 0.431 & 0.413 & 0.408 \\
& Interval CFG      & 0.454 & 0.422 & 0.382 & 0.366 \\
& $\beta$--CFG      & 0.395 & 0.398 & 0.381 & 0.381 \\
& \textbf{DG-CFG (ours)} & 0.322 & 0.315 & 0.327 & 0.340 \\
\midrule
\multirow{4}{*}{FID$\downarrow$}
& Constant          & \textbf{125.43} & 79.94 & 72.12 & 72.60 \\
& Interval CFG      & 147.68 & 105.19 & 82.19 & 76.67 \\
& $\beta$--CFG      & 135.48 & 83.11 & 74.38 & 73.91 \\
& \textbf{DG-CFG (ours)} & 129.59 & \textbf{76.43} & \textbf{71.93} & \textbf{72.38} \\
\midrule
\multirow{4}{*}{Div$\uparrow$}
& Constant          & 6.88 & 5.33 & 4.44 & 4.14 \\
& Interval CFG      & 7.15 & 6.38 & 5.39 & 4.77 \\
& $\beta$--CFG      & 6.91 & 5.24 & 4.40 & 4.27 \\
& \textbf{DG-CFG (ours)} & 6.46 & 4.83 & 4.41 & 4.35 \\
\bottomrule
\end{tabular}
\end{table}

\begin{figure}[t]
\centering
\small
\begin{tikzpicture}[
    img/.style={inner sep=0pt, outer sep=0pt, anchor=center},
    rowgrid/.style={
        matrix of nodes,
        nodes=img,
        inner sep=0pt, outer sep=0pt,
        column sep=1.2pt,
        row sep=1.2pt
    }
]

\matrix (nfe) [rowgrid]
{
\node[name=l1-1]{\includegraphics[width=0.10\textwidth]{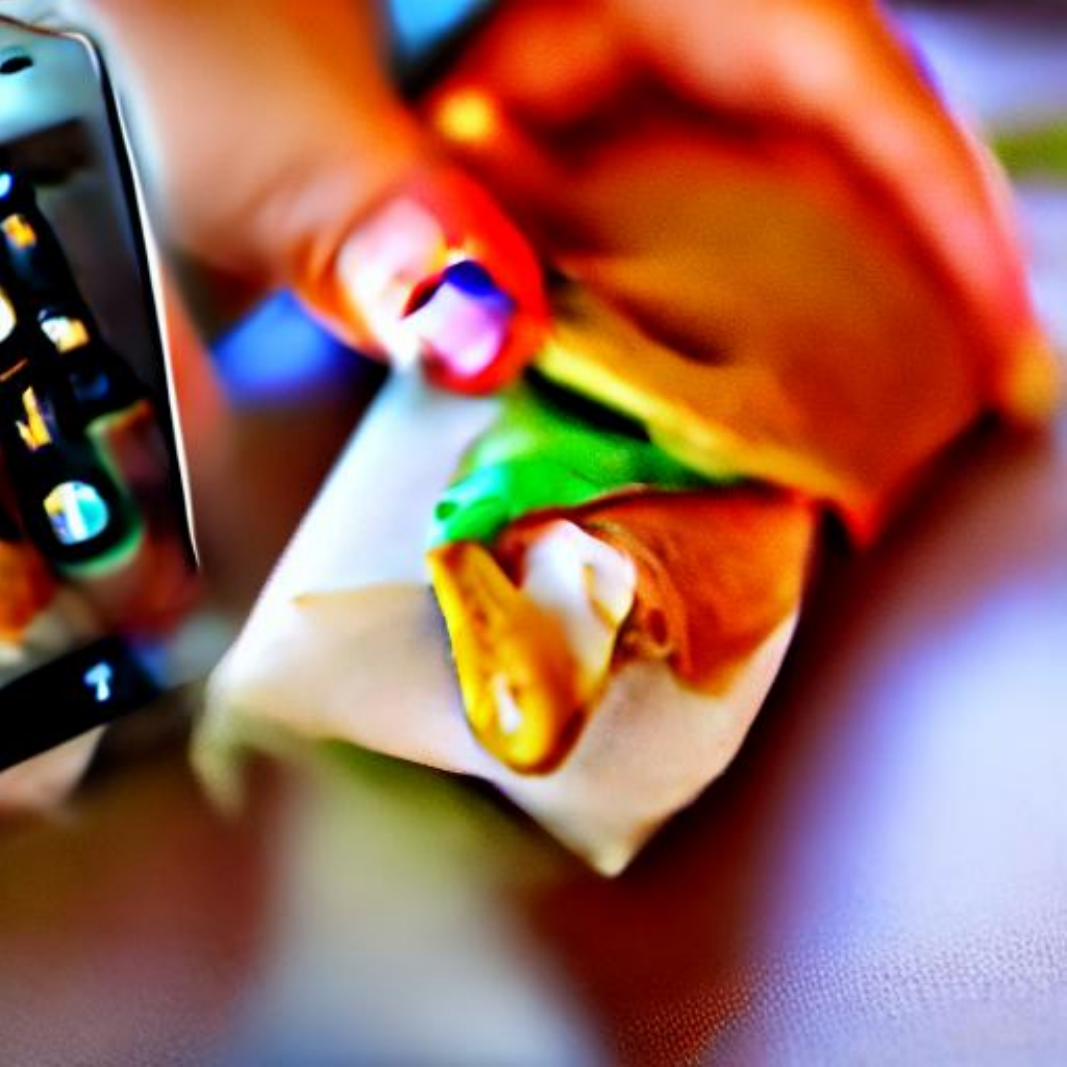}}; &
\node[name=l1-2]{\includegraphics[width=0.10\textwidth]{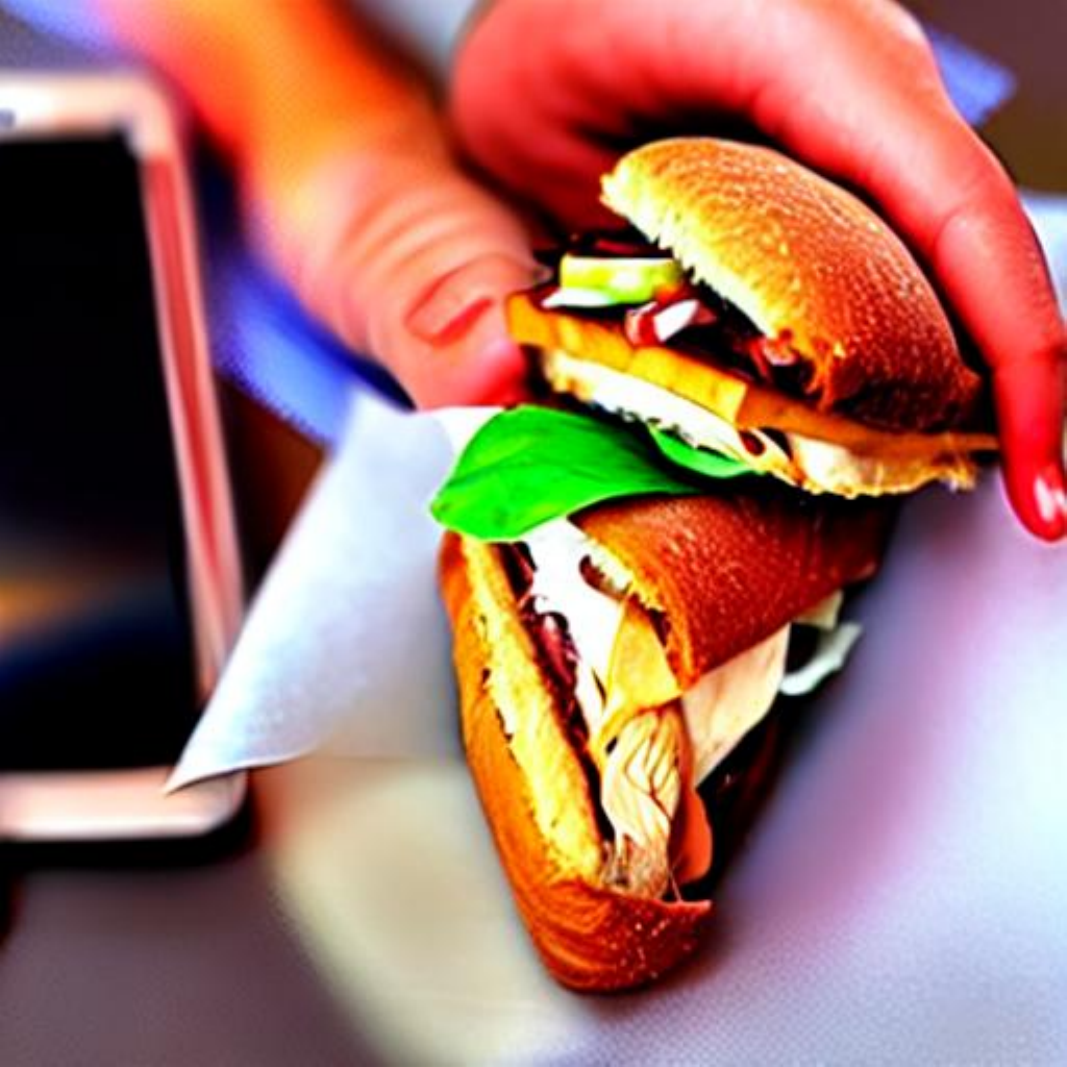}}; &
\node[name=l1-3]{\includegraphics[width=0.10\textwidth]{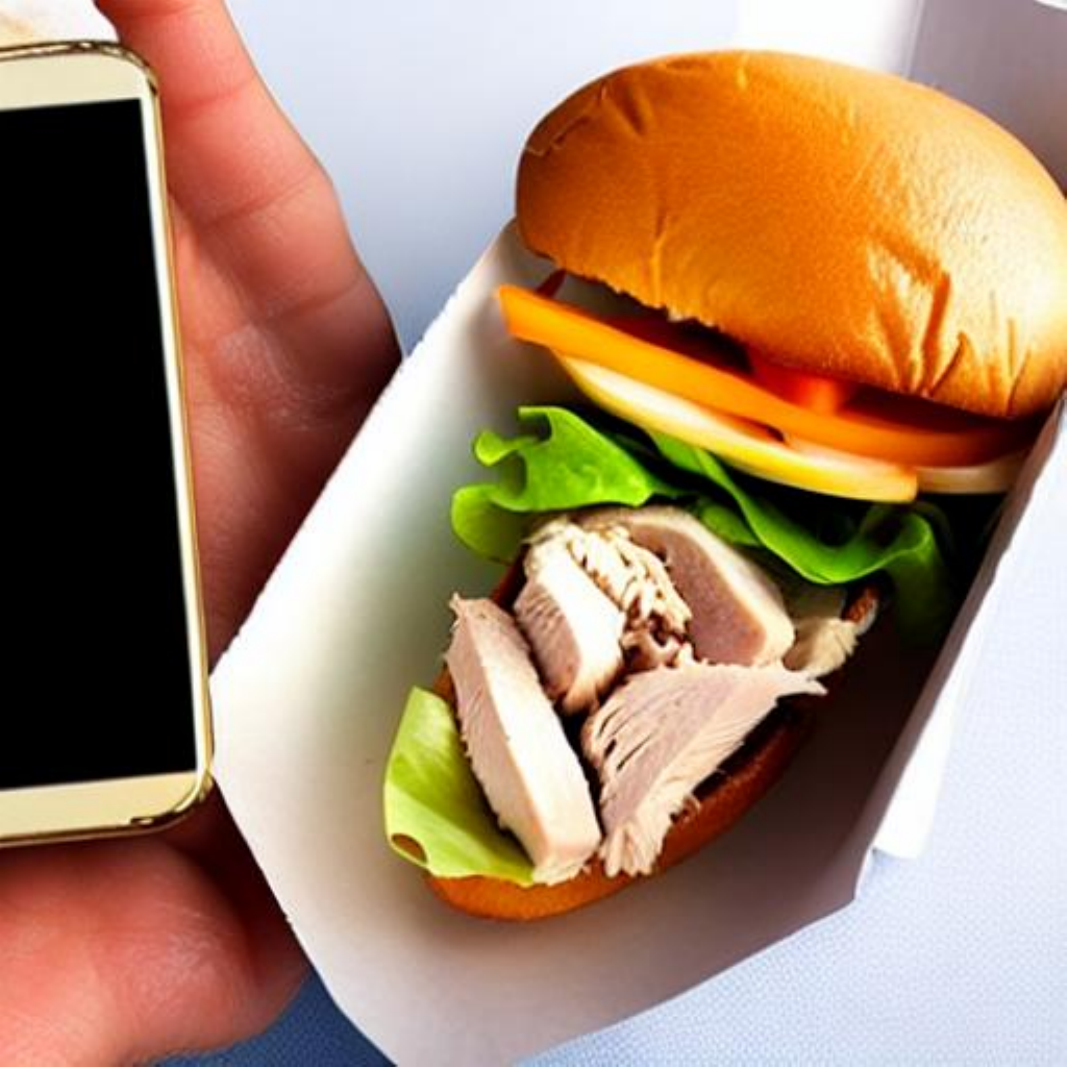}}; &
\node[name=l1-4]{\includegraphics[width=0.10\textwidth]{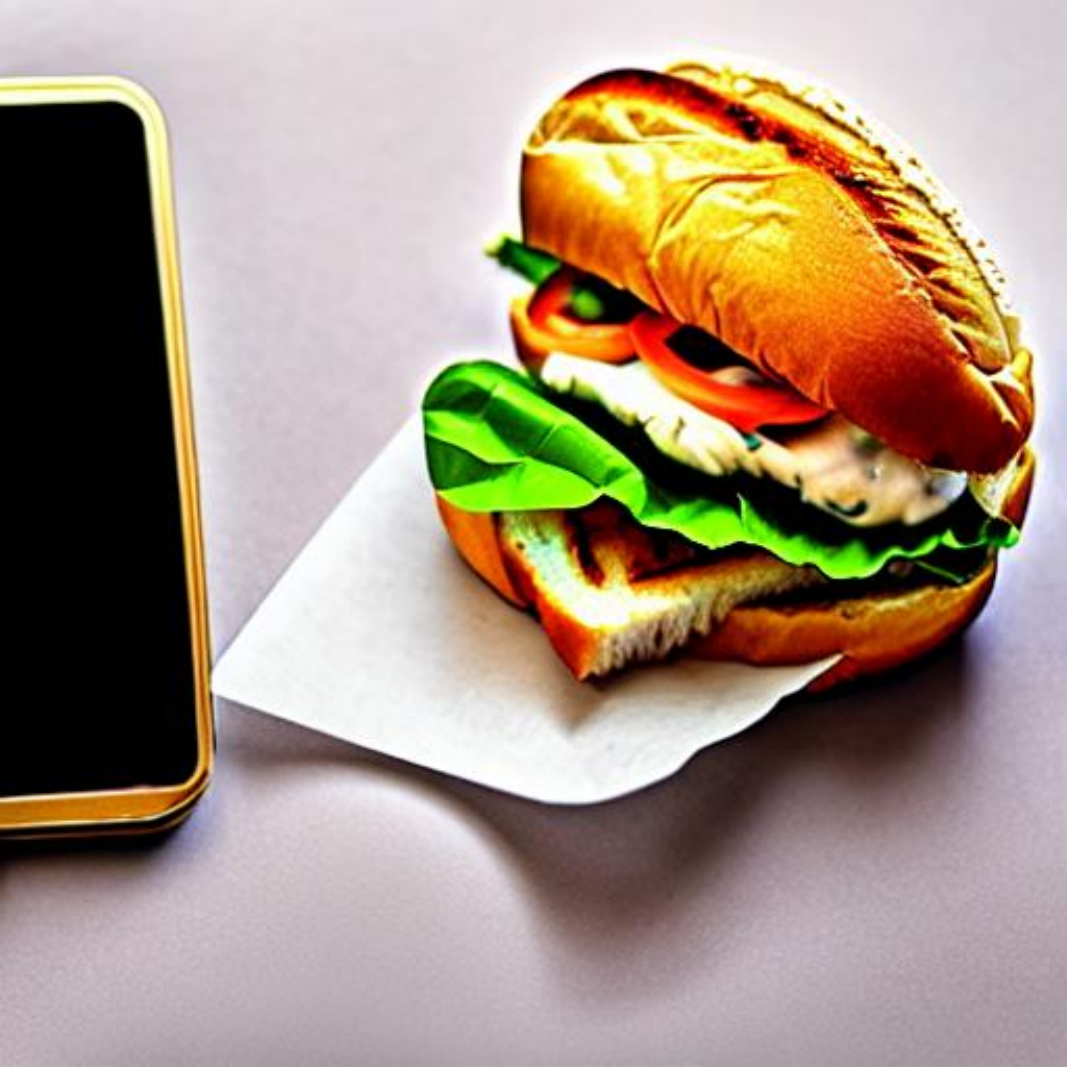}}; &
\node[minimum width=14pt]{}; &
\node[name=r1-1]{\includegraphics[width=0.10\textwidth]{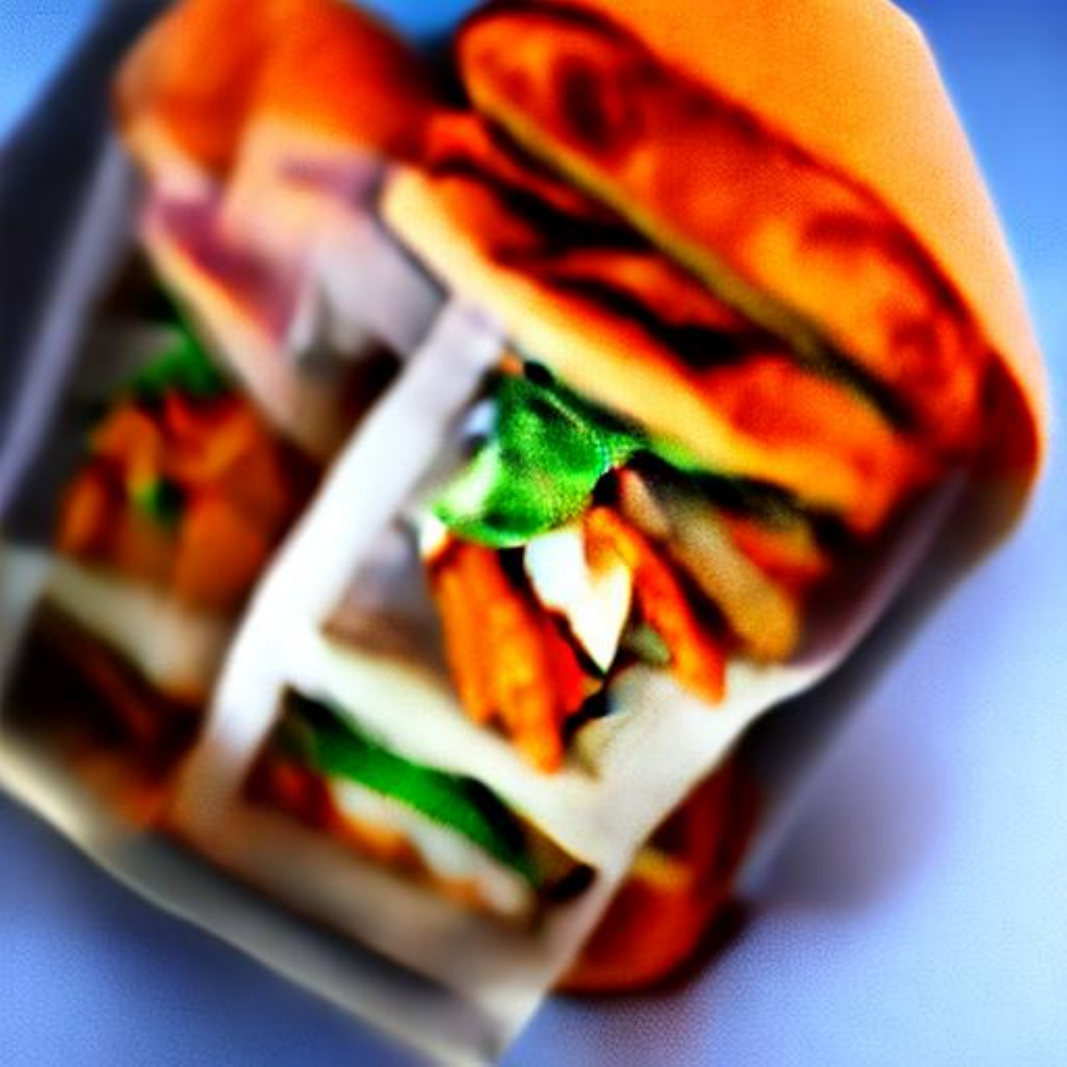}}; &
\node[name=r1-2]{\includegraphics[width=0.10\textwidth]{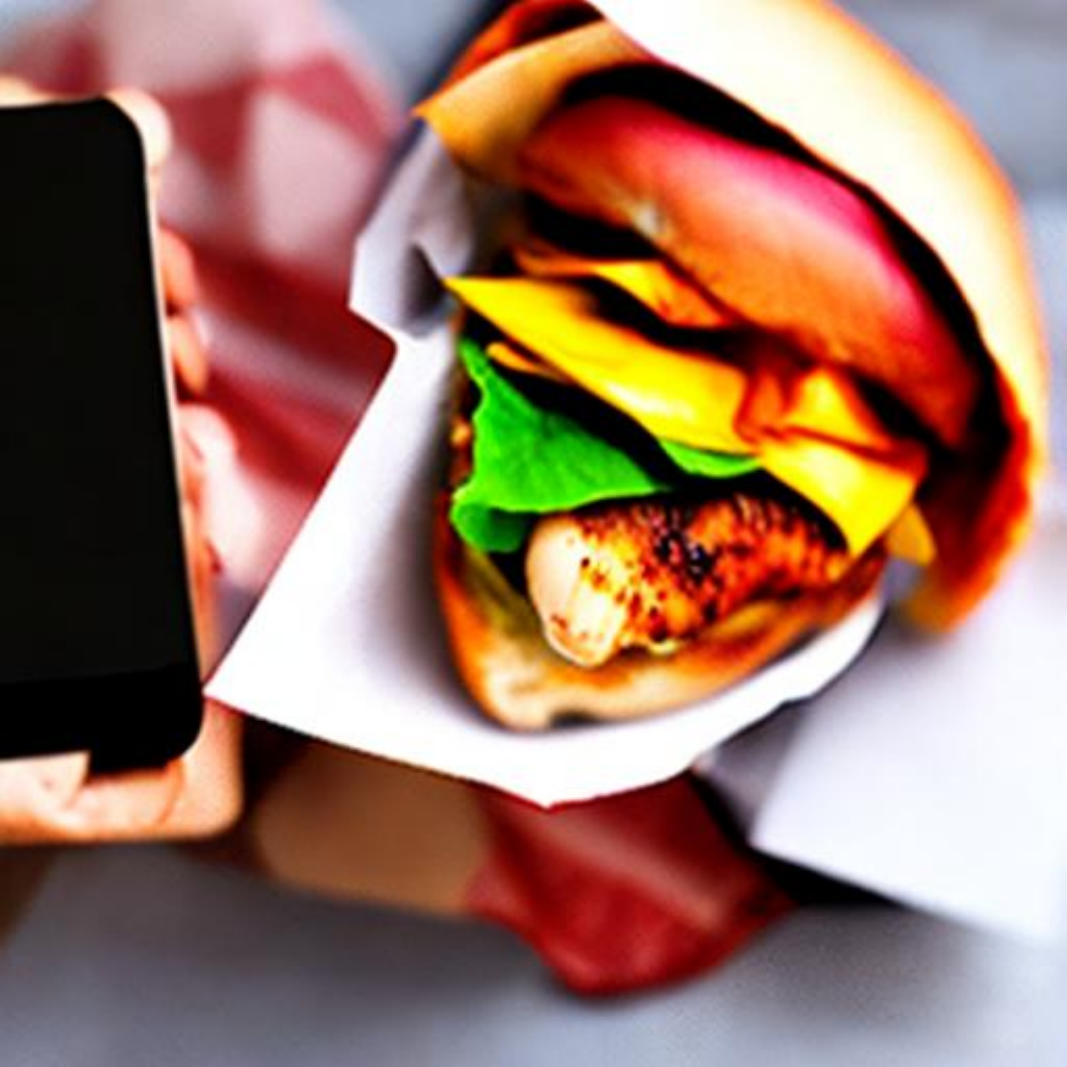}}; &
\node[name=r1-3]{\includegraphics[width=0.10\textwidth]{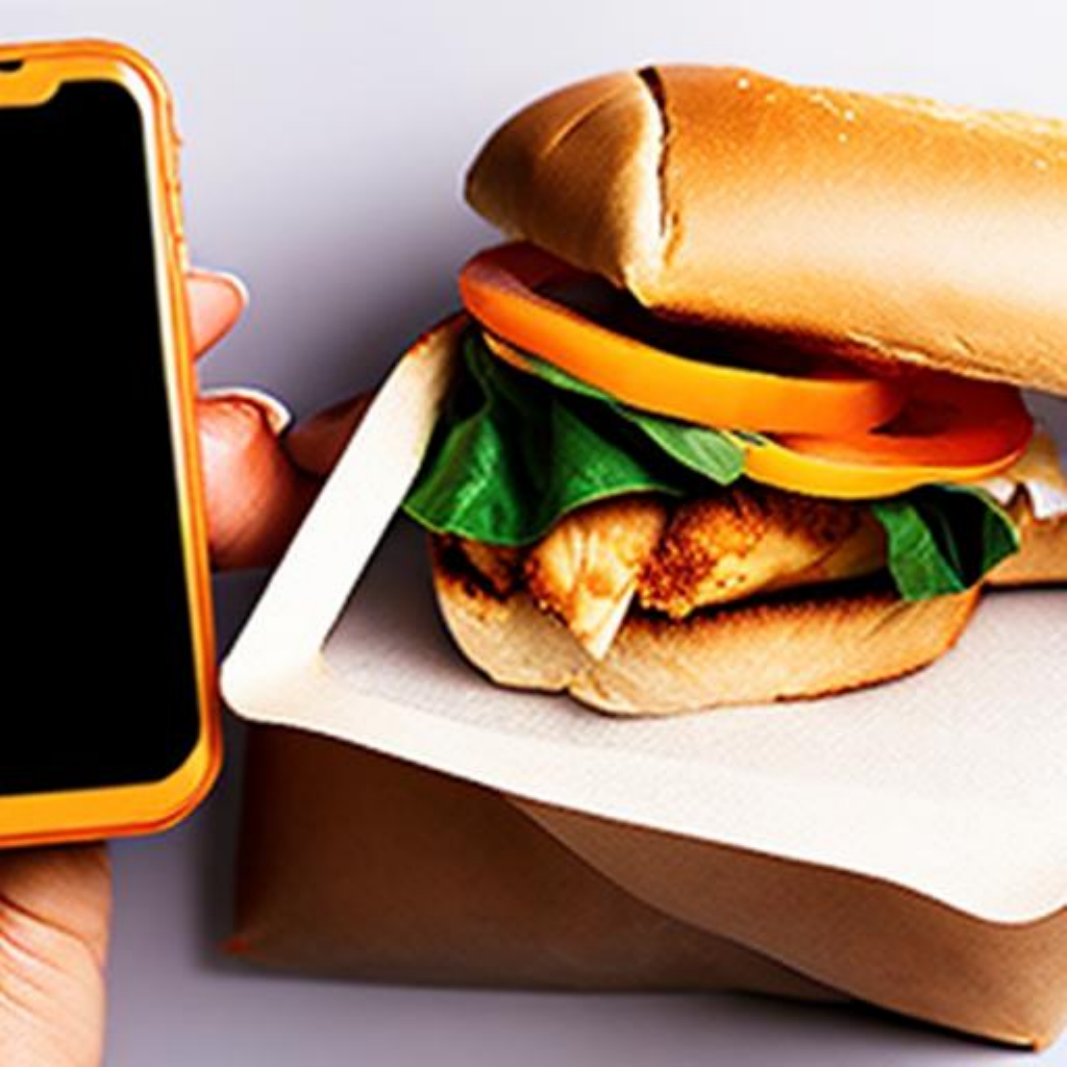}}; &
\node[name=r1-4]{\includegraphics[width=0.10\textwidth]{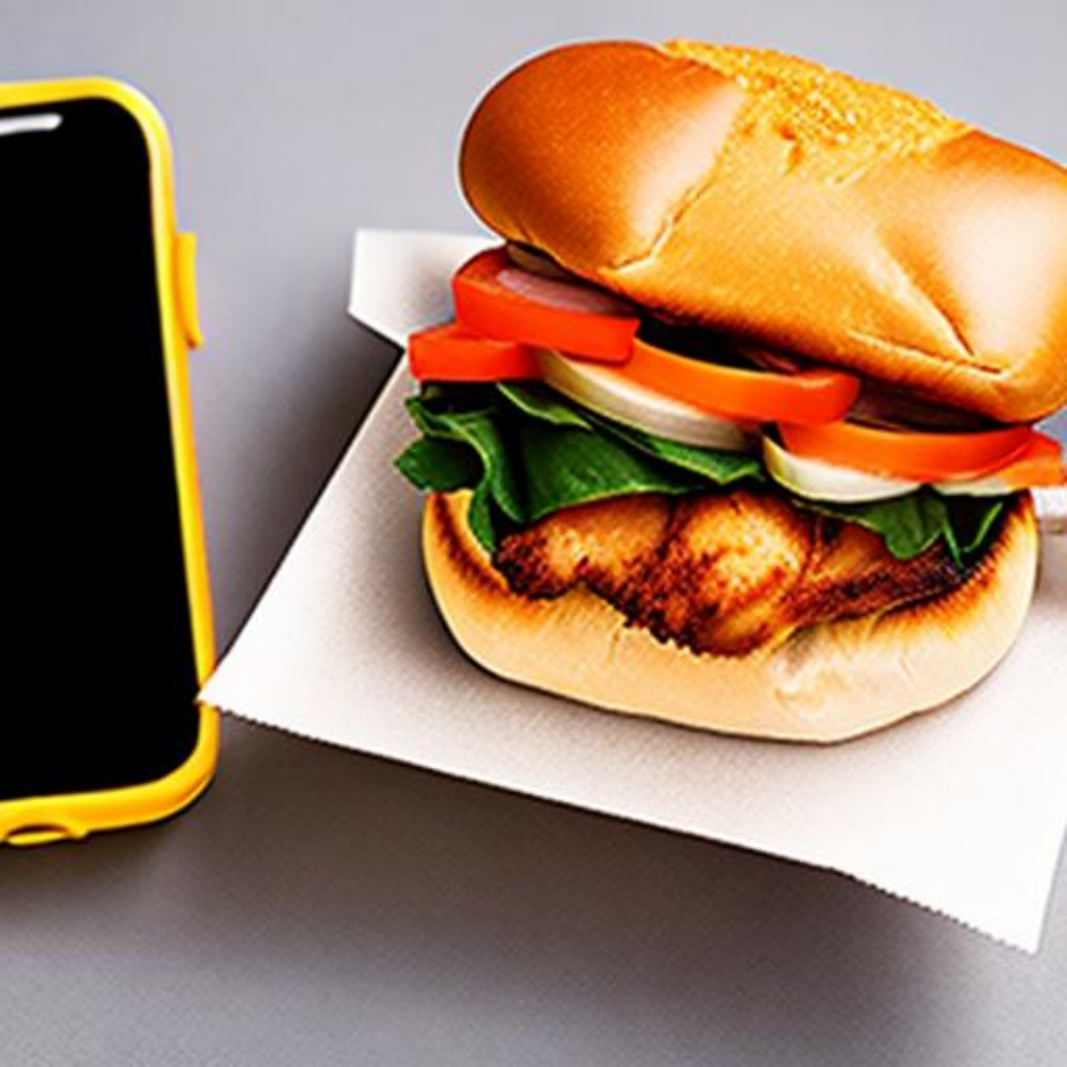}}; \\
\node[name=l2-1]{\includegraphics[width=0.10\textwidth]{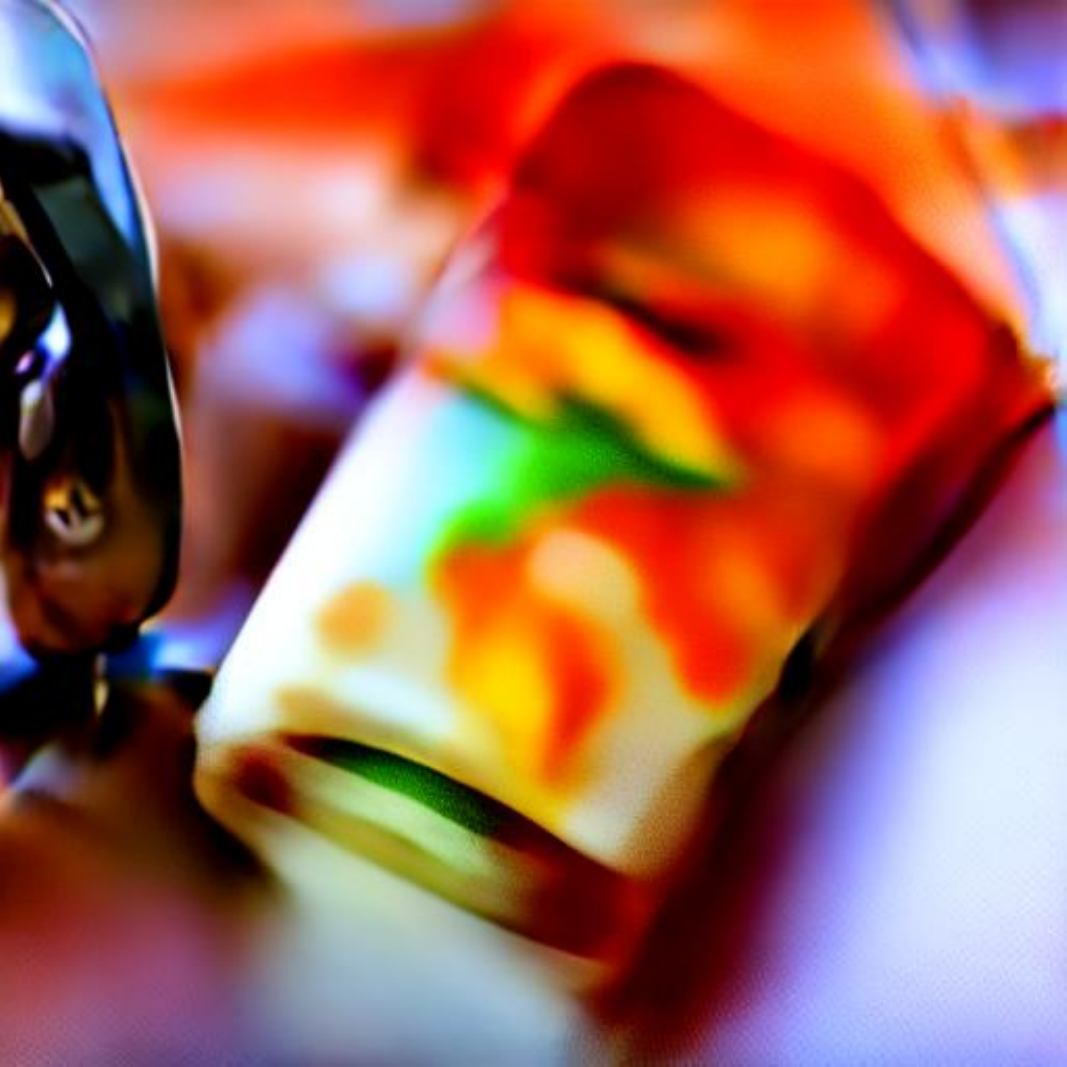}}; &
\includegraphics[width=0.10\textwidth]{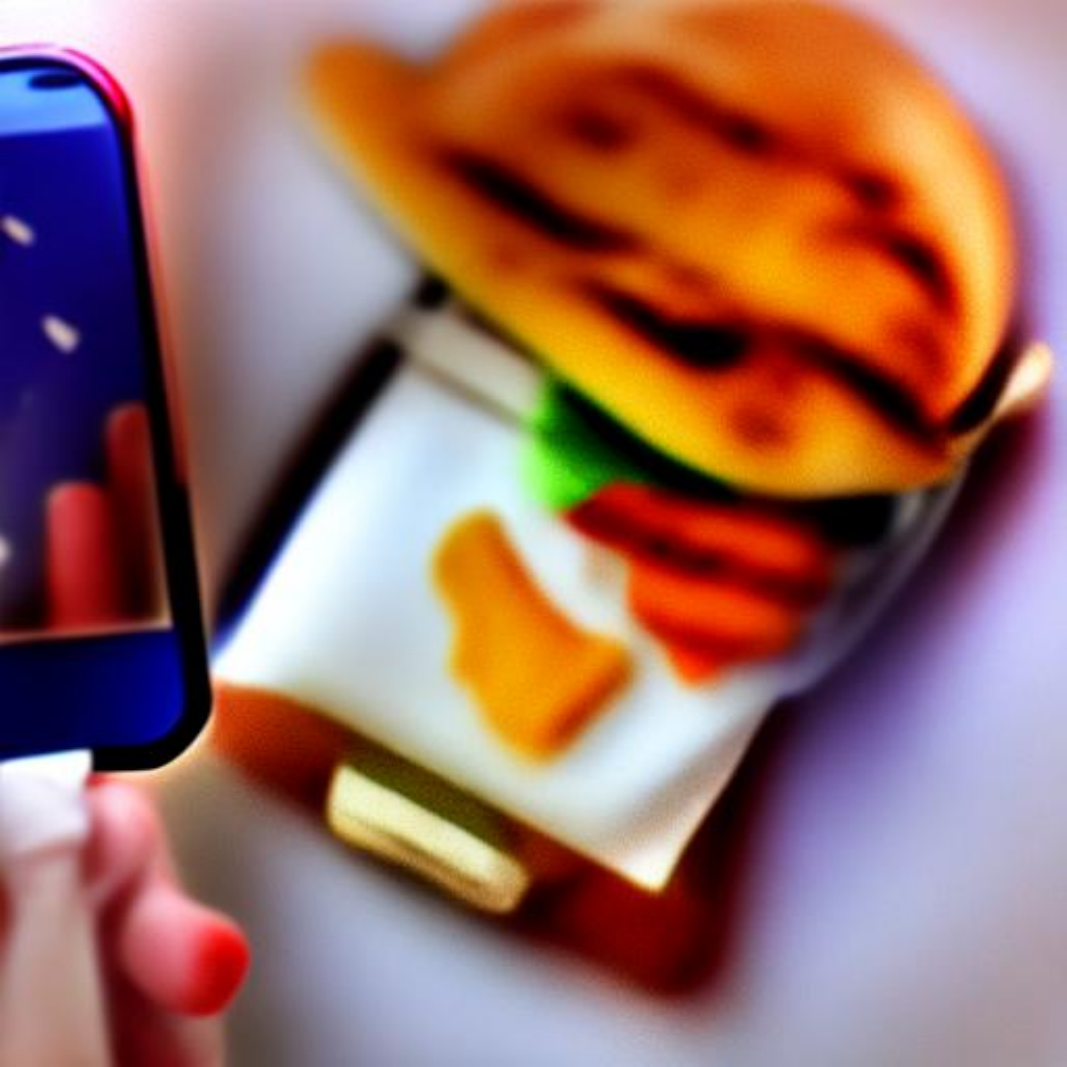} &
\includegraphics[width=0.10\textwidth]{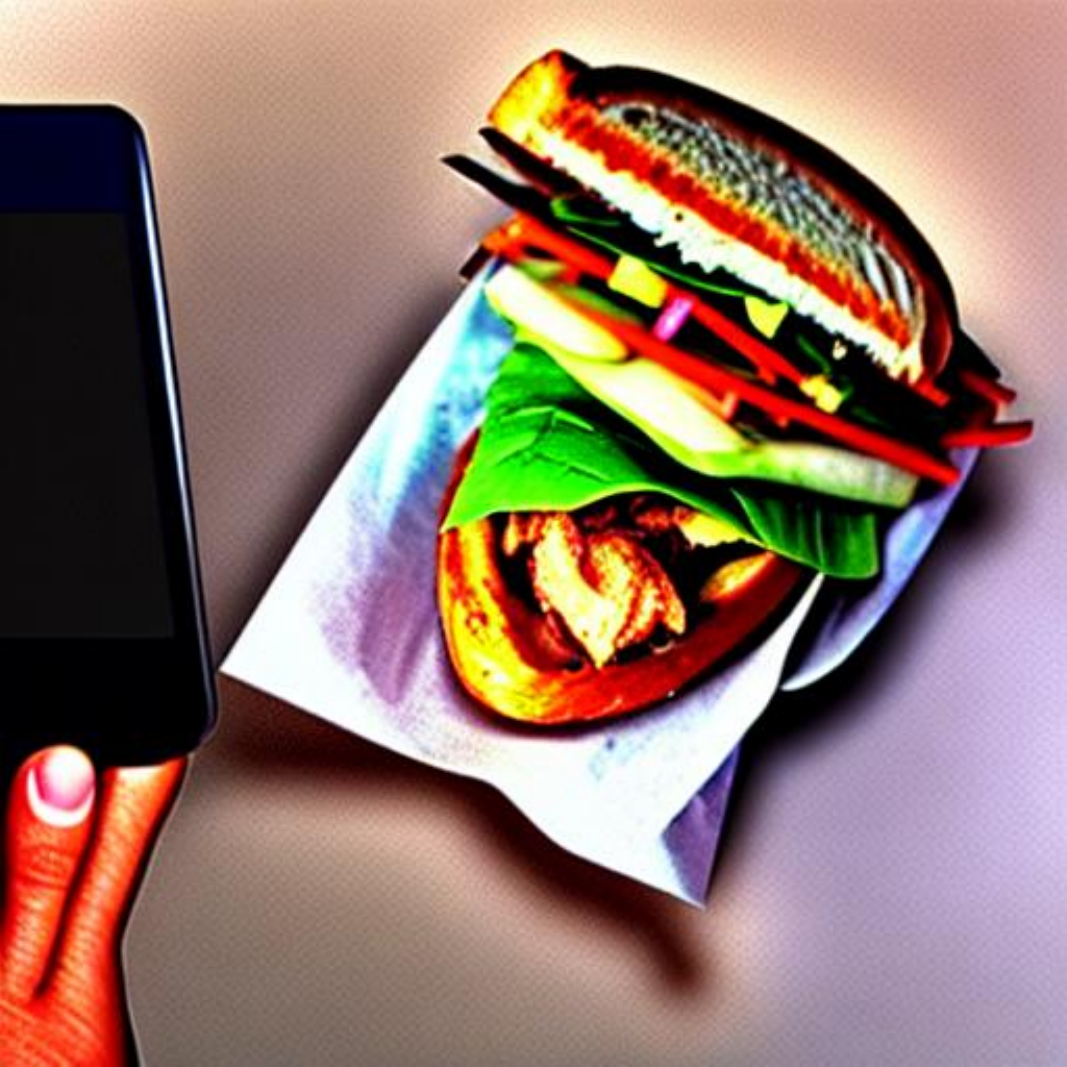} &
\includegraphics[width=0.10\textwidth]{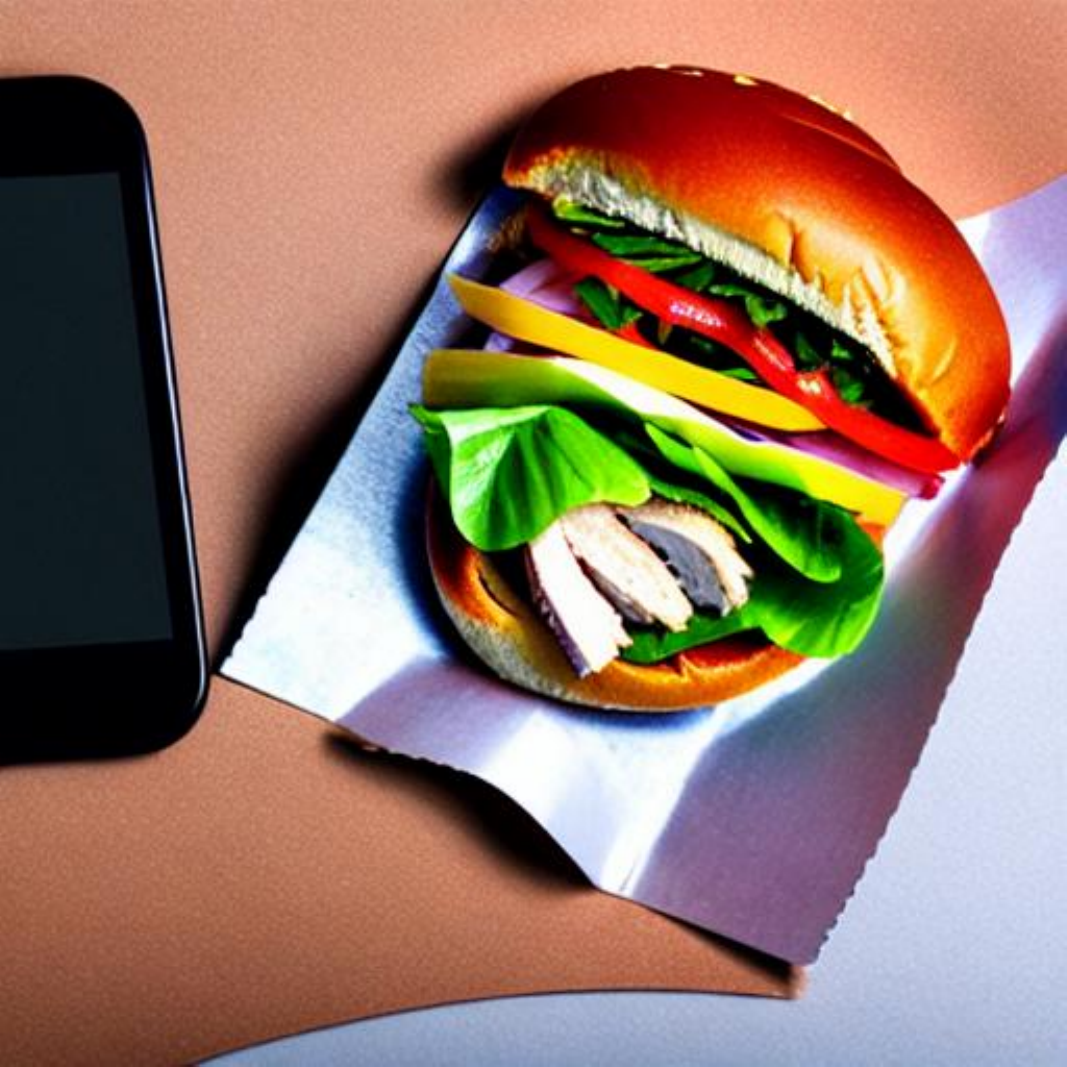} &
\node[minimum width=14pt]{}; &
\includegraphics[width=0.10\textwidth]{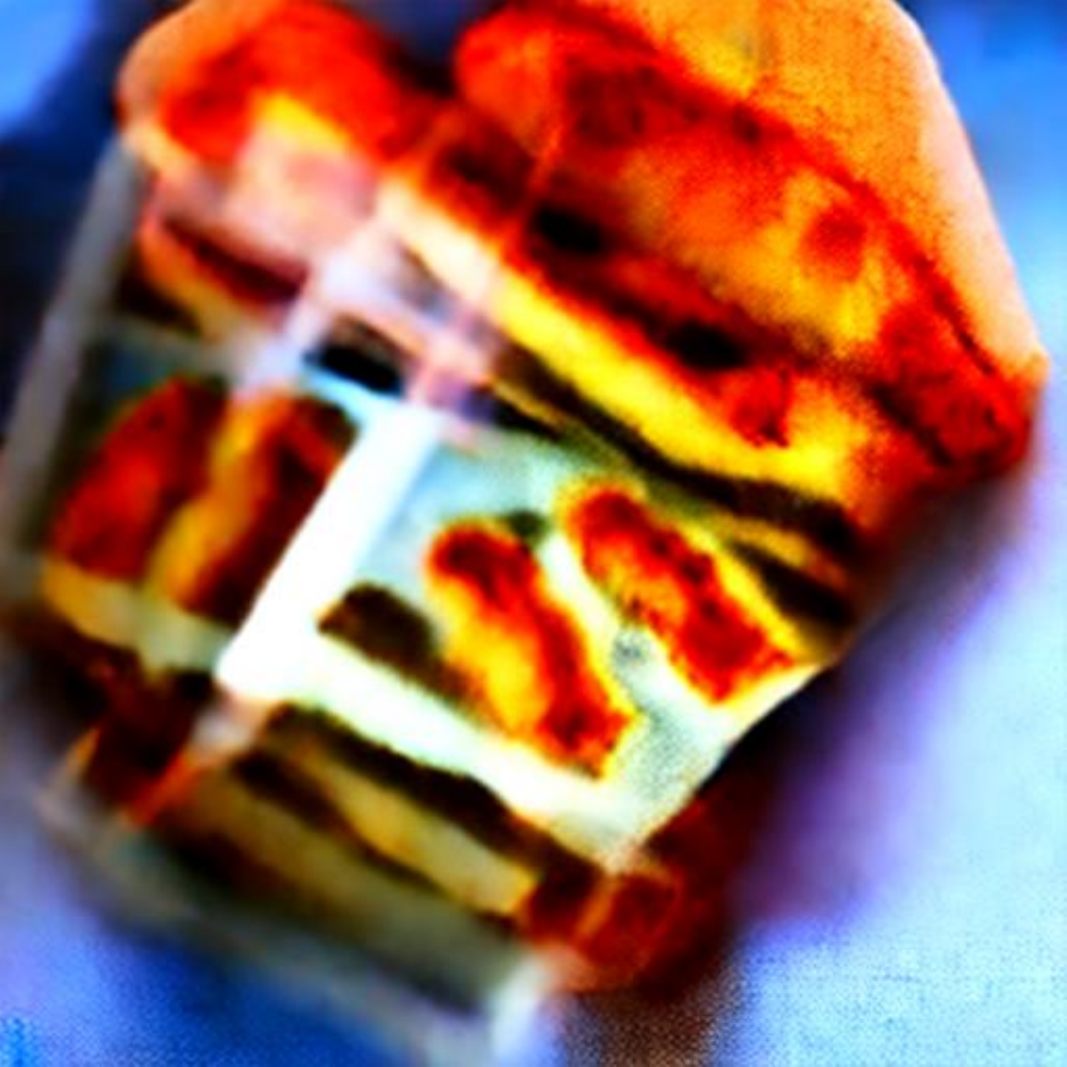} &
\includegraphics[width=0.10\textwidth]{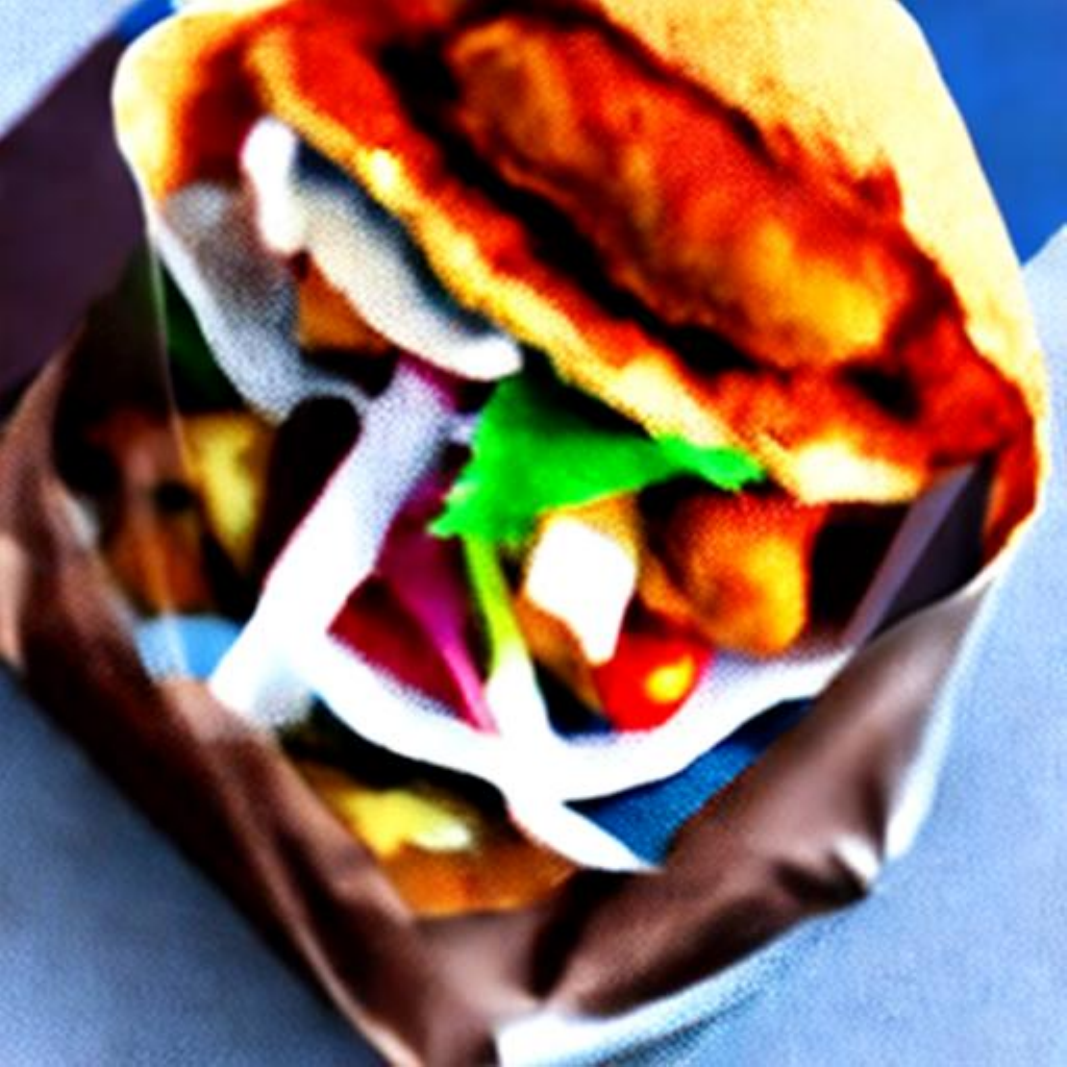} &
\includegraphics[width=0.10\textwidth]{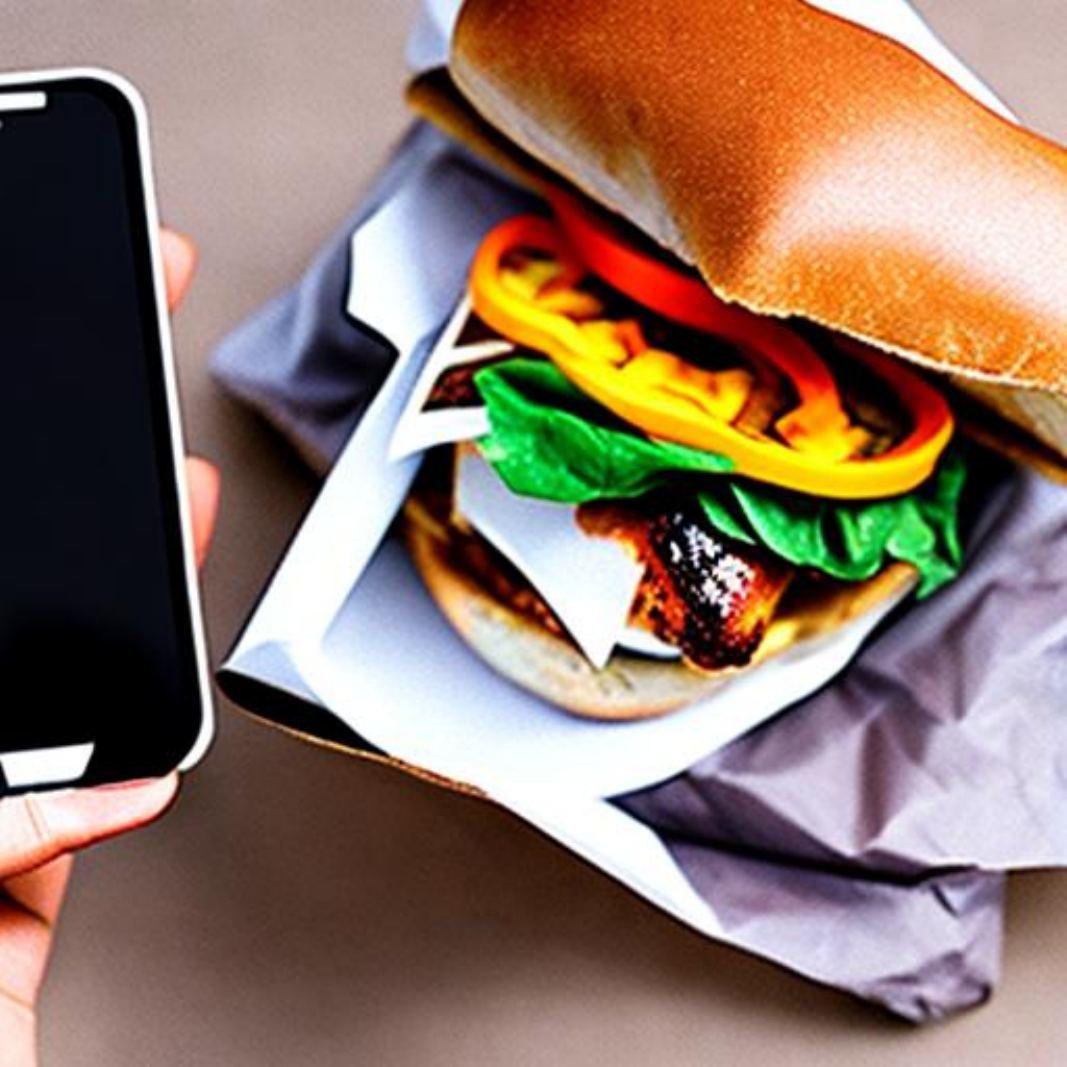} &
\includegraphics[width=0.10\textwidth]{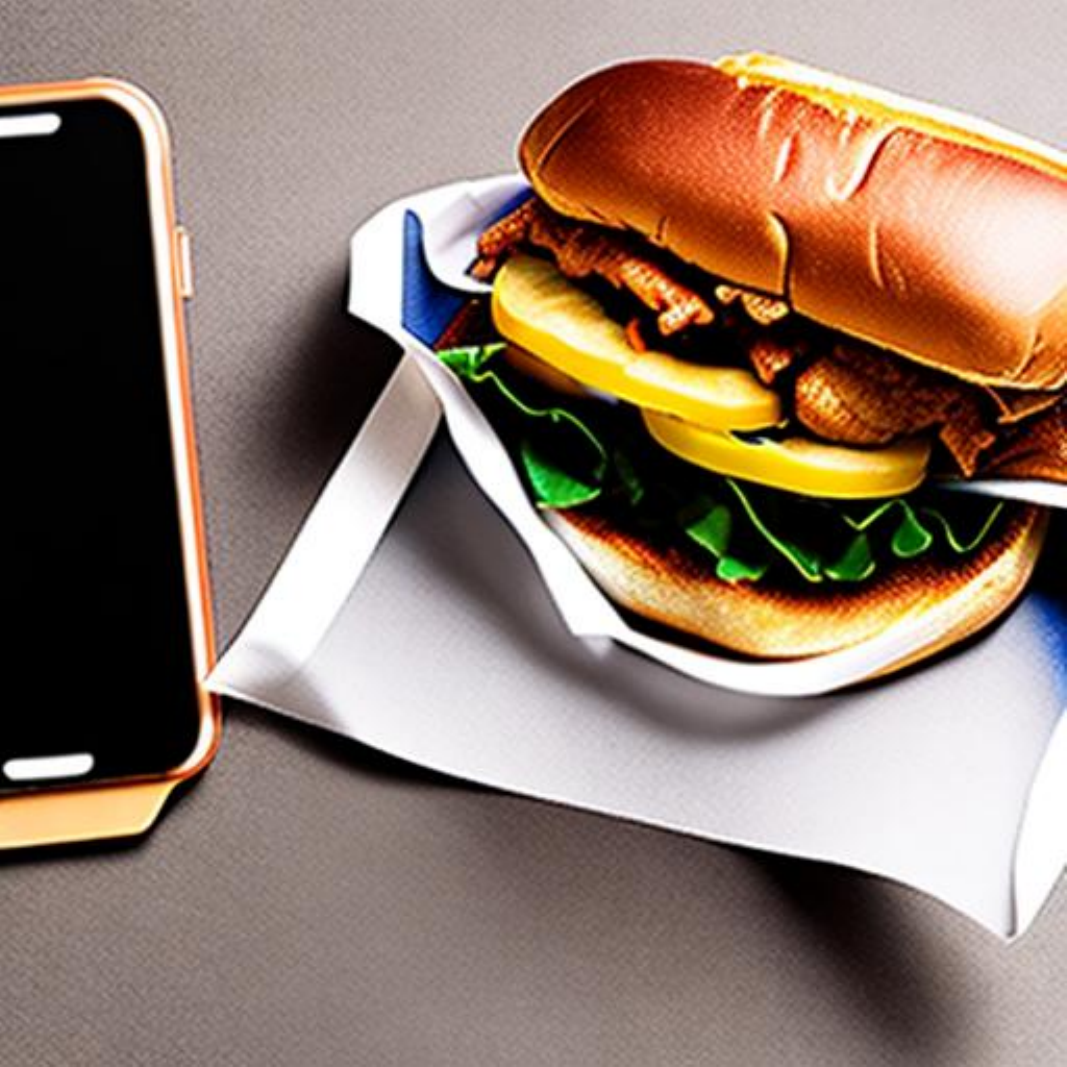} \\
\node[name=l3-1]{\includegraphics[width=0.10\textwidth]{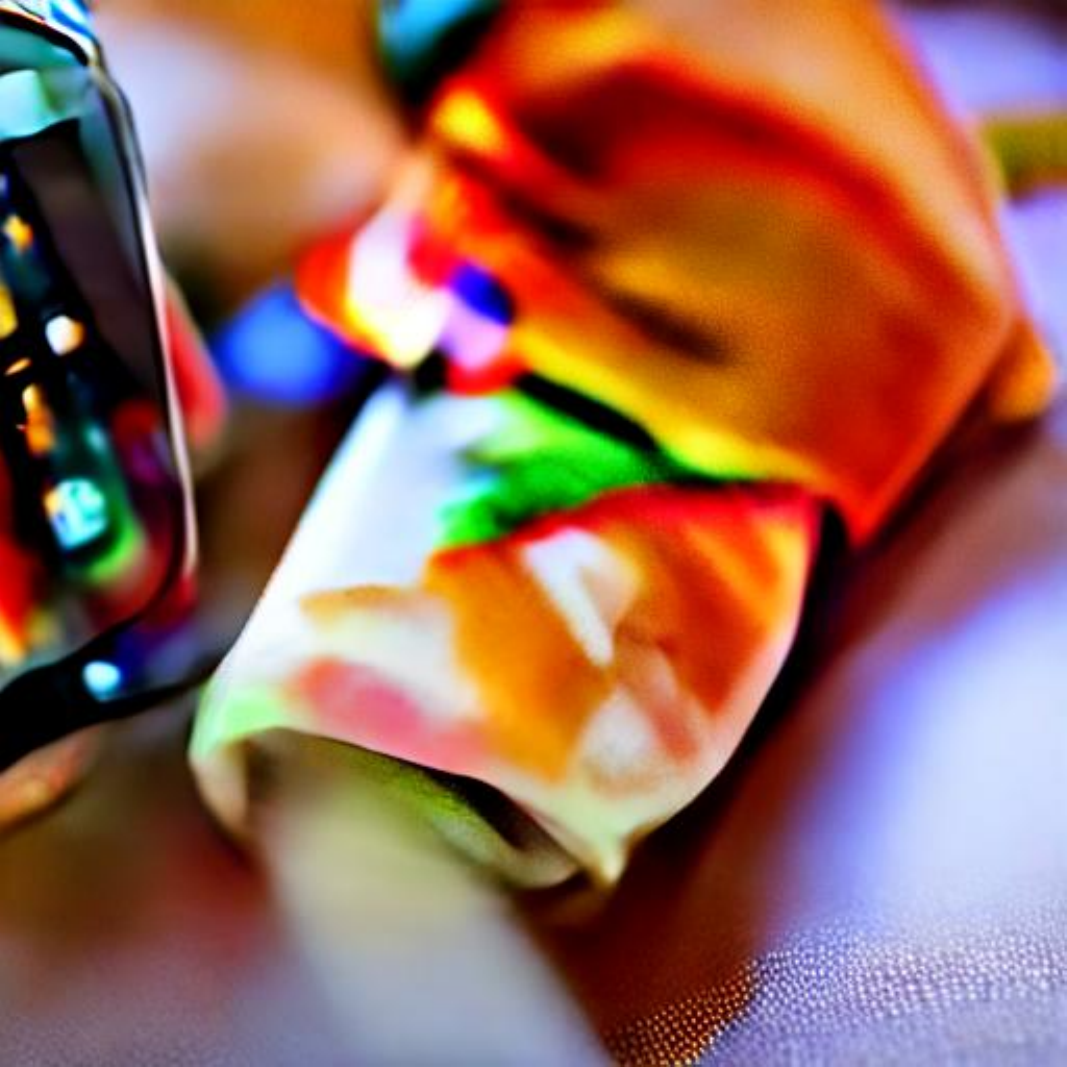}}; &
\includegraphics[width=0.10\textwidth]{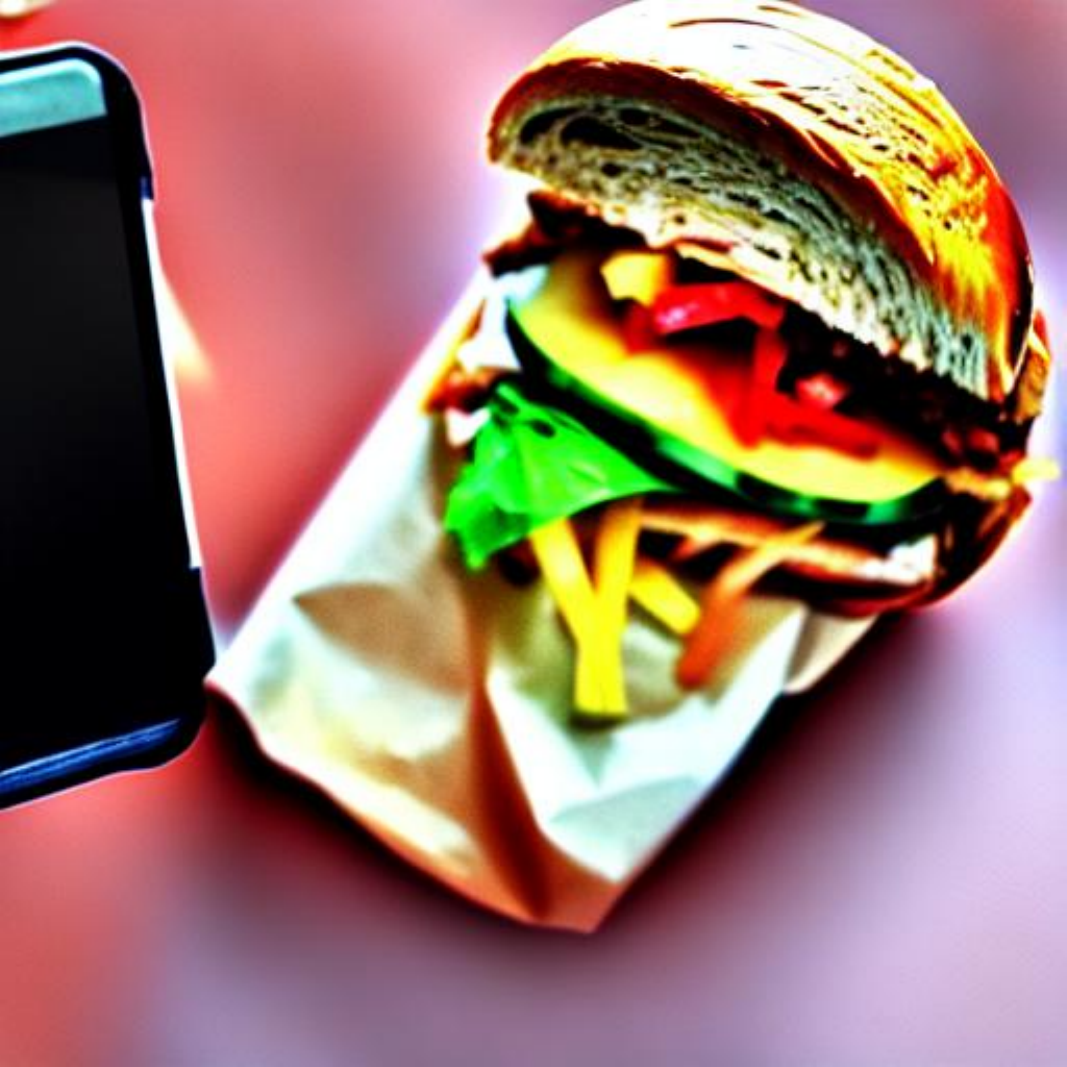} &
\includegraphics[width=0.10\textwidth]{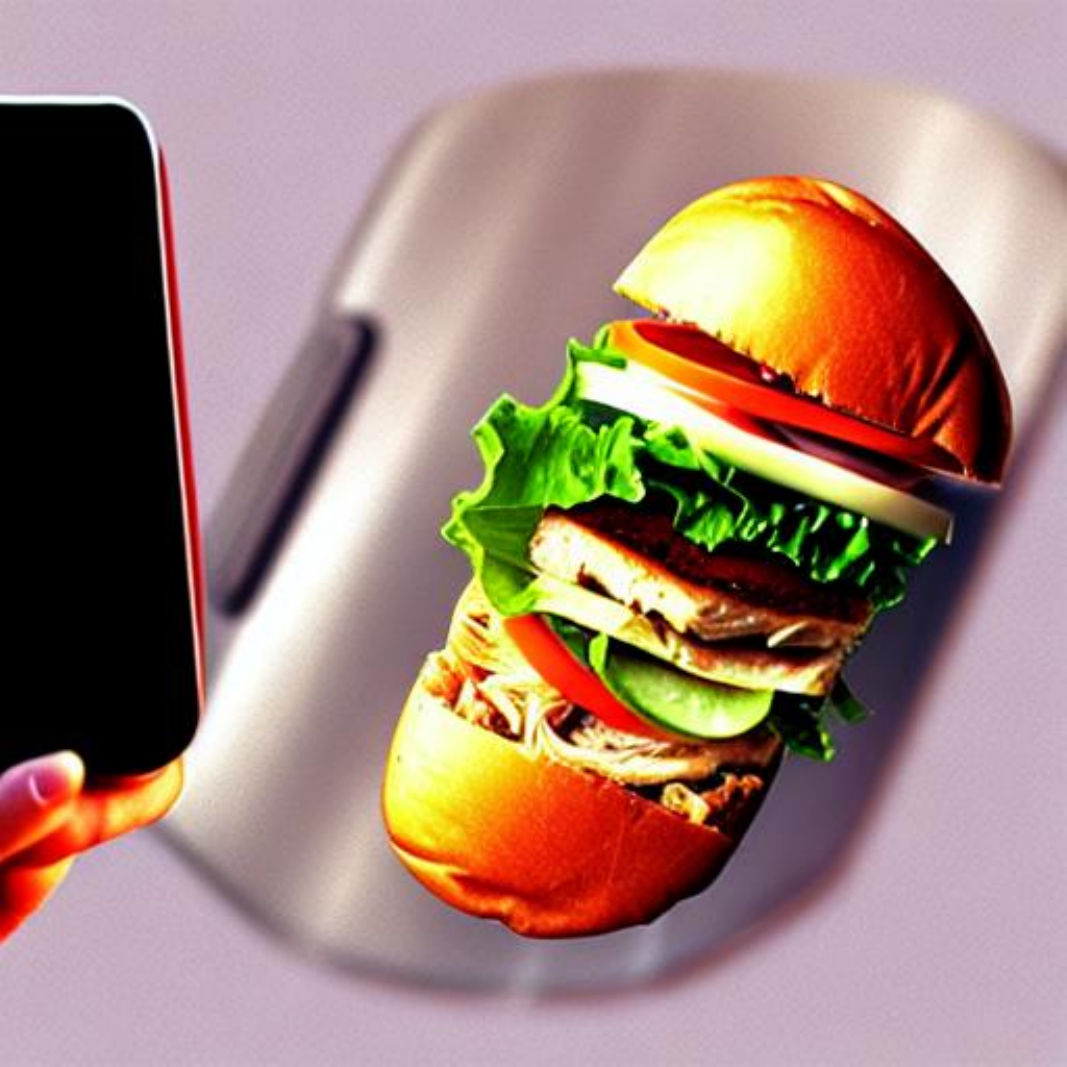} &
\includegraphics[width=0.10\textwidth]{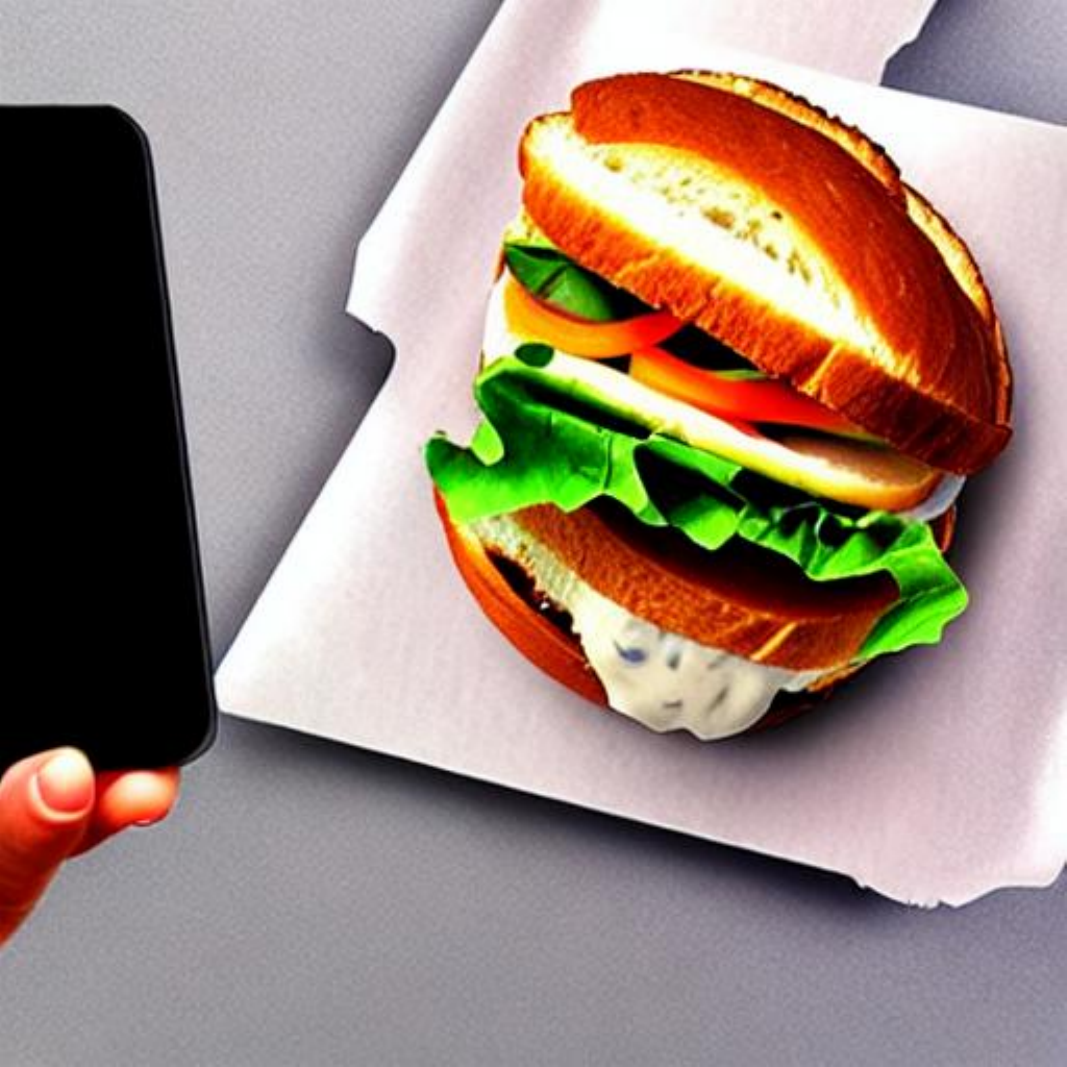} &
\node[minimum width=14pt]{}; &
\includegraphics[width=0.10\textwidth]{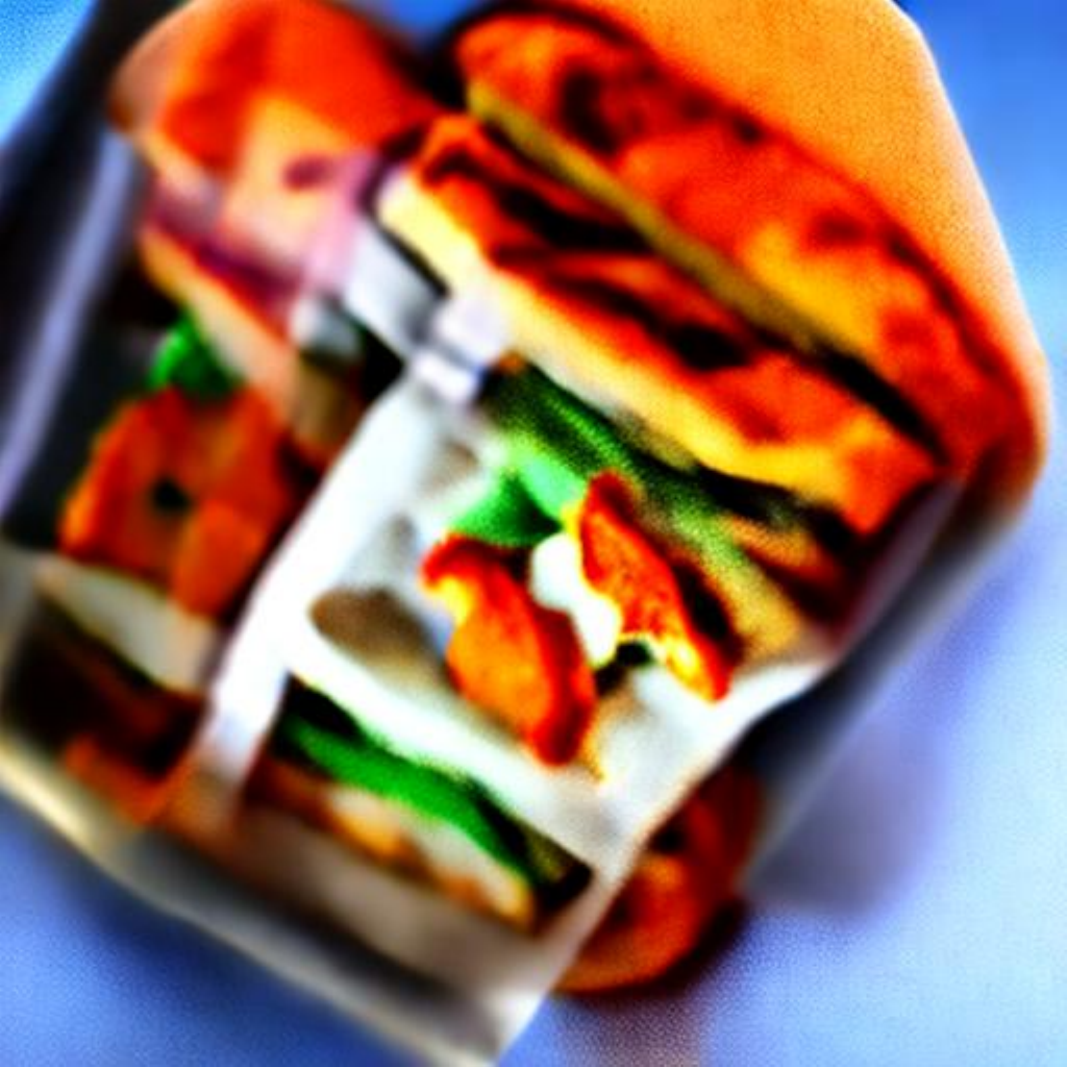} &
\includegraphics[width=0.10\textwidth]{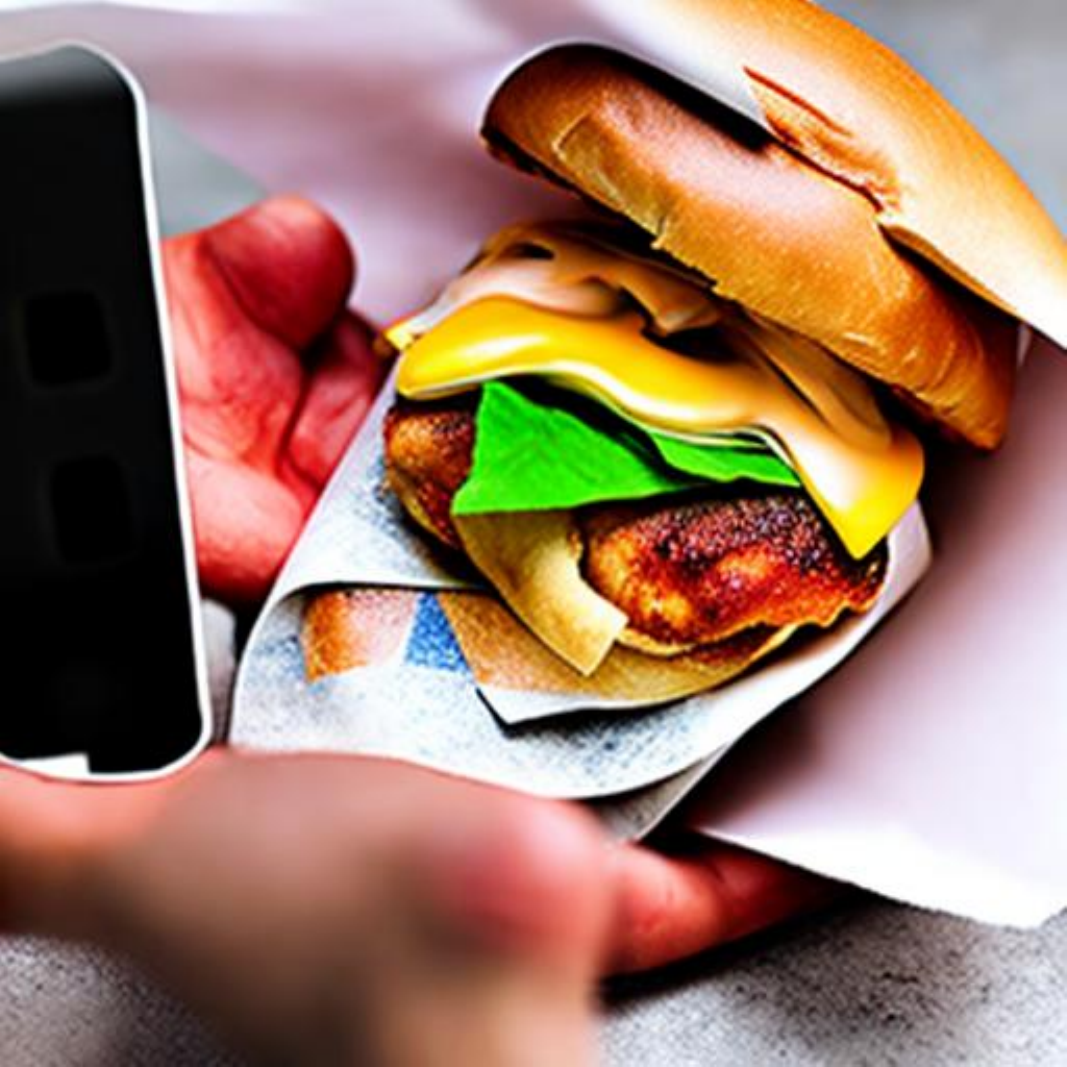} &
\includegraphics[width=0.10\textwidth]{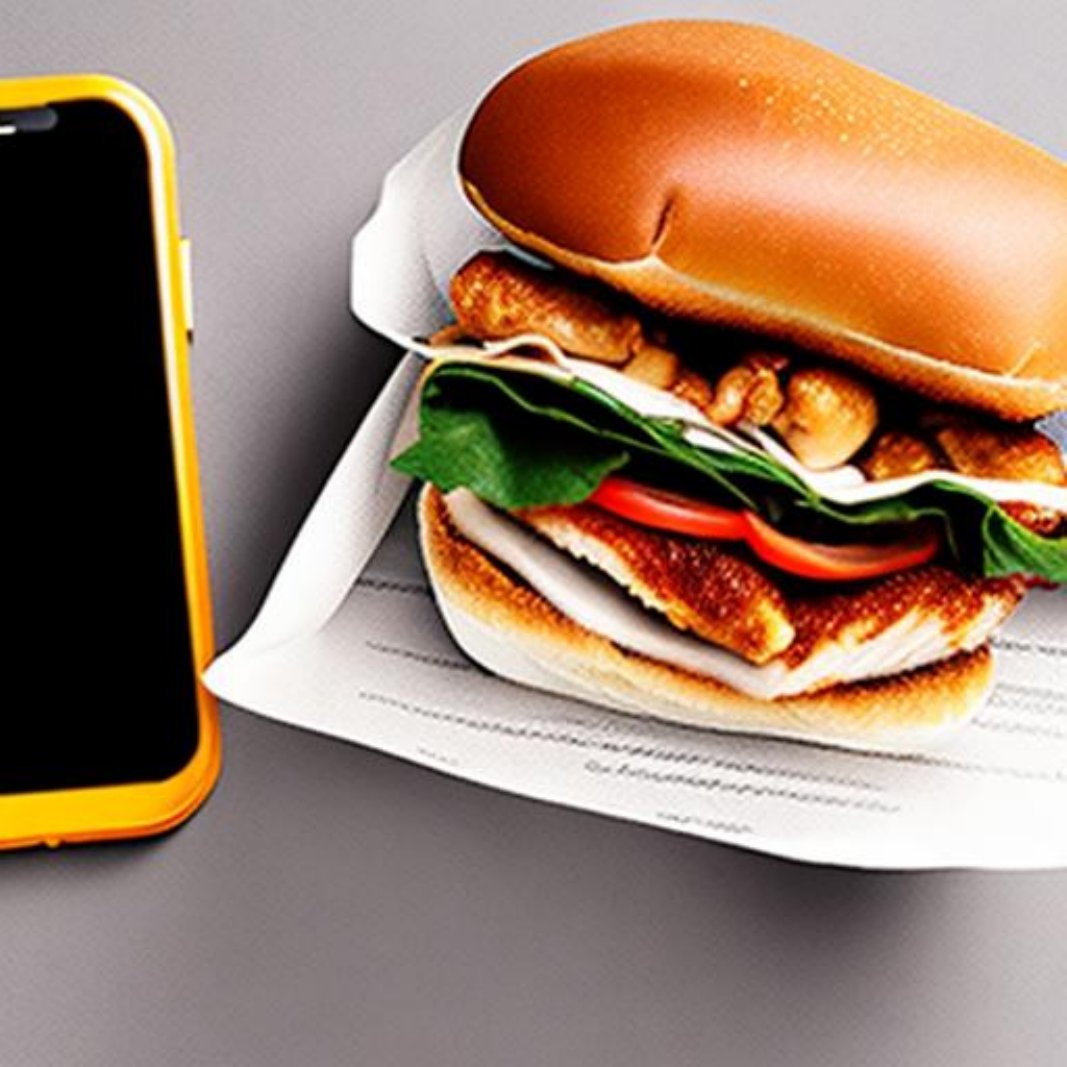} &
\includegraphics[width=0.10\textwidth]{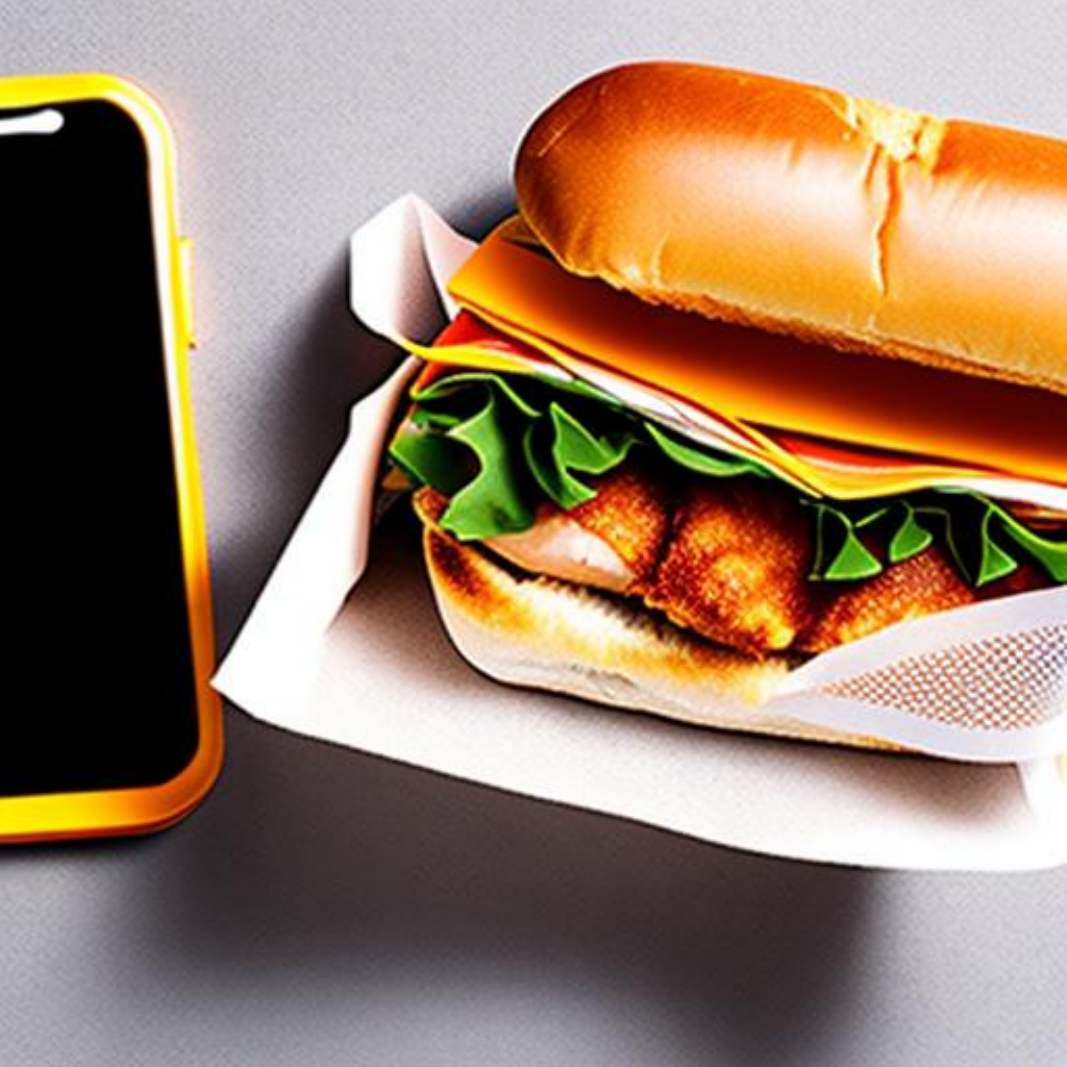} \\
\node[name=l4-1]{\includegraphics[width=0.10\textwidth]{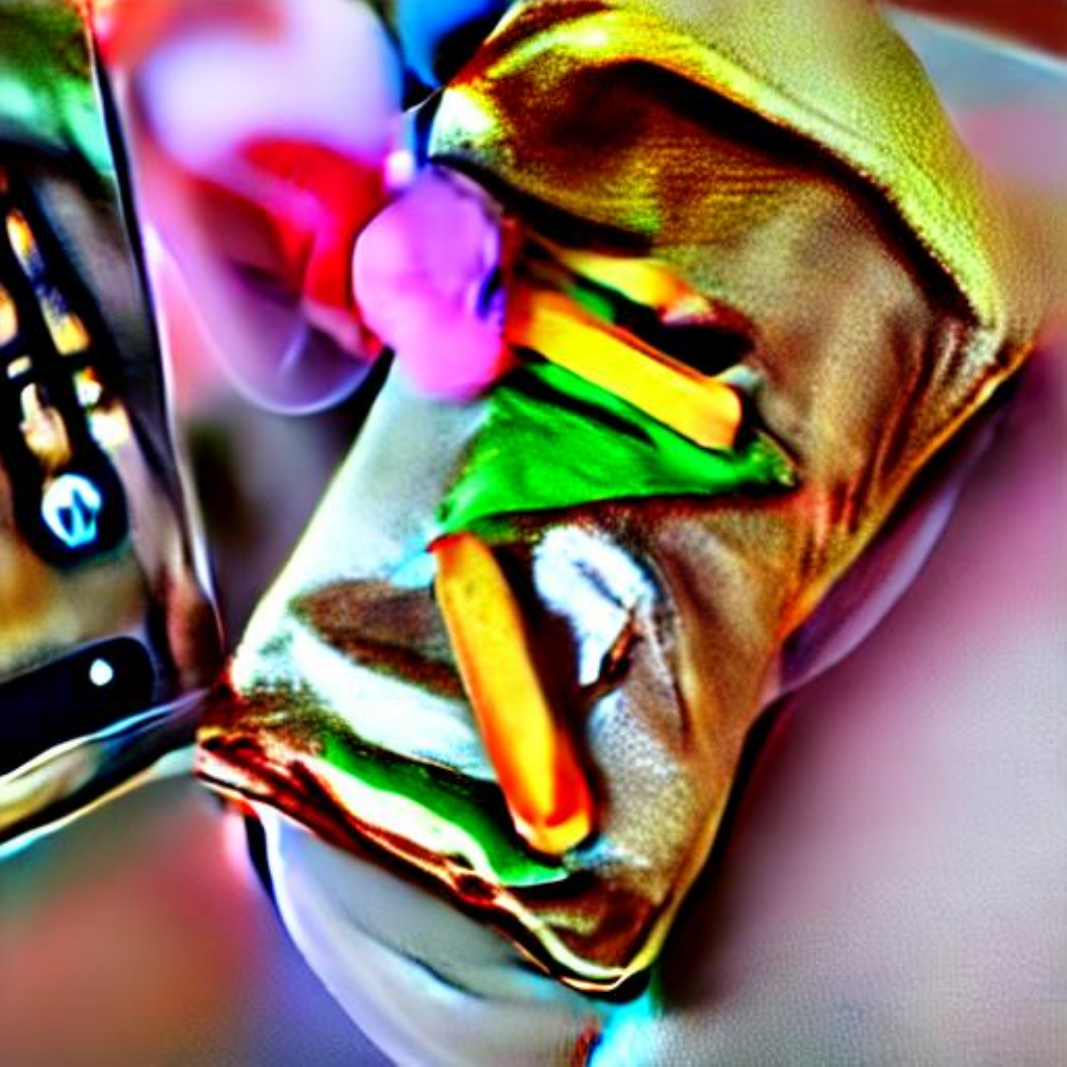}}; &
\node[name=l4-2]{\includegraphics[width=0.10\textwidth]{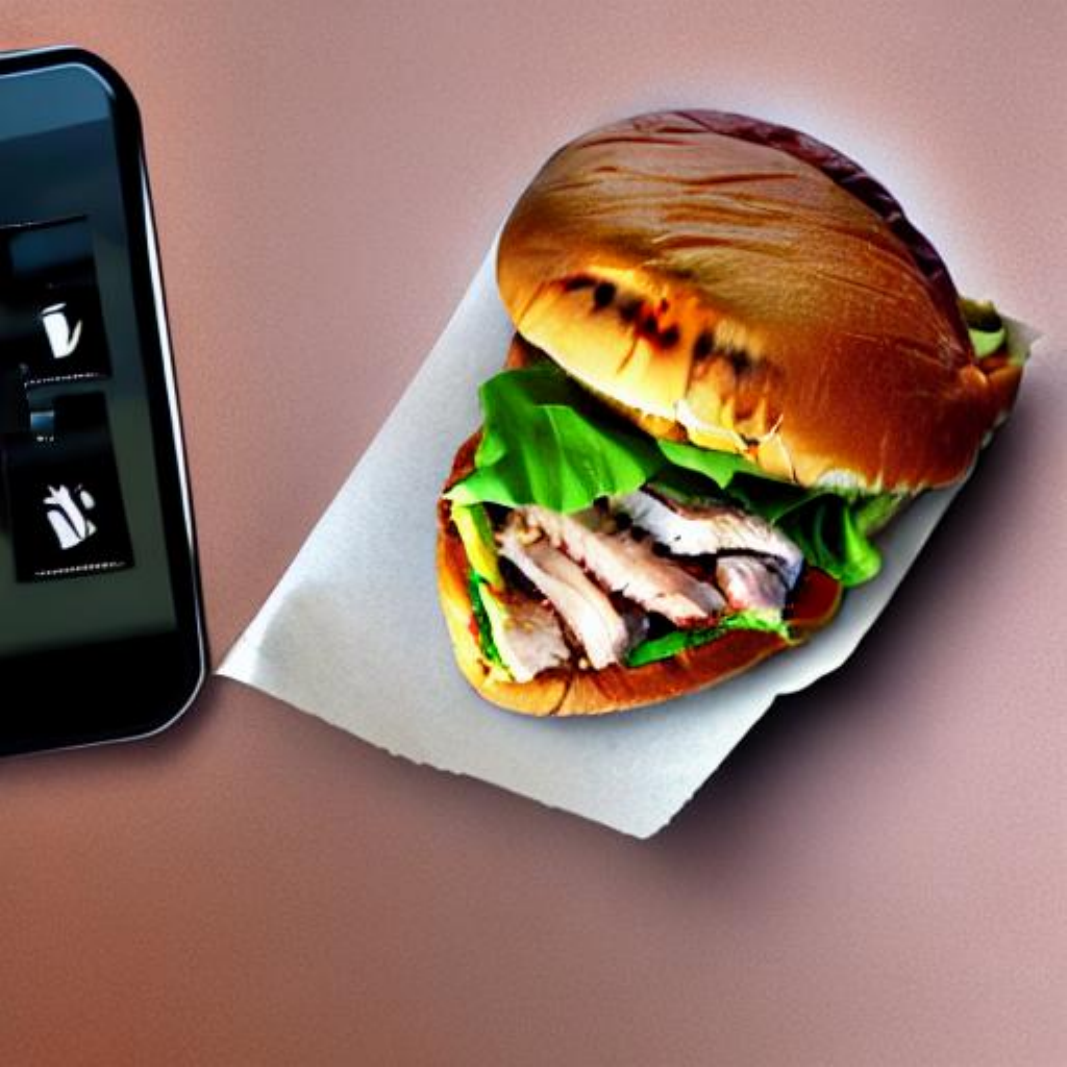}}; &
\node[name=l4-3]{\includegraphics[width=0.10\textwidth]{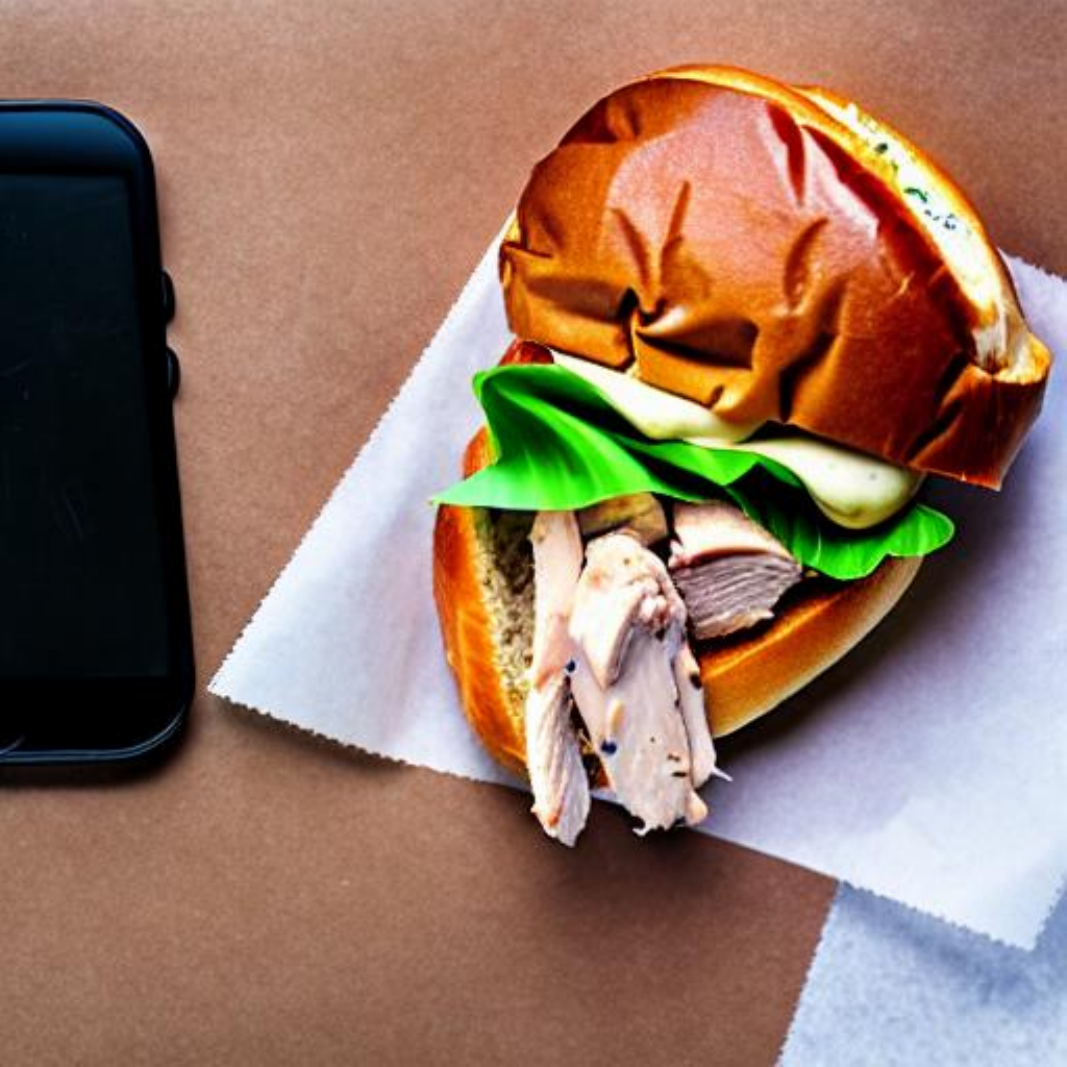}}; &
\node[name=l4-4]{\includegraphics[width=0.10\textwidth]{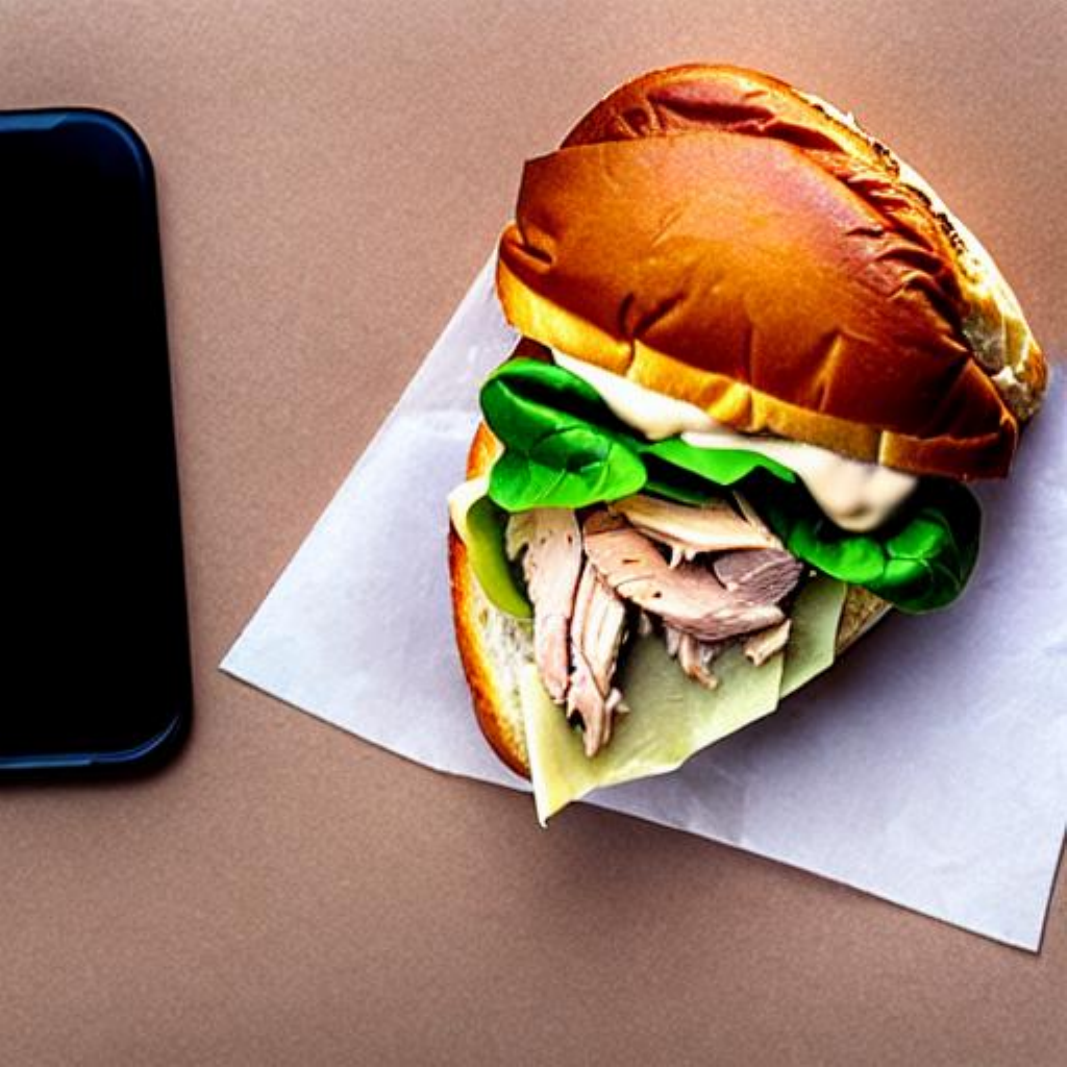}}; &
\node[minimum width=14pt]{}; &
\node[name=r4-1]{\includegraphics[width=0.10\textwidth]{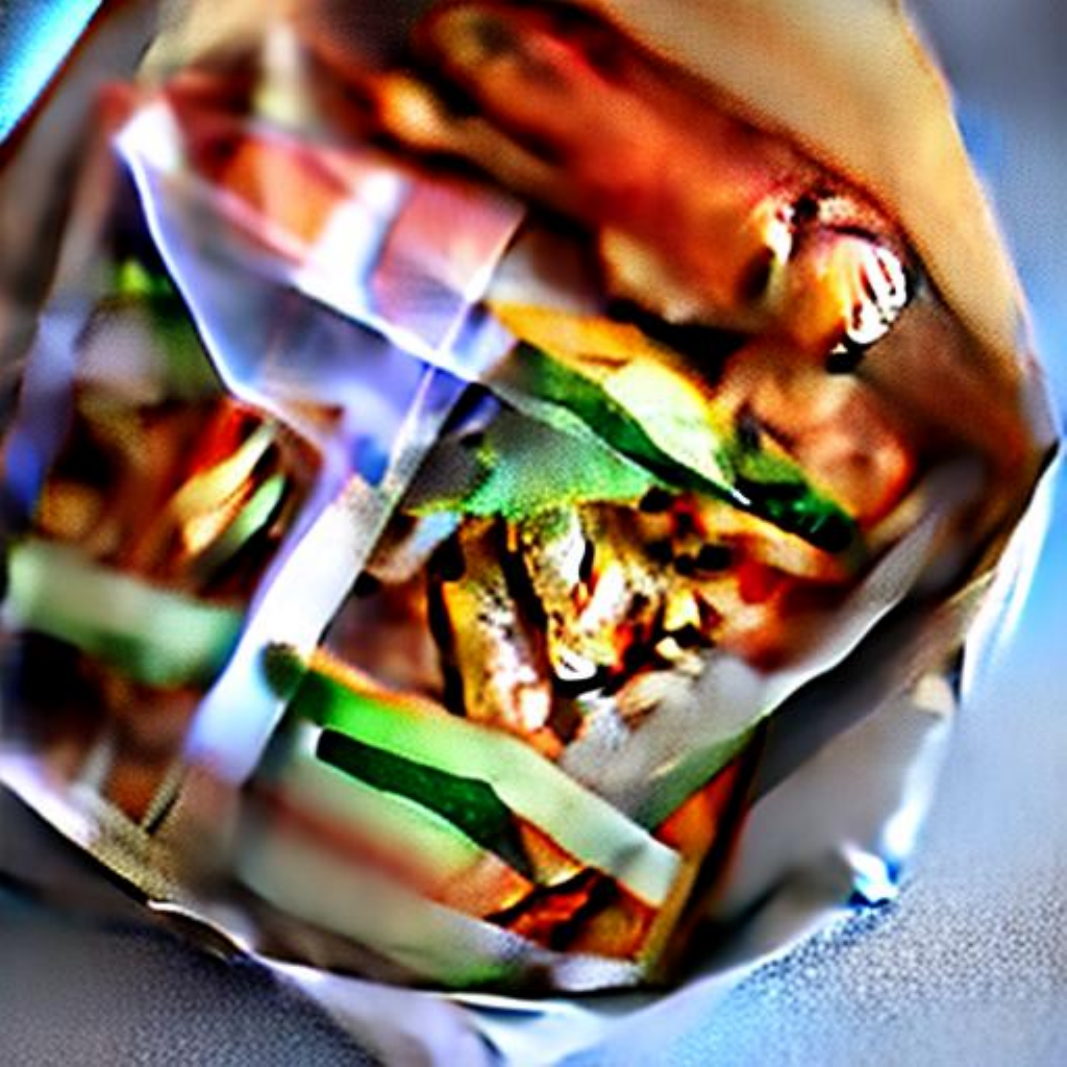}}; &
\node[name=r4-2]{\includegraphics[width=0.10\textwidth]{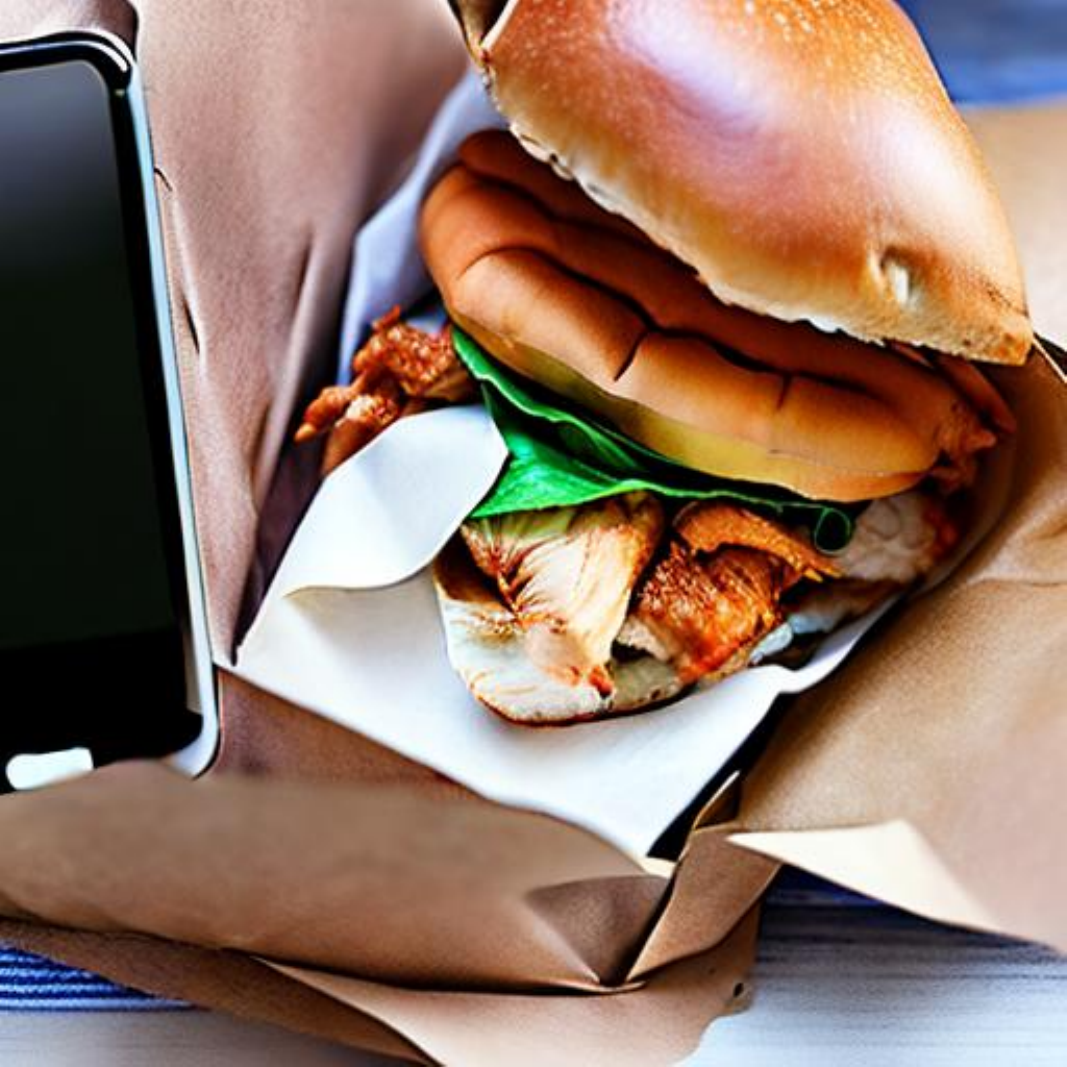}}; &
\node[name=r4-3]{\includegraphics[width=0.10\textwidth]{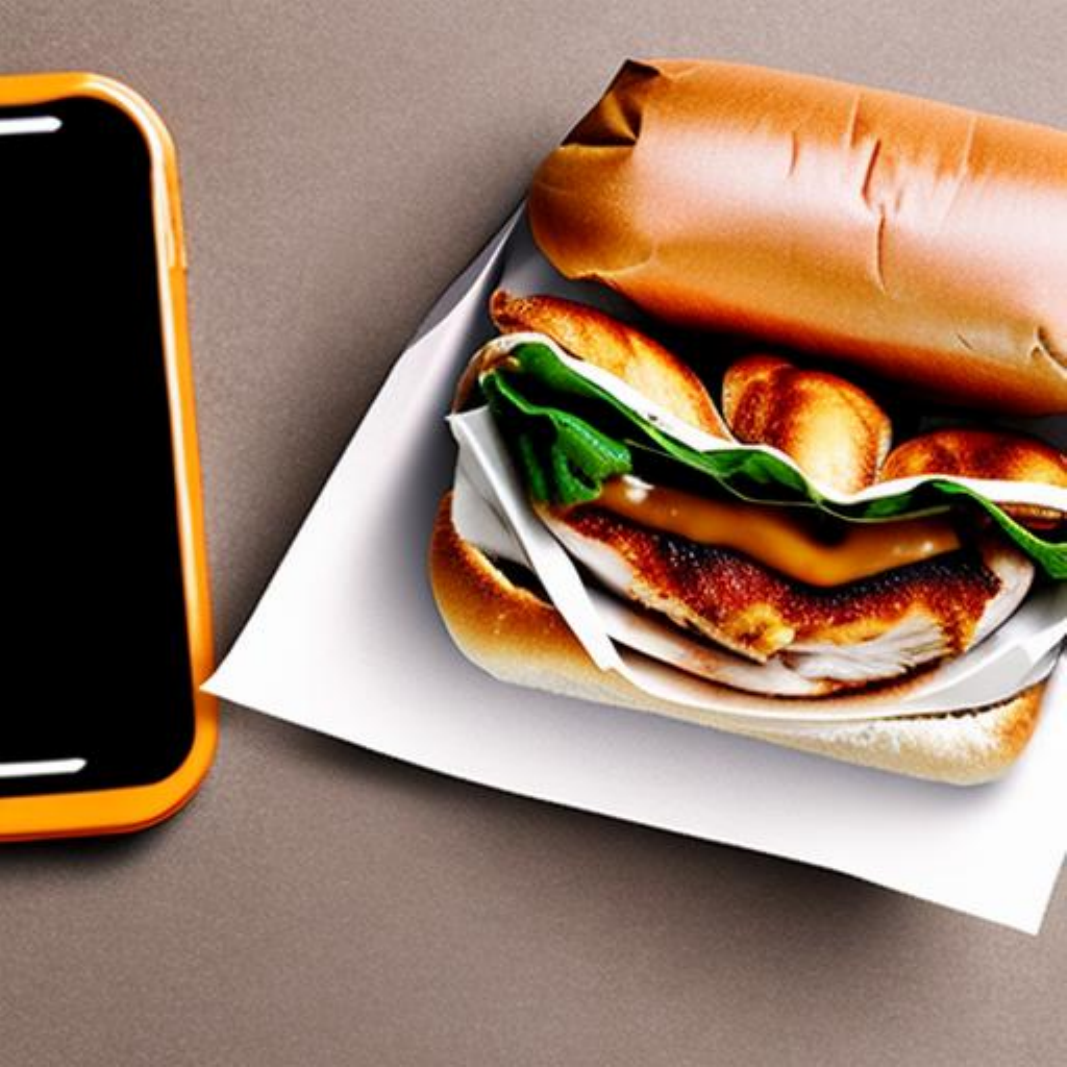}}; &
\node[name=r4-4]{\includegraphics[width=0.10\textwidth]{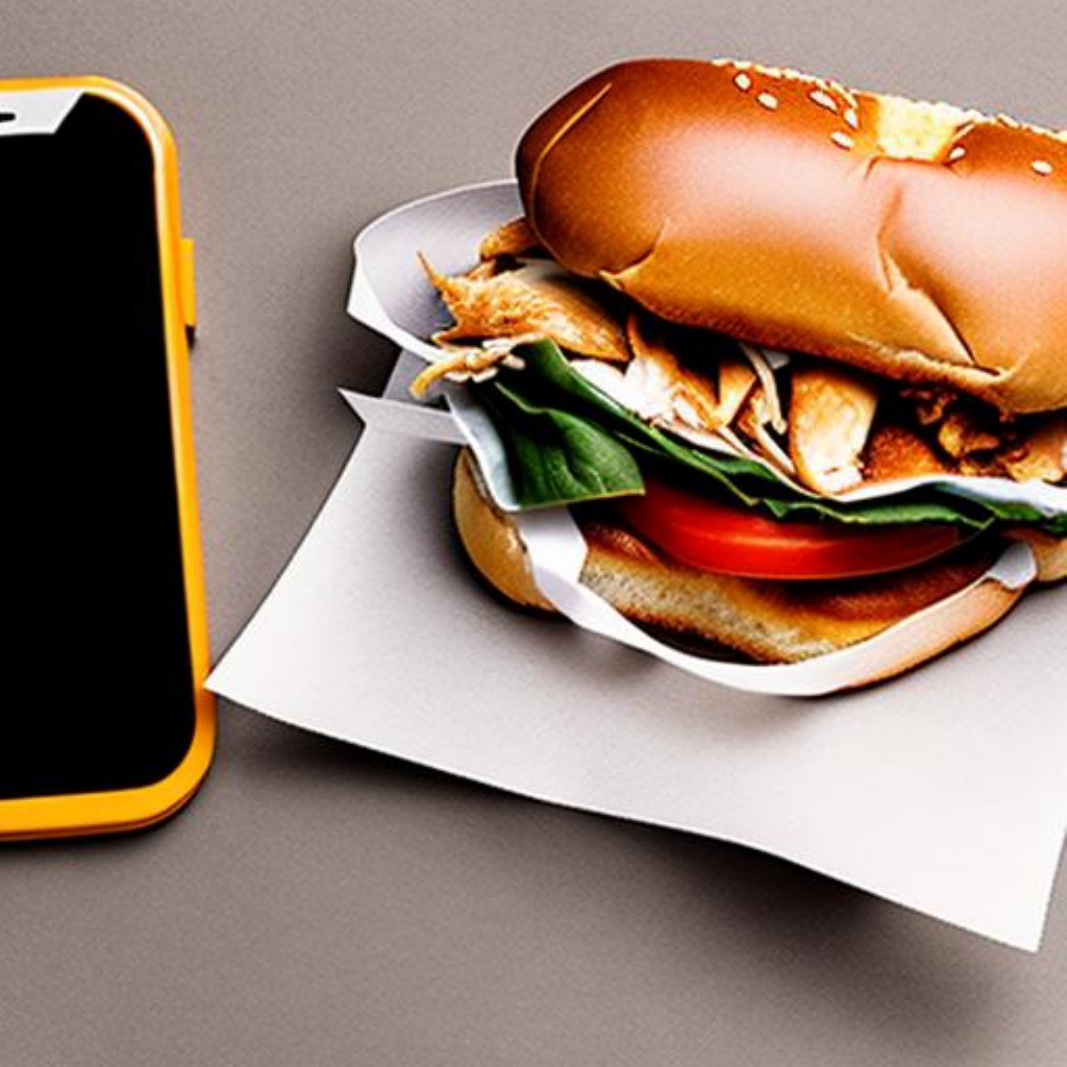}}; \\
};

\node[rotate=90, anchor=center, yshift=8pt, font=\small\bfseries] at (l1-1.west) {Constant CFG};
\node[rotate=90, anchor=center, yshift=8pt, font=\small\bfseries] at (l2-1.west) {Interval CFG};
\node[rotate=90, anchor=center, yshift=8pt, font=\small\bfseries] at (l3-1.west) {$\beta$--CFG};
\node[rotate=90, anchor=center, yshift=8pt, font=\small\bfseries] at (l4-1.west) {DG-CFG (ours)};

\node[above=3pt] at (l1-1.north) {$\text{NFE}=4$};
\node[above=3pt] at (l1-2.north) {$\text{NFE}=8$};
\node[above=3pt] at (l1-3.north) {$\text{NFE}=16$};
\node[above=3pt] at (l1-4.north) {$\text{NFE}=32$};
\node[above=3pt] at (r1-1.north) {$\text{NFE}=4$};
\node[above=3pt] at (r1-2.north) {$\text{NFE}=8$};
\node[above=3pt] at (r1-3.north) {$\text{NFE}=16$};
\node[above=3pt] at (r1-4.north) {$\text{NFE}=32$};

\node[below=6pt, font=\small\bfseries] at ($(l4-1.south)!0.5!(l4-4.south)$) {SD 1.5};
\node[below=6pt, font=\small\bfseries] at ($(r4-1.south)!0.5!(r4-4.south)$) {SD 2.1};

\end{tikzpicture}
\caption{Qualitative comparison across NFE settings on Stable Diffusion~1.5 and Stable Diffusion~2.1 at $\bar{\omega}=23$. Rows correspond to guidance methods and columns correspond to $\text{NFE}\in\{4,8,16,32\}$ within each backbone block. The left block shows SD1.5 and the right block shows SD2.1. The prompt is \textit{a chicken sandwich in a wrapper near a cell phone}.}
\label{fig:nfe-ablation}
\end{figure}

\subsubsection{Minimal NFE to Reach Fixed Quality Targets}
\label{sec:nfe-threshold}

The preceding evaluation compares methods at equal sampling budgets. We now ask the converse question on SD1.5 and SD2.1: given a fixed quality target, how many sampling steps does each method require to reach it? This fixed-quality view is practically relevant because it measures the inference cost needed to attain a prescribed output quality.

We set $\bar{\omega}=23$, a challenging yet informative high-guidance regime. At this strength, color over-saturation and discretization errors become pronounced, making differences across NFE settings easier to resolve; when these artifacts are controlled, strong guidance can also yield substantial gains in prompt alignment and perceptual quality. For SD1.5, we define five quality targets: three ImageReward thresholds ($\text{IR}\geq0$, $\text{IR}\geq0.1$, and $\text{IR}\geq0.2$), one text-alignment threshold ($\text{CLIP}\geq30.5$), and one distributional-quality threshold ($\text{FID}\leq75$). For SD2.1, we use four targets: $\text{IR}\geq0.2$, $\text{IR}\geq0.4$, $\text{CLIP}\geq30.5$, and $\text{FID}\leq75$. For each method--target pair, we evaluate a range of NFE values and report the minimum NFE required to meet the corresponding criterion.

\begin{table}[ht]
\centering
\caption{Minimum NFE required to reach fixed quality targets on Stable Diffusion~1.5 and Stable Diffusion~2.1 at $\bar{\omega}=23$. Lower values indicate fewer sampling steps. Best results are shown in \textbf{bold}; $>100$ indicates that the target is not reached within 100 NFE.}
\label{tab:nfe-threshold}
\small
\begin{tabular}{llcccc}
\toprule
Backbone & Target & Constant CFG & Interval CFG & $\beta$--CFG & \textbf{DG-CFG (ours)} \\
\midrule
\multirow{5}{*}{SD1.5}
& IR $\geq 0$      & 12 & 19 & 10 & \textbf{8} \\
& IR $\geq 0.1$    & 15 & 24 & 12 & \textbf{9} \\
& IR $\geq 0.2$    & 18 & 36 & 16 & \textbf{12} \\
& CLIP $\geq 30.5$ & 8  & 12 & 8  & \textbf{6} \\
& FID $\leq 75$    & 12 & 36 & 13 & \textbf{8} \\
\midrule
\multirow{4}{*}{SD2.1}
& IR $\geq 0.2$    & 11 & 20 & 10 & \textbf{8} \\
& IR $\geq 0.4$    & 16 & 38 & 15 & \textbf{12} \\
& CLIP $\geq 30.5$ & 7  & 11 & 7  & \textbf{6} \\
& FID $\leq 75$    & 12 & $>100$ & 14 & \textbf{9} \\
\bottomrule
\end{tabular}
\end{table}

Table~\ref{tab:nfe-threshold} summarizes the results. Across both backbones, DG-CFG reaches every quality target with the lowest NFE, demonstrating that its fixed-budget gains translate into lower inference cost at matched quality. This advantage holds across perceptual quality, text alignment, and distributional fidelity, and becomes more pronounced at stricter thresholds, where the baselines require substantially more integration steps.

Together with the fixed-budget results in Section~\ref{sec:nfe-sweep}, these findings demonstrate gains in both effectiveness and efficiency: DG-CFG improves generation quality at a given NFE and reaches prescribed quality targets with fewer sampling steps on both SD1.5 and SD2.1. The proposed schedule can therefore reduce inference latency and computational cost at matched output quality.

\clearpage
\section{Conclusion}
\label{sec:conclusion}

We establish an exact analytic characterization of the distribution induced by classifier-free guidance under deterministic probability flow ODE sampling, expressed through path-integral representations. These results show that CFG modifies the target distribution through an exponential correction whose temporal weighting is governed by $\omega(t)-1$, clarifying how the effect of guidance accumulates along the sampling trajectory. This characterization motivates DG-CFG, which redistributes guidance across timesteps while accounting for signal content and amplified score errors in the low-noise regime.

Toy-model experiments closely validate the predicted induced distributions. Across Stable Diffusion~1.5, Stable Diffusion~2.1, and Stable Diffusion~XL, DG-CFG provides a more favorable diversity--fidelity trade-off at lower guidance strengths and delivers higher image quality with better-controlled saturation than the baselines under strong guidance. Component ablations on SD1.5 and SD2.1 confirm the complementary roles of the schedule factors, while evaluations across NFE settings and fixed-quality targets show that these gains persist across sampling budgets and translate into fewer sampling steps at matched quality.

\textbf{Limitations and Future Work.} 
DG-CFG addresses temporal non-uniformity induced by the noise schedule but does not explicitly account for the geometry of the conditional--reference score discrepancy. Incorporating this geometry into schedule design and extending the distributional analysis to flow matching models may yield principled guidance strategies for a broader class of generative models.

\bigskip
\noindent{\bf Acknowledgments}\\
Zheng Ma is supported by NSFC Grant No. 12531016 and  Beijing Institute of Applied
Physics and Computational Mathematics funding HX02023-6. Additionally, we also thank Shanghai Institute for Mathematics and Interdisciplinary Sciences (SIMIS) for their financial support. This research was funded by SIMIS under grant number SIMIS-ID-2025-ST. The authors are grateful for the resources and facilities provided by SIMIS, which were essential for the completion of this work.

\bibliographystyle{unsrt}
\bibliography{references}

\appendix
\section{Proofs of Main Results}
\label{sec:appendix}

This appendix gives the proofs of the three main theorems in Section~\ref{sec:dist-cfg}. The arguments are organized by their logical dependence: we first derive the Fokker--Planck equation for the reference distribution $\tilde{r}_t$ (Theorem~\ref{thm:fp}), then use it to characterize the deterministic CFG-induced distribution (Theorem~\ref{thm:ddim}), and finally extend the result to time-dependent guidance (Theorem~\ref{thm:time-dep}).

\subsection{Proof of Theorem~\ref{thm:fp} (Fokker--Planck Equation for $\tilde{r}_t$)}
\label{sec:proof-fp}

We start from the forward Fokker--Planck equations satisfied by the two marginals:
\begin{align}
    \frac{\partial p_t}{\partial t} &= -\nabla \cdot (f p_t) + \frac{1}{2}g^2 \Delta p_t, \label{ap:fp-p}\\
    \frac{\partial q_t}{\partial t} &= -\nabla \cdot (f q_t) + \frac{1}{2}g^2 \Delta q_t. \label{ap:fp-q}
\end{align}

\medskip\noindent\textbf{Step 1: Evolution of the unnormalized product.}
Let $r_t = p_t^\omega q_t^{1-\omega}$. Applying the chain rule gives
\begin{align}
    \frac{\partial r_t}{\partial t}
    &= \omega p_t^{\omega-1} q_t^{1-\omega}\,\frac{\partial p_t}{\partial t}
    + (1-\omega) p_t^\omega q_t^{-\omega}\,\frac{\partial q_t}{\partial t} \notag\\[4pt]
    &= \omega p_t^{\omega-1} q_t^{1-\omega}\Big[-\nabla\cdot(f p_t) + \frac{1}{2}g^2\Delta p_t\Big]
    + (1-\omega) p_t^\omega q_t^{-\omega}\Big[-\nabla\cdot(f q_t) + \frac{1}{2}g^2\Delta q_t\Big]. \label{ap:rt-raw}
\end{align}

\emph{Drift terms.} Expanding the divergence terms, we obtain
\begin{align}
    &\omega p_t^{\omega-1} q_t^{1-\omega}\big[-\nabla\cdot(f p_t)\big]
    + (1-\omega) p_t^\omega q_t^{-\omega}\big[-\nabla\cdot(f q_t)\big] \notag\\
    &= -\omega p_t^{\omega-1} q_t^{1-\omega}\big[(\nabla\cdot f)p_t + f\cdot\nabla p_t\big]
    - (1-\omega) p_t^\omega q_t^{-\omega}\big[(\nabla\cdot f)q_t + f\cdot\nabla q_t\big] \notag\\
    &= -(\nabla\cdot f)\,p_t^\omega q_t^{1-\omega}
    - f\cdot\big(\omega p_t^{\omega-1} q_t^{1-\omega}\nabla p_t + (1-\omega)p_t^\omega q_t^{-\omega}\nabla q_t\big) \notag\\
    &= -(\nabla\cdot f) r_t - f\cdot \nabla r_t
    = -\nabla\cdot(f r_t). \label{ap:drift-result}
\end{align}

\emph{Diffusion terms.} Define
\begin{equation}
    A_t = \frac{1}{2}g^2\Big[\omega p_t^{\omega-1} q_t^{1-\omega}\Delta p_t
    + (1-\omega) p_t^\omega q_t^{-\omega}\Delta q_t\Big].
    \label{ap:diffusion-contribution}
\end{equation}
The Laplacian of $r_t$ can be decomposed as
\begin{align}
    \Delta r_t &= \omega p_t^{\omega-1} q_t^{1-\omega}\Delta p_t
    + (1-\omega) p_t^\omega q_t^{-\omega}\Delta q_t \notag\\
    &\quad + \sum_{i=1}^n\Big(\frac{\partial}{\partial x_i}(\omega p_t^{\omega-1} q_t^{1-\omega})\frac{\partial p_t}{\partial x_i}
    + \frac{\partial}{\partial x_i}((1-\omega)p_t^\omega q_t^{-\omega})\frac{\partial q_t}{\partial x_i}\Big). \label{ap:delta-rt}
\end{align}
Therefore,
\begin{equation}
    A_t = \frac{1}{2}g^2\Delta r_t
    - \frac{1}{2}g^2\sum_{i=1}^n\Big(\frac{\partial}{\partial x_i}(\omega p_t^{\omega-1} q_t^{1-\omega})\frac{\partial p_t}{\partial x_i}
    + \frac{\partial}{\partial x_i}((1-\omega)p_t^\omega q_t^{-\omega})\frac{\partial q_t}{\partial x_i}\Big).
    \label{ap:diffusion-expanded}
\end{equation}

\emph{Residual term.}
Observe that $\omega p_t^{\omega-1} q_t^{1-\omega} = \omega (p_t/q_t)^{\omega-1}$ and
$(1-\omega)p_t^\omega q_t^{-\omega} = (1-\omega)(p_t/q_t)^\omega$. Therefore,
\begin{align}
    \frac{\partial}{\partial x_i}\big(\omega p_t^{\omega-1} q_t^{1-\omega}\big)
    &= \omega(\omega-1)\Big(\frac{p_t}{q_t}\Big)^{\omega-2}
    \frac{\partial}{\partial x_i}\!\Big(\frac{p_t}{q_t}\Big), \\[4pt]
    \frac{\partial}{\partial x_i}\big((1-\omega)p_t^\omega q_t^{-\omega}\big)
    &= (1-\omega)\omega\Big(\frac{p_t}{q_t}\Big)^{\omega-1}
    \frac{\partial}{\partial x_i}\!\Big(\frac{p_t}{q_t}\Big)
    = -\omega(\omega-1)\Big(\frac{p_t}{q_t}\Big)^{\omega-1}
    \frac{\partial}{\partial x_i}\!\Big(\frac{p_t}{q_t}\Big).
\end{align}

The residual sum is then
\begin{align}
    &\sum_{i=1}^n\Big(\frac{\partial}{\partial x_i}(\omega p_t^{\omega-1} q_t^{1-\omega})\frac{\partial p_t}{\partial x_i}
    + \frac{\partial}{\partial x_i}((1-\omega)p_t^\omega q_t^{-\omega})\frac{\partial q_t}{\partial x_i}\Big) \notag\\
    &= \sum_{i=1}^n \omega(\omega-1)\Big(\frac{p_t}{q_t}\Big)^{\omega-2}
    \frac{\partial}{\partial x_i}\!\Big(\frac{p_t}{q_t}\Big)
    \Big(\frac{\partial p_t}{\partial x_i} - \frac{p_t}{q_t}\frac{\partial q_t}{\partial x_i}\Big).
\end{align}

Using $\frac{\partial}{\partial x_i}(p_t/q_t) = \frac{1}{q_t}(\frac{\partial p_t}{\partial x_i}
- \frac{p_t}{q_t}\frac{\partial q_t}{\partial x_i})$,
\begin{align}
    \text{Residual} &= \sum_{i=1}^n \omega(\omega-1)\Big(\frac{p_t}{q_t}\Big)^{\omega-2}
    \frac{1}{q_t}\Big(\frac{\partial p_t}{\partial x_i}
    - \frac{p_t}{q_t}\frac{\partial q_t}{\partial x_i}\Big)^2 \notag\\
    &= \sum_{i=1}^n \omega(\omega-1) p_t^{\omega-2} q_t^{1-\omega}
    \Big(\frac{\partial p_t}{\partial x_i}
    - \frac{p_t}{q_t}\frac{\partial q_t}{\partial x_i}\Big)^2 \notag\\
    &= \sum_{i=1}^n \omega(\omega-1) p_t^\omega q_t^{1-\omega}
    \Big(\frac{1}{p_t}\frac{\partial p_t}{\partial x_i}
    - \frac{1}{q_t}\frac{\partial q_t}{\partial x_i}\Big)^2 \notag\\
    &= \omega(\omega-1)\,r_t\sum_{i=1}^n\big[(\nabla_x\log p_t)_i - (\nabla_x\log q_t)_i\big]^2 \notag\\
    &= \omega(\omega-1)\,\|\nabla_x\log p_t - \nabla_x\log q_t\|_2^2\,r_t
    = V_t\,r_t.
    \label{ap:V-result}
\end{align}

Combining the drift contribution \eqref{ap:drift-result}, the diffusion decomposition \eqref{ap:diffusion-expanded}, and the residual identity \eqref{ap:V-result} gives
\begin{equation}
    \frac{\partial r_t}{\partial t} = -\nabla\cdot(f r_t) + \frac{1}{2}g^2\Delta r_t - \frac{1}{2}g^2 V_t r_t.
    \label{ap:fp-rt}
\end{equation}

\medskip\noindent\textbf{Step 2: Evolution of the normalizing constant.}
Integrating \eqref{ap:fp-rt} over $\mathbb{R}^n$ and applying the divergence theorem, using the assumed boundary decay $f r_t \to 0$ and $\nabla r_t \to 0$ as $|x| \to \infty$, yields
\begin{equation}
    \frac{dZ_t}{dt} = \int_{\mathbb{R}^n} \frac{\partial r_t}{\partial t}\,dx
    = -\frac{1}{2}g^2\int_{\mathbb{R}^n} V_t r_t\,dx
    = -\frac{1}{2}g^2 Z_t\,\mathbb{E}_{\tilde{r}_t}[V_t].
    \label{ap:dZdt}
\end{equation}

\medskip\noindent\textbf{Step 3: Evolution of the normalized density.}
Finally, since $\tilde{r}_t = r_t/Z_t$,
\begin{align}
    \frac{\partial\tilde{r}_t}{\partial t}
    &= \frac{1}{Z_t}\frac{\partial r_t}{\partial t} - \frac{r_t}{Z_t^2}\frac{dZ_t}{dt} \notag\\
    &= \frac{1}{Z_t}\Big[-\nabla\cdot(f r_t) + \frac{1}{2}g^2\Delta r_t - \frac{1}{2}g^2 V_t r_t\Big]
    - \frac{r_t}{Z_t^2}\Big[-\frac{1}{2}g^2 Z_t\,\mathbb{E}_{\tilde{r}_t}[V_t]\Big] \notag\\
    &= -\nabla\cdot(f\tilde{r}_t) + \frac{1}{2}g^2\Delta\tilde{r}_t
    - \frac{1}{2}g^2\big(V_t - \mathbb{E}_{\tilde{r}_t}[V_t]\big)\tilde{r}_t,
    \label{ap:fp-tilde-final}
\end{align}
which completes the proof. \hfill $\square$

\subsection{Proof of Theorem~\ref{thm:ddim} (Exact CFG Sampling Distribution)}
\label{sec:proof-ddim}

With the evolution equation for $\tilde{r}_t$ established, we now analyze the density
transported by the deterministic guided sampler. The key step is to compare this
density with the reference density through the ratio
$\rho_t = \hat{r}_t / \tilde{r}_t$, and then solve the resulting transport equation
along characteristics.

\medskip\noindent\textbf{Step 1: Fokker--Planck equation for $\hat{r}_t$.}
Under the deterministic ODE $dx = (f - \frac{1}{2}g^2\nabla_x\log\tilde{r}_t)\,dt$,
the density $\hat{r}_t$ satisfies the continuity equation
\begin{align}
    \frac{\partial\hat{r}_t}{\partial t}
    &= -\nabla\cdot\Big[\big(f - \frac{1}{2}g^2\nabla_x\log\tilde{r}_t\big)\hat{r}_t\Big] \notag\\
    &= -\nabla\cdot(f\hat{r}_t) + \frac{1}{2}g^2\,\nabla\cdot(\hat{r}_t\nabla_x\log\tilde{r}_t).
    \label{ap:fp-hat}
\end{align}

\medskip\noindent\textbf{Step 2: Equation for the density ratio.}
Applying the quotient rule to $\rho_t = \hat{r}_t/\tilde{r}_t$ gives
\begin{align}
    \frac{\partial\rho_t}{\partial t}
    &= \frac{1}{\tilde{r}_t}\frac{\partial\hat{r}_t}{\partial t}
    - \frac{\hat{r}_t}{\tilde{r}_t^2}\frac{\partial\tilde{r}_t}{\partial t} \notag\\
    &= \frac{1}{\tilde{r}_t}\Big[-\nabla\cdot(f\hat{r}_t) + \frac{1}{2}g^2\nabla\cdot(\hat{r}_t\nabla_x\log\tilde{r}_t)\Big] \notag\\
    &\quad - \frac{\hat{r}_t}{\tilde{r}_t^2}\Big[-\nabla\cdot(f\tilde{r}_t) + \frac{1}{2}g^2\Delta\tilde{r}_t
    - \frac{1}{2}g^2(V - \mathbb{E}_{\tilde{r}_t}[V])\tilde{r}_t\Big]. \label{ap:rho-raw}
\end{align}

\emph{Drift terms.}
\begin{align}
    -\frac{1}{\tilde{r}_t}\nabla\cdot(f\hat{r}_t) + \frac{\hat{r}_t}{\tilde{r}_t^2}\nabla\cdot(f\tilde{r}_t)
    &= -f\cdot\Big(\frac{\nabla\hat{r}_t}{\tilde{r}_t} - \frac{\hat{r}_t\nabla\tilde{r}_t}{\tilde{r}_t^2}\Big)
    = -f\cdot\nabla\rho_t. \label{ap:rho-drift}
\end{align}

\emph{Diffusion terms.} Since $\hat{r}_t\nabla_x\log\tilde{r}_t = \rho_t\nabla\tilde{r}_t$,
\begin{align}
    \frac{1}{\tilde{r}_t}\cdot\frac{1}{2}g^2\nabla\cdot(\hat{r}_t\nabla_x\log\tilde{r}_t)
    &= \frac{g^2}{2\tilde{r}_t}\nabla\cdot(\rho_t\nabla\tilde{r}_t) \notag\\
    &= \frac{1}{2}g^2\nabla\log\tilde{r}_t\cdot\nabla\rho_t + \frac{1}{2}g^2\frac{\rho_t}{\tilde{r}_t}\Delta\tilde{r}_t,
\end{align}
while
\begin{equation}
    -\frac{\hat{r}_t}{\tilde{r}_t^2}\cdot\frac{1}{2}g^2\Delta\tilde{r}_t
    = -\frac{1}{2}g^2\frac{\rho_t}{\tilde{r}_t}\Delta\tilde{r}_t.
\end{equation}
The $\Delta\tilde{r}_t$ terms cancel, leaving $\frac{1}{2}g^2\nabla\log\tilde{r}_t\cdot\nabla\rho_t$.

\emph{Potential term.}
\begin{equation}
    -\frac{\hat{r}_t}{\tilde{r}_t^2}\cdot\Big[-\frac{1}{2}g^2(V - \mathbb{E}_{\tilde{r}_t}[V])\tilde{r}_t\Big]
    = \frac{1}{2}g^2(V - \mathbb{E}_{\tilde{r}_t}[V])\rho_t.
\end{equation}

Collecting the drift, diffusion, and potential contributions yields
\begin{equation}
    \frac{\partial\rho_t}{\partial t} = -f\cdot\nabla\rho_t
    + \frac{1}{2}g^2\nabla\log\tilde{r}_t\cdot\nabla\rho_t
    + \frac{1}{2}g^2(V - \mathbb{E}_{\tilde{r}_t}[V])\rho_t.
    \label{ap:rho-pde}
\end{equation}

\medskip\noindent\textbf{Step 3: Solution along characteristics.}
Along the characteristic ODE $dX_s/ds = f - \frac{1}{2}g^2\nabla_x\log\tilde{r}_s$, the total derivative of $\rho_s(X_s)$ is
\begin{align}
    \frac{d}{ds}\rho_s(X_s)
    &= \frac{\partial\rho_s}{\partial s} + \nabla\rho_s\cdot\frac{dX_s}{ds} \notag\\
    &= \frac{\partial\rho_s}{\partial s} + \nabla\rho_s\cdot\big(f - \frac{1}{2}g^2\nabla\log\tilde{r}_s\big) \notag\\
    &= \frac{1}{2}g^2(V_s - \mathbb{E}_{\tilde{r}_s}[V_s])\rho_s,
    \label{ap:ode-deriv}
\end{align}
where the last equality follows from \eqref{ap:rho-pde}. Integrating from $t_0$ to $T$ gives
\begin{equation}
    \log\frac{\rho_T(X_T)}{\rho_{t_0}(X_{t_0})}
    = \frac{1}{2}\int_{t_0}^T g^2(s)\big(V_s(X_s) - \mathbb{E}_{\tilde{r}_s}[V_s]\big)\,ds.
\end{equation}
Because the transported density is initialized from $\tilde{r}_T$, we have
$\rho_T = \hat{r}_T/\tilde{r}_T = 1$ at the terminal time. Hence
\begin{equation}
    \rho_{t_0}(x) = \exp\!\Big(-\frac{1}{2}\int_{t_0}^T g^2(s)\big(V_s(X_s^x) - \mathbb{E}_{\tilde{r}_s}[V_s]\big)\,ds\Big).
    \label{ap:rho-solution}
\end{equation}
Here $X_s^x$ denotes the solution of the characteristic ODE
$dX_s^x/ds = f(X_s^x,s)-\frac{1}{2}g^2(s)\nabla_x\log\tilde{r}_s(X_s^x)$
with initial condition $X_{t_0}^x=x$.

\medskip\noindent\textbf{Step 4: Product-reference form.}
Multiplying by $\tilde{r}_{t_0} = Z_{t_0}^{-1}p_{t_0}^\omega q_{t_0}^{1-\omega}$ and absorbing all $x$-independent factors into the normalization constant gives
\begin{equation}
    \hat{r}_{t_0}(x) = \frac{1}{Z'}p_{t_0}^\omega(x)q_{t_0}^{1-\omega}(x)\,
    \exp\!\Big(-\frac{1}{2}\int_{t_0}^T g^2(s)\,V_s(X_s^x)\,ds\Big).
\end{equation}

\medskip\noindent\textbf{Step 5: Equivalent form relative to $p_{t_0}$.}
It remains to rewrite the product-reference form as the equivalent expression in
Theorem~\ref{thm:ddim}. Let
$L_t(x) = \log p_t(x) - \log q_t(x)$, hence $\nabla L_t = \nabla\log p_t - \nabla\log q_t$
and $L_T(X_T^x)=\log p_T(X_T^x)-\log q_T(X_T^x)$.

\emph{Rewriting the prefactor.}
The prefactor $p_{t_0}^\omega q_{t_0}^{1-\omega}$ can be written as
\begin{align}
    p_{t_0}^\omega(x) q_{t_0}^{1-\omega}(x)
    &= p_{t_0}(x) \Big(\frac{p_{t_0}(x)}{q_{t_0}(x)}\Big)^{\omega-1} \notag\\
    &= p_{t_0}(x) e^{(\omega-1)L_{t_0}(x)} \notag\\
    &= p_{t_0}(x) \exp\!\Big((\omega-1)\big(L_T(X_T^x) - \int_{t_0}^T \frac{d}{ds}L_s(X_s^x)\,ds\big)\Big) \notag\\
    &= p_{t_0}(x) \exp\!\Big((\omega-1)L_T(X_T^x)
    -(\omega-1)\int_{t_0}^T \frac{d}{ds}L_s(X_s^x)\,ds\Big),
    \label{ap:prefactor}
\end{align}
where the integral is evaluated along the characteristic ODE \eqref{eq:char-ode}.

\emph{Computing $dL/ds$ along the characteristic.}
The Fokker--Planck equations \eqref{ap:fp-p}--\eqref{ap:fp-q} imply that the log-densities satisfy
\begin{align}
    \frac{\partial\log p_s}{\partial s} &= -\nabla\cdot f - f\cdot\nabla\log p_s
    + \frac{1}{2}g^2\big(\Delta\log p_s + \|\nabla\log p_s\|^2\big), \label{ap:logp-pde}\\
    \frac{\partial\log q_s}{\partial s} &= -\nabla\cdot f - f\cdot\nabla\log q_s
    + \frac{1}{2}g^2\big(\Delta\log q_s + \|\nabla\log q_s\|^2\big). \label{ap:logq-pde}
\end{align}
Subtracting \eqref{ap:logq-pde} from \eqref{ap:logp-pde} gives
\begin{equation}
    \frac{\partial L_s}{\partial s} = -f\cdot\nabla L_s
    + \frac{1}{2}g^2\big(\Delta L_s + \|\nabla\log p_s\|^2 - \|\nabla\log q_s\|^2\big).
    \label{ap:partial-L}
\end{equation}

Along the same characteristic ODE, the total derivative of $L$ is
\begin{align}
    \frac{dL_s}{ds}
    &= \frac{\partial L_s}{\partial s} + \nabla L_s \cdot \frac{dX_s}{ds} \notag\\
    &= \frac{\partial L_s}{\partial s} + \nabla L_s \cdot \big(f - \frac{1}{2}g^2\nabla\log\tilde{r}_s\big).
    \label{ap:dLds-raw}
\end{align}

Substituting \eqref{ap:partial-L} and
$\nabla\log\tilde{r}_s = \omega\nabla\log p_s + (1-\omega)\nabla\log q_s$ gives
\begin{align}
    \frac{dL_s}{ds}
    &= -f\cdot\nabla L_s + \frac{1}{2}g^2\big(\Delta L_s + \|\nabla\log p_s\|^2 - \|\nabla\log q_s\|^2\big) \notag\\
    &\quad + \nabla L_s\cdot f - \frac{1}{2}g^2\nabla L_s\cdot\big(\omega\nabla\log p_s + (1-\omega)\nabla\log q_s\big) \notag\\
    &= \frac{1}{2}g^2\Big[\Delta L_s + \|\nabla\log p_s\|^2 - \|\nabla\log q_s\|^2
        - \omega\|\nabla L_s\|^2 - \nabla L_s\cdot\nabla\log q_s\Big],
    \label{ap:dLds-mid}
\end{align}
where we used $\nabla L_s = \nabla\log p_s - \nabla\log q_s$ and
$\nabla L_s\cdot(\omega\nabla\log p_s + (1-\omega)\nabla\log q_s)
= \omega\|\nabla L_s\|^2 + \nabla L_s\cdot\nabla\log q_s$.

Using $\|\nabla\log p_s\|^2 - \|\nabla\log q_s\|^2
= \|\nabla L_s\|^2 + 2\nabla L_s\cdot\nabla\log q_s$, we obtain
\begin{equation}
    \boxed{\frac{dL_s}{ds}
    = \frac{1}{2}g^2\Big[\big(\Delta L_s + \|\nabla L_s\|^2 + \nabla L_s\cdot\nabla\log q_s\big)
        - \omega\|\nabla L_s\|^2\Big]}.
    \label{ap:dLds}
\end{equation}

\emph{Combining the exponents.}
We now combine the rewritten prefactor \eqref{ap:prefactor} with the $V$-integral from Step~4.
Since $V_s = \omega(\omega-1)\|\nabla L_s\|^2$, the full exponent becomes
\begin{align}
    &(\omega-1)L_T(X_T^x)
     -(\omega-1)\int_{t_0}^T \frac{dL_s}{ds}\,ds
     - \frac{1}{2}\int_{t_0}^T g^2(s)\,\omega(\omega-1)\|\nabla L_s\|^2\,ds \notag\\
    &= (\omega-1)L_T(X_T^x)
    -(\omega-1)\int_{t_0}^T \frac{1}{2}g^2
        \Big[\Delta L_s + \|\nabla L_s\|^2 + \nabla L_s\cdot\nabla\log q_s
        - \omega\|\nabla L_s\|^2\Big]ds \notag\\
    &\quad - \frac{1}{2}\int_{t_0}^T g^2\,\omega(\omega-1)\|\nabla L_s\|^2\,ds \notag\\
    &= (\omega-1)L_T(X_T^x)
    -\frac{1}{2}(\omega-1)\int_{t_0}^T g^2
        \Big[\Delta L_s + \|\nabla L_s\|^2 + \nabla L_s\cdot\nabla\log q_s
        - \omega\|\nabla L_s\|^2 + \omega\|\nabla L_s\|^2\Big]ds \notag\\
    &= (\omega-1)L_T(X_T^x)
    -\frac{1}{2}(\omega-1)\int_{t_0}^T g^2
        \Big[\Delta L_s + \|\nabla L_s\|^2 + \nabla L_s\cdot\nabla\log q_s\Big]ds.
    \label{ap:exponent-combined}
\end{align}

Finally, because $\|\nabla L_s\|^2 + \nabla L_s\cdot\nabla\log q_s
= \nabla L_s\cdot(\nabla\log p_s - \nabla\log q_s + \nabla\log q_s)
= \nabla L_s\cdot\nabla\log p_s$, the exponent simplifies to
\begin{equation}
    (\omega-1)L_T(X_T^x)
    -\frac{1}{2}(\omega-1)\int_{t_0}^T g^2(s)\,
    \Big(\Delta L_s(X_s^x) + \nabla L_s(X_s^x)\cdot\nabla\log p_s(X_s^x)\Big)\,ds.
    \label{ap:laplace-exponent}
\end{equation}

Substituting this exponent into the expression from Step~4 yields the
general Laplacian form \eqref{eq:ddim-laplace-general}, since
$e^{(\omega-1)L_T(X_T^x)}=(p_T(X_T^x)/q_T(X_T^x))^{\omega-1}$.
Under the usual terminal condition $\log p_T=\log q_T$, this boundary factor is
one, giving the simplified form \eqref{eq:ddim-laplace}. The product-reference form
\eqref{eq:ddim-main} follows directly from Step~4 after substituting
$V_s = \omega(\omega-1)\|\nabla L_s\|^2$ and absorbing
$\mathbb{E}_{\tilde{r}_s}[V_s]$ into the normalization constant.

This completes the proof of Theorem~\ref{thm:ddim}. \hfill $\square$

\subsection{Proof of Theorem~\ref{thm:time-dep} (Time-Dependent Guidance)}
\label{sec:proof-time-dep}

We now extend the proof to time-dependent guidance, $\omega=\omega(t)$. The time dependence adds a source term involving $\omega'(t)$ to the evolution of the product reference density. After solving the corresponding ratio equation, an integration-by-parts argument along the characteristic ODE converts this extra term into the same path-integral structure as in the constant-guidance case.

\medskip\noindent\textbf{Step 1: Evolution of the time-dependent product.}
For $r_t = p_t^{\omega(t)} q_t^{1-\omega(t)}$, differentiating with respect to $t$ gives the same terms as in the constant-$\omega$ case, together with an additional derivative of the exponent:
\begin{align}
    \frac{\partial r_t}{\partial t}
    &= \omega p_t^{\omega-1}q_t^{1-\omega}\frac{\partial p_t}{\partial t}
    + (1-\omega)p_t^\omega q_t^{-\omega}\frac{\partial q_t}{\partial t}
    + \omega'(t)\,p_t^\omega q_t^{1-\omega}\log\frac{p_t}{q_t} \notag\\
    &= -\nabla\cdot(f r_t) + \frac{1}{2}g^2\Delta r_t - \frac{1}{2}g^2 V_t r_t
    + \omega'(t)\log\frac{p_t}{q_t}\,r_t,
    \label{ap:rt-time}
\end{align}
where $V_t = \omega(t)(\omega(t)-1)\|\nabla_x\log p_t - \nabla_x\log q_t\|^2$, and $\omega'$ denotes the weak derivative of $\omega$ with respect to $t$.

Normalizing $r_t$ as in the proof of Theorem~\ref{thm:fp}, the Fokker--Planck equation for $\tilde{r}_t$ becomes
\begin{align}
    \frac{\partial\tilde{r}_t}{\partial t}
    &= -\nabla\cdot(f\tilde{r}_t) + \frac{1}{2}g^2\Delta\tilde{r}_t
    - \frac{1}{2}g^2\big(V_t - \mathbb{E}_{\tilde{r}_t}[V_t]\big)\tilde{r}_t \notag\\
    &\quad + \omega'(t)\Big(\log\frac{p_t}{q_t} - \mathbb{E}_{\tilde{r}_t}\big[\log\frac{p_t}{q_t}\big]\Big)\tilde{r}_t,
    \label{ap:fp-tilde-time}
\end{align}
\medskip\noindent\textbf{Step 2: Ratio equation.}
Repeating the density-ratio calculation from Theorem~\ref{thm:ddim} gives
\begin{align}
    \frac{\partial\rho_t}{\partial t}
    &= -f\cdot\nabla\rho_t + \frac{1}{2}g^2\nabla\log\tilde{r}_t\cdot\nabla\rho_t
    + \frac{1}{2}g^2(V_t - \mathbb{E}_{\tilde{r}_t}[V_t])\rho_t \notag\\
    &\quad - \omega'(t)\Big(\log\frac{p_t}{q_t} - \mathbb{E}_{\tilde{r}_t}\big[\log\frac{p_t}{q_t}\big]\Big)\rho_t.
    \label{ap:rho-pde-time}
\end{align}

\medskip\noindent\textbf{Step 3: Solution along characteristics.}
Solving \eqref{ap:rho-pde-time} along the same characteristic ODE yields
\begin{align}
    \rho_{t_0}(x) &= \exp\!\Big(-\frac{1}{2}\int_{t_0}^T g^2(s)\big(V_s(X_s^x) - \mathbb{E}_{\tilde{r}_s}[V_s]\big)\,ds
    + \int_{t_0}^T \omega'(s)\big(\log\frac{p_s(X_s^x)}{q_s(X_s^x)} - \mathbb{E}_{\tilde{r}_s}[\log\frac{p_s}{q_s}]\big)\,ds\Big) \notag\\
    &= \frac{1}{Z''}\exp\!\Big(-\frac{1}{2}\int_{t_0}^T g^2(s)V_s(X_s^x)\,ds
    + \int_{t_0}^T \omega'(s)\log\frac{p_s(X_s^x)}{q_s(X_s^x)}\,ds\Big),
\end{align}
where all terms independent of $x$ are absorbed into $Z''$. Since
$\hat{r}_{t_0}=\tilde{r}_{t_0}\rho_{t_0}$, it remains to simplify the
trajectory-dependent exponent.

\medskip\noindent\textbf{Step 4: Integration by parts.}
Let $L_t(x) = \log p_t(x) - \log q_t(x)$.
Along the characteristic trajectory,
\begin{equation}
    \int_{t_0}^T \omega'(s)L_s(X_s^x)\,ds
    = \big[\omega(s)L_s(X_s^x)\big]_{t_0}^T - \int_{t_0}^T \omega(s)\frac{d}{ds}L_s(X_s^x)\,ds.
    \label{ap:ibp}
\end{equation}

The total derivative of $L$ along the characteristic ODE was computed in
Appendix~\ref{sec:proof-ddim}, Step~5:
\begin{equation}
    \frac{dL_s}{ds}
    = \frac{1}{2}g^2\Big[\big(\Delta L_s + \|\nabla L_s\|^2 + \nabla L_s\cdot\nabla\log q_s\big)
        - \omega(s)\|\nabla L_s\|^2\Big].
\end{equation}

Combining the $\omega'$ integral with the $V$-integral gives
\begin{align}
    &\int_{t_0}^T \omega'(s)L_s\,ds - \frac{1}{2}\int_{t_0}^T \omega(s)(\omega(s)-1)g^2(s)\|\nabla L_s\|^2\,ds \notag\\
    &= \omega(T)L_T(X_T^x) - \omega(t_0)L_{t_0}(x) - \int_{t_0}^T \omega(s)\frac{dL_s}{ds}\,ds
    - \frac{1}{2}\int_{t_0}^T (\omega^2(s)-\omega(s))g^2\|\nabla L_s\|^2\,ds \notag\\
    &= \omega(T)L_T(X_T^x) - \omega(t_0)L_{t_0}(x)
    - \int_{t_0}^T \frac{1}{2}g^2(s)\omega(s)\big(\Delta L_s + \nabla L_s\cdot\nabla\log q_s\big)\,ds \notag\\
    &= (\omega(T)-1)L_T(X_T^x) - (\omega(t_0)-1)L_{t_0}(x) \notag\\
    &\quad - \int_{t_0}^T \frac{1}{2}g^2(s)(\omega(s)-1)\big(\Delta L_s + \nabla L_s\cdot\nabla\log p_s\big)\,ds.
    \label{ap:simplification}
\end{align}

Substituting this identity into $\hat{r}_{t_0}=\tilde{r}_{t_0}\rho_{t_0}$, the boundary term at $t_0$
combines with $p_{t_0}^{\omega(t_0)}q_{t_0}^{1-\omega(t_0)}$ to yield $p_{t_0}$, since
$p_{t_0}^{\omega(t_0)}q_{t_0}^{1-\omega(t_0)}e^{-(\omega(t_0)-1)L_{t_0}(x)} = p_{t_0}$.
The remaining terminal boundary factor is
$e^{(\omega(T)-1)L_T(X_T^x)}=(p_T(X_T^x)/q_T(X_T^x))^{\omega(T)-1}$, which gives the
representation \eqref{eq:time-dep-simple-general}. Under the usual terminal
condition $\log p_T=\log q_T$, this factor is one, yielding \eqref{eq:time-dep-simple}.
\hfill $\square$

\section{Toy Model: Analytic Score Formulas}
\label{sec:appendix-toy}

This appendix records the closed-form quantities used in the toy model experiments
of Section~\ref{sec:exp-toy}. Let $\mathbf{x}^k$ denote the $k$-th support point,
with $K=10$ points equally spaced on a circle of radius $R=2$. The unconditional
distribution is uniform,
$q_0(\mathbf{x})=\sum_{k=0}^{K-1}(1/K)\delta(\mathbf{x}-\mathbf{x}^k)$,
whereas the conditional distribution reweights the same support by likelihoods
$\boldsymbol{\ell}$:
\begin{equation}
    p_0(\mathbf{x})
    = \sum_{k=0}^{K-1}
    \frac{\pi_k\ell_k}{\sum_j \pi_j\ell_j}\,
    \delta(\mathbf{x}-\mathbf{x}^k).
\end{equation}
In the experiments, $\pi_k=1/K$ and
$\boldsymbol{\ell}=[2,2,2,2,1,1,1,1,1,1]$.

Under the VPSDE forward process $\mathbf{x}_t = \sqrt{\bar{\alpha}_t}\mathbf{x}_0
+ \sigma_t \boldsymbol{\epsilon}$ with $\sigma_t^2 = 1 - \bar{\alpha}_t$, the
marginal density at time $t$ is a Gaussian mixture:
\begin{equation}
    p_t(\mathbf{x}) = \sum_{k=0}^{K-1} w_k\,
        \mathcal{N}\bigl(\mathbf{x}; \sqrt{\bar{\alpha}_t}\,\mathbf{x}^k, \sigma_t^2 I\bigr),
\end{equation}
where $w_k=\pi_k\ell_k/\sum_j\pi_j\ell_j$ for $p_t$ and $w_k=1/K$ for $q_t$.

\medskip\noindent
\textbf{Score function.}
\begin{equation}
    \mathbf{score}_t(\mathbf{x}) = \nabla_{\mathbf{x}} \log p_t(\mathbf{x})
    = \frac{\sum_k \gamma_k(\mathbf{x})\,\boldsymbol{\mu}_{t,k} - \mathbf{x}}{\sigma_t^2},
    \label{ap:score-toy}
\end{equation}
where $\boldsymbol{\mu}_{t,k} = \sqrt{\bar{\alpha}_t}\,\mathbf{x}^k$ and
\begin{equation}
    \gamma_k(\mathbf{x}) = \frac{w_k\,
        \mathcal{N}(\mathbf{x}; \boldsymbol{\mu}_{t,k}, \sigma_t^2 I)}
        {\sum_j w_j\,
        \mathcal{N}(\mathbf{x}; \boldsymbol{\mu}_{t,j}, \sigma_t^2 I)}.
\end{equation}

\medskip\noindent
\textbf{Score divergence.}
\begin{equation}
    \nabla_{\mathbf{x}} \cdot \mathbf{score}_t(\mathbf{x})
    = \frac{\mathrm{Tr}\bigl(\mathrm{Cov}_{\gamma}[\boldsymbol{\mu}_t]\bigr)}{\sigma_t^4}
      - \frac{d}{\sigma_t^2},
    \label{ap:div-toy}
\end{equation}
where $d = 2$ and $\mathrm{Cov}_{\gamma}[\boldsymbol{\mu}_t] = \sum_k \gamma_k
\boldsymbol{\mu}_{t,k} \boldsymbol{\mu}_{t,k}^\top - (\sum_k \gamma_k
\boldsymbol{\mu}_{t,k})(\sum_k \gamma_k \boldsymbol{\mu}_{t,k})^\top$.

\medskip\noindent
\textbf{Empirical label distribution.}
Sampling is stopped at the small positive time $t_0$ rather than at exactly
$0$. Each generated sample is assigned to the nearest support point
$\mathbf{x}^k$, producing the empirical label distribution
$\hat{P}^{\mathrm{samp}}_k$. Since $t_0$ is small, this nearest-neighbor label
distribution provides a direct discrete summary of the sampled density around
the original support points.

\medskip\noindent
\textbf{Theoretical distribution computation.}
For each support point $\mathbf{x}^k$, we compute the theoretical label weight
$\hat{P}^{\mathrm{th}}_k$ by averaging the path-integral correction over local
noise at time $t_0$. Specifically, draw $N_{\mathrm{epoch}}=1000$ perturbations
$\mathbf{x}_{t_0}^{(k,\ell)} = \sqrt{\bar{\alpha}_{t_0}}\mathbf{x}^k +
\sigma_{t_0}\boldsymbol{\epsilon}^{(\ell)}$. For each perturbation, integrate
the DDIM characteristic ODE from $t_0$ to $T$ using $1000$ steps. At each step
$s_j$, evaluate the conditional score $\mathbf{score}^{(p)}$, the unconditional
score $\mathbf{score}^{(q)}$, their difference
$\Delta\mathbf{score}=\mathbf{score}^{(p)}-\mathbf{score}^{(q)}$, and the corresponding
divergence difference using~\eqref{ap:div-toy}. The discretized path integral is
\begin{equation}
    I^{(k,\ell)} = \frac{1}{2}\sum_{j} (\omega(s_j)-1)\,\beta(s_j)\,
        \Bigl(\nabla\cdot\Delta\mathbf{score}
              + \Delta\mathbf{score} \cdot \mathbf{score}^{(p)}\Bigr)\Big|_{\mathbf{X}_{s_j}^{(k,\ell)}}
        \,\Delta s,
\end{equation}
where $\Delta s=T/1000=10^{-3}$ is the uniform step size, since $T=1$. The resulting theoretical label
probability is
\begin{equation}
    \hat{P}^{\mathrm{th}}_k \propto
    p_{t_0}(\mathbf{x}^k)\,
    \frac{1}{N_{\mathrm{epoch}}}\sum_{\ell=1}^{N_{\mathrm{epoch}}}
    \exp(-I^{(k,\ell)}),
\end{equation}
normalized over $k=0,\dots,K-1$. This is the discrete counterpart of the
continuous theoretical density at time $t_0$, averaged over a small neighborhood
of each support point, and is the quantity compared with
$\hat{P}^{\mathrm{samp}}_k$ in Section~\ref{sec:exp-toy}.

\end{document}